\newenvironment{Msg}[1]
  {\mdfsetup{
    frametitle={\colorbox{white}{\space \large #1\space}},
    innertopmargin=-3pt,
    innerbottommargin=7pt,
    innerrightmargin=7pt,
    innerleftmargin=7pt,
    frametitleaboveskip=-\ht\strutbox,
    frametitlealignment=\center,
    linewidth=1pt
    }
  \begin{mdframed}%
  }
{\end{mdframed}}
\def\eqref#1{equation (\ref{#1})}
\def\1{\bm{1}}
\DeclareMathAlphabet{\mathsfit}{\encodingdefault}{\sfdefault}{m}{sl}
\SetMathAlphabet{\mathsfit}{bold}{\encodingdefault}{\sfdefault}{bx}{n}
\DeclareMathOperator*{\argmin}{arg\,min}
\newtheorem{thm}{Theorem}
\newtheorem{proposition}{Proposition}
\newtheorem{definition}{Definition}
\newtheorem{lemma}{Lemma}
\newtheorem{remark}{Remark}
\newcommand{\bee}{\begin{equation}\begin{aligned}}
\newcommand{\ee}{\end{aligned}\end{equation}}
\newtheorem{Condition}[thm]{Condition}
\title{Provably Transformers Harness Multi-Concept Word Semantics for Efficient In-Context Learning}
\author{Dake Bu$^{1}$, Wei Huang$^{2}$\thanks{Corresponding authors}, Andi Han$^{2}$, Atsushi Nitanda$^{3,4}$, Taiji Suzuki$^{5, 2}$,\\ \textbf{Qingfu Zhang$^{1}$, Hau-San Wong$^{1 \ast}$}
\vspace{2mm}\\
\normalsize{\textit{$^1$Department of Computer Science, City University of Hong Kong, Hong Kong SAR}} \\
\normalsize{\textit{$^2$Center for Advanced Intelligence Project, RIKEN, Japan}} \\
\normalsize{\textit{$^3$CFAR and IHPC, Agency for Science, Technology and Research (A$\star$STAR), Singapore}}  \\
\normalsize{\textit{$^4$College of Computing and Data Science, Nanyang Technological University, Singapore}} \\
\normalsize{\textit{$^5$Department of Mathematical Informatics, the University of Tokyo, Japan}} \\
\small{\texttt{dakebu2-c@my.cityu.edu.hk}}, \small{\texttt{\{wei.huang.vr, andi.han\}@riken.jp}}, \\ \small{\texttt{atsushi\_nitanda@cfar.a-star.edu.sg}}, \small{\texttt{taiji@mist.i.u-tokyo.ac.jp}}, \\ \small{\texttt{\{qingfu.zhang, cshswong\}@cityu.edu.hk}}} 
\begin{document}

\maketitle

\begin{abstract}
    Transformer-based large language models (LLMs) have displayed remarkable creative prowess and emergence capabilities. Existing empirical studies have revealed a strong connection between these LLMs' impressive emergence abilities and their in-context learning (ICL) capacity, allowing them to solve new tasks using only task-specific prompts without further fine-tuning. On the other hand, existing empirical and theoretical studies also show that there is a linear regularity of the multi-concept encoded semantic representation behind transformer-based LLMs. However, existing theoretical work fail to build up an understanding of the connection between this regularity and the innovative power of ICL. Additionally, prior work often focuses on simplified, unrealistic scenarios involving linear transformers or unrealistic loss functions, and they achieve only linear or sub-linear convergence rates. In contrast, this work provides a fine-grained mathematical analysis to show how transformers leverage the multi-concept semantics of words to enable powerful ICL and excellent out-of-distribution ICL abilities, offering insights into how transformers innovate solutions for certain unseen tasks encoded with multiple cross-concept semantics. Inspired by empirical studies on the linear latent geometry of LLMs, the analysis is based on a concept-based low-noise sparse coding prompt model. Leveraging advanced techniques, this work showcases the exponential 0-1 loss convergence over the highly non-convex training dynamics, which pioneeringly incorporates the challenges of softmax self-attention, ReLU-activated MLPs, and cross-entropy loss. Empirical simulations corroborate the theoretical findings.
\end{abstract}

\section{Introduction}

Recently, a variety of transformer-based large language models (LLMs) have demonstrated remarkable performance across a broad spectrum of machine learning tasks, including natural language understanding \cite{dong2019nlpunderstand}, symbolic reasoning \cite{wei2022chainofthoughts}, and even heuristics design \cite{liu2024evolutionheuristicsefficientautomatic, liu2024systematicsurveylargelanguage}. One crucial emerging ability of these models is their in-context learning (ICL) capacity \cite{lu2023emergent}, which allows them to learn from a few demonstrations and conduct predictions on new queries without requiring any further fine-tuning. However, the current theoretical understanding of the mechanisms underlying this ICL capability remains limited, leaving the reasons for the remarkable emergence and generalization power of transformer-based LLMs in unseen ICL tasks largely unexplained.\par

In line with traditional topic models \cite{Bleilatent}, \cite{xie2022explanationincontextlearningimplicit, wang2023largelmimplicittopic} propose that latent concepts / topics underlie natural texts, providing a Bayesian inference framework to elucidate the ICL mechanism via Bayesian Model Averaging (BMA) approach. On the other hand, theoretical and empirical studies have shown that transformer-based models exhibit linear geometric regularities in their latent representations as a result of concept or topic learning \cite{li2023how, jiang2024origins}, where the representations \textit{within-concept} have positive inner products while representations \textit{cross-concepts} exhibit near-orthogonal relationships. This structured semantic geometry has been well-documented in recent research on pre-trained LLMs \cite{park2023linearhypothesis, park2024geometrycategoricalhierarchicalconcepts, jiang2024origins, jiang2024llmsdreamelephantswhen}. However, the connection between this observed multi-concepts latent geometric structure and the LMs' remarkable ICL capabilities remains unclear. Separately, recent theoretical analyses have modeled ICL as a martingale process driven by latent ``concept'' variables \cite{zhang2023BMA, falck2024martingale}. Yet, these studies have not incorporated the observed multi-concept semantic regularity into their analyses, nor have they discussed the strong out-of-distribution (OOD) ICL abilities exhibited by transformers. \par

Additionally, existing theoretical work on transformer has been conducted on unrealistic, oversimplified settings, such as linear or ReLU transformers \cite{oswald2023iclgd, zhang2023trained, Baialgorithmselection, kim2024MFD}, MLP-free attention-only models \cite{oswald2023iclgd, huang2023incontext}, QK-combined softmax attention \cite{kim2024MFD, huang2023incontext, tianyuandongscansnap, liyc2024mechanicsofntp, zheng2024mesaoptimizationautoregressivelytrainedtransformers}, unrealistic infinite dimensional assumption \cite{zhang2023BMA, kim2024MFD, tianyuandongscansnap, takakura2023approximation} and impractical loss functions like square loss \cite{li2023how, oswald2023iclgd, chen2024multihead, huang2023incontext, huang2024MIM} and hinge loss \cite{li2023visiontransformer, li2024training}. Furthermore, existing works have only been able to derive linear or sub-linear convergence rates for the 0-1 loss.

Therefore, there is a need for a more advanced analysis that can bridge the understanding between the multi-concept semantic regularity and the mechanisms underlying transformer-based ICL. This naturally leads to the research question:

\begin{Msg}{Essential Questions}
    Whether and how do the geometric regularity of the multi-concept-encoded representation facilitate transformer in conducting efficient ICL?
\end{Msg}

To answer the above question, following the meaningful data modeling ideas in \cite{li2023how, Wen2021contrastive}, we conduct theoretical analysis on a concept-specific sparse coding prompt distribution for classification tasks, where the sparse latent variable encodes the information denoting the word's belonging concept. Importantly, the features in both the word's and label's dictionaries exhibit concept-specific geometric properties - within-concept positive inner products and cross-concept orthogonal geometric properties - that aligns with the findings in \cite{li2023how,jiang2024origins, park2023linearhypothesis}. Our main contributions are highlighted as below.

\begin{enumerate}
    \item First, we provide a comprehensive analysis of the learning dynamics for a two-layer transformer model, comprising one attention layer followed by a ReLU-activated feed-forward network, which is trained using the cross-entropy loss via stochastic gradient descent over a concept-specific sparse coding prompt distribution. Leveraging advanced analytical techniques, we showcase the asymptotic properties governing the coupled learning dynamics of the attention and MLP layers.

    \item To the best of our knowledge, we are the first to prove an exponential convergence of the 0-1 loss over this challenging setting. Despite the highly non-convex optimization landscape, we demonstrate that the transformer can achieve Bayes optimal test error with just a logarithmic number of iterations.
    \item We provably show how the multi-concept encoded linear semantic geometry can enable transformer to efficiently perform certain out-of-distribution ICL tasks. This offers an intuitive explanation for why transformer-based LLMs are able to successfully leverage the polysemous nature of words to tackle diverse, unseen concept-specific tasks, aligning well with users' practical experiences. Furthermore, our analysis takes a step forward in providing a potential theoretical underpinning for the innovative capabilities of LLMs, encompassing their ability to achieve cross-concept knowledge intersection. We believe our findings provide an initial positive response to Question 5.1.4 in the ICML 2024 position paper \cite{reizinger2024position}, which asks whether the observed latent geometry of LLMs can explain their OOD extrapolation abilities.
\end{enumerate}

\section{Related Work}

\textbf{Theory of Exponential Convergence Rate of Stochastic Gradient Descent.} Our analysis of the exponential convergence rate for the 0-1 loss builds upon prior work linking the excess risk and essential supremum norm to exponentially fast convergence under the ``hard low-noise condition'' \cite{mammen1999smooth, Massart2006RISK}. This phenomenon has been further explored in more recent studies analyzing the exponential convergence of stochastic gradient descent (SGD) \cite{pillaud2018exponential, nitanda2019stochastic, Cabannes2021Fastrate, shingo2021randomfeature, oko2022particle}, as well as in more generalized settings such as multiclass classification \cite{Vigogna2022MCL} and support vector machines \cite{Cabannnes2023SVMexp}.\par

\textbf{Feature Learning in Learning Theory.} Recent works in learning theory have extensively studied structured data from a \textit{feature learning} perspective, examining NN's feature direction reconstruction and noise memorization as a proxy for training or 0-1 loss convergence \cite{allenzhu2023understanding, cao2022benign, kou2023benign}. While prior studies often assumed orthogonal features, recent efforts have analyzed non-orthogonal scenarios \cite{meng2023benign, xu2023benign}. Our work extends this line-of-research to challenging nonlinear Attention-MLP transformers with non-orthogonal structured data representations.\par

% \textbf{Concept Learning and Compositional Generalization}.

\textbf{Theory of Transformers and In-Context Learning} The literature on Transformers and ICL is wide-ranging, and we will selectively address the most relevant ones. Prior studies have analyzed how transformers learn topic/concept semantics \cite{li2023how}, the origins and biases of LLM representations using latent variable models \cite{jiang2024origins}, and ICL from a model averaging perspective \cite{zhang2023BMA}. However, albeit incorporating concept variables, these works do not connect the geometric properties of concept-encoded representations to transformers' powerful ICL abilities. Another line of research has studied the learning dynamics of ICL, including analyses of linear transformers \cite{zhang2023trained, kim2024MFD}, QK-combined attention-only models \cite{huang2023graph}, and multi-head softmax attention over linear regression without MLP \cite{chen2024multihead}. Though relevant, these works rely on simplifications and do not notice the connection between semantic regularity and powerful ICL. While \cite{li2024training} also analyzes the learning dynamics of transformers with softmax attention and ReLU MLPs for in-context classification tasks, making it the most relevant prior work, our analysis differs in several key aspects. Specifically:
(i) they study orthogonal dictionary learning with a single label vector, whereas we consider non-orthogonal, concept-encoded dictionaries for both words and labels;
(ii) their method requires a large batch size (at least $\varepsilon^{-2}$, where $\varepsilon$ is the target test error) and unrealistic long context lengths, neither of which is needed in our results;
(iii) they adopt an impractical hinge loss, while we analyze the more practical cross-entropy loss;
(iv) their test error does not converge with iteration $T$, whereas we obtain an exponential convergence rate in $T$, and (v) their sample complexity is of polynomial order $O(\varepsilon^{-2})$, whereas ours is of logarithmic order $\log(\varepsilon^{-1})$, thereby avoiding the exponential bottleneck.
We note that this is only an informal comparison due to differences in the models and primary findings. A detailed Related Work Section is deferred to Appendix \ref{Appendix: Related Work}.

\section{Problem Setup}\label{Section Problem Setup}

\textbf{Notations.} For $l_2$ and Frobenius norms we utilize $\| \cdot \| $ and $\|\cdot\|_{F}$ to denote their computations. Considering two series $a_n$ and $b_n$, we denote $a_n=O\left(b_n\right)$ if there exists positive constant $C>0$ and $N>0$ such that for all $n \geq N$, $\left|a_n\right| \leq C\left|b_n\right|$. Similarly, we denote $a_n =\Omega\left(b_n\right)$ if $b_n=O\left(a_n\right)$ holds, and $ a_n=\Theta\left(b_n\right)$ if $a_n=O\left(b_n\right)$ and $a_n=\Omega\left(b_n\right)$ both hold. Our $\mathds{1}(\cdot)$ is to denote the indicator variable of an event. In addition, we denote $\text{span}({v_1, v_2, \ldots, v_k})$ as the linear subspace spanned by the vectors ${v_1, v_2, \ldots, v_k}$, and $\text{conic}({v_1, v_2, \ldots, v_k})$ denotes the conic hull (the set of all non-negative linear combinations) of the vectors ${v_1, v_2, \ldots, v_k}$.

\subsection{Data Distribution}
The data distribution employed in this study draws inspiration from a range of empirical and theoretical research works \cite{li2023how, jiang2024origins, yamagiwa2023discovering, wen2021constra, han-etal-2024-word}. This distribution captures context-awareness and can be viewed as a specialized prompt version of PLSA \cite{Hofmanplsa} and LDA \cite{Bleilatent}. In this distribution, each word and label has multiple feature embeddings, each embedding corresponding to a different concept. This is achieved through the use of a sparse latent concept/topic variable, which happened to be particularly adept at representing language polysemy \cite{wen2021constra}. Adhering to the LLM representation explored in \cite{li2023how, jiang2024origins}, the features in both the word and label dictionaries maintain orthogonality across concepts and positive inner products within concepts. Additionally, the distribution incorporates Gaussian noise accounting for linguistic ambiguity or the imperfection of the LLM's representation.

\begin{definition} \textbf{Polysemous Word Model} $(\mathcal{D}_{\boldsymbol{x}}, \mathcal{D}_{\boldsymbol{y}}, \mathcal{D}_{\boldsymbol{z}}, \mathcal{D}_{\xi_{\boldsymbol{x}}}, \mathcal{D}_{\xi_{\boldsymbol{y}}})$. We assume there exists $K_1$ task-relevant concepts, each characterized by two semantically-opposite word's feature vectors $\boldsymbol{\mu}_{k_1}^{+}$ and $\boldsymbol{\mu}_{k_1}^{-}$, and their corresponding label's feature vectors $\boldsymbol{q}_{k_1}^{+}$ and $\boldsymbol{q}_{k_1}^{-}$, $\forall k_{1} \in [K_{1}]$. There are also $K_2$ task-irrelevant concepts denoted by $\nu_{k_{2}}$, $\forall k_{2} \in [K_{2}]$. The word samples $\boldsymbol{x} \in \mathbb{R}^{d_{\mathcal{X}}}$ and their labels $ \boldsymbol{y} \in \mathbb{R}^{d_{\mathcal{Y}}}$ are generated from distributions parameterized by a shared latent concept variable ${\boldsymbol{z}}=(z_{1}, \cdots, z_{K}) \in\{0, 1\}^{K} (K < d_{\mathcal{X}})$ capturing the concept-specific information: 
$$
\begin{aligned}
&{\boldsymbol{z}} \sim \mathcal{D}_{\boldsymbol{z}}, \quad \xi_{\boldsymbol{x}}   \sim \mathcal{D}_{\xi_{\boldsymbol{x}}}=\mathcal{N}(\mathbf{0}, \sigma_{\xi}^2 \mathbf{I}_{d_{\mathcal{X}}}), \quad \xi_{\boldsymbol{y}}   \sim \mathcal{D}_{\xi_{\boldsymbol{y}}}=\mathcal{N}(\mathbf{0}, \sigma_{\xi}^2 \mathbf{I}_{d_{\mathcal{Y}}}),\\
& \boldsymbol{x}=\mathbf{M} {\boldsymbol{z}}+\xi_{\boldsymbol{x}} \sim \mathcal{D}_{\boldsymbol{x}}, \quad \boldsymbol{y}=\mathbf{Q} {\boldsymbol{z}} + \xi_{\boldsymbol{y}} \sim \mathcal{D}_{\boldsymbol{y}},
\end{aligned}
$$
where the feature dictionary $\mathbf{M} = [\boldsymbol{\mu}_1^{+}, \boldsymbol{\mu}_1^{-}, \boldsymbol{\mu}_2^{+}, \boldsymbol{\mu}_2^{-} , \cdots, \boldsymbol{\mu}_{K_1}^{+}, \boldsymbol{\mu}_{K_1}^{-} , \boldsymbol{\nu}_1, \boldsymbol{\nu}_2, \cdots, \boldsymbol{\nu}_{K_2}]\in \mathbb{R}^{d_{\mathcal{X}} \times K}$ exhibits positive inner products within concepts and orthogonality across concepts, and the label dictionary $\mathbf{Q} = [\boldsymbol{q}_1^{+}, \boldsymbol{q}_1^{-}, \boldsymbol{q}_2^{+}, \boldsymbol{q}_2^{-}, \cdots, \boldsymbol{q}_{K_1}^{+}, \boldsymbol{q}_{K_1}^{-}, 0, \cdots 0]\in \mathbb{R}^{d_{\mathcal{Y}} \times K}$ has similar geometric properties. Specifically, we have $\forall k_{1} \in[K_{1}], k_{2} \in [K_{2}], \|\boldsymbol{\mu}_{k_{1}}^{\pm}\|=\|\boldsymbol{\nu}_{k_{2}}\|=\|\mathbf{u}\|, \|\boldsymbol{q}_{k_{1}}^{\pm}\|=\|\mathbf{q}\|$, and there exist constants $0 < \kappa_{\boldsymbol{x}}, \kappa_{\boldsymbol{y}} < 1$ such that $0 < \langle \boldsymbol{\mu}_{k_1}^{+}, \boldsymbol{\mu}_{k_1}^{-} \rangle \leq \kappa_{\boldsymbol{x}} \|\mathbf{u}\|^2$ and $0 < \langle \boldsymbol{q}_{k_1}^{+}, \boldsymbol{q}_{k_1}^{-} \rangle \leq \kappa_{\boldsymbol{y}} \|\mathbf{q}\|^2$.
\label{Def:Word Model}\end{definition}

The detailed formal definition can be found in Appendix \ref{Section: Appendix Data}. By this definition, a single word or label can possess different features corresponds to different concepts. The illustration of Figure 1 in \cite{park2024geometrycategoricalhierarchicalconcepts} can be an example, where the ``Dog'' vector in the representation space of LLM is decomposed to a direct sum of orthogonal vectors: ``[Animal] + [Mammal] + $\cdots$'', and we can see ``[Animal]'' belongs to the concept ``Organism's Category'' categorized into labels ``[Animal]'' and ``[Plant]'', and ``[Mammal]'' belongs to the concept of ``Animal's Category'' characterized by labels ``[Mammal]'', ``[Fish]'', ``[Bird]'', ``[Reptile]''. Besides, Figure 1 in \cite{yamagiwa2023discovering} can also be a good support for our modeling, where ``Ferrari'' vector consists of ``[Cars] + [Italian] + $\cdots$''. \par

\par

The following definition models the contextual prompts via specifying the statistical property of $\boldsymbol{z}$ among in-context words, which is a special prompt version of PLSA \cite{Hofmanplsa} and LDA \cite{Bleilatent}. The detailed formal version is available in Appendix \ref{Section: Appendix Data}.

\begin{definition}
    \textbf{Concept-specific Contextual Prompt Distribution}\footnote{Our theory allows for a broader range of the probability settings stated in the training prompt distribution, but for the sake of simplicity in presentation, we here chose a feasible one.}. During training, each prompt sample $S = {\boldsymbol{x}_1, \boldsymbol{y}_1, \cdots, \boldsymbol{x}_L, \boldsymbol{y}_L, \boldsymbol{x}_{L+1}}$ would share at least one co-concept, which is drawn from a mixture distribution $\mathcal{D}_S$ defined as:
    \begin{equation}
        \mathcal{D}_{S} =\sum_{k=1}^{K_1}\left(\pi_k^{+} \mathcal{P}_{k, L+1}^{+}+\pi_k^{-} \mathcal{P}_{k, L+1}^{-}\right),
    \end{equation}
    where $\mathcal{P}_{k, L+1}^{\pm}$ denotes the $k$-th concept-specific prompt distribution, and $\pi_k^{\pm}={(2 K_1)}^{-1}$ denotes the equal chance of a sample to belong to $\mathcal{P}_{k, L+1}^{\pm}$. Specifically, a sample $S_{n} \sim \mathcal{P}_{k, L+1}^{e}, e\in [\pm]$ means that the query's label $\boldsymbol{y}_{L+1}^{n}$ is $\boldsymbol{q}_{k}^{e}$, and we denote $y_{S_{n}}\coloneqq e$ as the real value label of this prompt. In addition, every demonstration pairs $(\boldsymbol{x}_{l}^{n}, \boldsymbol{y}_{l}^{n}), l \in [L]$ in $\mathcal{P}_{k, L+1}^{e}$ contain either $(\boldsymbol{\mu}_{k}^{+}, \boldsymbol{q}_{k}^{+})$ or $(\boldsymbol{\mu}_{k}^{-}, \boldsymbol{q}_{k}^{-})$ with equal chance. Also, every $\boldsymbol{z}_{l}^{n}, l \in [L+1]$ would satisfy $\mathbb{P} (z_{l, \neg (2k-1 \lor 2k)}^{n}=1)=K^{-1}$, denoting the equal chance to have diverse features other than the current co-concept of the $\mathcal{P}_{k, L+1}^{e}$.
    \label{Def: prompt distribution}\end{definition}
    
This definition suggests that for prompt $S$ sampling from $\mathcal{D}_{S}$, there exists $e \in [\pm]$, $k \in [K_{1}]$, such that all the word-label pairs in this prompt share the $k$-th concept as their co-concept, and the corresponding real value label of the query in this prompt is $e$. Besides, the real value label of each word-label pair in the demonstration would have equal chance to be $+1$ or $-1$. 

\subsection{Transformer Model}

Following \cite{zhang2023trained, huang2023incontext, li2024training}, our embedding $\mathbf{E}(\cdot)$ of prompt $S$ is formulated as $\mathbf{H}$:
$$
\begin{aligned}
\mathbf{H} = \mathbf{E} (S) & =\left(\begin{array}{ccccc}
\boldsymbol{x}_1 & \boldsymbol{x}_2 & \cdots & \boldsymbol{x}_L & \boldsymbol{x}_{\text {query }} \\
\boldsymbol{y}_1 & \boldsymbol{y}_2 & \cdots & \boldsymbol{y}_L & \mathbf{0}
\end{array}\right) \coloneqq\left(\mathbf{h}_1, \mathbf{h}_2, \cdots, \mathbf{h}_{\text {query }}\right) \in \mathbb{R}^{\left(d_{\mathcal{X}}+d_{\mathcal{Y}}\right) \times(L+1)},
\end{aligned}
$$
The learning model is a single-head, one-layer Transformer with one self-attention layer and one two-layer perceptron. Mathematically, it can be expressed as follows:
$$
\begin{aligned}
& f(\mathbf{H} ; \Psi )=\mathbf{r}^{\top} \sigma_R\left(\mathbf{W}_O  \operatorname{attn}(\mathbf{H} ;\Psi )\right), \\
& \operatorname{attn}(\mathbf{H} ; \Psi )= \sum_{l=1}^L \mathbf{W}_V \mathbf{h}_l \sigma_S\left(\left(\mathbf{W}_K \mathbf{h}_l\right)^{\top} \mathbf{W}_Q \mathbf{h}_{\text {query }}\right),
\end{aligned}
$$
where $\sigma_R(\cdot) \coloneqq \operatorname{Relu}(\cdot), \sigma_S(\cdot) \coloneqq \operatorname{softmax}(\cdot), \mathbf{W}_Q, \mathbf{W}_K \in \mathbb{R}^{m_{qk} \times\left(d_{\mathcal{X}}+d_{\mathcal{Y}}\right)}, \mathbf{W}_V \in \mathbb{R}^{m_{v} \times\left(d_{\mathcal{X}}+d_{\mathcal{Y}}\right)}$ are the embedding matrices for queries, keys, and values, respectively, and $\mathbf{W}_O \in \mathbb{R}^{m \times m_{v}}$ and $\mathbf{r} \in \mathbb{R}^m$ are parameters in the MLP layer. Typically, $\min \left(m_{qk}, m_{v}\right) \geq $ $d_{\mathcal{X}}+d_{\mathcal{Y}}$. $\Psi\coloneqq\left\{\mathbf{W}_Q, \mathbf{W}_K, \mathbf{W}_V, \mathbf{W}_O, \mathbf{r}\right\}$ denotes the set of all model weights. \par

\textbf{Training Setting}. We fix one layer in both the attention and MLP layers to scrutinize the training dynamics more rigorously. Specifically, we let
$$
\mathbf{W}_{Q}=\left(\begin{array}{cc}
\mathbf{W}_{Q}^{\boldsymbol{x}} & \ast \\
\ast 
& \ast
\end{array}\right), \quad \mathbf{W}_{K}=\left(\begin{array}{cc}
\mathbf{W}_{K}^{\boldsymbol{x}} & \ast \\
\ast
& \ast
\end{array}\right), \quad \mathbf{W}_V=\left(\begin{array}{cc}
\ast & \ast \\
\ast & \mathbf{W}_{V}^{\boldsymbol{y}} 
\end{array}\right) \quad 
\mathbf{W}_O=\left(
\ast \quad \mathbf{W}_{O}^{\boldsymbol{y}} 
\right),
$$
where $\mathbf{W}_{Q}^{\boldsymbol{x}},\mathbf{W}_{K}^{\boldsymbol{x}}\in \mathbb{R}^{{{d_{\mathcal{X}} \times d_{\mathcal{X}}}}},\mathbf{W}_{V}^{\boldsymbol{y}} \in \mathbb{R}^{({m_{v}} - d_{\mathcal{X}}) \times d_{\mathcal{Y}}}, \mathbf{W}_{O}^{\boldsymbol{y}}$ $\in \mathbb{R}^{m \times d_{\mathcal{Y}}}$. Here, we set the elements other than $\mathbf{W}_{Q}^{\boldsymbol{x}}, \mathbf{W}_{K}^{\boldsymbol{x}},\mathbf{W}_{V}^{\boldsymbol{y}}$ and $\mathbf{W}_{O}^{\boldsymbol{y}}$ to be zero. Besides, we fix $\mathbf{W}_{V}^{\boldsymbol{y}}$ to be $\mathbf{I}_{ ({m_{v}} - d_{\mathcal{X}}) \times d_{\mathcal{Y}}}$. We sample $\mathbf{r}_i$ from a uniform distribution $\text{Unif}\{-1, 1\}$ and fixed during the training process. Based on this setting, the trainable part we need to consider is actually $\Psi^\prime\coloneqq\left\{\mathbf{W}_{Q}^{\boldsymbol{x}}, \mathbf{W}_{K}^{\boldsymbol{x}}, \mathbf{W}_{O}^{\boldsymbol{y}} \right\}$. This problem remains highly non-convex and challenging. \par

We utilize mini-batch  with-replacement SGD to train the transformer model. The empirical cross-entropy loss for each batch $\mathcal{B}_{t}$ is written as
$$
L_{\mathcal{B}_{t}}(\Psi) = L_{\mathcal{B}_{t}}({{\Psi}^{\prime}})\coloneqq\frac{1}{B} \sum_{n \in \mathcal{B}_{t}} \ell\left( y_{S_n} \cdot f(\mathbf{H} ; \Psi ) \right) + \frac{\lambda}{2} \| \Psi^\prime \|_F^2,
$$
where $\ell(z)=\log (1+\exp (-z))$, $y_{S_n}$ is the real value label of the prompt defined in Definition \ref{Def: prompt distribution}, and the term $\| \Psi^\prime \|_F^2$ represents $\|\mathbf{W}_{Q}^{\boldsymbol{x}}\|_F^2 + \|\mathbf{W}_{K}^{\boldsymbol{x}}\|_F^2 + \|\mathbf{W}_{O}^{\boldsymbol{y}}\|_F^2$, which is the $L_2$ regularization term with $\| \cdot \|_F$ denoted as the Frobenius norm. The purpose of the regularization in this paper is to accelerate and stabilize the mini-batch with-replacement SGD. The learning step is set to be $\eta_t = \frac{2}{\lambda(\gamma+t)}$, where $\gamma$ is an offset parameter. This decaying schedule is standard and also used in prior work \cite{nitanda2019stochastic, Bottou2018optimization, nitanda2021optimal} studying convergence of SGD. The whole procedure is in Algorithm \ref{alg:main}. \par

\textbf{Initialization Setting.} All initial values of $\mathbf{W}_{O}^{\boldsymbol{y}}$ are sampled from a i.i.d. Gaussian distributions with mean 0 and variance $\sigma_1^2$. The initialization of $\mathbf{W}_{Q}^{\boldsymbol{x}}$ and $\mathbf{W}_{K}^{\boldsymbol{x}}$ are diagonal matrices $ \sigma_0 \mathbb{I}$, which are also adopted in other work that consider training $\mathbf{W}_Q$ and $\mathbf{W}_K$ separately \cite{chen2024multihead, li2024training}.\par

\textbf{Testing Setting}. The model performance is measured by 0-1 test error on a test prompt distribution $\mathcal{D}^*$:\par
\begin{equation}
L_{\mathcal{D}^*}^{0-1}(\Psi)\mathrel{\mathop:}=\mathbb{P}_{S \sim \mathcal{D}^*}[(y_{S} \cdot f(\mathbf{\mathbf{E}}(S); \Psi)) < 0].
\label{eq:0-1test error}\end{equation}

\begin{algorithm}[h]
\caption{Training algorithm}
\begin{algorithmic}\label{alg:main}
\STATE \textbf{Input:} Training distribution $\mathcal{D}_S$, Test distribution $\mathcal{D}^{*}$, Batch size $B$, step size $\eta_t = \frac{2}{\lambda(\gamma+t)}$, stopping criterion $\varepsilon$ and total epochs $T$.
\STATE Initialize model parameters ${\Psi^\prime}^{(0)}$.
\FOR{$t=0,1,\ldots, T -1 $}
\STATE If $L_{\mathcal{D}^*}^{0-1}(\Psi^{(t)}) \leq \varepsilon$ stop else continue.
\STATE Randomly sample mini batches $\mathcal{B}_{t}$ of size $B$ from $\mathcal{D}_S$.
\STATE Update model parameters: ${\Psi^\prime}^{(t+1)} = {\Psi^\prime}^{(t)} - \eta_{t} \nabla_{\Psi^\prime} L_{\mathcal{B}_{t}}({\Psi^\prime}^{(t)})$. 
\ENDFOR
\end{algorithmic}
\end{algorithm}
\section{Theoretical Results}
In this section, we present our main theoretical results, which is based on the following conditions. We consider the learning iterations $0 \leq t \leq T^{*}$, where $T^{*}=\Omega(m^{-1} \sigma_{0}^{-1} \sigma_{1}^{-1} m \lambda^{-2} K_{1}\|\mathbf{q}\|^{2}((L-1)\|\mathbf{u}\|^{2}+1)\log(\varepsilon^{-1}))$ denotes the maximum admissible iteration.

\begin{Condition}
    Suppose that there exists a sufficiently large constant $C$, such that the following hold:
    \begin{enumerate}
        \item $d_{\mathcal{X}}, d_{\mathcal{Y}} \geq \max\{C \log ({KL B T^{*} }/{\delta}), K\}$, $d_{\mathcal{Y}} \geq C \log(m/\delta)$, $m \geq C\log(K/\delta)$.
        \item $\gamma \geq C \max \{ {\| \mathbf{q} \|^2 }/{(m K_1 \lambda)}, 10/\lambda\}$, $\lambda \leq \min \{( C \log(Km/\delta)\|\mathbf{q}\|)^{-1}, (C \sigma_{0}/2 \| \mathbf{u} \|^{2})^{-1} \}$
        \item $K \geq \{C K_1, C \|\mathbf{u} \|/(\sigma_{\xi}\sqrt{d_{\mathcal{X}}})\}$.
        \item $\sigma_{\xi} \leq \min\{{\lambda} m/(C  \sqrt{d_{\mathcal{X} } } \|\mathbf{u}\| \|\mathbf{q}\|^{1/2}), \|\mathbf{q} \|/(C\sqrt{d_{\mathcal{Y}}})\}$.
        \item $\sigma_{0} \leq \sqrt{K^{-1}\log(\frac{\|\mathbf{u}\|^{2}}{\lambda K_{1}}\log( \frac{\|\mathbf{q}\|^{2}}{ m \lambda K_{1}}))} / (C\| \mathbf{u} \|)$, \\$\sigma_{1} \leq  \min \{(C \sigma_{0} \| \mathbf{u} \|^{4} \| \mathbf{q} \| \sqrt{\log(5Km/\delta)} / K_{1})^{-1}, {{w^{*}}^{2}}/{(Cm^{3/2}\|\mathbf{q}\|)}\}$.
    \end{enumerate}

    Here, $w^{*} = \dfrac{1-e^{-{\sigma_{0}}^{2}(1-\kappa_{\boldsymbol{x}})^{2}\| \mathbf{u} \|^{4}/2}}{1+e^{-{\sigma_{0}}^{2}(1-\kappa_{\boldsymbol{x}})^{2}\| \mathbf{u} \|^{4}/2}}$.

\label{Condition}\end{Condition}

Note that we do not have any requirement upon demonstration length $L$ and batch size $B$ for training, thus the training can be really flexible compared with the strict requirement in \cite{li2024training}. The condition on dimensionality $d_{\mathcal{X}}, d_{\mathcal{Y}}$ and the network width $m$ ensure the learning problem is in a sufficiently overparameterized setting \cite{cao2022benign, kou2023benign, kou2023semisupervise, meng2023benign}. The condition on $\gamma$ ensures the learning step to be small and thus learning process enjoys an approximation to gradient flow. The condition on the small $\lambda$ is to ensure the model's sufficient learning before being stuck by regularization \cite{zou2023understanding}. The condition on $K$ is to control the impact of cross-concept contribution in the Attention's learning dynamic, which can actually be relaxed at the cost of a denser analysis. The condition on $\sigma_{\xi}$ is to ensure that the gradient flows be mildly influenced by the noise. Last but not least, the conditions on $\sigma_{1}$ guarantee that the initial beliefs of MLP is small and the gradients of SGD can update the model effectively. A more detailed discussion over the parameter settings is delayed to Appendix \ref{Appendix: Discussion over Parameter}.

\begin{thm}
    \textbf{Exponential Convergence of 0-1 loss.} Under Condition \ref{Condition}, define 
        \[
        \nu \coloneqq \min\{2 \sqrt{2} \sigma_{1}/(1+\kappa_{\boldsymbol{y}}), {\sigma_{0}(1-\kappa_{\boldsymbol{x}})} e^{{- \log(5Km/\delta)  \frac{\sigma_{1}^{2}\| \mathbf{u} \|^{4} (1+e^{-\sigma_{0}^{2} \|\mathbf{u}\|^{2}})}{(1-e^{-\sigma_{0}^{2} \|\mathbf{u}\|^{2}})} }}\}.
        \]
        Then, for $\forall \varepsilon > 0$ there exist some positive constants $C_{1}$ and $C_{2}$, with probability no less than $1-\delta$, for $T \geq \hat{T} = {C_{1} \sigma_{1} m \lambda K_{1}{\gamma}\sqrt{(1+\kappa_{\boldsymbol{y}})\log (5Km/\delta)}}/{{w^{*}}^{2}(1-\kappa_{\boldsymbol{y}})\|\mathbf{q}\|}$, we have
        \[
        L_{\mathcal{D}^*}^{0-1}(\Psi^{(T)}) \leq  \exp(-\dfrac{C_{2} \nu^{2} m\lambda^2 (\gamma + T)}{ K_{1}\|\mathbf{q}\|^{2}((L-1)\|\mathbf{u}\|^{2}+1)}).
        \]
        Thus after 
        \[T_{\varepsilon}=\frac{ K_{1}\|\mathbf{q}\|^{2}((L-1)\|\mathbf{u}\|^{2}+1)}{C_{2} \nu^{2} m\lambda^2} \log(\frac{1}{\varepsilon})\]
        iterations, we have $L_{\mathcal{D}^*}^{0-1}(\Psi^{(T)}) \leq \varepsilon$. 
\label{thm: mainbody}\end{thm}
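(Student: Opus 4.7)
The plan is to perform a signal-noise decomposition of the trainable parameters $\Psi' = \{\mathbf{W}_Q^{\boldsymbol{x}}, \mathbf{W}_K^{\boldsymbol{x}}, \mathbf{W}_O^{\boldsymbol{y}}\}$ with respect to the two concept-encoded dictionaries $\mathbf{M}$ and $\mathbf{Q}$, then run a multi-phase dynamical analysis that first drives the softmax attention to concentrate on co-concept demonstrations and subsequently drives the MLP classifier into a correctly-aligned conic hull, and finally invoke a hard low-noise (Tsybakov-type) argument to upgrade a margin bound into exponential convergence of the 0-1 error. Concretely, I would write $\mathbf{W}_Q^{\boldsymbol{x}(t)}$ and $\mathbf{W}_K^{\boldsymbol{x}(t)}$ as $\sigma_0 \mathbf{I}$ plus projections onto $\mathrm{span}(\boldsymbol{\mu}_k^{\pm})$ plus a Gaussian-noise tail, and decompose each row $\mathbf{W}_{O,j}^{\boldsymbol{y}(t)}$ as an initial draw plus coefficients along $\boldsymbol{q}_k^{\pm}$ and along the per-sample noise directions $\xi_{\boldsymbol{y}}^{(n)}$. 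The SGD update under Algorithm~\ref{alg:main} with the decaying schedule $\eta_t = 2/(\lambda(\gamma+t))$ should then be projected onto each of these directions so the whole training process can be tracked by a small coupled recursion.

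Next I would carry out the dynamical analysis in two coupled stages. \textbf{Stage~I} (duration $\hat T$): here the MLP is still near initialization of scale $\sigma_1$, so the cross-entropy gradient behaves approximately like a fixed label-reconstruction signal, and the $\mathbf{W}_Q^{\boldsymbol{x}}, \mathbf{W}_K^{\boldsymbol{x}}$ updates are dominated by the constant diagonal $\sigma_0 \mathbf{I}$; using Condition~\ref{Condition} one shows the softmax logits of co-concept tokens grow at rate $\sigma_0(1-\kappa_{\boldsymbol{x}})\|\mathbf{u}\|^2$ larger than those of cross-concept tokens, so after $\hat T$ steps the attention weights on the co-concept positions are $1-o(1)$. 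In particular this yields the definition of $w^*$ as the effective ``attention purity''. \textbf{Stage~II} (from $\hat T$ to $T_\varepsilon$): with the attention nearly aligned, the effective transformer output reduces to a two-layer ReLU network operating on $\approx \mathbf{q}_k^{e}$, and I would run a signal-noise induction in the style of Cao--Chen--Kou et al.\ to show the signal coefficients of $\mathbf{W}_{O}^{\boldsymbol{y}}$ along the correct $\mathbf{q}_k^{e}$ grow geometrically at rate $\Theta(\nu^2 m \lambda^2 / (K_1 \|\mathbf{q}\|^2((L-1)\|\mathbf{u}\|^2+1)))$ per step while the noise-memorization coefficients remain $O(\sigma_1 + \sigma_\xi)$; Condition~\ref{Condition} on $\sigma_\xi, \sigma_1, m$ is exactly what is needed to keep the noise terms dominated by the signal.

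The final step is to pass from the parameter-level margin bound to an exponential $L^{0-1}_{\mathcal{D}^*}$ rate. Here I would exploit the ``hard low-noise condition'' implicit in Definition~\ref{Def: prompt distribution}: the Bayes-optimal classifier achieves margin bounded away from zero almost surely with respect to the Gaussian noise $\xi_{\boldsymbol{x}}, \xi_{\boldsymbol{y}}$. Thus one can write $\mathbb{P}_{S\sim \mathcal{D}^*}[y_S f(\mathbf{E}(S); \Psi^{(T)}) < 0]$ as a tail event of a sub-Gaussian quadratic form in $\xi_{\boldsymbol{x}}, \xi_{\boldsymbol{y}}$, whose argument grows linearly in the accumulated learning rate $\sum_{s<T} \eta_s \asymp \lambda^{-1}\log((\gamma+T)/\gamma)$; a Chernoff/Hoeffding bound then yields the advertised $\exp(-C_2 \nu^2 m \lambda^2 (\gamma+T)/(K_1\|\mathbf{q}\|^2((L-1)\|\mathbf{u}\|^2+1)))$ decay, from which inverting gives $T_\varepsilon$. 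Throughout I would use standard Bernstein/Azuma-type concentration over the mini-batch $\mathcal{B}_t$ and union-bound over $t\le T^*$ so all signal-noise invariants hold uniformly with probability $1-\delta$.

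\textbf{Main obstacle.} The hardest part will be the Stage~I/Stage~II coupling: because the softmax is nonlinear and $\mathbf{W}_Q^{\boldsymbol{x}}$, $\mathbf{W}_K^{\boldsymbol{x}}$, $\mathbf{W}_O^{\boldsymbol{y}}$ are all trained jointly under cross-entropy loss, the MLP signal coefficients feed back into the attention gradient through the loss derivative $\ell'(y_S f)$, while the attention weights in turn modulate the effective input to the MLP. Closing this loop without losing the exponential rate requires a delicate inductive hypothesis that simultaneously bounds (i) signal growth of $\mathbf{W}_O^{\boldsymbol{y}}$ along $\mathbf{q}_k^{e}$, (ii) suppression of noise memorization, (iii) monotone sharpening of the softmax probability on the co-concept token, and (iv) smallness of cross-concept leakage terms controlled by $\kappa_{\boldsymbol{x}}, \kappa_{\boldsymbol{y}}, K^{-1}$. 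The non-orthogonal inner products $\kappa_{\boldsymbol{x}}, \kappa_{\boldsymbol{y}} > 0$ preclude the clean orthogonal-feature decoupling used in prior work and force the use of the conic-hull description $\mathrm{conic}(\boldsymbol{q}_k^{+}, \boldsymbol{q}_k^{-})$ rather than a single direction, which is what ultimately gives rise to the parameter $w^*$ and the exponential factor $e^{-\sigma_0^2(1-\kappa_{\boldsymbol{x}})^2\|\mathbf{u}\|^4/2}$ in the rate.
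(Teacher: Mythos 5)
Your decomposition and two-stage dynamical plan broadly parallel what the paper does for the \emph{expected} iterates (the paper orthogonalizes the non-orthogonal dictionaries via $\boldsymbol{a}_k=(\boldsymbol{\mu}_k^{+}+\boldsymbol{\mu}_k^{-})/2$, $\boldsymbol{b}_k=(\boldsymbol{\mu}_k^{+}-\boldsymbol{\mu}_k^{-})/2$ and tracks coefficient dynamics through a first ``growing'' stage and a later ``regularized'' stage), so that part of your proposal is sound in spirit. The genuine gap is in your final step, i.e.\ precisely where the exponential rate must come from. You propose to get $\exp(-\Theta(\gamma+T))$ by arguing that the margin grows linearly in the accumulated learning rate $\sum_{s<T}\eta_s \asymp \lambda^{-1}\log((\gamma+T)/\gamma)$ and then applying a Chernoff/Hoeffding bound over the data noise $\xi_{\boldsymbol{x}},\xi_{\boldsymbol{y}}$. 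This cannot deliver the stated bound: a margin of order $\log(\gamma+T)$ against sub-Gaussian noise of fixed variance gives at best a factor like $\exp(-c\log^2(\gamma+T))$, not an exponent linear in $\gamma+T$. Worse, in this setting the margin does not keep growing at all — the paper's Lemma~\ref{lem:regulerizing the models} shows the attention and MLP coefficients saturate at $T$-independent scales (of order $\log(\|\mathbf{q}\|^2/(m\lambda K_1))$ etc.) once the $L_2$ regularization kicks in, so any argument whose exponent is driven by margin growth in $T$ stalls.

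The paper's actual mechanism is different: it proves a fixed tolerance-radius statement (Proposition~\ref{prop: main body tolerance}: if $\|\Psi'^{(t)}-\mathbb{E}(\Psi'^{(t)})\|_F\le\nu$ for $t\ge\hat T$ then the 0-1 loss is zero), and then obtains the exponential factor from concentration of the SGD iterate around its expected trajectory over the batch randomness. Concretely, the conditional expectations $\mathbb{E}[\mathbf{W}^{(T+1)}\mid \mathcal{B}_0,\dots,\mathcal{B}_t]$ form a Doob martingale, the differences are controlled uniformly via an SGD-stability argument (replace one batch, propagate the perturbation through the contraction factors $1-\eta_s\lambda$), and with the schedule $\eta_t=2/(\lambda(\gamma+t))$ one gets $\sum_t\|D_X^t\|_\infty^2 \le c_T^2 = \Theta\big(K_1\|\mathbf{q}\|^2((L-1)\|\mathbf{u}\|^2+1)/(m\lambda^2(\gamma+T))\big)$; Pinelis' martingale inequality in Frobenius norm then gives deviation probability $2\exp(-\nu^2/(2c_T^2))$, which is exactly the advertised $\exp(-C_2\nu^2 m\lambda^2(\gamma+T)/(K_1\|\mathbf{q}\|^2((L-1)\|\mathbf{u}\|^2+1)))$. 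Note also that the quantity you interpreted as a per-step geometric growth rate of the signal coefficients is in fact the reciprocal of this martingale variance proxy, and the factor $(L-1)\|\mathbf{u}\|^2$ enters through worst-case (single-concept-batch) bounds on the gradient Frobenius norms used in the stability step, not through the feature-learning dynamics. To repair your proposal you would need to replace the Gaussian-tail-of-the-margin argument with this expectation-plus-martingale-concentration (or an equivalent variance-reduction) argument; the low-noise condition on $\sigma_\xi$ is only used to keep the data noise from perturbing the gradients and the classification inside the $\nu$-ball, not as the source of the exponential rate.
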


Note that the bound is valid only when $T \geq \hat{T}$, a common threshold in prior convergence rate analyses \cite{nitanda2019stochastic, pillaud2018exponential, shingo2021randomfeature}. Importantly, the existence of $\hat{T}$ does not affect the convergence rate as $\varepsilon \to 0$, since $\hat{T}$ is independent of $\varepsilon$. Our novel analysis generalizes these prior results to our realistic settings handling the challenges of self-attention, ReLU-MLP, and cross-entropy loss simultaneously. By considering extreme cases, our techniques relax the batch size requirement in \cite{li2024training}, enabling more general results. Notably, achieving the target test error $\varepsilon$ requires a sample complexity of $N = T_{\varepsilon} = O(\log(\varepsilon^{-1}))$, representing an exponential improvement over the $O(\varepsilon^{-2})$ rate reported in \cite{li2024training}.\par

Before introducing the next proposition, we highlight a key observation from the semantic geometry in Definition \ref{Def:Word Model}. For any \(k_1 \in [K_1]\), defining \(\boldsymbol{a}_{k_1} \coloneqq (\boldsymbol{\mu}_{k_1}^{+} + \boldsymbol{\mu}_{k_1}^{-}) / 2\) and \(\boldsymbol{b}_{k_1} \coloneqq (\boldsymbol{\mu}_{k_1}^{+} - \boldsymbol{\mu}_{k_1}^{-}) / 2\), we find that for \(k_1' \neq k_1\), \(\{\boldsymbol{a}_{k_1}, \boldsymbol{b}_{k_1}\} \perp \{\boldsymbol{a}_{k_1'}, \boldsymbol{b}_{k_1'}\}\) and \(\langle \boldsymbol{a}_{k_1}, \boldsymbol{b}_{k_1} \rangle = 0\). This structure is exemplified in Figure 1(b) of \cite{park2024geometrycategoricalhierarchicalconcepts}, where “[Bird]” consists of orthogonal steering vectors: “plant \(\Rightarrow\) animal” and “mammal \(\Rightarrow\) bird,” corresponding to the concept feature \(\boldsymbol{a}_{k}\) and semantic label features \(\boldsymbol{b}_{k}\). Here, the term \(e \boldsymbol{b}_{k_1}\) in \(\boldsymbol{\mu}_{k_1}^{e}\) determines the label assignment. Similarly, defining \(\boldsymbol{c}_{k_1} \coloneqq (\boldsymbol{q}_{k_1}^{+} + \boldsymbol{q}_{k_1}^{-}) / 2\) and \(\boldsymbol{d}_{k_1} \coloneqq (\boldsymbol{q}_{k_1}^{+} - \boldsymbol{q}_{k_1}^{-}) / 2\) yields analogous properties. Detailed definitions are provided in Appendix \ref{app: convergence of expectation}. The following proposition explores the model's ability to handle OOD unseen ICL tasks.

\begin{proposition}
    \textbf{Out-of-Distribution-Generalization}\footnote{Here we do not consider the shift of $\mathcal{D}_{\xi_{\boldsymbol{x}}}, \mathcal{D}_{\xi_{\boldsymbol{y}}}$ for the ease of presentation. However, we assert that this can also be addressed by leveraging high-dimensional statistical analysis over other well-behaved noise distributions.}. During testing, the learned model admits probability distribution shift on $\mathcal{D}_{\boldsymbol{z}}^{*}$ and data shift on $\mathcal{D}_{\boldsymbol{x}}^{*} \times \mathcal{D}_{\boldsymbol{y}}^{*}$ to generate a new prompt distribution $\mathcal{D}_{S}^{\ast} =\sum_{k=1}^{K_1}\left({\pi_k^{+}}^{\ast} {\mathcal{P}_{k, {L^{\ast}+1}}^{+}}^{\ast}+{\pi_k^{-}}^{\ast} {\mathcal{P}_{k, {L^{\ast}+1}}^{-}}^{\ast}\right)$. Specifically, the new $\mathcal{D}_{S}^{\ast}$ satisfies the following properties.
    \begin{itemize}
        \item  The prompt length $L^{*}$ can be any positive integer.
        \item  $\mathcal{D}_{\boldsymbol{z}}^{*}$ can enjoy arbitrary distribution, satisfying that each prompt has at least one co-concept $k\in[K_{1}]$, at least one pair shares the query word's co-concept's label, and still each word has equal chance to have positive or negative semantic labels over its concepts\footnote{The requirement of $\mathcal{D}_{\boldsymbol{z}}^{*}$ could be relax with a stricter requirement on $L^*$ and a denser analyses.}.
        \item $\mathcal{D}_{\boldsymbol{x}}^{*} \times \mathcal{D}_{\boldsymbol{y}}^{*}$ can enjoy a great family of data shift. $\forall k \neq k^{\prime} \in [K_1], k_2 \in [K_2]$, we can have new $\mathbf{M}^{\ast}$ and $\mathbf{Q}^{\ast}$ such that ${\boldsymbol{\mu}_k^{\pm}}^{\ast}= \boldsymbol{a}_k^{\ast} {\pm} \boldsymbol{b}_k^{\ast}$, ${\boldsymbol{q}_k^{\pm}}^{\ast}= \boldsymbol{c}_k^{\ast} {\pm} \boldsymbol{d}_k^{\ast}$, $\boldsymbol{\nu}_{k_{2}}=\boldsymbol{\nu}_{k_{2}}^{*}$. Here, $\boldsymbol{a}_k^{\ast}, \boldsymbol{b}_k^{\ast}, \boldsymbol{c}_k^{\ast},\boldsymbol{d}_k^{\ast}$ are any vectors belong to the conic hulls of $\{ \boldsymbol{a}_k\}_{k=1}^{K_1}, \{ \boldsymbol{b}_k\}_{k=1}^{K_1},\{ \boldsymbol{c}_k\}_{k=1}^{K_1},\{ \boldsymbol{d}_k\}_{k=1}^{K_1}$ respectively, satisfying $\|\boldsymbol{b}_k^{\ast}\|\geq\|\boldsymbol{a}_k^{\ast}\|=\Theta(\|\mathbf{u}\|)$ and $\|\boldsymbol{d}_k^{\ast}\|\geq\|\boldsymbol{c}_k^{\ast}\|=\Theta(\|\mathbf{q}\|)$. $\boldsymbol{\nu}_{k_{2}}^{*}=\Theta(\|\mathbf{u}\|)$ are any vectors from the complement space of $\text{span}(\mathbf{M})$.
        % \an{$a_k,...,d_k$ should be introduced before the proposition.}\dkb{Thanks for your reminder!}
    \end{itemize}
    Again, the learned model satisfies $L_{\mathcal{D}_{S}^*}^{0-1}(\Psi^{(T^{*})}) \leq \varepsilon$.
\label{proposition: main body OOD}\end{proposition}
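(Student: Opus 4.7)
The plan is to trace the forward pass of the trained model $\Psi^{(T^*)}$ on the OOD test distribution, leveraging two structural facts inherited from the proof of Theorem~\ref{thm: mainbody}: (i) the learned $\mathbf{W}_Q^{\boldsymbol{x}\top}\mathbf{W}_K^{\boldsymbol{x}}$ is, up to a small error, of the form $\sum_{k}(\alpha_k\boldsymbol{a}_k\boldsymbol{a}_k^{\top}+\beta_k\boldsymbol{b}_k\boldsymbol{b}_k^{\top})$ with $\alpha_k,\beta_k>0$, and $\mathbf{W}_O^{\boldsymbol{y}}$ has a balanced population of neurons whose rows are correlated with $\pm\boldsymbol{d}_k$ (the sign dictated by the fixed $\mathbf{r}_i\in\{\pm 1\}$); (ii) every admissible shift $\boldsymbol{a}_k^{\ast},\boldsymbol{b}_k^{\ast},\boldsymbol{c}_k^{\ast},\boldsymbol{d}_k^{\ast}$ is a \emph{nonnegative} combination of the original basis and $\boldsymbol{\nu}_{k_2}^{\ast}$ is orthogonal to $\mathrm{span}(\mathbf{M})$. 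Together with the cross-concept orthogonality of $\{\boldsymbol{a}_k,\boldsymbol{b}_k\}_{k=1}^{K_1}$, these facts should transport the per-prompt logit lower bound of Theorem~\ref{thm: mainbody} to $\mathcal{D}_S^{\ast}$ with a comparable $\Omega(\nu^2)$ margin, whence the test $0$-$1$ error is again at most $\varepsilon$.

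My first step is to extract from the dynamics analysis in Appendix~\ref{app: convergence of expectation} the explicit decomposition and the signs/magnitudes of the $\alpha_k,\beta_k$ coefficients and of the neuron-direction alignments of $\mathbf{W}_O^{\boldsymbol{y}}$, and to verify that these estimates depend only on $K_1$ and $\|\mathbf{u}\|,\|\mathbf{q}\|$ rather than on the training $L$ or $\pi_k^{\pm}$. Next, for $S\sim\mathcal{D}_S^{\ast}$ with co-concept $k^{\ast}$ and query label $e^{\ast}$, I would evaluate the attention logits $\langle \mathbf{W}_Q^{\boldsymbol{x}}\boldsymbol{x}_{L^{\ast}+1}^{\ast},\mathbf{W}_K^{\boldsymbol{x}}\boldsymbol{x}_l^{\ast}\rangle$: conic nonnegativity and cross-concept orthogonality produce a concept-level gap of order $\alpha_{k^{\ast}}\|\boldsymbol{a}_{k^{\ast}}^{\ast}\|^2=\Omega(\|\mathbf{u}\|^2)$ favoring co-concept demonstrations, and a label-level gap of order $2\beta_{k^{\ast}}\|\boldsymbol{b}_{k^{\ast}}^{\ast}\|^2$ favoring matching-sign demonstrations, uniformly in $L^{\ast}$, while $\boldsymbol{\nu}_{k_2}^{\ast}\perp\mathrm{span}(\mathbf{M})$ kills the task-irrelevant contribution; the $\xi$-noise remains controlled by the same high-dimensional concentration used in Theorem~\ref{thm: mainbody}. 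Standard softmax concentration then places $\Theta(1)$ mass on those demonstrations matching both the query's concept and its label. Finally, I pass the attended value through the ReLU-MLP: its $\boldsymbol{y}$-block is close to $\boldsymbol{c}_{k^{\ast}}^{\ast}+e^{\ast}\boldsymbol{d}_{k^{\ast}}^{\ast}$, and combining the $\pm\boldsymbol{d}_{k^{\ast}}$-alignment of $\mathbf{W}_O^{\boldsymbol{y}}$ with the label-dominance condition $\|\boldsymbol{d}_k^{\ast}\|\geq\|\boldsymbol{c}_k^{\ast}\|$ forces the post-ReLU output to have sign $e^{\ast}$ with margin of order $\nu^2$, yielding the claimed $\varepsilon$ bound by the same argument as in Theorem~\ref{thm: mainbody}.

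The hard part will be showing that the label-level attention gap is large enough to effectively isolate matching-sign demonstrations under the weakest allowed $\mathcal{D}_{\boldsymbol{z}}^{\ast}$, which only guarantees a single co-concept pair whose label matches the query. In the training distribution, mismatched-label co-concept tokens either cancel in expectation or are a minority, but here they could dominate in count, so the $\boldsymbol{b}_k$-component coefficient $\beta_k$ of the learned QK product together with the OOD strengthening $\|\boldsymbol{b}_k^{\ast}\|\geq\|\boldsymbol{a}_k^{\ast}\|$ (strictly stronger than the training geometry, where $\|\boldsymbol{b}_k\|\leq\|\boldsymbol{a}_k\|$ due to $\kappa_{\boldsymbol{x}}\in(0,1)$) is what makes the softmax concentrate on the matching demonstration. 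Pinning down $\beta_k$ with the right sign and scale along the coupled SGD trajectory is the most delicate part of Step~1, and verifying that this gap is preserved by conic-hull perturbations in Step~2 requires careful case-work separating the contributions of the (possibly many) summands forming $\boldsymbol{a}_k^{\ast}$ and $\boldsymbol{b}_k^{\ast}$. A secondary nuisance is bookkeeping: since $\boldsymbol{a}_k^{\ast}$ may aggregate several $\boldsymbol{a}_{k'}$, the norm constraints $\|\boldsymbol{a}_k^{\ast}\|=\Theta(\|\mathbf{u}\|)$ and $\|\boldsymbol{c}_k^{\ast}\|=\Theta(\|\mathbf{q}\|)$ are precisely what preserve the ratio between the attention gap and the MLP margin, and I would carry this ratio explicitly through every estimate to ensure no dependence on the combinatorial complexity of the conic decomposition leaks into the final bound.
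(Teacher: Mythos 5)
Your overall route is essentially the paper's: reduce to the structure of the learned parameters (the QK product diagonal in the $\{\boldsymbol{a}_k,\boldsymbol{b}_k\}$ basis and the MLP rows aligned with $\boldsymbol{c}_k,\boldsymbol{d}_k$), argue the softmax still concentrates on label-matching demonstrations under the conic shift (using orthogonality of $\boldsymbol{\nu}_{k_2}^{\ast}$ to $\text{span}(\mathbf{M})$, the norm conditions, and the absence of label self-conflict forced by nonnegative combinations), lower-bound the MLP output margin through the $\boldsymbol{d}_k$-alignment, and then transfer the $\varepsilon$ bound via the same concentration machinery as Theorem~\ref{thm: mainbody}; the paper implements exactly this by invoking Proposition~\ref{prop: main body tolerance} to pass to the expected model, bounding the correct attention mass in Eq.~(\ref{eq:OOD assignment}), and concluding a margin $\Theta(\kappa)$ from Lemma~\ref{lem:regulerizing the models}. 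Your identification of the single-matching-pair worst case as the crux is also exactly where the paper's Eq.~(\ref{eq:OOD assignment}) does its work.

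However, one intermediate step in your plan would fail as stated: the claimed \emph{concept-level} attention gap of order $\alpha_{k^{\ast}}\|\boldsymbol{a}_{k^{\ast}}^{\ast}\|^2=\Omega(\|\mathbf{u}\|^2)$. By the expected dynamics (Lemma~\ref{lem:expected evolution of attn} and Lemma~\ref{lem:regulerizing the models}), the concept coefficients $\alpha_{Q,k},\alpha_{K,k}$ receive no positive contribution in expectation — they are only shrunk by regularization — so at $T^{*}$ the $\boldsymbol{a}_k\boldsymbol{a}_k^{\top}$ component of the QK product is still at the initialization scale, giving a contribution $O(\sigma_0^2\|\mathbf{u}\|^2)$, which under item 5 of Condition~\ref{Condition} is tiny rather than $\Omega(\|\mathbf{u}\|^2)$: the attention learns only the label-semantic $\boldsymbol{b}_k$ directions. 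This does not sink the proposition, because no learned concept preference is needed — in both the training and OOD prompt models every demonstration already shares the co-concept — and the paper's actual argument uses only your label-level gap, comparing $\exp\bigl(\sum_{\hat k}\beta_{Q,\hat k}^{(T^{*})}\beta_{K,\hat k}^{(T^{*})}/\|\boldsymbol{b}_{\hat k}\|^2\bigr)$ against $\exp\bigl((K-1)\sigma_0^2\|\mathbf{u}\|^2-\cdots\bigr)$, i.e., the interference of all other (unlearned) features is controlled by the smallness of $\sigma_0$ and largeness of $K$, not by a learned concept margin. You should replace the concept-gap step with this small-initialization control (and note that the $\|\boldsymbol{b}_k^{\ast}\|\ge\|\boldsymbol{a}_k^{\ast}\|$ condition enters exactly where you expect); with that correction your proposal coincides with the paper's proof.
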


This proposition demonstrates the strong Out-of-Distribution Generalization ability of transformer utilizing multi-concept semantics, suggesting the efficiency transformer to conduct unseen ICL tasks just by its learned ``Knowledge'' on the high-level concept and low-level label semantic information from the two non-orthogonal dictionaries. The admit of shift for $\mathcal{D}_{\boldsymbol{z}}^{*}$ denotes that each prompt can enjoy multi-co-concepts and each word-label pair can appear in at least $\|\boldsymbol{z}\|_{0}$ concept-specific prompts/tasks' distribution, which aligns the real-world cases. On the other hand, we also believe the admit of shift for $\mathcal{D}_{\boldsymbol{x}}^{*} \times \mathcal{D}_{\boldsymbol{y}}^{*}$ is inspiring, suggesting that transformer can conduct specific cross-concept semantic ``Knowledge Intersection''. As such, this lemma suggest that the transformer can master the regularity of unseen ICL tasks' ``structure'' in the presence the multi-concept encoded representation. \par

\begin{remark}
    \textbf{Comparison with Related Work}. Theorem 3.4 in \cite{li2024training} and Theorem 2 in \cite{yang2024incontextlearningrepresentationscontextual} address the transformer's OOD capability in specific structured ICL classification and regression tasks. Our results differ by focusing on compositional generalization of learned concepts, grounded in the concept-specific linear latent geometry observed in LLMs.
\end{remark}

\section{Proof Idea}

In a big picture, we simply extend standard expectation-variance reduction techniques \cite{nitanda2019stochastic} to our setting. Section \ref{sec: idempotent operator} defines coefficients to examine NN's expected projection along feature directions. Section \ref{sec: convergence of expectation} provides the convergence of the expected estimator through the lens of coefficient evolution; Section \ref{sec: exponential convergence of 0-1 loss} showcase the exponential convergence by treating the conditional expectations of the NNs as Doob martingales and exploiting the property of the tails under low-noise conditions.

\subsection{Idempotent Operator Techniques}\label{sec: idempotent operator}

\textbf{Idempotent Operator Trick}. Define $\mathbb{U} \coloneqq \text{span}(\mathbf{M})$ and its complement space $\mathbb{U}^\perp$. By definition, we know that $\text{dim}(\mathbb{U})=K$ and $\text{dim}(\mathbb{U}^\perp)=d_{\mathcal{X}}-K$. Then we can let $\{ \{\boldsymbol{a}_{k_1}\}_{k_{1}=1}^{K_{1}}, \{\boldsymbol{b}_{k_1}\}_{k_{1}=1}^{K_{1}}, \{\boldsymbol{\nu}_{k_2}\}_{k_{2}=1}^{K_{2}}, \{\boldsymbol{u}_{w}\}_{w=1}^{d_{\mathcal{X}}-K}\}$ be the set of standard orthogonal basis for $\mathbb{R}^{d_{\mathcal{X}}}$, where $\boldsymbol{u}_1^\perp, \cdots, \boldsymbol{u}_{d_{\mathcal{X}}-K}^\perp$ are the standard orthogonal basis of $\mathbb{U}^\perp$.

Then we can derive an idempotent decomposition of the identity matrix
\begin{equation}
\sum_{s=1}^{K_1}  \frac{\boldsymbol{a}_s {\boldsymbol{a}_s}^{\top}}{\|\boldsymbol{a}_s\|^2} +  \sum_{s=1}^{K_1}  \frac{\boldsymbol{b}_s {\boldsymbol{b}_s}^{\top}}{\|\boldsymbol{b}_s\|^2} +\sum_{r=1}^{K_2} \frac{\boldsymbol{\nu}_r {\boldsymbol{\nu}_r}^{\top}}{\|\mathbf{u}\|^2} + \sum_{w=1}^{d_{\mathcal{X}}-K} \boldsymbol{u}_w^\perp {\boldsymbol{u}_w^{\perp}}^{\top} = \mathbf{I}_{ d_{\mathcal{X}} \times d_{\mathcal{X}}}.
\label{eq: idempotent decomposition}\end{equation}
Similar techniques are also applied to the label's dictionary: $\mathbb{Q} \coloneqq \text{span}(\mathbf{Q})$, where we define $\boldsymbol{q}_1^{\perp}, \cdots, \boldsymbol{q}_{d_{\mathcal{Y}}-K_1}^{\perp}$ as the standard orthogonal basis of the complement space $\mathbb{Q}^\perp$. In our subsequent derivation, the expectation $\mathbb{E}[\cdot]$ is taken over the stochastic gradient descent. Similar to the idea in \cite{nitanda2019stochastic, pillaud2018exponential, shingo2021randomfeature}, we first serve to see how $\mathbb{E}(\Psi^{(t)})$ evolves. For $\mathbb{E}(\Psi^{(t)})$, every gradient descent update by all concept's samples within a soft ``weight'', and thus the analysis is equivalent to gradient descent with an ideally-balanced prompt set. Leveraging the symmetry of the prompt distribution, as well as the symmetry of $\mathbf{W}_Q^{(0)}$ and $\mathbf{W}_K^{(0)}$, we introduce the following decompositions.
\begin{figure*}[t!]
    \centering
    \includegraphics[width=1.0\linewidth]{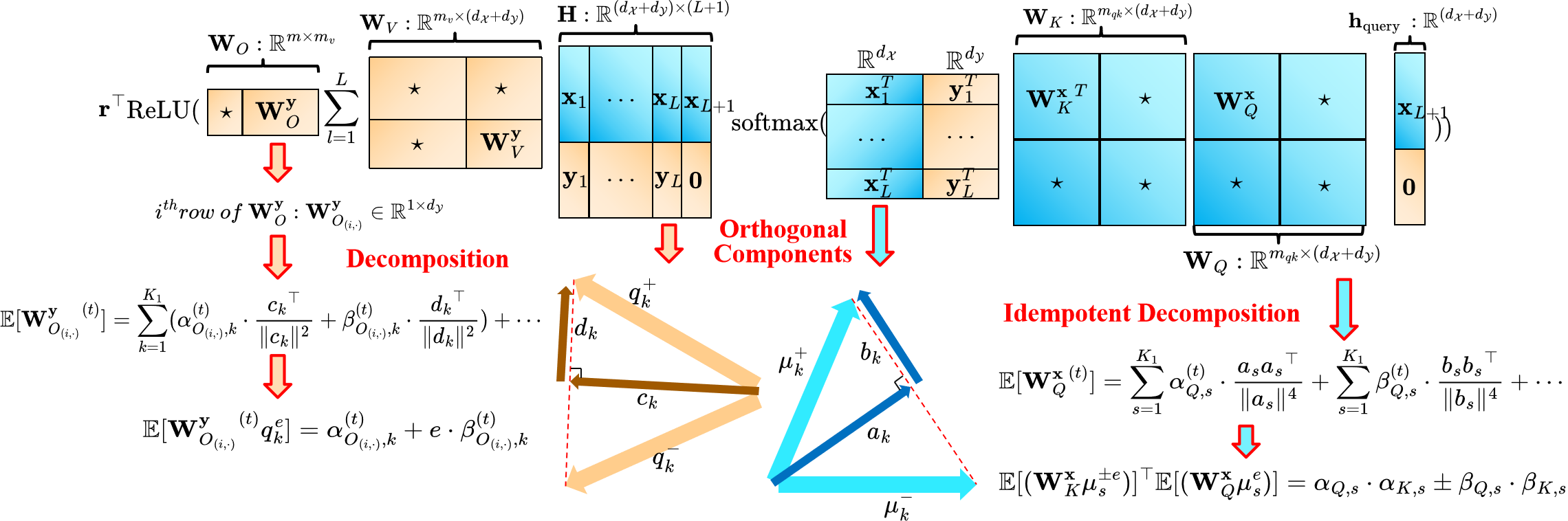}
    \caption{Illustration of our Idempotent Operator Techniques. This allows us to focus on analyzing the evolving coefficients, which are key to the expected 0-1 loss convergence.}
\label{fig:illustration of proof sketch}
\end{figure*}
\begin{lemma}
    We can decompose $\mathbb{E}[\mathbf{W}_Q^{\boldsymbol{x}}]$, $\mathbb{E}[\mathbf{W}_K^{\boldsymbol{x}}]$ and the $i$-th row of $\mathbb{E}[\mathbf{W}_O^{\boldsymbol{y}}]$ ($i \in [m]$) via the following (scaled) projection matrices and projection directions.
\[
\begin{aligned}
    &    \mathbb{E}[{\mathbf{W}_Q^{\boldsymbol{x}}}^{(t)}] =  \sum_{s=1}^{K_1}  \alpha_{Q, s}^{(t)} \cdot \frac{\boldsymbol{a}_s {\boldsymbol{a}_s}^{\top}}{\|\boldsymbol{a}_s\|^4} + \sum_{s=1}^{K_1}  \beta_{Q, s}^{(t)} \cdot \frac{\boldsymbol{b}_s {\boldsymbol{b}_s}^{\top}}{\|\boldsymbol{b}_s\|^4} + \sum_{r=1}^{K_2} \tau_{Q, r}^{(t)} \cdot \frac{\boldsymbol{\nu}_r {\boldsymbol{\nu}_r}^{\top}}{\|\mathbf{u}\|^4} + \sum_{w=1}^{d_{\mathcal{X}}-K} \rho_{Q, w}^{(t)} \cdot \boldsymbol{u}_w^\perp {\boldsymbol{u}_w^{\perp}}^{\top}, \\
    &     \mathbb{E}[{\mathbf{W}_K^{\boldsymbol{x}}}^{(t)}] = \sum_{s=1}^{K_1}  \alpha_{K, s}^{(t)} \cdot \frac{\boldsymbol{a}_s {\boldsymbol{a}_s}^{\top}}{\|\boldsymbol{a}_s\|^4} + \sum_{s=1}^{K_1}  \beta_{K, s}^{(t)} \cdot \frac{\boldsymbol{b}_s {\boldsymbol{b}_s}^{\top}}{\|\boldsymbol{b}_s\|^4} +  \sum_{r=1}^{K_2} \tau_{K, r}^{(t)} \cdot \frac{\boldsymbol{\nu}_r {\boldsymbol{\nu}_r}^{\top}}{\|\mathbf{u}\|^4} + \sum_{w=1}^{d_{\mathcal{X}}-K} \rho_{K, w}^{(t)} \cdot \boldsymbol{u}_w^\perp {\boldsymbol{u}_w^{\perp}}^{\top},\\
    &\mathbb{E}[{\mathbf{W}_{O_{(i, \cdot)}}^{\boldsymbol{y}}}^{(t)}] =  \sum_{k=1}^{K_1}  \alpha_{O_{(i, \cdot)}, k}^{(t)} \cdot \frac{ {\boldsymbol{c}_k}^{\top}}{\|\boldsymbol{c}_k\|^2} + \sum_{k=1}^{K_1}  \beta_{O_{(i, \cdot)}, k}^{(t)} \cdot \frac{{\boldsymbol{d}_k}^{\top}}{\|\boldsymbol{d}_k\|^2}  + \sum_{w=1}^{d_{\mathcal{Y}}-K_1} \rho_{O_{(i, \cdot)}, w}^{(t)} \cdot  {\boldsymbol{q}_w^{\perp}}^{\top} . 
\end{aligned}
\]
\label{lem: mainbody decomposition}\end{lemma}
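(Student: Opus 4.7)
The plan is to prove the decomposition by induction on the iteration index $t$, leveraging the symmetry of the prompt distribution together with the chosen orthogonal basis $\{\boldsymbol{a}_s/\|\boldsymbol{a}_s\|, \boldsymbol{b}_s/\|\boldsymbol{b}_s\|, \boldsymbol{\nu}_r/\|\mathbf{u}\|, \boldsymbol{u}_w^\perp\}$ of $\mathbb{R}^{d_{\mathcal{X}}}$ (and analogously for $\mathbb{R}^{d_{\mathcal{Y}}}$). The claim is effectively that $\mathbb{E}[\mathbf{W}_Q^{\boldsymbol{x}(t)}]$ and $\mathbb{E}[\mathbf{W}_K^{\boldsymbol{x}(t)}]$ remain diagonal in this basis for all $t$, and that each row of $\mathbb{E}[\mathbf{W}_O^{\boldsymbol{y}(t)}]$ lies in $\mathrm{span}(\{\boldsymbol{c}_k,\boldsymbol{d}_k,\boldsymbol{q}_w^\perp\})$. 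For the base case, $\mathbf{W}_Q^{\boldsymbol{x}(0)}=\mathbf{W}_K^{\boldsymbol{x}(0)}=\sigma_0\mathbf{I}$ admits the decomposition directly via the idempotent identity (\ref{eq: idempotent decomposition}) with coefficients $\alpha_{Q,s}^{(0)}=\beta_{Q,s}^{(0)}=\tau_{Q,r}^{(0)}=\sigma_0\|\mathbf{u}\|^2$ and $\rho_{Q,w}^{(0)}=\sigma_0$, while $\mathbb{E}[\mathbf{W}_O^{\boldsymbol{y}(0)}]=0$ gives vanishing initial coefficients.

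For the inductive step, assume the decomposition holds at step $t$ with coefficients as in the statement; I would then verify that $\mathbb{E}[\nabla_{\Psi'} L_{\mathcal{B}_t}]$ preserves the form. For $\mathbf{W}_O^{\boldsymbol{y}}$, the $i$-th row of the gradient pulls back through $\sigma_R$ and the ReLU activation and is of the form $\mathbf{r}_i$ times a scalar factor times the $\boldsymbol{y}$-block of the attention output, which is a convex combination of $\mathbf{y}_l=\mathbf{Q}\boldsymbol{z}_l+\xi_{\boldsymbol{y}}$. Because $\mathbb{E}[\xi_{\boldsymbol{y}}]=0$ and Definition~\ref{Def: prompt distribution} gives equal probability to $(\boldsymbol{\mu}_k^+,\boldsymbol{q}_k^+)$ and $(\boldsymbol{\mu}_k^-,\boldsymbol{q}_k^-)$ within any concept-$k$ prompt, the label contributions collapse to $\boldsymbol{c}_k=(\boldsymbol{q}_k^++\boldsymbol{q}_k^-)/2$ for the symmetric piece and $y_{S_n}\boldsymbol{d}_k=y_{S_n}(\boldsymbol{q}_k^+-\boldsymbol{q}_k^-)/2$ for the query-coupled piece, with no cross-concept leakage by the orthogonality of $\{\boldsymbol{q}_k^\pm\}_{k}$; the noise direction only produces a $\boldsymbol{q}_w^\perp$ contribution.

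For $\mathbf{W}_Q^{\boldsymbol{x}}$ and $\mathbf{W}_K^{\boldsymbol{x}}$, the chain-rule through the softmax produces gradient terms of the shape $c_{l,l'}\cdot (\mathbf{W}_K^{\boldsymbol{x}}\boldsymbol{x}_l)\boldsymbol{x}_{\text{query}}^\top$ and $c_{l,l'}\cdot (\mathbf{W}_Q^{\boldsymbol{x}}\boldsymbol{x}_{\text{query}})\boldsymbol{x}_l^\top$ for suitable softmax-dependent scalars $c_{l,l'}$, summed over tokens. Substituting $\boldsymbol{x}_l=\mathbf{M}\boldsymbol{z}_l+\xi_{\boldsymbol{x}}$ and taking expectation, the noise pieces contribute only a $\sum_w \boldsymbol{u}_w^\perp{\boldsymbol{u}_w^\perp}^\top$-type diagonal term plus a $\sum_r \boldsymbol{\nu}_r\boldsymbol{\nu}_r^\top/\|\mathbf{u}\|^4$ piece, while the feature products $\boldsymbol{\mu}_{k}^{e}(\boldsymbol{\mu}_{k'}^{e'})^\top$ average out due to the co-concept structure of $\mathcal{D}_S$: cross-concept pairs $k\neq k'$ vanish by the independence and equal chance of the auxiliary $z_{\neg k}$ variables together with the cross-concept orthogonality of $\mathbf{M}$, and within-concept pairs with $e=e'$ and $e\neq e'$ combine via the identities $\boldsymbol{\mu}_k^+(\boldsymbol{\mu}_k^+)^\top+\boldsymbol{\mu}_k^-(\boldsymbol{\mu}_k^-)^\top = 2\boldsymbol{a}_k\boldsymbol{a}_k^\top+2\boldsymbol{b}_k\boldsymbol{b}_k^\top$ and $\boldsymbol{\mu}_k^+(\boldsymbol{\mu}_k^-)^\top+\boldsymbol{\mu}_k^-(\boldsymbol{\mu}_k^+)^\top = 2\boldsymbol{a}_k\boldsymbol{a}_k^\top-2\boldsymbol{b}_k\boldsymbol{b}_k^\top$ to produce only scalar multiples of $\boldsymbol{a}_s\boldsymbol{a}_s^\top/\|\boldsymbol{a}_s\|^4$ and $\boldsymbol{b}_s\boldsymbol{b}_s^\top/\|\boldsymbol{b}_s\|^4$. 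The regularizer contributes $\lambda\mathbb{E}[\mathbf{W}_Q^{\boldsymbol{x}(t)}]$, which is already in the required form by the inductive hypothesis, so the updated matrices at step $t+1$ retain the decomposition.

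The main obstacle is controlling the softmax-induced coupling: the scalars $c_{l,l'}$ depend on all tokens in the prompt through $\sigma_S$, so the expectation does not factorize naively across tokens. The resolution is to exploit the concept-specific prompt structure of Definition~\ref{Def: prompt distribution}—within a prompt drawn from $\mathcal{P}_{k,L+1}^e$, all demonstration $\boldsymbol{x}_l$ share the co-concept $k$, so the softmax scalar depends only on the interaction of $\boldsymbol{\mu}_k^\pm$ features (plus noise) rather than on cross-concept features; averaging next over the $\pm$ sign with equal probability and then over $k,e$ with $\pi_k^\pm=(2K_1)^{-1}$ restores the diagonal-in-basis structure. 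The output decomposition for $\mathbf{W}_O^{\boldsymbol{y}}$ is cleaner because the $\boldsymbol{y}$-block of the gradient never sees cross-concept interactions to begin with, so it requires only the analogous $\boldsymbol{q}_k^\pm\mapsto\boldsymbol{c}_k,\boldsymbol{d}_k$ recombination.
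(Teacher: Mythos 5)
Your proposal follows essentially the same route as the paper: induction over $t$, with the base case given by the diagonal initialization of $\mathbf{W}_Q^{\boldsymbol{x}},\mathbf{W}_K^{\boldsymbol{x}}$ (and $\mathbb{E}[\mathbf{W}_O^{\boldsymbol{y}(0)}]=0$) via the idempotent identity, and the inductive step arguing that the expected gradient over the concept-balanced, $\pm$-symmetric prompt distribution stays in the span of the (scaled) projectors $\boldsymbol{a}_s\boldsymbol{a}_s^{\top},\boldsymbol{b}_s\boldsymbol{b}_s^{\top},\boldsymbol{\nu}_r\boldsymbol{\nu}_r^{\top},\boldsymbol{u}_w^{\perp}{\boldsymbol{u}_w^{\perp}}^{\top}$ (while the $\mathbf{W}_O$ rows are simply expansions in the orthogonal basis $\{\boldsymbol{c}_k,\boldsymbol{d}_k,\boldsymbol{q}_w^{\perp}\}$), which is exactly the symmetry argument in the paper's Lemma \ref{lem:symmetry learning}, stated there even more tersely. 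Only a minor slip: at initialization $\alpha_{Q,s}^{(0)}=\sigma_0\|\boldsymbol{a}_s\|^2$ and $\beta_{Q,s}^{(0)}=\sigma_0\|\boldsymbol{b}_s\|^2$ (not $\sigma_0\|\mathbf{u}\|^2$), consistent with Lemma \ref{lem:initial values}, though this does not affect the existence of the decomposition.
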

Here $\alpha_{Q, s}^{(t)}$, $\alpha_{K, s}^{(t)}$ and $\alpha_{O_{(i, \cdot)}, k}^{(t)}$ represent the expected concept learning process, $\beta_{Q, s}^{(t)}$, $\beta_{K, s}^{(t)}$ and $\beta_{O_{(i, \cdot)}, k}^{(t)}$ represent the expected concept-specific semantic learning process and $\tau_{Q, r}^{(t)}, \tau_{K, r}^{(t)}, \rho_{Q, w}^{(t)}, \rho_{K, w}^{(t)}$ and $\rho_{O_{(i, \cdot)}, w}^{(t)}$ represent the expected memorization of the concept irrelevant noise. It holds that 

\begin{equation}
\begin{aligned}
&\mathbb{E}[{({\mathbf{W}_K^{\boldsymbol{x}}}^{(t)}\boldsymbol{\mu}_s^{\pm e})}]^{\top} \mathbb{E}[{\mathbf{W}_Q^{\boldsymbol{x}}}^{(t)}\boldsymbol{\mu}_s^{e}] = \alpha_{Q, s}^{(t)} \cdot \alpha_{K, s}^{(t)}/\|\boldsymbol{a}_{s}\|^2 \pm \beta_{Q, s}^{(t)} \cdot \beta_{K, s}^{(t)}/\|\boldsymbol{b}_{s}\|^2 ,\\
& \mathbb{E}[{\mathbf{W}_{O_{(i, \cdot)}}^{\boldsymbol{y}}}^{(t)} \boldsymbol{q}_k^e] =  \alpha_{O_{(i, \cdot)}, k}^{(t)} + e \cdot \beta_{O_{(i, \cdot)}, k}^{(t)},
\end{aligned}
\label{eq: main body attention product qk decomposition}\end{equation}
for $\forall e \in [\pm], i\in [m], k \in [K_1]$ and for $\forall e^\prime \in [\pm],   s^\prime \in [K_1], r \in [K_2], w \in [d_{\mathcal{X}}-K]$, $ \forall \mathbf{u} \in \{ \boldsymbol{\mu}_{s^\prime}^{e^\prime}, \boldsymbol{\nu}_r, \boldsymbol{u}_w^{\perp} \}$, it holds that
 $ \mathbb{E}[{({\mathbf{W}_K^{\boldsymbol{x}}}^{(t)} \mathbf{u})}]^{\top} \mathbb{E}[{\mathbf{W}_Q^{\boldsymbol{x}}}^{(t)}\boldsymbol{\mu}_s^{e}] = 0$. Similar conclusions hold when the query vectors are $\boldsymbol{\nu}_r$ and $\boldsymbol{u}_w^{\perp}$, $\forall r \in [K_2], w \in [d_{\mathcal{X}}-K]$. As such, our remaining task is to scrutinize the coefficients evolution, which would be the key contributors to the expected 0-1 loss convergence.\par

\subsection{Convergence of the Expectation}\label{sec: convergence of expectation}
Denote $\mathcal{U}_{k, n}^{y_{S_n}}(t)$ and $\mathcal{W}_{k, n}^{v}(t)- \mathcal{U}_{k, n}^{y_{S_n}}(t)$ as the activated neuron set for $\{i \in [m] \mid \mathbf{r}_{i} y_{S_n} >0\}$ and $\{i \in [m] \mid \mathbf{r}_{i} y_{S_n} <0\}$ separately, and $\sum_{l \in S_{n, k}^{y_{S_n}}}{(\sigma_{S}^{(t)})}_{l}^{n}$ represents the correct attention weight, where the detailed definitions are delayed in Appendix \ref{Section: Appendix Data}. We then introduce the following lemma.
\begin{lemma}
Under Condition \ref{Condition}, when
\begin{equation}
     (\sum_{i \in \mathcal{U}_{k, n}^{y_{S_n}}(t)} - \sum_{i \in \mathcal{W}_{k, n}^{y_{S_n}}(t) - \mathcal{U}_{k, n}^{y_{S_n}}(t)}) \left(\alpha_{O_{(i, \cdot)}, {k}}^{(t)} +(2\sum_{l \in S_{n, k}^{y_{S_n}}}{(\sigma_{S}^{(t)})}_{l}^{n} - 1) y_{S_n}  \beta_{O_{(i, \cdot)}, {k}}^{(t)}\right) \geq 0,
\label{eq: main body sufficient inequality for 0-1 loss}\end{equation}
holds, we have $L_{\mathcal{D}^*}^{0-1}(\mathbb{E}({{\Psi}^{\prime}}^{(t)}) ) = 0$.
\label{lem: equivalence of expected 0-1 loss convergence}\end{lemma}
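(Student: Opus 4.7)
The plan is to show pointwise that every prompt $S \sim \mathcal{D}^*$ satisfies $y_S \cdot f(\mathbf{E}(S); \mathbb{E}[{\Psi^\prime}^{(t)}]) \geq 0$; by \eqref{eq:0-1test error} this immediately yields $L_{\mathcal{D}^*}^{0-1}(\mathbb{E}[{\Psi^\prime}^{(t)}]) = 0$. The strategy is to unfold the transformer at the expected parameters using the decomposition of Lemma~\ref{lem: mainbody decomposition}, isolate the co-concept signal feeding each ReLU pre-activation, and recognise it as precisely the bracketed quantity appearing in the hypothesis~\eqref{eq: main body sufficient inequality for 0-1 loss}.

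Concretely, fix $S_n \sim \mathcal{P}_{k, L+1}^{y_{S_n}}$. Because $\mathbf{W}_V$ and $\mathbf{W}_O$ are block-constrained with $\mathbf{W}_V^{\boldsymbol{y}} = \mathbf{I}$, the attention output collapses in the relevant block to $\sum_{l=1}^{L} (\sigma_S^{(t)})_l^n\, \boldsymbol{y}_l^n$. Splitting demonstrations by whether their local label equals $y_{S_n}$, the matching ones carry total softmax mass $w_{\mathrm{correct}} = \sum_{l \in S_{n,k}^{y_{S_n}}}(\sigma_S^{(t)})_l^n$ and the opposite ones carry $1 - w_{\mathrm{correct}}$. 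Using the symmetric decomposition $\boldsymbol{q}_k^{\pm} = \boldsymbol{c}_k \pm \boldsymbol{d}_k$ introduced before Proposition~\ref{proposition: main body OOD}, the co-concept piece of the attention output simplifies to $\boldsymbol{c}_k + (2 w_{\mathrm{correct}} - 1)\, y_{S_n}\, \boldsymbol{d}_k$. Next, applying $\mathbb{E}[{\mathbf{W}_{O_{(i,\cdot)}}^{\boldsymbol{y}}}^{(t)}]$ and invoking the identity $\mathbb{E}[{\mathbf{W}_{O_{(i,\cdot)}}^{\boldsymbol{y}}}^{(t)}] \boldsymbol{q}_k^{e} = \alpha_{O_{(i,\cdot)}, k}^{(t)} + e\, \beta_{O_{(i,\cdot)}, k}^{(t)}$ from~\eqref{eq: main body attention product qk decomposition} converts the signal part of the neuron-$i$ pre-activation into exactly the bracketed factor of~\eqref{eq: main body sufficient inequality for 0-1 loss}.

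The full pre-activation additionally carries (a) contributions from other components of $\boldsymbol{z}_l$ that are active (each with probability $K^{-1}$ per Definition~\ref{Def: prompt distribution}) and (b) a label-noise term $\mathbb{E}[{\mathbf{W}_{O_{(i,\cdot)}}^{\boldsymbol{y}}}^{(t)}] \xi_{\boldsymbol{y}}$. Under Condition~\ref{Condition}, both are of lower order: cross-concept terms vanish after projection onto $\boldsymbol{c}_k, \boldsymbol{d}_k$ by the orthogonality of Definition~\ref{Def:Word Model} and concentrate away using $K \geq C K_1$, while the Gaussian noise is suppressed by the bound on $\sigma_\xi$ together with $d_{\mathcal{Y}} \geq C \log(m/\delta)$ via standard tail inequalities. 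These controls pin the ReLU activation pattern to $\mathcal{W}_{k,n}^{y_{S_n}}(t)$ and prevent a sign flip in $y_{S_n} f$. Since $\mathbf{r}_i \in \{-1,+1\}$ is fixed, $y_{S_n}\mathbf{r}_i = +1$ precisely on $\mathcal{U}_{k,n}^{y_{S_n}}(t)$ and $-1$ on $\mathcal{W}_{k,n}^{y_{S_n}}(t) \setminus \mathcal{U}_{k,n}^{y_{S_n}}(t)$, so summing $y_{S_n}\mathbf{r}_i\, \sigma_R(\cdot)$ over all activated neurons yields, up to negligible residuals, exactly the left-hand side of~\eqref{eq: main body sufficient inequality for 0-1 loss}, which is $\geq 0$ by assumption. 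The main obstacle is the residual-control step: rigorously absorbing the cross-concept activations and the Gaussian label-noise into sub-signal-scale remainders requires using the orthogonality, the sparsity of $\boldsymbol{z}$, and all the parameter-scale bounds in Condition~\ref{Condition} in concert, and this is where the bulk of the quantitative work sits.
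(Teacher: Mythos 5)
Your algebraic reduction is the same as the paper's: expand $f$ at $\mathbb{E}[{\Psi^\prime}^{(t)}]$, use $\boldsymbol{q}_k^{\pm}=\boldsymbol{c}_k\pm\boldsymbol{d}_k$ so the co-concept part of the attention output is $\boldsymbol{c}_k+(2w_{\mathrm{correct}}-1)y_{S_n}\boldsymbol{d}_k$, apply $\mathbb{E}[{\mathbf{W}_{O_{(i,\cdot)}}^{\boldsymbol{y}}}^{(t)}]\boldsymbol{q}_k^{e}=\alpha_{O_{(i,\cdot)},k}^{(t)}+e\beta_{O_{(i,\cdot)},k}^{(t)}$, and split neurons into $\mathcal{U}_{k,n}^{y_{S_n}}(t)$ versus $\mathcal{W}_{k,n}^{y_{S_n}}(t)\setminus\mathcal{U}_{k,n}^{y_{S_n}}(t)$ according to the sign of $y_{S_n}\mathbf{r}_i$. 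Where you diverge is in how the non-signal terms are handled, and this is where your plan has a genuine gap. You propose to show \emph{pointwise} that every prompt $S\sim\mathcal{D}^*$ satisfies $y_S f(\mathbf{E}(S);\mathbb{E}[{\Psi^\prime}^{(t)}])\geq 0$ by absorbing the cross-concept activations and the Gaussian label noise into ``sub-signal-scale remainders.'' With $\xi_{\boldsymbol{y}}\sim\mathcal{N}(\mathbf{0},\sigma_\xi^2\mathbf{I}_{d_{\mathcal{Y}}})$ and $\mathbb{E}[{\mathbf{W}_{O_{(i,\cdot)}}^{\boldsymbol{y}}}^{(t)}]\neq 0$, the noise contribution to the pre-activations has full support, so a sign flip occurs with strictly positive (if tiny) probability for fresh test prompts; a concentration argument can therefore only give $L^{0-1}_{\mathcal{D}^*}\leq\varepsilon$, never the exact zero claimed by the lemma (the high-probability events of the preliminary lemmas also only cover the $BT^*$ training noises, not test noise). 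Moreover, the entire residual-control step — which in your route carries all the quantitative content — is deferred rather than carried out.

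The paper's proof avoids this obstacle by never doing pointwise control at all: it pushes an expectation over the residual prompt randomness (the off-co-concept coordinates of $\boldsymbol{z}$ and the Gaussian noises, which are symmetric/zero-mean and, for the label noise, orthogonal in expectation to the learned directions) inside the sign test, so that the signed output \emph{exactly} equals
$\frac{1}{m}\bigl(\sum_{i\in\mathcal{U}_{k,n}^{y_{S_n}}(t)}-\sum_{i\in\mathcal{W}_{k,n}^{y_{S_n}}(t)-\mathcal{U}_{k,n}^{y_{S_n}}(t)}\bigr)\bigl(\alpha_{O_{(i,\cdot)},k}^{(t)}+(2\sum_{l\in S_{n,k}^{y_{S_n}}}(\sigma_S^{(t)})_l^n-1)y_{S_n}\beta_{O_{(i,\cdot)},k}^{(t)}\bigr)$,
a deterministic quantity given $(k,y_{S_n})$, whose non-negativity is precisely hypothesis \eqref{eq: main body sufficient inequality for 0-1 loss}; hence the probability in \eqref{eq:0-1test error} is exactly zero, with no concentration needed. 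To repair your write-up you should either adopt this expectation-inside formulation (matching how the paper defines the expected dynamics in Appendix \ref{app: convergence of expectation}) or weaken your intermediate claim to a high-probability statement and accept that it proves a different (approximate) conclusion than the lemma states.
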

As such, the following lemmas show the learning outcomes of the $\mathbb{E}(\Psi^{(t)})$ along the iterations.

\begin{lemma}
    (Convergence of the Expectation). There exist constant $C_{1}>0$, $\forall t \geq \hat{T} = {C_{1} \sigma_{1} m \lambda K_{1}{\gamma}\sqrt{(1+\kappa_{\boldsymbol{y}})\log (5Km/\delta)}}/{{w^{*}}^{2}(1-\kappa_{\boldsymbol{y}})\|\mathbf{q}\|}$, we have $L_{\mathcal{D}^*}^{0-1}(\mathbb{E}({{\Psi}^{\prime}}^{(t)}) ) = 0$.
\end{lemma}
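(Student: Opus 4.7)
The plan is to reduce the claim to the sufficient condition identified in Lemma~\ref{lem: equivalence of expected 0-1 loss convergence}: it is enough to show that after $t\ge\hat T$, for every prompt realization indexed by $(n,k)$,
\[
\Bigl(\sum_{i \in \mathcal{U}_{k,n}^{y_{S_n}}(t)} - \sum_{i \in \mathcal{W}_{k,n}^{y_{S_n}}(t)-\mathcal{U}_{k,n}^{y_{S_n}}(t)}\Bigr)\Bigl(\alpha_{O_{(i,\cdot)},k}^{(t)} + (2\mathcal{A}_{k,n}^{(t)}-1) y_{S_n}\beta_{O_{(i,\cdot)},k}^{(t)}\Bigr) \ge 0,
\]
where $\mathcal{A}_{k,n}^{(t)}\defeq\sum_{l\in S_{n,k}^{y_{S_n}}}(\sigma_S^{(t)})_l^n$ is the ``correct'' attention mass. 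So the proof strategy has three movements: (i) derive closed-form recursions for the six coefficient families in Lemma~\ref{lem: mainbody decomposition}; (ii) show that the attention mass $\mathcal{A}_{k,n}^{(t)}$ is pushed to $1$; and (iii) show that the MLP signal coefficient $\beta_{O_{(i,\cdot)},k}^{(t)}$ dominates both the cross-concept coefficient $\alpha_{O_{(i,\cdot)},k}^{(t)}$ and any mis-direction induced by residual attention mass on incorrect demonstrations.

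First, I would project the stochastic gradient of $L_{\mathcal{B}_t}$ with respect to $\mathbf{W}_Q^{\boldsymbol{x}}, \mathbf{W}_K^{\boldsymbol{x}}, \mathbf{W}_O^{\boldsymbol{y}}$ onto each rank-one block of the idempotent decomposition~\eqref{eq: idempotent decomposition}. Because expectations over $\mathcal{D}_S$ are symmetric in $\pm$ and across concepts (by $\pi_k^{\pm}=(2K_1)^{-1}$), the cross terms and the task-irrelevant coefficients $\tau_{Q,r},\tau_{K,r},\rho_{Q,w},\rho_{K,w},\rho_{O_{(i,\cdot)},w}$ satisfy contractive recursions driven only by the $L_2$ regularizer and the small noise $\sigma_\xi$, which together with the chosen step size $\eta_t=2/(\lambda(\gamma+t))$ forces them to remain $O(\eta_t)$-small up to $T^*$ by Condition~\ref{Condition}. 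The concept coefficients $\alpha_{Q,s},\alpha_{K,s}$ and $\alpha_{O_{(i,\cdot)},k}$ obey symmetric recursions as well: because $y_{S_n}$ appears symmetrically in prompts of both polarities, their expected updates cancel to leading order, so $\alpha_{O_{(i,\cdot)},k}^{(t)}$ stays within a window determined by $\sigma_1$ and the regularization.

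Second, I would show that the \emph{semantic} coefficients $\beta_{Q,s},\beta_{K,s},\beta_{O_{(i,\cdot)},k}$ are the only ones receiving a one-sided positive drift. Using \eqref{eq: main body attention product qk decomposition}, the correct attention mass $\mathcal{A}_{k,n}^{(t)}$ is driven by $\beta_{Q,k}^{(t)}\beta_{K,k}^{(t)}/\|\boldsymbol{b}_k\|^2$; once this product exceeds a $\Theta(1)$ threshold, the softmax saturates and $\mathcal{A}_{k,n}^{(t)} \to 1$ at a geometric rate governed by the constant $w^*=\tanh(\sigma_0^2(1-\kappa_{\boldsymbol{x}})^2\|\mathbf{u}\|^4/2)$ arising from the diagonal initialization $\sigma_0\mathbb{I}$. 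In parallel, on the activated neuron set $\mathcal{U}_{k,n}^{y_{S_n}}$, the sign of $\mathbf{r}_i y_{S_n}$ multiplies the loss derivative so that $y_{S_n}\beta_{O_{(i,\cdot)},k}^{(t)}$ grows linearly (up to the softmax correction) in the label direction $\boldsymbol{d}_k$, while neurons in $\mathcal{W}_{k,n}-\mathcal{U}_{k,n}$ acquire the opposite sign. Combining these with the lower bound on $\|\boldsymbol{d}_k\|/\|\mathbf{q}\|$ implied by $\kappa_{\boldsymbol{y}}<1$, the product $(2\mathcal{A}_{k,n}^{(t)}-1)y_{S_n}\beta_{O_{(i,\cdot)},k}^{(t)}$ becomes uniformly larger in magnitude than the residual $\alpha_{O_{(i,\cdot)},k}^{(t)}$ after $\Theta(\sigma_1\lambda K_1\gamma/w^{*2}\|\mathbf{q}\|)$ iterations, which is exactly $\hat T$.

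Finally, I would plug these bounds into the sufficient condition, observing that the $\mathbf{r}_i\in\{\pm 1\}$ signs make the two sums over $\mathcal{U}$ and $\mathcal{W}\setminus\mathcal{U}$ add constructively on the $\beta_O$ component and cancel on the $\alpha_O$ component, yielding non-negativity for every $(n,k)$. The hardest step I expect is step~(iii): the coupled, non-convex interaction between the softmax saturation and the ReLU activation set $\mathcal{U}_{k,n}^{y_{S_n}}(t)$ — both of which are time-varying and depend on all coefficient families simultaneously. In particular, one must rule out early-training configurations where a neuron switches sides of the ReLU boundary and temporarily flips the dominant term in the bracket, which I would handle by a monotonicity argument: once $|\beta_{O_{(i,\cdot)},k}^{(t)}|$ exceeds the drift cap imposed on $|\alpha_{O_{(i,\cdot)},k}^{(t)}|$ by the symmetry of $\mathcal{D}_S$, the activation pattern stabilizes for $t\ge\hat T$, after which Lemma~\ref{lem: equivalence of expected 0-1 loss convergence} applies and delivers $L_{\mathcal{D}^*}^{0\text{-}1}(\mathbb{E}[\Psi'^{(t)}])=0$.
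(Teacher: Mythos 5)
Your reduction to the sufficient condition of Lemma~\ref{lem: equivalence of expected 0-1 loss convergence} matches the paper, but the two quantitative steps you build on it contain genuine gaps. First, you treat $w^{*}$ as a geometric rate at which the correct attention mass saturates to $1$, and you make the saturation of $\mathcal{A}_{k,n}^{(t)}$ part of what must happen by $\hat T$. In the paper $w^{*}=2\sigma_S^{*}-1$ is something else entirely: it is a uniform-in-time \emph{lower bound on the attention margin} $2\mathcal{A}_{k,n}^{(t)}-1$, established by the Period-1 analysis (Lemma~\ref{lem:defnition of sigma_S^*}) which controls how far $\beta_{Q,k}\beta_{K,k}$ can dip below its (already tiny, $\sigma_0^2$-scale) initialization. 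Since $\sigma_0$ is small under Condition~\ref{Condition}, the attention stays close to $1/2$ throughout the first stage; no saturation occurs by $\hat T$ and none is needed. The factor $w^{*2}$ in $\hat T$ arises because the margin-relevant quantity grows at a per-step rate proportional to the attention margin squared (once through the gradient of $\beta_{O}$, once through its weighting $(2\mathcal{A}-1)$ in the output), not because the softmax converges geometrically. Waiting for a $\Theta(1)$ attention margin, as your plan requires, would land you in the second-stage/ODE regime (Lemmas~\ref{lem:gaussian rate towards a certain attn score}--\ref{lem: initial regulaerized model}) and would not reproduce the stated $\hat T$.

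Second, your claim that the expected $\alpha_{O_{(i,\cdot)},k}^{(t)}$ ``stays within a window determined by $\sigma_1$ and the regularization'' by symmetry is false at the per-neuron level: Lemma~\ref{lem: evolution scenario} shows $\mathbb{E}[\alpha_{O_{(i,\cdot)},k}^{(t)}]$ grows monotonically on the correctly-activated neurons, and Lemma~\ref{lem:induction} only bounds it by a $\Theta(\log T^{*})$-scale quantity comparable to $\beta_{O}$. What is actually controlled is the \emph{signed aggregate} over the matched activation sets, and the control comes from showing that $\lvert\mathbb{E}_{\mathcal{V}_k^{+}}[\ell'_n]-\mathbb{E}_{\mathcal{V}_k^{-}}[\ell'_n]\rvert=O(\kappa)$ throughout (Lemma~\ref{lemma:ODE for A}), together with the bookkeeping of the sets $\mathcal{U}\cap(\mathcal{W}^{-e}-\mathcal{U}^{-e})$ whose membership keeps changing (so your ``activation pattern stabilizes after $\hat T$'' monotonicity argument is also not available). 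The mechanism that actually fixes $\hat T$ is different from your dominance-over-$\alpha_O$ criterion: the aggregate margin $\mathbf{A}_t^{k,e}$ grows linearly at rate $\Theta\big(\eta_0\, w^{*2}(1-\kappa_{\boldsymbol{y}})\|\mathbf{q}\|^2/(mK_1)\big)$ as long as the output is still $O(\kappa)$ (so $-\mathbb{E}[\ell'_n]$ is bounded below by a constant, Lemmas~\ref{lem:bound of yf} and~\ref{lem:lem of 0-1 loss}), and $\hat T$ is the time for it to exceed the initialization scale $\kappa=\Theta\big(\sigma_1\|\mathbf{q}\|\sqrt{(1+\kappa_{\boldsymbol{y}})\log(5Km/\delta)}\big)$. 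Your stated iteration count $\Theta(\sigma_1\lambda K_1\gamma/(w^{*2}\|\mathbf{q}\|))$ accordingly misses the factors $m$, $\sqrt{(1+\kappa_{\boldsymbol{y}})\log(5Km/\delta)}$ and $(1-\kappa_{\boldsymbol{y}})^{-1}$, which is a symptom that the proposed route does not actually produce the threshold in the statement.
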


\begin{lemma}
    (Regularizing the models). Under Condition \ref{Condition}, it holds that
 \[
    \begin{aligned}
        & \alpha_{Q, k}^{(T^{*})}=\alpha_{K, k}^{(T^{*})} =O(\mathbb{E}[\alpha_{Q, k}^{(0)}]), \quad {\beta_{Q, k}^{(T^{*})}} ={\beta_{K, k}^{(T^{*})}} =\Theta(\|\mathbf{u}\|\sqrt{ \log(\frac{\|\mathbf{u}\|^{2}}{\lambda K_{1}} \log(\frac{\|\mathbf{q}\|^{2}}{ m \lambda K_{1}}))}),\\
        & \alpha_{O_{(i,\cdot)}, k}^{(T^{*})} \leq \lvert\beta_{O_{(i,\cdot)}, k}^{(T^{*})}\rvert = \Theta( \log( \frac{\|\mathbf{q}\|^{2}}{ m \lambda K_{1}})), \mathbb{E}[(\sum_{j \in S_{n, k}^{y_{S_n}}}{(\sigma_{S}^{(T^{*})})}_{j}^{n})] = \Theta(\dfrac{1}{1+\frac{\lambda K_{1}}{\|\mathbf{u}\|^{2}}\log(\frac{ m \lambda K_{1}}{\|\mathbf{q}\|^{2}})}).
    \end{aligned}
    \]
\label{lem:regulerizing the models}\end{lemma}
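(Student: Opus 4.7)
The plan is to establish each of the four scalings by reducing the matrix-valued SGD to a one-dimensional recursion for every coefficient in the decomposition of Lemma \ref{lem: mainbody decomposition}. First, I would compute $\nabla_{\mathbf{W}_Q^{\boldsymbol{x}}}, \nabla_{\mathbf{W}_K^{\boldsymbol{x}}}, \nabla_{\mathbf{W}_O^{\boldsymbol{y}}}$ of $L_{\mathcal{B}_t}$, project each gradient onto the orthonormal bases $\{\boldsymbol{a}_s,\boldsymbol{b}_s,\boldsymbol{\nu}_r,\boldsymbol{u}_w^{\perp}\}$ and their label analogues, take expectations, and use the $\pm$-symmetry of $\mathcal{D}_S$ together with cross-concept orthogonality to eliminate all cross-terms. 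Each coefficient then evolves under a scalar recursion of the schematic form $c^{(t+1)}=(1-\eta_t\lambda)c^{(t)}+\eta_t\cdot\text{signal}^{(t)}$, with the noise-memorization coefficients $\tau_{Q,r},\tau_{K,r},\rho_{Q,w},\rho_{K,w},\rho_{O_{(i,\cdot)},w}$ kept at size $o(1)$ by Condition \ref{Condition}'s bound on $\sigma_\xi$.

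Second, I would separate the MLP sub-problem from the attention sub-problem. For the MLP, the equal-prior symmetry in $y_{S_n}\in\{\pm\}$ cancels the leading-order signal in the $\alpha_{O_{(i,\cdot)},k}$ recursion, so the regularization drag traps it near its initialization scale and a union bound over the $m$ neurons yields the stated upper bound $\alpha_O\le|\beta_O|$. The $\beta_{O_{(i,\cdot)},k}$ recursion, in contrast, carries a coherent signal of order $\|\mathbf{q}\|^{2}\,\ell'(y_{S_n} f)/(mK_1)$; growth continues until $\ell'$ shrinks enough that this signal matches the drag $\lambda\beta_O$, and solving the implicit fixed-point equation with $\ell'(z)=1/(1+e^{z})$ produces the $\Theta(\log(\|\mathbf{q}\|^{2}/(m\lambda K_1)))$ scaling.

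Third, for the attention coefficients I would exploit the fact that $\mathbf{W}_Q^{\boldsymbol{x}}$ and $\mathbf{W}_K^{\boldsymbol{x}}$ start as equal scaled identities and receive structurally identical expected updates, so the equalities $\alpha_{Q,k}^{(t)}=\alpha_{K,k}^{(t)}$ and $\beta_{Q,k}^{(t)}=\beta_{K,k}^{(t)}$ are preserved along the trajectory. The same cancellation that kept $\alpha_O$ small also keeps $\alpha_{Q,k}$ near initialization, while $\beta_{Q,k}$ receives a signal proportional to $\|\mathbf{u}\|^{4}\beta_{Q,k}$ weighted by an MLP-dependent factor scaling like $\beta_O$. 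Solving the balance $\lambda\beta_{Q,k}\sim K_1^{-1}\|\mathbf{u}\|^{4}\beta_{Q,k}\cdot(\text{factor})$ after substituting the $\beta_O$ scale from step two yields the claimed $\Theta(\|\mathbf{u}\|\sqrt{\log(\tfrac{\|\mathbf{u}\|^{2}}{\lambda K_1}\log(\tfrac{\|\mathbf{q}\|^{2}}{m\lambda K_1}))})$. Using identity \eqref{eq: main body attention product qk decomposition}, the softmax logit on correct demonstrations equals $\beta_{Q,k}\beta_{K,k}/\|\boldsymbol{b}_k\|^{2}$, and a direct two-outcome softmax computation then gives the $\Theta(1/(1+\tfrac{\lambda K_1}{\|\mathbf{u}\|^{2}}\log(\tfrac{m\lambda K_1}{\|\mathbf{q}\|^{2}})))$ attention-weight scaling.

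The hard part will be the circular dependence between the MLP $\beta_O$ balance and the attention softmax weight, which itself depends on $\beta_Q\beta_K$. I would resolve this by a two-stage bootstrap. In Stage I, while the logits are still $O(1)$, the softmax is near uniform and the $\beta_O$ and $\beta_Q\beta_K$ dynamics decouple to leading order, allowing each to leave its initialization scale monotonically. Stage II begins once the softmax logits exceed $\Omega(1)$ and the softmax begins to focus; at that point a joint fixed point for $(\beta_O,\beta_Q\beta_K)$ is well-defined, and Condition \ref{Condition}'s bounds on $\sigma_0,\sigma_1,\gamma,\lambda$ pin this fixed point to a small neighborhood of the scalings above. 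The choice of $T^{*}$ is long enough for the recursion to reach that neighborhood but short enough that the $\tau,\rho$ coefficients cannot accumulate past $o(1)$, which is what makes each bound tight in the $\Theta$-sense rather than merely $O$.
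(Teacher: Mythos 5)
Your overall strategy is the same as the paper's: decompose the weights into the coefficients of Lemma \ref{lem: mainbody decomposition}, reduce SGD in expectation to scalar recursions of the form $c^{(t+1)}=(1-\eta_t\lambda)c^{(t)}+\eta_t\cdot\mathrm{signal}$, let $\beta_{O}$ grow until the shrinking $-\ell'$ balances the drag $\lambda\beta_O$ (giving the $\log(\|\mathbf{q}\|^2/(m\lambda K_1))$ scale), feed that into the $\beta_Q\beta_K$ balance against $\lambda$ through the softmax variance factor $e^{-2\beta_Q^2/\|\boldsymbol{b}_k\|^2}$ (giving the nested-log scale), and read off the attention weight from $\beta_Q\beta_K/\|\boldsymbol{b}_k\|^2$. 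Your two-stage bootstrap also mirrors the paper's first-stage growth / second-stage regularization split.

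However, there is a genuine gap in your treatment of $\alpha_{O_{(i,\cdot)},k}$. You claim that the equal prior over $y_{S_n}\in\{\pm\}$ cancels the leading-order signal in the $\alpha_O$ recursion so that it stays trapped near its initialization scale, with the bound $\alpha_O\le|\beta_O|$ then following from a union bound over neurons. This is not how the dynamics behave, and the cancellation argument fails: the gradient of $\alpha_{O_{(i,\cdot)},k}$ carries the ReLU indicator ${\mathds{1}_{O_{(i)}}^n}$, which is correlated with the prompt label $e$ (a neuron with large $e\,\mathbf{r}_i\beta_{O_{(i,\cdot)},k}$ is activated predominantly on prompts of label $e$). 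For such label-selective neurons the two terms $e\,\mathbb{E}_{n\in\mathcal{V}_k^{e}}[\ell'\mathds{1}_O]$ do not cancel, so $\alpha_O$ receives a first-order one-sided signal and grows well beyond initialization; only for neurons activated on both labels does the update reduce to the small difference $\mathbb{E}_{\mathcal{V}_k^{e}}[\ell']-\mathbb{E}_{\mathcal{V}_k^{-e}}[\ell']$. The paper's proof of $\alpha_{O}^{(T^*)}\le|\beta_{O}^{(T^*)}|$ (Lemma \ref{lem: evolution scenario} and Lemma \ref{lem: appendix final scale of coefficient}) relies precisely on this: neurons migrate back and forth between the label-selective and doubly-activated regimes, so that $\alpha_{O_{(i,\cdot)},k}^{(t)}$ tracks $\mathbb{E}[(2\sum_{l\in S_{n,k}^{e}}(\sigma_S^{(t)})_l^n-1)\,e\,\beta_{O_{(i,\cdot)},k}^{(t)}]$, which is $\le|\beta_O|$ but of the same $\Theta(\log(\cdot))$ order, not $O(\sigma_1)$. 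Note also that the analogous cancellation \emph{is} valid for the attention coefficient $\alpha_{Q,k}=\alpha_{K,k}$ (its expected update is pure regularization decay, which is why it stays at $O(\mathbb{E}[\alpha_{Q,k}^{(0)}])$), so attributing the two bounds to the same mechanism conflates two different arguments; your proposal needs the neuron-set (activation-pattern) analysis, or some substitute for it, to justify the $\alpha_O\le|\beta_O|$ claim.
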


In addition, our analysis provides three asymptotic properties of the coefficients evolution, which are delayed to Appendix \ref{subsubsec: increasing} and \ref{subsec: regulerizing the model} for room limitation.
\subsection{Exponential Convergence of 0-1 loss}\label{sec: exponential convergence of 0-1 loss}

\begin{proposition}
    $ \forall t \geq \hat{T}$, when $\| {\Psi^\prime}^{(t)} - \mathbb{E}({\Psi^\prime}^{(t)}) \|_F \leq \nu $ holds, we have $L_{\mathcal{D}^*}^{0-1}({\Psi^\prime}^{(t)} ) = 0$. Here, $\| {\Psi^\prime} \|_F^2\coloneqq\|\mathbf{W}_{Q}^{\boldsymbol{x}}\|_F^2 + \|\mathbf{W}_{K}^{\boldsymbol{x}}\|_F^2 + \|\mathbf{W}_{O}^{\boldsymbol{y}}\|_F^2$.
\label{prop: main body tolerance}\end{proposition}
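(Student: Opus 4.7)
The plan is to leverage the deterministic sufficient condition from \Lemref{lem: equivalence of expected 0-1 loss convergence} together with a perturbation argument: for $t \geq \hat{T}$ the expected model already realizes zero 0-1 loss with a quantitative margin (as captured by \Lemref{lem:regulerizing the models}), and $\nu$ is exactly chosen so that a Frobenius deviation of this size cannot reverse the sign of $y_S \cdot f(\mathbf{H}; {\Psi^\prime}^{(t)})$ on any $S \sim \mathcal{D}^*$. To operationalize this, I would fix a prompt $S$ with co-concept $k$ and label $y_S = e \in [\pm]$, use the idempotent decomposition of \Lemref{lem: mainbody decomposition} to expand $f(\mathbf{H}; {\Psi^\prime}^{(t)})$ as a sum over neurons of ReLU-gated products between softmax attention weights and MLP coefficients, and split ${\Psi^\prime}^{(t)} = \mathbb{E}({\Psi^\prime}^{(t)}) + \Delta$ with $\|\Delta\|_F \leq \nu$.

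Two independent perturbation budgets then have to be controlled. The \emph{MLP budget} reflects the sensitivity of the output to the perturbation of $\mathbf{W}_O^{\boldsymbol{y}}$: a Frobenius-$\nu$ perturbation of the $i$-th row shifts its inner product with $\boldsymbol{q}_k^{e}$ by at most $\nu\sqrt{(1+\kappa_{\boldsymbol{y}})/2}\,\|\mathbf{q}\|$, so the first branch $2\sqrt{2}\sigma_1/(1+\kappa_{\boldsymbol{y}})$ in $\nu$ is precisely calibrated to keep this shift below the initialization-scale margin $\Theta(\sigma_1\|\mathbf{q}\|)$; combined with the lower bound on $|\beta_{O_{(i,\cdot)},k}^{(t)}|$ from \Lemref{lem:regulerizing the models}, this also preserves the ReLU activation pattern on the ``correct'' neurons $\mathcal{U}_{k,n}^{y_{S_n}}(t)$. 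The \emph{attention budget} addresses the softmax nonlinearity in $\sigma_S((\mathbf{W}_K^{\boldsymbol{x}} \mathbf{h}_l)^\top \mathbf{W}_Q^{\boldsymbol{x}} \mathbf{h}_{\text{query}})$: a perturbation of $\mathbf{W}_Q^{\boldsymbol{x}}, \mathbf{W}_K^{\boldsymbol{x}}$ shifts each pre-softmax logit by $O(\nu\|\mathbf{u}\|^2)$ plus a term quadratic in $\nu$, and softmax Lipschitzness then bounds the shift in the attention weights. The second branch $\sigma_0(1-\kappa_{\boldsymbol{x}})\exp(-\log(5Km/\delta)\,\sigma_1^2\|\mathbf{u}\|^4 (1+e^{-\sigma_0^2\|\mathbf{u}\|^2})/(1-e^{-\sigma_0^2\|\mathbf{u}\|^2}))$ in $\nu$ is tailored so that, even at the worst activation-pattern-induced amplification on the scale of $\sigma_0^2\|\mathbf{u}\|^2$, the correct softmax mass $\sum_{l \in S_{n,k}^{y_{S_n}}}(\sigma_S^{(t)})_l^n$ stays on the same side of $1/2$ as for $\mathbb{E}({\Psi^\prime}^{(t)})$.

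With both budgets controlled, I would verify that inequality (\ref{eq: main body sufficient inequality for 0-1 loss}) still holds with ${\Psi^\prime}^{(t)}$ in place of $\mathbb{E}({\Psi^\prime}^{(t)})$: each factor $\alpha_{O_{(i,\cdot)},k}^{(t)} + (2\sum_{l \in S_{n,k}^{y_{S_n}}}(\sigma_S^{(t)})_l^n - 1) y_{S_n}\beta_{O_{(i,\cdot)},k}^{(t)}$ keeps the same sign as for the expectation, so the neuron-summed expression keeps its sign and hence $y_{S}\cdot f(\mathbf{H}; {\Psi^\prime}^{(t)}) > 0$ on every prompt from $\mathcal{D}^*$, yielding $L_{\mathcal{D}^*}^{0-1}({\Psi^\prime}^{(t)}) = 0$.

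The main technical obstacle is the multiplicative coupling of the attention and MLP perturbations, since $f(\mathbf{H};{\Psi^\prime}^{(t)})$ is bilinear in the attention output and in $\mathbf{W}_O^{\boldsymbol{y}}$. I would handle this through a triangle-inequality split of $f(\mathbf{H};{\Psi^\prime}^{(t)}) - f(\mathbf{H};\mathbb{E}({\Psi^\prime}^{(t)}))$ into an attention-only shift, an MLP-only shift, and a cross term, and show each piece is at most a constant fraction of the margin certified by the expected model through \Lemref{lem:regulerizing the models}. Taking $\nu$ to be the minimum of the two branches then yields the claim uniformly over prompts in $\mathcal{D}^*$ for every $t \geq \hat{T}$.
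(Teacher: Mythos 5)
Your proposal is correct and follows essentially the same route as the paper's proof: both arguments exploit the $\Theta(\kappa)\sim\Theta(\sigma_1\|\mathbf{q}\|)$ output margin of the expected model for $t\ge\hat T$ and translate it into a weight-space tolerance, with the first branch of $\nu$ calibrated to the admissible deviation of $\mathbf{W}_O^{\boldsymbol{y}}$ along the $\boldsymbol{c}_k,\boldsymbol{d}_k$ directions and the second branch to the admissible deviation of $\mathbf{W}_Q^{\boldsymbol{x}},\mathbf{W}_K^{\boldsymbol{x}}$ along $\boldsymbol{b}_k\boldsymbol{b}_k^{\top}$, i.e.\ to keeping the correct attention mass above $1/2$ (the paper phrases this via the trajectory lower bound $\underline{\beta_{QK}^{-}}$ and a ``minimum admissible disparity'' computed conditionally on each block, rather than your forward Lipschitz bound with an explicit cross term, but this is a presentational difference). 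Only a citation-level quibble: the margin at $\hat T$ comes from the definition of $\hat T$ via $\mathbb{E}[\mathbf{A}_t^{k,e}]\ge\kappa$ together with Lemma \ref{lem:bound of yf} (and Lemma \ref{lem: equivalence of expected 0-1 loss convergence}), not from Lemma \ref{lem:regulerizing the models}, which only supplies the coefficient scales at $T^{*}$.
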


By definition of 0-1 loss, then we only need to prove the 0-1 loss convergence by seeing the speed of ${\Psi^\prime}^{(t)}$ converging to $\mathbb{E}({\Psi^\prime}^{(t)})$ with an error of $\nu$ in terms of $\|\cdot \|_{F}$.  \par
Drawing insights from \cite{nitanda2019stochastic}, we see $\mathcal{B}_{0}, \cdots, \mathcal{B}_{T-1}$ as a i.i.d. random variables following the same distribution. Then $\forall t \in \{0, \cdots, T\}$, it holds that 
    \begin{equation}
        \begin{aligned}
            & D_{Q}^{t} = \mathbb{E}[{\mathbf{W}_{Q}^{\boldsymbol{x}}}^{(T+1)} \mid \mathcal{B}_{0}, \cdots, \mathcal{B}_{t} ] - \mathbb{E}[{\mathbf{W}_{Q}^{\boldsymbol{x}}}^{(T+1)} \mid \mathcal{B}_{0}, \cdots, \mathcal{B}_{t-1} ], \\
            & D_{K}^{t} = \mathbb{E}[{\mathbf{W}_{K}^{\boldsymbol{x}}}^{(T+1)} \mid \mathcal{B}_{0}, \cdots, \mathcal{B}_{t} ] - \mathbb{E}[{\mathbf{W}_{K}^{\boldsymbol{x}}}^{(T+1)} \mid \mathcal{B}_{0}, \cdots, \mathcal{B}_{t-1} ] \\
            & D_{O}^{t} = \mathbb{E}[{\mathbf{W}_{O}^{\boldsymbol{y}}}^{(T+1)} \mid \mathcal{B}_{0}, \cdots, \mathcal{B}_{t} ] - \mathbb{E}[{\mathbf{W}_{O}^{\boldsymbol{y}}}^{(T+1)} \mid \mathcal{B}_{0}, \cdots, \mathcal{B}_{t-1} ],
        \end{aligned}
    \end{equation}
    are martingale difference sequences, and for $\forall X \in \{Q, K, O\}$ and its corresponding $\mathbf{W} \in \{{\mathbf{W}_{Q}^{\boldsymbol{x}}}, \mathbf{W}_{K}^{\boldsymbol{x}}, {\mathbf{W}_{O}^{\boldsymbol{y}}}\}$, we have $\sum_{t=0}^{T} D_{X}^{t} = {\mathbf{W}}^{(T+1)} - \mathbb{E}[{\mathbf{W}}^{(T+1)}]$. Then we utilize the following lemma in \cite{nitanda2019stochastic,pinelis1994optimum} to give a bound over the variance.
    
    \begin{lemma}
    Let $D_{1}, \cdots, D_{T-1}$ be a martingale difference sequence. Suppose $\exists c_T > 0$ such that $\sum_{t=0}^{T} \| D_{t} \|_{\infty}^2 \leq c_T^2$, where $\| \cdot \|_{\infty}$ is the essential supremum of $\| \cdot \|_F$. Then for $\forall \epsilon > 0$, we have
        \[
        \mathbb{P}\Big[ \underset{s \in [T]}{\sup} \| \sum_{t=0}^{s} D_{t} \|_F \geq \epsilon \Big] \leq 2 \exp(-\dfrac{\epsilon^2}{2 c_T^2}).
        \]
    \end{lemma}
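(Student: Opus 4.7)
The plan is to establish this as a Hilbert-space-valued Azuma--Hoeffding inequality combined with Doob's maximal inequality, exploiting the fact that the Frobenius norm endows the space of matrices with a Hilbert-space structure via the trace inner product $\langle A, B \rangle_F = \operatorname{tr}(A^\top B)$. The argument hinges on constructing an exponential supermartingale out of the partial sums and then optimizing a Chernoff-type bound.

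First, I would set $S_s = \sum_{t=0}^{s} D_t$, fix a tuning parameter $\lambda > 0$, and define the auxiliary process
$$M_s \;=\; \cosh(\lambda \|S_s\|_F) \, \exp\!\Bigl(-\tfrac{\lambda^2}{2} \sum_{t=0}^{s} \|D_t\|_\infty^2\Bigr).$$
The key step is a one-step smoothing estimate
$$\mathbb{E}\bigl[\cosh(\lambda \|S_s\|_F) \,\big|\, \mathcal{F}_{s-1}\bigr] \;\leq\; \cosh(\lambda \|S_{s-1}\|_F) \, \exp\!\bigl(\tfrac{\lambda^2}{2} \|D_s\|_\infty^2\bigr),$$
which immediately gives that $M_s$ is a nonnegative supermartingale with $M_0 = 1$. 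To prove this estimate I would exploit the Hilbert identity $\|S_s\|_F^2 = \|S_{s-1}\|_F^2 + 2 \langle S_{s-1}, D_s \rangle_F + \|D_s\|_F^2$; the martingale property $\mathbb{E}[D_s \mid \mathcal{F}_{s-1}] = 0$ combined with Cauchy--Schwarz shows that, conditionally on $\mathcal{F}_{s-1}$, the scalar random variable $\langle S_{s-1}/\|S_{s-1}\|_F, D_s \rangle_F$ is mean-zero and bounded by $\|D_s\|_\infty$, reducing the task to a classical scalar Hoeffding moment bound on $\cosh$.

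Next, I would apply Doob's maximal inequality to the nonnegative supermartingale $M_s$: for any $a > 0$, $\mathbb{P}[\sup_{s \leq T} M_s \geq a] \leq \mathbb{E}[M_0]/a = 1/a$. On the event $\{\sup_{s \leq T} \|S_s\|_F \geq \epsilon\}$, the hypothesis $\sum_{t=0}^T \|D_t\|_\infty^2 \leq c_T^2$ together with $\cosh(u) \geq e^u/2$ gives $M_s \geq \tfrac{1}{2} \exp(\lambda \epsilon - \lambda^2 c_T^2/2)$ at the crossing time, so
$$\mathbb{P}\!\left[\sup_{s \leq T} \Bigl\|\sum_{t=0}^{s} D_t\Bigr\|_F \geq \epsilon\right] \;\leq\; 2 \exp\!\Bigl(-\lambda \epsilon + \tfrac{\lambda^2}{2} c_T^2\Bigr).$$
Optimizing by choosing $\lambda = \epsilon / c_T^2$ yields the advertised bound $2 \exp(-\epsilon^2/(2 c_T^2))$.

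The main obstacle is making the cosh-smoothing inequality rigorous: $\cosh(\lambda \|\cdot\|)$ is not differentiable at the origin, and a naive coordinatewise Hoeffding does not apply to norm-valued random variables. The cleanest resolution is Pinelis' 2-smoothness lemma for real Hilbert spaces, which yields $\mathbb{E}[\cosh(\lambda \|x + Y\|)] \leq \cosh(\lambda \|x\|) \, \mathbb{E}[\cosh(\lambda \|Y\|)]$ whenever $\mathbb{E}[Y] = 0$, after which scalar Hoeffding bounds $\mathbb{E}[\cosh(\lambda \|Y\|)] \leq \exp(\lambda^2 \|Y\|_\infty^2/2)$. Since the statement is a special case of Pinelis (1994, Theorem 3.5), an alternative --- and shorter --- route is simply to cite that result and verify that the Frobenius-norm setting fits its hypotheses.
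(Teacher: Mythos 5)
Your overall architecture (exponential $\cosh$ supermartingale, Doob's maximal inequality for nonnegative supermartingales, Chernoff optimization in $\lambda$, and the $\cosh(u)\geq e^{u}/2$ trick to produce the factor $2$) is exactly the Pinelis route, and your fallback of simply citing Pinelis (1994) is in fact what the paper does: the paper offers no proof of this lemma and imports it from \cite{nitanda2019stochastic,pinelis1994optimum}. The genuine problem is with your justification of the one-step estimate $\mathbb{E}[\cosh(\lambda\|S_s\|_F)\mid\mathcal{F}_{s-1}]\leq\cosh(\lambda\|S_{s-1}\|_F)\exp(\lambda^2\|D_s\|_\infty^2/2)$. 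The auxiliary inequality you invoke, $\mathbb{E}[\cosh(\lambda\|x+Y\|)]\leq\cosh(\lambda\|x\|)\,\mathbb{E}[\cosh(\lambda\|Y\|)]$ for mean-zero $Y$, is false even in the one-dimensional Hilbert space $\mathbb{R}$: take $\lambda=1$, $x=1$, and $Y=2$ with probability $1/3$, $Y=-1$ with probability $2/3$; then $\mathbb{E}\cosh(|x+Y|)=\tfrac13\cosh 3+\tfrac23\approx 4.02$, while $\cosh(1)\,\mathbb{E}\cosh(|Y|)\approx 3.52$. The correct form of Pinelis' lemma has $\cosh(\lambda\|Y\|_\infty)$ (hence $\exp(\lambda^2\|Y\|_\infty^2/2)$) on the right, not $\mathbb{E}\cosh(\lambda\|Y\|)$.

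Your elementary route also falls short quantitatively. From the Hilbert identity $\|S_s\|_F^2=\|S_{s-1}\|_F^2+2\langle S_{s-1},D_s\rangle_F+\|D_s\|_F^2$ you cannot discard the positive term $\|D_s\|_F^2$ (you need an upper bound on $\|S_s\|_F$), and absorbing it via $\cosh(\sqrt{u+v})\leq\cosh(\sqrt{u})\,e^{v/2}$ costs a factor $e^{\lambda^2\|D_s\|_\infty^2/2}$ on top of the $e^{\lambda^2\|D_s\|_\infty^2/2}$ you pay in the scalar Hoeffding step for $\langle S_{s-1}/\|S_{s-1}\|_F,D_s\rangle_F$; the net factor $e^{\lambda^2\|D_s\|_\infty^2}$ only yields $2\exp(-\epsilon^2/(4c_T^2))$ after optimizing $\lambda$, not the stated $2\exp(-\epsilon^2/(2c_T^2))$. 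Recovering the sharp constant requires Pinelis' actual argument: for $r(t)=\|x+tD\|_F$ one checks $\tfrac{d^2}{dt^2}\cosh(\lambda r(t))\leq\lambda^2\|D\|_F^2\cosh(\lambda r(t))$ (using $\tanh(\lambda r)\leq\lambda r$), notes that the expected first derivative at $t=0$ vanishes by the martingale property, and concludes by ODE comparison that $\mathbb{E}\cosh(\lambda\|x+D\|_F)\leq\cosh(\lambda\|x\|_F)\cosh(\lambda\|D\|_\infty)\leq\cosh(\lambda\|x\|_F)e^{\lambda^2\|D\|_\infty^2/2}$ --- or one simply cites Pinelis (1994, Theorem 3.5), as the paper does. (Minor point: with $S_s=\sum_{t=0}^{s}D_t$ you have $M_0\neq 1$ in general; start the supermartingale at $\mathbb{E}[M_0]\leq 1$ or at an empty sum.)
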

    
    \begin{figure}
        \centering
        \includegraphics[width=1.0\linewidth]{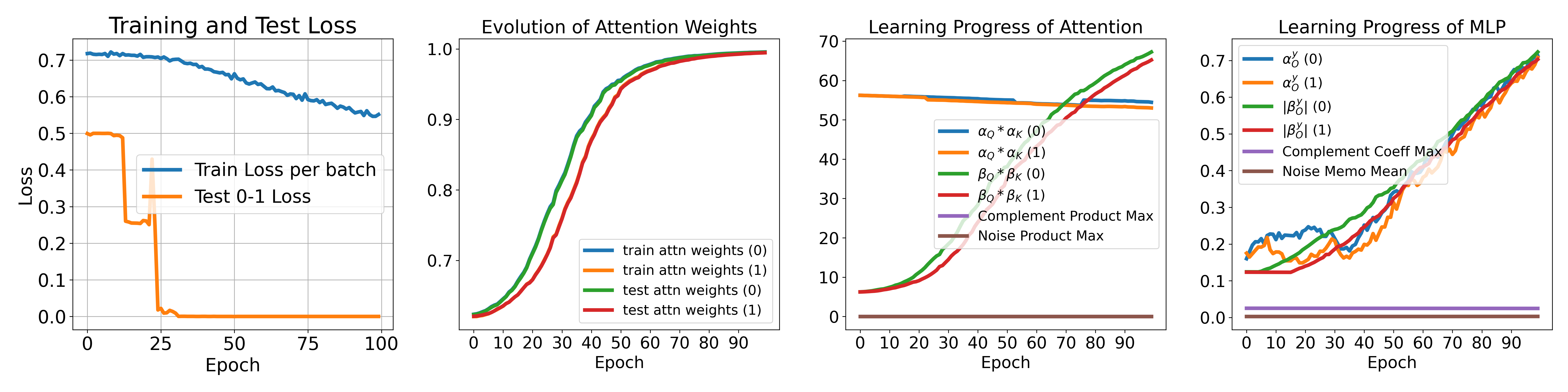}
        \caption{Learning dynamics: (i) training and test loss; (ii) correct attention weight; (iii) maximum values of $\alpha_{Q, s} \cdot \alpha_{K, s}$, $\beta_{Q, s} \cdot \beta_{K, s}$, maximum values of the complement products $\tau_{Q,r}\cdot\tau_{K,r}$ or $ \rho_{Q,2}\cdot\rho_{K,2}$, and maximum values of product-with-noise ${({\mathbf{W}_K^{\boldsymbol{x}}}{\xi_{\boldsymbol{x}}})}^{\top} {\mathbf{W}_Q^{\boldsymbol{x}}}{\xi_{\boldsymbol{x}}}$; (iv) maximum values of $\alpha_{O_{(i, \cdot)}, k}$ and $\lvert\beta_{O_{(i, \cdot)}, k}\rvert$, maximum values of the complement coefficients $\rho_{O_{(i, \cdot)}, w}$ and maximum values of product-with-noise ${\mathbf{W}_{O_{(i, \cdot)}}^{\boldsymbol{y}}}{\xi_{\boldsymbol{y}}}$.}
        \label{fig:attn-mlp}
    \end{figure}

     Therefore, we need to see if there exists a decaying positive constant $c_T$ (with decaying rate $O(1/T^{q}), q>0$), such that  $\sum_{t=0}^{T} \| D_{X}^{t} \|_{\infty}^2 \leq {c_T}^2, \forall X \in \{Q,K,O \}$, where $\| \cdot \|_{\infty}$ is the essential supremum of $\| D_{X}^{t} \|_F$. Subsequently, by controlling the martingale sequence norm tail similarly in \cite{nitanda2019stochastic, pinelis1994optimum}, we can obtain an exponential convergence rate after $T_1$. \par

    For $\mathbf{W} \in \{{\mathbf{W}_{Q}^{\boldsymbol{x}}}, {\mathbf{W}_{K}^{\boldsymbol{x}}}, {\mathbf{W}_{O}^{\boldsymbol{y}}}\}$, to check the decaying $c_T$, we adopt the techniques of \cite{nitanda2019stochastic, pillaud2018exponential, shingo2021randomfeature} in the following manner. Let ${\mathcal{B}_{t}}^{\prime}$ be an independent variable from $\mathcal{B}_{0}, \cdots, \mathcal{B}_{T}$ and let ${\mathbf{W}_{t}}^{(T+1)}$ be an output of the algorithm depending on $(\mathcal{B}_{0}, \cdots, \mathcal{B}_{t-1} , {\mathcal{B}_{t}}^{\prime} , \mathcal{B}_{t+1}, \cdots, \mathcal{B}_{T})$. Then we have
    \[
    \| D_{X}^{t} \|_{\infty} \leq \mathbb{E}[\|{\mathbf{W}}^{(T+1)} - {{\mathbf{W}_{t}}}^{(T+1)}\|_{\infty} \mid \mathcal{B}_{0}, \cdots, \mathcal{B}_{t} ].
    \]
    Therefore, one may estimate ${c_{X}^{T}}^2$ by bounding $\|{\mathbf{W}}^{(T+1)} - {{\mathbf{W}_{t}}}^{(T )}\|_{\infty}^2$ uniformly w.r.t. $\mathcal{B}_{0}, \cdots, \mathcal{B}_{T-1}$. Such a bound can be derived utilizing stability property of stochastic gradient descent \cite{nitanda2019stochastic, Hardt2016stability}. For the OOD scenario, since we require the data shift to be via conic combination, the new words and labels in each prompt will share the positive/negative real-valued label without any self-conflict. The norm requirements and constraints on $\mathcal{D}_{\boldsymbol{z}}^{*}$ would ensure the Gaussian noise, concepts other than the co-concepts, and probability shifts have limited influence on the prediction compared with the 
    considerable scale of coefficients by Lemma \ref{lem:regulerizing the models}, laying the groundwork for the proof.

\section{Experiments}\label{Section: Experiment}
\begin{figure*}[t!]
\centering
\subfigure[OOD Scenario 1(i): $L^{*}=5$ during testing.]{\includegraphics[width=0.49\textwidth]{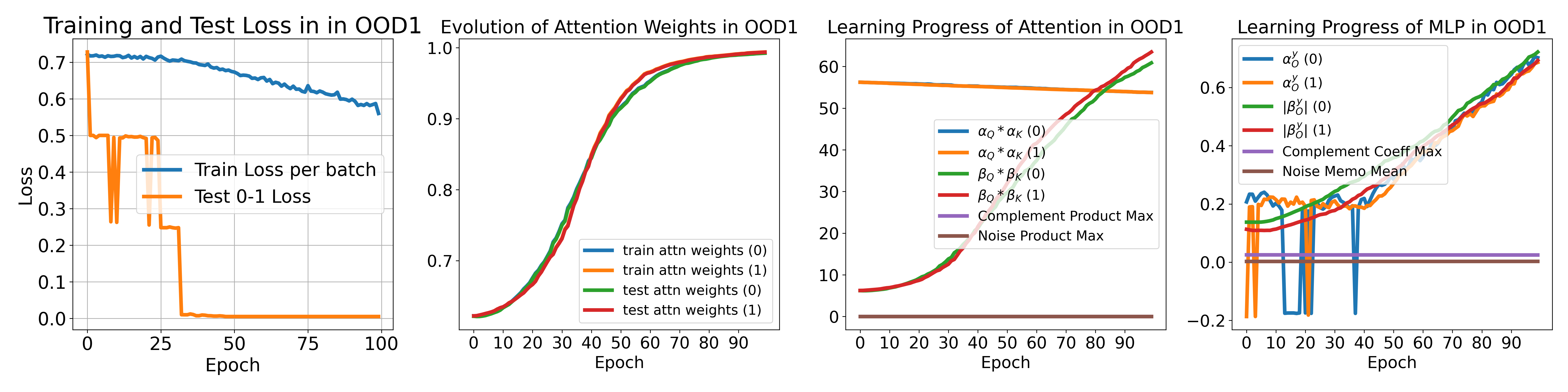}}
\subfigure[OOD Scenario 1(ii): $L^{*}=2$ during testing.]{\includegraphics[width=0.49\textwidth]{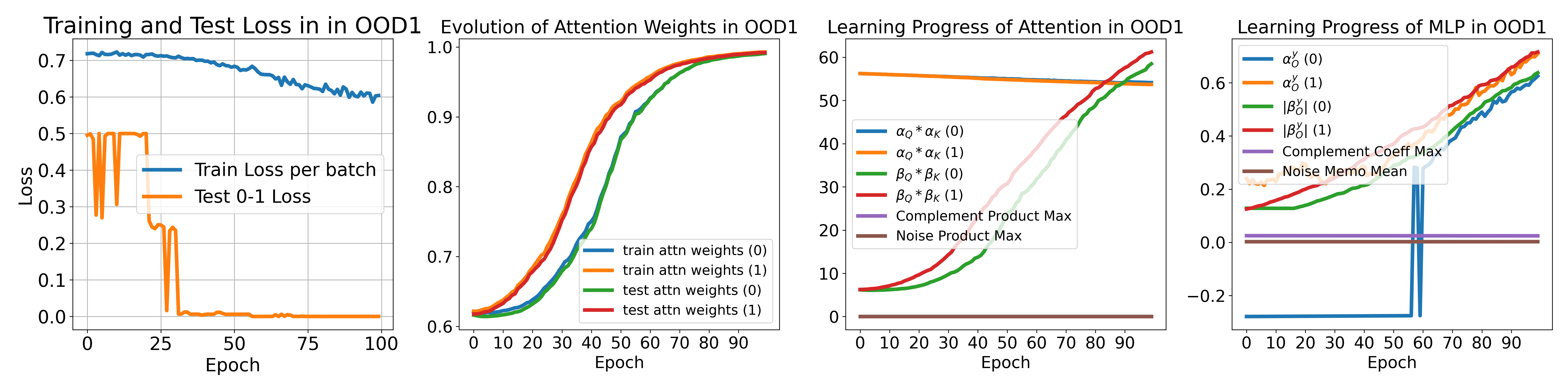}}
\subfigure[OOD Scenario 2: $0.8$ fraction for concept $0$ and $0.2$ fraction for concept $1$ during testing.]{\includegraphics[width=0.49\textwidth]{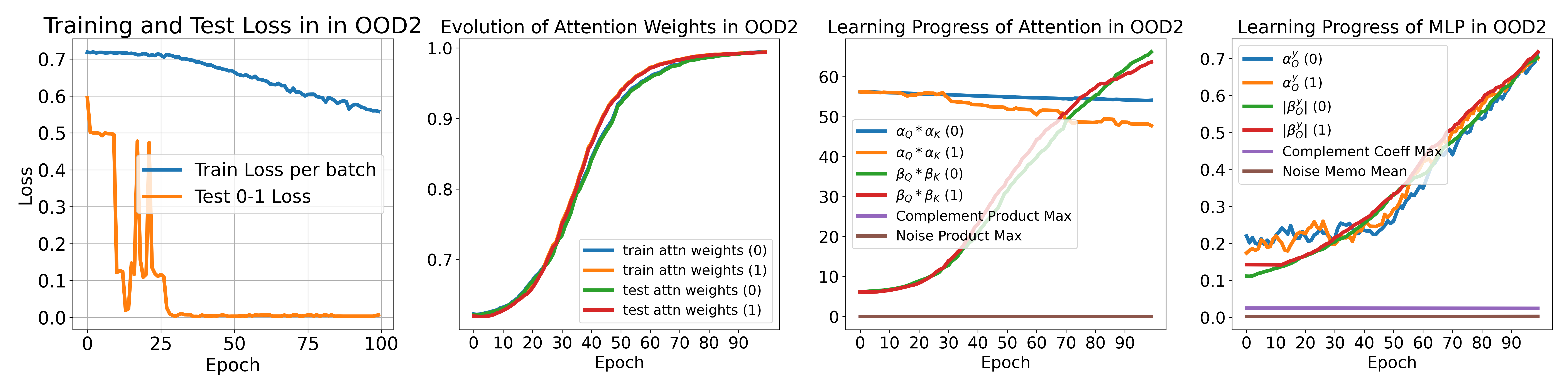}}
\subfigure[OOD Scenario 3: Shift the data as ${\boldsymbol{\mu}_{1}^{\pm}}^{*}=\boldsymbol{a}_{1}\pm\boldsymbol{b}_{2}$ and ${\boldsymbol{\mu}_{2}^{\pm}}^{*}=\boldsymbol{a}_{2}\pm\boldsymbol{b}_{1}$ during testing.]{\includegraphics[width=0.49\textwidth]{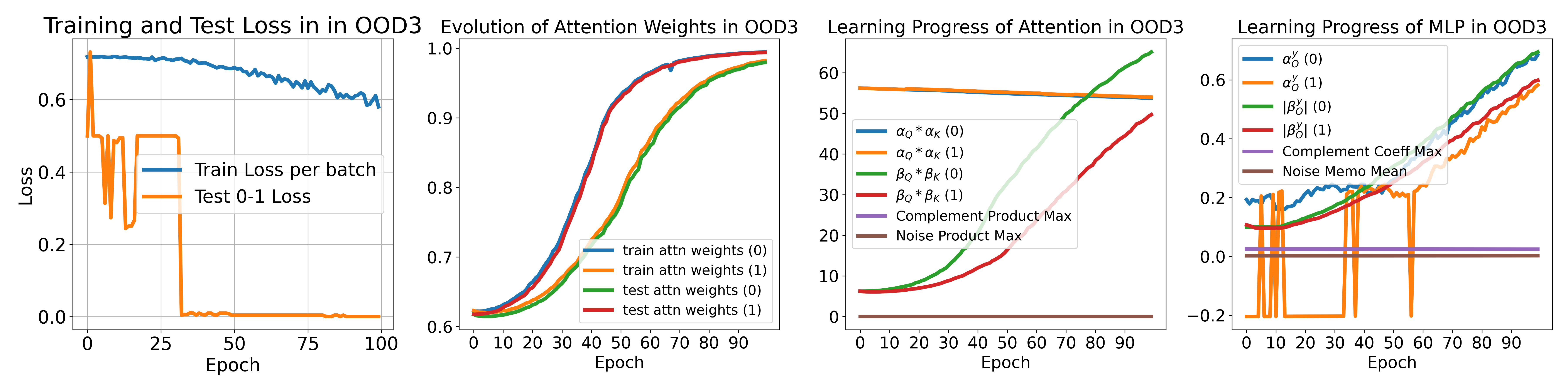}}
\caption{Learning dynamic in three OOD scenarios. The training settings and plotting methods are identical to those used in Figure \ref{fig:attn-mlp}, and the testing settings are: (a-b) utilizes different prompt lengths; (c) adopts a skewed distribution over $\boldsymbol{z}$; (d) switches the concept-specific semantic features.}
\label{fig:attn-mlp-2}
\end{figure*}
In this section, we demonstrate the validity of our theoretical analysis through simulations of Algorithm \ref{alg:main}. We use the following parameter settings in Figure \ref{fig:attn-mlp}:  The parameter settings are: the length $L = 4$, the number of co-concepts $K_1 = 2$, dictionary size $K=104$, the number of test instances $n_\text{test} = 5000$, dimension $d_{\mathcal{X}}=d_{\mathcal{Y}}= 1000$, MLP width $m = 50$, feature strengths $\|\mathbf{u}\|=\|\mathbf{q}\|=10$, $\forall k \in [K_{1}]$, the cosine ${\langle \boldsymbol{\mu}_{k}^{+}, \boldsymbol{\mu}_{k}^{-} \rangle}/{\|\mathbf{u}\|^2}={\langle \boldsymbol{q}_{k}^{+}, \boldsymbol{q}_{k}^{-} \rangle}/{\|\mathbf{q}\|^2}=0.5$, the initialization parameters $\sigma_0 = 0.1$, $\sigma_1 = 0.01$, and the noise deviation $\sigma_{\xi} = 0.01$. For the optimization, we use $\lambda=0.002$, $B=16$, $\gamma = 10000$, and the total training epochs is $100$. Figure \ref{fig:attn-mlp-2} (a-d) uses the same training settings, but during testing, it applies different configurations: (a) $L^*=5$, (b) $L^*=2$, (c) a $0.8$ fraction for the first concept and a $0.2$ fraction for the second concepts, and (d) ${\boldsymbol{\mu}_{1}^{\pm}}^{*}=\boldsymbol{a}_{1}\pm\boldsymbol{b}_{2}, {\boldsymbol{\mu}_{2}^{\pm}}^{*}=\boldsymbol{a}_{2}\pm\boldsymbol{b}_{1}$. Figure \ref{fig:attn-mlp} validates our Theorem \ref{thm: mainbody} and Lemma \ref{lem:regulerizing the models}, which showcases the fast convergence rate and the evolution of coefficients. Figure \ref{fig:attn-mlp-2} validates Proposition \ref{proposition: main body OOD}, where the learned model permits certain data shifts.

\section{Conclusion}
This work provides the first exponential convergence analysis of 0-1 loss for transformers with softmax attention and ReLU-MLP, trained on a non-orthogonal concept-specific prompt distribution by practical cross-entropy loss. Furthermore, the results demonstrate transformers can perform certain OOD ICL tasks by leveraging the multi-concept semantic linearity, highlighting their innovative potential. An important future direction is to extend the analysis to more complex scenarios.

\section{Acknowledgment}
We thank the anonymous reviewers for their instrumental comments. D.B. and H.W. are supported in part by
the Research Grants Council of the Hong Kong Special Administration Region (Project No. CityU 11206622). W.H. is supported in part by JSPS KAKENHI (24K20848). A.N. is supported in part by National Research Foundation, Singapore and Infocomm Media Development Authority under its Trust Tech Funding Initiative, the Centre for Frontier Artificial Intelligence Research, Institute of High Performance Computing, A*Star, and the College of Computing and Data Science at Nanyang Technological University. T.S. is supported in part by JSPS KAKENHI (24K02905) and JST CREST (JPMJCR2115, JPMJCR2015).

\bibliographystyle{unsrt}
\bibliography{polysemWord}

\begin{thebibliography}{10}

\bibitem{dong2019nlpunderstand}
Li~Dong, Nan Yang, Wenhui Wang, Furu Wei, Xiaodong Liu, Yu~Wang, Jianfeng Gao, Ming Zhou, and Hsiao-Wuen Hon.
\newblock Unified language model pre-training for natural language understanding and generation.
\newblock In H.~Wallach, H.~Larochelle, A.~Beygelzimer, F.~d\textquotesingle Alch\'E-Buc, E.~Fox, and R.~Garnett, editors, {\em Advances in Neural Information Processing Systems}, volume~32. Curran Associates, Inc., 2019.

\bibitem{wei2022chainofthoughts}
Jason Wei, Xuezhi Wang, Dale Schuurmans, Maarten Bosma, brian ichter, Fei Xia, Ed~Chi, Quoc~V Le, and Denny Zhou.
\newblock Chain-of-thought prompting elicits reasoning in large language models.
\newblock In {\em Advances in Neural Information Processing Systems}, volume~35, pages 24824--24837, 2022.

\bibitem{liu2024evolutionheuristicsefficientautomatic}
Fei Liu, Xialiang Tong, Mingxuan Yuan, Xi~Lin, Fu~Luo, Zhenkun Wang, Zhichao Lu, and Qingfu Zhang.
\newblock Evolution of heuristics: Towards efficient automatic algorithm design using large language model.
\newblock {\em arXiv preprint arXiv:2401.02051}, 2024.

\bibitem{liu2024systematicsurveylargelanguage}
Fei Liu, Yiming Yao, Ping Guo, Zhiyuan Yang, Xi~Lin, Xialiang Tong, Mingxuan Yuan, Zhichao Lu, Zhenkun Wang, and Qingfu Zhang.
\newblock A systematic survey on large language models for algorithm design.
\newblock {\em arXiv preprint arXiv: 2410.14716}, 2024.

\bibitem{lu2023emergent}
Sheng Lu, Irina Bigoulaeva, Rachneet Sachdeva, Harish~Tayyar Madabushi, and Iryna Gurevych.
\newblock Are emergent abilities in large language models just in-context learning?
\newblock {\em arXiv preprint arXiv: 2309.01809}, 2023.

\bibitem{Bleilatent}
David Blei, Andrew Ng, and Michael Jordan.
\newblock Latent dirichlet allocation.
\newblock In {\em Advances in Neural Information Processing Systems}, 2001.

\bibitem{xie2022explanationincontextlearningimplicit}
Sang~Michael Xie, Aditi Raghunathan, Percy Liang, and Tengyu Ma.
\newblock An explanation of in-context learning as implicit bayesian inference, 2022.

\bibitem{wang2023largelmimplicittopic}
Xinyi Wang, Wanrong Zhu, Michael Saxon, Mark Steyvers, and William~Yang Wang.
\newblock Large language models are implicitly topic models: Explaining and finding good demonstrations for in-context learning.
\newblock In {\em Workshop on Efficient Systems for Foundation Models @ ICML2023}, 2023.

\bibitem{li2023how}
Yuchen Li, Yuanzhi Li, and Andrej Risteski.
\newblock How do transformers learn topic structure: Towards a mechanistic understanding.
\newblock In {\em Proceedings of the 40th International Conference on Machine Learning}, pages 19689--19729, 2023.

\bibitem{jiang2024origins}
Yibo Jiang, Goutham Rajendran, Pradeep Ravikumar, Bryon Aragam, and Victor Veitch.
\newblock On the origins of linear representations in large language models.
\newblock {\em arXiv preprint arXiv: 2403.03867}, 2024.

\bibitem{park2023linearhypothesis}
Kiho Park, Yo~Joong Choe, and Victor Veitch.
\newblock The linear representation hypothesis and the geometry of large language models.
\newblock {\em arXiv preprint arXiv: 2311.03658}, 2023.

\bibitem{park2024geometrycategoricalhierarchicalconcepts}
Kiho Park, Yo~Joong Choe, Yibo Jiang, and Victor Veitch.
\newblock The geometry of categorical and hierarchical concepts in large language models.
\newblock {\em arXiv preprint arXiv: 2406.01506}, 2024.

\bibitem{jiang2024llmsdreamelephantswhen}
Yibo Jiang, Goutham Rajendran, Pradeep Ravikumar, and Bryon Aragam.
\newblock Do llms dream of elephants (when told not to)? latent concept association and associative memory in transformers.
\newblock {\em arXiv preprint arXiv: 2406.18400}, 2024.

\bibitem{zhang2023BMA}
Yufeng Zhang, Fengzhuo Zhang, Zhuoran Yang, and Zhaoran Wang.
\newblock What and how does in-context learning learn? bayesian model averaging, parameterization, and generalization.
\newblock {\em arXiv preprint arXiv: 2305.19420}, 2023.

\bibitem{falck2024martingale}
Fabian Falck, Ziyu Wang, and Christopher~C. Holmes.
\newblock Are large language models bayesian? a martingale perspective on in-context learning.
\newblock In {\em ICLR 2024 Workshop on Secure and Trustworthy Large Language Models}, 2024.

\bibitem{oswald2023iclgd}
Johannes Von~Oswald, Eyvind Niklasson, Ettore Randazzo, Joao Sacramento, Alexander Mordvintsev, Andrey Zhmoginov, and Max Vladymyrov.
\newblock Transformers learn in-context by gradient descent.
\newblock In {\em Proceedings of the 40th International Conference on Machine Learning}, volume 202, pages 35151--35174. PMLR, 2023.

\bibitem{zhang2023trained}
Ruiqi Zhang, Spencer Frei, and Peter~L. Bartlett.
\newblock Trained transformers learn linear models in-context.
\newblock {\em arXiv preprint arXiv: 2306.09927}, 2023.

\bibitem{Baialgorithmselection}
Yu~Bai, Fan Chen, Huan Wang, Caiming Xiong, and Song Mei.
\newblock Transformers as statisticians: Provable in-context learning with in-context algorithm selection.
\newblock In {\em Advances in Neural Information Processing Systems}, volume~36, pages 57125--57211, 2023.

\bibitem{kim2024MFD}
Juno Kim and Taiji Suzuki.
\newblock Transformers learn nonlinear features in context: Nonconvex mean-field dynamics on the attention landscape.
\newblock {\em arXiv preprint arXiv: 2402.01258}, 2024.

\bibitem{huang2023incontext}
Yu~Huang, Yuan Cheng, and Yingbin Liang.
\newblock In-context convergence of transformers.
\newblock {\em arXiv preprint arXiv: 2310.05249}, 2023.

\bibitem{tianyuandongscansnap}
Yuandong Tian, Yiping Wang, Beidi Chen, and Simon~S Du.
\newblock Scan and snap: Understanding training dynamics and token composition in 1-layer transformer.
\newblock In {\em Advances in Neural Information Processing Systems}, volume~36, pages 71911--71947, 2023.

\bibitem{liyc2024mechanicsofntp}
Yingcong Li, Yixiao Huang, Muhammed~E. Ildiz, Ankit Singh~Rawat, and Samet Oymak.
\newblock Mechanics of next token prediction with self-attention.
\newblock In {\em Proceedings of The 27th International Conference on Artificial Intelligence and Statistics}, volume 238, pages 685--693, 2024.

\bibitem{zheng2024mesaoptimizationautoregressivelytrainedtransformers}
Chenyu Zheng, Wei Huang, Rongzhen Wang, Guoqiang Wu, Jun Zhu, and Chongxuan Li.
\newblock On mesa-optimization in autoregressively trained transformers: Emergence and capability.
\newblock {\em arXiv preprint arXiv:2405.16845}, 2024.

\bibitem{takakura2023approximation}
Shokichi Takakura and Taiji Suzuki.
\newblock Approximation and estimation ability of transformers for sequence-to-sequence functions with infinite dimensional input.
\newblock {\em arXiv preprint arXiv: 2305.18699}, 2023.

\bibitem{chen2024multihead}
Siyu Chen, Heejune Sheen, Tianhao Wang, and Zhuoran Yang.
\newblock Training dynamics of multi-head softmax attention for in-context learning: Emergence, convergence, and optimality.
\newblock {\em arXiv preprint arXiv: 2402.19442}, 2024.

\bibitem{huang2024MIM}
Yu~Huang, Zixin Wen, Yuejie Chi, and Yingbin Liang.
\newblock Transformers provably learn feature-position correlations in masked image modeling.
\newblock {\em arXiv preprint arXiv: 2403.02233}, 2024.

\bibitem{li2023visiontransformer}
Hongkang Li, Meng Wang, Sijia Liu, and Pin yu~Chen.
\newblock A theoretical understanding of shallow vision transformers: Learning, generalization, and sample complexity.
\newblock {\em arXiv preprint arXiv:2302.06015}, 2023.

\bibitem{li2024training}
Hongkang Li, Meng Wang, Songtao Lu, Xiaodong Cui, and Pin-Yu Chen.
\newblock How do nonlinear transformers learn and generalize in in-context learning?
\newblock In {\em Proceedings of the 41st International Conference on Machine Learning}, volume 235, pages 28734--28783, 2024.

\bibitem{Wen2021contrastive}
Zixin Wen and Yuanzhi Li.
\newblock Toward understanding the feature learning process of self-supervised contrastive learning.
\newblock In {\em Proceedings of the 38th International Conference on Machine Learning}, volume 139, pages 11112--11122. PMLR, 2021.

\bibitem{reizinger2024position}
Patrik Reizinger, Szilvia Ujv\'{a}ry, Anna M\'{e}sz\'{a}ros, Anna Kerekes, Wieland Brendel, and Ferenc Husz\'{a}r.
\newblock Position: Understanding {LLM}s requires more than statistical generalization.
\newblock In {\em Proceedings of the 41st International Conference on Machine Learning}, volume 235, pages 42365--42390, 2024.

\bibitem{mammen1999smooth}
Enno Mammen and Alexandre~B Tsybakov.
\newblock Smooth discrimination analysis.
\newblock {\em The Annals of Statistics}, 27(6):1808--1829, 1999.

\bibitem{Massart2006RISK}
Pascal Massart and {\'E}lodie N{\'e}d{\'e}lec.
\newblock {Risk Bounds for Statistical{ L}earning}.
\newblock {\em The Annals of Statistics}, 34(5):2326 -- 2366, 2006.

\bibitem{pillaud2018exponential}
Loucas Pillaud-Vivien, Alessandro Rudi, and Francis Bach.
\newblock Exponential convergence of testing error for stochastic gradient methods.
\newblock In {\em Proceedings of the 31st Conference On Learning Theory}, volume~75, pages 250--296, 2018.

\bibitem{nitanda2019stochastic}
Atsushi Nitanda and Taiji Suzuki.
\newblock Stochastic gradient descent with exponential convergence rates of expected classification errors.
\newblock In {\em Proceedings of the Twenty-Second International Conference on Artificial Intelligence and Statistics}, volume~89, pages 1417--1426, 2019.

\bibitem{Cabannes2021Fastrate}
Vivien~A Cabannes, Francis Bach, and Alessandro Rudi.
\newblock Fast rates for structured prediction.
\newblock In {\em Proceedings of Thirty Fourth Conference on Learning Theory}, volume 134 of {\em Proceedings of Machine Learning Research}, pages 823--865. PMLR, 15--19 Aug 2021.

\bibitem{shingo2021randomfeature}
Shingo Yashima, Atsushi Nitanda, and Taiji Suzuki.
\newblock Exponential convergence rates of classification errors on learning with sgd and random features.
\newblock In {\em Proceedings of The 24th International Conference on Artificial Intelligence and Statistics}, volume 130, pages 1954--1962, 2021.

\bibitem{oko2022particle}
Kazusato Oko, Taiji Suzuki, Atsushi Nitanda, and Denny Wu.
\newblock Particle stochastic dual coordinate ascent: Exponential convergent algorithm for mean field neural network optimization.
\newblock In {\em International Conference on Learning Representations}, 2022.

\bibitem{Vigogna2022MCL}
Stefano Vigogna, Giacomo Meanti, Ernesto De~Vito, and Lorenzo Rosasco.
\newblock Multiclass learning with{ M}argin: Exponential rates with no bias-variance trade-off.
\newblock In {\em Proceedings of the 39th International Conference on Machine Learning}, volume 162 of {\em Proceedings of Machine Learning Research}, pages 22260--22269. PMLR, 17--23 Jul 2022.

\bibitem{Cabannnes2023SVMexp}
Vivien Cabannnes and Stefano Vigogna.
\newblock A case of exponential convergence rates for svm.
\newblock In {\em Proceedings of The 26th International Conference on Artificial Intelligence and Statistics}, volume 206, pages 359--374. PMLR, 25--27 Apr 2023.

\bibitem{allenzhu2023understanding}
Zeyuan Allen-Zhu and Yuanzhi Li.
\newblock Towards understanding ensemble, knowledge distillation and self-distillation in deep learning.
\newblock In {\em The Eleventh International Conference on Learning Representations}, 2023.

\bibitem{cao2022benign}
Yuan Cao, Zixiang Chen, Misha Belkin, and Quanquan Gu.
\newblock Benign overfitting in two-layer convolutional neural networks.
\newblock In {\em Advances in Neural Information Processing Systems}, volume~35, pages 25237--25250, 2022.

\bibitem{kou2023benign}
Yiwen Kou, Zixiang Chen, Yuanzhou Chen, and Quanquan Gu.
\newblock Benign overfitting in two-layer re{LU} convolutional neural networks.
\newblock In {\em Proceedings of the 40th International Conference on Machine Learning}, volume 202, pages 17615--17659, 2023.

\bibitem{meng2023benign}
Xuran Meng, Difan Zou, and Yuan Cao.
\newblock Benign overfitting in two-layer relu convolutional neural networks for {XOR} data.
\newblock {\em arXiv preprint arXiv: 2310.01975}, 2023.

\bibitem{xu2023benign}
Zhiwei Xu, Yutong Wang, Spencer Frei, Gal Vardi, and Wei Hu.
\newblock Benign overfitting and grokking in re{LU} networks for {XOR} cluster data.
\newblock {\em arXiv preprint arXiv: 2310.02541}, 2023.

\bibitem{huang2023graph}
Wei Huang, Yuan Cao, Haonan Wang, Xin Cao, and Taiji Suzuki.
\newblock Graph neural networks provably benefit from structural information: A feature learning perspective.
\newblock {\em arXiv preprint arXiv: 2306.13926}, 2023.

\bibitem{yamagiwa2023discovering}
Hiroaki Yamagiwa, Momose Oyama, and Hidetoshi Shimodaira.
\newblock Discovering universal geometry in embeddings with ica.
\newblock {\em arXiv preprint arXiv: 2305.13175}, 2023.

\bibitem{wen2021constra}
Zixin Wen and Yuanzhi Li.
\newblock Toward understanding the feature learning process of self-supervised contrastive learning.
\newblock In {\em Proceedings of the 38th International Conference on Machine Learning}, pages 11112--11122, 2021.

\bibitem{han-etal-2024-word}
Chi Han, Jialiang Xu, Manling Li, Yi~Fung, Chenkai Sun, Nan Jiang, Tarek Abdelzaher, and Heng Ji.
\newblock Word embeddings are steers for language models.
\newblock In {\em Proceedings of the 62nd Annual Meeting of the Association for Computational Linguistics (Volume 1: Long Papers)}, pages 16410--16430, 2024.

\bibitem{Hofmanplsa}
Thomas Hofmann.
\newblock Probabilistic latent semantic indexing.
\newblock In {\em Proceedings of the 22nd Annual International ACM SIGIR Conference on Research and Development in Information Retrieval}, page 50–57, 1999.

\bibitem{Bottou2018optimization}
L\'Eon Bottou, Frank~E. Curtis, and Jorge Nocedal.
\newblock Optimization methods for large-scale machine learning.
\newblock {\em SIAM Review}, 60(2):223--311, 2018.

\bibitem{nitanda2021optimal}
Atsushi Nitanda and Taiji Suzuki.
\newblock Optimal rates for averaged stochastic gradient descent under neural tangent kernel regime.
\newblock {\em arXiv preprint arXiv: 2006.12297}, 2021.

\bibitem{kou2023semisupervise}
Yiwen Kou, Zixiang Chen, Yuan Cao, and Quanquan Gu.
\newblock How does semi-supervised learning with pseudo-labelers work? a case study.
\newblock In {\em The Eleventh International Conference on Learning Representations}, 2023.

\bibitem{zou2023understanding}
Difan Zou, Yuan Cao, Yuanzhi Li, and Quanquan Gu.
\newblock Understanding the generalization of adam in learning neural networks with proper regularization.
\newblock In {\em The Eleventh International Conference on Learning Representations}, 2023.

\bibitem{yang2024incontextlearningrepresentationscontextual}
Tong Yang, Yu~Huang, Yingbin Liang, and Yuejie Chi.
\newblock In-context learning with representations: Contextual generalization of trained transformers.
\newblock {\em arXiv preprint arXiv: 2408.10147}, 2024.

\bibitem{pinelis1994optimum}
Iosif Pinelis.
\newblock Optimum bounds for the distributions of martingales in banach spaces.
\newblock {\em The Annals of Probability}, 22(4):1679--1706, 1994.

\bibitem{Hardt2016stability}
Moritz Hardt, Ben Recht, and Yoram Singer.
\newblock Train faster, generalize better: Stability of stochastic gradient descent.
\newblock In {\em Proceedings of The 33rd International Conference on Machine Learning}, volume~48, pages 1225--1234, 2016.

\bibitem{zou2023benefits}
Difan Zou, Yuan Cao, Yuanzhi Li, and Quanquan Gu.
\newblock The benefits of mixup for feature learning.
\newblock In {\em Proceedings of the 40th International Conference on Machine Learning}, volume 202, pages 43423--43479, 2023.

\bibitem{chen23brobust}
Jinghui Chen, Yuan Cao, and Quanquan Gu.
\newblock Benign overfitting in adversarially robust linear classification.
\newblock In Robin~J. Evans and Ilya Shpitser, editors, {\em Proceedings of the Thirty-Ninth Conference on Uncertainty in Artificial Intelligence}, volume 216, pages 313--323, 2023.

\bibitem{frei2022bonoisylineardata}
Spencer Frei, Niladri~S Chatterji, and Peter Bartlett.
\newblock Benign overfitting without linearity: Neural network classifiers trained by gradient descent for noisy linear data.
\newblock In {\em Proceedings of Thirty Fifth Conference on Learning Theory}, volume 178, pages 2668--2703, 2022.

\bibitem{frei2023KKTmarginmax}
Spencer Frei, Gal Vardi, Peter Bartlett, and Nathan Srebro.
\newblock Benign overfitting in linear classifiers and leaky relu networks from kkt conditions for margin maximization.
\newblock In {\em Proceedings of Thirty Sixth Conference on Learning Theory}, volume 195, pages 3173--3228, 2023.

\bibitem{kou_implicit_2023}
Yiwen Kou, Zixiang Chen, and Quanquan Gu.
\newblock Implicit {Bias} of {Gradient} {Descent} for {Two}-layer {ReLU} and {Leaky} {ReLU} {Networks} on {Nearly}-orthogonal {Data}.
\newblock In {\em Advances in {Neural} {Information} {Processing} {Systems}}, volume~36, pages 30167--30221. Curran Associates, Inc., 2023.

\bibitem{huang2024understanding}
Wei Huang, Ye~Shi, Zhongyi Cai, and Taiji Suzuki.
\newblock Understanding convergence and generalization in federated learning through feature learning theory.
\newblock In {\em The Twelfth International Conference on Learning Representations}, 2024.

\bibitem{bu2024active}
Dake Bu, Wei Huang, Taiji Suzuki, Ji~Cheng, Qingfu Zhang, Zhiqiang Xu, and Hau-San Wong.
\newblock Provably neural active learning succeeds via prioritizing perplexing samples.
\newblock In {\em Proceedings of the 41st International Conference on Machine Learning}, volume 235, pages 4642--4695, 2024.

\bibitem{kou2024matchingstatisticalquerylower}
Yiwen Kou, Zixiang Chen, Quanquan Gu, and Sham~M. Kakade.
\newblock Matching the statistical query lower bound for k-sparse parity problems with stochastic gradient descent.
\newblock {\em arXiv preprint arXiv: 2404.12376}, 2024.

\bibitem{tsigler2024bophdthesis}
Alexander Tsigler.
\newblock {\em Benign Overfitting in Linear Regression and Classification}.
\newblock PhD thesis, UC Berkeley, 2024.

\bibitem{park2024bo_new_regression}
Junhyung Park, Patrick Bloebaum, and Shiva~Prasad Kasiviswanathan.
\newblock Benign overfitting for regression with trained two-layer relu networks.
\newblock {\em arXiv preprint arXiv: 2410.06191}, 2024.

\bibitem{nichani_provable_2023}
Eshaan Nichani, Alex Damian, and Jason~D. Lee.
\newblock Provable {Guarantees} for {Nonlinear} {Feature} {Learning} in {Three}-{Layer} {Neural} {Networks}.
\newblock In {\em Advances in Neural Information Processing Systems}, volume~36, pages 10828--10875, 2023.

\bibitem{lee2024neuralnetworklearnslowdimensional}
Jason~D. Lee, Kazusato Oko, Taiji Suzuki, and Denny Wu.
\newblock Neural network learns low-dimensional polynomials with sgd near the information-theoretic limit.
\newblock {\em arXiv preprint arXiv:2406.01581}, 2024.

\bibitem{oko2024learningsumdiversefeatures}
Kazusato Oko, Yujin Song, Taiji Suzuki, and Denny Wu.
\newblock Learning sum of diverse features: computational hardness and efficient gradient-based training for ridge combinations.
\newblock {\em arXiv preprint arXiv:2406.11828}, 2024.

\bibitem{ren2024learningorthogonalmultiindexmodels}
Yunwei Ren and Jason~D. Lee.
\newblock Learning orthogonal multi-index models: A fine-grained information exponent analysis.
\newblock {\em arXiv preprint arXiv:2410.09678}, 2024.

\bibitem{frei2024trainedtransformerclassifiersgeneralize}
Spencer Frei and Gal Vardi.
\newblock Trained transformer classifiers generalize and exhibit benign overfitting in-context.
\newblock {\em arXiv preprint arXiv:2410.01774}, 2024.

\bibitem{shen2024trainingconvergencetransformersincontext}
Wei Shen, Ruida Zhou, Jing Yang, and Cong Shen.
\newblock On the training convergence of transformers for in-context classification.
\newblock {\em arXiv preprint arXiv:2410.11778}, 2024.

\bibitem{tian2024joma}
Yuandong Tian, Yiping Wang, Zhenyu Zhang, Beidi Chen, and Simon~Shaolei Du.
\newblock Jo{MA}: Demystifying multilayer transformers via joint dynamics of {MLP} and attention.
\newblock In {\em The Twelfth International Conference on Learning Representations}, 2024.

\bibitem{huang2024transformerslearndiverseattention}
Yu~Huang, Zixin Wen, Yuejie Chi, and Yingbin Liang.
\newblock How transformers learn diverse attention correlations in masked vision pretraining.
\newblock {\em arXiv preprint arXiv: 2403.02233}, 2024.

\bibitem{sakamoto2024BOtokenselection}
Masahiro Sakamoto and Hitomi Sato.
\newblock Benign or not-benign overfitting in token selection of attention mechanism.
\newblock {\em arXiv preprint arXiv:2409.17625}, 2024.

\bibitem{magen2024OGSingleHeadAttention}
Roey Magen, Shuning Shang, Zhiwei Xu, Spencer Frei, Wei Hu, and Gal Vardi.
\newblock Benign overfitting in single-head attention.
\newblock {\em arXiv preprint arXiv:2410.07746}, 2024.

\bibitem{nichani2024transformerslearncausalstructure}
Eshaan Nichani, Alex Damian, and Jason~D. Lee.
\newblock How transformers learn causal structure with gradient descent.
\newblock {\em arXiv preprint arXiv: 2402.14735}, 2024.

\bibitem{chen2024unveilinginductionheadsprovable}
Siyu Chen, Heejune Sheen, Tianhao Wang, and Zhuoran Yang.
\newblock Unveiling induction heads: Provable training dynamics and feature learning in transformers.
\newblock {\em arXiv preprint arXiv: 2409.10559}, 2024.

\bibitem{yang2024trainingdynamicstransformersrecognize}
Hongru Yang, Bhavya Kailkhura, Zhangyang Wang, and Yingbin Liang.
\newblock Training dynamics of transformers to recognize word co-occurrence via gradient flow analysis.
\newblock {\em arXiv preprint arXiv:2410.09605}, 2024.

\bibitem{jiang2024unveilbenignoverfittingtransformer}
Jiarui Jiang, Wei Huang, Miao Zhang, Taiji Suzuki, and Liqiang Nie.
\newblock Unveil benign overfitting for transformer in vision: Training dynamics, convergence, and generalization.
\newblock {\em arXiv preprint arXiv:2409.19345}, 2024.

\bibitem{libr2024optimizationgeneralizationtwolayertransformers}
Bingrui Li, Wei Huang, Andi Han, Zhanpeng Zhou, Taiji Suzuki, Jun Zhu, and Jianfei Chen.
\newblock On the optimization and generalization of two-layer transformers with sign gradient descent.
\newblock {\em arXiv preprint arXiv:2410.04870}, 2024.

\bibitem{bengio2009learning}
Yoshua Bengio.
\newblock {\em Learning Deep Architectures for AI}.
\newblock Now Publishers Inc, 2009.

\bibitem{goodfellow2016deep}
Ian Goodfellow, Yoshua Bengio, and Aaron Courville.
\newblock {\em Deep Learning}.
\newblock MIT Press, 2016.

\bibitem{AL2023-densenet}
Zeyuan {Allen-Zhu} and Yuanzhi Li.
\newblock {Backward Feature Correction: How Deep Learning Performs Deep Learning}.
\newblock In {\em Conference on Learning Theory}, COLT~'23, 2023.
\newblock Full version available at \url{http://arxiv.org/abs/2001.04413}.

\bibitem{park2024emergencehiddencapabilitiesexploring}
Core~Francisco Park, Maya Okawa, Andrew Lee, Ekdeep~Singh Lubana, and Hidenori Tanaka.
\newblock Emergence of hidden capabilities: Exploring learning dynamics in concept space.
\newblock {\em arXiv preprint arXiv:2406.19370}, 2024.

\bibitem{yang2024dynamicsconceptlearningcompositional}
Yongyi Yang, Core~Francisco Park, Ekdeep~Singh Lubana, Maya Okawa, Wei Hu, and Hidenori Tanaka.
\newblock Dynamics of concept learning and compositional generalization.
\newblock {\em arXiv preprint arXiv:2410.08309}, 2024.

\bibitem{kong2024learningdiscreteconceptslatent}
Lingjing Kong, Guangyi Chen, Biwei Huang, Eric~P. Xing, Yuejie Chi, and Kun Zhang.
\newblock Learning discrete concepts in latent hierarchical models.
\newblock {\em arXiv preprint arXiv: 2406.00519}, 2024.

\bibitem{AL2023-cfg}
Zeyuan {Allen-Zhu} and Yuanzhi Li.
\newblock {Physics of Language Models: Part 1, Learning Hierarchical Language Structures}.
\newblock {\em ArXiv e-prints}, abs/2305.13673, May 2023.
\newblock Full version available at \url{http://arxiv.org/abs/2305.13673}.

\bibitem{hahn2023theoryemergentincontextlearning}
Michael Hahn and Navin Goyal.
\newblock A theory of emergent in-context learning as implicit structure induction.
\newblock {\em arXiv preprint arXiv: 2303.07971}, 2023.

\bibitem{marconato2024noneidentifiablelinearproperties}
Emanuele Marconato, Sébastien Lachapelle, Sebastian Weichwald, and Luigi Gresele.
\newblock All or none: Identifiable linear properties of next-token predictors in language modeling.
\newblock {\em arXiv preprint arXiv:2410.23501}, 2024.

\bibitem{horn2012matrix}
Roger~A. Horn and Charles~R. Johnson.
\newblock {\em Matrix Analysis}.
\newblock Cambridge university press, 2012.

\bibitem{Jacot2018NTK}
Arthur Jacot, Franck Gabriel, and Clement Hongler.
\newblock Neural tangent kernel: Convergence and generalization in neural networks.
\newblock In {\em Advances in Neural Information Processing Systems}, volume~31. Curran Associates, Inc., 2018.

\bibitem{lu2023oscillation}
Miao Lu, Beining Wu, Xiaodong Yang, and Difan Zou.
\newblock Benign oscillation of stochastic gradient descent with large learning rate.
\newblock In {\em NeurIPS 2023 Workshop on Mathematics of Modern Machine Learning}, 2023.

\bibitem{nesterov2013introductory}
Yurii Nesterov.
\newblock {\em Introductory Lectures on Convex Optimization: A Basic Course}, volume~87.
\newblock Springer Science \& Business Media, 2013.

\bibitem{girotraIdeas2023}
Karan Girotra, Lennart Meincke, Christian Terwiesch, and Karl~T. Ulrich.
\newblock Ideas are dimes a dozen: Large language models for idea generation in innovation.
\newblock {\em SSRN}, 2023.

\bibitem{doshicreativity2023}
Anil~Rajnikant Doshi and Oliver Hauser.
\newblock Generative artificial intelligence enhances creativity but reduces the diversity of novel content.
\newblock {\em SSRN}, 2023.

\bibitem{liu2023algorithm}
Fei Liu, Xialiang Tong, Mingxuan Yuan, and Qingfu Zhang.
\newblock Algorithm evolution using large language model.
\newblock {\em arXiv preprint arXiv: 2311.15249}, 2023.

\bibitem{yao2024evolve}
Yiming Yao, Fei Liu, Ji~Cheng, and Qingfu Zhang.
\newblock Evolve cost-aware acquisition functions using large language models.
\newblock {\em arXiv preprint arXiv: 2404.16906}, 2024.

\bibitem{Liu2024Routing}
Fei Liu, Xialiang Tong, Mingxuan Yuan, Xi~Lin, Fu~Luo, Zhenkun Wang, Zhichao Lu, and Qingfu Zhang.
\newblock An example of evolutionary computation + large language model beating human: Design of efficient guided local search.
\newblock {\em arXiv preprint arXiv: 2401.02051}, 2024.

\end{thebibliography}
\medskip

{
\small
\clearpage
\appendix

\section{Limitation and Broader Impact}\label{Appendix: Limitation}
The theoretical analysis provided in this work introduces novel perspectives on optimization and generalization, but the data model employed may require additional refinements to better align with practical scenarios, such as adding more layers of attention. The techniques and findings can inform future empirical and theoretical explorations of transformer architectures, though we do not foresee a direct social impact arising from the theoretical advancements presented.

\section{Additional Experiment Details}\label{Appendix: Experiment Details}
We implement our methods using PyTorch, ensuring consistent software and hardware environments. Specifically, the experiments are run on Linux servers with NVIDIA A100 graphics cards and CUDA 11.2, and can be completed within one hour.

\section{Additional Related Work}\label{Appendix: Related Work}

\textbf{Theory of Convergence Rate of Stochastic Gradient Descent}. Our analysis of the exponential convergence rate for the 0-1 loss builds upon a rich body of prior work. In the context of classification, the faster convergence rate mostly based on the excess of risk with some power of the essential supremum norm. Specifically, \cite{mammen1999smooth, Massart2006RISK} introduce the \textit{Hard low-noise condition} over the margin. When there is a hard margin separating the classes, the test error can exhibit exponentially fast convergence as the number of training samples increases, even when the surrogate loss error only decreases polynomially. This phenomenon has been further explored in more recent studies. \cite{pillaud2018exponential, nitanda2019stochastic, Cabannes2021Fastrate, shingo2021randomfeature, oko2022particle} have analyzed the exponential convergence of stochastic gradient descent under various settings. Meanwhile, \cite{Cabannes2021Fastrate} have investigated hard-margin and exponential rates in the context of structured prediction, which encompasses traditional classification as a special case. Besides, recent work also obtain the exponential rates in generalized settings such as Multi-class classification \cite{Vigogna2022MCL} and SVM \cite{Cabannnes2023SVMexp}. Building upon this rich theoretical foundation, our work derives the first exponential convergence analysis for the 0-1 loss in the specific setting of transformer models with softmax attention and ReLU-activated MLP over the sparse coding data model, whose surrogate loss function is the cross-entropy loss. \par

\textbf{Theory of Feature Learning of GD-updated Neural Network}. A rich body of recent learning theory research has focused on the feature direction' recovery view of neural network representations \cite{allenzhu2023understanding, cao2022benign, kou2023benign, meng2023benign, huang2023graph, kou2023semisupervise, zou2023understanding, zou2023benefits, chen23brobust, frei2022bonoisylineardata, frei2023KKTmarginmax, kou_implicit_2023, huang2024understanding, bu2024active, kou2024matchingstatisticalquerylower, tsigler2024bophdthesis, park2024bo_new_regression, nichani_provable_2023, lee2024neuralnetworklearnslowdimensional, oko2024learningsumdiversefeatures, ren2024learningorthogonalmultiindexmodels}. Rather than directly examining the evolution of the 0-1 loss, this line of work explicitly studies the process of reconstruction of the data's feature directions and memorization of disrupted noise in the network's latent space as surrogate metrics. While most studies in this area have assumed (near) orthogonal data, recent efforts by \cite{meng2023benign} and \cite{xu2023benign} have made initial attempts to analyze non-orthogonal data scenarios. Building upon this foundation, our study extends this line of research to nonlinear attention-MLP transformers with within-concept positive inner products and cross-concept orthogonal data representations. The key to our analysis is the assumption of good initialization of attention matrices and a sufficiently low-noise condition, which is reasonable for modeling language rather than images. In this setting, SGD allows noise to have only a mild impact on shaping neural network matrices or influencing gradient flow.\par

\textbf{Theory of Transformers and In-Context Learning}. The literature on Transformers and ICL is wide-ranging, and we will selectively address the most relevant ones. Prior studies have analyzed how transformers learn topic/concept semantics \cite{li2023how}, the origins and biases of LLM representations using latent variable models \cite{jiang2024origins}, and ICL from a model averaging perspective \cite{zhang2023BMA}. However, these works do not connect the geometric properties of concept-encoded representations to transformers' powerful ICL abilities. Another line of research has studied the learning dynamics of transformer, including analyses of linear-attention transformers \cite{oswald2023iclgd, zhang2023trained, frei2024trainedtransformerclassifiersgeneralize, shen2024trainingconvergencetransformersincontext}, QK-combined attention-only models \cite{huang2023incontext, tianyuandongscansnap, huang2024MIM, yang2024incontextlearningrepresentationscontextual, tian2024joma, huang2024transformerslearndiverseattention, sakamoto2024BOtokenselection, magen2024OGSingleHeadAttention, nichani2024transformerslearncausalstructure, chen2024unveilinginductionheadsprovable, yang2024trainingdynamicstransformersrecognize}, ReLU-free MLP \cite{yang2024incontextlearningrepresentationscontextual, jiang2024unveilbenignoverfittingtransformer, libr2024optimizationgeneralizationtwolayertransformers} or without MLP \cite{zhang2023trained, chen2024multihead}, impractical squared or hinge loss \cite{chen2024multihead, huang2024MIM,li2023visiontransformer, li2024training}. Though relevant, these works rely on simplifications or do not connect the observed linear semantic representation of large model to the transformer's excelling OOD capability.

\textbf{Concept Learning in Deep Learning}. Hierarchical learning has long been regarded as a key factor behind the success of deep learning \cite{bengio2009learning, goodfellow2016deep, AL2023-densenet}. Recent research shows that large-scale generative models, such as diffusion models and transformers, effectively encode hierarchical concepts in their latent spaces \cite{park2023linearhypothesis, park2024geometrycategoricalhierarchicalconcepts,jiang2024llmsdreamelephantswhen,yamagiwa2023discovering,park2024emergencehiddencapabilitiesexploring,yang2024dynamicsconceptlearningcompositional, kong2024learningdiscreteconceptslatent}. Moreover, \cite{tian2024joma, AL2023-cfg, hahn2023theoryemergentincontextlearning} show that transformers can capture hierarchical and compositional structures in data. From a Bayesian perspective, \cite{xie2022explanationincontextlearningimplicit, wang2023largelmimplicittopic, zhang2023BMA} interpret ICL as LLMs predicting outputs based on latent (concept) variable inference. Furthermore, studies reveal a linear structure in LLMs' latent space over independent interpretable concepts: representations of the same concept exhibit positive inner products, while statistically-independent concepts are nearly orthogonal \cite{li2023how, jiang2024origins, park2023linearhypothesis, park2024geometrycategoricalhierarchicalconcepts, marconato2024noneidentifiablelinearproperties}. Interestingly, aligning with the findings in \cite{yamagiwa2023discovering, marconato2024noneidentifiablelinearproperties}, Independent Component Analysis (ICA) is naturally more suitable than Principal Component Analysis (PCA) for obtaining meaningful feature or label vectors in our prompt modeling. This is because the features or labels are nearly statistically independent and of equal strength, especially with a large $K$, while the noise is feeble in our modeling. Building on these insights, we explore in a theoretical context how the compositional nature of concept representations relates to transformers' ability to generalize to OOD tasks through a sparse coding modeling. We believe our OOD results are not only coincides with the transformer's compositional generalization ability on language tasks \cite{hahn2023theoryemergentincontextlearning}, but also consistent with other concept learning outcomes of diffusion and multi-model model: \cite{kong2024learningdiscreteconceptslatent} shows that adjusting the length of semantic representations can directly affect image generation behaviors (see Figure 5), while \cite{yang2024dynamicsconceptlearningcompositional} reveals that compositing different concepts enables OOD generalization (e.g. ``blue square apples'' in the Figure 1a in \cite{yang2024dynamicsconceptlearningcompositional}).

\section{Preliminary Lemmas} \label{app:Preliminary Lemmas}

\subsection{Probablistic Lemmas on Concentration}

\begin{lemma}
Suppose that $\delta>0$ and $\forall d \in \{d_{\mathcal{X}}, d_{\mathcal{Y}} \}=\Omega(\log (\dfrac{K N L}{\delta}))$, where $N=B T^{*}$. Then with probability at least $1-\delta$,
\[
\begin{aligned}
& \dfrac{\sigma_{\xi}^2 d}{2} \leq\left\|\boldsymbol{\xi}_i\right\|_2^2 \leq 3 \dfrac{\sigma_{\xi}^2 d}{2}, \\
& \left|\left\langle \boldsymbol{\xi}_i, \boldsymbol{\xi}_{i^{\prime}}\right\rangle\right| \leq 2 \sigma_{\xi}^2 \cdot \sqrt{d \log \left(\dfrac{6(N(L+1))^2}{\delta}\right)}, \\
&
\left|\left\langle\boldsymbol{\xi}_i, \boldsymbol{\mu}\right\rangle\right| \leq \|\boldsymbol{\mu}\|_2 \sigma_{\xi} \cdot \sqrt{2 \log (\dfrac{6 K N(L+1)}{\delta})}
\end{aligned}
\]
for all $\boldsymbol{\xi}_i, \boldsymbol{\xi}_i^{\prime} \sim \mathcal{D}_{\xi_{\boldsymbol{x}}} (\text{ or } \mathcal{D}_{\xi_{\boldsymbol{y}}}), \boldsymbol{\mu} \in \mathcal{D}_{\boldsymbol{x}} (\text{ or } \mathcal{D}_{\boldsymbol{y}}), l \in \{1, 2\}$.
\label{Lem:E.1}\end{lemma}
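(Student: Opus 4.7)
The plan is to prove each of the three bounds separately by standard Gaussian/chi-squared concentration, then close with a union bound over all the roughly $KN(L+1)^2$ random quantities involved. Since $\xi_i \sim \mathcal{N}(\mathbf{0}, \sigma_\xi^2 \mathbf{I}_d)$ is isotropic, all three bounds follow from elementary tools; the only real work is tracking the union-bound constants.

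For the norm bound, write $\|\xi_i\|_2^2/\sigma_\xi^2 \sim \chi^2_d$ and apply the Laurent--Massart inequality, which gives
\begin{equation}
\mathbb{P}\!\left(\|\xi_i\|_2^2 \leq \sigma_\xi^2(d - 2\sqrt{dt})\right) \leq e^{-t}, \quad \mathbb{P}\!\left(\|\xi_i\|_2^2 \geq \sigma_\xi^2(d + 2\sqrt{dt} + 2t)\right) \leq e^{-t}.
\end{equation}
Choosing $t = \Theta(d)$ so that $2\sqrt{dt} + 2t \leq d/2$ places $\|\xi_i\|_2^2$ in $[\sigma_\xi^2 d/2, 3\sigma_\xi^2 d/2]$; the assumption $d = \Omega(\log(KNL/\delta))$ ensures that $e^{-\Theta(d)} \leq \delta/(3N(L+1))$, which suffices after a union bound over the $N(L+1)$ noise vectors. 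For the cross-noise inner product, condition on $\xi_{i'}$: then $\langle \xi_i, \xi_{i'}\rangle \mid \xi_{i'} \sim \mathcal{N}(0, \sigma_\xi^2 \|\xi_{i'}\|_2^2)$. A Gaussian tail bound yields $|\langle \xi_i, \xi_{i'}\rangle| \leq \sigma_\xi \|\xi_{i'}\|_2 \sqrt{2\log(6(N(L+1))^2/\delta)}$ with probability at least $1 - \delta/(3(N(L+1))^2)$, and substituting the already-established upper bound $\|\xi_{i'}\|_2 \leq \sqrt{3d/2}\,\sigma_\xi \leq \sqrt{2d}\,\sigma_\xi$ delivers the stated constant $2\sigma_\xi^2\sqrt{d\log(\cdot)}$. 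For the noise--feature inner product, $\langle \xi_i, \boldsymbol{\mu}\rangle \sim \mathcal{N}(0, \sigma_\xi^2 \|\boldsymbol{\mu}\|_2^2)$ is a scalar Gaussian, so the standard tail gives the bound immediately with failure probability $\delta/(3KN(L+1))$ per pair.

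Finally, a union bound over the $N(L+1)$ self-norms, the at most $(N(L+1))^2$ pairs of distinct noise vectors, and the $KN(L+1)$ noise--feature pairs yields total failure probability at most $\delta$, completing the proof. There is no conceptual obstacle; the only bookkeeping item worth care is that the union-bound denominators $6$, $6$, and $6$ in the three logarithms are chosen precisely so that the three failure probabilities each absorb into $\delta/3$, and that the condition $d = \Omega(\log(KNL/\delta))$ is strong enough to make the Laurent--Massart exponent dominate this union bound.
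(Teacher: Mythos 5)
Your proof is correct, and it is essentially the same argument the paper relies on: the paper simply defers to Lemma B.4 of Kou et al.\ (2023), whose proof is exactly this standard recipe of chi-squared concentration for the norms (Bernstein-type there, Laurent--Massart in your write-up, which is equivalent here), Gaussian tail bounds for the two inner-product claims, and a union bound over the $N(L+1)$ noise vectors, their pairs, and the noise--dictionary pairs. Your bookkeeping of the constants (e.g.\ $\sqrt{3}\le 2$ after substituting $\|\boldsymbol{\xi}_{i'}\|\le\sqrt{3d/2}\,\sigma_\xi$, and the denominators absorbing into $\delta/3$ each) checks out.
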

\begin{proof}
    See Lemma B.4 in \cite{kou2023benign} for a proof.
\end{proof}

\begin{lemma} Suppose that $\delta>0$, $d_{\mathcal{Y}} = \Omega(\log(m/\delta)), m = \Omega(\log(K/(\delta)))$. Then with probability at least $1-\delta$, for $\forall i \in [m], k \in [K_1], w \in [d_{\mathcal{Y}} - K_1]$,
    \begin{equation}
        \begin{aligned}
        & \dfrac{\sigma_1^2 d_{\mathcal{Y}}}{2} \leq \|{\mathbf{W}_{O_{(i, \cdot)}}^{\boldsymbol{y}}}^{(0)} \|^2 \leq 3 \dfrac{\sigma_1^2 d_{\mathcal{Y}}}{2}, \\
        &\dfrac{\lvert \alpha_{O_{(i, \cdot)}, k}^{(0)} \rvert}{\|\boldsymbol{c}_k\|} ,   \dfrac{\lvert \beta_{O_{(i, \cdot)}, k}^{(0)} \rvert}{\|\boldsymbol{d}_k\|}, \lvert  \rho_{O_{(i, \cdot)}, w}^{(0)} \rvert \leq \sqrt{2 \log (\dfrac{5Km}{\delta}}) \cdot \sigma_1, \\
        & \sigma_1/2 \leq \underset{i \in [m]}{\max} \{ \dfrac{\lvert \alpha_{O_{(i, \cdot)}, k}^{(0)} \rvert}{\|\boldsymbol{c}_k\|} ,   \dfrac{\lvert \beta_{O_{(i, \cdot)}, k}^{(0)} \rvert}{\|\boldsymbol{d}_k\|}, \lvert  \rho_{O_{(i, \cdot)}, w}^{(0)} \rvert \} \leq  \sqrt{ 2\log (\dfrac{5Km}{\delta}}) \cdot \sigma_1, \\
        % & \dfrac{1 }{4}m \leqslant \left\lvert \left\{\mathbf{r}_i=\dfrac{1}{m}\right\} \right\rvert,\left\lvert \left\{\mathbf{r}_i=-\dfrac{1}{m}\right\} \right\rvert \leqslant \dfrac{3 }{4}m, \\
        \end{aligned}
    \label{eq:initial matrices scale}\end{equation}
    Moreover, for some $\zeta  \in (0, 1]$ for $\forall e \neq e^{\prime}, \in [\pm]$, $\exists \omega_{\zeta} \in (0, \omega^{\prime}_{\zeta})$ where $\omega^{\prime}_{\zeta} < 1$,
    \begin{equation}
        \begin{aligned}
        & \left\lvert \lvert \{ i \in [m] \mid \mathbf{r}_i = \dfrac{e}{m}, \alpha_{O_{(i, \cdot)}, k}^{(0)} + e^{\prime} \zeta \beta_{O_{(i, \cdot)}, k}^{(0)} > 0\} \rvert - \dfrac{m}{4} \right\rvert \leq \sqrt{\dfrac{m\log(10K_{1}/\delta)}{2}} ,\\
        & \left\lvert \lvert \{ i \in [m] \mid \alpha_{O_{(i, \cdot)}, k}^{(0)} + e^{\prime} \zeta \beta_{O_{(i, \cdot)}, k}^{(0)} > 0, \mathbf{r}_i e\cdot \beta_{O_{(i, \cdot)}, k}^{(0)} >0\} \rvert - \dfrac{m}{4} \right\rvert \leq \sqrt{\dfrac{m\log(10K_{1}/\delta)}{2}},\\
        & \left\lvert \lvert \{ i \in [m] \mid \mathbf{r}_i = \dfrac{e}{m}, \alpha_{O_{(i, \cdot)}, k}^{(0)} \pm \zeta \beta_{O_{(i, \cdot)}, k}^{(0)} > 0\} \rvert - \dfrac{(1+\omega_{\zeta})m}{8} \right\rvert\leq \sqrt{\dfrac{m\log(10K_{1}/\delta)}{2}}\leq \dfrac{(\omega^{\prime}_{\zeta}-\omega_{\zeta})m}{8}, \\
        & \left\lvert \lvert \{ i \in [m] \mid \mathbf{r}_i = \dfrac{e}{m}, \alpha_{O_{(i, \cdot)}, k}^{(0)} + \zeta \beta_{O_{(i, \cdot)}, k}^{(0)} > 0, \alpha_{O_{(i, \cdot)}, k}^{(0)} - \zeta \beta_{O_{(i, \cdot)}, k}^{(0)} < 0\} \rvert - \dfrac{(1-\omega_{\zeta})m}{8} \right\rvert\leq \sqrt{\dfrac{m\log(10K_{1}/\delta)}{2}} ,\\
        & \left\lvert \sum_{i \in \{ i \in [m] \mid \mathbf{r}_i = \frac{e}{m}, \alpha_{O_{(i, \cdot)}, k}^{(0)} + e^{\prime} \zeta \beta_{O_{(i, \cdot)}, k}^{(0)} > 0\}} \mathbf{r}_i \cdot(\alpha_{O_{(i, \cdot)}, k}^{(0)} + e^{\prime} \zeta \beta_{O_{(i, \cdot)}, k}^{(0)}) - 0 \right\rvert \leq \sqrt{ 2\log (\dfrac{5Km}{\delta}}) \cdot \dfrac{5\sigma_1(\|\boldsymbol{c}_k\|+\zeta\|\boldsymbol{d}_k\|)}{16},\\
        &\left\lvert \sum_{i \in \{ i \in [m] \mid  \alpha_{O_{(i, \cdot)}, k}^{(0)} + e^{\prime} \zeta \beta_{O_{(i, \cdot)}, k}^{(0)} > 0\}} \mathbf{r}_i \cdot \beta_{O_{(i,\cdot)}, k}^{(0)} - 0 \right\rvert \leq \sqrt{ 2\log (\dfrac{5Km}{\delta}}) \cdot \dfrac{5\sigma_1\|\boldsymbol{d}_k\|}{16}.\\
        \end{aligned}
    \label{eq:initial index set}\end{equation}
    In addition, for a sufficient large $m=\Omega(\log(K/(\delta))/(1-\omega_{\zeta}))$ the lower bound inequalities regarding maximum value in Eq.(\ref{eq:initial matrices scale}) hold at any above index set of $i$ in Eq.(\ref{eq:initial index set}). For example, there exist $i \in \{ i \in [m] \mid \mathbf{r}_i = \dfrac{e}{m}, \alpha_{O_{(i, \cdot)}, k}^{(0)} + \zeta \beta_{O_{(i, \cdot)}, k}^{(0)} > 0, \alpha_{O_{(i, \cdot)}, k}^{(0)} - \zeta \beta_{O_{(i, \cdot)}, k}^{(0)} < 0\}$, such that $\alpha_{O_{(i, \cdot)}, k}^{(0)} \leq -\sigma_{1}/2 \|\boldsymbol{c}_{k} \|$. 
\label{lem:initialization}\end{lemma}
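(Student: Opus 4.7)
The plan is to prove the lemma in four blocks, treating each displayed inequality as a separate concentration statement applied to the Gaussian initialization of $\mathbf{W}_O^{\boldsymbol{y}}$. By the decomposition in Lemma \ref{lem: mainbody decomposition}, since $\boldsymbol{c}_k, \boldsymbol{d}_k$ are orthogonal and $\{\boldsymbol{q}_w^{\perp}\}$ is an orthonormal basis of the complement of $\mathbb{Q}$, the coefficients $\alpha_{O_{(i,\cdot)},k}^{(0)} / \|\boldsymbol{c}_k\|$, $\beta_{O_{(i,\cdot)},k}^{(0)}/\|\boldsymbol{d}_k\|$, and $\rho_{O_{(i,\cdot)},w}^{(0)}$ are simply i.i.d.\ mean-zero Gaussians of variance $\sigma_1^2$. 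This reduces everything below to standard Gaussian concentration.

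First I would establish the squared-norm bound: $\|{\mathbf{W}_{O_{(i,\cdot)}}^{\boldsymbol{y}}}^{(0)}\|^2/\sigma_1^2$ is a $\chi^2_{d_{\mathcal{Y}}}$ random variable, and a Bernstein-type tail gives $\Pr[|\,\|\cdot\|^2/(\sigma_1^2 d_{\mathcal{Y}}) - 1| > 1/2] \leq 2\exp(-c\,d_{\mathcal{Y}})$; a union bound over $i \in [m]$ together with the assumption $d_{\mathcal{Y}} \geq C\log(m/\delta)$ gives the stated two-sided bound simultaneously for all rows. For the pointwise upper bounds on the coefficients I would apply the Gaussian tail $\Pr[|g| > t] \leq 2\exp(-t^2/(2\sigma_1^2))$ with $t = \sqrt{2\log(5Km/\delta)}\sigma_1$ and union-bound over the $mK + m(d_{\mathcal{Y}}-K_1) \leq mKd_{\mathcal{Y}}/K_1$ coefficients. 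For the $\sigma_1/2$ lower bound on the maximum, I would use the standard anti-concentration $\Pr[\max_i |g_i| < \sigma_1/2] \leq (1-c_0)^m$ for i.i.d.\ $\mathcal{N}(0,\sigma_1^2)$, which is $\leq \delta/(5K)$ once $m \geq C\log(K/\delta)$, and union-bound over $k, w$.

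Next I would handle the four counting inequalities. Because $\mathbf{r}_i \in \{\pm 1/m\}$ is independent of the Gaussian projections and $(\alpha_{O_{(i,\cdot)},k}^{(0)}, \beta_{O_{(i,\cdot)},k}^{(0)})$ are independent centered Gaussians (by orthogonality of $\boldsymbol{c}_k, \boldsymbol{d}_k$), the events in the first two displayed inequalities have exact probability $1/4$ by symmetry of $\mathbf{r}_i$ and of $\alpha + e'\zeta\beta$. The third and fourth involve the cone $\{\alpha + \zeta\beta > 0\} \cap \{\alpha - \zeta\beta \lessgtr 0\}$ inside $\mathbb{R}^2$, whose probability equals $(1 \pm \omega_\zeta)/8$ after multiplying by the independent $1/2$ contribution from $\mathbf{r}_i$; here $\omega_\zeta$ is precisely the angular constant determined by the ratio $\zeta\|\boldsymbol{d}_k\|/\|\boldsymbol{c}_k\|$, which lies in $(0, \omega'_\zeta)$ by choice of $\zeta$. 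Applying Hoeffding's inequality to the sum of $m$ i.i.d.\ Bernoulli indicators and union-bounding over $k$ and over the four events yields the $\sqrt{m\log(10 K_1/\delta)/2}$ deviation bound.

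For the two sum-type inequalities I would again exploit independence of $\mathbf{r}_i$ from the Gaussians: conditional on which indices lie in the restricted set, the terms $\mathbf{r}_i \cdot(\alpha + e'\zeta\beta)$ (respectively $\mathbf{r}_i \cdot \beta$) are symmetric around $0$, so a sub-Gaussian Hoeffding inequality with variance proxy controlled by the a.s.\ bound $\sqrt{2\log(5Km/\delta)}\sigma_1(\|\boldsymbol{c}_k\|+\zeta\|\boldsymbol{d}_k\|)/m$ times $\sqrt{m}$ recovers the $\tfrac{5}{16}$-type coefficient. Finally, the closing ``existence of an $i$ achieving the maximum'' claim follows from the same anti-concentration argument as above: conditionally on the index set having size $\Theta((1-\omega_\zeta) m/8) = \Omega(\log(K/\delta))$, the restriction still contains enough independent Gaussians so that $\Pr[\max > \sigma_1/2] \geq 1 - \delta/(\text{poly}(K))$. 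The main obstacle is the third counting inequality, since its tolerance $\sqrt{m\log(10K_1/\delta)/2}$ must be absorbed into $(\omega'_\zeta - \omega_\zeta)m/8$; this forces a careful quantitative choice of $\omega_\zeta$ and $m$, but is handled by the condition $m \geq C\log(K/\delta)/(1-\omega_\zeta)$ stated in the lemma.
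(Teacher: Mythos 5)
Most of your plan coincides with the paper's own proof: Bernstein/$\chi^2$ concentration plus a union bound over rows for the norm bound, Gaussian tails with a union bound for the pointwise coefficient bounds, an anti-concentration argument over $m=\Omega(\log(K/\delta))$ independent coordinates for the $\sigma_1/2$ lower bound on the maximum, and treating each counting event as a Binomial$(m,p)$ with $p=1/4$ or $p=(1+\omega_\zeta)/8$ (the cone probability, positive because $\mathrm{Var}(\alpha_{O_{(i,\cdot)},k}^{(0)})=\sigma_1^2\|\boldsymbol{c}_k\|^2$ exceeds $\zeta^2\sigma_1^2\|\boldsymbol{d}_k\|^2$) followed by Hoeffding. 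Those parts are fine and essentially identical to the paper.

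The gap is in your treatment of the fifth inequality of Eq.(\ref{eq:initial index set}). On the index set $\{i\in[m]\mid \mathbf{r}_i=\tfrac{e}{m},\ \alpha_{O_{(i,\cdot)},k}^{(0)}+e'\zeta\beta_{O_{(i,\cdot)},k}^{(0)}>0\}$ every summand $\mathbf{r}_i\cdot(\alpha_{O_{(i,\cdot)},k}^{(0)}+e'\zeta\beta_{O_{(i,\cdot)},k}^{(0)})$ equals $\tfrac{e}{m}$ times a quantity conditioned to be positive, so all terms share the sign of $e$ and each has a nonzero conditional mean of order $\sigma_1\sqrt{\|\boldsymbol{c}_k\|^2+\zeta^2\|\boldsymbol{d}_k\|^2}/m$; your claim that the terms are ``symmetric around $0$'' fails here, and a centered sub-Gaussian Hoeffding bound is not valid as stated (nor would it by itself produce the $\tfrac{5}{16}$ coefficient). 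The paper instead obtains both sum-type bounds deterministically from what is already proved: by the fourth inequality together with $m\geq\Omega(\log(K_1/\delta))$ the set has at most $\tfrac{m}{4}+\tfrac{m}{16}=\tfrac{5m}{16}$ elements, and each term is bounded in magnitude by $\tfrac{1}{m}\sqrt{2\log(5Km/\delta)}\,\sigma_1(\|\boldsymbol{c}_k\|+\zeta\|\boldsymbol{d}_k\|)$ via Eq.(\ref{eq:initial matrices scale}), which multiplies out to exactly the stated bound; the last inequality follows analogously from the pointwise bound on $\beta_{O_{(i,\cdot)},k}^{(0)}$ and the second counting inequality. Your symmetry argument does go through for that last inequality (there $\mathbf{r}_i$ is unrestricted, independent of the Gaussians, and symmetric), but for the fifth one you must either adopt the count-times-maximum argument or explicitly bound the nonzero conditional mean plus fluctuations; as written the step would fail.
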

\begin{proof}
    First, notice that ${\mathbf{W}_{O_{(i, \cdot)}}^{\boldsymbol{y}}}^{(0)} \sim \mathcal{N}(\mathbf{0}, \sigma_1 \mathbb{I}_{d_{\mathcal{Y}}})$, then by Bernstein's inequality as well as $d_{\mathcal{Y}} = \Omega(\log(m/\delta))$, with probability at least $1-\delta/(5m)$, for $\forall i \in [m]$
    $$ \lvert \|{\mathbf{W}_{O_{(i, \cdot)}}^{\boldsymbol{y}}}^{(0)} \|^2 - \sigma_1 d_{\mathcal{Y}} \rvert \leq O(\sigma_1^2 \cdot \sqrt{d_{\mathcal{Y}}\log(5m/\delta)}) \leq \sigma_1^2 d_{\mathcal{Y}}/2.$$
    By union bound we can have the first inequality in the lemma hold with probability at least $1-\delta/5$.\par
    Next, we notice that 
    $$ \dfrac{ \alpha_{O_{(i, \cdot)}, k}^{(0)} }{\|\boldsymbol{c}_k\|} = \langle {\mathbf{W}_{O_{(i, \cdot)}}^{\boldsymbol{y}}}^{(0)} , \dfrac{ \boldsymbol{c}_k }{\|\boldsymbol{c}_k\|}  \rangle, \quad  \dfrac{ \beta_{O_{(i, \cdot)}, k}^{(0)} }{\|\boldsymbol{d}_k\|}=\langle {\mathbf{W}_{O_{(i, \cdot)}}^{\boldsymbol{y}}}^{(0)} , \dfrac{ \boldsymbol{d}_k}{\|\boldsymbol{d}_k\|}  \rangle, \quad  \rho_{O_{(i, \cdot)}, w}^{(0)}= \langle {\mathbf{W}_{O_{(i, \cdot)}}^{\boldsymbol{y}}}^{(0)} ,\boldsymbol{q}_w^{\perp} \rangle$$
    are all Gaussian random variable with mean $0$ and variance $\sigma_1^2$. Then by Gaussian tail bound and union bound, with probability at least $1-\delta/10$, for all $i \in [m]$ and $\mathbf{q} \in \bigcup_{k,w}\{ \dfrac{ \boldsymbol{c}_k }{\|\boldsymbol{c}_k\|},  \dfrac{ \boldsymbol{d}_k}{\|\boldsymbol{d}_k\|}, \boldsymbol{q}_w^{\perp} \}$,  it holds that
    $$ \lvert \langle {\mathbf{W}_{O_{(i, \cdot)}}^{\boldsymbol{y}}}^{(0)}, \mathbf{q} \rangle  \rvert \leq \sqrt{2 \log(5Km/\delta)} \cdot \sigma_1.$$
    Notice $\mathbb{P}(\sigma_1/2 > \lvert \langle {\mathbf{W}_{O_{(i, \cdot)}}^{\boldsymbol{y}}}^{(0)}, \mathbf{q} \rangle  \rvert)$ is an positive constant, then following the techniques of Lemma B.5 in \cite{kou2023benign} and the condition $m=\Omega(\log(K/\delta))$, we have
    \[
    \begin{aligned}
        \mathbb{P}(\sigma_1/2 \leq \lvert \langle {\mathbf{W}_{O_{(i, \cdot)}}^{\boldsymbol{y}}}^{(0)}, \mathbf{q} \rangle  \rvert) &= 1 - \mathbb{P}(\sigma_1/2 > \max \{\lvert \langle {\mathbf{W}_{O_{(i, \cdot)}}^{\boldsymbol{y}}}^{(0)}, \mathbf{q} \rangle  \rvert \}),\\
        & = 1 - {\mathbb{P}(\sigma_1/2 > \lvert \langle {\mathbf{W}_{O_{(i, \cdot)}}^{\boldsymbol{y}}}^{(0)}, \mathbf{q} \rangle  \rvert)}^{mK}\\
        & \geq 1 - \delta/10,
    \end{aligned}
    \]
    then with probability $1-\delta/5$, the second and third inequality hold.\par
    
    For $\zeta \in (0, 1]$, we see that the variable $\alpha_{O_{(i, \cdot)}, k}^{(0)} + e^{\prime} \zeta \beta_{O_{(i, \cdot)}, k}^{(0)}  \sim \mathcal{N} (0, \sigma_{1}^{2} (\| \boldsymbol{c}_k \|^2 + \zeta  \| \boldsymbol{d}_k \|^2))$, and it's independent to the event $\{ \mathbf{r}_i = \dfrac{e}{m} \}, \forall e \in [\pm]$. Therefore, we can see the count of $\{ i \in [m] \mid \mathbf{r}_i = \dfrac{e}{m}, \alpha_{O_{(i, \cdot)}, k}^{(0)} + e^{\prime} \zeta \beta_{O_{(i, \cdot)}, k}^{(0)} > 0, e^{\prime} \zeta \beta_{O_{(i, \cdot)}, k}^{(0)} > 0 \}$ as a binomial variable with $p=1/4, n=m$, then by the property of binomial tail, condition $m = \Omega(\log(K/(\delta)))$ as well as Hoeffding’s inequality, with probability at least $1-\delta/5$ we have
    \[
    \lvert \dfrac{\lvert \{ i \in [m] \mid \mathbf{r}_i = \dfrac{e}{m}, \alpha_{O_{(i, \cdot)}, k}^{(0)} + e^{\prime} \zeta \beta_{O_{(i, \cdot)}, k}^{(0)} > 0\} \rvert}{m} - \dfrac{1}{4} \rvert\leq \sqrt{\dfrac{\log(10K_{1}/\delta)}{2m}},
    \]
    which completes the proof of the forth inequality. Similarly, for the fifth inequality we can utilize the same techniques to derive that it holds with probability at least $1-\delta/5$.

    For the event $\{ i \in [m] \mid \mathbf{r}_i = \dfrac{e}{m}, \alpha_{O_{(i, \cdot)}, k}^{(0)} \pm \zeta \beta_{O_{(i, \cdot)}, k}^{(0)} > 0\}$, we have 
    $$
    \begin{aligned}
        \mathbb{P}(\mathbf{r}_i = \dfrac{e}{m}, \alpha_{O_{(i, \cdot)}, k}^{(0)} \pm \zeta \beta_{O_{(i, \cdot)}, k}^{(0)} > 0) & = \mathbb{P}( \alpha_{O_{(i, \cdot)}, k}^{(0)} + e^{\prime} \zeta \beta_{O_{(i, \cdot)}, k}^{(0)} > 0) \\
        & \phantom{=} \cdot \mathbb{P}(\alpha_{O_{(i, \cdot)}, k}^{(0)} - e^{\prime} \zeta \beta_{O_{(i, \cdot)}, k}^{(0)} > 0 \mid \alpha_{O_{(i, \cdot)}, k}^{(0)} + e^{\prime} \zeta \beta_{O_{(i, \cdot)}, k}^{(0)} > 0) \\
        & = \dfrac{1}{2}\cdot \mathbb{P}(\alpha_{O_{(i, \cdot)}, k}^{(0)} - e^{\prime} \zeta \beta_{O_{(i, \cdot)}, k}^{(0)} > 0 \mid \alpha_{O_{(i, \cdot)}, k}^{(0)} + e^{\prime} \zeta \beta_{O_{(i, \cdot)}, k}^{(0)} > 0),\\
        & = \dfrac{1}{2} \cdot \dfrac{1+\omega_{\zeta}}{2},
    \end{aligned}.
    $$
    where $\dfrac{1+\omega_{\zeta}}{2}$ is the probability of the conditional event $\{ \alpha_{O_{(i, \cdot)}, k}^{(0)} - e^{\prime} \zeta \beta_{O_{(i, \cdot)}, k}^{(0)} > 0 \mid \alpha_{O_{(i, \cdot)}, k}^{(0)} + e^{\prime} \zeta \beta_{O_{(i, \cdot)}, k}^{(0)} > 0 \}$, and $\omega_{\zeta} > 0$ due to the larger variance of $\alpha_{O_{(i, \cdot)}, k}^{(0)}$ compared to $e^{\prime} \zeta \beta_{O_{(i, \cdot)}, k}^{(0)}$. We denote the probability with $\omega_{\zeta}$ since the true value is hard to compute. Subsequently, the event $\{ i \in [m] \mid \mathbf{r}_i = \dfrac{e}{m}, \alpha_{O_{(i, \cdot)}, k}^{(0)} \pm \zeta \beta_{O_{(i, \cdot)}, k}^{(0)} > 0\}$ can be seen as a binomial variable with $p=\dfrac{1+\omega_{\zeta}}{8}, n = m$, then we can have the sixth inequality hold with probability at least $1-\delta/5$, utilizing the property of binomial tail, condition $m = \Omega(\log(K/(\delta)))$ as well as Hoeffding’s inequality.\par
    
    The seventh inequality is a natural inference of the third and forth inequality, where the $m=\Omega(\log(K_{1}/\delta))$ ensure $\sqrt{m\log(10K_{1}/\delta)/2} \leq m/16$, and the last inequality is then also a natural inference of the third and fifth inequality.

    Therefore, by union bound, the proof is completed.
\end{proof}

\subsection{Matrix Theories}

\begin{lemma}
    (1.1.P5 in \cite{horn2012matrix}) Let $A \in M_{n}$ be idempotent, that is,$A^{2} = A$. Then, each eigenvalue of $A$ equals to the rank of $A$, which is either 0 or 1. Beside, identity matrix $\mathbf{I}$ is the only nonsingular idempotent matrix.
\label{lem: 1.1.P5}\end{lemma}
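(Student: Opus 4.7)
The plan is to exploit the defining relation $A^2 = A$ directly against an eigenpair. First, suppose $(\lambda, v)$ is an eigenpair with $v \neq 0$, so $A v = \lambda v$. Applying $A$ once more and invoking idempotence yields
\[
\lambda^{2} v = A(\lambda v) = A^{2} v = A v = \lambda v,
\]
which gives $\lambda(\lambda - 1) v = 0$, and hence $\lambda \in \{0,1\}$ since $v \neq 0$. This disposes of the eigenvalue dichotomy with essentially one line.

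For the nonsingular assertion, the plan is to cancel a factor of $A$ from $A^{2} = A$ using the assumed inverse: left-multiplying by $A^{-1}$ gives $A = I$. The converse direction is immediate because $I^{2} = I$ and $I$ is nonsingular, so $I$ is indeed the unique nonsingular idempotent. For the rank statement, my plan is to argue diagonalizability: since the minimal polynomial of an idempotent divides $x(x-1)$, which has distinct roots, $A$ is diagonalizable over any field of characteristic zero, and in an eigenbasis $A$ becomes block-diagonal with $1$'s and $0$'s on the diagonal. The rank is then simply the number of $1$-eigenvalues, equivalently $\operatorname{tr}(A)$.

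The hard part, if there is any at all, is only interpretational: the phrasing ``each eigenvalue of $A$ equals to the rank of $A$, which is either $0$ or $1$'' appears to conflate two separate facts (each eigenvalue lies in $\{0,1\}$; the rank equals the multiplicity of the eigenvalue $1$), so I would split the statement into these two clean claims and prove each as above. Since this is a standard textbook result cited verbatim from Horn and Johnson, no technical obstacle is anticipated, and the entire proof fits into a few lines once the eigenvalue characterization is established.
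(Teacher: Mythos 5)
Your proof is correct. Note, however, that the paper does not actually prove this lemma at all: it is imported verbatim as exercise 1.1.P5 of Horn and Johnson, so there is no in-paper argument to compare against — your contribution is to supply the standard elementary proof that the citation points to. Your three steps (the one-line eigenvalue computation $\lambda^2 v = A^2 v = Av = \lambda v$; cancellation of $A$ by $A^{-1}$ to get $A = I$; diagonalizability from the minimal polynomial dividing $x(x-1)$, giving $\operatorname{rank}(A) = \operatorname{tr}(A) = $ multiplicity of the eigenvalue $1$) are exactly the textbook route and are all sound. Your decision to repair the garbled phrasing ``each eigenvalue of $A$ equals to the rank of $A$'' into the two separate facts — eigenvalues lie in $\{0,1\}$, and rank equals trace — is also the right reading: the way the paper actually uses this lemma (in Lemma \ref{lem: trace and idempotent decomposition}, where $\operatorname{tr}(P_i)=\operatorname{rank}(P_i)=1$ for the rank-one projectors) requires precisely the trace-equals-rank form you prove. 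The only cosmetic remark is that the restriction to characteristic zero is unnecessary: $x(x-1)$ has the distinct roots $0$ and $1$ in any field, and here everything is over $\mathbb{R}$ anyway.
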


\begin{lemma}
    For a matrix $A= \sum_{i=1}^{d} \mu_{i} P_{i}$, where $P_{i}$ are symmetric idempotent matrices with $\operatorname{rank}(P_{i})=1$, and thus $ \sum_{i=1}^{d} \mu_{i} P_{i}$ is the idempotent decomposition of matrix $A$ by $P_{i}$. Then we see that $\|A\|_F =  \sqrt{\operatorname{tr}(A^{T}A)} = \sqrt{\sum_{i=1}^{d} \mu_{i}^{2}} = \sqrt{\sum_{i=1}^{d} \lambda_{i}^{2}}$, where $\lambda_{i}$ are eigenvalues of $A$.
\label{lem: trace and idempotent decomposition}\end{lemma}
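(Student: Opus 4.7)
The plan is to exploit the rank-one symmetric idempotent structure together with the mutual orthogonality that is implicit in the phrase ``idempotent decomposition.'' First I would observe that $\|A\|_F = \sqrt{\operatorname{tr}(A^\top A)}$ is just the definition of the Frobenius norm, so the real work is to compute $\operatorname{tr}(A^\top A)$ and then identify the coefficients $\mu_i$ with the eigenvalues $\lambda_i$ of $A$.

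Next I would unpack the structure of each $P_i$. By Lemma \ref{lem: 1.1.P5}, a symmetric idempotent matrix of rank one can be written as $P_i = v_i v_i^\top$ for a unit vector $v_i$. The hypothesis that $\sum_i \mu_i P_i$ is \emph{the} idempotent decomposition of $A$ means the projectors are pairwise orthogonal, i.e. $P_i P_j = \delta_{ij} P_i$, equivalently $\langle v_i, v_j\rangle = \delta_{ij}$. In the setting where this lemma is invoked, this orthogonality is directly furnished by the cross-concept orthogonality in Definition \ref{Def:Word Model}, so there is nothing to prove about it. Consequently $\{v_i\}$ is an orthonormal eigenbasis of $A$ with $A v_i = \mu_i v_i$, so $\{\mu_i\}$ coincides with the spectrum $\{\lambda_i\}$ of $A$, which already yields the third equality.

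Given the orthogonality $P_iP_j=\delta_{ij}P_i$, I would then compute directly
$$A^\top A \;=\; A^2 \;=\; \Bigl(\sum_i \mu_i P_i\Bigr)\Bigl(\sum_j \mu_j P_j\Bigr) \;=\; \sum_{i,j} \mu_i \mu_j P_iP_j \;=\; \sum_i \mu_i^2 P_i,$$
using that $A$ is symmetric because each $P_i$ is. Taking the trace and using $\operatorname{tr}(P_i)=\operatorname{rank}(P_i)=1$ (again via Lemma \ref{lem: 1.1.P5}, or directly $\operatorname{tr}(v_iv_i^\top)=\|v_i\|^2=1$) gives $\operatorname{tr}(A^\top A) = \sum_i \mu_i^2$, which establishes the second equality and completes the chain.

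I do not anticipate a genuine obstacle here: the statement is essentially a bookkeeping lemma that packages the spectral theorem for symmetric matrices into the exact form used later in Section~\ref{sec: convergence of expectation}. The only point requiring a little care is justifying that the decomposition is honestly ``an idempotent decomposition'' (pairwise orthogonality of the $P_i$'s); in every application in the paper this is immediate from the orthogonality/normalization conventions built into $\{\boldsymbol{a}_s, \boldsymbol{b}_s, \boldsymbol{\nu}_r, \boldsymbol{u}_w^\perp\}$ and $\{\boldsymbol{c}_k,\boldsymbol{d}_k,\boldsymbol{q}_w^\perp\}$, so the lemma applies directly to the decompositions in Lemma \ref{lem: mainbody decomposition}.
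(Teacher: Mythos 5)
Your proof is correct and takes essentially the same route as the paper's: expand $A^{\top}A=\sum_i \mu_i^2 P_i$ using symmetry and idempotence, take the trace via $\operatorname{tr}(P_i)=\operatorname{rank}(P_i)=1$, and identify the $\mu_i$ with the eigenvalues $\lambda_i$. You are in fact a bit more explicit than the paper about the pairwise orthogonality $P_iP_j=\delta_{ij}P_i$ that makes the cross terms vanish (the paper's displayed computation assumes this silently), which is a worthwhile clarification but not a different argument.
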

\begin{proof}
    By definition,
    \[
    A^{T} A = \sum_{i=1}^{d} \mu_{i}^{2} P_{i}^{T} P_{i} = \sum_{i=1}^{d} \mu_{i}^{2} P_{i} P_{i} = \sum_{i=1}^{d} \mu_{i}^{2} P_{i}.
    \]
    Then, by Lemma \ref{lem: 1.1.P5} we have
    \[
    \operatorname{tr}(A^{T}A) = \operatorname{tr}(\sum_{i=1}^{d} \mu_{i}^{2} P_{i}) = \sum_{i=1}^{d} \mu_{i}^{2} \operatorname{tr}( P_{i}) =  \sum_{i=1}^{d} \mu_{i}^{2} \operatorname{rank}( P_{i})=  \sum_{i=1}^{d} \mu_{i}^{2} = \sum_{i=1}^{d} \lambda_{i}^{2}.
    \]
\end{proof}

\subsection{ODE Systems}

\begin{lemma}
    (Lemma C.1 in \cite{meng2023benign}). Suppose that a sequence $a_t, t\geq 0$ follows the iterative formula
    \[
    a_{t+1} = a_t + \dfrac{c}{1+b e^{a_t}},
    \]
    for some $0 \leq c \leq 1$ and $b\geq 0$. Then it holds that 
    \[
    x_t \leq a_t \leq \dfrac{c}{1+b e^{a_0}} + x_t
    \]
    for all $t\geq 0$. Here, $x_t$ is the unique solution of 
    \[
    \dfrac{\mathrm{d}x_{t}}{\mathrm{d} t} = \dfrac{c}{1+b e^{x_t}}, \quad x_0 = a_0 \Leftrightarrow x_t + b e^{x_t} = ct + a_0 + b e^{a_0}.
    \]
\label{lem:ODE-MLP}\end{lemma}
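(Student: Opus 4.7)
The plan is to solve the ODE explicitly first and then prove the two inequalities by separate arguments.  Separating variables in $\dot x=c/(1+be^x)$ gives $(1+be^x)\,dx=c\,dt$, which integrates to the implicit identity $F(x_t):=x_t+be^{x_t}=F(a_0)+ct$; the function $F$ is strictly increasing and strictly convex on $\mathbb R$ since $F'(x)=1+be^x\ge 1$ and $F''(x)=be^x\ge 0$, so the inverse $F^{-1}$ is strictly increasing and concave.

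For the lower bound $x_t\le a_t$, I would induct on $t$ using the tangent-line inequality from convexity of $F$ evaluated at $a_t$:
\[
F(a_{t+1})-F(a_t)\;\ge\;F'(a_t)\,(a_{t+1}-a_t)\;=\;(1+be^{a_t})\cdot\frac{c}{1+be^{a_t}}\;=\;c.
\]
Iterating yields $F(a_t)\ge F(a_0)+ct=F(x_t)$, and monotonicity of $F$ gives $a_t\ge x_t$.  This half is the easier part and requires only that the discrete increment $a_{t+1}-a_t=c/(1+be^{a_t})$ is tuned to make $F'(a_t)(a_{t+1}-a_t)$ exactly equal to $c$.

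For the upper bound $a_t-x_t\le c/(1+be^{a_0})=f(a_0)$, I would use a telescoping decomposition.  Write
\[
a_{t+1}-x_{t+1}\;=\;(a_t-x_t)\;+\;\bigl[f(a_t)-(x_{t+1}-x_t)\bigr],
\]
and control the bracketed term via two ingredients:  $f(a_t)\le f(x_t)$ because $a_t\ge x_t$ (lower bound) and $f$ is decreasing; and $x_{t+1}-x_t=\int_t^{t+1}f(x_s)\,ds\ge f(x_{t+1})$ because $f(x_s)\ge f(x_{t+1})$ on $[t,t+1]$.  Together these give $f(a_t)-(x_{t+1}-x_t)\le f(x_t)-f(x_{t+1})$.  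Telescoping from $0$ to $T-1$ with $a_0-x_0=0$ collapses the sum to $f(a_0)-f(x_T)\le f(a_0)$, since $f\ge 0$.  The main subtlety—and what I expect to be the central difficulty—is precisely locating this telescoping structure: naive per-step bounds on either $f(a_t)-f(x_t)$ or $f(a_t)-(x_{t+1}-x_t)$ in isolation fail to close an induction on $a_t-x_t\le f(a_0)$, because the per-step overshoot can accumulate to $O(t)$.  The resolution is to pair the monotonicity of $\{a_t\}$ (yielding $f(a_t)\le f(x_t)$) with the right-endpoint bound on the ODE integral (yielding $x_{t+1}-x_t\ge f(x_{t+1})$), so that the combined bound forms a telescoping series whose sum is bounded by $f(a_0)$.
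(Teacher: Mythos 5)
Your proposal is correct. Note that the paper itself does not prove this lemma at all — it is imported verbatim from Lemma C.1 of the cited reference \cite{meng2023benign} — so what you have written supplies a complete argument where the paper only gives a citation. Both halves of your argument check out: the lower bound via the tangent-line inequality for the convex map $F(x)=x+be^{x}$ gives exactly $F(a_{t+1})\ge F(a_t)+c$ (this is the same identity the cited source obtains via $e^{z}\ge 1+z$), and iterating plus monotonicity of $F$ yields $a_t\ge x_t$. For the upper bound, the decomposition $a_{t+1}-x_{t+1}=(a_t-x_t)+\bigl[f(a_t)-\int_t^{t+1}f(x_s)\,ds\bigr]$ with $f(a_t)\le f(x_t)$ (from the already-established lower bound and monotonicity of $f$, not from monotonicity of $\{a_t\}$ as your closing sentence loosely says) and $\int_t^{t+1}f(x_s)\,ds\ge f(x_{t+1})$ produces the telescoping sum $\sum_t\bigl(f(x_t)-f(x_{t+1})\bigr)\le f(x_0)=c/(1+be^{a_0})$, which is exactly the claimed additive error; there is no circularity since $a_t\ge x_t$ is available for all $t$ before the second half begins. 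As a minor observation, your argument never uses the hypothesis $c\le 1$, so it is in fact slightly more general than the statement requires.
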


\begin{lemma}
    (Coupled ODE System 1). Suppose that there are two coupled sequences $y_{t}$, $z_{t}$, $t \geq 0$ follows the iterative formula  
    \begin{align*}
        y_{t+1} &= y_{t}+ a z_{t} y_{t} \frac{1}{2 + e^{-2{y_{t}}^{2}} + e^{2{y_{t}}^{2}}}, & y_{0} &> 0, & a &> 0, \\
        z_{t+1} &= z_{t} + b , & z_{0} & < 0, & b &> 0, 
    \end{align*}
    for some $a, b \geq 0$. Then it holds that
    \[
    y(t) \leq  y_{t}, \quad z(t) = z_{t},
    \]
    for all $t \geq 0$. Here, $y(t)$, $z(t)$ are the unique solutions of the following ODE System respectively
    \begin{equation}
    \begin{aligned}
    y'(t) &= \frac{a}{4} z(0) y(t) , & y(0) &=y_{0}, \\
    z'(t) &= b , & z(0) &=z_{0}.  
    \end{aligned}
    \label{eq: lower bound of y_t}\end{equation}
    As such, for $t_{1} = \min\{t \in \mathbb{Z} \mid z_{t} \geq 0 \}$, we have
    \[
    y_{t_{1}} \geq  y(0) e^{\dfrac{-a {z(0)}^{2}(1 + e^{-2{y(0)}^{2}})}{4b(1 - e^{-2{y(0)}^{2}})}},
    \]
    and $t_{1} \geq \dfrac{-z(0)(1 + e^{-2{y(0)}^{2}})}{b(1 - e^{-2{y(0)}^{2}})}$.
\label{lem: ODE sys1}\end{lemma}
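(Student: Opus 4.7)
My plan is to decouple the system, handle the linear $z$-recursion exactly, and then compare the $y$-iteration to the linearized ODE. The $z$-part is immediate: induction on $z_{t+1} = z_t + b$ yields $z_t = z_0 + bt$, which is precisely the unique solution to $z'(t) = b$ with $z(0) = z_0$, so $z(t) = z_t$ for all $t \geq 0$. The stopping time then satisfies $t_1 = \lceil -z_0/b \rceil \geq -z_0/b$ by definition.

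For the discrete-vs-ODE inequality $y(t) \leq y_t$, I would first show that $y_t$ is strictly decreasing and stays positive on $\{0, 1, \ldots, t_1 - 1\}$, using $z_t < 0$, $y_t > 0$, and the step-size smallness implied by the ambient conditions of the main theorem. The key structural observation is that the denominator $D(y) := 2 + e^{-2y^2} + e^{2y^2}$ satisfies $D(y) \geq 4$, with equality only at $y = 0$, and is monotonically increasing in $|y|$. Consequently $|\Delta y_t| = a |z_t| y_t / D(y_t) \leq a|z_0| y_t/4$, capping the per-step decay rate at exactly the ODE rate $(a/4) |z_0|$. A straightforward discrete Gronwall induction against the ODE $y'(t) = (a/4) z(0) y(t)$, whose unique solution is $y(t) = y_0 e^{(a/4) z_0 t}$, then gives $y_t \geq y(t)$ for every $t \geq 0$.

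For the sharp lower bound at $t = t_1$, I would work multiplicatively: write $y_{t+1}/y_t = 1 + a z_t / D(y_t)$, take logarithms, and telescope. Combining a sharp logarithmic bound (such as $\log(1-u) \geq -u/(1-u)$ for $u \in [0,1)$) with the two-sided control $D(y_t) \in [4, D(y_0)]$ that comes from monotonicity of $y_t$, and with the arithmetic identity $\sum_{t=0}^{t_1 - 1} z_t = t_1 z_0 + b t_1(t_1 - 1)/2 \approx -z_0^2/(2b)$ (since $t_1 \approx -z_0/b$), I expect the factor $\coth(y_0^2) = (1 + e^{-2y_0^2})/(1 - e^{-2y_0^2})$ to emerge via the ratio $D(y_0)/4 = \cosh^2(y_0^2)$. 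This factor encodes a unified, slightly slack bound valid across the full range of $y_0$, and the companion lower bound $t_1 \geq -z_0 \coth(y_0^2)/b$ drops out of the same telescoping summation, with the $\coth$ slack absorbing the integer ceiling in $t_1$.

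\textbf{Main obstacle.} The principal difficulty is controlling the discrete-to-continuous gap: when $a|z_t|/D(y_t)$ is not vanishingly small, the discrete step $1 + a z_t/D(y_t)$ is strictly smaller than $e^{a z_t/D(y_t)}$, so a naive exponential comparison to the ODE fails and the log-telescoping must be carried out with the sharper inequality above. The $\coth(y_0^2)$ factor in the final exponent is precisely the slack that absorbs this discrepancy together with the ceiling rounding. A secondary technical point is maintaining $y_t > 0$ throughout, which is implicit in the main theorem's conditions forcing $a|z_0|/D(y_0)$ to remain well below $1$.
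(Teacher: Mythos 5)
Your handling of the $z$-recursion and of the comparison $y(t)\le y_t$ coincides with the paper's own (very terse) argument: the paper likewise observes that $z_t=z_0+bt$, that $y_t$ is positive and decreasing while $z_t\le 0$, bounds $(2+e^{-2y_t^2}+e^{2y_t^2})^{-1}\le 1/4$, and invokes a comparison theorem against the linear ODE $y'(t)=\tfrac{a}{4}z(0)\,y(t)$, whose explicit solution is $y(t)=y_0e^{(a/4)z_0t}$. Up to that point your plan is fine and essentially identical.

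The gap is in your third step. First, the factor $(1+e^{-2y_0^2})/(1-e^{-2y_0^2})=\coth(y_0^2)$ cannot ``emerge via the ratio $D(y_0)/4$'': that ratio equals $\cosh^2(y_0^2)$, a different function, and no telescoping of $\log\bigl(1+az_t/D(y_t)\bigr)$ with the two-sided control $D(y_t)\in[4,D(y_0)]$ together with $\sum_{t<t_1}z_t\approx -z_0^2/(2b)$ will produce a $\coth(y_0^2)$; the corrections you get from $\log(1-u)\ge -u/(1-u)$ involve $\bigl(1-a|z_0|/4\bigr)^{-1}$-type factors, not $\coth$. In the paper the $\coth$ factor is not generated by the $y$-dynamics at all: the bound on $y_{t_1}$ is obtained by evaluating the closed-form ODE solution at the surrogate time $T_1:=-z_0\coth(y_0^2)/b$, which dominates the true stopping time $t_1=\lceil -z_0/b\rceil$ in the regime where the lemma is used (there $-z_0(\coth(y_0^2)-1)/b\ge 1$, since $-z_0/b$ scales with $\gamma$), so that $y_{t_1}\ge y(t_1)\ge y(T_1)=y_0\exp\bigl(-az_0^2\coth(y_0^2)/(4b)\bigr)$; the $\coth$ is deliberate slack absorbing the integer ceiling and matching the downstream application.

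Second, your claim that the companion inequality on $t_1$ ``drops out of the same telescoping summation'' cannot work: $t_1$ is determined solely by the exact recursion $z_{t+1}=z_t+b$, so $t_1=\lceil -z_0/b\rceil$ regardless of what $y_t$ does, and you have already noted this yourself. From the hypotheses only $t_1\ge -z_0/b$ follows; the $\coth$-enhanced lower bound $t_1\ge -z_0\coth(y_0^2)/b$ is in fact false whenever $-z_0(\coth(y_0^2)-1)/b\ge 1$ (since then $\lceil -z_0/b\rceil< -z_0/b+1\le -z_0\coth(y_0^2)/b$), which is exactly the regime of the application. What the $y_{t_1}$ bound actually needs is the reverse inequality $t_1\le -z_0\coth(y_0^2)/b$, i.e.\ the statement's direction appears to be a misprint, and the correct route is the substitution argument above under the mild extra condition $-z_0(\coth(y_0^2)-1)/b\ge 1$. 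As described, your telescoping plan closes neither claim. (A minor shared caveat: both you and the paper implicitly need smallness such as $a|z_0|/4<1$ to keep $y_t>0$; inheriting that from the ambient conditions, as you propose, is acceptable.)
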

\begin{proof}
    From the condition we see that $z_{0}<0$ and $z_{t}$ is an increasing sequence ($z_{t}\geq z_{0}$). Besides, as $y_{0}>0$, during the period where $z_{t}\leq 0$, we see that $y_{t}$ is monotonically decreasing. Then by $(2 + e^{-2{y_{t}}^{2}} + e^{2{y_{t}}^{2}})^{-1} \leq 1/4$ as well as Comparison Theorem, it's obvious that the continuous coupled ODE in Eq.(\ref{eq: lower bound of y_t}) is the lower bound of $y_{t}$. Then one can readily obtain the result by solving the ODE.
\end{proof}
\begin{lemma}
    (Coupled ODE System 2). Suppose that there are two coupled sequences $y_{t}$, $z_{t}$, which are the sequences after $t_{1}$ in Lemma \ref{lem: ODE sys1}, and $t \geq t_{1}$ follows the iterative formula
    \begin{align*}
        y_{t+1} &= y_{t}+ a z_{t} y_{t} \frac{1}{2 + e^{-2{y_{t}}^{2}} + e^{2{y_{t}}^{2}}} \ell^{\prime}_{t}, & y_{t_{1}} &> 0, & a &> 0, \\
        z_{t+1} &= z_{t} + b \frac{1 - e^{-2{y(t)}^{2}}}{1 + e^{-2{y(t)}^{2}}}  \ell^{\prime}_{t}, & z_{t_{1}} & \geq 0, & b &> 0, 
    \end{align*}
    for some $a, b \geq 0$, and $c' \leq \ell^{\prime}_{t} \leq 1$. Then it holds that
    \[
    \underline{y}(t) \leq  y_{t} \leq \overline{y}(t), \quad \underline{z}(t) \leq z_{t} \leq \overline{z}(t),
    \]
    for all $t \geq t_{1}$. Here, $\overline{y}(t)$, $\underline{y}(t)$, $\overline{z}(t)$, $\underline{z}(t)$ are the unique solutions of the following ODE System respectively
    \[
    \begin{aligned}
         \dfrac{1}{2}(\mathrm{Ei}(2 {\underline{y}(t)}^2) + \mathrm{Ei}(-2 {\underline{y}(t)}^2) + 4 \log(\underline{y}(t))) &=  a b{c'}^{2} \frac{1 - e^{-2{{y}(t_{1})}^{2}}}{1 + e^{-2{{y}(t_{1})}^{2}}}  \frac{(t - {t_{1}})^2}{2} + \frac{1}{2}(\mathrm{Ei}(2 {y_{t_{1}}}^2)+\mathrm{Ei}(-2 {y_{t_{1}}}^2))\\
         & \phantom{=}+ 4\log(y_{t_{1}}),\\
         \underline{z}(t) &= b{c'} \frac{1 - e^{-2{{y}(t_{1})}^{2}}}{1 + e^{-2{{y}(t_{1})}^{2}}} (t - {t_{1}}),\\
        \dfrac{1}{2}(\mathrm{Ei}(2 {\overline{y}(t)}^2) + \mathrm{Ei}(-2 {\overline{y}(t)}^2) + 4 \log(\overline{y}(t))) & = \dfrac{ab (t - {t_{1}})^2}{2} + \dfrac{1}{2}(\mathrm{Ei}(2 {y_{t_{1}}}^2)+\mathrm{Ei}(-2 {y_{t_{1}}}^2)) + 4 \log(y_{t_{1}})\\
       \overline{z}(t)  & = b (t - {t_{1}}),
    \end{aligned}
    \]
    where
    \[
    \mathrm{Ei}(x) = \int_{-\infty}^{x} \frac{e^{t}}{t} \mathrm{d}t = {\gamma}_{\text{Euler}}+\ln x+\exp (x / 2) \sum_{n=1}^{\infty} \frac{(-1)^{n-1} x^n}{n!2^{n-1}} \sum_{k=0}^{\lfloor(n-1) / 2\rfloor} \frac{1}{2 k+1}.
    \]
    \label{lem: ODE sys2}\end{lemma}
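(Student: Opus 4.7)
The plan is to identify the correct Lyapunov function that decouples the $y$-dynamics from its own nonlinearity, verify the stated closed forms are solutions of the corresponding autonomous ODEs, and then sandwich the discrete sequences by these solutions via a two-step comparison (first for $z$, then for $y$).

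First, I would verify the formulas by differentiation. Define
\[
\Phi(y)\;\coloneqq\;\tfrac12\bigl(\mathrm{Ei}(2y^2)+\mathrm{Ei}(-2y^2)+4\log y\bigr),
\]
so that, using $\mathrm{Ei}'(x)=e^x/x$, one has $\Phi'(y)=\bigl(2+e^{2y^2}+e^{-2y^2}\bigr)/y$. A direct computation on the claimed identity $\Phi(\overline y(t))=\Phi(y_{t_1})+\tfrac{ab(t-t_1)^2}{2}$ yields
\[
\dot{\overline y}\;=\;\frac{a\,\overline z\,\overline y}{2+e^{2\overline y^2}+e^{-2\overline y^2}},\qquad \dot{\overline z}\;=\;b,
\]
and analogously for $(\underline y,\underline z)$ with the factor $b$ replaced by $bc'\tfrac{1-e^{-2y_{t_1}^2}}{1+e^{-2y_{t_1}^2}}$ and $a$ by $ac'$. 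Uniqueness follows from the local Lipschitz property of the right-hand sides on the relevant domain.

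Next, I would establish monotonicity of the discrete sequences. Because $y_{t_1}>0$, $z_{t_1}\ge 0$, $a,b>0$ and $\ell'_t\ge c'>0$, induction shows that $y_t$ is non-decreasing and $z_t$ is strictly increasing for $t\ge t_1$. In particular $y_t\ge y_{t_1}$, hence $\tanh(y_t^2)\in\bigl[\tanh(y_{t_1}^2),1\bigr)$. Summing the $z$-recursion then gives the envelope
\[
bc'\,\tfrac{1-e^{-2y_{t_1}^2}}{1+e^{-2y_{t_1}^2}}(t-t_1)\;\le\;z_t\;\le\;b(t-t_1),
\]
which matches $\underline z(t)\le z_t\le \overline z(t)$ on integer grid points.

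The heart of the proof is the $y$-bound. Using a mean-value expansion $\Phi(y_{t+1})-\Phi(y_t)=\Phi'(\xi_t)(y_{t+1}-y_t)$ with $\xi_t\in[y_t,y_{t+1}]$ and substituting the recursion for $y_{t+1}-y_t$, the factor $\Phi'(y_t)$ cancels the $(2+e^{\pm 2y_t^2})$ denominator to leading order, yielding
\[
\Phi(y_{t+1})-\Phi(y_t)\;=\;a\,z_t\,\ell'_t\;+\;R_t,
\]
where the remainder $R_t$ is controlled by $(y_{t+1}-y_t)^2$ times a polynomial in $e^{y_t^2}$. Summing and inserting the $z$-envelopes produces
\[
\Phi(y_{t_1})+\tfrac{abc'^2\tanh(y_{t_1}^2)(t-t_1)^2}{2}-\mathcal R^-_t\;\le\;\Phi(y_t)\;\le\;\Phi(y_{t_1})+\tfrac{ab(t-t_1)^2}{2}+\mathcal R^+_t,
\]
and monotonicity of $\Phi$ on $(0,\infty)$ converts these into the claimed envelopes on $y_t$, provided the remainders $\mathcal R^\pm_t$ are negligible.

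The main obstacle is precisely controlling these remainders: the factor $1/f(y)=2+e^{-2y^2}+e^{2y^2}$ grows super-exponentially in $y^2$, so a crude Taylor bound on $\Phi$ could blow up. The trick is that the per-step increment $y_{t+1}-y_t=a z_t y_t\,\ell'_t\cdot f(y_t)$ carries a compensating factor $f(y_t)$ which decays at exactly the rate that cancels $\Phi''$ when $y_t$ is large, so $R_t$ remains uniformly bounded along the orbit. Verifying this cancellation carefully — and checking that the resulting error is absorbed by the constants appearing in Condition~\ref{Condition} (in particular through the polynomial decay of $\ell'_t$ and the regime $y_t=\Theta(\|\mathbf u\|\sqrt{\log(\cdot)})$ from Lemma~\ref{lem:regulerizing the models}) — is the main technical step, after which the discrete-to-continuous comparison closes the argument.
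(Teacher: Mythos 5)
Your identification of the potential $\Phi(y)=\tfrac12\bigl(\mathrm{Ei}(2y^2)+\mathrm{Ei}(-2y^2)+4\log y\bigr)$ with $\Phi'(y)=(2+e^{2y^2}+e^{-2y^2})/y$, the verification that the displayed implicit formulas solve the extreme-coefficient ODEs, the monotonicity of $(y_t,z_t)$ after $t_1$, and the envelope $bc'\tanh(y_{t_1}^2)(t-t_1)\le z_t\le b(t-t_1)$ all match the intended argument. The gap is in how you close the $y$-bound. This lemma is a standalone preliminary about abstract sequences with free parameters $a,b,c'$, and it is consumed later by the training-dynamics analysis (e.g.\ Lemma \ref{lem:gaussian rate towards a certain attn score}); your plan to absorb the remainders $\mathcal R^\pm_t$ by invoking Condition \ref{Condition}, the decay of $\ell'_t$, and the regime $y_t=\Theta(\|\mathbf{u}\|\sqrt{\log(\cdot)})$ from Lemma \ref{lem:regulerizing the models} imports application-specific facts whose proofs themselves rest on this lemma, so the argument is both out of scope and circular.

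Moreover, even granting magnitude control, the mean-value/telescoping route does not yield the statement as claimed: it gives $\Phi(y_t)$ between the two quadratics only up to $\mathcal R^\pm_t$, whereas the lemma asserts the exact sandwich $\underline y(t)\le y_t\le\overline y(t)$. For that the remainders would need a favorable sign, and they do not have one in general: writing $\Phi(y_{t+1})-\Phi(y_t)=\Phi'(\xi_t)(y_{t+1}-y_t)$ with $\xi_t\in[y_t,y_{t+1}]$, the sign of $R_t=az_t\ell'_t\bigl(\Phi'(\xi_t)/\Phi'(y_t)-1\bigr)$ depends on the monotonicity of $\Phi'$, and $\Phi$ is concave near the origin ($\Phi''(y)=\bigl(4y^2(e^{2y^2}-e^{-2y^2})-(2+e^{2y^2}+e^{-2y^2})\bigr)/y^2<0$ for small $y$) and convex only for larger $y$; your cancellation of $\Phi''$ by the step-size factor controls $|R_t|$ but not its sign, so a leftover positive $\mathcal R^+_t$ would break $y_t\le\overline y(t)$. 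The paper's route avoids Taylor remainders entirely: once $y_t$ is nondecreasing after $t_1$, the time-varying factors satisfy $\tanh(y_{t_1}^2)\le\tanh(y_t^2)<1$ and $c'\le\ell'_t\le1$ uniformly, so the recursion is sandwiched between two recursions with frozen constant coefficients, which are then compared to the corresponding continuous ODEs by the comparison theorem (in the spirit of Lemma \ref{lem:ODE-MLP}); solving those ODEs via your $\Phi$ produces exactly the stated implicit solutions. Rewriting your final step along these lines (and dropping the appeal to downstream lemmas) would repair the proof.
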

    \begin{proof}
    We see that as $z_{t}\geq 0, t \geq t_{1}$, the $y_{t}$ is monotonically increasing. As such, by Comparison Theorem we see that the upper and lower bound of the coupled system would depends on $\frac{1 - e^{-2{y(t)}^{2}}}{1 + e^{-2{y(t)}^{2}}}$ and $\ell^{\prime}_{t}$. Easy to see that
    \[
    \frac{1 - e^{-2{{y}(t_{1})}^{2}}}{1 + e^{-2{{y}(t_{1})}^{2}}} \leq \frac{1 - e^{-2{y(t)}^{2}}}{1 + e^{-2{y(t)}^{2}}} \leq 1,
    \]
    and then collaborating with $c' \leq \ell^{\prime}_{t} \leq 1$ we can obtain the result by solving the ODE. Observing that
    \[
    \begin{aligned}
         \dfrac{\mathrm{d}\underline{y}(t)}{\mathrm{d} t}& = a b {c'}^{2}\frac{1 - e^{-2{{y}(t_{1})}^{2}}}{1 + e^{-2{{y}(t_{1})}^{2}}} \dfrac{(t - {t_{1}}) y(t) \mathrm{d} t}{1 + e^{2{\underline{y}(t)}^2}+ e^{-2{\underline{y}(t)}^2}} \\
         \Leftrightarrow \dfrac{1}{2}(\mathrm{Ei}(2 {\underline{y}(t)}^2) + \mathrm{Ei}(-2 {\underline{y}(t)}^2) + 4 \log(\underline{y}(t)))& =  a b{c'}^{2} \frac{1 - e^{-2{{y}(t_{1})}^{2}}}{1 + e^{-2{{y}(t_{1})}^{2}}}  \dfrac{(t - {t_{1}})^2}{2} + \text{const},\\
         \underline{z}(t) &= b{c'} \frac{1 - e^{-2{{y}(t_{1})}^{2}}}{1 + e^{-2{{y}(t_{1})}^{2}}} (t - {t_{1}}).
    \end{aligned}
    \]
    Thus by the monotonicity the system is unique, which is also ture for the upper bound ODE. The proof is completed.
    \end{proof}

\section{Data Distribution}\label{Section: Appendix Data}
This section provided the detailed formal definitions of the prompt distribution.
\begin{definition} (\textbf{Polysemous Word Model} $(\mathcal{D}_{\boldsymbol{x}}, \mathcal{D}_{\boldsymbol{y}}, \mathcal{D}_{\boldsymbol{z}}, \mathcal{D}_{\xi_{\boldsymbol{x}}}, \mathcal{D}_{\xi_{\boldsymbol{y}}})$ ). We assume there exists $K_1$ concepts of words totally. Specifically, each concept $k_1 \in [K_1]$ is characterized by two semantically-opposite feature vectors separately, denoted as $\boldsymbol{\mu}_{k_1}^{+}$ and $\boldsymbol{\mu}_{k_1}^{-}$, and the label vectors that describe their semantics under the co-concept are $\boldsymbol{q}_{k_1}^{+}$ and $\boldsymbol{q}_{k_1}^{-}$. Our word samples $\boldsymbol{x} \in \mathbb{R}^{d_{\mathcal{X}}}$ and their corresponding labels $ \boldsymbol{y} \in \mathbb{R}^{d_{\mathcal{Y}}}$ are generated i.i.d. from distribution $\mathcal{D}_{\boldsymbol{x}}$ and $ \mathcal{D}_{\boldsymbol{y}}$, which can be written as the following forms via reparameterization:
$$
\begin{aligned}
&{\boldsymbol{z}} \sim \mathcal{D}_{\boldsymbol{z}}, \quad \xi_{\boldsymbol{x}}   \sim \mathcal{D}_{\xi_{\boldsymbol{x}}}=\mathcal{N}(\mathbf{0}, \sigma_{\xi}^2 \mathbf{I}_{d_{\mathcal{X}}}), \quad \xi_{\boldsymbol{y}}   \sim \mathcal{D}_{\xi_{\boldsymbol{y}}}=\mathcal{N}(\mathbf{0}, \sigma_{\xi}^2 \mathbf{I}_{d_{\mathcal{Y}}}),\\
& \boldsymbol{x}=\mathbf{M} {\boldsymbol{z}}+\xi_{\boldsymbol{x}} \sim \mathcal{D}_{\boldsymbol{x}}, \quad \boldsymbol{y}=\mathbf{Q} {\boldsymbol{z}} + \xi_{\boldsymbol{y}} \sim \mathcal{D}_{\boldsymbol{y}},
\end{aligned}
$$

where ${\boldsymbol{z}} \in \mathbb{R}^{K} (K < d_{\mathcal{X}})$. We denote ${\boldsymbol{z}}$ as the sparse latent signal and $\xi$ as the spurious dense noise, and each $\boldsymbol{x}$-$\boldsymbol{y}$ pair are reparameterized by one shared ${\boldsymbol{z}}$. We have the following assumptions on $\mathbf{M}, {\boldsymbol{z}}, \xi$ respectively:
\begin{itemize}

    \item The sparse latent variable ${\boldsymbol{z}} = (z_{1}, \cdots, z_{K}) \in\{0, 1\}^k$ is sampled from $\mathcal{D}_z$. $P(z_j = 1)=\Theta(\dfrac{\log \log K}{K})$. 
    
    \item $ \mathbf{M} = [\boldsymbol{\mu}_1^{+}, \boldsymbol{\mu}_1^{-}, \boldsymbol{\mu}_2^{+}, \boldsymbol{\mu}_2^{-} , \cdots, \boldsymbol{\mu}_{K_1}^{+}, \boldsymbol{\mu}_{K_1}^{-} , \boldsymbol{\nu}_1, \boldsymbol{\nu}_2, \cdots, \boldsymbol{\nu}_{K_2}]  =[M_1, \cdots, M_K] \in \mathbb{R}^{d_{\mathcal{X}} \times K}$ is the feature dictionary matrix, where $\{\boldsymbol{\mu}_{k_1}^{\pm}\}_{k_1=1}^{K_1}$ are concept-relevant features, $\{\boldsymbol{\nu}_{k_2}\}_{k_2=1}^{K_2}$ are concept-irrelevant features, and $\forall k \in[K], \|M_k\|=\|\mathbf{u}\|$. We assume that features of the same concept have positive inner product: $\exists 0 < \kappa_{\boldsymbol{x}} < 1$, $\forall k_1 \in [K_1]$, $0 < \langle \boldsymbol{\mu}_{k_1}^{+}, \boldsymbol{\mu}_{k_1}^{-} \rangle  \leq \kappa_{\boldsymbol{x}} \|\mathbf{u}\|^2$. Meanwhile, we let the features of different concept be orthogonal: $\forall e \in [\pm], e^\prime \in [\pm],  s^\prime \in [K_1], r \neq r^\prime \in [K_2], \mathbf{u} \in \{ \boldsymbol{\mu}_{s^\prime}^{e^\prime}, \boldsymbol{\nu}_r\}$, we have $\langle \boldsymbol{\mu}_s^{e}, \mathbf{u} \rangle = \langle \boldsymbol{\nu}_r, \boldsymbol{\nu}_{r^\prime} \rangle = 0$.
    
    \item $ \mathbf{Q} = [\boldsymbol{q}_1^{+}, \boldsymbol{q}_1^{-}, \boldsymbol{q}_2^{+}, \boldsymbol{q}_2^{-}, \cdots, \boldsymbol{q}_{K_1}^{+}, \boldsymbol{q}_{K_1}^{-}, 0, \cdots 0] \in \mathbb{R}^{d_{\mathcal{Y}} \times K}$ is the corresponding label dictionary matrix, where  $\|\boldsymbol{q}_k^{\pm}\|=\|\mathbf{q}\|$, for $ \forall k \in[K_1]$. Similarly, we let the labels of the same concept to have positive inner product: $\exists 0<\kappa_{\boldsymbol{y}} < 1$, $\forall k_1 \in [K_1]$, $0 < \langle \boldsymbol{q}_{k_1}^{+}, \boldsymbol{q}_{k_1}^{-} \rangle \leq \kappa_{\boldsymbol{y}}\|\mathbf{q}\|^2$, while the labels of different concept to be orthogonal: $\langle \boldsymbol{q}_k^{\pm}, \boldsymbol{q}_{k^\prime}^{\pm} \rangle = 0, \forall k \neq k^\prime \in [K_1]$.
\end{itemize}
\label{Def: Formal Word Model}\end{definition}

\begin{definition}
    (\textbf{Concept-specific Contextual Prompt Distribution}) We consider the case that each prompt is concept-specific (i.e., the multi-concept words in one prompt would at least share one co-concept). Specifically, the chance for selecting each concept as the co-concept of one particular prompt is $\Theta({K_1}^{-1})$, and the chance for selecting the two semantically-opposite vectors of the same concept is $\dfrac{1}{2}$. During training, each prompt $S = \left\{\boldsymbol{x}_1, \boldsymbol{y}_1, \cdots, \boldsymbol{x}_L, \boldsymbol{y}_L, \boldsymbol{x}_{L+1}\right\}$ is sampled from the mixture distribution $\mathcal{D}_{S}$ defined as below. 
\begin{equation}
    \mathcal{D}_{S} =\sum_{k=1}^{K_1}\left(\pi_k^{+} \mathcal{P}_{k, L+1}^{+}+\pi_k^{-} \mathcal{P}_{k, L+1}^{-}\right),
\end{equation}
where $\pi_k^{+}=\pi_k^{-}=\dfrac{1}{2 K_1}$, and the $\mathcal{P}_{k, L+1}^{+}$ and $\mathcal{P}_{k, L+1}^{-}$ are prompt distributions characterized by the $k$-th concept, defined as
$$
\begin{aligned}
& \mathcal{P}_{k, L+1}^{+}=\Big\{ S \mid \boldsymbol{x} \sim \mathcal{D}_{\boldsymbol{x}}, \boldsymbol{y} \sim \mathcal{D}_{\boldsymbol{y}}, P_{L+1, 2k-1}=1, \forall l \in [L+1], j \neq\{2 k-1, k\}, P_{l, j} = \dfrac{1}{K}, \\
& \{ z_{l, 2k-1}=1\} \cup \{ z_{l, 2k}=1\} = \Omega, \{ z_{l, 2k-1}=1\} \cap \{ z_{l, 2k}=1\} = \emptyset, \forall l \in [L],  P_{l, 2k-1}=P_{l, 2k}=\dfrac{1}{2} \Big\},\\
& \mathcal{P}_{k, L+1}^{-}=\Big\{ S \mid \boldsymbol{x} \sim \mathcal{D}_{\boldsymbol{x}}, \boldsymbol{y} \sim \mathcal{D}_{\boldsymbol{y}}, P_{L+1, 2k}=1, \forall l \in [L+1], j \neq\{2 k-1, k\}, P_{l, j} = \dfrac{1}{K}, \\
& \{ z_{l, 2k-1}=1\} \cup \{ z_{l, 2k}=1\} = \Omega, \{ z_{l, 2k-1}=1\} \cap \{ z_{l, 2k}=1\} = \emptyset, \forall l \in [L], P_{l, 2k-1}=P_{l, 2k}=\dfrac{1}{2}\Big\}, \\
\end{aligned}
$$
where  $P_{l, j} \coloneqq  \mathbb{P} \left(z_{l, j}=1\right)$.
$\forall n \in [N]$ where $N$ is the training size, if the training prompt $S_{n}$ is sampled from $\mathcal{P}_{k, L+1}^{e}, e \in [\pm], k \in [K_1]$, then by Definition \ref{Def:Word Model}, the label vector of the query  should contain $\boldsymbol{q}_k^e$, and we call $y_{S_n} = e$ as the real value label of this $k$-th concept prompt. Specifically, for $\forall k \in [K_1]$ we define the index set of training prompts sharing the $k$-th co-concepts as 
\[
\mathcal{V}_k =\mathcal{V}_k^{+} \cup \mathcal{V}_k^{-},
\]
where
$$
\begin{aligned}
& \mathcal{V}_k^{+}=\left\{n  \mid S_{n} \sim \mathcal{P}_{k, L+1}^{+} \right\}, \\
& \mathcal{V}_k^{-}=\left\{n  \mid S_{n} \sim \mathcal{P}_{k, L+1}^{-} \right\}.
\end{aligned}
$$
For sample $\boldsymbol{x}_l$ where $n \in \mathcal{V}_k, k \in [K_1], l \in [L+1]$, we define the index set for its non-zero elements of ${\boldsymbol{z}}_l^n$ besides $z_{2k-1, l}^n$ and $z_{2k, l}^n$, namely $\mathcal{M}_{l}^n \coloneqq \{ k \in [K] \mid z_{l,k}^n = 1, k \notin \{2k-1, 2k \} \}$. Also, for each prompt sharing the $k$-th co-concept, we define the index set of demonstration in the context:
\[
S_{n, k}^+ = \{ l \in [L] \mid n \in \mathcal{V}_k, z_{l,2k-1}^n=1 \}, \quad 
S_{n, k}^- = \{ l \in [L] \mid n \in \mathcal{V}_k, z_{l,2k}^n=1 \},
\]
\label{Def: Formal def of prompt distribution}\end{definition}

\section{Model details: Attention Part}\label{Appendix: Attention}
In this section, we provide several important definitions and compute the original gradients of attention. \par

\begin{lemma}
    (Contributing and Misleading Neurons)
\begin{equation}
\begin{aligned}
    & \mathcal{W}_{k, n}^{+} (t) = \{ i \in [m] \mid n \in \mathcal{V}_k^+,  {\mathds{1}_{O_{(i)}}^n}^{(t)}> 0\}, \quad \mathcal{U}_{k, n}^{+} (t) = \{ i \in [m] \mid n \in \mathcal{V}_k^+,  \mathbf{r}_i \cdot {\mathds{1}_{O_{(i)}}^n}^{(t)}> 0\},\\
    & \mathcal{W}_{k, n}^{-} (t) = \{ i \in [m] \mid n \in \mathcal{V}_k^-,  {\mathds{1}_{O_{(i)}}^n}^{(t)}> 0\}, \quad \mathcal{U}_{k, n}^{-} (t) = \{ i \in [m] \mid n \in \mathcal{V}_k^-,  \mathbf{r}_i \cdot {\mathds{1}_{O_{(i)}}^n}^{(t)} < 0\}.
\end{aligned}
\end{equation}

\end{lemma}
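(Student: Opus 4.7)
The plan is to treat this lemma as a structural definition that, implicit in the statement, carries three verifiable claims worth proving: the inclusion $\mathcal{U}_{k,n}^{\pm}(t) \subseteq \mathcal{W}_{k,n}^{\pm}(t)$, quantitative cardinality bounds at initialization, and persistence of the partition throughout training. For the inclusion, since $\mathbf{r}_i \in \{-1/m,+1/m\}$ is nonzero and $\mathds{1}_{O_{(i)}}^n$ is $\{0,1\}$-valued, the conditions $\mathbf{r}_i \cdot \mathds{1}_{O_{(i)}}^n > 0$ and $\mathbf{r}_i \cdot \mathds{1}_{O_{(i)}}^n < 0$ both force $\mathds{1}_{O_{(i)}}^n = 1$, so the ``contributing'' sets $\mathcal{U}$ are genuinely nested inside the ``activated'' sets $\mathcal{W}$. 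The complements $\mathcal{W}_{k,n}^{\pm}(t) \setminus \mathcal{U}_{k,n}^{\pm}(t)$ then capture the ``misleading'' neurons that fire but push $f(\mathbf{H};\Psi)$ in the direction opposite to $y_{S_n}$, consistent with the naming convention used in subsequent gradient analyses.

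For the initialization cardinalities, I would condition on the event in Lemma~\ref{lem:initialization}. Because $\mathbf{r}_i$ is drawn independently of $\mathbf{W}_O^{\boldsymbol{y}(0)}$ and hence of $\mathds{1}_{O_{(i)}}^{n,(0)}$, each neuron lands in $\mathcal{U}_{k,n}^{\pm}(0)$ with probability $\approx 1/4$ and in $\mathcal{W}_{k,n}^{\pm}(0)$ with probability $\approx 1/2$. Hoeffding's inequality, as already invoked in Eq.(\ref{eq:initial index set}), then yields $\bigl|\,|\mathcal{U}_{k,n}^{\pm}(0)| - m/4\,\bigr| \leq \sqrt{m\log(10K_1/\delta)/2}$ and an analogous bound for $\mathcal{W}_{k,n}^{\pm}(0)$, and a union bound over $k \in [K_1]$ preserves the $1-\delta/5$ guarantee without strengthening the conditions on $m$ in Condition~\ref{Condition}.

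The main obstacle is establishing persistence: namely that for all $0 \leq t \leq T^*$ the compositions of $\mathcal{U}_{k,n}^{\pm}(t)$ and $\mathcal{W}_{k,n}^{\pm}(t)$ remain close to their $t=0$ counterparts. The strategy is to expand $\mathds{1}_{O_{(i)}}^{n,(t)} = \mathds{1}\{\mathbf{W}_{O_{(i,\cdot)}}^{\boldsymbol{y}(t)}\operatorname{attn}(\mathbf{H}_n;\Psi^{(t)}) > 0\}$ using the row-wise decomposition of Lemma~\ref{lem: mainbody decomposition}, and track the sign of the inner product against the dominant scale estimates in Lemma~\ref{lem:regulerizing the models}. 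Since $|\beta_{O_{(i,\cdot)},k}^{(t)}|$ grows to $\Theta(\log(\|\mathbf{q}\|^2/(m\lambda K_1)))$ along the correct semantic direction while the noise projection remains $O(\sigma_\xi \sqrt{d_{\mathcal{Y}}})$ and the initial gap is $\Theta(\sigma_1 \|\mathbf{q}\|)$, a sign analysis restricted to the co-concept $k$ via the idempotent projectors of Eq.(\ref{eq: idempotent decomposition}) shows the activation sign is preserved throughout training. Any residual flips are uniformly small and fit within the $T^*$-admissible error budget used in Theorem~\ref{thm: mainbody}, closing the argument.
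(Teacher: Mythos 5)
The statement you set out to prove is, in the paper, not a theorem at all but a definition: it introduces the activated neuron sets $\mathcal{W}_{k,n}^{\pm}(t)$ and the correctly-contributing subsets $\mathcal{U}_{k,n}^{\pm}(t)$, and the paper attaches no proof to it (the text immediately following simply glosses the names). The only checkable content is the trivial inclusion $\mathcal{U}_{k,n}^{\pm}(t)\subseteq\mathcal{W}_{k,n}^{\pm}(t)$, which your first paragraph handles correctly, since $\mathbf{r}_i\neq 0$ and ${\mathds{1}_{O_{(i)}}^n}^{(t)}\in\{0,1\}$ force ${\mathds{1}_{O_{(i)}}^n}^{(t)}=1$ whenever $\mathbf{r}_i\cdot{\mathds{1}_{O_{(i)}}^n}^{(t)}$ is nonzero. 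Your second paragraph (initialization cardinalities of order $m/4$ with Hoeffding fluctuations) is also fine, but it is not part of this statement and duplicates what the paper already establishes in Lemma \ref{lem:initialization} and Lemma \ref{lem:initial values}.

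The genuine gap is your third claim, ``persistence'' of the partition for all $0\leq t\leq T^{*}$. This is neither asserted by the statement nor consistent with the paper's own analysis: Lemma \ref{lem: evolution scenario} explicitly describes neurons in $(\mathcal{W}_{k,n}^{e}(t)-\mathcal{U}_{k,n}^{e}(t))-\mathcal{U}_{k,n}^{-e}(t)$ being driven to deactivation and neurons migrating between $\mathcal{U}_{k,n}^{e}(t)-(\mathcal{W}_{k,n}^{-e}(t)-\mathcal{U}_{k,n}^{-e}(t))$ and $\mathcal{U}_{k,n}^{e}(t)\cap(\mathcal{W}_{k,n}^{-e}(t)-\mathcal{U}_{k,n}^{-e}(t))$; the sets carry a time index precisely because membership changes, and the downstream arguments rely on that evolution rather than on sign preservation. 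Moreover, the tools you invoke to prove preservation, the decomposition in Lemma \ref{lem: mainbody decomposition} and the scales in Lemma \ref{lem:regulerizing the models}, concern $\mathbb{E}[\Psi^{(t)}]$ taken over the SGD batches, not the per-sample, per-neuron preactivation ${\mathbf{W}_{O_{(i,\cdot)}}^{\boldsymbol{y}}}^{(t)}\operatorname{attn}(\mathbf{H}^n;\Psi^{(t)})$; transferring those expected scales to the sign of each individual ${\mathds{1}_{O_{(i)}}^n}^{(t)}$ would require the concentration and martingale machinery of the later sections, and even that machinery only controls the aggregate output $f$, not each neuron's activation sign. So the correct disposition is: prove the inclusion (one line), treat everything else as definition, and drop the persistence argument.
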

$\mathcal{W}_{k, n} (t)\coloneqq \mathcal{W}_{k, n}^{+} (t) \cup \mathcal{W}_{k, n}^{-} (t)$ are neurons that can be activated, among which $\mathcal{U}_{k, n} (t) \coloneqq \mathcal{U}_{k, n}^{+} (t) \cup \mathcal{U}_{k, n}^{-} (t)$ are neurons that correctly contribute to the prediction. The following lemma computes the original gradients.

\begin{lemma}
    (Gradient Update) Denote 
\begin{equation}
\begin{aligned}
&\mathbf{r}_i = \mathbf{r}[i], \\
&{\ell_{n}^{\prime}}^{(t)} = {\ell^{\prime}}(y_{S_n} \cdot f (\mathbf{H}^n; \Psi^{(t)})), \\
&{(\sigma_{S}^{(t)} )}_{l}^{n} = \operatorname{softmax}\left(\left(\mathbf{W}_K^{(t)} \mathbf{h}_l^n\right)^{\top} \mathbf{W}_Q^{(t)} \mathbf{h}_{L+1}^n\right), \\
&{\mathds{1}_{O_{(i)}}^n}^{(t)} = \mathds{1}(\mathbf{W}_{O_{(i, \cdot)}}^{(t)}  \operatorname{attn}(\mathbf{H}^n ;\Psi^{(t)}) > 0).
\end{aligned}
\end{equation}
    $\nabla_{{\mathbf{W}_Q^{\boldsymbol{x}}}^{(t)}} L_{\mathcal{B}_{t}}(\Psi^{(t)}) \in \mathbb{R}^{d_{\mathcal{X}} \times d_{\mathcal{X}}}$ can be derived as
\begin{equation}
     \frac{1}{B} \sum_{n \in \mathcal{B}_{t}} \left[ y_{S_n}^{(t)} {\ell_{n}^{\prime}}^{(t)} \sum_{i=1}^{m} \mathbf{r}_i  {\mathds{1}_{O_{(i)}}^n}^{(t)}  \sum_{l, j \in [L]} {(\sigma_{S}^{(t)} )}_{l}^{n}  {(\sigma_{S}^{(t)} )}_{j}^{n} (\mathbf{W}_{O_{(i, \cdot)}}^{(t)} \mathbf{W}_{V}^{(t)} \mathbf{h}_l^n) {\mathbf{W}_K^{\boldsymbol{x}}}^{(t)} (\boldsymbol{x}_l^n - \boldsymbol{x}_j^n){\boldsymbol{x}_{L+1}^n}^{\top}  \right] + \lambda {\mathbf{W}_Q^{\boldsymbol{x}}}^{(t)}.
\end{equation}
Similarly, $\nabla_{{\mathbf{W}_K^{\boldsymbol{x}}}^{(t)}} L_{\mathcal{B}_{t}}(\Psi^{(t)})  \in \mathbb{R}^{d_{\mathcal{X}} \times d_{\mathcal{X}}}$ can be derived as
\begin{equation}
     \frac{1}{B} \sum_{n \in \mathcal{B}_{t}} \left[ y_{S_n}^{(t)} {\ell_{n}^{\prime}}^{(t)} \sum_{i=1}^{m} \mathbf{r}_i  {\mathds{1}_{O_{(i)}}^n}^{(t)}  \sum_{l, j \in [L]} {(\sigma_{S}^{(t)} )}_{l}^{n}  {(\sigma_{S}^{(t)} )}_{j}^{n} (\mathbf{W}_{O_{(i, \cdot)}}^{(t)} \mathbf{W}_{V}^{(t)} \mathbf{h}_l^n) {\mathbf{W}_Q^{\boldsymbol{x}}}^{\top}{\boldsymbol{x}_{L+1}^n} {(\boldsymbol{x}_l^n - \boldsymbol{x}_j^n)}^{\top}\right]  + \lambda {\mathbf{W}_K^{\boldsymbol{x}}}^{(t)}.
\end{equation}
\end{lemma}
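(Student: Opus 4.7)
The plan is to derive both gradient formulas by a direct chain-rule computation from the outside of the loss inward through ReLU, then the attention's output, then the softmax, and finally through $\mathbf{W}_Q^{\boldsymbol{x}}$ or $\mathbf{W}_K^{\boldsymbol{x}}$, and then to add the regularizer derivative. First I would write $L_{\mathcal{B}_t}(\Psi)=\frac{1}{B}\sum_n \ell(y_{S_n} f(\mathbf{H}^n;\Psi)) + \tfrac{\lambda}{2}\|\Psi'\|_F^2$ and note that $\partial_{\Psi'}\ell(y_{S_n} f) = \ell'(y_{S_n} f)\, y_{S_n}\, \partial_{\Psi'} f$. Then, pushing through $f = \mathbf{r}^\top \sigma_R(\mathbf{W}_O\,\mathrm{attn}(\mathbf{H}^n;\Psi))$, the (sub)derivative of $\sigma_R$ produces the entrywise indicator ${\mathds{1}_{O_{(i)}}^n}^{(t)}$, and the outer linear factor $\mathbf{r}^\top$ along with $\mathbf{W}_O^{(t)}$ gives the row-wise scalar weights $\mathbf{r}_i \mathbf{W}_{O_{(i,\cdot)}}^{(t)}$. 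Since only the blocks $\mathbf{W}_Q^{\boldsymbol{x}},\mathbf{W}_K^{\boldsymbol{x}},\mathbf{W}_O^{\boldsymbol{y}}$ are trainable and the off-diagonal blocks of $\mathbf{W}_Q,\mathbf{W}_K$ are fixed at zero, the chain rule restricts the $\mathbf{h}_l^n$ slots touched by the attention score to the $\boldsymbol{x}$-coordinate subvectors; this is where the $\boldsymbol{x}_l^n$ and $\boldsymbol{x}_{L+1}^n$ factors in the advertised formulas enter.

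The key substep is the derivative of $\operatorname{attn}(\mathbf{H}^n;\Psi) = \sum_l \mathbf{W}_V \mathbf{h}_l^n\, \sigma_l^n$ with respect to $\mathbf{W}_Q^{\boldsymbol{x}}$ (and analogously $\mathbf{W}_K^{\boldsymbol{x}}$), where $\sigma_l^n = \mathrm{softmax}_l((\mathbf{W}_K \mathbf{h}_l^n)^\top \mathbf{W}_Q \mathbf{h}_{L+1}^n)$. Writing $a_l := \mathbf{h}_l^{n\top}\mathbf{W}_K^\top \mathbf{W}_Q \mathbf{h}_{L+1}^n$, the standard softmax Jacobian gives $\partial \sigma_l / \partial a_j = \sigma_l(\delta_{lj}-\sigma_j)$, and $\partial a_j/\partial \mathbf{W}_Q^{\boldsymbol{x}} = \mathbf{W}_K^{\boldsymbol{x}}\boldsymbol{x}_j^n (\boldsymbol{x}_{L+1}^n)^\top$ after restricting to the $\boldsymbol{x}$-block. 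Combining and using $\sum_j \sigma_j^n = 1$ yields the symmetrized identity
\begin{equation}
\sum_l \mathbf{W}_V \mathbf{h}_l^n\, \sigma_l^n \,\Big(\mathbf{W}_K^{\boldsymbol{x}}\boldsymbol{x}_l^n - \sum_j \sigma_j^n \mathbf{W}_K^{\boldsymbol{x}}\boldsymbol{x}_j^n\Big)(\boldsymbol{x}_{L+1}^n)^\top
= \sum_{l,j}\sigma_l^n\sigma_j^n \mathbf{W}_V \mathbf{h}_l^n \mathbf{W}_K^{\boldsymbol{x}}(\boldsymbol{x}_l^n-\boldsymbol{x}_j^n)(\boldsymbol{x}_{L+1}^n)^\top,
\end{equation}
which is exactly the core $\sigma_l\sigma_j(\boldsymbol{x}_l-\boldsymbol{x}_j)\boldsymbol{x}_{L+1}^\top$ structure appearing in the claim. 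Premultiplying by $\mathbf{r}_i \mathbf{W}_{O_{(i,\cdot)}}^{(t)} {\mathds{1}_{O_{(i)}}^n}^{(t)}$, summing over $i\in[m]$, averaging over the batch with prefactor $y_{S_n}{\ell_n'}^{(t)}/B$, and finally adding $\lambda \mathbf{W}_Q^{\boldsymbol{x},(t)}$ from the regularizer delivers the stated $\nabla_{\mathbf{W}_Q^{\boldsymbol{x}}}L_{\mathcal{B}_t}$. The $\mathbf{W}_K^{\boldsymbol{x}}$ gradient is obtained by the same argument, except that $\partial a_j/\partial \mathbf{W}_K^{\boldsymbol{x}} = \mathbf{W}_Q^{\boldsymbol{x}\top} \boldsymbol{x}_{L+1}^n (\boldsymbol{x}_j^n)^\top$, which flips the outer product and yields ${\mathbf{W}_Q^{\boldsymbol{x}}}^\top \boldsymbol{x}_{L+1}^n(\boldsymbol{x}_l^n-\boldsymbol{x}_j^n)^\top$ in the final expression.

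The main obstacle will be book-keeping rather than any deep idea: correctly tracking the softmax Jacobian so that the symmetric $\sigma_l\sigma_j$ product (rather than a bare $\sigma_l$) emerges, and simultaneously restricting every matrix–vector contraction to the appropriate $\boldsymbol{x}$- or $\boldsymbol{y}$-block implied by the fixed zero off-diagonal structure of $\mathbf{W}_Q,\mathbf{W}_K,\mathbf{W}_V,\mathbf{W}_O$ in the \textbf{Training Setting}. I will also need to be careful that $\sigma_R=\mathrm{ReLU}$ is not differentiable at zero, but using the subgradient $\mathds{1}(\cdot>0)$ (consistent with ${\mathds{1}_{O_{(i)}}^n}^{(t)}$ as defined) resolves this, and this is the convention used implicitly throughout the paper. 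Everything else is routine linearity and transposition, and yields the two displayed formulas.
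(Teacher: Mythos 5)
Your derivation is correct and is essentially the computation the paper itself relies on (the paper states this lemma as a direct chain-rule calculation without a separate proof): ReLU subgradient giving ${\mathds{1}_{O_{(i)}}^n}^{(t)}$, the softmax Jacobian $\sigma_l(\delta_{lj}-\sigma_j)$ combined with $\sum_j \sigma_j^n=1$ to produce the symmetrized $\sigma_l^n\sigma_j^n(\boldsymbol{x}_l^n-\boldsymbol{x}_j^n)$ structure, block restriction from the fixed zero off-diagonal weights, and the added $\lambda$-term. One small bookkeeping caveat: under the same scalar-by-matrix layout you use for the $\mathbf{W}_Q^{\boldsymbol{x}}$ computation, one gets $\partial a_j/\partial \mathbf{W}_K^{\boldsymbol{x}} = \mathbf{W}_Q^{\boldsymbol{x}}\,\boldsymbol{x}_{L+1}^n\,{\boldsymbol{x}_j^n}^{\top}$ (no transpose on $\mathbf{W}_Q^{\boldsymbol{x}}$), so the transpose you carry into the final $\mathbf{W}_K^{\boldsymbol{x}}$-gradient simply mirrors what appears in the paper's statement and is immaterial for the subsequent expectation analysis (where $\mathbb{E}[\mathbf{W}_Q^{\boldsymbol{x}}]$ is symmetric), but you should be aware of the inconsistency.
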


Subsequently, we directly compute the update of the attention matrices along the feature directions as below.

\begin{lemma}
    (Concept Learning of Attention) For $\forall \hat{k} \in [K_1]$, we have the single step of learning of the concept part of the features:
\begin{equation}
\begin{aligned}
    \boldsymbol{a}_{\hat{k}}^{\top}{\mathbf{W}_Q^{\boldsymbol{x}}}^{(t+1)}\boldsymbol{a}_{\hat{k}} - \boldsymbol{a}_{\hat{k}}^{\top}{\mathbf{W}_Q^{\boldsymbol{x}}}^{(t)}\boldsymbol{a}_{\hat{k}} & = -{\eta_{t}} \cdot {\boldsymbol{a}_{\hat{k}}}^{\top} \nabla_{{\mathbf{W}_Q^{\boldsymbol{x}}}^{(t)}} L_{\mathcal{B}_{t}}(\Psi^{(t)}) \boldsymbol{a}_{\hat{k}}\\
    & = -{\eta_{t}} (I_{Q, \boldsymbol{a}_{\hat{k}},  \text{chaos}}^{(t)} + I_{Q, \boldsymbol{a}_{\hat{k}}, \text{contri}}^{(t)} ) - {\eta_{t}} \lambda \boldsymbol{a}_{\hat{k}}^{\top}{\mathbf{W}_Q^{\boldsymbol{x}}}^{(t)}\boldsymbol{a}_{\hat{k}},\\
    \boldsymbol{a}_{\hat{k}}^{\top}{\mathbf{W}_K^{\boldsymbol{x}}}^{(t+1)}\boldsymbol{a}_{\hat{k}} - \boldsymbol{a}_{\hat{k}}^{\top}{\mathbf{W}_K^{\boldsymbol{x}}}^{(t)}\boldsymbol{a}_{\hat{k}} & = -{\eta_{t}} \cdot {\boldsymbol{a}_{\hat{k}}}^{\top} \nabla_{{\mathbf{W}_K^{\boldsymbol{x}}}^{(t)}} L_{\mathcal{B}_{t}}(\Psi^{(t)}) \boldsymbol{a}_{\hat{k}} \\
    & = -{\eta_{t}} (I_{K, \boldsymbol{a}_{\hat{k}},  \text{chaos}}^{(t)} + I_{K, \boldsymbol{a}_{\hat{k}}, \text{contri}}^{(t)} ) - {\eta_{t}} \lambda \boldsymbol{a}_{\hat{k}}^{\top}{\mathbf{W}_K^{\boldsymbol{x}}}^{(t)}\boldsymbol{a}_{\hat{k}},
\end{aligned}
\end{equation}

where $I_{Q, \boldsymbol{a}_{\hat{k}},  \text{chaos}}^{(t)}$ and $I_{Q, \boldsymbol{a}_{\hat{k}}, \text{contri}}^{(t)}$ are defined as below.
\begin{equation}
\begin{aligned}
    & I_{Q, \boldsymbol{a}_{\hat{k}},  \text{chaos}}^{(t)} = \frac{1}{B} \sum_{\substack{  k \neq \hat{k} \in [K_{1}] \\ e \in [\pm] \\ n \in \mathcal{V}_k^e  \cap \mathcal{B}_{t} }} \Big[ e  {\ell_n^{\prime}}^{(t)}  \boldsymbol{a}_{\hat{k}}^{\top}(\xi_{\boldsymbol{x}, L+1}^n + \sum_{r \in \mathcal{M}_{L+1}^n} \mathbf{M}_r)  \sum_{i \in \mathcal{W}_{k, n}^{{{e}}} (t)} \mathbf{r}_i  \sum_{l, j \in [L]} {(\sigma_{S}^{(t)} )}_{l}^{n}  {(\sigma_{S}^{(t)} )}_{j}^{n}\\
    & \phantom{I_{Q, \boldsymbol{a}_{\hat{k}}, \text{chaos}}^{(t)} =}  ({\mathbf{W}_{O_{(i, \cdot)}}^{\boldsymbol{y}}}^{(t)} ( \boldsymbol{q}_k^{y_l^n} + \sum_{s \in \mathcal{M}_{l}^n} \mathbf{Q}_S + \xi_{\boldsymbol{y}, l}^n)) ({\boldsymbol{a}_{\hat{k}}^{\top}{\mathbf{W}_K^{\boldsymbol{x}}}^{(t)} ((y_l^n-y_j^n)\boldsymbol{b}_{k}+\sum_{s \in \mathcal{M}_{l}^n} \mathbf{M}_s+ \xi_{\boldsymbol{x}, l}^n - \sum_{s \in \mathcal{M}_{j}^n} \mathbf{M}_s-\xi_{\boldsymbol{x}, j}^n)}) \Big] \\
    & \phantom{I_{Q, \boldsymbol{a}_{\hat{k}}, \text{chaos}}^{(t)} =} + \frac{1}{B} \sum_{\substack{ \hat{e} \in [\pm] \\ n \in \mathcal{V}_{\hat{k}}^{\hat{e}} \cap \mathcal{B}_{t} }} \Big[ \hat{e} {\ell_n^{\prime}}^{(t)}  (\| \boldsymbol{a}_{\hat{k}} \|^2 + {\boldsymbol{a}_{\hat{k}}}^{\top} (\xi_{\boldsymbol{x}, L+1}^n + \sum_{r \in \mathcal{M}_{L+1}^n} \mathbf{M}_r)) \sum_{i \in \mathcal{W}_{\hat{k}, n}^{{\hat{e}}} (t)} \mathbf{r}_i \{  \sum_{l, j \in [L]} {(\sigma_{S}^{(t)} )}_{l}^{n}  {(\sigma_{S}^{(t)} )}_{j}^{n}  \\
    & \phantom{I_{Q, \boldsymbol{a}_{\hat{k}}, \text{chaos}}^{(t)} =}  ({\mathbf{W}_{O_{(i, \cdot)}}^{\boldsymbol{y}}}^{(t)} (\sum_{s \in \mathcal{M}_{l}^n} \mathbf{Q}_S + \xi_{\boldsymbol{y}, l}^n)) ( {\boldsymbol{a}_{\hat{k}}^{\top}{\mathbf{W}_K^{\boldsymbol{x}}}^{(t)} ((y_l^n-y_j^n)\boldsymbol{b}_{\hat{k}}+\xi_{\boldsymbol{x}, l}^n -  \xi_{\boldsymbol{x}, j}^n)})  \Big], \\
    & I_{Q, \boldsymbol{a}_{\hat{k}}, \text{contri}}^{(t)} = \frac{1}{B} \sum_{\substack{ \hat{e} \in [\pm] \\ n \in \mathcal{V}_{\hat{k}}^{\hat{e}} \cap \mathcal{B}_{t} }} \Big[ {\ell_n^{\prime}}^{(t)}  (\| \boldsymbol{a}_{\hat{k}} \|^2 + {\boldsymbol{a}_{\hat{k}}}^{\top} (\xi_{\boldsymbol{x}, L+1}^n + \sum_{r \in \mathcal{M}_{L+1}^n} \mathbf{M}_r))    \sum_{i \in \mathcal{W}_{\hat{k}, n}^{{\hat{e}}} (t)} \mathbf{r}_i  {\mathbf{W}_{O_{(i, \cdot)}}^{\boldsymbol{y}}}^{(t)}\\
    & \phantom{I_{Q, \boldsymbol{a}_{\hat{k}}, \text{chaos}}^{(t)} =}   \boldsymbol{d}_{\hat{k}}  (\sum_{l \in S_{n, \hat{k}}^{{\hat{e}}}} {(\sigma_{S}^{(t)} )}_{l}^{n} - \sum_{l \in S_{n, \hat{k}}^{-{\hat{e}}}} {(\sigma_{S}^{(t)} )}_{l}^{n}  )\boldsymbol{a}_{\hat{k}}^{\top}{\mathbf{W}_K^{\boldsymbol{x}}}^{(t)}(\hat{e}(y_l^n-y_j^n)\boldsymbol{b}_{\hat{k}}+{ \xi_{\boldsymbol{x}, l}^n } - \sum_{j\in [L]} {(\sigma_{S}^{(t)} )}_{j}^{n}{ \xi_{\boldsymbol{x}, j}^n}) \Big].
\end{aligned}
\label{eq: Q ak chaos and contri}\end{equation}
Similarly, $I_{K, \boldsymbol{a}_{\hat{k}},  \text{chaos}}^{(t)}$ and $I_{K, \boldsymbol{a}_{\hat{k}}, \text{contri}}^{(t)}$ are defined as below.
\begin{equation}
\begin{aligned}
    & I_{K, \boldsymbol{a}_{\hat{k}},  \text{chaos}}^{(t)} = \frac{1}{B} \sum_{\substack{  k \neq \hat{k} \in [K_{1}] \\ e \in [\pm] \\ n \in \mathcal{V}_k^e  \cap \mathcal{B}_{t} }} \Big[ e \cdot {\ell_n^{\prime}}^{(t)} \boldsymbol{a}_{\hat{k}}^{\top}{\mathbf{W}_Q^{\boldsymbol{x}}}^{(t)}{ (\boldsymbol{a}_{k}+e\boldsymbol{b}_{k}+\xi_{\boldsymbol{x}, L+1}^n + \sum_{r \in \mathcal{M}_{L+1}^n} \mathbf{M}_r)}  \sum_{i \in \mathcal{W}_{k, n}^{{{e}}} (t)} \mathbf{r}_i \cdot \sum_{l, j \in [L]} {(\sigma_{S}^{(t)} )}_{l}^{n}  \\
    & \phantom{I_{K, \boldsymbol{a}_{\hat{k}}, \text{chaos}}^{(t)} =}  {(\sigma_{S}^{(t)} )}_{j}^{n}({\mathbf{W}_{O_{(i, \cdot)}}^{\boldsymbol{y}}}^{(t)} ( \boldsymbol{q}_k^{y_l^n} + \sum_{s \in \mathcal{M}_{l}^n} \mathbf{Q}_S + \xi_{\boldsymbol{y}, l}^n)){\boldsymbol{a}_{\hat{k}}}^{\top} (\sum_{s \in \mathcal{M}_{l}^n} \mathbf{M}_s+ \xi_{\boldsymbol{x}, l}^n - \sum_{s \in \mathcal{M}_{j}^n} \mathbf{M}_s-\xi_{\boldsymbol{x}, j}^n)  \Big] \\
    & \phantom{I_{K, \boldsymbol{a}_{\hat{k}}, \text{chaos}}^{(t)} =} + \frac{1}{B} \sum_{\substack{ \hat{e} \in [\pm] \\ n \in \mathcal{V}_{\hat{k}}^{\hat{e}} \cap \mathcal{B}_{t} }} \Big[ \hat{e} {\ell_n^{\prime}}^{(t)} \boldsymbol{a}_{\hat{k}}^{\top}{\mathbf{W}_Q^{\boldsymbol{x}}}^{(t)}   ({ \boldsymbol{a}_{\hat{k}} + e\boldsymbol{b}_{\hat{k}}+ \xi_{\boldsymbol{x}, L+1}^n + \sum_{r \in \mathcal{M}_{L+1}^n} \mathbf{M}_r})  \cdot \sum_{i \in \mathcal{W}_{\hat{k}, n}^{{\hat{e}}} (t)} \mathbf{r}_i  \cdot \\
    & \phantom{I_{K, \boldsymbol{a}_{\hat{k}}, \text{chaos}}^{(t)} =}   \sum_{l, j \in [L]} {(\sigma_{S}^{(t)} )}_{l}^{n}  {(\sigma_{S}^{(t)} )}_{j}^{n} ({\mathbf{W}_{O_{(i, \cdot)}}^{\boldsymbol{y}}}^{(t)} (\sum_{s \in \mathcal{M}_{l}^n} \mathbf{Q}_S + \xi_{\boldsymbol{y}, l}^n)) {\boldsymbol{a}_{\hat{k}}}^{\top}(\xi_{\boldsymbol{x}, l}^n -  \xi_{\boldsymbol{x}, j}^n) \Big], \\
    & I_{K, \boldsymbol{a}_{\hat{k}}, \text{contri}}^{(t)} = \frac{1}{B} \sum_{\substack{ \hat{e} \in [\pm] \\ n \in \mathcal{V}_{\hat{k}}^{\hat{e}} \cap \mathcal{B}_{t} }} \Big[ {\ell_n^{\prime}}^{(t)} \boldsymbol{a}_{\hat{k}}^{\top}{\mathbf{W}_Q^{\boldsymbol{x}}}^{(t)}    ({ \boldsymbol{a}_{\hat{k}} + e\boldsymbol{b}_{\hat{k}}+ \xi_{\boldsymbol{x}, L+1}^n + \sum_{r \in \mathcal{M}_{L+1}^n} \mathbf{M}_r}) \sum_{i \in \mathcal{W}_{\hat{k}, n}^{{\hat{e}}} (t)} \mathbf{r}_i  {\mathbf{W}_{O_{(i, \cdot)}}^{\boldsymbol{y}}}^{(t)} \\
    & \phantom{I_{K, \boldsymbol{a}_{\hat{k}}, \text{chaos}}^{(t)} =} \boldsymbol{d}_{\hat{k}}\{  (\sum_{l \in S_{n, \hat{k}}^{{\hat{e}}}} {(\sigma_{S}^{(t)} )}_{l}^{n} - \sum_{l \in S_{n, \hat{k}}^{-{\hat{e}}}} {(\sigma_{S}^{(t)} )}_{l}^{n} ) ({\boldsymbol{a}_{\hat{k}}}^{\top} \xi_{\boldsymbol{x}, l}^n  - \sum_{j\in [L]} {(\sigma_{S}^{(t)} )}_{j}^{n} {\boldsymbol{a}_{\hat{k}}}^{\top} \xi_{\boldsymbol{x}, j}^n) \} \Big].
\end{aligned}
\label{eq: K ak chaos and contri}\end{equation}
\label{lem: gradient of original concept learning}\end{lemma}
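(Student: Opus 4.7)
My plan is to derive the decomposition by direct substitution of the data model into the gradient expression stated earlier in this appendix, and then to partition the resulting sum by the co-concept of each prompt in $\mathcal{B}_t$. First, starting from $\boldsymbol{a}_{\hat{k}}^\top \nabla_{{\mathbf{W}_Q^{\boldsymbol{x}}}^{(t)}} L_{\mathcal{B}_t}(\Psi^{(t)}) \boldsymbol{a}_{\hat{k}}$, I would expand every in-context word as $\boldsymbol{x}_l^n = \boldsymbol{a}_k + y_l^n \boldsymbol{b}_k + \sum_{s \in \mathcal{M}_l^n} \mathbf{M}_s + \xi_{\boldsymbol{x},l}^n$ where $k$ is the co-concept of the prompt $n$, and analogously $\boldsymbol{q}_k^{y_l^n} = \boldsymbol{c}_k + y_l^n \boldsymbol{d}_k$ for the label; the query $\boldsymbol{x}_{L+1}^n$ admits the same decomposition with the query label $y_{S_n}$. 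I would also rewrite the ReLU derivative $\mathds{1}_{O_{(i)}}^n$ combined with $\mathbf{r}_i$ as the index set $\mathcal{W}_{k,n}^{e}(t)$ defined just above.

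Second, I would split the outer sum $\sum_{n \in \mathcal{B}_t}$ based on the co-concept $k$ of prompt $n$. For $k \neq \hat{k}$, the cross-concept orthogonality from Definition~\ref{Def:Word Model} forces $\boldsymbol{a}_{\hat{k}}^\top(\boldsymbol{a}_k + y_{S_n} \boldsymbol{b}_k) = 0$, so $\boldsymbol{a}_{\hat{k}}^\top \boldsymbol{x}_{L+1}^n$ collapses to the noise and cross-concept residue $\boldsymbol{a}_{\hat{k}}^\top(\xi_{\boldsymbol{x},L+1}^n + \sum_{r \in \mathcal{M}_{L+1}^n} \mathbf{M}_r)$. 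Pairing this with the bilinear factor $(\mathbf{W}_{O_{(i,\cdot)}}^{\boldsymbol{y}} \mathbf{h}_l^n)\,\boldsymbol{a}_{\hat{k}}^\top {\mathbf{W}_K^{\boldsymbol{x}}}^{(t)}(\boldsymbol{x}_l^n - \boldsymbol{x}_j^n)$, expanded using the same substitution, reproduces verbatim the first summand of $I_{Q,\boldsymbol{a}_{\hat{k}},\text{chaos}}^{(t)}$.

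Third, for $n \in \mathcal{V}_{\hat{k}}^{\hat{e}}$ the query gives the nonzero signal $\boldsymbol{a}_{\hat{k}}^\top \boldsymbol{x}_{L+1}^n = \|\boldsymbol{a}_{\hat{k}}\|^2 + \boldsymbol{a}_{\hat{k}}^\top(\xi_{\boldsymbol{x},L+1}^n + \sum_{r \in \mathcal{M}_{L+1}^n} \mathbf{M}_r)$, which accounts for the $\|\boldsymbol{a}_{\hat{k}}\|^2$ factor in both terms. Inside the double sum over $l, j$ I would separate the key factor $\boldsymbol{x}_l^n - \boldsymbol{x}_j^n$ into the co-concept signal $(y_l^n - y_j^n)\boldsymbol{b}_{\hat{k}}$ (since $\boldsymbol{a}_{\hat{k}}$ cancels pairwise and other-concept summands are orthogonal to $\boldsymbol{a}_{\hat{k}} {\mathbf{W}_K^{\boldsymbol{x}}}^{(t)}$ modulo training-dependent residues tracked by the chaos term) and the residual noise $\xi_{\boldsymbol{x},l}^n - \xi_{\boldsymbol{x},j}^n$, and similarly peel off the aligned piece $y_l^n \mathbf{W}_{O_{(i,\cdot)}}^{\boldsymbol{y}} \boldsymbol{d}_{\hat{k}}$ from the value factor $\mathbf{W}_{O_{(i,\cdot)}}^{\boldsymbol{y}} \boldsymbol{y}_l^n$. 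The product of the two aligned pieces produces a sum of the form $\sum_{l,j} (\sigma_S^{(t)})_l^n (\sigma_S^{(t)})_j^n \, y_l^n (y_l^n - y_j^n)$; using the partition $S_{n,\hat{k}}^{+} \sqcup S_{n,\hat{k}}^{-} = [L]$ together with $y_l^n = +1$ on $S_{n,\hat{k}}^{+}$ and $-1$ on $S_{n,\hat{k}}^{-}$, this collapses to the stated weighted difference $\sum_{l \in S_{n,\hat{k}}^{\hat{e}}} (\sigma_S^{(t)})_l^n - \sum_{l \in S_{n,\hat{k}}^{-\hat{e}}} (\sigma_S^{(t)})_l^n$, giving $I_{Q,\boldsymbol{a}_{\hat{k}},\text{contri}}^{(t)}$. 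All remaining cross-products of aligned and non-aligned pieces constitute the second summand of the chaos term, and the $L_2$ regularizer contributes the $\lambda \boldsymbol{a}_{\hat{k}}^\top {\mathbf{W}_Q^{\boldsymbol{x}}}^{(t)} \boldsymbol{a}_{\hat{k}}$ piece. The derivation for $\mathbf{W}_K^{\boldsymbol{x}}$ is symmetric: the roles of query and key swap, so the nonzero query signal is now $\boldsymbol{a}_{\hat{k}}^\top {\mathbf{W}_Q^{\boldsymbol{x}}}^{(t)} \boldsymbol{x}_{L+1}^n$ and the difference $\boldsymbol{a}_{\hat{k}}^\top(\boldsymbol{x}_l^n - \boldsymbol{x}_j^n)$ (without any $\mathbf{W}_K^{\boldsymbol{x}}$) becomes the factor probed by $\boldsymbol{a}_{\hat{k}}$, explaining why only the noise difference and the $\sum_{s \in \mathcal{M}_l^n} \mathbf{M}_s - \sum_{s \in \mathcal{M}_j^n} \mathbf{M}_s$ piece survive in $I_{K,\boldsymbol{a}_{\hat{k}},\text{chaos}}^{(t)}$ while $\boldsymbol{b}_{\hat{k}}$ still carries the signal.

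The main obstacle is bookkeeping rather than insight: one must scrupulously track which noise-noise, noise-signal, and cross-concept residues survive the projection onto $\boldsymbol{a}_{\hat{k}}$, and how the ReLU indicator interacts with the sign of $\mathbf{r}_i$ so that the sum over neurons collapses to $\sum_{i \in \mathcal{W}_{\hat{k},n}^{\hat{e}}(t)} \mathbf{r}_i$. The subtle algebraic step is the telescoping of $y_l^n(y_l^n - y_j^n)(\sigma_S^{(t)})_l^n (\sigma_S^{(t)})_j^n$ over the partition $S_{n,\hat{k}}^{\pm}$ into the correct-attention-weight difference; once this is verified, the rest is a mechanical collection of terms into the two groups displayed in \eqref{eq: Q ak chaos and contri} and \eqref{eq: K ak chaos and contri}.
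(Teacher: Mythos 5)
Your overall route is the same one the paper implicitly takes: the lemma is presented as a direct computation, obtained by substituting the gradient formula of the preceding lemma, expanding every token through the polysemous word model, using cross-concept orthogonality so that the query factor ${\boldsymbol{x}_{L+1}^n}^{\top}\boldsymbol{a}_{\hat{k}}$ reduces to $\|\boldsymbol{a}_{\hat{k}}\|^2$ plus noise/residue on co-concept prompts and to pure noise/residue otherwise, absorbing $\mathbf{r}_i\,{\mathds{1}_{O_{(i)}}^n}^{(t)}$ into the sets $\mathcal{W}_{k,n}^{e}(t)$, and letting the regularizer contribute $\lambda\,\boldsymbol{a}_{\hat{k}}^{\top}\mathbf{W}^{(t)}\boldsymbol{a}_{\hat{k}}$; the $\mathbf{W}_K^{\boldsymbol{x}}$ case is the mirror image with query and key roles swapped. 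So the architecture of your argument matches the paper.

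However, the one identity you single out as "the subtle algebraic step" is false as stated. Writing $p=\sum_{l\in S_{n,\hat{k}}^{+}}{(\sigma_{S}^{(t)})}_{l}^{n}$ and $q=\sum_{l\in S_{n,\hat{k}}^{-}}{(\sigma_{S}^{(t)})}_{l}^{n}$ with $p+q=1$, the aligned-aligned double sum is $\sum_{l,j\in[L]}{(\sigma_{S}^{(t)})}_{l}^{n}{(\sigma_{S}^{(t)})}_{j}^{n}\,y_l^n(y_l^n-y_j^n)=1-(p-q)^2=4pq$, a \emph{product} of the two attention masses, not the difference $\sum_{l\in S_{n,\hat{k}}^{\hat{e}}}{(\sigma_{S}^{(t)})}_{l}^{n}-\sum_{l\in S_{n,\hat{k}}^{-\hat{e}}}{(\sigma_{S}^{(t)})}_{l}^{n}$ (take $p=1$: your claimed value is $1$ while the sum is $0$). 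The difference factor $\hat{e}(p-q)$ instead arises from the cross pairings of the $y_l^n$-weighted value piece $y_l^n\,\mathbf{W}_{O_{(i,\cdot)}}^{\boldsymbol{y}}\boldsymbol{d}_{\hat{k}}$ with $j$-summed key pieces, via $\sum_{l,j}\sigma_l\sigma_j\,y_l^n\,c_j=(p-q)\sum_j\sigma_j c_j$, which is also why the noise enters the displayed contri term as $\xi_{\boldsymbol{x},l}^n-\sum_{j}\sigma_j\xi_{\boldsymbol{x},j}^n$ and why the companion lemma for the $\boldsymbol{b}_{\hat{k}}$ direction carries the product $2(\sum_{S^{+}}\sigma)(\sum_{S^{-}}\sigma)$ rather than a difference. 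Relatedly, you should state explicitly that the value component $\mathbf{W}_{O_{(i,\cdot)}}^{\boldsymbol{y}}\boldsymbol{c}_{\hat{k}}$ pairs with an antisymmetric key difference and therefore cancels identically, $\sum_{l,j}\sigma_l\sigma_j\big[(y_l^n-y_j^n)\boldsymbol{b}_{\hat{k}}+\Delta\mathbf{M}_{lj}+\xi_{\boldsymbol{x},l}^n-\xi_{\boldsymbol{x},j}^n\big]=0$, rather than folding it into the chaos term; this is why no $\boldsymbol{c}_{\hat{k}}$ appears in the co-concept contributions. None of this changes your plan — it is bookkeeping, and the paper's own display leaves the $l,j$ summation partly implicit — but the telescoping step as you wrote it does not hold and must be replaced by the split above before the terms can be matched to \eqref{eq: Q ak chaos and contri} and \eqref{eq: K ak chaos and contri}.
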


\begin{lemma}
    (Label Semantic Learning of Attention)
Also, for $\forall \hat{k} \in [K_1]$, we have the single step of learning of the concept-specific semantically-opposite part of the features:
\begin{equation}
\begin{aligned}
    \boldsymbol{b}_{\hat{k}}^{\top}{\mathbf{W}_Q^{\boldsymbol{x}}}^{(t+1)}\boldsymbol{b}_{\hat{k}} - \boldsymbol{b}_{\hat{k}}^{\top}{\mathbf{W}_Q^{\boldsymbol{x}}}^{(t)}\boldsymbol{b}_{\hat{k}} & = -{\eta_{t}} \cdot {\boldsymbol{b}_{\hat{k}}}^{\top} \nabla_{{\mathbf{W}_Q^{\boldsymbol{x}}}^{(t)}} L_{\mathcal{B}_{t}}(\Psi^{(t)}) \boldsymbol{b}_{\hat{k}}\\
    & = -{\eta_{t}} (I_{Q, \boldsymbol{b}_{\hat{k}},  \text{chaos}}^{(t)} + I_{Q, \boldsymbol{b}_{\hat{k}}, \text{contri}}^{(t)} ) - {\eta_{t}} \lambda \boldsymbol{b}_{\hat{k}}^{\top}{\mathbf{W}_Q^{\boldsymbol{x}}}^{(t)}\boldsymbol{b}_{\hat{k}},\\
    \boldsymbol{b}_{\hat{k}}^{\top}{\mathbf{W}_K^{\boldsymbol{x}}}^{(t+1)}\boldsymbol{b}_{\hat{k}} - \boldsymbol{b}_{\hat{k}}^{\top}{\mathbf{W}_K^{\boldsymbol{x}}}^{(t)}\boldsymbol{b}_{\hat{k}} & = -{\eta_{t}} \cdot {\boldsymbol{b}_{\hat{k}}}^{\top} \nabla_{{\mathbf{W}_K^{\boldsymbol{x}}}^{(t)}} L_{\mathcal{B}_{t}}(\Psi^{(t)}) \boldsymbol{b}_{\hat{k}} \\
    & = -{\eta_{t}} (I_{K, \boldsymbol{b}_{\hat{k}},  \text{chaos}}^{(t)} + I_{K, \boldsymbol{b}_{\hat{k}}, \text{contri}}^{(t)} ) - {\eta_{t}} \lambda \boldsymbol{b}_{\hat{k}}^{\top}{\mathbf{W}_K^{\boldsymbol{x}}}^{(t)}\boldsymbol{a}_{\hat{k}},
\end{aligned}
\label{eq: gradient update of beta q and k}\end{equation}
where $I_{Q, \boldsymbol{b}_{\hat{k}},  \text{chaos}}^{(t)}$ and $I_{Q, \boldsymbol{b}_{\hat{k}}, \text{contri}}^{(t)}$ are defined as below.
\begin{equation}
\begin{aligned}
    & I_{Q, \boldsymbol{b}_{\hat{k}},  \text{chaos}}^{(t)} = \frac{1}{B} \sum_{\substack{  k \neq \hat{k} \in [K_{1}] \\ e \in [\pm] \\ n \in \mathcal{V}_k^e  \cap \mathcal{B}_{t} }} \Big[ e  {\ell_n^{\prime}}^{(t)}  \cdot {\boldsymbol{b}_{\hat{k}}}^{\top} (\xi_{\boldsymbol{x}, L+1}^n + \sum_{r \in \mathcal{M}_{L+1}^n} \mathbf{M}_r)  \sum_{i \in \mathcal{W}_{k, n}^{{{e}}} (t)} \mathbf{r}_i \cdot \sum_{l, j \in [L]} {(\sigma_{S}^{(t)} )}_{l}^{n}  {(\sigma_{S}^{(t)} )}_{j}^{n}\\
    & \phantom{I_{Q, \boldsymbol{b}_{\hat{k}},  \text{chaos}}^{(t)} =}  ({\mathbf{W}_{O_{(i, \cdot)}}^{\boldsymbol{y}}}^{(t)} (\boldsymbol{q}_k^{y_l^n} + \sum_{s \in \mathcal{M}_{l}^n} \mathbf{Q}_S + \xi_{\boldsymbol{y}, l}^n)) ({\boldsymbol{b}_{\hat{k}}^{\top}{\mathbf{W}_K^{\boldsymbol{x}}}^{(t)} ((y_l^n-y_j^n)\boldsymbol{b}_{k}+\sum_{s \in \mathcal{M}_{l}^n} \mathbf{M}_s+ \xi_{\boldsymbol{x}, l}^n - \sum_{s \in \mathcal{M}_{j}^n} \mathbf{M}_s-\xi_{\boldsymbol{x}, j}^n)}) \Big] \\
    & \phantom{I_{Q, \boldsymbol{b}_{\hat{k}},  \text{chaos}}^{(t)} =} + \frac{1}{B} \sum_{\hat{e} \in [\pm]} \sum_{n \in \mathcal{V}_{\hat{k}}^{\hat{e}} \cap \mathcal{B}_{t} } \Big[  {\ell_n^{\prime}}^{(t)}  ( \|\boldsymbol{b}_{\hat{k}} \|^2 + \hat{e} {\boldsymbol{b}_{\hat{k}}}^{\top} (\xi_{\boldsymbol{x}, L+1}^n + \sum_{r \in \mathcal{M}_{L+1}^n} \mathbf{M}_r)) \sum_{i \in \mathcal{W}_{\hat{k}, n}^{{\hat{e}}} (t)} \mathbf{r}_i \cdot  \\
    & \phantom{I_{Q, \boldsymbol{b}_{\hat{k}},  \text{chaos}}^{(t)} =} \{  \sum_{l \in S_{n, \hat{k}}^{+}} \sum_{j \in S_{n, \hat{k}}^{-}} {(\sigma_{S}^{(t)} )}_{l}^{n}  {(\sigma_{S}^{(t)} )}_{j}^{n} ({\mathbf{W}_{O_{(i, \cdot)}}^{\boldsymbol{y}}}^{(t)} (\sum_{s \in \mathcal{M}_{l}^n} \mathbf{Q}_S + \xi_{\boldsymbol{y}, l}^n)) \boldsymbol{b}_{\hat{k}}^{\top}{\mathbf{W}_K^{\boldsymbol{x}}}^{(t)}(2\boldsymbol{b}_{\hat{k}} + {{\boldsymbol{b}_{\hat{k}}}^{\top} (\xi_{\boldsymbol{x}, l}^n -  \xi_{\boldsymbol{x}, j}^n)})  \\
    & \phantom{I_{Q, \boldsymbol{b}_{\hat{k}},  \text{chaos}}^{(t)} =} + \sum_{l \in S_{n, \hat{k}}^{-}} \sum_{j \in S_{n, \hat{k}}^{+}} {(\sigma_{S}^{(t)} )}_{l}^{n}  {(\sigma_{S}^{(t)} )}_{j}^{n} ({\mathbf{W}_{O_{(i, \cdot)}}^{\boldsymbol{y}}}^{(t)} ( \sum_{s \in \mathcal{M}_{l}^n} \mathbf{Q}_S + \xi_{\boldsymbol{y}, l}^n)) \boldsymbol{b}_{\hat{k}}^{\top}{\mathbf{W}_K^{\boldsymbol{x}}}^{(t)}(-2\boldsymbol{b}_{\hat{k}} + { (\xi_{\boldsymbol{x}, l}^n -  \xi_{\boldsymbol{x}, j}^n)}) \} \Big]\\
    & I_{Q, \boldsymbol{b}_{\hat{k}},  \text{contri}}^{(t)} = \frac{1}{B} \sum_{\hat{e} \in [\pm]} \sum_{n \in \mathcal{V}_{\hat{k}}^{\hat{e}} \cap \mathcal{B}_{t} } \Big[ 2{\ell_n^{\prime}}^{(t)}   (\|\boldsymbol{b}_{\hat{k}} \|^2 +  \hat{e}  (\xi_{\boldsymbol{x}, L+1}^n + \sum_{r \in \mathcal{M}_{L+1}^n} \mathbf{M}_r)) \sum_{i \in \mathcal{W}_{\hat{k}, n}^{{\hat{e}}} (t)}  \mathbf{r}_i   {\mathbf{W}_{O_{(i, \cdot)}}^{\boldsymbol{y}}}^{(t)} \boldsymbol{d}_{\hat{k}}\\
    & \phantom{I_{Q, \boldsymbol{b}_{\hat{k}},  \text{chaos}}^{(t)} =} \boldsymbol{b}_{\hat{k}}^{\top}{\mathbf{W}_K^{\boldsymbol{x}}}^{(t)}\{2 (\sum_{j \in S_{n, \hat{k}}^{+}} {(\sigma_{S}^{(t)} )}_{j}^{n})(\sum_{j \in S_{n, \hat{k}}^{-}} {(\sigma_{S}^{(t)} )}_{j}^{n})\boldsymbol{b}_{\hat{k}} +  \sum_{e \in [\pm]} e \cdot (\sum_{j \in S_{n, \hat{k}}^{-e}} {(\sigma_{S}^{(t)} )}_{j}^{n})(\sum_{l \in S_{n, \hat{k}}^{e}} {(\sigma_{S}^{(t)} )}_{l}^{n})  { \xi_{\boldsymbol{x}, l}^n} \} \Big].\\
\end{aligned}
\label{eq: Q bk chaos and contri}\end{equation}
Similarly, $I_{K, \boldsymbol{b}_{\hat{k}},  \text{chaos}}^{(t)}$ and $I_{K, \boldsymbol{b}_{\hat{k}}, \text{contri}}^{(t)}$ are defined as below.
\begin{equation}
\begin{aligned}
    & I_{K, \boldsymbol{b}_{\hat{k}},  \text{chaos}}^{(t)} = \frac{1}{B} \sum_{\substack{  k \neq \hat{k} \in [K_{1}] \\ e \in [\pm] \\ n \in \mathcal{V}_k^e  \cap \mathcal{B}_{t} }} \Big[ e \cdot {\ell_n^{\prime}}^{(t)} \boldsymbol{b}_{\hat{k}}^{\top}{\mathbf{W}_Q^{\boldsymbol{x}}}^{(t)} (\boldsymbol{a}_{k}+e\boldsymbol{b}_{k}+\xi_{\boldsymbol{x}, L+1}^n + \sum_{r \in \mathcal{M}_{L+1}^n} \mathbf{M}_r) \sum_{i \in \mathcal{W}_{k, n}^{{{e}}} (t)} \mathbf{r}_i \sum_{l, j \in [L]} {(\sigma_{S}^{(t)} )}_{l}^{n}  {(\sigma_{S}^{(t)} )}_{j}^{n} \\
    & \phantom{I_{K, \boldsymbol{b}_{\hat{k}},  \text{chaos}}^{(t)} =}   ({\mathbf{W}_{O_{(i, \cdot)}}^{\boldsymbol{y}}}^{(t)} (\boldsymbol{q}_k^{y_l^n} + \sum_{s \in \mathcal{M}_{l}^n} \mathbf{Q}_S + \xi_{\boldsymbol{y}, l}^n)) ({{\boldsymbol{b}_{\hat{k}}}^{\top} (\sum_{s \in \mathcal{M}_{l}^n} \mathbf{M}_s+ \xi_{\boldsymbol{x}, l}^n - \sum_{s \in \mathcal{M}_{j}^n} \mathbf{M}_s-\xi_{\boldsymbol{x}, j}^n)}) \Big] \\
    & \phantom{I_{K, \boldsymbol{b}_{\hat{k}},  \text{chaos}}^{(t)} =} + \frac{1}{B} \sum_{\hat{e} \in [\pm]} \sum_{n \in \mathcal{V}_{\hat{k}}^{\hat{e}} \cap \mathcal{B}_{t} } \Big[  {\ell_n^{\prime}}^{(t)} \boldsymbol{b}_{\hat{k}}^{\top}{\mathbf{W}_Q^{\boldsymbol{x}}}^{(t)} ( \hat{e}\boldsymbol{a}_{\hat{k}} +\boldsymbol{b}_{\hat{k}} + \hat{e}  (\xi_{\boldsymbol{x}, L+1}^n + \sum_{r \in \mathcal{M}_{L+1}^n} \mathbf{M}_r)) \sum_{i \in \mathcal{W}_{\hat{k}, n}^{{\hat{e}}} (t)} \mathbf{r}_i \cdot  \\
    & \phantom{I_{K, \boldsymbol{b}_{\hat{k}},  \text{chaos}}^{(t)} =} \{  \sum_{l \in S_{n, \hat{k}}^{+}} \sum_{j \in S_{n, \hat{k}}^{-}} {(\sigma_{S}^{(t)} )}_{l}^{n}  {(\sigma_{S}^{(t)} )}_{j}^{n} ({\mathbf{W}_{O_{(i, \cdot)}}^{\boldsymbol{y}}}^{(t)} (\sum_{s \in \mathcal{M}_{l}^n} \mathbf{Q}_S + \xi_{\boldsymbol{y}, l}^n)) (2\|\boldsymbol{b}_{\hat{k}}\|^{2} + {{\boldsymbol{b}_{\hat{k}}}^{\top} (\xi_{\boldsymbol{x}, l}^n -  \xi_{\boldsymbol{x}, j}^n)})  \\
    & \phantom{I_{K, \boldsymbol{b}_{\hat{k}},  \text{chaos}}^{(t)} =} + \sum_{l \in S_{n, \hat{k}}^{-}} \sum_{j \in S_{n, \hat{k}}^{+}} {(\sigma_{S}^{(t)} )}_{l}^{n}  {(\sigma_{S}^{(t)} )}_{j}^{n} ({\mathbf{W}_{O_{(i, \cdot)}}^{\boldsymbol{y}}}^{(t)} ( \sum_{s \in \mathcal{M}_{l}^n} \mathbf{Q}_S + \xi_{\boldsymbol{y}, l}^n)) (-2\|\boldsymbol{b}_{\hat{k}}\|^{2} + {{\boldsymbol{b}_{\hat{k}}}^{\top} (\xi_{\boldsymbol{x}, l}^n -  \xi_{\boldsymbol{x}, j}^n)}) \} \Big]\\
    & I_{K, \boldsymbol{b}_{\hat{k}},  \text{contri}}^{(t)} = \frac{1}{B} \sum_{\hat{e} \in [\pm]} \sum_{n \in \mathcal{V}_{\hat{k}}^{\hat{e}} \cap \mathcal{B}_{t} } \Big[ 2{\ell_n^{\prime}}^{(t)} \boldsymbol{b}_{\hat{k}}^{\top}{\mathbf{W}_Q^{\boldsymbol{x}}}^{(t)} (\hat{e}\boldsymbol{a}_{\hat{k}} +\boldsymbol{b}_{\hat{k}} +  \hat{e}  (\xi_{\boldsymbol{x}, L+1}^n + \sum_{r \in \mathcal{M}_{L+1}^n} \mathbf{M}_r)) \sum_{i \in \mathcal{W}_{\hat{k}, n}^{{\hat{e}}} (t)}  \mathbf{r}_i   {\mathbf{W}_{O_{(i, \cdot)}}^{\boldsymbol{y}}}^{(t)}\boldsymbol{d}_{\hat{k}} \\
    & \phantom{I_{K, \boldsymbol{b}_{\hat{k}},  \text{chaos}}^{(t)} =} \{2 (\sum_{j \in S_{n, \hat{k}}^{+}} {(\sigma_{S}^{(t)} )}_{j}^{n})(\sum_{j \in S_{n, \hat{k}}^{-}} {(\sigma_{S}^{(t)} )}_{j}^{n})\|\boldsymbol{b}_{\hat{k}}\|^{2} +  \sum_{e \in [\pm]} e \cdot (\sum_{j \in S_{n, \hat{k}}^{-e}} {(\sigma_{S}^{(t)} )}_{j}^{n})(\sum_{l \in S_{n, \hat{k}}^{e}} {(\sigma_{S}^{(t)} )}_{l}^{n})  {{\boldsymbol{b}_{\hat{k}}}^{\top} \xi_{\boldsymbol{x}, l}^n} \} \Big].\\
\end{aligned}
\label{eq: K bk chaos and contri}\end{equation}
\label{lem: gradient of original semantic learning}\end{lemma}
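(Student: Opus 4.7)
The plan is to derive both identities in~\eqref{eq: gradient update of beta q and k} by direct expansion: project the SGD step ${\mathbf{W}_Q^{\boldsymbol{x}}}^{(t+1)} = {\mathbf{W}_Q^{\boldsymbol{x}}}^{(t)} - \eta_t \nabla_{\mathbf{W}_Q^{\boldsymbol{x}}} L_{\mathcal{B}_t}$ from the left by $\boldsymbol{b}_{\hat{k}}^\top$ and from the right by $\boldsymbol{b}_{\hat{k}}$, using the explicit gradient from the preceding Gradient Update lemma. The $\lambda {\mathbf{W}_Q^{\boldsymbol{x}}}^{(t)}$ piece immediately gives the regularization term $\lambda \boldsymbol{b}_{\hat{k}}^\top {\mathbf{W}_Q^{\boldsymbol{x}}}^{(t)} \boldsymbol{b}_{\hat{k}}$, and what remains is an average over $\mathcal{B}_t$ of per-sample summands of the form $y_{S_n} {\ell_n'}^{(t)} \mathbf{r}_i \mathds{1}_{O_{(i)}}^{n(t)} (\sigma_S^{(t)})_l^n (\sigma_S^{(t)})_j^n \,(\mathbf{W}_{O_{(i,\cdot)}}^{\boldsymbol{y}(t)} \mathbf{W}_V \mathbf{h}_l^n)\, \boldsymbol{b}_{\hat{k}}^\top {\mathbf{W}_K^{\boldsymbol{x}}}^{(t)}(\boldsymbol{x}_l^n - \boldsymbol{x}_j^n)\, \boldsymbol{b}_{\hat{k}}^\top \boldsymbol{x}_{L+1}^n$. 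The lemma therefore reduces to an algebraic reorganization of this sum, guided by the decomposition of each token into signal, memorization, and noise under the data model of Definition~\ref{Def: Formal Word Model}.

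The engine is the orthogonality $\boldsymbol{b}_{\hat{k}} \perp \boldsymbol{a}_s$ for all $s$, $\boldsymbol{b}_{\hat{k}} \perp \boldsymbol{b}_{s'}$ for $s' \neq \hat{k}$, and $\boldsymbol{b}_{\hat{k}} \perp \boldsymbol{\nu}_r$, which follow from the reparameterization $\boldsymbol{\mu}_k^{\pm} = \boldsymbol{a}_k \pm \boldsymbol{b}_k$ introduced between Theorem~\ref{thm: mainbody} and Proposition~\ref{proposition: main body OOD} together with the cross-concept orthogonality in Definition~\ref{Def: Formal Word Model}. They collapse $\boldsymbol{b}_{\hat{k}}^\top \boldsymbol{x}_{L+1}^n$ to $\hat{e}\|\boldsymbol{b}_{\hat{k}}\|^2 + \boldsymbol{b}_{\hat{k}}^\top(\xi_{\boldsymbol{x},L+1}^n + \sum_{r \in \mathcal{M}_{L+1}^n} \mathbf{M}_r)$ when $n \in \mathcal{V}_{\hat{k}}^{\hat{e}}$ and to only the noise/memorization part when $n \in \mathcal{V}_k$ with $k \neq \hat{k}$. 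Similarly $\boldsymbol{b}_{\hat{k}}^\top \boldsymbol{x}_l^n = y_l^n \|\boldsymbol{b}_{\hat{k}}\|^2 \mathds{1}[l \in S_{n,\hat{k}}^{y_l^n}] + \boldsymbol{b}_{\hat{k}}^\top \xi_{\boldsymbol{x},l}^n$, so $\boldsymbol{b}_{\hat{k}}^\top {\mathbf{W}_K^{\boldsymbol{x}}}^{(t)}(\boldsymbol{x}_l^n - \boldsymbol{x}_j^n)$ carries a non-zero $\pm 2\boldsymbol{b}_{\hat{k}}$ signal only when $(l,j)$ straddles $S_{n,\hat{k}}^+$ and $S_{n,\hat{k}}^-$.

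I then split $\mathcal{B}_t$ into (i) cross-concept prompts $n \in \mathcal{V}_k^e$ with $k \neq \hat{k}$, which have no query signal on the $\boldsymbol{b}_{\hat{k}}$ direction and reproduce the first summand of $I_{Q,\boldsymbol{b}_{\hat{k}},\mathrm{chaos}}^{(t)}$ verbatim, and (ii) on-concept prompts $n \in \mathcal{V}_{\hat{k}}^{\hat{e}}$. For block (ii), I partition $(l,j) \in [L]\times[L]$ into the four pairs $S_{n,\hat{k}}^{\pm}\!\times\! S_{n,\hat{k}}^{\pm}$: the two diagonal pairs kill the $\|\boldsymbol{b}_{\hat{k}}\|^2$ signal in the K-difference factor so that only their noise cross-term survives, landing in the second chaos summand, while the two off-diagonal pairs yield the clean $\pm 2\boldsymbol{b}_{\hat{k}}$ signal. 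On the off-diagonal pairs, the $\mathbf{W}_O \boldsymbol{c}_{\hat{k}}$ part of the MLP factor (arising from $\boldsymbol{q}_{\hat{k}}^{y_l^n} = \boldsymbol{c}_{\hat{k}} + y_l^n \boldsymbol{d}_{\hat{k}}$) cancels across the two off-diagonal pair types by antisymmetry, leaving only $\mathbf{W}_O \boldsymbol{d}_{\hat{k}}$ coupled to $(\sum_{l\in S_{n,\hat{k}}^+}\!\sigma)(\sum_{j\in S_{n,\hat{k}}^-}\!\sigma)$, which is exactly the structure of $I_{Q,\boldsymbol{b}_{\hat{k}},\mathrm{contri}}^{(t)}$; the residual $\mathbf{W}_O \boldsymbol{c}_{\hat{k}}$-noise and $\boldsymbol{d}_{\hat{k}}$-noise cross-terms are absorbed into the remaining pieces of chaos. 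Using $y_{S_n}=\hat{e}$ on block (ii) converts $y_{S_n}\hat{e}\|\boldsymbol{b}_{\hat{k}}\|^2$ into the $\|\boldsymbol{b}_{\hat{k}}\|^2$ prefactor seen in~\eqref{eq: Q bk chaos and contri}. Identity~\eqref{eq: K bk chaos and contri} follows by the same case split after swapping which of $\boldsymbol{x}_{L+1}^n$ and $(\boldsymbol{x}_l^n-\boldsymbol{x}_j^n)$ receives the ${\mathbf{W}_Q^{\boldsymbol{x}}}^{(t)}$ versus ${\mathbf{W}_K^{\boldsymbol{x}}}^{(t)}$ prefactor. The main obstacle is purely bookkeeping: one must align the sign $\hat{e}$ consistently across the query decomposition, the demonstration labels $y_l^n$, the activation set $\mathcal{W}_{\hat{k},n}^{\hat{e}}(t)$, and the partition $S_{n,\hat{k}}^{\pm}$, and then absorb the $\mathcal{M}_l^n$, $\mathcal{M}_{L+1}^n$ tails together with the Gaussian-noise inner products into the chaos in the precise shape displayed in \eqref{eq: Q bk chaos and contri}--\eqref{eq: K bk chaos and contri}.
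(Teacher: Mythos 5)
Your proposal is correct and is essentially the paper's own (implicit) argument: the paper states this lemma without a separate proof precisely because it follows, as you do, by substituting the token decomposition of Definition~\ref{Def: Formal Word Model} into the Gradient Update lemma, projecting on $\boldsymbol{b}_{\hat{k}}$ from both sides, and using the orthogonality of $\boldsymbol{b}_{\hat{k}}$ to $\{\boldsymbol{a}_s\}$, $\{\boldsymbol{b}_{s'}\}_{s'\neq\hat{k}}$, $\{\boldsymbol{\nu}_r\}$ together with the $S_{n,\hat{k}}^{\pm}$ block split of the $(l,j)$ sum to separate the $\pm 2\boldsymbol{b}_{\hat{k}}$ signal (with the $\boldsymbol{c}_{\hat{k}}$ part cancelling and $\boldsymbol{d}_{\hat{k}}$ surviving in the contri term) from the memorization and noise contributions filed under chaos. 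The only nit is bookkeeping: the $\mathbf{W}_O\boldsymbol{d}_{\hat{k}}$–noise cross-terms end up inside $I_{\cdot,\boldsymbol{b}_{\hat{k}},\text{contri}}^{(t)}$ in the displayed formulas rather than in chaos, but this does not affect the identity.
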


\section{Model details: MLP Part}\label{Appendix: MLP}

\begin{lemma} (Tensor Update)
\begin{equation}
\begin{aligned}
    & {\mathbf{W}_{O_{(i, \cdot)}}^{\boldsymbol{y}}}^{(t)} \boldsymbol{c}_{\hat{k}} = \alpha_{O_{(i, \cdot)}, k}^{(0)} - {\eta_{t}} \sum_{t=0}^{T} \nabla_{{\mathbf{W}_{O_{(i, \cdot)}}^{\boldsymbol{y}}}^{(t)}} L_{\mathcal{B}_{t}}(\Psi^{(t)}) \boldsymbol{c}_k, \\
    & {\mathbf{W}_{O_{(i, \cdot)}}^{\boldsymbol{y}}}^{(t)} \boldsymbol{d}_{\hat{k}} = \beta_{O_{(i, \cdot)}, k}^{(0)} - {\eta_{t}} \sum_{t=0}^{T}   \nabla_{{\mathbf{W}_{O_{(i, \cdot)}}^{\boldsymbol{y}}}^{(t)}} L_{\mathcal{B}_{t}}(\Psi^{(t)}) \boldsymbol{d}_k, \\
\end{aligned}
\end{equation}
\end{lemma}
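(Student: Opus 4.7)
The claim is the standard SGD-unrolling identity restricted to the $\mathbf{W}_O^{\boldsymbol{y}}$ block of $\Psi'$ and then projected onto one of the fixed label-dictionary directions $\boldsymbol{c}_{\hat{k}}$ or $\boldsymbol{d}_{\hat{k}}$. My plan has four short steps: (i) invoke the SGD recursion from Algorithm 1; (ii) extract the $i$-th row of $\mathbf{W}_O^{\boldsymbol{y}}$ and right-multiply by the (non-random) vector $\boldsymbol{c}_{\hat{k}}$, respectively $\boldsymbol{d}_{\hat{k}}$; (iii) telescope from $s=0$ to $s=t-1$; (iv) identify the remaining initial term with the coefficient $\alpha_{O_{(i,\cdot)},\hat{k}}^{(0)}$ or $\beta_{O_{(i,\cdot)},\hat{k}}^{(0)}$ via the orthogonal decomposition introduced immediately before Proposition~\ref{proposition: main body OOD}.

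Concretely, the update rule reads ${\mathbf{W}_O^{\boldsymbol{y}}}^{(s+1)} = {\mathbf{W}_O^{\boldsymbol{y}}}^{(s)} - \eta_s \nabla_{\mathbf{W}_O^{\boldsymbol{y}}} L_{\mathcal{B}_s}({\Psi'}^{(s)})$. Reading off the $i$-th row and right-multiplying by the fixed vector $\boldsymbol{c}_{\hat{k}}$ commutes with the subtraction and with the gradient operator (since right-multiplication by a fixed vector is linear), so
\[
{\mathbf{W}_{O_{(i,\cdot)}}^{\boldsymbol{y}}}^{(s+1)} \boldsymbol{c}_{\hat{k}} \;=\; {\mathbf{W}_{O_{(i,\cdot)}}^{\boldsymbol{y}}}^{(s)} \boldsymbol{c}_{\hat{k}} \;-\; \eta_s \bigl(\nabla_{{\mathbf{W}_{O_{(i,\cdot)}}^{\boldsymbol{y}}}^{(s)}} L_{\mathcal{B}_s}({\Psi'}^{(s)})\bigr) \boldsymbol{c}_{\hat{k}}.
\]
Telescoping from $s=0$ to $s=t-1$ collapses the intermediate $\mathbf{W}_{O_{(i,\cdot)}}^{\boldsymbol{y}}$ terms and yields the stated identity, modulo the evident typographic cleanup of the display in the lemma (the summation variable must differ from the upper bound, and the step size must sit inside the sum as $\eta_s$). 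The same computation with $\boldsymbol{d}_{\hat{k}}$ in place of $\boldsymbol{c}_{\hat{k}}$ handles the second equality. Note that the ReLU mask ${\mathds{1}_{O_{(i)}}^n}^{(s)}$ and softmax weights ${(\sigma_S^{(s)})}_l^n$ are absorbed inside the gradient factor and never interact with the outer linear projection, so the identity holds \emph{as an equality of random variables}, not just in expectation.

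For step (iv), Definition~\ref{Def:Word Model} together with the definitions $\boldsymbol{c}_{\hat{k}} = (\boldsymbol{q}_{\hat{k}}^+ + \boldsymbol{q}_{\hat{k}}^-)/2$, $\boldsymbol{d}_{\hat{k}} = (\boldsymbol{q}_{\hat{k}}^+ - \boldsymbol{q}_{\hat{k}}^-)/2$ gives the pairwise orthogonalities $\langle \boldsymbol{c}_{\hat{k}}, \boldsymbol{c}_{k'}\rangle = \langle \boldsymbol{c}_{\hat{k}}, \boldsymbol{d}_{k'}\rangle = 0$ for $k' \neq \hat{k}$, $\langle \boldsymbol{c}_{\hat{k}}, \boldsymbol{d}_{\hat{k}}\rangle = 0$, and $\langle \boldsymbol{c}_{\hat{k}}, \boldsymbol{q}_w^\perp\rangle = 0$ for every $w$. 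Plugging the Lemma~\ref{lem: mainbody decomposition} decomposition of ${\mathbf{W}_{O_{(i,\cdot)}}^{\boldsymbol{y}}}^{(0)}$ into ${\mathbf{W}_{O_{(i,\cdot)}}^{\boldsymbol{y}}}^{(0)}\boldsymbol{c}_{\hat{k}}$ and applying these orthogonalities annihilates every term except the one proportional to $\boldsymbol{c}_{\hat{k}}$, which contributes exactly $\alpha_{O_{(i,\cdot)},\hat{k}}^{(0)}$; the symmetric argument with $\boldsymbol{d}_{\hat{k}}$ yields $\beta_{O_{(i,\cdot)},\hat{k}}^{(0)}$. The $L_2$-regularization term $\lambda\, {\mathbf{W}_{O_{(i,\cdot)}}^{\boldsymbol{y}}}^{(s)}$ that enters the gradient is linear in $\mathbf{W}_O^{\boldsymbol{y}}$ and therefore commutes with the projection as well, so it is automatically folded into the accumulated-gradient sum without requiring a separate argument.

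I anticipate no substantive obstacle: the lemma is pure bookkeeping, and the only care required is (a) to treat the statement's duplicated index $t$ and the misplaced step size $\eta_t$ as typographic slips and to rewrite them as an inner index $s$ with $\eta_s$ inside the sum, and (b) to ensure that the decomposition of $\mathbf{W}_{O_{(i,\cdot)}}^{\boldsymbol{y},(0)}$ used to define $\alpha_{O_{(i,\cdot)},\hat{k}}^{(0)}$ and $\beta_{O_{(i,\cdot)},\hat{k}}^{(0)}$ at $t=0$ is applied to the raw (random) initial matrix, consistent with how these quantities appear in Lemma~\ref{lem:initialization}, rather than to its expectation.
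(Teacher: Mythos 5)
Your proposal is correct and is exactly the bookkeeping the paper itself relies on: the paper states this lemma without proof as the immediate unrolling of the SGD recursion for $\mathbf{W}_{O}^{\boldsymbol{y}}$, projected onto the fixed directions $\boldsymbol{c}_{\hat{k}},\boldsymbol{d}_{\hat{k}}$, with the initial coefficients $\alpha_{O_{(i,\cdot)},\hat{k}}^{(0)},\beta_{O_{(i,\cdot)},\hat{k}}^{(0)}$ defined as raw projections of the random initialization (as in Lemma \ref{lem:initialization}). Your reading of the duplicated summation index and the placement of the step size as typographic slips, and your care in applying the basis decomposition to the raw initial matrix rather than its expectation, match the paper's conventions, so nothing further is needed.
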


\begin{lemma}
    (Gradient Update) $\nabla_{{\mathbf{W}_{O_{(i, \cdot)}}^{\boldsymbol{y}}}^{(t)}} L_{\mathcal{B}_{t}}(\Psi^{(t)}) \in \mathbb{R}^{1 \times (d_{\mathcal{X}}+d_{\mathcal{Y}})}$ can be derived as
\begin{equation}
     \frac{1}{B} \sum_{\substack{  k \neq \hat{k} \in [K_{1}] \\ e \in [\pm] \\ n \in \mathcal{V}_k^e  \cap \mathcal{B}_{t} }} \Big[  {\ell_{n}^{\prime}}^{(t)} \mathbf{r}_i  {\mathds{1}_{O_{(i)}}^n}^{(t)}  \{ (2 \sum_{l \in S_{n, k}^e} {(\sigma_{S}^{(t)} )}_{l}^{n} - 1)  \boldsymbol{d}_k^{\top} + e  \sum_{l\in [L]} {(\sigma_{S}^{(t)} )}_{l}^{n} ( \boldsymbol{c}_k+ \sum_{s \in \mathcal{M}_{l}^n} \mathbf{Q}_S + \xi_{\boldsymbol{y}, l}^n)^{\top} \}\Big]  + \lambda {\mathbf{W}_{O_{(i, \cdot)}}^{\boldsymbol{y}}}^{(t)}.
\label{eq:gradient of WOI}\end{equation}
\end{lemma}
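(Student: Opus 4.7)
The claimed identity is a direct chain-rule computation on the composite $L_{\mathcal{B}_t}(\Psi)=\frac{1}{B}\sum_{n\in\mathcal{B}_t}\ell(y_{S_n}\cdot f(\mathbf{H}^n;\Psi))+\frac{\lambda}{2}\|\Psi'\|_F^2$. The plan is to differentiate the scalar cross-entropy term layer by layer, exploit that the only trainable block of $\mathbf{W}_O$ at play is the $\boldsymbol{y}$-block $\mathbf{W}_O^{\boldsymbol{y}}$ (since $\mathbf{W}_V$ has been truncated so that $\mathbf{W}_V\mathbf{h}_l$ has support only in the $\boldsymbol{y}$-coordinates, with $\mathbf{W}_V^{\boldsymbol{y}}=\mathbf{I}$), and then repackage the batch sum by the concept/class index $(k,e)$ of each prompt using the partition $\mathcal{B}_t=\bigsqcup_{k,e}(\mathcal{V}_k^e\cap\mathcal{B}_t)$.

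\textbf{Step 1 (outer chain rule).} The partial with respect to the $i$-th row $\mathbf{W}_{O_{(i,\cdot)}}^{\boldsymbol{y}}$ factors as $\ell'_n\cdot y_{S_n}\cdot\partial_{\mathbf{W}_{O_{(i,\cdot)}}^{\boldsymbol{y}}}f(\mathbf{H}^n;\Psi)$. Because $f=\mathbf{r}^\top\sigma_R(\mathbf{W}_O\,\operatorname{attn}(\mathbf{H}^n;\Psi))$ and only the $i$-th pre-activation depends on row $i$, the ReLU derivative collapses this to $\mathbf{r}_i\cdot{\mathds{1}_{O_{(i)}}^n}^{(t)}\cdot[\operatorname{attn}(\mathbf{H}^n;\Psi)]^\top$, restricted to the $\boldsymbol{y}$-coordinates since the complementary block of $\mathbf{W}_O$ is fixed at zero.

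\textbf{Step 2 (expand the attention block along the label dictionary).} Using $\mathbf{W}_V^{\boldsymbol{y}}=\mathbf{I}$, the $\boldsymbol{y}$-part of $\operatorname{attn}(\mathbf{H}^n;\Psi)$ equals $\sum_{l\in[L]}(\sigma_S^{(t)})_l^n\,\boldsymbol{y}_l^n$. For a prompt $n\in\mathcal{V}_k^e$, substituting $\boldsymbol{y}_l^n=\boldsymbol{q}_k^{y_l^n}+\sum_{s\in\mathcal{M}_l^n}\mathbf{Q}_s+\xi_{\boldsymbol{y},l}^n$ together with the decomposition $\boldsymbol{q}_k^{\pm}=\boldsymbol{c}_k\pm\boldsymbol{d}_k$ (from Section 3) splits this sum into three pieces: (i) a $\boldsymbol{c}_k$ piece with total weight $\sum_{l\in[L]}(\sigma_S^{(t)})_l^n$; (ii) a $\boldsymbol{d}_k$ piece with signed weight $\bigl(\sum_{l\in S_{n,k}^+}-\sum_{l\in S_{n,k}^-}\bigr)(\sigma_S^{(t)})_l^n$; and (iii) the task-irrelevant $\sum_{s\in\mathcal{M}_l^n}\mathbf{Q}_s$ and Gaussian noise contributions $\xi_{\boldsymbol{y},l}^n$.

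\textbf{Step 3 (absorb the class sign and assemble).} Multiplying by $y_{S_n}=e$ and applying the identity $e\bigl[\sum_{l\in S_{n,k}^+}-\sum_{l\in S_{n,k}^-}\bigr](\sigma_S^{(t)})_l^n=2\sum_{l\in S_{n,k}^e}(\sigma_S^{(t)})_l^n-1$ (a direct consequence of $S_{n,k}^+\sqcup S_{n,k}^-=[L]$ together with the softmax normalization $\sum_{l\in[L]}(\sigma_S^{(t)})_l^n=1$) converts the $\boldsymbol{d}_k$ piece into the displayed form, while $\boldsymbol{c}_k$, the task-irrelevant features and the noise are simply scaled by the overall factor $e$. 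Finally, partitioning the batch sum over $(k,e)$ and adding the regularizer contribution $\lambda\,\mathbf{W}_{O_{(i,\cdot)}}^{\boldsymbol{y}}$ yields exactly \eqref{eq:gradient of WOI}.

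\textbf{Expected obstacle.} There is no genuine analytic difficulty here; the computation is mechanical and parallels the attention-side calculations already carried out in Lemmas~\ref{lem: gradient of original concept learning} and \ref{lem: gradient of original semantic learning}. The only pitfalls are bookkeeping: one must (a) keep track of the row/column convention so that the output lives in $\mathbb{R}^{1\times(d_{\mathcal{X}}+d_{\mathcal{Y}})}$ as stated; (b) use softmax normalization exactly once, to justify writing the $\boldsymbol{c}_k$ coefficient in the suggestive form $\sum_{l\in[L]}(\sigma_S^{(t)})_l^n$ rather than as the bare constant $1$; and (c) make sure the apparent condition ``$k\neq \hat k$'' in the displayed sum is treated as a typographical slip, with the sum running over all $k\in[K_1]$.
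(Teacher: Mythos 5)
Your derivation is correct and is exactly the mechanical chain-rule computation the paper implicitly relies on (it states this lemma without an explicit proof): the ReLU/row-selection step, the substitution $\boldsymbol{y}_l^n=\boldsymbol{q}_k^{y_l^n}+\sum_{s\in\mathcal{M}_l^n}\mathbf{Q}_s+\xi_{\boldsymbol{y},l}^n$ with $\boldsymbol{q}_k^{\pm}=\boldsymbol{c}_k\pm\boldsymbol{d}_k$, and the identity $e\bigl(\sum_{l\in S_{n,k}^+}-\sum_{l\in S_{n,k}^-}\bigr)(\sigma_S^{(t)})_l^n=2\sum_{l\in S_{n,k}^e}(\sigma_S^{(t)})_l^n-1$ (valid since $S_{n,k}^+\sqcup S_{n,k}^-=[L]$ and the softmax normalizes over $[L]$) reproduce the displayed gradient plus the $\lambda\,\mathbf{W}_{O_{(i,\cdot)}}^{\boldsymbol{y}}$ term. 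You also correctly identify the cosmetic issues in the statement (the spurious ``$k\neq\hat k$'' restriction, which should be a sum over all $k\in[K_1]$ as confirmed by the chaos/contri split in the subsequent MLP lemmas, and the stated ambient dimension, which should be $\mathbb{R}^{1\times d_{\mathcal{Y}}}$) as typographical rather than substantive.
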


\begin{lemma}
    (Concept Learning of MLP) For $\forall i \in [m], \hat{k} \in [K_1]$,
    \begin{equation}
    \begin{aligned}
        {\mathbf{W}_{O_{(i, \cdot)}}^{\boldsymbol{y}}}^{(t+1)} \boldsymbol{c}_{\hat{k}} - {\mathbf{W}_{O_{(i, \cdot)}}^{\boldsymbol{y}}}^{(t)} \boldsymbol{c}_{\hat{k}} &=- {\eta_{t}} \cdot  \nabla_{{\mathbf{W}_{O_{(i, \cdot)}}^{\boldsymbol{y}}}^{(t)}} L_{\mathcal{B}_{t}}(\Psi^{(t)}) \boldsymbol{c}_{\hat{k}} \\
        &= -{\eta_{t}} (I_{O_{(i, \cdot)}, \boldsymbol{c}_{\hat{k}}, \text{chaos}}^{(t)} + I_{O_{(i, \cdot)}, \boldsymbol{c}_{\hat{k}}, \text{contri}}^{(t)} ) - {\eta_{t}} \lambda {\mathbf{W}_{O_{(i, \cdot)}}^{\boldsymbol{y}}}^{(t)} \boldsymbol{c}_{\hat{k}},
    \end{aligned}
    \end{equation}
    where $I_{O_{(i, \cdot)}, \boldsymbol{c}_{\hat{k}}, \text{chaos}}^{(t)}$ and $I_{O_{(i, \cdot)}, \boldsymbol{c}_{\hat{k}}, \text{contri}}^{(t)}$ are defined as
    \begin{equation}
        \begin{aligned}
            & I_{O_{(i, \cdot)}, \boldsymbol{c}_{\hat{k}}, \text{chaos}}^{(t)}= \frac{1}{B} \sum_{k \neq \hat{k} \in [K_1]} \sum_{e \in [\pm]} \sum_{n \in \mathcal{V}_k^e  \cap \mathcal{B}_{t}} \Big[ e \cdot {\ell_{n}^{\prime}}^{(t)} \mathbf{r}_i \cdot {\mathds{1}_{O_{(i)}}^n}^{(t)} \sum_{l\in [L]} {(\sigma_{S}^{(t)} )}_{l}^{n} ( \sum_{s \in \mathcal{M}_{l}^n} \mathbf{Q}_S + \xi_{\boldsymbol{y}, l}^n)^{\top} \boldsymbol{c}_{\hat{k}} \Big], \\
            & I_{O_{(i, \cdot)}, \boldsymbol{c}_{\hat{k}}, \text{contri}}^{(t)} = \frac{1}{B} \sum_{{\hat{e}} \in [\pm]} \sum_{n \in \mathcal{V}_{\hat{k}}^{\hat{e}} \cap \mathcal{B}_{t}} \Big[ {\hat{e}} \cdot {\ell_{n}^{\prime}}^{(t)} \mathbf{r}_i \cdot {\mathds{1}_{O_{(i)}}^n}^{(t)}  \|\boldsymbol{c}_{\hat{k}}\|^2 \Big].
        \end{aligned}
    \end{equation}
\label{lem:original MLP concept learning}\end{lemma}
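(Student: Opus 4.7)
\textbf{Proof proposal for Lemma \ref{lem:original MLP concept learning}.} The plan is to start from the SGD update rule ${\mathbf{W}_{O_{(i, \cdot)}}^{\boldsymbol{y}}}^{(t+1)} = {\mathbf{W}_{O_{(i, \cdot)}}^{\boldsymbol{y}}}^{(t)} - \eta_{t}\,\nabla_{{\mathbf{W}_{O_{(i, \cdot)}}^{\boldsymbol{y}}}^{(t)}} L_{\mathcal{B}_{t}}(\Psi^{(t)})$ and simply right-multiply both sides by $\boldsymbol{c}_{\hat{k}}$. Substituting the explicit gradient formula from Eq.(\ref{eq:gradient of WOI}) produces the $\lambda\,{\mathbf{W}_{O_{(i, \cdot)}}^{\boldsymbol{y}}}^{(t)}\boldsymbol{c}_{\hat{k}}$ regularization term directly, so the main work is to decompose $\nabla L_{\mathcal{B}_{t}}\boldsymbol{c}_{\hat{k}}$ into the ``chaos'' piece (summed over $k \neq \hat{k}$) and the ``contri'' piece (summed over $k = \hat{k}$).

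Next I would use the concept-orthogonality of the label dictionary implied by Definition \ref{Def: Formal Word Model}. Writing $\boldsymbol{c}_{k} = (\boldsymbol{q}_k^{+}+\boldsymbol{q}_k^{-})/2$ and $\boldsymbol{d}_{k} = (\boldsymbol{q}_k^{+}-\boldsymbol{q}_k^{-})/2$, the identity $\|\boldsymbol{q}_k^{+}\| = \|\boldsymbol{q}_k^{-}\|=\|\mathbf{q}\|$ yields $\langle \boldsymbol{d}_{k},\boldsymbol{c}_{\hat{k}}\rangle = 0$ for every $k,\hat{k}\in[K_1]$, which kills the entire $(2\sum_{l\in S_{n,k}^{e}}(\sigma_S^{(t)})_l^n - 1)\,\boldsymbol{d}_k^{\top}$ contribution in the gradient regardless of $k$. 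The remaining factor $(\boldsymbol{c}_k + \sum_{s\in\mathcal{M}_l^n}\mathbf{Q}_s + \xi_{\boldsymbol{y},l}^n)^{\top}\boldsymbol{c}_{\hat{k}}$ then splits by cross-concept orthogonality: $\langle \boldsymbol{c}_k,\boldsymbol{c}_{\hat{k}}\rangle = \|\boldsymbol{c}_{\hat{k}}\|^2\,\mathds{1}[k=\hat{k}]$, and the $\mathbf{Q}_s$ terms with $s\notin\{2\hat{k}-1,2\hat{k}\}$ also vanish by the block-orthogonality of $\mathbf{Q}$, so the ``off-concept'' mass and the label-noise $\xi_{\boldsymbol{y},l}^n$ are exactly what constitutes $I_{O_{(i,\cdot)},\boldsymbol{c}_{\hat{k}},\text{chaos}}^{(t)}$.

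Finally, for the surviving $k = \hat{k}$ diagonal term I would use the softmax normalization $\sum_{l\in[L]}(\sigma_S^{(t)})_l^n = 1$ to collapse the inner sum, giving the clean expression $\hat e\,{\ell_n'}^{(t)}\mathbf{r}_i\,{\mathds{1}_{O_{(i)}}^{n}}^{(t)}\|\boldsymbol{c}_{\hat{k}}\|^{2}$, which is precisely $I_{O_{(i,\cdot)},\boldsymbol{c}_{\hat{k}},\text{contri}}^{(t)}$ after summing over $\hat e\in[\pm]$ and $n\in\mathcal{V}_{\hat k}^{\hat e}\cap\mathcal{B}_t$. Combining these three pieces delivers the claimed decomposition.

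This is a bookkeeping computation rather than a conceptual step, so the only real obstacle is keeping the index sets ($\mathcal{V}_k^{e}$, $\mathcal{M}_l^n$, $S_{n,k}^{e}$) straight and correctly routing each orthogonality collapse. In particular, one has to be careful that when $k\neq \hat{k}$ the contribution from $\mathbf{Q}_s$ with $s \in \{2\hat k -1, 2\hat k\}$ does survive (because $\mathcal{M}_l^n$ can include indices of the other concept's labels) and must be absorbed into the chaos term rather than dropped; this is the only place where reading the definitions carelessly could lead to an error.
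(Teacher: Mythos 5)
Your proposal is correct and takes essentially the same route the paper implicitly relies on: expand one SGD step, right-multiply the gradient in Eq.~(\ref{eq:gradient of WOI}) by $\boldsymbol{c}_{\hat{k}}$, and use $\langle \boldsymbol{d}_k,\boldsymbol{c}_{\hat{k}}\rangle=0$ (equal norms $\|\boldsymbol{q}_k^{\pm}\|=\|\mathbf{q}\|$), cross-concept orthogonality of $\mathbf{Q}$, and $\sum_{l\in[L]}{(\sigma_S^{(t)})}_l^n=1$ to route the surviving pieces into the chaos/contri split, correctly noting that for $k\neq\hat{k}$ the $\mathbf{Q}_s$ terms with $s\in\{2\hat{k}-1,2\hat{k}\}$ survive inside the chaos term. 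The only caveat, which your write-up shares with the lemma statement itself, is that the noise projection $\hat{e}\sum_{l}{(\sigma_S^{(t)})}_l^n\langle\xi_{\boldsymbol{y},l}^n,\boldsymbol{c}_{\hat{k}}\rangle$ arising from the $n\in\mathcal{V}_{\hat{k}}^{\hat{e}}$ samples is silently dropped from the contri term, so the ``clean'' diagonal collapse holds only up to that (later noise-controlled) residual.
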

\begin{remark}
(Informal Discussions). Interestingly, the gradient of MLPs' Concept Learning is very large. We have the following situations.
    \begin{itemize}
        \item When the neuron is activated (i.e., $ \{ n \in \mathcal{V}_k^{\hat{e}} \cap \mathcal{B}_{t},$  if $ ({\mathbf{W}_K^{\boldsymbol{x}}}^{(t)}\boldsymbol{b}_{{k}})^{\top}{\mathbf{W}_Q^{\boldsymbol{x}}}^{(t)}\boldsymbol{b}_{{k}} >0 \}$ , and $\alpha_{O_{(i, \cdot)}, {\hat{k}}}^{(t)}+ {\hat{e}} \cdot (2\sum_{l \in S_{n, \hat{k}}^{\hat{e}}} {(\sigma_{S}^{(t)} )}_{l}^{n}-1) {\mathbf{W}_{O_{(i, \cdot)}}^{\boldsymbol{y}}}^{(t)} \boldsymbol{d}_{\hat{k}} > 0$), the neuron is likely to be activated ($i \in \mathcal{W}_{\hat{k}, n}^{{\hat{e}}} (t) $). 
        \begin{enumerate}
            \item If (1) $\mathbf{r}_i \cdot  \hat{e} > 0, i \in \mathcal{W}_{\hat{k}, n}^{{\hat{e}}} (t)  \Leftrightarrow  i \in \mathcal{W}_{\hat{k}, n}^{{\hat{e}}} (t) \cap \mathcal{U}_{\hat{k}, n}^{{\hat{e}}} (t)$, the gradient will advance the ${\mathbf{W}_{O_{(i, \cdot)}}^{\boldsymbol{y}}}^{(t)} \boldsymbol{c}_{\hat{k}}$;
            \item if (2) $\mathbf{r}_i \cdot  \hat{e} < 0, i \in \mathcal{W}_{\hat{k}, n}^{{\hat{e}}} (t)  \Leftrightarrow  i \in \mathcal{W}_{\hat{k}, n}^{{\hat{e}}} (t) - \mathcal{U}_{\hat{k}, n}^{{\hat{e}}} (t)$, the gradient will diminish the ${\mathbf{W}_{O_{(i, \cdot)}}^{\boldsymbol{y}}}^{(t)} \boldsymbol{c}_{\hat{k}}$, thus help deactivate this neuron.
        \end{enumerate}
    \end{itemize}
\end{remark}
\begin{lemma}
    (Label Semantic Learning of MLP) For $\forall i \in [m], \hat{k} \in [K_1]$,
\begin{equation}
\begin{aligned}
    {\mathbf{W}_{O_{(i, \cdot)}}^{\boldsymbol{y}}}^{(t+1)} \boldsymbol{d}_{\hat{k}} - {\mathbf{W}_{O_{(i, \cdot)}}^{\boldsymbol{y}}}^{(t)} \boldsymbol{d}_{\hat{k}} & =- {\eta_{t}} \cdot  \nabla_{{\mathbf{W}_{O_{(i, \cdot)}}^{\boldsymbol{y}}}^{(t)}} L_{\mathcal{B}_{t}}(\Psi^{(t)}) \boldsymbol{d}_{\hat{k}} \\
    & = -{\eta_{t}} (I_{O_{(i, \cdot)}, \boldsymbol{d}_{\hat{k}}, \text{chaos}}^{(t)} + I_{O_{(i, \cdot)}, \boldsymbol{d}_{\hat{k}}, \text{contri}}^{(t)} ) - {\eta_{t}} \lambda {\mathbf{W}_{O_{(i, \cdot)}}^{\boldsymbol{y}}}^{(t)} \boldsymbol{d}_{\hat{k}},
\end{aligned}
\end{equation}
where $I_{O_{(i, \cdot)}, \boldsymbol{d}_{\hat{k}}, \text{chaos}}^{(t)}$ and $I_{O_{(i, \cdot)}, \boldsymbol{d}_{\hat{k}}, \text{contri}}^{(t)}$ are defined as
\begin{equation}
    \begin{aligned}
        & I_{O_{(i, \cdot)}, \boldsymbol{d}_{\hat{k}}, \text{chaos}}^{(t)}= \frac{1}{B} \sum_{k \in [K_1]} \sum_{e \in [\pm]} \sum_{n \in \mathcal{V}_k^e  \cap \mathcal{B}_{t}} \Big[ e \cdot {\ell_{n}^{\prime}}^{(t)} \mathbf{r}_i \cdot {\mathds{1}_{O_{(i)}}^n}^{(t)} \sum_{l\in [L]} {(\sigma_{S}^{(t)} )}_{l}^{n} ( \sum_{s \in \mathcal{M}_{l}^n} \mathbf{Q}_S + \xi_{\boldsymbol{y}, l}^n)^{\top} \boldsymbol{d}_{\hat{k}} \Big], \\
        & I_{O_{(i, \cdot)}, \boldsymbol{d}_{\hat{k}}, \text{contri}}^{(t)} = \frac{1}{B} \sum_{\substack{ \hat{e} \in [\pm] \\ n \in \mathcal{V}_{\hat{k}}^{\hat{e}} \cap \mathcal{B}_{t} }} \Big[ {\ell_{n}^{\prime}}^{(t)} \mathbf{r}_i \cdot {\mathds{1}_{O_{(i)}}^n}^{(t)} (\sum_{l \in S_{n, \hat{k}}^{\hat{e}}} {(\sigma_{S}^{(t)} )}_{l}^{n} - \sum_{l \in S_{n, \hat{k}}^{-\hat{e}}} {(\sigma_{S}^{(t)} )}_{l}^{n} )  \|\boldsymbol{d}_{\hat{k}}\|^2 \Big].
    \end{aligned}
\end{equation}
\label{lem:original MLP semantic learning}\end{lemma}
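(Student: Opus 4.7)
The plan is to start from the mini-batch SGD update applied to the $i$-th row of $\mathbf{W}_O^{\boldsymbol{y}}$, namely ${\mathbf{W}_{O_{(i,\cdot)}}^{\boldsymbol{y}}}^{(t+1)} = {\mathbf{W}_{O_{(i,\cdot)}}^{\boldsymbol{y}}}^{(t)} - \eta_t \nabla_{{\mathbf{W}_{O_{(i,\cdot)}}^{\boldsymbol{y}}}^{(t)}} L_{\mathcal{B}_t}(\Psi^{(t)})$, and then right-multiply by $\boldsymbol{d}_{\hat{k}}$. Substituting the explicit gradient from Eq.~\eqref{eq:gradient of WOI}, each summand over $n\in\mathcal{B}_t\cap\mathcal{V}_k^e$ splits into a ``label-semantic'' piece $(2\sum_{l\in S_{n,k}^e}{(\sigma_S^{(t)})}_l^n-1)\boldsymbol{d}_k^{\top}$ and a ``concept/context/noise'' piece $e\sum_{l\in[L]}{(\sigma_S^{(t)})}_l^n(\boldsymbol{c}_k+\sum_{s\in\mathcal{M}_l^n}\mathbf{Q}_s+\xi_{\boldsymbol{y},l}^n)^{\top}$. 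The dictionary geometry of Definition~\ref{Def: Formal Word Model} will collapse these inner products with $\boldsymbol{d}_{\hat{k}}$ into the two asserted terms, while the $\tfrac{\lambda}{2}\|\Psi'\|_F^2$ regularizer differentiates into the residual $\lambda{\mathbf{W}_{O_{(i,\cdot)}}^{\boldsymbol{y}}}^{(t)}\boldsymbol{d}_{\hat{k}}$.

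\textbf{Orthogonality facts to invoke.} With $\boldsymbol{c}_k=(\boldsymbol{q}_k^{+}+\boldsymbol{q}_k^{-})/2$ and $\boldsymbol{d}_k=(\boldsymbol{q}_k^{+}-\boldsymbol{q}_k^{-})/2$, the equal-norm condition $\|\boldsymbol{q}_k^{+}\|=\|\boldsymbol{q}_k^{-}\|=\|\mathbf{q}\|$ yields $\langle\boldsymbol{c}_k,\boldsymbol{d}_k\rangle=0$, and cross-concept orthogonality of $\mathbf{Q}$ then gives $\boldsymbol{c}_k^{\top}\boldsymbol{d}_{\hat{k}}=0$ for every $k,\hat{k}\in[K_1]$ while $\boldsymbol{d}_k^{\top}\boldsymbol{d}_{\hat{k}}=\|\boldsymbol{d}_{\hat{k}}\|^2\mathds{1}(k=\hat{k})$. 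Hence the $\boldsymbol{c}_k$ piece of the gradient vanishes under projection onto $\boldsymbol{d}_{\hat{k}}$, the $\boldsymbol{d}_k^{\top}$ piece survives only on the diagonal $k=\hat{k}$, and the residual projections $\sum_{s\in\mathcal{M}_l^n}\mathbf{Q}_s^{\top}\boldsymbol{d}_{\hat{k}}$ together with $\xi_{\boldsymbol{y},l}^{n\top}\boldsymbol{d}_{\hat{k}}$ form exactly the summand inside $I_{O_{(i,\cdot)},\boldsymbol{d}_{\hat{k}},\text{chaos}}^{(t)}$. The ReLU-derived indicator ${\mathds{1}_{O_{(i)}}^n}^{(t)}$ and the prefactor $\mathbf{r}_i{\ell_n'}^{(t)}$ ride through unchanged.

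\textbf{Rewriting the diagonal piece.} On the diagonal $k=\hat{k}$, $e=\hat{e}$, I will exploit that the attention softmax is normalized over demonstrations $l\in[L]$, so $\sum_{l\in[L]}{(\sigma_S^{(t)})}_l^n=1$, together with the disjoint partition $[L]=S_{n,\hat{k}}^{\hat{e}}\sqcup S_{n,\hat{k}}^{-\hat{e}}$ guaranteed by Definition~\ref{Def: Formal def of prompt distribution}. This lets me rewrite $2\sum_{l\in S_{n,\hat{k}}^{\hat{e}}}{(\sigma_S^{(t)})}_l^n-1=\sum_{l\in S_{n,\hat{k}}^{\hat{e}}}{(\sigma_S^{(t)})}_l^n-\sum_{l\in S_{n,\hat{k}}^{-\hat{e}}}{(\sigma_S^{(t)})}_l^n$, which matches the coefficient in $I_{O_{(i,\cdot)},\boldsymbol{d}_{\hat{k}},\text{contri}}^{(t)}$ up to the $\|\boldsymbol{d}_{\hat{k}}\|^2$ factor. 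Collecting terms yields the stated identity.

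\textbf{Anticipated obstacle.} This is fundamentally a careful bookkeeping exercise, and I do not expect a genuine technical obstacle. The two places where mistakes could creep in are (i) verifying that $\mathcal{M}_l^n$ excludes the co-concept positions $\{2k-1,2k\}$, so on the diagonal $k=\hat{k}$ the term $\sum_{s\in\mathcal{M}_l^n}\mathbf{Q}_s^{\top}\boldsymbol{d}_{\hat{k}}$ in chaos captures only contributions from prompts where $\hat{k}$ is present but \emph{not} as the co-concept (otherwise those pieces are zero), and (ii) keeping the $e$ factor consistent when merging the $(2\sum-1)$ rewrite with the signed $S^{\hat{e}}$ versus $S^{-\hat{e}}$ partition on the diagonal, which is the sole source of signed cancellation in the contribution term and the most plausible location of a sign slip.
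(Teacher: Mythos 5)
Your proposal is correct and follows essentially the same route as the paper, which treats this lemma as a direct bookkeeping consequence of the SGD update and the gradient formula in Eq.~(\ref{eq:gradient of WOI}): project onto $\boldsymbol{d}_{\hat{k}}$, use $\boldsymbol{c}_k^{\top}\boldsymbol{d}_{\hat{k}}=0$ and $\boldsymbol{d}_k^{\top}\boldsymbol{d}_{\hat{k}}=\|\boldsymbol{d}_{\hat{k}}\|^2\mathds{1}(k=\hat{k})$, and rewrite $2\sum_{l\in S_{n,\hat{k}}^{\hat{e}}}{(\sigma_{S}^{(t)})}_{l}^{n}-1$ via the partition $[L]=S_{n,\hat{k}}^{\hat{e}}\sqcup S_{n,\hat{k}}^{-\hat{e}}$. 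Your two flagged subtleties (that $\mathcal{M}_l^n$ excludes the co-concept indices, so on the diagonal only the noise survives in the chaos term, and the $e^2=1$ cancellation behind the sign of the contribution term) are exactly the points that make the stated identity come out, so the argument is complete.
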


\section{Discussions over Parameter Settings}\label{Appendix: Discussion over Parameter}
Note that we do not have any requirement upon demonstration length $L$ and batch size $B$ for training, thus the training can be really flexible compared with the strict requirement in \cite{li2024training}. The condition on dimensionality $d_{\mathcal{X}}, d_{\mathcal{Y}}$ and the network width $m$ ensure the learning problem is in a sufficiently overparameterized setting where the norm and the inner products of the Gaussian noise and initialized NN can be controlled within a certain range with high probability $1-\delta$, which is standard requirements in recent \textit{feature learning} line-of-research \cite{cao2022benign, zou2023benefits, zou2023understanding, huang2023graph, chen23brobust, kou2023benign, kou2023semisupervise, meng2023benign}. The weak requirement on network width $m$ allows us to conduct a fine-grained analysis based on the network projection length, which is fundamentally differs from the NTK line of research \cite{Jacot2018NTK} that requires an infinitely wide network to perform linear regression over a prescribed feature map. The condition on $\gamma$ ensures the learning step to be small and thus learning process enjoys an approximation to gradient flow rather than the challenging ``Oscillation'' regime \cite{lu2023oscillation}, which is analyzable but not necessary in presenting our theory. The condition on the small $\lambda$ is to ensure that the learning dynamic of Attention and MLP would not stuck at the origin point, and ensure that we can analyze the expected learning dynamic with limited impact of the regularization at the initial stage, which is also adopted in \cite{zou2023understanding}. The condition on $K$ is to control the impact of cross-concept contribution in the Attention's learning dynamic, which can actually be relaxed at the cost of a denser analysis. The condition on $\sigma_{\xi}$ is to ensure that the impact of the norms and inner-products involving the Gaussian Noise on the gradient cannot surpass those in the order of feature's norms, which ensures the gradient flows to be not too noisy and could converge to the expected gradient flow exponentially. Last but not least, the conditions on $\sigma_{1}$ guarantee that the initial beliefs of MLP is small and the gradients of SGD can update the model effectively. The condition of $\sigma_{0}$ is only used when discussing the OOD scenario.

\section{Convergence of Expectation}\label{app: convergence of expectation}
In this section, we assume all the events in the Section \ref{app:Preliminary Lemmas} hold, denoted as $\Upsilon_{\text{Pre}}$. \par

We examine the evolution of $\mathbb{E}({\Psi^{\prime}}^{t}):=\{\mathbb{E}({\mathbf{W}_Q^{\boldsymbol{x}}}^{(t)}), \mathbb{E}({\mathbf{W}_K^{\boldsymbol{x}}}^{(t)}), \mathbb{E}(\mathbf{W}_{O_{(i, \cdot)}}^{(t)}) \}$ at the whole iteration $0 \leq t \leq \underline{t}$, where the expectation $\mathbb{E}[\cdot]$ is taken over the stochastic batches. As such, we can see every stochastic gradient update within each batch as a gradient update upon noise-free and category-balanced concept-specific prompts. \par

\begin{lemma}
    For $\forall k_1 \in [K_1]$, we define $\boldsymbol{a}_{k_1} \coloneqq \dfrac{\boldsymbol{\mu}_{k_1}^{+} + \boldsymbol{\mu}_{k_1}^{-}}{2}$ and $\boldsymbol{b}_{k_1}  \coloneqq \dfrac{\boldsymbol{\mu}_{k_1}^{+} - \boldsymbol{\mu}_{k_1}^{-}}{2}$. By definition, we then have 
    \begin{equation}
    \begin{aligned}
        & \boldsymbol{\mu}_{k_1}^{+} = \boldsymbol{a}_{k_1} + \boldsymbol{b}_{k_1}, \quad \boldsymbol{\mu}_{k_1}^{-} = \boldsymbol{a}_{k_1} - \boldsymbol{b}_{k_1}, \\
        & \langle  \boldsymbol{a}_{k_1}, \boldsymbol{b}_{k_1} \rangle = 0, \quad \{ \boldsymbol{a}_{k_1}, \boldsymbol{b}_{k_1}\} \perp \{ \boldsymbol{a}_{k_1^\prime}, \boldsymbol{b}_{k_1^\prime}\}, \\
        & \langle  \boldsymbol{\mu}_{k_1}^{+}, \boldsymbol{\mu}_{k_1}^{-} \rangle = \| \boldsymbol{a}_{k_1} \|^2 - \| \boldsymbol{b}_{k_1} \|^2, \quad  \| \boldsymbol{\mu}_{k_1}^{\pm} \|^2  =  \| \boldsymbol{a}_{k_1} \|^2 + \| \boldsymbol{b}_{k_1} \|^2 = \| \mathbf{u} \|^2, \\
        & \dfrac{1}{2}\| \mathbf{u} \|^2 < \| \boldsymbol{a}_{k_1} \|^2 \leq \dfrac{\kappa_{\boldsymbol{x}}+1}{2}\| \mathbf{u} \|^2, \quad \dfrac{-\kappa_{\boldsymbol{x}}+1}{2}\| \mathbf{u} \|^2 \leq \| \boldsymbol{b}_{k_1} \|^2 <  \dfrac{1}{2}\| \mathbf{u} \|^2,
    \end{aligned}
    \label{eq:def of a_k, b_k}\end{equation}
for $ \forall k_1^\prime \neq k_1 \in [K_1]$.
\label{lemma: ab of Attention}\end{lemma}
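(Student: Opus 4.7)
The plan is to treat this as a direct algebraic verification, using the parallelogram-style identities together with the assumptions on $\|\boldsymbol{\mu}_{k_1}^{\pm}\|$ and $\langle \boldsymbol{\mu}_{k_1}^+, \boldsymbol{\mu}_{k_1}^- \rangle$ provided in the formal polysemous word model (Definition B in the Appendix, i.e., the formal version of Definition \ref{Def:Word Model}). There is no dynamical or probabilistic content here; every claim is a linear-algebraic consequence of the definitions $\boldsymbol{a}_{k_1} = (\boldsymbol{\mu}_{k_1}^+ + \boldsymbol{\mu}_{k_1}^-)/2$ and $\boldsymbol{b}_{k_1} = (\boldsymbol{\mu}_{k_1}^+ - \boldsymbol{\mu}_{k_1}^-)/2$.

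First, I would verify the decomposition $\boldsymbol{\mu}_{k_1}^{\pm} = \boldsymbol{a}_{k_1} \pm \boldsymbol{b}_{k_1}$ by simply adding and subtracting the defining equations. Next, I would compute $\langle \boldsymbol{a}_{k_1}, \boldsymbol{b}_{k_1}\rangle = \tfrac{1}{4}\langle \boldsymbol{\mu}_{k_1}^+ + \boldsymbol{\mu}_{k_1}^-, \boldsymbol{\mu}_{k_1}^+ - \boldsymbol{\mu}_{k_1}^-\rangle = \tfrac{1}{4}(\|\boldsymbol{\mu}_{k_1}^+\|^2 - \|\boldsymbol{\mu}_{k_1}^-\|^2)$, which vanishes because both have norm $\|\mathbf{u}\|$ by the equal-norm assumption. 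The cross-concept orthogonality $\{\boldsymbol{a}_{k_1},\boldsymbol{b}_{k_1}\} \perp \{\boldsymbol{a}_{k_1'},\boldsymbol{b}_{k_1'}\}$ is then immediate from bilinearity and the stated cross-concept orthogonality $\langle \boldsymbol{\mu}_{k_1}^{e}, \boldsymbol{\mu}_{k_1'}^{e'}\rangle = 0$ for $k_1 \neq k_1'$ and any $e,e' \in \{\pm\}$.

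For the third line, I would expand the inner product and the squared norm in terms of $\boldsymbol{a}_{k_1}$ and $\boldsymbol{b}_{k_1}$ using the orthogonality $\langle \boldsymbol{a}_{k_1}, \boldsymbol{b}_{k_1}\rangle = 0$ already established: the identities $\langle \boldsymbol{\mu}_{k_1}^+, \boldsymbol{\mu}_{k_1}^- \rangle = \|\boldsymbol{a}_{k_1}\|^2 - \|\boldsymbol{b}_{k_1}\|^2$ and $\|\boldsymbol{\mu}_{k_1}^{\pm}\|^2 = \|\boldsymbol{a}_{k_1}\|^2 + \|\boldsymbol{b}_{k_1}\|^2 = \|\mathbf{u}\|^2$ then drop out. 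Finally, for the norm bounds in the last line, I would combine the inequality $0 < \langle \boldsymbol{\mu}_{k_1}^+, \boldsymbol{\mu}_{k_1}^-\rangle \leq \kappa_{\boldsymbol{x}}\|\mathbf{u}\|^2$ (from the definition) with the two identities just derived to get $0 < \|\boldsymbol{a}_{k_1}\|^2 - \|\boldsymbol{b}_{k_1}\|^2 \leq \kappa_{\boldsymbol{x}}\|\mathbf{u}\|^2$ and $\|\boldsymbol{a}_{k_1}\|^2 + \|\boldsymbol{b}_{k_1}\|^2 = \|\mathbf{u}\|^2$. Solving this two-variable linear system yields exactly the claimed ranges for $\|\boldsymbol{a}_{k_1}\|^2$ and $\|\boldsymbol{b}_{k_1}\|^2$.

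Since every step is a routine manipulation, there is essentially no obstacle; the only place I would be careful is in making sure the strict versus non-strict inequalities are correctly tracked (the strict lower bound on $\langle \boldsymbol{\mu}_{k_1}^+, \boldsymbol{\mu}_{k_1}^-\rangle$ gives the strict inequality $\|\boldsymbol{a}_{k_1}\|^2 > \tfrac12 \|\mathbf{u}\|^2$, while the non-strict upper bound $\kappa_{\boldsymbol{x}}\|\mathbf{u}\|^2$ gives $\|\boldsymbol{a}_{k_1}\|^2 \leq \tfrac{\kappa_{\boldsymbol{x}}+1}{2}\|\mathbf{u}\|^2$, and symmetrically for $\|\boldsymbol{b}_{k_1}\|^2$). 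This lemma is then used downstream to justify the idempotent-operator decomposition of Section \ref{sec: idempotent operator}, which requires $\{\boldsymbol{a}_{k_1}, \boldsymbol{b}_{k_1}\}_{k_1}$ to form an orthogonal set with controlled norms inside $\operatorname{span}(\mathbf{M})$.
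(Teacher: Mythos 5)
Your verification is correct and is exactly the argument the paper has in mind: the lemma is asserted ``by definition'' with no written proof, and your expansion of inner products using the equal-norm, within-concept, and cross-concept assumptions of Definition \ref{Def: Formal Word Model}, followed by solving the two-variable system for $\|\boldsymbol{a}_{k_1}\|^2$ and $\|\boldsymbol{b}_{k_1}\|^2$, fills it in faithfully, including the correct tracking of strict versus non-strict inequalities. No gaps.
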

\begin{remark}
We observe that, through this formulation, the shared component \(\boldsymbol{a}_{k_1}\) can be interpreted as the “concept” part of the two features, while the terms \(\pm \boldsymbol{b}_{k_1}\) represent their opposing semantic aspects. The relevance of this modeling is exemplified by Figure 1(b) in \cite{park2024geometrycategoricalhierarchicalconcepts}, where the concept “[Bird]” is composed of orthogonal steering vectors: “plant \(\Rightarrow\) animal” and “mammal \(\Rightarrow\) bird.” These vectors correspond to the concept feature \(\boldsymbol{a}_{k}\) and the semantic label features \(\boldsymbol{b}_{k}\), respectively.
\end{remark}

\textbf{Idempotent Operator Trick}. Define $\mathbb{U} \coloneqq \text{span}(\mathbf{M})$ and its complement space $\mathbb{U}^\perp$. By definition, we know that $\text{dim}(\mathbb{U})=K$ and $\text{dim}(\mathbb{U}^\perp)=d_{\mathcal{X}}-K$. Then we can have a set of standard orthogonal basis for $\mathbb{R}^d$, defined as
\[
\mathcal{\beta}_{\mathbb{U} \oplus \mathbb{U}^\perp} = \{ \frac{\boldsymbol{a}_1}{\|\boldsymbol{a}_1\|}, \frac{\boldsymbol{b}_1}{\|\boldsymbol{b}_1\|}, \frac{\boldsymbol{a}_2}{\|\boldsymbol{a}_2\|}, \frac{\boldsymbol{b}_2}{\|\boldsymbol{b}_2\|}, \cdots, \frac{\boldsymbol{a}_{K_1}}{\|\boldsymbol{a}_{K_1}\|}, \frac{\boldsymbol{b}_{K_1}}{\|\boldsymbol{b}_{K_1}\|}, \frac{\boldsymbol{\nu}_1}{\|\mathbf{u}\|}, \frac{\boldsymbol{\nu}_2}{\|\mathbf{u}\|}, \cdots, \frac{\boldsymbol{\nu}_{K_2}}{\|\mathbf{u}\|}, \boldsymbol{u}_1^\perp, \cdots, \boldsymbol{u}_{d_{\mathcal{X}}-K}^\perp \},
\]
where $\boldsymbol{u}_1^\perp, \cdots, \boldsymbol{u}_{d_{\mathcal{X}}-K}^\perp$ are the standard orthogornal basis of $\mathbb{U}^\perp$. Then we can derive that
\begin{equation}
\sum_{s=1}^{K_1}  \frac{\boldsymbol{a}_s {\boldsymbol{a}_s}^{\top}}{\|\boldsymbol{a}_s\|^2} +  \sum_{s=1}^{K_1}  \frac{\boldsymbol{b}_s {\boldsymbol{b}_s}^{\top}}{\|\boldsymbol{b}_s\|^2} +\sum_{r=1}^{K_2} \frac{\boldsymbol{\nu}_r {\boldsymbol{\nu}_r}^{\top}}{\|\mathbf{u}\|^2} + \sum_{w=1}^{d_{\mathcal{X}}-K} \boldsymbol{u}_w^\perp {\boldsymbol{u}_w^{\perp}}^{\top} = \mathbf{I}_{ d_{\mathcal{X}} \times d_{\mathcal{X}}}.
\end{equation}
\begin{lemma}
    (Partial Statement of Lemma \ref{lem: mainbody decomposition}). $\mathbb{E}[\mathbf{W}_Q^{\boldsymbol{x}}]$ and $\mathbb{E}[\mathbf{W}_K^{\boldsymbol{x}}]$ are identical and symmetric during the whole iterations. We can decompose $\mathbb{E}[{\mathbf{W}_Q^{\boldsymbol{x}}}^{(t)}]$ and $\mathbb{E}[{\mathbf{W}_K^{\boldsymbol{x}}}^{(t)}]$ by (scaled) idempotent matrices.
\begin{equation}
\begin{aligned}
    &\mathbb{E}[{\mathbf{W}_Q^{\boldsymbol{x}}}^{(t)}] =  \sum_{s=1}^{K_1}  \alpha_{Q, s}^{(t)} \cdot \frac{\boldsymbol{a}_s {\boldsymbol{a}_s}^{\top}}{\|\boldsymbol{a}_s\|^4} + \sum_{s=1}^{K_1}  \beta_{Q, s}^{(t)} \cdot \frac{\boldsymbol{b}_s {\boldsymbol{b}_s}^{\top}}{\|\boldsymbol{b}_s\|^4} + \sum_{r=1}^{K_2} \tau_{Q, r}^{(t)} \cdot \frac{\boldsymbol{\nu}_r {\boldsymbol{\nu}_r}^{\top}}{\|\mathbf{u}\|^4} + \sum_{w=1}^{d_{\mathcal{X}}-K} \rho_{Q, w}^{(t)} \cdot \boldsymbol{u}_w^\perp {\boldsymbol{u}_w^{\perp}}^{\top}, \\
    &     \mathbb{E}[{\mathbf{W}_K^{\boldsymbol{x}}}^{(t)}] = \sum_{s=1}^{K_1}  \alpha_{K, s}^{(t)} \cdot \frac{\boldsymbol{a}_s {\boldsymbol{a}_s}^{\top}}{\|\boldsymbol{a}_s\|^4} + \sum_{s=1}^{K_1}  \beta_{K, s}^{(t)} \cdot \frac{\boldsymbol{b}_s {\boldsymbol{b}_s}^{\top}}{\|\boldsymbol{b}_s\|^4} +  \sum_{r=1}^{K_2} \tau_{K, r}^{(t)} \cdot \frac{\boldsymbol{\nu}_r {\boldsymbol{\nu}_r}^{\top}}{\|\mathbf{u}\|^4} + \sum_{w=1}^{d_{\mathcal{X}}-K} \rho_{K, w}^{(t)} \cdot \boldsymbol{u}_w^\perp {\boldsymbol{u}_w^{\perp}}^{\top},\\
\end{aligned}
\label{eq: idempotent matrices of QK}\end{equation}
where $\alpha_{Q, s}^{(t)}$ and $\alpha_{K, s}^{(t)}$ represent the concept learning process, $\beta_{Q, s}^{(t)}$ and $\beta_{K, s}^{(t)}$ represent the concept-specific semantic learning process and $\tau_{Q, r}^{(t)}, \tau_{K, r}^{(t)}, \rho_{Q, w}^{(t)}, \rho_{K, w}^{(t)}$ represent the memorization of the concept irrelevant noise.
\label{lem:symmetry learning}\end{lemma}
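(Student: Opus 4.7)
The plan is to establish the three claims---identity of the two expectations, their symmetry as matrices, and the rank-one basis decomposition---simultaneously by induction on the iteration index $t$. The base case is immediate: the initialization sets $\mathbf{W}_Q^{(0)} = \mathbf{W}_K^{(0)} = \sigma_0 \mathbf{I}_{d_{\mathcal{X}}}$ deterministically, so $\mathbb{E}[\mathbf{W}_Q^{(0)}] = \mathbb{E}[\mathbf{W}_K^{(0)}] = \sigma_0 \mathbf{I}$ is manifestly identical and symmetric, and applying the decomposition of $\mathbf{I}$ in Eq.~(\ref{eq: idempotent decomposition}) reads off the initial coefficients $\alpha_{Q,s}^{(0)} = \alpha_{K,s}^{(0)} = \sigma_0 \|\boldsymbol{a}_s\|^{2}$, $\beta_{Q,s}^{(0)} = \beta_{K,s}^{(0)} = \sigma_0 \|\boldsymbol{b}_s\|^{2}$, $\tau_{\cdot,r}^{(0)} = \sigma_0 \|\mathbf{u}\|^{2}$, $\rho_{\cdot,w}^{(0)} = \sigma_0$.

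For the inductive step, I would take the expectation of the explicit per-sample gradients recorded in Lemmas~\ref{lem: gradient of original concept learning}--\ref{lem: gradient of original semantic learning}. Under the inductive hypothesis, $\mathbb{E}[\mathbf{W}_Q^{(t)}]$ and $\mathbb{E}[\mathbf{W}_K^{(t)}]$ act as identical scalar multiples on each basis vector in $\mathcal{\beta}_{\mathbb{U} \oplus \mathbb{U}^\perp}$, so the bilinear forms $(\mathbf{W}_K\mathbf{h}_l)^\top \mathbf{W}_Q \mathbf{h}_{L+1}$---and hence the softmax weights ${(\sigma_S^{(t)})}_l^n$---become functions only of even-in-$\boldsymbol{b}_k$ quantities of the form $\alpha_{Q,k}\alpha_{K,k}/\|\boldsymbol{a}_k\|^2 + y_l^n e \cdot \beta_{Q,k}\beta_{K,k}/\|\boldsymbol{b}_k\|^2$, making them invariant under the joint sign flip $(e, y_l^n) \to (-e, -y_l^n)$. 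Taking the data expectation and exploiting three symmetries of $\mathcal{D}_S$---(i) the balance $\pi_k^+ = \pi_k^- = 1/(2K_1)$ between $\mathcal{V}_k^{+}$ and $\mathcal{V}_k^{-}$; (ii) the uniform distribution of $y_l^n \in \{+,-\}$ in the demonstrations; (iii) cross-concept orthogonality together with the zero-mean independence of $\xi_{\boldsymbol{x}}, \xi_{\boldsymbol{y}}$---cancels every off-diagonal-in-$\mathcal{\beta}_{\mathbb{U} \oplus \mathbb{U}^\perp}$ contribution and leaves
\[
\mathbb{E}[\nabla_{\mathbf{W}_Q} L_{\mathcal{B}_t}] = \sum_{s=1}^{K_1} g_{s}^{(t)} \frac{\boldsymbol{a}_s \boldsymbol{a}_s^\top}{\|\boldsymbol{a}_s\|^4} + \sum_{s=1}^{K_1} h_{s}^{(t)} \frac{\boldsymbol{b}_s \boldsymbol{b}_s^\top}{\|\boldsymbol{b}_s\|^4} + \sum_{r=1}^{K_2} j_{r}^{(t)} \frac{\boldsymbol{\nu}_r \boldsymbol{\nu}_r^\top}{\|\mathbf{u}\|^4} + \sum_{w=1}^{d_{\mathcal{X}}-K} k_{w}^{(t)} \boldsymbol{u}_w^\perp {\boldsymbol{u}_w^\perp}^\top + \lambda \mathbb{E}[\mathbf{W}_Q^{(t)}],
\]
for explicit scalars $g,h,j,k$ which later produce the coefficient recursions. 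The Q-K identity then transfers because the raw per-sample gradients satisfy $\nabla_{\mathbf{W}_K} = (\nabla_{\mathbf{W}_Q})^\top$ after swapping $\mathbf{W}_Q^{(t)} \leftrightarrow \mathbf{W}_K^{(t)}$ in the prefactor---and these are identical under the hypothesis---while each rank-one idempotent $\boldsymbol{v}\boldsymbol{v}^\top/\|\boldsymbol{v}\|^{4}$ is symmetric, so the transpose leaves every coefficient invariant. Stepping with $\eta_t$ and the trivially-structure-preserving regularizer closes the induction.

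The main obstacle is the bookkeeping behind the vanishing of cross-direction terms. Per sample, the gradient contains products like $\mathbf{W}_K(\boldsymbol{x}_l^n - \boldsymbol{x}_j^n){\boldsymbol{x}_{L+1}^n}^\top$ that freely couple distinct basis directions, so cancellation requires carefully pairing each sample in $\mathcal{V}_k^{+}$ with its reflected counterpart in $\mathcal{V}_k^{-}$ and each $y_l^n = +$ demonstration with its $y_l^n = -$ sibling, all while keeping the sample-dependent factors ${(\sigma_S^{(t)})}_l^n$ and ${\mathds{1}_{O_{(i)}}^n}^{(t)}$---which cannot be pulled out of the expectation---invariant under the joint sign flip. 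The inductive hypothesis on $(\mathbf{W}_Q^{(t)}, \mathbf{W}_K^{(t)})$ is precisely what makes these two factors invariant; the Gaussian noise contributes only through even moments and decouples cleanly, but a complete accounting of all cross terms appearing in Eqs.~(\ref{eq: Q ak chaos and contri})--(\ref{eq: K bk chaos and contri}) is where the technical effort lives.
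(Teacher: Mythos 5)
Your proposal is correct and follows essentially the same route as the paper: induction on $t$, with the base case given by the $\sigma_0\mathbf{I}$ initialization and the inductive step using the class-balanced, sign-symmetric prompt distribution (together with zero-mean noise and cross-concept orthogonality) to show the conditionally expected gradient is itself a linear combination of the same rank-one symmetric projections, so identity and symmetry of $\mathbb{E}[\mathbf{W}_Q^{\boldsymbol{x}}]$ and $\mathbb{E}[\mathbf{W}_K^{\boldsymbol{x}}]$ propagate. Your write-up is in fact more explicit than the paper's proof, which simply asserts the symmetric form of the expected update from the symmetry of the distribution; the sign-flip pairing argument you sketch is the bookkeeping the paper leaves implicit.
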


\begin{proof}
    Apparently they hold at $t=0$, suppose it holds at step $t$, thus
    \[
    \mathbb{E}[{\mathbf{W}_K^{\boldsymbol{x}}}^{(t)} ]=\mathbb{E}[({\mathbf{W}_K^{\boldsymbol{x}}}^{(t)})^{\top}] = \mathbb{E}[{\mathbf{W}_Q^{\boldsymbol{x}}}^{(t)}]=\mathbb{E}[({\mathbf{W}_Q^{\boldsymbol{x}}}^{(t)})^{\top}],
    \]
    we examine $t+1$. It holds that
    \[
        \begin{aligned}
            & \mathbb{E}_{\mathcal{B}_{t}}[\mathbf{W}_{K}^{(t+1)} \mid \mathbb{E}({\Psi^{\prime}}^{(t)})] = \mathbb{E}[{\mathbf{W}_K^{\boldsymbol{x}}}^{(t)} ]- \eta_{t}\mathbb{E}_{\mathcal{B}_{t}}[\partial_{{\mathbf{W}_K^{\boldsymbol{x}}}^{(t)}}L_{\mathcal{{B}}_{t}}(\mathbb{E}({\Psi^{\prime}}^{(t)}))]\\
             &\mathbb{E}_{\mathcal{B}_{t}}[\mathbf{W}_{Q}^{(t+1)} \mid \mathbb{E}(\mathbb{E}({\Psi^{\prime}}^{(t)}))]= \mathbb{E}[{\mathbf{W}_Q^{\boldsymbol{x}}}^{(t)} ]- \eta_{t}\mathbb{E}_{\mathcal{B}_{t}}[\partial_{{\mathbf{W}_Q^{\boldsymbol{x}}}^{(t)}}L_{\mathcal{{B}}_{t}}(\mathbb{E}({\Psi^{\prime}}^{(t)}))]\\
        \end{aligned}
    \]
    Here, we see $\mathbb{E}({\Psi^{\prime}}^{(t)})$ as fixed matrices and the expectation $\mathbb{E}_{\mathcal{B}_{t}}[\cdot]$ is taken over the stochastic batch at the time step $t$. As we are considering expectation over the isotropic prompt distribution, which can be seen as a noiseless distribution with an averaged categories of words and labels, the expected gradient form could be written as symmetric form:
    \[
    \begin{aligned}
        \mathbb{E}_{\mathcal{B}_{t}}[{\mathbf{W}_{K}^{\boldsymbol{x}}}^{(t+1)} \mid \mathbb{E}({\Psi^{\prime}}^{(t)})]-\mathbb{E}[{\mathbf{W}_K^{\boldsymbol{x}}}^{(t)} ]=&\sum_{s=1}^{K_{1}}  (a_{K,s}^{(t)} \boldsymbol{a}_s {\boldsymbol{a}_s}^{\top} + b_{K,s}^{(t)} \boldsymbol{b}_s {\boldsymbol{b}_s}^{\top}) \\
        & + \lambda  (\sum_{r=1}^{K_2} c_{Q, r}^{(t)}\boldsymbol{\nu}_r {\boldsymbol{\nu}_r}^{\top}+\sum_{w=1}^{d_{\mathcal{X}}-2K_{1}-K_{2}} d_{K, w}^{(t)} \cdot \boldsymbol{u}_w {\boldsymbol{u}_w}^{\top})
    \end{aligned}
    \]
    with some coefficients $a_{K,s}^{(t)}, b_{K,s}^{(t)}, c_{Q, r}^{(t)}, d_{K, w}^{(t)}, \forall s \in [K_{1}], r \in [K_{2}], w \in [d_{\mathcal{X}}-2K_{1} - K_{2}]$. It's direct to check that $\mathbb{E}[{\mathbf{W}_Q^{\boldsymbol{x}}}^{(t)}]$ also has the exactly same outcome. The proof is completed.
\end{proof}

Worth noting that 
\begin{equation}
\begin{aligned}
     {\boldsymbol{\mu}_s^{e}}^{\top} \mathbb{E}[{\mathbf{W}_Q^{\boldsymbol{x}}}^{(t)}]  \boldsymbol{\mu}_s^{e} = \alpha_{Q, s}^{(t)} + \beta_{Q, s}^{(t)},& \quad {\boldsymbol{\mu}_s^{e}}^{\top} \mathbb{E}[{\mathbf{W}_K^{\boldsymbol{x}}}^{(t)}] \boldsymbol{\mu}_s^{e} = \alpha_{K, s}^{(t)} + \beta_{K, s}^{(t)} , \\
     {\boldsymbol{\mu}_s^{-e}}^{\top} \mathbb{E}[{\mathbf{W}_Q^{\boldsymbol{x}}}^{(t)}]  \boldsymbol{\mu}_s^{e} = \alpha_{Q, s}^{(t)} - \beta_{Q, s}^{(t)},& \quad {\boldsymbol{\mu}_s^{-e}}^{\top} \mathbb{E}[{\mathbf{W}_K^{\boldsymbol{x}}}^{(t)}] \boldsymbol{\mu}_s^{e} = \alpha_{K, s}^{(t)} - \beta_{K, s}^{(t)} , \\
     {\boldsymbol{\nu}_r}^{\top} \mathbb{E}[{\mathbf{W}_Q^{\boldsymbol{x}}}^{(t)}] \boldsymbol{\nu}_r = \tau_{Q, r}^{(t)} ,& \quad {\boldsymbol{\nu}_r}^{\top} \mathbb{E}[{\mathbf{W}_K^{\boldsymbol{x}}}^{(t)}] \boldsymbol{\nu}_r = \tau_{K, r}^{(t)}.
\end{aligned}
\end{equation}
We will also have
\begin{equation}
\begin{aligned}
&{(\mathbb{E}[{\mathbf{W}_K^{\boldsymbol{x}}}^{(t)}]\boldsymbol{\mu}_s^{e})}^{\top} {\mathbb{E}[\mathbf{W}_Q^{\boldsymbol{x}}}^{(t)}]\boldsymbol{\mu}_s^{e} = \alpha_{Q, s}^{(t)} \cdot \alpha_{K, s}^{(t)}/\|\boldsymbol{a}_{s}\|^2 + \beta_{Q, s}^{(t)} \cdot \beta_{K, s}^{(t)}/\|\boldsymbol{b}_{s}\|^2 ,\\
& {(\mathbb{E}[{\mathbf{W}_K^{\boldsymbol{x}}}^{(t)}]\boldsymbol{\mu}_s^{-e})}^{\top} \mathbb{E}[{\mathbf{W}_Q^{\boldsymbol{x}}}^{(t)}]\boldsymbol{\mu}_s^{e} = \alpha_{Q, s}^{(t)} \cdot \alpha_{K, s}^{(t)}/\|\boldsymbol{a}_{s}\|^2 - \beta_{Q, s}^{(t)} \cdot \beta_{K, s}^{(t)}/\|\boldsymbol{b}_{s}\|^2 ,
\end{aligned}
\label{eq: attention product qk decomposition}\end{equation}
for $\forall e \in [\pm]$ and for $\forall e^\prime \in [\pm],   s^\prime \in [K_1], r \in [K_2], w \in [d_{\mathcal{X}}-K]$, $ \forall \mathbf{u} \in \{ \boldsymbol{\mu}_{s^\prime}^{e^\prime}, \boldsymbol{\nu}_r, \boldsymbol{u}_w^{\perp} \}$,
\begin{equation}    {(\mathbb{E}[{\mathbf{W}_K^{\boldsymbol{x}}}^{(t)}] \mathbf{u})}^{\top} {\mathbb{E}[\mathbf{W}_Q^{\boldsymbol{x}}}^{(t)}]\boldsymbol{\mu}_s^{e} = 0.
\end{equation}
Similar conclusions hold when the query vectors are $\boldsymbol{\nu}_r$ and $\boldsymbol{u}_w^{\perp}$, $\forall r \in [K_2], w \in [d_{\mathcal{X}}-K]$.\par

\begin{definition}
    Define $ \mathbb{Q} \coloneqq \text{span}(\mathbf{Q})$ and its complement space $\mathbb{Q}^\perp$, we can decompose $i$-th row of $\mathbf{W}_O^{\boldsymbol{y}}$ via the following decomposition:
\begin{equation}
        \mathbb{E}[{\mathbf{W}_{O_{(i, \cdot)}}^{\boldsymbol{y}}}^{(t)}] =  \sum_{k=1}^{K_1}  \alpha_{O_{(i, \cdot)}, k}^{(t)} \cdot \frac{ {\boldsymbol{c}_k}^{\top}}{\|\boldsymbol{c}_k\|^2} + \sum_{k=1}^{K_1}  \beta_{O_{(i, \cdot)}, k}^{(t)} \cdot \frac{{\boldsymbol{d}_k}^{\top}}{\|\boldsymbol{d}_k\|^2}  + \sum_{w=1}^{d_{\mathcal{Y}}-K_1} \rho_{O_{(i, \cdot)}, w}^{(t)} \cdot  {\boldsymbol{q}_w^{\perp}}^{\top} \in \mathbb{R}^{1 \times d_{\mathcal{Y}}}., 
\label{eq: decomposition of WOi}\end{equation}
where $\boldsymbol{q}_1^{\perp}, \cdots, \boldsymbol{q}_{d_{\mathcal{Y}}-K_1}^{\perp}$ are the standard orthogonal basis of the complement space $\mathbb{Q}^\perp$. Then we have
\begin{equation}
    \mathbb{E}[{\mathbf{W}_{O_{(i, \cdot)}}^{\boldsymbol{y}}}^{(t)}] \boldsymbol{q}_k^e =  \alpha_{O_{(i, \cdot)}, k}^{(t)} + e \cdot \beta_{O_{(i, \cdot)}, k}^{(t)},
\end{equation}
for $\forall e \in [\pm], i\in [m], k \in [K_1]$.
\end{definition}

\begin{lemma}
    At initialization, for some $e \in [\pm]$ and $\forall k \in [K_1]$, define
    $$\zeta_{k}^{e} \coloneqq 2 \underset{n \in \mathcal{V}_{k}^{e}}{\mathbb{E}}[ \sum_{l \in S_{n, k}^{e}}{(\sigma_{S}^{(0)})}_{l}^{n}] - 1 = \dfrac{\exp(\sigma_{0}^2 \| \boldsymbol{b}_k \|^2) - \exp(-\sigma_{0}^2 \| \boldsymbol{b}_k \|^2)}{\exp(\sigma_{0}^2 \| \boldsymbol{b}_k \|^2) + \exp(-\sigma_{0}^2 \| \boldsymbol{b}_k \|^2)},$$
    then we have some $\omega_{\zeta_{k}^{e}} \in (0,\omega^{\prime}_{\zeta_{k}^{e}})$ where $\omega^{\prime}_{\zeta_{k}^{e}} < 1$, the following will hold
    \[
    \begin{aligned}
        & \dfrac{\alpha_{Q, k}^{(0)}}{\|\boldsymbol{a}_k\|^2}=\dfrac{\alpha_{K, k}^{(0)}}{\|\boldsymbol{a}_k\|^2} = \dfrac{\beta_{Q, k}^{(0)}}{\|\boldsymbol{b}_k\|^2}=\dfrac{\beta_{K, k}^{(0)}}{\|\boldsymbol{b}_k\|^2} = \dfrac{\tau_{Q, r}^{(0)}}{\|\mathbf{u}\|^2}= \dfrac{\tau_{K, r}^{(0)}}{\|\mathbf{u}\|^2} = \rho_{Q, w}^{(0)} = \rho_{K, w}^{(0)}= \sigma_{0},\\
        &\underset{n \in \mathcal{V}_{k}^{e}}{\mathbb{E}}[\left\lvert \mathcal{U}_{k, n}^{e}(0) \right\rvert]= \left\lvert \{ i \in [m] \mid  \mathbf{r}_i = \frac{e}{m} , \alpha_{O_{(i, \cdot)}, {k}}^{(0)}+ {e} \zeta_{k}^{e} \cdot \beta_{O_{(i, \cdot)}, {k}}^{(0)} > 0 \} \right\rvert \geq \dfrac{m}{4}-\sqrt{\dfrac{m\log(\frac{10K_{1}}{\delta})}{2}} \geq \dfrac{m}{8},\\
        & \underset{n \in \mathcal{V}_{k}^{e}}{\mathbb{E}}[ \left\lvert\mathcal{W}_{k, n}^{e}(0)- \mathcal{U}_{k, n}^{e}(0)\right\rvert] =\left\lvert \{ i \in [m] \mid \ \mathbf{r}_i = -\frac{e}{m} , \alpha_{O_{(i, \cdot)}, {k}}^{(0)}+ {e} \zeta_{k}^{e} \beta_{O_{(i, \cdot)}, {k}}^{(0)} > 0\} \right\rvert \leq \dfrac{m}{4}+\sqrt{\dfrac{m\log(\frac{10K_{1}}{\delta})}{2}}.\\
        &{\underset{n \in \mathcal{V}_{k}^{e}}{\mathbb{E}}}[\lvert \mathcal{U}_{k, n}^{e}(0) \cap (\mathcal{W}_{k, n}^{-e}(0)- \mathcal{U}_{k, n}^{-e}(0)) \rvert ]\leq \dfrac{(1+\omega_{\zeta_{k}^{e}})m}{8}+\sqrt{\dfrac{m\log(\frac{10K_{1}}{\delta})}{2}} \leq \dfrac{(1+\omega^{\prime}_{\zeta_{k}^{e}})m}{8},\\
        & {\underset{n \in \mathcal{V}_{k}^{e}}{\mathbb{E}}}[\lvert \mathcal{U}_{k, n}^{e}(0) - (\mathcal{W}_{k, n}^{-e}(0)- \mathcal{U}_{k, n}^{-e}(0)) \rvert ]\geq \dfrac{(1-\omega_{\zeta_{k}^{e}})m}{8}-\sqrt{\dfrac{m\log(\frac{10K_{1}}{\delta})}{2}} \geq \dfrac{(1-\omega^{\prime}_{\zeta_{k}^{e}})m}{8}.
    \end{aligned}
    \]
    The parameter $\omega^{\prime}_{\zeta_{k}^{e}}$ is determined by $\sigma_{0}, \sigma_{1}, \|\boldsymbol{a}_{k}\|, \|\boldsymbol{b}_{k}\|, \|\boldsymbol{c}_{k}\|$ and $\|\boldsymbol{d}_{k}\|$.
\label{lem:initial values}\end{lemma}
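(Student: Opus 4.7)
The plan is to verify each assertion by direct evaluation at $t=0$, leveraging three inputs: the diagonal initialization $\mathbf{W}_Q^{\boldsymbol{x}(0)} = \mathbf{W}_K^{\boldsymbol{x}(0)} = \sigma_0 \mathbf{I}_{d_{\mathcal{X}}}$ and the Gaussian initialization of $\mathbf{W}_O^{\boldsymbol{y}(0)}$; the decompositions of Eq.~(\ref{eq: idempotent matrices of QK}) and Eq.~(\ref{eq: decomposition of WOi}); and the concentration bounds already collected in Lemma~\ref{lem:initialization}. First, for the coefficient identities, I would substitute $\mathbf{W}_Q^{\boldsymbol{x}(0)} = \sigma_0 \mathbf{I}$ into the left-hand side of Eq.~(\ref{eq: idempotent matrices of QK}) and compare with the $\sigma_0$-scaled identity resolution
\[
\sum_{s}\frac{\boldsymbol{a}_s\boldsymbol{a}_s^{\top}}{\|\boldsymbol{a}_s\|^2} + \sum_{s}\frac{\boldsymbol{b}_s\boldsymbol{b}_s^{\top}}{\|\boldsymbol{b}_s\|^2} + \sum_{r}\frac{\boldsymbol{\nu}_r\boldsymbol{\nu}_r^{\top}}{\|\mathbf{u}\|^2} + \sum_{w}\boldsymbol{u}_w^{\perp}{\boldsymbol{u}_w^{\perp}}^{\top} = \mathbf{I}_{d_{\mathcal{X}}};
\]
uniqueness of the expansion in this orthonormal basis forces $\alpha_{Q,k}^{(0)}/\|\boldsymbol{a}_k\|^2 = \sigma_0$ and the companion identities for $\beta$, $\tau$, $\rho$ on both $Q$ and $K$.

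Next, to compute $\zeta_k^e$, I would evaluate the softmax at $t=0$ for a prompt $S_n \in \mathcal{V}_k^e$. Since the $\boldsymbol{y}$-block of $\mathbf{W}_Q$, $\mathbf{W}_K$ is zero, the pre-activation reduces to $\sigma_0^2 (\boldsymbol{x}_l^n)^{\top}\boldsymbol{x}_{L+1}^n$. Expanding each $\boldsymbol{x}$ into its dominant feature $\boldsymbol{\mu}_k^{y_l^n} = \boldsymbol{a}_k + y_l^n\boldsymbol{b}_k$, plus the cross-concept $\sum_{s\in\mathcal{M}_l^n}\mathbf{M}_s$ and the Gaussian noise $\xi_{\boldsymbol{x},l}^n$, the leading term is $\sigma_0^2(\|\boldsymbol{a}_k\|^2 + y_l^n e \|\boldsymbol{b}_k\|^2)$, with all other contributions lower-order under Condition~\ref{Condition}. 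Factoring $\exp(\sigma_0^2\|\boldsymbol{a}_k\|^2)$ out of numerator and denominator reduces the softmax to a two-way logistic driven solely by whether $y_l^n = e$; taking expectation over the equal-probability demonstration labels yields $\mathbb{E}[\sum_{l\in S_{n,k}^e}(\sigma_S^{(0)})_l^n] = \exp(\sigma_0^2\|\boldsymbol{b}_k\|^2)/(\exp(\sigma_0^2\|\boldsymbol{b}_k\|^2) + \exp(-\sigma_0^2\|\boldsymbol{b}_k\|^2))$, whence the claimed $\tanh$ form of $\zeta_k^e$ after the $2\,\mathbb{E}[\cdot]-1$ operation.

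For the four set-size bounds, I would unfold $\mathbf{W}_{O_{(i,\cdot)}}^{\boldsymbol{y}(0)}\operatorname{attn}(\mathbf{H}^n;\Psi^{(0)})$ using $\mathbf{W}_V^{\boldsymbol{y}} = \mathbf{I}$, $\boldsymbol{y}_l^n \approx \boldsymbol{q}_k^{y_l^n}$, and the decomposition in Eq.~(\ref{eq: decomposition of WOi}). The leading piece equals $\alpha_{O_{(i,\cdot)},k}^{(0)} + e\bigl(2\sum_{l\in S_{n,k}^e}(\sigma_S^{(0)})_l^n - 1\bigr)\beta_{O_{(i,\cdot)},k}^{(0)}$, whose expectation is precisely $\alpha_{O_{(i,\cdot)},k}^{(0)} + e\zeta_k^e\beta_{O_{(i,\cdot)},k}^{(0)}$. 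Therefore $i\in\mathcal{U}_{k,n}^e(0)$ iff $\mathbf{r}_i = e/m$ and this quantity is positive; $i \in \mathcal{W}_{k,n}^e(0) - \mathcal{U}_{k,n}^e(0)$ iff $\mathbf{r}_i = -e/m$ together with the same positivity; and membership in the intersection (resp.\ difference) with $\mathcal{W}_{k,n}^{-e}(0)-\mathcal{U}_{k,n}^{-e}(0)$ appends the extra condition $\alpha_{O_{(i,\cdot)},k}^{(0)} - e\zeta_k^e \beta_{O_{(i,\cdot)},k}^{(0)} > 0$ (resp.\ $<0$). The four claimed inequalities then follow immediately by applying, respectively, the fourth, fifth, sixth, and seventh bounds of Lemma~\ref{lem:initialization} with the parameter choice $\zeta = \zeta_k^e \in (0,1]$ and the corresponding $\omega_{\zeta_k^e}$.

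The main obstacle will be the quantitative bookkeeping needed to certify that the three classes of lower-order contributions, namely the Gaussian noise of scale $O(\sigma_\xi\sqrt{d_{\mathcal{X}}})$, the cross-concept overlaps of density $\Theta(1/K)$, and the MLP-weight fluctuations of scale $O(\sigma_1\sqrt{\log(Km/\delta)})$, never reverse the sign of the decisive quantity $\alpha_{O_{(i,\cdot)},k}^{(0)} \pm e\zeta_k^e\beta_{O_{(i,\cdot)},k}^{(0)}$ nor perturb $\zeta_k^e$ beyond a harmless constant factor. This is precisely where the small-$\sigma_\xi$, large-$K$, and small-$\sigma_1$ clauses of Condition~\ref{Condition} are consumed; once the leading-order reduction is justified, the remainder of the argument reduces to the already-proved initialization concentration statements.
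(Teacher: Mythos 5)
Your proposal is correct and follows essentially the same route as the paper: the coefficient identities are read off from the $\sigma_0\mathbf{I}$ initialization via the orthogonal decomposition, $\zeta_k^e$ is obtained from the expected softmax score at $t=0$ (the paper's Eq.~(\ref{eq:expected weights}) specialized to $\beta_{Q,k}^{(0)}=\beta_{K,k}^{(0)}=\sigma_0\|\boldsymbol{b}_k\|^2$), and the four cardinality bounds follow by invoking the concentration statements of Lemma~\ref{lem:initialization} with $\zeta=\zeta_k^e$ and the associated $\omega_{\zeta_k^e}$. You simply spell out in more detail what the paper's terse proof leaves implicit (including the lower-order noise/cross-concept bookkeeping consumed by Condition~\ref{Condition}), which is consistent with the intended argument.
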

\begin{proof}
    We have that $\underset{n \in \mathcal{V}_{k}^{e}}{\mathbb{E}}[\mathcal{U}_{k, n}^{e}(0) \cap (\mathcal{W}_{k, n}^{-e}(0)- \mathcal{U}_{k, n}^{-e}(0))] \neq \varnothing$. By Lemma \ref{lem:initialization}, we see that for 
    
    $$\zeta_{k}^{e} = 2 \underset{n \in \mathcal{V}_k}{\mathbb{E}}[ \sum_{l \in S_{n, k}^{y_{S_n}}}{(\sigma_{S}^{(0)})}_{l}^{n}] - 1 = \dfrac{\exp(\sigma_{0}^2 \| \boldsymbol{b}_k \|^2) - \exp(-\sigma_{0}^2 \| \boldsymbol{b}_k \|^2)}{\exp(\sigma_{0}^2 \| \boldsymbol{b}_k \|^2) + \exp(-\sigma_{0}^2 \| \boldsymbol{b}_k \|^2)},$$
    we can have corresponding $\omega_{\zeta_{k}^{e}} \in (0,\omega^{\prime}_{\zeta_{k}^{e}})$ where $\omega^{\prime}_{\zeta_{k}^{e}} < 1$ to ensure the conclusion holds.

\end{proof}

\begin{lemma} (Coefficient Update) Denote $\mathbb{E}({\Psi^{\prime}}^{(t)}):=\{\mathbb{E}({\mathbf{W}_Q^{\boldsymbol{x}}}^{(t)}), \mathbb{E}({\mathbf{W}_K^{\boldsymbol{x}}}^{(t)}), \mathbb{E}(\mathbf{W}_{O_{(i, \cdot)}}^{(t)}) \}$, where the expectation $\mathbb{E}[\cdot]$ is taken over the stochastic batches. We have
\begin{equation}
\begin{aligned}
    & \alpha_{Q, s}^{(T)} = \alpha_{Q, s}^{(0)} - \eta_{t} \sum_{t=1}^{T} {\boldsymbol{a}_s}^{\top} \nabla_{{\mathbf{W}_Q^{\boldsymbol{x}}}^{(t)}} \mathbb{E}_{\mathcal{B}_{t}}[L_{\mathcal{B}_{t}}({\Psi^{\prime}}^{(t)})\mid \mathbb{E}({\Psi^{\prime}}^{(t-1)})] \boldsymbol{a}_s,  \\
    & \alpha_{K, s}^{(T)} = \alpha_{K, s}^{(0)} - \eta_{t} \sum_{t=1}^{T} {\boldsymbol{a}_s}^{\top} \nabla_{{\mathbf{W}_K^{\boldsymbol{x}}}^{(t)}} \mathbb{E}_{\mathcal{B}_{t}}[L_{\mathcal{B}_{t}}({\Psi^{\prime}}^{(t)})\mid \mathbb{E}({\Psi^{\prime}}^{(t-1)})] \boldsymbol{a}_s,  \\
    & \beta_{Q, s}^{(T)} = \beta_{Q, s}^{(0)} - \eta_{t} \sum_{t=1}^{T} {\boldsymbol{b}_s}^{\top} \nabla_{{\mathbf{W}_Q^{\boldsymbol{x}}}^{(t)}} \mathbb{E}_{\mathcal{B}_{t}}[L_{\mathcal{B}_{t}}({\Psi^{\prime}}^{(t)})\mid \mathbb{E}({\Psi^{\prime}}^{(t-1)})] \boldsymbol{b}_s,  \\
    & \beta_{K, s}^{(T)} = \beta_{K, s}^{(0)} - \eta_{t} \sum_{t=1}^{T} {\boldsymbol{b}_s}^{\top} \nabla_{{\mathbf{W}_K^{\boldsymbol{x}}}^{(t)}} \mathbb{E}_{\mathcal{B}_{t}}[L_{\mathcal{B}_{t}}({\Psi^{\prime}}^{(t)})\mid \mathbb{E}({\Psi^{\prime}}^{(t-1)})] \boldsymbol{b}_s,  \\
    & \tau_{Q, r}^{(T)} = \tau_{Q, r}^{(0)} - \eta_{t} \sum_{t=1}^{T} {\boldsymbol{\nu}_r}^{\top} \nabla_{{\mathbf{W}_Q^{\boldsymbol{x}}}^{(t)}} \mathbb{E}_{\mathcal{B}_{t}}[L_{\mathcal{B}_{t}}({\Psi^{\prime}}^{(t)})\mid \mathbb{E}({\Psi^{\prime}}^{(t-1)})] \boldsymbol{\nu}_r,  \\
    & \tau_{K, r}^{(T)} = \tau_{K, r}^{(0)} - \eta_{t} \sum_{t=1}^{T} {\boldsymbol{\nu}_r}^{\top} \nabla_{{\mathbf{W}_K^{\boldsymbol{x}}}^{(t)}} \mathbb{E}_{\mathcal{B}_{t}}[L_{\mathcal{B}_{t}}({\Psi^{\prime}}^{(t)})\mid \mathbb{E}({\Psi^{\prime}}^{(t-1)})] \boldsymbol{\nu}_r,  \\
    & \rho_{Q, w}^{(T)} = \rho_{Q, w}^{(0)} - \eta_{t} \sum_{t=1}^{T} {\boldsymbol{u}_w^\perp}^{\top} \nabla_{{\mathbf{W}_Q^{\boldsymbol{x}}}^{(t)}} \mathbb{E}_{\mathcal{B}_{t}}[L_{\mathcal{B}_{t}}({\Psi^{\prime}}^{(t)})\mid \mathbb{E}({\Psi^{\prime}}^{(t-1)})] \boldsymbol{u}_w^\perp,  \\
    & \rho_{K, w}^{(T)} = \rho_{K, w}^{(0)} - \eta_{t} \sum_{t=1}^{T} {\boldsymbol{u}_w^\perp}^{\top} \nabla_{{\mathbf{W}_K^{\boldsymbol{x}}}^{(t)}} \mathbb{E}_{\mathcal{B}_{t}}[L_{\mathcal{B}_{t}}({\Psi^{\prime}}^{(t)})\mid \mathbb{E}({\Psi^{\prime}}^{(t-1)})] \boldsymbol{u}_w^\perp,  \\
    & \alpha_{O_{(i, \cdot)}, k}^{(T)} = \alpha_{O_{(i, \cdot)}, k}^{(0)} - {\eta_{t}} \sum_{t=1}^{T}  \nabla_{{\mathbf{W}_{O_{(i, \cdot)}}}^{(t)}} \mathbb{E}_{\mathcal{B}_{t}}[L_{\mathcal{B}_{t}}({\Psi^{\prime}}^{(t)})\mid \mathbb{E}({\Psi^{\prime}}^{(t-1)})] \boldsymbol{c}_k, \\
    & \beta_{O_{(i, \cdot)}, k}^{(T)} = \beta_{O_{(i, \cdot)}, k}^{(0)} - {\eta_{t}} \sum_{t=1}^{T} \nabla_{{\mathbf{W}_{O_{(i, \cdot)}}}^{(t)}} \mathbb{E}_{\mathcal{B}_{t}}[L_{\mathcal{B}_{t}}({\Psi^{\prime}}^{(t)})\mid \mathbb{E}({\Psi^{\prime}}^{(t-1)})] \boldsymbol{d}_k, \\
    & \rho_{O_{(i, \cdot)}, w}^{(T)} = \rho_{O_{(i, \cdot)}, w}^{(0)} - {\eta_{t}} \sum_{t=1}^{T} \nabla_{{\mathbf{W}_{O_{(i, \cdot)}}}^{(t)}} \mathbb{E}_{\mathcal{B}_{t}}[L_{\mathcal{B}_{t}}({\Psi^{\prime}}^{(t)})\mid \mathbb{E}({\Psi^{\prime}}^{(t-1)})] \boldsymbol{q}_w^{\perp}, \\
\end{aligned}
\end{equation}
where $e \in [\pm], s \in [K_1], r \in [K_2], w \in [d_{\mathcal{X}} - K]$. 
\label{lem: def of attn coefficients}\end{lemma}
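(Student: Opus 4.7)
\textbf{Proof Plan for Lemma \ref{lem: def of attn coefficients}.} The plan is to iteratively unroll the SGD update rule, take conditional expectations over stochastic batches, and then extract each scalar coefficient by projecting onto the appropriate (orthogonalized) basis direction inherited from the decompositions \eqref{eq: idempotent matrices of QK} and \eqref{eq: decomposition of WOi}. Concretely, for any trainable matrix $\mathbf{W}\in\{\mathbf{W}_Q^{\boldsymbol{x}},\mathbf{W}_K^{\boldsymbol{x}},\mathbf{W}_{O_{(i,\cdot)}}^{\boldsymbol{y}}\}$, the SGD recursion ${\mathbf{W}}^{(t+1)}={\mathbf{W}}^{(t)}-\eta_t\nabla_{\mathbf{W}}L_{\mathcal{B}_t}({\Psi^\prime}^{(t)})$ telescopes to ${\mathbf{W}}^{(T)}={\mathbf{W}}^{(0)}-\sum_{t=1}^{T}\eta_t\nabla_{\mathbf{W}}L_{\mathcal{B}_t}({\Psi^\prime}^{(t-1)})$; taking expectation with the tower property and replacing the inner expectation by $\nabla_{\mathbf{W}}\mathbb{E}_{\mathcal{B}_t}[L_{\mathcal{B}_t}({\Psi^\prime}^{(t)})\mid\mathbb{E}({\Psi^\prime}^{(t-1)})]$ yields the global telescoping form the statement asserts.

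Next, I would extract each coefficient by left/right multiplication with the orthogonal basis $\mathcal{\beta}_{\mathbb{U}\oplus\mathbb{U}^\perp}$. For instance, to obtain $\alpha_{Q,s}^{(T)}$, apply ${\boldsymbol{a}_s}^\top(\cdot)\boldsymbol{a}_s$ to both sides of the telescoped identity for $\mathbb{E}[\mathbf{W}_Q^{\boldsymbol{x}}]$: by the idempotent decomposition and the mutual orthogonality of $\{\boldsymbol{a}_s,\boldsymbol{b}_s,\boldsymbol{\nu}_r,\boldsymbol{u}_w^\perp\}$ (Lemma \ref{lemma: ab of Attention} and the construction of $\mathcal{\beta}_{\mathbb{U}\oplus\mathbb{U}^\perp}$), only the $\alpha_{Q,s}$-term survives, scaled appropriately. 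The identical argument — with $\boldsymbol{b}_s$, $\boldsymbol{\nu}_r$, $\boldsymbol{u}_w^\perp$ in place of $\boldsymbol{a}_s$ — isolates $\beta_{Q,s}$, $\tau_{Q,s}$, $\rho_{Q,w}$, and analogously for the $K$-branch. For the row-vector $\mathbf{W}_{O_{(i,\cdot)}}^{\boldsymbol{y}}$, the same procedure applies: right-multiplying by $\boldsymbol{c}_k$, $\boldsymbol{d}_k$, or $\boldsymbol{q}_w^\perp$ (no left-side projection is needed since $\mathbf{W}_{O_{(i,\cdot)}}^{\boldsymbol{y}}$ is already $1\times d_{\mathcal{Y}}$) isolates $\alpha_{O_{(i,\cdot)},k}^{(T)}$, $\beta_{O_{(i,\cdot)},k}^{(T)}$, and $\rho_{O_{(i,\cdot)},w}^{(T)}$ respectively.

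The main obstacle is the implicit requirement that the decomposition \eqref{eq: idempotent matrices of QK} is preserved at every iteration — i.e., that the expected gradient $\nabla_{\mathbf{W}}\mathbb{E}_{\mathcal{B}_t}[L_{\mathcal{B}_t}\mid\mathbb{E}({\Psi^\prime}^{(t-1)})]$ also admits the same idempotent representation with no cross-terms mixing different $\boldsymbol{a}_s,\boldsymbol{b}_{s'}$, etc. This is precisely what was argued in Lemma \ref{lem:symmetry learning}: the symmetry of $\mathcal{D}_S$ with respect to the sign $e\in\{\pm\}$, together with the class-balanced mixture weights $\pi_k^{\pm}=(2K_1)^{-1}$ and the rotational-like symmetry of $\mathcal{D}_{\xi_{\boldsymbol{x}}},\mathcal{D}_{\xi_{\boldsymbol{y}}}$, forces the conditional expected gradient to live in $\mathrm{span}\{\boldsymbol{a}_s\boldsymbol{a}_s^\top,\boldsymbol{b}_s\boldsymbol{b}_s^\top,\boldsymbol{\nu}_r\boldsymbol{\nu}_r^\top,\boldsymbol{u}_w^\perp{\boldsymbol{u}_w^\perp}^\top\}$. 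A short inductive step — assume the decomposition form holds at $t-1$, combine with the gradient expressions in Lemmas \ref{lem: gradient of original concept learning}, \ref{lem: gradient of original semantic learning}, \ref{lem:original MLP concept learning}, \ref{lem:original MLP semantic learning}, and observe that all off-diagonal/cross-term contributions vanish under $\mathbb{E}_{\mathcal{B}_t}$ — propagates the decomposition to step $t$, which in turn legitimizes extracting each coefficient via the projection identity above. Once this invariance is established, the claimed formulas follow directly by reading off each scalar component.
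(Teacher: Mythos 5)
Your proposal is correct and takes essentially the same route as the paper, which states this lemma without a standalone proof and treats it as the telescoped SGD recursion for the expected dynamics combined with the invariance of the idempotent decomposition established in Lemma \ref{lem:symmetry learning} (and Eq.~(\ref{eq: decomposition of WOi})), with each coefficient read off by exactly the orthogonal projections you describe, e.g.\ $\alpha_{Q,s}^{(t)}=\boldsymbol{a}_s^{\top}\mathbb{E}[{\mathbf{W}_Q^{\boldsymbol{x}}}^{(t)}]\boldsymbol{a}_s$ and $\alpha_{O_{(i,\cdot)},k}^{(t)}=\mathbb{E}[{\mathbf{W}_{O_{(i,\cdot)}}^{\boldsymbol{y}}}^{(t)}]\boldsymbol{c}_k$. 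One caveat: as in the paper, the ``expectation'' here denotes the idealized dynamics whose gradient is evaluated at $\mathbb{E}({\Psi^{\prime}}^{(t-1)})$ (this is what the conditioning in the statement encodes), so your tower-property step should be understood as adopting that convention rather than as an exact interchange of expectation and the nonlinear gradient along the true SGD iterates.
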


\begin{lemma}
    For $\forall k_1 \in [K_1]$, we define $\boldsymbol{c}_{k_1} \coloneqq \dfrac{\boldsymbol{q}_{k_1}^{+} + \boldsymbol{q}_{k_1}^{-}}{2}$ and $\boldsymbol{d}_{k_1}  \coloneqq \dfrac{\boldsymbol{q}_{k_1}^{+} - \boldsymbol{q}_{k_1}^{-}}{2}$. By definition, we then have 
    \begin{equation}
    \begin{aligned}
        & \boldsymbol{q}_{k_1}^{+} = \boldsymbol{c}_{k_1} + \boldsymbol{d}_{k_1}, \quad \boldsymbol{q}_{k_1}^{-} = \boldsymbol{c}_{k_1} - \boldsymbol{d}_{k_1}, \\
        & \langle  \boldsymbol{c}_{k_1}, \boldsymbol{d}_{k_1} \rangle = 0, \quad \{ \boldsymbol{c}_{k_1}, \boldsymbol{d}_{k_1}\} \perp \{ \boldsymbol{c}_{k_1^\prime}, \boldsymbol{d}_{k_1^\prime}\}, \\
        & \langle  \boldsymbol{q}_{k_1}^{+}, \boldsymbol{q}_{k_1}^{-} \rangle = \| \boldsymbol{c}_{k_1} \|^2 - \| \boldsymbol{d}_{k_1} \|^2, \quad  \| \boldsymbol{q}_{k_1}^{\pm} \|^2  =  \| \boldsymbol{c}_{k_1} \|^2 + \| \boldsymbol{d}_{k_1} \|^2 = \| \mathbf{u} \|^2, \\
        & \dfrac{1}{2}\| \mathbf{q} \|^2 < \| \boldsymbol{c}_{k_1} \|^2 \leq \dfrac{\kappa_{\boldsymbol{y}}+1}{2}\| \mathbf{q} \|^2, \quad \dfrac{-\kappa_{\boldsymbol{y}}+1}{2}\| \mathbf{q} \|^2 \leq \| \boldsymbol{d}_{k_1} \|^2 <  \dfrac{1}{2}\| \mathbf{q} \|^2,
    \end{aligned}
    \label{eq:def of c_k d_k}\end{equation}
for $ \forall k_1^\prime \neq k_1 \in [K_1]$.
\label{lemma: cd of MLP}\end{lemma}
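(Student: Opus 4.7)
The plan is to mirror the argument of Lemma \ref{lemma: ab of Attention} verbatim, since the statement here is the label-dictionary analogue of the word-dictionary decomposition, and the only inputs used are the norm equality $\|\boldsymbol{q}_{k_1}^{\pm}\| = \|\mathbf{q}\|$, the bound $0 < \langle\boldsymbol{q}_{k_1}^{+}, \boldsymbol{q}_{k_1}^{-}\rangle \leq \kappa_{\boldsymbol{y}}\|\mathbf{q}\|^2$, and the cross-concept orthogonality $\langle \boldsymbol{q}_{k_1}^{e}, \boldsymbol{q}_{k_1'}^{e'}\rangle = 0$ for $k_1 \neq k_1'$ --- all stated in Definition \ref{Def: Formal Word Model}. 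No machinery beyond elementary inner-product algebra is needed, so the proof should occupy only a few lines.

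First I would verify the reconstruction identities $\boldsymbol{q}_{k_1}^{+} = \boldsymbol{c}_{k_1} + \boldsymbol{d}_{k_1}$ and $\boldsymbol{q}_{k_1}^{-} = \boldsymbol{c}_{k_1} - \boldsymbol{d}_{k_1}$ by adding and subtracting the defining equations for $\boldsymbol{c}_{k_1}$ and $\boldsymbol{d}_{k_1}$. Then I would check $\langle \boldsymbol{c}_{k_1}, \boldsymbol{d}_{k_1}\rangle = 0$ by expanding $4\langle\boldsymbol{c}_{k_1},\boldsymbol{d}_{k_1}\rangle = \|\boldsymbol{q}_{k_1}^{+}\|^2 - \|\boldsymbol{q}_{k_1}^{-}\|^2 = \|\mathbf{q}\|^2 - \|\mathbf{q}\|^2 = 0$. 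The cross-concept orthogonality $\{\boldsymbol{c}_{k_1}, \boldsymbol{d}_{k_1}\} \perp \{\boldsymbol{c}_{k_1'}, \boldsymbol{d}_{k_1'}\}$ for $k_1' \neq k_1$ follows immediately from the bilinearity of the inner product together with the assumed cross-concept orthogonality $\langle \boldsymbol{q}_{k_1}^{e}, \boldsymbol{q}_{k_1'}^{e'}\rangle = 0$ applied to each of the four terms produced by expanding $\boldsymbol{c}_{k_1}, \boldsymbol{d}_{k_1}$.

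Next I would derive the two inner-product/norm identities. For $\langle\boldsymbol{q}_{k_1}^{+},\boldsymbol{q}_{k_1}^{-}\rangle$, expand $(\boldsymbol{c}_{k_1}+\boldsymbol{d}_{k_1})^{\top}(\boldsymbol{c}_{k_1}-\boldsymbol{d}_{k_1}) = \|\boldsymbol{c}_{k_1}\|^2 - \|\boldsymbol{d}_{k_1}\|^2$ using $\langle\boldsymbol{c}_{k_1},\boldsymbol{d}_{k_1}\rangle = 0$. For the norm identity, expand $\|\boldsymbol{q}_{k_1}^{\pm}\|^2 = \|\boldsymbol{c}_{k_1}\pm\boldsymbol{d}_{k_1}\|^2 = \|\boldsymbol{c}_{k_1}\|^2 + \|\boldsymbol{d}_{k_1}\|^2$, which equals $\|\mathbf{q}\|^2$ by hypothesis (this also exposes what appears to be a typographical $\|\mathbf{u}\|^2 \to \|\mathbf{q}\|^2$ substitution in the statement).

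Finally, the two-sided bounds on $\|\boldsymbol{c}_{k_1}\|^2$ and $\|\boldsymbol{d}_{k_1}\|^2$ follow by solving the linear system consisting of $\|\boldsymbol{c}_{k_1}\|^2 + \|\boldsymbol{d}_{k_1}\|^2 = \|\mathbf{q}\|^2$ and $\|\boldsymbol{c}_{k_1}\|^2 - \|\boldsymbol{d}_{k_1}\|^2 = \langle\boldsymbol{q}_{k_1}^{+},\boldsymbol{q}_{k_1}^{-}\rangle$, and then substituting $0 < \langle\boldsymbol{q}_{k_1}^{+},\boldsymbol{q}_{k_1}^{-}\rangle \leq \kappa_{\boldsymbol{y}}\|\mathbf{q}\|^2$ to obtain $\|\boldsymbol{c}_{k_1}\|^2 \in (\tfrac{1}{2}\|\mathbf{q}\|^2, \tfrac{1+\kappa_{\boldsymbol{y}}}{2}\|\mathbf{q}\|^2]$ and correspondingly $\|\boldsymbol{d}_{k_1}\|^2 \in [\tfrac{1-\kappa_{\boldsymbol{y}}}{2}\|\mathbf{q}\|^2, \tfrac{1}{2}\|\mathbf{q}\|^2)$. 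There is essentially no obstacle here --- the whole lemma is a definitional restatement, and the only point worth flagging is that the argument is literally the same computation as in Lemma \ref{lemma: ab of Attention} with $(\boldsymbol{\mu}, \boldsymbol{a}, \boldsymbol{b}, \mathbf{u}, \kappa_{\boldsymbol{x}})$ replaced by $(\boldsymbol{q}, \boldsymbol{c}, \boldsymbol{d}, \mathbf{q}, \kappa_{\boldsymbol{y}})$.
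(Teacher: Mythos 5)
Your proof is correct and matches what the paper intends: the lemma is a purely definitional computation identical to Lemma \ref{lemma: ab of Attention} with $(\boldsymbol{q}, \boldsymbol{c}, \boldsymbol{d}, \mathbf{q}, \kappa_{\boldsymbol{y}})$ in place of $(\boldsymbol{\mu}, \boldsymbol{a}, \boldsymbol{b}, \mathbf{u}, \kappa_{\boldsymbol{x}})$, which is why the paper states it without a separate proof. Your observation that the $\|\mathbf{u}\|^2$ appearing in the norm identity of the statement should read $\|\mathbf{q}\|^2$ is also correct.
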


Based on Lemma \ref{lem:symmetry learning} and Lemma \ref{lem: def of attn coefficients}, the following two lemmas compute the update of attention's expected projection along non-feature and feature directions.
\begin{lemma}
    For $t>0$, we have
    \begin{equation}
        \begin{aligned}
        & \tau_{Q, r}^{(t+1)} = (1-{\eta_{t}} \lambda) \tau_{Q, r}^{(t)}, \quad\tau_{K, r}^{(t+1)} = (1-{\eta_{t}} \lambda) \tau_{K, r}^{(t)}, \\
        & \rho_{Q, w}^{(t+1)} = (1-{\eta_{t}} \lambda) \rho_{Q, w}^{(t)}, \quad \rho_{K, w}^{(t+1)} = (1-{\eta_{t}} \lambda) \rho_{K, r}^{(t)}, \\
        & \rho_{O_{(i,\cdot)}, \hat{w}}^{(t+1)} = (1-{\eta_{t}} \lambda) \rho_{O_{(i,\cdot)}, \hat{w}}^{(t)}, \\
        \end{aligned}
    \end{equation}
    where $r \in [K_2], w \in [d_{\mathcal{X}} - K], \hat{w} \in [d_{\mathcal{Y}} - K_1]$.
\end{lemma}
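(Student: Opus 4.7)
The plan is to substitute the gradient formulas from Lemma~\ref{lem: gradient of original concept learning}, Lemma~\ref{lem: gradient of original semantic learning}, and Lemma~\ref{lem:original MLP concept learning} into the coefficient update formulas in Lemma~\ref{lem: def of attn coefficients}, then project onto each of the non-feature directions $\boldsymbol{\nu}_r$, $\boldsymbol{u}_w^\perp$, $\boldsymbol{q}_w^\perp$ and show that, under the expectation over the balanced batch, the only contribution that survives is the $L_2$ regularization term. Each identity then collapses to the simple recursion $x^{(t+1)} = (1-\eta_t\lambda)\, x^{(t)}$.

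\textbf{Step 1 (structure of the projection).} For $\tau_{Q,r}^{(t+1)}$, I would compute $\boldsymbol{\nu}_r^\top \mathbb{E}_{\mathcal{B}_t}[\nabla_{W_Q^{\boldsymbol{x}}} L_{\mathcal{B}_t}({\Psi^\prime}^{(t)})\mid \mathbb{E}({\Psi^\prime}^{(t-1)})]\, \boldsymbol{\nu}_r$ using the gradient expression in \eqref{eq: Q ak chaos and contri} (with $\boldsymbol{\nu}_r$ replacing $\boldsymbol{a}_{\hat{k}}$). The crucial ingredient is that $\boldsymbol{\nu}_r$ is a \emph{task-irrelevant} direction, so every inner product of $\boldsymbol{\nu}_r$ with a feature vector $\boldsymbol{a}_k, \boldsymbol{b}_k$ (equivalently $\boldsymbol{\mu}_k^{\pm}$) vanishes by Definition~\ref{Def: Formal Word Model}, and $\boldsymbol{\nu}_r^\top \boldsymbol{\nu}_{r'}=\|\mathbf{u}\|^2\mathds{1}[r=r']$. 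The only surviving pieces in $\boldsymbol{\nu}_r^\top \boldsymbol{x}_{L+1}^n$ and $\boldsymbol{\nu}_r^\top(\boldsymbol{x}_l^n-\boldsymbol{x}_j^n)$ involve either the noise $\xi_{\boldsymbol{x}}$ or the indicator $\mathds{1}[r\in\mathcal{M}_l^n]$ for the chaos term.

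\textbf{Step 2 (cancellation via mirror symmetry).} I would then exploit two symmetries of $\mathcal{D}_S$: (i) $\pi_k^+=\pi_k^-$, so every prompt in $\mathcal{V}_k^+$ has a mirror partner in $\mathcal{V}_k^-$ with $y_{S_n}\mapsto -y_{S_n}$ and $y_l^n\mapsto -y_l^n$; (ii) $\xi_{\boldsymbol{x}},\xi_{\boldsymbol{y}}$ are centered Gaussian, hence symmetric under negation. Under this mirror map, all features projected onto $\boldsymbol{\nu}_r$ (only noise and the $r$-th chaos coordinate) are even, the softmax weights $\sigma_l,\sigma_j$ are preserved because $\mathbb{E}[W_Q^{\boldsymbol{x}}],\mathbb{E}[W_K^{\boldsymbol{x}}]$ are invariant under Lemma~\ref{lem:symmetry learning} and the query/key inner products are symmetric in $y_l^n$, and $\ell'_n$ is invariant (as $y_{S_n}\cdot f$ is unchanged). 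Only the explicit $y_{S_n}$ prefactor flips sign. Summing the mirror pair yields zero, so the expected non-regularization part of the projected gradient vanishes, leaving $-\eta_t\lambda\,\tau_{Q,r}^{(t)}$.

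\textbf{Step 3 (remaining identities).} The same argument, with $\boldsymbol{\nu}_r$ replaced by $\boldsymbol{a}_{\hat k}\to\boldsymbol{\nu}_r$ in \eqref{eq: K ak chaos and contri}, gives the $\tau_{K,r}$ identity. For $\rho_{Q,w},\rho_{K,w}$ the reduction is even cleaner: $\boldsymbol{u}_w^\perp$ is orthogonal to every column of $\mathbf{M}$, so no chaos indicator survives and only zero-mean noise terms remain, which vanish in expectation. For $\rho_{O_{(i,\cdot)},\hat{w}}$, I project the gradient in \eqref{eq:gradient of WOI} onto $\boldsymbol{q}_w^\perp$: the $\boldsymbol{d}_k^\top$ and $\boldsymbol{c}_k$ terms annihilate since $\boldsymbol{q}_w^\perp\perp\mathrm{span}(\mathbf{Q})$, and the chaos label sum $\sum_{s\in\mathcal{M}_l^n}\mathbf{Q}_s$ likewise has zero projection (task-irrelevant columns of $\mathbf{Q}$ are zero by Definition~\ref{Def:Word Model}), leaving only $\xi_{\boldsymbol{y},l}^{n\top}\boldsymbol{q}_w^\perp$, which is mean-zero. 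Hence only $\lambda\,\rho_{O_{(i,\cdot)},\hat w}^{(t)}$ remains.

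\textbf{Main obstacle.} The subtlety lies in verifying that the softmax attention weights $(\sigma_S^{(t)})_l^n$ are genuinely invariant under the mirror transformation $y_l^n\mapsto -y_l^n$ when the projected gradient is evaluated at $\mathbb{E}({\Psi^\prime}^{(t-1)})$. This is where the inductive use of Lemma~\ref{lem:symmetry learning}, which keeps $\mathbb{E}[W_Q^{\boldsymbol{x}}],\mathbb{E}[W_K^{\boldsymbol{x}}]$ in a symmetric decomposition so that the query-key product only sees $\boldsymbol{b}_k^\top\boldsymbol{b}_k$ and $\boldsymbol{a}_k^\top\boldsymbol{a}_k$ type quantities, is essential. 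Once this invariance is established, the mirror cancellation proceeds cleanly and the lemma follows.
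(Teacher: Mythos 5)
Your proposal is correct and follows essentially the same route as the paper: the paper treats this lemma as a direct consequence of the gradient forms together with the symmetry of the prompt distribution established in Lemma~\ref{lem:symmetry learning} (whose proof asserts that the expected data-dependent gradient lies entirely in the $\boldsymbol{a}_s\boldsymbol{a}_s^{\top},\boldsymbol{b}_s\boldsymbol{b}_s^{\top}$ (resp.\ $\boldsymbol{c}_k,\boldsymbol{d}_k$) span, so the $\boldsymbol{\nu}_r$, $\boldsymbol{u}_w^{\perp}$ and $\boldsymbol{q}_w^{\perp}$ coefficients receive only the $\lambda$-regularization decay). Your projection-plus-mirror-symmetry argument simply spells out in more detail the cancellation the paper invokes with its one-line appeal to the symmetric prompt distribution, orthogonality of the complement directions, and zero-mean noise.
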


\begin{lemma}
    For $t>0$, we have
    \begin{equation}
        \begin{aligned}
            & \alpha_{Q, k}^{(t+1)} = (1 - {\eta_{t}} \lambda )\alpha_{Q, k}^{(t)}, \quad \alpha_{K, k}^{(t+1)} = (1 - {\eta_{t}} \lambda )\alpha_{K, k}^{(t)},\\
            & \beta_{Q, k}^{(t+1)} = (1- {\eta_{t}} \lambda) \beta_{Q, k}^{(t)} \\
            & \phantom{ \mathbb{E}[\beta_{Q, k}^{(t+1)} \mid \Psi^{(t)} } -\dfrac{4 {\eta_{t}}\beta_{K,k}^{(t)} \|\boldsymbol{b}_k \|^4}{K_1} \sum_{e \in [\pm]} \sum_{i \in [m]}  \mathbf{r}_i \beta_{O_{(i,\cdot)}, k}^{(t)} \underset{n \in \mathcal{V}_{k}^{e}}{\mathbb{E}}[{\ell_n^{\prime}}^{(t)} {\mathds{1}_{O_{(i)}}^n}^{(t)} (\sum_{j \in S_{n, k}^{+}}{(\sigma_{S}^{(t)})}_{j}^{n}) (\sum_{j \in S_{n, k}^{-}}{(\sigma_{S}^{(t)})}_{j}^{n})],\\
            & \beta_{K, k}^{(t+1)} = (1- {\eta_{t}} \lambda) \beta_{K, k}^{(t)} \\
            & \phantom{ \mathbb{E}[\beta_{Q, k}^{(t+1)} \mid \Psi^{(t)} }-\dfrac{4 {\eta_{t}}\beta_{Q,k}^{(t)} \|\boldsymbol{b}_k \|^4}{K_1} \sum_{e \in [\pm]}  \sum_{i \in [m]} \mathbf{r}_i \beta_{O_{(i,\cdot)}, k}^{(t)} \underset{n \in \mathcal{V}_{k}^{e}}{\mathbb{E}}[{\ell_n^{\prime}}^{(t)} {\mathds{1}_{O_{(i)}}^n}^{(t)} (\sum_{j \in S_{n, k}^{+}}{(\sigma_{S}^{(t)})}_{j}^{n}) (\sum_{j \in S_{n, k}^{-}}{(\sigma_{S}^{(t)})}_{j}^{n})].\\
        \end{aligned}
    \end{equation}
\label{lem:expected evolution of attn}\end{lemma}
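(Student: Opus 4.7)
The plan is to project the stochastic gradient updates derived in Lemmas~\ref{lem: gradient of original concept learning} and~\ref{lem: gradient of original semantic learning} onto the basis vectors $\boldsymbol{a}_{\hat{k}}$ and $\boldsymbol{b}_{\hat{k}}$, and then integrate out the random batch $\mathcal{B}_t$ together with the Gaussian data noise under the inductive hypothesis that $\mathbb{E}[\mathbf{W}_Q^{\boldsymbol{x}(t)}]$ and $\mathbb{E}[\mathbf{W}_K^{\boldsymbol{x}(t)}]$ admit the idempotent decomposition of Lemma~\ref{lem:symmetry learning}. The regularizer contributes the universal $-\eta_t \lambda$ shrinkage to every coefficient, so the remaining work is to identify which pieces of $I_{\cdot,\cdot,\text{chaos}}^{(t)}$ and $I_{\cdot,\cdot,\text{contri}}^{(t)}$ survive expectation.

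For the concept-direction coefficients $\alpha_{Q,k}$ and $\alpha_{K,k}$, the strategy is to pair each summand with a zero-mean factor. I would split $I_{Q,\boldsymbol{a}_{\hat{k}},\text{chaos}}^{(t)}$ in Eq.~(\ref{eq: Q ak chaos and contri}) into a $k \neq \hat{k}$ block and a $k = \hat{k}$ block. The former carries an explicit sign $e$ paired against quantities that are symmetric in $e$, so the prior $\pi_k^+ = \pi_k^- = 1/(2K_1)$ forces cancellation across $\mathcal{V}_k^+$ and $\mathcal{V}_k^-$; the latter has the Gaussian noise factor $\xi_{\boldsymbol{x}, l}^n - \xi_{\boldsymbol{x}, j}^n$, which integrates to zero by independence. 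The contri term $I_{Q,\boldsymbol{a}_{\hat{k}},\text{contri}}^{(t)}$ also vanishes, because after eliminating the noise its surviving kernel contains $\boldsymbol{a}_{\hat{k}}^\top \mathbb{E}[\mathbf{W}_K^{\boldsymbol{x}(t)}]\boldsymbol{b}_{\hat{k}}$, which equals zero by Lemma~\ref{lem:symmetry learning} together with $\langle \boldsymbol{a}_{\hat{k}}, \boldsymbol{b}_{\hat{k}}\rangle = 0$ from Eq.~(\ref{eq:def of a_k, b_k}). This yields $\alpha_{Q,k}^{(t+1)} = (1 - \eta_t \lambda)\alpha_{Q,k}^{(t)}$, and an identical argument with $Q \leftrightarrow K$ gives the $\alpha_{K,k}$ update.

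For the label-semantic coefficients $\beta_{Q,k}$ and $\beta_{K,k}$, the same noise- and sign-cancellation reasoning eliminates $I_{Q,\boldsymbol{b}_{\hat{k}},\text{chaos}}^{(t)}$ entirely, as well as all but one monomial inside $I_{Q,\boldsymbol{b}_{\hat{k}},\text{contri}}^{(t)}$ in Eq.~(\ref{eq: Q bk chaos and contri}). The survivor is the term proportional to $(\sum_{j \in S_{n,k}^+}\sigma_j^n)(\sum_{j \in S_{n,k}^-}\sigma_j^n)\,\boldsymbol{b}_{\hat{k}}$ inside the curly braces; taking expectation and applying the idempotent decomposition of Lemma~\ref{lem:symmetry learning} gives a scalar multiple of $\boldsymbol{b}_k^\top \mathbb{E}[\mathbf{W}_K^{\boldsymbol{x}(t)}] \boldsymbol{b}_k \cdot \mathbf{r}_i \mathbb{E}[\mathbf{W}_{O_{(i,\cdot)}}^{\boldsymbol{y}(t)}]\boldsymbol{d}_k$, which by the definitions of $\beta_{K,k}^{(t)}$ and $\beta_{O_{(i,\cdot)},k}^{(t)}$ is precisely $\beta_{K,k}^{(t)}\cdot \mathbf{r}_i \beta_{O_{(i,\cdot)},k}^{(t)}$ up to the prescribed power of $\|\boldsymbol{b}_k\|$. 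Multiplying by the class prior $\pi_k^e = 1/(2K_1)$ and summing over $e \in [\pm]$ and $i \in [m]$ reads off the stated formula; the $\beta_{K,k}$ update mirrors this calculation with $Q \leftrightarrow K$ swapped, noting that the symmetry established in Lemma~\ref{lem:symmetry learning} guarantees the two projections evolve coherently.

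The main obstacle lies in the second-step bookkeeping: the chaos term in Eq.~(\ref{eq: Q bk chaos and contri}) contains sub-terms that are symmetric in the demonstration signs $y_l^n, y_j^n$ but asymmetric in their assignment to $S_{n,\hat{k}}^+$ versus $S_{n,\hat{k}}^-$, and one must carefully check that their matrix contractions reduce to a form involving $\boldsymbol{b}_{\hat{k}}^\top \mathbf{W}_K^{\boldsymbol{x}} (\xi_{\boldsymbol{x},l}^n - \xi_{\boldsymbol{x},j}^n)$, which vanishes in expectation, rather than to a form involving $\boldsymbol{b}_{\hat{k}}^\top \mathbf{W}_K^{\boldsymbol{x}} \boldsymbol{b}_{\hat{k}}$, which would spuriously inherit a $\beta_{K,k}^{(t)}$ factor and break the target identity. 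The idempotent structure in Lemma~\ref{lem:symmetry learning} is precisely what ensures that all such off-target contractions evaluate to zero, so the critical check is verifying that the decomposition hypothesis is preserved at step $t+1$, which closes the induction needed to validate the update rule.
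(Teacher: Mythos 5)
Your proposal is correct and follows essentially the same route as the paper: the paper's own proof simply invokes the gradient forms of Lemmas \ref{lem: gradient of original concept learning}--\ref{lem: gradient of original semantic learning} together with the symmetry/idempotent decomposition of Lemma \ref{lem:symmetry learning}, which is exactly the projection-plus-expectation cancellation argument you spell out (sign symmetry over $\mathcal{V}_k^{\pm}$, zero-mean noise, and orthogonality killing all off-target contractions, leaving only the $(\sum_{j\in S_{n,k}^{+}}\sigma_j)(\sum_{j\in S_{n,k}^{-}}\sigma_j)$ term for the $\boldsymbol{b}_k$-direction). Your added care about preserving the decomposition at step $t+1$ is already handled by the induction in Lemma \ref{lem:symmetry learning}, so no gap remains.
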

\begin{proof}
    The deduction is direct by the symmetric property of prompt distribution in Lemma \ref{lem:symmetry learning}, and the gradient forms in Lemma \ref{lem: gradient of original concept learning} and Lemma \ref{lem: gradient of original semantic learning}. 
\end{proof}

This lemma reveals that the attention layer mainly serves to learn the different semantic part of each concept, and hardly have interest in learning the shared co-concept part. Also, collaborating with Lemma \ref{lem:symmetry learning}, we see that $\beta_{Q, k}^{(t+1)} = \beta_{K, k}^{(t+1)}$, this indicates that the signal of $\beta_{Q, k}^{(t)} \cdot \beta_{K,k}^{(t)}  $ would remain positive.

Also, by the symmetry property of learning progress denoted in Lemma \ref{lem:symmetry learning}, we see that $\forall k \in [K_1], \alpha_{Q, k}^{(t)}=\alpha_{K, k}^{(t)}, \beta_{Q, k}^{(t)}=\beta_{K, k}^{(t)}$. Observe that for $\forall k \in [K_1],$
\begin{equation}
    \begin{aligned}
        & \underset{n \in \mathcal{V}_k^{y_{S_n}}}{\mathbb{E}}[\sum_{j \in S_{n, k}^{y_{S_n}}}{(\sigma_{S}^{(t)})}_{j}^{n}] =  \dfrac{\exp(\beta_{Q, k}^{(t)} \cdot \beta_{K, k}^{(t)} /\|\boldsymbol{b}_{{k}}\|^2)}{\exp(\beta_{Q, k}^{(t)} \cdot \beta_{K, k}^{(t)} /\|\boldsymbol{b}_{{k}}\|^2) + \exp(-\beta_{Q, k}^{(t)} \cdot \beta_{K, k}^{(t)} /\|\boldsymbol{b}_{{k}}\|^2)},\\
        & \underset{n \in \mathcal{V}_k^{y_{S_n}}}{\mathbb{E}}[ \sum_{j \in S_{n, k}^{-y_{S_n}}}{(\sigma_{S}^{(t)})}_{j}^{n}] =  \dfrac{\exp(-\beta_{Q, k}^{(t)} \cdot \beta_{K, k}^{(t)} /\|\boldsymbol{b}_{{k}}\|^2)}{\exp(\beta_{Q, k}^{(t)} \cdot \beta_{K, k}^{(t)} /\|\boldsymbol{b}_{{k}}\|^2) + \exp(-\beta_{Q, k}^{(t)} \cdot \beta_{K, k}^{(t)} /\|\boldsymbol{b}_{{k}}\|^2)}.\\
    \end{aligned}
\label{eq:expected weights}\end{equation}
 We see from Lemma \ref{lem:initialization} that $\alpha_{Q, k}^{(0)} = \alpha_{K, k}^{(0)} = \sigma_{0} \| \boldsymbol{a}_k \|^2, \beta_{Q, k}^{(0)} = \beta_{K, k}^{(0)} = \sigma_{0} \| \boldsymbol{b}_k \|^2$. Therefore, for $t=0, \forall k \in [K_1]$, we have
 \begin{equation}
    \begin{aligned}
        & \underset{n \in \mathcal{V}_k^{y_{S_n}}}{\mathbb{E}}[ \sum_{j \in S_{n, k}^{y_{S_n}}}{(\sigma_{S}^{(0)})}_{j}^{n}] =  \dfrac{\exp(\sigma_{0}^2 \| \boldsymbol{b}_k \|^2)}{\exp(\sigma_{0}^2 \| \boldsymbol{b}_k \|^2) + \exp(-\sigma_{0}^2 \| \boldsymbol{b}_k \|^2)}, \\
        & \underset{n \in \mathcal{V}_k^{y_{S_n}}}{\mathbb{E}}[ \sum_{j \in S_{n, k}^{-y_{S_n}}}{(\sigma_{S}^{(0)})}_{j}^{n}] =  \dfrac{\exp(-\sigma_{0}^2 \| \boldsymbol{b}_k \|^2)}{\exp(\sigma_{0}^2 \| \boldsymbol{b}_k \|^2) + \exp(-\sigma_{0}^2 \| \boldsymbol{b}_k \|^2)}.\\
    \end{aligned}
\end{equation}
Obviously, $\underset{n \in \mathcal{V}_k^{y_{S_n}}}{\mathbb{E}}[ \sum_{j \in S_{n, k}^{y_{S_n}}}{(\sigma_{S}^{(0)})}_{j}^{n}] > 0.5 > \underset{n \in \mathcal{V}_k^{y_{S_n}}}{\mathbb{E}}[ \sum_{j \in S_{n, k}^{-y_{S_n}}}{(\sigma_{S}^{(0)})}_{j}^{n}]$. Meanwhile we see that $\underset{n \in \mathcal{V}_k^{y_{S_n}}}{\mathbb{E}}[ \sum_{j \in S_{n, k}^{y_{S_n}}}{(\sigma_{S}^{(0)})}_{j}^{n}] \approx 0.5$ due to the small $\sigma_{0} = O(\|\mathbf{u}\|^{-2})$ by Condition \ref{Condition}.

The observation in Eq. (\ref{eq:expected weights}), collaborating with the positiveness of $\beta_{Q, k}^{(t)} \cdot \beta_{K,k}^{(t)}  $, we see that the inequality $\underset{n \in \mathcal{V}_k^{y_{S_n}}}{\mathbb{E}}[ \sum_{j \in S_{n, k}^{y_{S_n}}}{(\sigma_{S}^{(t)})}_{j}^{n}] > \underset{n \in \mathcal{V}_k^{y_{S_n}}}{\mathbb{E}}[ \sum_{j \in S_{n, k}^{-y_{S_n}}}{(\sigma_{S}^{(t)})}_{j}^{n}]$ will remain during whole iteration. Also, by Eq. (\ref{eq:expected weights}), we know that 
\begin{equation}
    \mathbb{E}[(\sum_{j \in S_{n, k}^{+}}{(\sigma_{S}^{(t)})}_{j}^{n}) (\sum_{j \in S_{n, k}^{-}}{(\sigma_{S}^{(t)})}_{j}^{n})] = {\left(\exp(\beta_{Q, k}^{(t)} \cdot \beta_{K, k}^{(t)} /\|\boldsymbol{b}_{{k}}\|^2) + \exp(-\beta_{Q, k}^{(t)} \cdot \beta_{K, k}^{(t)} /\|\boldsymbol{b}_{{k}}\|^2)\right)}^{-2}.
\label{eq:product of attn}\end{equation} 

This observation under our expectation scenario greatly facilitate our analysis. Since ${\ell_n^{\prime}}^{(t)} < 0$, it's obvious that the signal of $\mathbf{r}_i \beta_{O_{(i,\cdot)}, k}^{(t)}$ will determine whether the neuron $i \in [m]$ will serve to increase or decrease the $\beta_{Q, k}^{(t)}$ and $\beta_{K, k}^{(t)}$ during the gradient update. We therefore start to analyze the MLP's update below based on Lemma \ref{lem: gradient of original semantic learning}.

\begin{lemma}
    For $t>0$, we have
    \begin{equation}
        \begin{aligned}
        & \alpha_{O_{(i,\cdot)}, k}^{(t+1)} = (1- {\eta_{t}} \lambda) \alpha_{O_{(i,\cdot)}, k}^{(t)} - {\eta_{t}}\underbrace{\dfrac{ \| \boldsymbol{c}_k \|^2}{2K_1} \sum_{e \in [\pm]} [e \mathbf{r}_i \cdot \underset{n \in \mathcal{V}_{k}^{e}}{\mathbb{E}}({\ell_{n}^{\prime}}^{(t)}{\mathds{1}_{O_{(i)}}^n}^{(t)} )]}_{\mathbb{E}(I_{O_{(i, \cdot)}, \boldsymbol{c}_{k}, \text{contri}}^{(t)})} \\
        & \phantom{\alpha_{O_{(i,\cdot)}, k}^{(t+1)} = (1- {\eta_{t}} \lambda) \alpha_{O_{(i,\cdot)}, k}^{(t)}}- {\eta_{t}}\underbrace{\dfrac{ (K_1-1)\| \boldsymbol{c}_k \|^2}{2K_1 K} \sum_{e \in [\pm]} [e \mathbf{r}_i \cdot \underset{n \in \mathcal{V}_{\neg k}^{e}}{\mathbb{E}}({\ell_{n}^{\prime}}^{(t)}{\mathds{1}_{O_{(i)}}^n}^{(t)} )]}_{\mathbb{E}(I_{O_{(i, \cdot)}, \boldsymbol{c}_{k}, \text{chaos}}^{(t)})}, \\
        & \beta_{O_{(i,\cdot)}, k}^{(t+1)} =(1- {\eta_{t}} \lambda) \beta_{O_{(i,\cdot)}, k}^{(t)} -\frac{ {\eta_{t}}\|\boldsymbol{d}_k \|^2 \mathbf{r}_i}{2K_1} \sum_{e \in [\pm]}  \underset{n \in \mathcal{V}_{k}^{e}}{\mathbb{E}}[{\ell_{n}^{\prime}}^{(t)}{\mathds{1}_{O_{(i)}}^n}^{(t)}(\sum_{l \in S_{n, k}^{e}} {(\sigma_{S}^{(t)} )}_{l}^{n} \\
        & \phantom{\beta_{O_{(i,\cdot)}, k}^{(t+1)} =(1- {\eta_{t}} \lambda) \beta_{O_{(i,\cdot)}, k}^{(t)} -\frac{ {\eta_{t}}\|\boldsymbol{d}_k \|^2 \mathbf{r}_i}{2K_1} \sum_{e \in [\pm]}  \underset{n \in \mathcal{V}_{k}^{e}}{\mathbb{E}}[}- \sum_{l \in S_{n, k}^{-e}} {(\sigma_{S}^{(t)} )}_{l}^{n} )], \\
        \end{aligned}
    \end{equation}
    where $k \in [K_1]$.
\label{lem:expected evolution of mlp}\end{lemma}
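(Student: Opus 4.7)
The plan is to start from the single-step identity provided by Lemma~\ref{lem: def of attn coefficients}, which expresses $\alpha_{O_{(i,\cdot)},k}^{(t+1)}-\alpha_{O_{(i,\cdot)},k}^{(t)}$ as $-\eta_t\,\mathbb{E}_{\mathcal{B}_t}[\nabla_{\mathbf{W}_{O_{(i,\cdot)}}^{\boldsymbol{y}}{}^{(t)}}L_{\mathcal{B}_t}({\Psi^\prime}^{(t)})]\boldsymbol{c}_k$, with an analogous formula along $\boldsymbol{d}_k$ for $\beta$. Substituting the closed-form gradient~(\ref{eq:gradient of WOI}) decomposes the right-hand side into the chaos and contri terms (as named in Lemmas~\ref{lem:original MLP concept learning} and~\ref{lem:original MLP semantic learning}) plus the regularizer $\lambda\mathbf{W}_{O_{(i,\cdot)}}^{\boldsymbol{y}}{}^{(t)}$. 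The regularizer contracts to $(1-\eta_t\lambda)\alpha_{O_{(i,\cdot)},k}^{(t)}$ (respectively $(1-\eta_t\lambda)\beta_{O_{(i,\cdot)},k}^{(t)}$) via the orthogonal decomposition~(\ref{eq: decomposition of WOi}). Since $\mathcal{B}_t$ consists of i.i.d.\ draws from $\mathcal{D}_S$, the $1/B$ normalization cancels under expectation and the remaining task is to evaluate a single-sample expectation, weighted by $\mathbb{P}(n\in\mathcal{V}_k^e)=1/(2K_1)$.

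For the $\alpha$ update, the contri part $\mathbb{E}(I_{O_{(i,\cdot)},\boldsymbol{c}_k,\text{contri}}^{(t)})$ is obtained by splitting the single-sample expectation across $\mathcal{V}_k^e$ for $e\in\{\pm\}$; each occurs with probability $1/(2K_1)$, producing directly the stated $\|\boldsymbol{c}_k\|^2/(2K_1)$ prefactor times $\sum_e e\,\mathbf{r}_i\,\mathbb{E}_{n\in\mathcal{V}_k^e}[{\ell_n^\prime}^{(t)}{\mathds{1}_{O_{(i)}}^n}^{(t)}]$. For the chaos part, the orthogonality of the label dictionary in Definition~\ref{Def: Formal Word Model} eliminates the $\boldsymbol{c}_{k'}$ column for $k'\neq k$, while $\mathbb{E}[\xi_{\boldsymbol{y},l}^n]=0$ eliminates the noise. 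Only the indices $s\in\{2k-1,2k\}$ inside $\mathcal{M}_l^n$ contribute, and both evaluate to $\mathbf{Q}_s^\top\boldsymbol{c}_k=\|\boldsymbol{c}_k\|^2$ with equal marginal probabilities under $\mathcal{D}_S$, so the two terms add constructively. Combining this with $\sum_l(\sigma_S^{(t)})_l^n=1$ and summing over the $K_1-1$ concepts $k'\neq k$ yields the prefactor $(K_1-1)\|\boldsymbol{c}_k\|^2/(2K_1 K)$ matching the claim.

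For the $\beta$ update, the contri part follows the same template, except the projection $(2\sum_{l\in S_{n,k}^e}(\sigma_S^{(t)})_l^n-1)\|\boldsymbol{d}_k\|^2=(\sum_{l\in S_{n,k}^e}(\sigma_S^{(t)})_l^n-\sum_{l\in S_{n,k}^{-e}}(\sigma_S^{(t)})_l^n)\,\|\boldsymbol{d}_k\|^2$, obtained from $\sum_{l\in S_{n,k}^+}(\sigma_S^{(t)})_l^n+\sum_{l\in S_{n,k}^-}(\sigma_S^{(t)})_l^n=\sum_l(\sigma_S^{(t)})_l^n=1$, gives the softmax-difference structure. The pivotal observation is that the chaos contribution for $\beta$ vanishes exactly: for prompts with co-concept $k'\neq k$ the indices $2k-1$ and $2k$ enter $\mathcal{M}_l^n$ with identical marginals under $\mathcal{D}_S$, while $\mathbf{Q}_{2k-1}^\top\boldsymbol{d}_k=+\|\boldsymbol{d}_k\|^2$ and $\mathbf{Q}_{2k}^\top\boldsymbol{d}_k=-\|\boldsymbol{d}_k\|^2$ are antipodal and cancel in expectation; for prompts with co-concept $k$, the set $\{2k-1,2k\}$ is excluded from $\mathcal{M}_l^n$ by construction. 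The principal subtlety in both updates is decoupling $(\sigma_S^{(t)})_l^n$ from $\mathcal{M}_l^n$ inside the expectation, since $\boldsymbol{x}_l^n$ depends on $\mathcal{M}_l^n$; this is resolved by Lemma~\ref{lem:symmetry learning}, which guarantees that the expected key/query matrices project non-trivially only along the co-concept directions $\{\boldsymbol{a}_{k'},\boldsymbol{b}_{k'}\}$, so the softmax weights become insensitive to $\mathcal{M}_l^n$ and the expectation factorizes cleanly, with $\sum_l(\sigma_S^{(t)})_l^n=1$ doing the final accounting.
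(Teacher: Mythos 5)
Your proposal is correct and follows essentially the same route as the paper's (one-line) proof: it expands the gradient forms of Lemmas~\ref{lem:original MLP concept learning} and~\ref{lem:original MLP semantic learning}, takes the expectation over the mixture $\mathcal{D}_S$ with weights $1/(2K_1)$, and uses label-dictionary orthogonality, zero-mean noise, the softmax normalization, and the symmetry of Lemma~\ref{lem:symmetry learning} to decouple the attention weights — exactly the ingredients the paper invokes. The only soft spot is that the chaos prefactor $(K_1-1)\|\boldsymbol{c}_k\|^2/(2K_1K)$ is asserted rather than fully tallied (it hinges on how the cross-concept pair $\{2k-1,2k\}$ is activated under $\mathcal{D}_{\boldsymbol{z}}$), but this bookkeeping is at the same level of detail as the paper itself.
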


\begin{proof}
    The proof is direct by the symmetric property of prompt distribution in Lemma \ref{lem:symmetry learning}, and the gradient forms in Lemma \ref{lem:original MLP concept learning} and Lemma \ref{lem:original MLP semantic learning}. 
\end{proof}

An interesting fact is that the $\mathbb{E}(I_{O_{(i, \cdot)}, \boldsymbol{c}_{k}, \text{chaos}}^{(t)})$ also contributes to the learning of $k$-th concept. This actually suits our intuition that if similar things appear in various fields (concepts), the learning process can help integrate and facilitate the learning. The following lemma demonstrate the lower bound of the attention assignment, which emerge from the good property of our expected attention.\par

\begin{lemma}
    For a certain iterations $t \in (0, T_1)$, for $\forall k \in [K_1], e \in [\pm]$, we have 
    \begin{enumerate}
        \item The neuron set $\mathbb{E}[ (\mathcal{W}_{k, n}^{e}(t)- \mathcal{U}_{k, n}^{e}(t)) - \mathcal{U}_{k, n}^{-e}(t)]$ is non-increasing, and all of this neuron will get deactivated. Additionally, both $\mathbb{E}[ \alpha_{O_{(i, \cdot)}, {k}}^{(t)}]$ and $e \cdot \beta_{O_{(i, \cdot)}, {k}}^{(t)}$ would monotonically decrease. Also, it holds that $e \cdot \beta_{O_{(i, \cdot)}, {k}}^{(t)}>0$ and $\lvert \alpha_{O_{(i, \cdot)}, {k}}^{(t)} \rvert \leq e \cdot \beta_{O_{(i, \cdot)}, {k}}^{(t)}$;
        \item The neuron set $\mathbb{E}[\mathcal{U}_{k, n}^{e}(t) - (\mathcal{W}_{k, n}^{-e}(t)- \mathcal{U}_{k, n}^{-e}(t))]$ is non-increasing, and all neurons in it will turn into $\underset{n \in \mathcal{V}_{k}^{e}}{\mathbb{E}}[\mathcal{U}_{k, n}^{e}(t) \cap (\mathcal{W}_{k, n}^{-e}(t)- \mathcal{U}_{k, n}^{-e}(t))]$. Additionally, both $\mathbb{E}[ \alpha_{O_{(i, \cdot)}, {k}}^{(t)}]$ and $e \cdot \beta_{O_{(i, \cdot)}, {k}}^{(t)}$ would monotonically increase. Also, it holds that $e \cdot \beta_{O_{(i, \cdot)}, {k}}^{(t)}>0$ and $\lvert \alpha_{O_{(i, \cdot)}, {k}}^{(t)} \rvert \leq e \cdot \beta_{O_{(i, \cdot)}, {k}}^{(t)}$;
        \item For $\underset{n \in \mathcal{V}_{k}^{e}}{\mathbb{E}}[\mathcal{U}_{k, n}^{e}(t) \cap (\mathcal{W}_{k, n}^{-e}(t)- \mathcal{U}_{k, n}^{-e}(t))]$, the $e \cdot \beta_{O_{(i, \cdot)}, {k}}^{(t)}$ would monotonically increase. Besides, when there exists constant $C\geq 1$ such that
        \[
            \underset{n \in \mathcal{V}_{k}^{e}}{\mathbb{E}}({\ell_{n}^{\prime}}^{(t)} )] \leq C \underset{n \in \mathcal{V}_{k}^{-e}}{\mathbb{E}}({\ell_{n}^{\prime}}^{(t)} )].
        \]
        the $\mathbb{E}[ \alpha_{O_{(i, \cdot)}, {k}}^{(t)}]$ would be contributed to increase, otherwise it will decrease. Also, $\lvert \alpha_{O_{(i, \cdot)}, {k}}^{(t)} \rvert \geq \mathbb{E}[\lvert e (2\sum_{l \in S_{n, k}^{e}}{(\sigma_{S}^{(t)})}_{l}^{n} - 1) \beta_{O_{(i, \cdot)}, {k}}^{(t)} \rvert]$ and $\mathbb{E}[ \alpha_{O_{(i, \cdot)}, {k}}^{(t)}]>0$;
        \item All the neurons in $\underset{n \in \mathcal{V}_{k}^{e}}{\mathbb{E}}[\mathcal{U}_{k, n}^{e}(t) \cap (\mathcal{W}_{k, n}^{-e}(t)- \mathcal{U}_{k, n}^{-e}(t))]$ will ultimately either have its coefficient update stuck due to regularization, or grow into a changing margin into $\mathbb{E}[\mathcal{U}_{k, n}^{e}(t) - (\mathcal{W}_{k, n}^{-e}(t)- \mathcal{U}_{k, n}^{-e}(t))]$ where
        \[
         \alpha_{O_{(i, \cdot)}, {k}}^{(t)} \approx \mathbb{E}[(2\sum_{l \in S_{n, k}^{e}}{(\sigma_{S}^{(t)})}_{l}^{n} - 1) e  \beta_{O_{(i, \cdot)}, {k}}^{(t)}]. 
        \]
    \end{enumerate}
\label{lem: evolution scenario}\end{lemma}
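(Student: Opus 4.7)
The plan is to prove \Lemref{lem: evolution scenario} by induction on $t$, at each step using the exact expected updates of \Lemref{lem:expected evolution of mlp} together with the symmetry identities (\ref{eq:expected weights})--(\ref{eq:product of attn}) to read off the sign of each increment. The base case $t=0$ is set up by \Lemref{lem:initial values} and \Lemref{lem:initialization}, which gives the four balanced initial neuron subpopulations for each concept $k$ and sign $e$, and verifies the invariants $\lvert\alpha_{O_{(i,\cdot)},k}^{(0)}\rvert\le \lvert\beta_{O_{(i,\cdot)},k}^{(0)}\rvert$ (because the Gaussian mass is spread over a larger variance along $\boldsymbol{c}_k$) and the sign pattern $\mathbf{r}_i e\beta_{O_{(i,\cdot)},k}^{(0)}>0$ inside each relevant set.

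For the inductive step I would decompose the neuron population by two binary features: whether $\mathbf{r}_i\cdot e>0$ (contributing vs.\ misleading) and whether the neuron is currently activated on $+e$ and on $-e$ prompts (i.e.\ membership in $\mathcal{W}_{k,n}^{\pm e}$ and $\mathcal{U}_{k,n}^{\pm e}$). Plugging the indicator pattern into \Lemref{lem:expected evolution of mlp}, the \emph{contribution} terms $\mathbb{E}(I_{O_{(i,\cdot)},\boldsymbol{c}_k,\text{contri}}^{(t)})$ and $\mathbb{E}(I_{O_{(i,\cdot)},\boldsymbol{d}_k,\text{contri}}^{(t)})$ reduce to simple sums over $e\in[\pm]$ weighted by $\mathbf{r}_i$, ${\ell'}^{(t)}<0$, and the expected attention masses in (\ref{eq:expected weights}); the \emph{chaos} term is dominated by the $K^{-1}$ scaling from Condition \ref{Condition}.3. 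For case (1), both $+e$ and $-e$ prompts activate the neuron with the wrong sign of $\mathbf{r}_i$, so the $\alpha$ update receives a negative contribution $-\eta_t\frac{\|\boldsymbol{c}_k\|^2}{K_1}\mathbf{r}_i\sum_e e\cdot\mathbb{E}[{\ell'}^{(t)}]\cdot(-)$, while the $\beta$ update, using the softmax gap $\sum_{l\in S_{n,k}^{e}}\sigma_S - \sum_{l\in S_{n,k}^{-e}}\sigma_S>0$ in concert with $\mathbf{r}_i<0$, decreases $e\beta$; hence both move toward zero and eventually the Relu turns off. For case (2), the symmetric signs push both $\alpha$ and $e\beta$ up, and since $e\beta$ grows faster (its update scales with $\|\boldsymbol{d}_k\|^2$ and accumulates from both $+e,-e$ batches with the same sign), the inner product $\mathbf{W}_{O_{(i,\cdot)}}\operatorname{attn}$ eventually becomes positive on $-e$ prompts too, migrating the neuron into case (3). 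In case (3), the two $\beta$-contributions add coherently so $e\beta$ strictly increases, while the two $\alpha$-contributions partially cancel; the net sign is given by $e\cdot(\mathbb{E}_{\mathcal V_k^{-e}}[{\ell'}^{(t)}]-\mathbb{E}_{\mathcal V_k^{e}}[{\ell'}^{(t)}])$, so when the $-e$ side is at least as hard (the constant $C\ge 1$ condition) $\alpha$ is pulled up. The lower bound $\lvert\alpha\rvert\ge\mathbb{E}[|(2\sum_{l\in S_{n,k}^e}\sigma_S-1)e\beta|]$ follows because, as soon as this relation is violated, $\operatorname{attn}$'s projection onto $\mathbf{W}_{O_{(i,\cdot)}}$ would flip on the $-e$ branch and the neuron would re-enter case (2), creating the self-correcting feedback.

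The fourth claim is a fixed-point argument: once the $\alpha,\beta$ dynamics force $\lvert\alpha_{O_{(i,\cdot)},k}^{(t)}\rvert\approx(2\sum_{l\in S_{n,k}^e}\sigma_S^{(t)}-1)\,e\,\beta_{O_{(i,\cdot)},k}^{(t)}$, the neuron lies exactly on the boundary between cases (2) and (3); the regularization contraction $(1-\eta_t\lambda)$ then balances the gradient contribution up to $O(\eta_t\lambda)$, so the coefficient is either pinned by regularization or oscillates across the boundary — the set membership is essentially stable. The main obstacle is the coupling through softmax: $\sum_{l\in S_{n,k}^{e}}\sigma_S^{(t)}$ depends on $\beta_{Q,k}^{(t)}\beta_{K,k}^{(t)}$ which itself evolves via \Lemref{lem:expected evolution of attn}, so monotonicity of the neuron populations cannot be read off coordinate-wise. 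I would address this by proving that $\beta_{Q,k}^{(t)}\beta_{K,k}^{(t)}$ is non-decreasing along the iterates (an easy consequence of $\mathbf{r}_i\beta_{O_{(i,\cdot)},k}^{(t)}$ having a majority of positive terms, itself maintained by the invariant in case (2)--(3)), so the attention gap $2\sum\sigma_S-1$ can only grow; this feeds back into the inductive hypothesis and closes the loop, ensuring that neurons leaving sets (1) and (2) never return.
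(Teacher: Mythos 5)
Your overall architecture (reading signs off Lemma \ref{lem:expected evolution of mlp} together with the symmetry identities, and splitting neurons by activation pattern and by the sign of $\mathbf{r}_i e$) matches the paper's approach, but the central mechanism in your case (2) is inverted, and this breaks the migration argument. For a neuron in $\mathbb{E}[\mathcal{U}_{k,n}^{e}(t) - (\mathcal{W}_{k,n}^{-e}(t)-\mathcal{U}_{k,n}^{-e}(t))]$ the indicator ${\mathds{1}_{O_{(i)}}^n}^{(t)}$ vanishes on $-e$ prompts (by definition such a neuron is inactive there), so only the $\mathcal{V}_k^{e}$ branch feeds its updates: the $\alpha_{O_{(i,\cdot)},k}$-increment scales with $\|\boldsymbol{c}_k\|^2$, while the $e\beta_{O_{(i,\cdot)},k}$-increment scales with $\|\boldsymbol{d}_k\|^2\,(2\sum_{l\in S_{n,k}^{e}}(\sigma_S^{(t)})_l^n-1) < \|\boldsymbol{d}_k\|^2 < \|\boldsymbol{c}_k\|^2$. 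Hence $\alpha$ grows \emph{faster} than $e\beta$ (the paper quantifies the speed ratio as at least $(\|\boldsymbol{c}_k\|/\|\boldsymbol{d}_k\|)^2$), and it is precisely this that makes the $-e$-prompt pre-activation, roughly $\alpha_{O_{(i,\cdot)},k}^{(t)} - (2\sum_{l\in S_{n,k}^{-e}}(\sigma_S^{(t)})_l^n-1)\,e\beta_{O_{(i,\cdot)},k}^{(t)}$, turn positive and push the neuron into set (3). Your claim that ``$e\beta$ grows faster \ldots and accumulates from both $+e,-e$ batches with the same sign'' is wrong on both counts for this set: the $-e$ batches contribute nothing while the neuron is inactive on them, and if $e\beta$ really outpaced $\alpha$ the $-e$ pre-activation would stay negative and no migration into case (3) would ever occur. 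The coherent two-sided $\beta$-accumulation only appears \emph{after} the neuron has entered set (3).

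Your patch for the softmax coupling also does not hold on the window the lemma covers: you propose to show $\beta_{Q,k}^{(t)}\beta_{K,k}^{(t)}$ (hence the attention gap) is non-decreasing, but the lemma is stated for $t\in(0,T_1)$, which is exactly the Period-1 regime where the paper allows $\sum_{i}\mathbf{r}_i\beta_{O_{(i,\cdot)},k}^{(0)}<0$ at initialization, so that $\beta_{Q,k}^{(t)}\beta_{K,k}^{(t)}$ may \emph{decrease}; the paper only needs, and only proves, that the gap stays positive, controlling its decrease from below via $\sigma_S^{*}$ and the ODE system of Lemma \ref{lem: ODE sys1}, not monotonicity. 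A smaller slip of the same kind: in your case (1) the neuron is likewise inactive on $-e$ prompts (otherwise, since $\mathbf{r}_i(-e)>0$, it would lie in $\mathcal{U}_{k,n}^{-e}$ and be excluded from that set), so it is not ``activated by both $+e$ and $-e$ prompts''; your sign conclusion there happens to survive, but this repeated misreading of which prompts activate which neurons is what produces the inverted case (2).
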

\begin{proof}
        By Lemma \ref{lem:expected evolution of mlp}, we see that $\forall i \in \mathbb{E}[\mathcal{U}_{k, n}^{e}(t)]$, $\alpha_{O_{(i, \cdot)}, {k}}^{(t)}$ and $e\beta_{O_{(i, \cdot)}, {k}}^{(t)}$ would be contributed by $\mathcal{V}_{k}^{e}$ to increase, and also $\forall i \in \mathbb{E}[\mathcal{W}_{k, n}^{e}(t)- \mathcal{U}_{k, n}^{e}(t)]$, $\alpha_{O_{(i, \cdot)}, {k}}^{(t)}$ and $e\beta_{O_{(i, \cdot)}, {k}}^{(t)}$ would be contributed by $\mathcal{V}_{k}^{e}$ to decrease. As such, the first and second point hold naturally by definition. The ultimate transformation of $\mathbb{E}[\mathcal{U}_{k, n}^{e}(t) - (\mathcal{W}_{k, n}^{-e}(t)- \mathcal{U}_{k, n}^{-e}(t))]$ into $\underset{n \in \mathcal{V}_{k}^{e}}{\mathbb{E}}[\mathcal{U}_{k, n}^{e}(t) \cap (\mathcal{W}_{k, n}^{-e}(t)- \mathcal{U}_{k, n}^{-e}(t))]$ attributes to the faster changing speed of $\alpha_{O_{(i, \cdot)}, {k}}^{(t)}$ compared to $e\beta_{O_{(i, \cdot)}, {k}}^{(t)}$ in the neuron sets $\mathbb{E}[\mathcal{U}_{k, n}^{e}(t)-(\mathcal{W}_{k, n}^{-e}(t) - \mathcal{U}_{k, n}^{-e}(t))]$, whose learning speed ratio is at least $(\| \boldsymbol{c}_{k} \|/\| \boldsymbol{d}_{k} \|)^{2}$. Therefore, the absolute value of $\alpha_{O_{(i, \cdot)}, {k}}^{(t)}$ will surpass that of $e\beta_{O_{(i, \cdot)}, {k}}^{(t)}$, which indicates the neuron would be activated for opposite labels, then the proof is completed. Given that $(\sum_{l \in S_{n, k}^{e}} {(\sigma_{S}^{(t)} )}_{l}^{n} - \sum_{l \in S_{n, k}^{-e}} {(\sigma_{S}^{(t)} )}_{l}^{n} )$ will remain positive, the discussion over $e\beta_{O_{(i, \cdot)}, {k}}^{(t)}$ is simple since it will always grow in $\mathbf{r}_i$'s direction, and thus the third and forth point hold. 

        Considering the growth of $\alpha_{O_{(i, \cdot)}, {k}}^{(t)}$, by $\pi_k^+ = \pi_k^- , P_{l, 2k-1}=P_{l, 2k}^{n}=\dfrac{1}{2}$, we know
    \[
        \underset{n \in \mathcal{V}_k^{e}}{\mathbb{E}}[\sum_{l \in S_{n, k}^{ e}}{(\sigma_{S}^{(t)})}_{l}^{n}] = \underset{n \in \mathcal{V}_k^{-e}}{\mathbb{E}}[\sum_{l \in S_{n, k}^{-e}}{(\sigma_{S}^{(t)})}_{l}^{n}],
    \]
    hence if $i \in \underset{n \in \mathcal{V}_{k}^{e}}{\mathbb{E}}[\mathcal{U}_{k, n}^{e}(t) \cap (\mathcal{W}_{k, n}^{-e}(t)- \mathcal{U}_{k, n}^{-e}(t))]$, it indicates that
    \[
        \mathbb{E}[\alpha_{O_{(i, \cdot)}, {k}}^{(t)} \pm (2\sum_{l \in S_{n, k}^{e}}{(\sigma_{S}^{(t)})}_{l}^{n} - 1) e  \beta_{O_{(i, \cdot)}, {k}}^{(t)}] \geq 0.
    \]
    We see that for $\underset{n \in \mathcal{V}_{k}^{e}}{\mathbb{E}}[\mathcal{U}_{k, n}^{e}(t) \cap (\mathcal{W}_{k, n}^{-e}(t)- \mathcal{U}_{k, n}^{-e}(t))]$, the $\mathbb{E}[\mathcal{V}_k^{e}]$ will serve to increase the $\alpha_{O_{(i, \cdot)}, {k}}^{(t)}$, but $\mathbb{E}[\mathcal{V}_k^{-e}]$ will serve to decrease the $\alpha_{O_{(i, \cdot)}, {k}}^{(t)}$. The contribution will tend to be positive if
    \[
        \underset{n \in \mathcal{V}_{k}^{e}}{\mathbb{E}}({\ell_{n}^{\prime}}^{(t)}(i \in \mathcal{U}_{k, n}^{e}(t) \cap (\mathcal{W}_{k, n}^{-e}(t)- \mathcal{U}_{k, n}^{-e}(t))) )] \geq \underset{n \in \mathcal{V}_{k}^{-e}}{\mathbb{E}}({\ell_{n}^{\prime}}^{(t)}\mathds{1}(i \in \mathcal{U}_{k, n}^{e}(t) \cap (\mathcal{W}_{k, n}^{-e}(t)- \mathcal{U}_{k, n}^{-e}(t))) )].
    \]

    Then, as $\mathbb{E}[(2\sum_{l \in S_{n, k}^{e}}{(\sigma_{S}^{(t)})}_{l}^{n} - 1) e  \beta_{O_{(i, \cdot)}, {k}}^{(t)}]$ of the neurons in and $\underset{n \in \mathcal{V}_{k}^{e}}{\mathbb{E}}[\mathcal{U}_{k, n}^{e}(t) \cap (\mathcal{W}_{k, n}^{-e}(t)- \mathcal{U}_{k, n}^{-e}(t))]$ will continue to grow, and finally it will be comparable to the $\mathbb{E}[ \alpha_{O_{(i, \cdot)}, {k}}^{(t)}]$. Otherwise it will continue to grow while the evolving speed of $\mathbb{E}[ \alpha_{O_{(i, \cdot)}, {k}}^{(t)}]$ is comparatably feeble as it receive the contribution oppositely from $\underset{n \in \mathcal{V}_{k}^{e}}{\mathbb{E}}({\ell_{n}^{\prime}}^{(t)}\mathds{1}(i \in \mathbb{E}[\mathcal{W}_{k, n}^{e}(t) - \mathcal{U}_{k, n}^{e}(t) \cap \mathcal{U}_{k, n}^{-e}(t)) )]$ and $ \underset{n \in \mathcal{V}_{k}^{-e}}{\mathbb{E}}({\ell_{n}^{\prime}}^{(t)}\mathds{1}(i \in \mathbb{E}[\mathcal{W}_{k, n}^{e}(t) - \mathcal{U}_{k, n}^{e}(t) \cap \mathcal{U}_{k, n}^{-e}(t)) )]$. Quantatively this is validated by our later results in Lemma \ref{lemma:ODE for A} where the $\underset{n \in \mathcal{V}_{k}^{e}}{\mathbb{E}}({\ell_{n}^{\prime}}^{(t)})-\underset{n \in \mathcal{V}_{k}^{-e}}{\mathbb{E}}({\ell_{n}^{\prime}}^{(t)})$ would be controlled by the initialization. Interestingly, we see that as $\mathbb{E}[(2\sum_{l \in S_{n, k}^{e}}{(\sigma_{S}^{(t)})}_{l}^{n} - 1) e  \beta_{O_{(i, \cdot)}, {k}}^{(t)}]$ grows up, its scale will surpass those of $\mathbb{E}[ \alpha_{O_{(i, \cdot)}, {k}}^{(t)}]$. Under this scenario, $\underset{n \in \mathcal{V}_{k}^{e}}{\mathbb{E}}[\mathcal{U}_{k, n}^{e}(t) \cap (\mathcal{W}_{k, n}^{-e}(t)- \mathcal{U}_{k, n}^{-e}(t))]$ will turn into $\mathbb{E}[\mathcal{U}_{k, n}^{e}(t) - (\mathcal{W}_{k, n}^{-e}(t)- \mathcal{U}_{k, n}^{-e}(t))]$, where $\mathbb{E}[ \alpha_{O_{(i, \cdot)}, {k}}^{(t)}]$ again continues to grow. Thus finally we have 
        \[
         \alpha_{O_{(i, \cdot)}, {k}}^{(t)} \approx \mathbb{E}[(2\sum_{l \in S_{n, k}^{e}}{(\sigma_{S}^{(t)})}_{l}^{n} - 1) e  \beta_{O_{(i, \cdot)}, {k}}^{(t)}]. 
        \]
    Lemma \ref{lem:regulerizing the models} will show that the growing of $\mathbb{E}[(2\sum_{l \in S_{n, k}^{e}}{(\sigma_{S}^{(t)})}_{l}^{n} - 1) e  \beta_{O_{(i, \cdot)}, {k}}^{(t)}]$ will stuck, and thus the growing of $\mathbb{E}[ \alpha_{O_{(i, \cdot)}, {k}}^{(t)}]$ will also stuck at the changing margin from $\underset{n \in \mathcal{V}_{k}^{e}}{\mathbb{E}}[\mathcal{U}_{k, n}^{e}(t) \cap (\mathcal{W}_{k, n}^{-e}(t)- \mathcal{U}_{k, n}^{-e}(t))]$ into $\mathbb{E}[\mathcal{U}_{k, n}^{e}(t) - (\mathcal{W}_{k, n}^{-e}(t)- \mathcal{U}_{k, n}^{-e}(t))]$.\par
    The proof is completed.
\end{proof}

\begin{proof}
    \textit{Proof of Lemma \ref{lem: equivalence of expected 0-1 loss convergence}}. To examine the 0-1 loss, by definition, we know
    \[
    \begin{aligned}
    L_{\mathcal{D}^{*}}^{0-1}(\mathbb{E}(\Psi^{t})) & = \mathbb{P}_{S_n \sim \mathcal{D}^{*}}(y_{S_n} \cdot f (\mathbf{E}(S_n), \mathbb{E}(\Psi^{t})) \leq 0),\\
    & = \mathbb{P}_{S_n \sim \mathcal{D}_{S}}(y_{S_n} \cdot \sum_{e \in [\pm]} \dfrac{e}{m} \sum_{i \in \{ \mathbf{r}_i = \frac{e}{m} \}} \underset{\Psi^{(t)}}{\mathbb{E}}[\sigma_{R}({\mathbf{W}_{O_{(i, \cdot)}}^{\boldsymbol{y}}}^{(t)} \sum_{l\in [L]} {(\sigma_{S}^{(t)} )}_{l}^{n} \boldsymbol{y}_{l}^{n}) ] \leq 0),\\
    & = \mathbb{P} (\mathbb{E} [y_{S_n} \cdot  \left(\sum_{e \in [\pm]} \dfrac{e}{m} \sum_{i \in \{ \mathbf{r}_i = \frac{e}{m} \}} \sigma_{R}({\mathbf{W}_{O_{(i, \cdot)}}^{\boldsymbol{y}}}^{(t)} \sum_{l\in [L]} {(\sigma_{S}^{(t)} )}_{l}^{n} \boldsymbol{y}_{l}^{n}) \right) ] \leq 0),\\
    & = \mathbb{P} (\mathbb{E} \Big[\sum_{i \in \{ \mathbf{r}_i = \frac{y_{S_n}}{m} \}} \sigma_{R}\left(\alpha_{O_{(i, \cdot)}, {k}}^{(t)} +(2\sum_{l \in S_{n, k}^{y_{S_n}}}{(\sigma_{S}^{(t)})}_{l}^{n} - 1)  y_{S_n}  \beta_{O_{(i, \cdot)}, {k}}^{(t)}\right)\\
    & \phantom{= \mathbb{P} (\mathbb{E}\Big[}- \sum_{i \in \{ \mathbf{r}_i = -\frac{y_{S_n}}{m} \}} \sigma_{R}\left(\alpha_{O_{(i, \cdot)}, {k}}^{(t)} +(2\sum_{l \in S_{n, k}^{y_{S_n}}}{(\sigma_{S}^{(t)})}_{l}^{n} - 1) y_{S_n}  \beta_{O_{(i, \cdot)}, {k}}^{(t)}\right)\Big] \leq 0)\\
    & = \mathbb{P} (\mathbb{E} \Big[ (\sum_{i \in \mathcal{U}_{k, n}^{y_{S_n}}(t)} - \sum_{i \in \mathcal{W}_{k, n}^{y_{S_n}}(t) - \mathcal{U}_{k, n}^{y_{S_n}}(t)}) (\alpha_{O_{(i, \cdot)}, {k}}^{(t)} +(2\sum_{l \in S_{n, k}^{y_{S_n}}}{(\sigma_{S}^{(t)})}_{l}^{n} - 1) y_{S_n}  \beta_{O_{(i, \cdot)}, {k}}^{(t)})\Big] \leq 0).
    \end{aligned}
    \]

    Therefore, a sufficient condition for $L_{\mathcal{D}^{*}}^{0-1}(\mathbb{E}(\Psi^{t})) = 0$ is
    \begin{equation}
        \begin{aligned}
        \mathbb{E}[\sum_{i \in \mathcal{U}_{k, n}^{e}(t)} \alpha_{O_{(i, \cdot)}, {k}}^{(t)} +(2\sum_{l \in S_{n, k}^{e}}{(\sigma_{S}^{(t)})}_{l}^{n} - 1)e \cdot  \beta_{O_{(i, \cdot)}, {k}}^{(t)}] \geq & \mathbb{E}[\sum_{i \in \mathcal{W}_{k, n}^{e}(t)- \mathcal{U}_{k, n}^{e}(t)} \alpha_{O_{(i, \cdot)}, {k}}^{(t)} \\
        &+(2\sum_{l \in S_{n, k}^{e}}{(\sigma_{S}^{(t)})}_{l}^{n} - 1)e \cdot  \beta_{O_{(i, \cdot)}, {k}}^{(t)}],
        \end{aligned}
    \label{eq:sufficient inequality for 0-1 loss}\end{equation}
    for $\forall k \in [K_1], e \in [\pm]$.
\end{proof}

We know $ \forall i \in \mathcal{U}_{k, n}^{e}(t)$, $\mathbb{E}[e \cdot \beta_{O_{(i, \cdot)}, {k}}^{(t)}]$ in the left side of the inequality is increasing, and $\forall i \in \mathbb{E}[\mathcal{W}_{k, n}^{e}(t)- \mathcal{U}_{k, n}^{e}(t)]$, the $\mathbb{E}[e \cdot \beta_{O_{(i, \cdot)}, {k}}^{(t)}]$ in the right side of the inequality is decreasing, which is a good news since we want the left side exceed the right side. By Lemma \ref{lem: evolution scenario}, we see that all the neurons in $\mathbb{E}[ (\mathcal{W}_{k, n}^{e}(t)- \mathcal{U}_{k, n}^{e}(t)) - \mathcal{U}_{k, n}^{-e}(t)]$ will be deactivated, and all the neurons in $\mathbb{E}[\mathcal{U}_{k, n}^{e}(t)-(\mathcal{W}_{k, n}^{-e}(t) - \mathcal{U}_{k, n}^{-e}(t))]$ will turn into $\underset{n \in \mathcal{V}_{k}^{e}}{\mathbb{E}}[\mathcal{U}_{k, n}^{e}(t) \cap (\mathcal{W}_{k, n}^{-e}(t)- \mathcal{U}_{k, n}^{-e}(t))]$. \par

\subsection{First Stage: Growing of Coefficient}
In this stage, the coefficient update dynamic is continually changing without being much influenced by the comparably feeble regularization. Also, the impact of the decaying learning step $\eta_{t}$ is under controlled during several periods, which can be safely done due to small initialization by a large $\gamma$, as well as the slow quadratic decaying nature of the derivative of $\eta_{t}^{\prime}$. We see that at initialization, by Lemma \ref{lem:initialization} and Lemma \ref{lem:initial values}, the $\underset{S_n \sim \mathcal{D}_{S}}{\mathbb{E}}[f(\mathbf{\mathbf{E}}(S_n); \Psi^{(0)})]$ satisfies
\begin{equation}
    \begin{aligned}
    \underset{S_n \sim \mathcal{D}_{S}}{\mathbb{E}}\Big[ \sum_{i \in \mathcal{W}_{k, n}^{y_{S_n}}(0)} \mathbf{r}_{i} \left(\alpha_{O_{(i, \cdot)}, {k}}^{(0)} +(2\sum_{l \in S_{n, k}^{y_{S_n}}}{(\sigma_{S}^{(0)})}_{l}^{n} - 1) y_{S_n}  \beta_{O_{(i, \cdot)}, {k}}^{(0)}\right)] \geq &- \sqrt{ 2\log (\dfrac{5Km}{\delta}}) \cdot\\
    & \dfrac{5\sigma_1 (\|\boldsymbol{c}_k\|+\zeta_{k}^{e}\|\boldsymbol{d}_k\|)}{16},
    \end{aligned}
\label{eq:initial output}\end{equation}
and our remaining job is to see when will $\underset{S_n \sim \mathcal{D}_{S}}{\mathbb{E}}[f(\mathbf{\mathbf{E}}(S_n); \mathbb{E}(\Psi^{(t)}))]$ stay positive for some error tolerance. As such, we need to scrutinize the coefficients that would grow along the iterations. Therefore, we define
\[
\begin{aligned}
    \mathbf{A}_{t}^{k, y_{S_n}} & \coloneqq \frac{1}{m}[\Big(\sum_{i \in \mathcal{U}_{k, n}^{y_{S_{n}}}(\tau)} - \sum_{i \in (\mathcal{W}_{k, n}^{y_{S_n}}(\tau)- \mathcal{U}_{k, n}^{y_{S_n}}(\tau)) \cap \mathcal{U}_{k, n}^{-y_{S_n}}(\tau)}\Big){\mathbf{W}_{O_{(i, \cdot)}}^{\boldsymbol{y}}}^{(\tau)} \boldsymbol{c}_{{k}}  \\
    & + \Big(\sum_{i \in \mathcal{U}_{k, n}^{y_{S_{n}}}(\tau)} - \sum_{i \in (\mathcal{W}_{k, n}^{y_{S_n}}(\tau)- \mathcal{U}_{k, n}^{y_{S_n}}(\tau))\cap \mathcal{U}_{k, n}^{-y_{S_n}}(\tau)}\Big)(2\sum_{l \in S_{n, k}^{y_{S_n}}}{(\sigma_{S}^{(\tau)})}_{l}^{n} - 1) y_{S_n}{\mathbf{W}_{O_{(i, \cdot)}}^{\boldsymbol{y}}}^{(\tau)} \boldsymbol{d}_{{k}}]\Big|_{\tau=t}^{\tau=0}.
\end{aligned}
\]
We will see that the conditional expectation of this sequence (conditioned on $\mathbb{E}({\Psi^{\prime}}^{(t)})$, and the expectation is taken over $\mathcal{D}_{S}$) would grow up to conquer the small initialization and make $\underset{S_n \sim \mathcal{D}_{S}}{\mathbb{E}}[f(\mathbf{\mathbf{E}}(S_n); \mathbb{E}({\Psi^{\prime}}^{(t)}))]$ stay positive. 
Consider the whole training duration $0 \leq t \leq T^{*}$, the evolving speed of $\beta_{Q, k}^{(t+1)}, \beta_{K, k}^{(t+1)}, \alpha_{O_{(i,\cdot)}, k}^{(t+1)}$ and $\beta_{O_{(i,\cdot)}, k}^{(t+1)}$ depends on $\mathbb{E}[{\ell_n^{\prime}}^{(t)}]$, $\mathbb{E}[{\mathds{1}_{O_{(i)}}^n}^{(t)}]$ and $\mathbb{E}[\sum_{l \in S_{n, k}^{e}} {(\sigma_{S}^{(t)} )}_{l}^{n}]$. Denote
\[
\begin{aligned}
    & \sigma_{S}^{*} \coloneqq \dfrac{1}{1+e^{-2^{-1}{\sigma_{0}}^{2}(1-\kappa_{\boldsymbol{x}})^{2}\| \mathbf{u} \|^{4} e^{-2 \log(5Km/\delta)  \dfrac{{\sigma_{1}}^{2}\| \mathbf{u} \|^{4} (1+e^{-\sigma_{0}^{2} \|\mathbf{u}\|^{2}})}{(1-e^{-\sigma_{0}^{2} \|\mathbf{u}\|^{2}})} }}},\\
    & \alpha \coloneqq 4 \log (T^{*}), \\
    &\kappa \coloneqq 8\underset{i,k,w}{\max} \{ \lvert \alpha_{O_{(i, \cdot)}, k}^{(0)} \rvert , \lvert \beta_{O_{(i, \cdot)}, k}^{(0)} \rvert \},
\end{aligned}
\]
We will show that $\sigma_{S}^{*}$ is the lower bound of $\min_{t \in [T^{*}], k\in[K_{1}]} \{ \underset{n \in \mathcal{D}_{S}}{\mathbb{E}}[ \sum_{j \in S_{n, k}^{y_{S_n}}}{(\sigma_{S}^{(t)})}_{j}^{n}]\}$ along the whole iteration. By Lemma \ref{lem:initialization}, $\kappa$ can be upper bounded by $ 8 \sqrt{2 \log (5Km/\delta)} \cdot \sigma_1 (\sqrt{(1+\kappa_{\boldsymbol{y}})/2}\| \mathbf{q} \|) $, and lower bounded by $2\sqrt{2} \sigma_1 \| \mathbf{q} \| $, which is a negligible term due to the small initialization by Condition \ref{Condition}. 
\begin{lemma}
    Under Condition \ref{Condition}, for the whole iteration $0 \leq t \leq T^{*}$, for $\forall i \in [m], e \in [\pm], k \in [K_1], r \in [K_2], w \in [d_{\mathcal{X}} - K]$, we have that
    \begin{equation}
        \begin{aligned}
            & 0 \leq \mathbb{E}[e \cdot \beta_{O_{(i,\cdot)}, k}^{(t)} \mathds{1}(i \in \mathcal{U}_{k, n}^{e}(t))] - e \cdot \beta_{O_{(i,\cdot)}, k}^{(0)} \leq {\sigma_{S}^{*}}^{-1} \alpha,\\
            & 0 \geq \mathbb{E}[e \cdot \beta_{O_{(i,\cdot)}, k}^{(t)} \mathds{1}(i \in \mathcal{W}_{k, n}^{e}(t) - \mathcal{U}_{k, n}^{e}(t))] - e \cdot \beta_{O_{(i,\cdot)}, k}^{(0)} \geq  -   \frac{\hat{C}\| \boldsymbol{c}_k \|^2}{{\sigma_{S}^{*}}^{2}\| \boldsymbol{d}_k \|^2} \alpha \\
            & \phantom{0 \geq \mathbb{E}[e \cdot \beta_{O_{(i,\cdot)}, k}^{(t)} \mathds{1}(i \in \mathcal{W}_{k, n}^{e}(t) - \mathcal{U}_{k, n}^{e}(t))] - e \cdot \beta_{O_{(i,\cdot)}, k}^{(0)} \geq}- \frac{\sigma_1({\sigma_{S}^{*}}^2\| \boldsymbol{d}_k \|^2+\hat{C}\| \boldsymbol{c}_k \|^2)\sqrt{ 2\log (\dfrac{5Km}{\delta})}}{{\sigma_{S}^{*}}^2\| \boldsymbol{d}_k \|},\\
            & 0 \leq \mathbb{E}[\lvert\alpha_{O_{(i,\cdot)}, k}^{(t)}\rvert ]   \leq \hat{C} \dfrac{\| \boldsymbol{c}_k \|^2}{{\sigma_{S}^{*}}^{2}\| \boldsymbol{d}_k \|^2} \alpha, \\
        \end{aligned}
    \label{eq:induction}\end{equation}
\label{lem:induction}\end{lemma}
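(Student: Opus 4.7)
\textbf{Proof plan for Lemma~\ref{lem:induction}.} The strategy is a joint induction on $t \in \{0,\ldots,T^{*}\}$ that propagates the three claimed bounds together with the auxiliary invariant $\min_{k \in [K_1]} \underset{n \in \mathcal{V}_k^{e}}{\mathbb{E}}[\sum_{j \in S_{n,k}^{e}}(\sigma_S^{(t)})_j^n] \geq \sigma_S^{*}$. The base case $t=0$ reduces to the explicit scale estimates in Lemmas~\ref{lem:initialization} and~\ref{lem:initial values}: the initial attention weight $1/(1+e^{-2\sigma_0^{2}\|\boldsymbol{b}_k\|^{2}})$ sits above $\sigma_S^{*}$ because $\|\boldsymbol{b}_k\|^{2} \geq (1-\kappa_{\boldsymbol{x}})\|\mathbf{u}\|^{2}/2$, while the initial $\alpha_O,\beta_O$ values are $O(\sigma_1\sqrt{\log(Km/\delta)}\,\|\mathbf{q}\|)$, which is negligible against the right-hand sides in~\eqref{eq:induction} under Condition~\ref{Condition}.

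For the inductive step I would first propagate the attention-weight invariant by feeding the hypothesis on $\beta_{O}^{(t)}$ into Lemma~\ref{lem:expected evolution of attn}: the sign of the update to $\beta_{Q,k}^{(t)}\beta_{K,k}^{(t)}$ is determined by $\sum_i \mathbf{r}_i\,\beta_{O_{(i,\cdot)},k}^{(t)}$, which the hypothesis keeps nonnegative in aggregate on $\mathcal{U}_{k,n}^{e}$, so the product can only shrink through the regularization factor $(1-\eta_t\lambda)^{2}$. The doubly-exponential form of $\sigma_S^{*}$ is tailored precisely to absorb this worst-case cumulative decay over the full horizon $T^{*}$. Once $\sigma_S^{(t+1)}\geq\sigma_S^{*}$ is in hand, the $\beta_O$-update in Lemma~\ref{lem:expected evolution of mlp} has magnitude proportional to $\eta_t\,\|\boldsymbol{d}_k\|^{2}\,(2\sigma_S^{*}-1)\,|\mathbb{E}[\ell_n'^{(t)}]|$; combining the logistic-tail bound $|\ell'(z)|\leq e^{-z}$ with Lemma~\ref{lem:ODE-MLP} and the harmonic sum $\sum_t \eta_t = \Theta(\lambda^{-1}\log T^{*})$ yields self-saturation at the level $(\sigma_S^{*})^{-1}\alpha$ with $\alpha = 4\log T^{*}$. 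The misleading-neuron lower bound follows by symmetry, inheriting the extra $\|\boldsymbol{c}_k\|^{2}/\|\boldsymbol{d}_k\|^{2}$ factor from the faster-growing $\alpha_O$ analyzed next.

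The bound on $|\alpha_{O_{(i,\cdot)},k}^{(t)}|$ is the technical bottleneck. Unlike $\beta_O$, its update (Lemma~\ref{lem:expected evolution of mlp}) sums nearly-cancelling contributions from $\mathcal{V}_k^{+}$ and $\mathcal{V}_k^{-}$; I would expand the discrepancy $\mathbb{E}[\ell_n'^{(t)}\mid\mathcal{V}_k^{e}] - \mathbb{E}[\ell_n'^{(t)}\mid\mathcal{V}_k^{-e}]$ to first order in the asymmetry injected by $(2\sum_l(\sigma_S^{(t)})_l - 1)\beta_O^{(t)}$, producing a per-step gradient of order $\|\boldsymbol{c}_k\|^{2}/((\sigma_S^{*})^{2}\|\boldsymbol{d}_k\|^{2})$ once the $\|\boldsymbol{c}_k\|^{2}/\|\boldsymbol{d}_k\|^{2}$ mismatch from the quadratic gradient structure is accounted for. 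The cross-concept term $\mathbb{E}(I_{O_{(i,\cdot)},\boldsymbol{c}_k,\text{chaos}}^{(t)})$ carries a suppressing prefactor $(K_1-1)/K$ absorbed using the condition $K\geq CK_1$. The chief obstacle is closing the coupled recursion among $\{\beta_{Q,K},\,\beta_O,\,\alpha_O\}$, whose growth rates feed into one another through $\sigma_S^{(t)}$ and $\ell_n'^{(t)}$; I would handle this phase-by-phase (early growth, plateau, regularization-dominated), invoking Lemmas~\ref{lem: ODE sys1}--\ref{lem: ODE sys2} to compare the discrete SGD updates with their continuous-time surrogates and verify that cross-entropy saturation activates before any of the three bounds in \eqref{eq:induction} can be violated.
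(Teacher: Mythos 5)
Your treatment of the first inequality is essentially the paper's: a monotonicity-plus-saturation argument in which, once $\mathbb{E}[e\beta_{O_{(i,\cdot)},k}^{(t)}\mathds{1}(i\in\mathcal{U}_{k,n}^{e}(t))]$ exceeds ${\sigma_S^{*}}^{-1}\log T^{*}$, the logistic tail $|\ell'(z)|\le e^{-z}$ makes the remaining per-step increments of order $\eta_0\|\boldsymbol{d}_k\|^2 e^{-\log T^{*}}/(mK_1)$, so the total excess stays below ${\sigma_S^{*}}^{-1}\alpha$ with $\alpha=4\log T^{*}$. The other two inequalities are where your plan has real gaps. For $\mathbb{E}[|\alpha_{O_{(i,\cdot)},k}^{(t)}|]$ you rely on a first-order expansion of $\mathbb{E}[\ell_n'^{(t)}\mid\mathcal{V}_k^{e}]-\mathbb{E}[\ell_n'^{(t)}\mid\mathcal{V}_k^{-e}]$, i.e.\ cancellation between the two classes. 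That cancellation only acts on neurons activated by both classes; for a neuron in $\mathcal{U}_{k,n}^{e}(t)-(\mathcal{W}_{k,n}^{-e}(t)-\mathcal{U}_{k,n}^{-e}(t))$ the $\alpha$-update is the full one-sided term of magnitude $\Theta\bigl(\eta_t\|\boldsymbol{c}_k\|^{2}\,|\mathbb{E}[\ell_n'^{(t)}]|/(mK_1)\bigr)$, with no cancellation at all, and it exceeds the $\beta$-rate by the factor $\|\boldsymbol{c}_k\|^{2}/\|\boldsymbol{d}_k\|^{2}$. The paper closes this not by cancellation but by an auxiliary ratio induction (Eq.~(\ref{eq:inductive ratio})): it fixes a reference neuron $i^{*}\in\mathcal{U}$ with $|\beta_{O_{(i^{*},\cdot)},k}^{(0)}|\ge\sigma_1\|\boldsymbol{d}_k\|/2$, observes that both $|\alpha|$ and $e\beta_{i^{*}}$ move proportionally to $|\mathbb{E}[\ell_n'^{(t)}]|$ with rates at most $C\|\boldsymbol{c}_k\|^{2}$ and at least $\sigma_S^{*}\|\boldsymbol{d}_k\|^{2}$, deduces $\mathbb{E}[|\alpha^{(t)}|/(e\beta_{i^{*}}^{(t)})]\le\hat C\|\boldsymbol{c}_k\|^{2}/(\sigma_S^{*}\|\boldsymbol{d}_k\|^{2})$, and only then multiplies by the first claim to obtain the $(\sigma_S^{*})^{-2}\alpha$ bound; your plan has no substitute for this comparison, and your "per-step gradient of order $\|\boldsymbol{c}_k\|^{2}/((\sigma_S^{*})^{2}\|\boldsymbol{d}_k\|^{2})$" conflates the per-step rate with the cumulative bound. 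Likewise the misleading-neuron inequality is not "by symmetry": in the paper it follows from the ReLU constraint that any neuron still in $\mathcal{W}_{k,n}^{e}(t)-\mathcal{U}_{k,n}^{e}(t)$ satisfies $\mathbb{E}[\alpha+e(2\sum_{l}\sigma_S-1)\beta]\ge 0$ (otherwise it is deactivated and leaves the indicator), so $e\beta$ is lower-bounded by $-\mathbb{E}[|\alpha|]$; that is precisely what produces the stated constants, including the $\sigma_1\sqrt{2\log(5Km/\delta)}$ term, and a symmetric saturation argument would neither track the class-dependent activation indicators nor deliver that expression.

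Your propagation of the attention invariant is also mis-argued. The sign of the update to $\beta_{Q,k}^{(t)}\beta_{K,k}^{(t)}$ is governed by $\sum_{i}\mathbf{r}_i\beta_{O_{(i,\cdot)},k}^{(t)}$ over all activated neurons $\mathcal{W}_{k,n}^{\pm}(t)$, not only over $\mathcal{U}$, and at initialization this aggregate can be negative, of order $-\sigma_1\|\boldsymbol{d}_k\|\sqrt{\log(5Km/\delta)}$. Hence the correct attention score genuinely decreases during an initial phase driven by the gradient, not by the $(1-\eta_t\lambda)$ factor, and the doubly-exponential form of $\sigma_S^{*}$ is calibrated exactly to that transient: the paper bounds its duration $T_1$ and depth $\underline{\beta_{QK}^{-}}$ via the coupled ODE comparison of Lemma~\ref{lem: ODE sys1} inside Lemma~\ref{lem:defnition of sigma_S^*}. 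You must either reproduce that two-sequence comparison or invoke that lemma; "nonnegative in aggregate on $\mathcal{U}$ plus regularization decay" does not close your joint induction.
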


\begin{lemma}
    Suppose Eq. (\ref{eq:induction}) holds at iteration $t \leq T_{2}$, then we have
    \[
     \Big\lvert \underset{n \in \mathcal{V}_{k}}{\mathbb{E}}[y_{S_n} f(\mathbf{E}(S); \mathbb{E}(\Psi^{(t)}))] - \mathbb{E}[\mathbf{A}_{t+1}^{k, y_{S_n}}] \Big\rvert \leq \kappa/2.
    \]
\label{lem:bound of yf}\end{lemma}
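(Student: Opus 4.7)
The plan is to expand $y_{S_n} f(\mathbf{E}(S); \mathbb{E}({\Psi^{\prime}}^{(t)}))$ using the transformer's definition and the expected-parameter decomposition from Lemma \ref{lem: mainbody decomposition}, and then show that after averaging over $n \in \mathcal{V}_k$, every resulting term either matches the definition of $\mathbf{A}_{t+1}^{k,y_{S_n}}$ (up to a one-step index shift), is an initialization residual controlled by $\kappa$, or is a cross-concept/noise term that vanishes under Condition \ref{Condition}.

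First I would use $\mathbf{W}_V^{\boldsymbol{y}} = \mathbf{I}$ and the block structure of $\mathbf{W}_O$ to write
\[
y_{S_n} f(\mathbf{E}(S);\mathbb{E}({\Psi^{\prime}}^{(t)})) = y_{S_n} \sum_{i=1}^m \mathbf{r}_i \,\sigma_R\!\Big(\mathbb{E}[{\mathbf{W}_{O_{(i,\cdot)}}^{\boldsymbol{y}}}^{(t)}] \sum_{l \in [L]} (\sigma_S^{(t)})_l^n \boldsymbol{y}_l^n\Big).
\]
Substituting $\boldsymbol{y}_l^n = \mathbf{Q}\boldsymbol{z}_l^n + \xi_{\boldsymbol{y},l}^n$, invoking Eq.~(\ref{eq: decomposition of WOi}), and isolating the co-concept-$k$ label contribution from cross-concept and noise contributions, the preactivation of neuron $i$ in prompt $n \in \mathcal{V}_k^{y_{S_n}}$ takes the form
\[
\mathcal{P}_{i,n}^{(t)} = \alpha_{O_{(i,\cdot)},k}^{(t)} + \Big(2\!\!\sum_{l \in S_{n,k}^{y_{S_n}}}\!\!(\sigma_S^{(t)})_l^n - 1\Big) y_{S_n}\,\beta_{O_{(i,\cdot)},k}^{(t)} + R_{i,n}^{(t)},
\]
where $R_{i,n}^{(t)}$ gathers (a) the $\boldsymbol{q}^{\perp}$-direction components, which vanish by the symmetry in Lemma \ref{lem:symmetry learning}, (b) the cross-concept terms carrying a factor $P_{l,j} = 1/K$ from Definition \ref{Def: Formal def of prompt distribution}, and (c) the Gaussian $\xi_{\boldsymbol{y}}$ term of zero mean. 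Using the coefficient bounds in Eq.~(\ref{eq:induction}), $K = \Omega(K_1)$, and $\sigma_\xi$ small from Condition \ref{Condition}, $\underset{n \in \mathcal{V}_k}{\mathbb{E}}[R_{i,n}^{(t)}]$ is negligible relative to $\kappa$.

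Next, expanding the ReLU against $\mathbf{r}_i \in \{\pm 1/m\}$ produces
\[
\underset{n\in\mathcal{V}_k}{\mathbb{E}}[y_{S_n} f] \;\approx\; \tfrac{1}{m}\,\underset{n\in\mathcal{V}_k}{\mathbb{E}}\!\Bigg[\Bigg(\!\sum_{i \in \mathcal{U}_{k,n}^{y_{S_n}}(t)} - \sum_{i \in \mathcal{W}_{k,n}^{y_{S_n}}(t)-\mathcal{U}_{k,n}^{y_{S_n}}(t)}\!\Bigg) \mathcal{P}_{i,n}^{(t)}\Bigg].
\]
Comparing with $\mathbf{A}_{t+1}^{k,y_{S_n}}$, three residual discrepancies remain: (i) the $\tau = 0$ subtraction in $\mathbf{A}$, bounded via Eq.~(\ref{eq:initial output}) by $\tfrac{1}{m}\sqrt{2\log(5Km/\delta)}\,\sigma_1(\|\boldsymbol{c}_k\| + \|\boldsymbol{d}_k\|) \leq \kappa/8$; (ii) the one-step index shift from $t$ to $t+1$, contributing at most $O(\eta_t \kappa) \leq \kappa/8$ because $\gamma$ is taken large in Condition \ref{Condition}; (iii) the mismatched neuron subset, since $y_{S_n} f$ sums over all of $\mathcal{W}_{k,n}^{y_{S_n}}-\mathcal{U}_{k,n}^{y_{S_n}}$ while $\mathbf{A}_{t+1}$ restricts to $(\mathcal{W}_{k,n}^{y_{S_n}}-\mathcal{U}_{k,n}^{y_{S_n}}) \cap \mathcal{U}_{k,n}^{-y_{S_n}}$. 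The residual set is precisely the deactivating set of Lemma \ref{lem: evolution scenario}(1), on which $\alpha_O^{(t)}$ and $y_{S_n}\beta_O^{(t)}$ decay monotonically from their initial values; each such neuron therefore contributes at most its initialization magnitude, and averaging $1/m$ against the count estimate of Eq.~(\ref{eq:initial index set}) gives $\kappa/4$. Adding (i)--(iii) yields $\kappa/2$.

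The main obstacle is step (iii): one must verify that neurons in the residual deactivating set really do retain coefficients bounded by the initialization scale throughout $[0,T_2]$, rather than transiently growing before they deactivate. This hinges on the monotonicity statement of Lemma \ref{lem: evolution scenario}(1) together with the sign structure of the MLP gradient in Lemma \ref{lem:expected evolution of mlp}, and requires careful accounting of neuron counts via the binomial concentration in Lemma \ref{lem:initialization} so that the $1/m$ averaging actually closes at $\kappa/2$ and not at an $O(1)$ quantity. A secondary subtlety is absorbing the regularization-induced drift on these coefficients over $[0,T_2]$ without violating the inductive bounds in Eq.~(\ref{eq:induction}).
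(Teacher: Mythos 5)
Your proposal is correct and follows essentially the same route as the paper's own proof: both expand $y_{S_n} f(\mathbf{E}(S);\mathbb{E}(\Psi^{(t)}))$ over the neuron sets $\mathcal{U}_{k,n}^{y_{S_n}}(t)$ and $\mathcal{W}_{k,n}^{y_{S_n}}(t)-\mathcal{U}_{k,n}^{y_{S_n}}(t)$, compare term by term with the definition of $\mathbf{A}$, bound the initialization-subtraction residual via the $\kappa$-scale quantities of Lemma \ref{lem:initialization}, and absorb the deactivating set $(\mathcal{W}_{k,n}^{y_{S_n}}-\mathcal{U}_{k,n}^{y_{S_n}})-\mathcal{U}_{k,n}^{-y_{S_n}}$ using the monotone-decay property of Lemma \ref{lem: evolution scenario}. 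Your additional bookkeeping (the explicit cross-concept/noise residual $R_{i,n}^{(t)}$, which the paper suppresses by working with the idealized expected prompts, and the one-step $t\to t+1$ shift) only makes explicit what the paper treats implicitly, and your $\kappa/8+\kappa/8+\kappa/4$ split is the same loose constant accounting the paper performs with two $\kappa/4$ bounds.
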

\begin{proof}
By definition, we have
\[
\begin{aligned}
    &{\mathbb{E}}[y_{S_n} f(\mathbf{E}(S); \Psi^{(t)})] = \mathbb{E}[y_{S_n} \cdot \sum_{e \in [\pm]} \dfrac{e}{m} \sum_{i \in \{ \mathbf{r}_i = \frac{e}{m} \}} {\sigma_{R}({\mathbf{W}_{O_{(i, \cdot)}}^{\boldsymbol{y}}}}^{(t)} \sum_{l\in [L]} {(\sigma_{S}^{(t)} )}_{l}^{n} \boldsymbol{y}_{l}^{n}) ]\\
    & =\mathbb{E} \Big[ \dfrac{1}{m} (\sum_{i \in \mathcal{U}_{k, n}^{y_{S_n}}(t)} - \sum_{i \in \mathcal{W}_{k, n}^{y_{S_n}}(t) - \mathcal{U}_{k, n}^{y_{S_n}}(t)}) \left(\alpha_{O_{(i, \cdot)}, {k}}^{(t)} +(2\sum_{l \in S_{n, k}^{y_{S_n}}}{(\sigma_{S}^{(t)})}_{l}^{n} - 1) y_{S_n}  \beta_{O_{(i, \cdot)}, {k}}^{(t)}\right)\Big].\\
\end{aligned}
\]
Observe that
\[
    \begin{aligned}
            &\mathbb{E} \Big[\dfrac{1}{m}\sum_{i \in (\mathcal{W}_{k, n}^{e}(t)- \mathcal{U}_{k, n}^{e}(t))} \left(\alpha_{O_{(i, \cdot)}, {k}}^{(t)} +(2\sum_{l \in S_{n, k}^{y_{S_n}}}{(\sigma_{S}^{(t)})}_{l}^{n} - 1) y_{S_n}  \beta_{O_{(i, \cdot)}, {k}}^{(t)}\right)\Big]- \dfrac{1}{m} \mathbb{E}\Big[ \\
            &\sum_{i \in (\mathcal{W}_{k, n}^{e}(\tau)- \mathcal{U}_{k, n}^{e}(\tau)) \cap \mathcal{U}_{k, n}^{-e}(\tau)}\Big(\alpha_{O_{(i, \cdot)}, {k}}^{(\tau)} +  (2\sum_{l \in S_{n, k}^{y_{S_n}}}{(\sigma_{S}^{(\tau)})}_{l}^{n} - 1)y_{S_n}\beta_{O_{(i,\cdot)}, k}^{(\tau)}\Big) \Big]\Big|_{\tau=t}^{\tau=0} \leq \kappa/4.
    \end{aligned}
\]
Here the inequality holds due to the fact that $\mathbb{E}[\alpha_{O_{(i, \cdot)}, {k}}^{(t)}\mathrm{1}(i \in (\mathcal{W}_{k, n}^{e}(t)- \mathcal{U}_{k, n}^{e}(t)) - \mathcal{U}_{k, n}^{-e}(t))]$ is decreasing the initial value $\alpha_{O_{(i, \cdot)}, {k}}^{(0)}\mathrm{1}(i \in (\mathcal{W}_{k, n}^{e}(0)- \mathcal{U}_{k, n}^{e}(0)) - \mathcal{U}_{k, n}^{-e}(0))$, and it's absolute value will not surpass that of $\mathbb{E}[e(2\sum_{l \in S_{n, k}^{y_{S_n}}}{(\sigma_{S}^{(t)})}_{l}^{n} - 1)\beta_{O_{(i, \cdot)}, {k}}^{(t)}]\leq \kappa/8$, which is positive (by definition) and also decreasing by Lemma \ref{lem: evolution scenario}. On the other hand,
\[
\begin{aligned}
        & \Big\lvert \mathbb{E} \Big[\dfrac{1}{m}\sum_{i \in  \mathcal{U}_{k, n}^{y_{S_n}}(t)} \left(\alpha_{O_{(i, \cdot)}, {k}}^{(t)} +(2\sum_{l \in S_{n, k}^{y_{S_n}}}{(\sigma_{S}^{(t)})}_{l}^{n} - 1) y_{S_n}  \beta_{O_{(i, \cdot)}, {k}}^{(t)}\right)\Big] - \mathbb{E}[\dfrac{1}{m}\sum_{i \in \mathcal{U}_{k, n}^{y_{S_{n}}}(\tau)}\alpha_{O_{(i,\cdot)}, k}^{(\tau)}\\
        & - \dfrac{1}{m}\sum_{i \in \mathcal{U}_{k, n}^{y_{S_{n}}}(\tau)} (2\sum_{l \in S_{n, k}^{y_{S_n}}}{(\sigma_{S}^{(\tau)})}_{l}^{n} - 1)y_{S_n}\beta_{O_{(i,\cdot)}, k}^{(\tau)}]\Big|_{\tau=t}^{\tau=0} \Big\rvert \leq \kappa/4.\\
\end{aligned}
\]
Combining the two we can see the result is obtained.
\end{proof}
We then denote the last time when there still exists $\mathbb{E}[\mathbf{A}_{t}^{k, e}] \leq \kappa$ as $\hat{T}$, formally $\hat{T}$ is the last time where
$$
 \underset{k \in [K_{1}], e\in[\pm]}{\bigcup} \{\mathbb{E}[\mathbf{A}_{t}^{k, e}] \leq \kappa  \} \neq \emptyset.
$$ 
Latter we will show in Lemma \ref{lem:lem of 0-1 loss} that
    \[
     \hat{T} = \dfrac{C_{1} \sigma_{1} m \lambda K_{1} {\gamma}\sqrt{(1+\kappa_{\boldsymbol{y}})\log (5Km/\delta)}}{{(2{\sigma_{S}^{*}}-1)}^{2}(1-\kappa_{\boldsymbol{y}})\|\mathbf{q}\|}.
    \]
We then denote the learning step at $\hat{T}$ as $\eta \coloneqq \eta_{\hat{T}}$, and thus

$$\eta = \eta_{\hat{T}} = \dfrac{2}{\lambda(\hat{T}+\gamma)}.$$
By Lemma \ref{lem:bound of yf}, actually it would hold that
\[
\underset{S_n \sim \mathcal{D}_{S}}{\mathbb{E}}[f(\mathbf{\mathbf{E}}(S_n); \mathbb{E}(\Psi^{(\hat{T})}))] \geq \kappa/2 \geq 0.
\]
And thus the 0-1 loss converges to zero with an error tolerance by definition. Our following job is to find $\hat{T}$. The following lemma provides the continuous ODEs as the upper and lower bound of the sequence $\mathbf{A}_{t}^{k,e}$.
\begin{lemma}
    Under Condition \ref{Condition}, suppose Eq.(\ref{eq:induction}) holds at any iteration $t\leq T^{*}$, then for $\forall t \leq {T^{*}}, \forall k \in [K_{1}], e \in [\pm]$, it holds that
    \begin{enumerate}
        \item The difference $\lvert\underset{ n \in \mathcal{V}_{k}^{e}}{\mathbb{E}}(e f(\mathbf{E}(S); \mathbb{E}(\Psi^{(t)}))-\underset{ n \in \mathcal{V}_{k}^{-e}}{\mathbb{E}}(-e f(\mathbf{E}(S); \mathbb{E}(\Psi^{(t)})))\rvert$ is none-increasing.
        \item The difference of the loss derivative is bounded by $O(\kappa)$:
        \[
        \lvert\mathbb{E}[\underset{ n \in \mathcal{V}_{k}^{e}}{\mathbb{E}}({\ell_{n}^{\prime}}^{(t)})- \underset{n \in \mathcal{V}_{k}^{-e}}{\mathbb{E}}({\ell_{n}^{\prime}}^{(t)})]\rvert  \leq \dfrac{\kappa}{8}.
        \]
        \item $\mathbb{E}[\mathbf{A}_{t}^{k, e}]$ is non-decreasing. The lower and upper bounds of the gradient update have continuous ODE counterpart. Specifically, there exist positive constant $c_{1}, c_{2}$, we can define $\overline{c}^{k, e}=\frac{c_{1}\eta_{0}\|\mathbf{q}\|^{2}}{2m K_{1}}$, $\underline{c}^{k, e}=\frac{c_{2}\eta_{T^{*}}{(2{\sigma_{S}^{*}}-1)}^{2}(1-\kappa_{\boldsymbol{y}})\|\mathbf{q}\|^{2}}{16 m K_{1}}$, $\overline{b}^{k, e}=e^{-\kappa/2}$, $\underline{b}^{k, e}=e^{\kappa/2}$. Let $\overline{x}_{t}^{k,e}$, $\underline{x}_{t}^{k,e}$ be the unique solutions of
        \[
        \overline{x}_{t}^{k,e}+\overline{b}^{k,e} e^{\overline{x}_{t}^{k,e}}=\overline{c}^{k,e}t + \overline{b}^{k,e}, \quad \underline{x}_{t}^{k,e}+\underline{b}^{k,e} e^{\underline{x}_{t}^{k,e}}=\underline{c}^{k,e}t + \underline{b}^{k,e},
        \]
        then it holds that
        \[
        \underline{x}_{t}^{k,e} \leq \mathbb{E}[\mathbf{A}_{t}^{k,e}] \leq \overline{x}_{t}^{k,e} + \dfrac{\overline{c}^{k, e}}{1+\overline{b}^{k, e}}, \quad \dfrac{1}{1+\overline{b}^{k, e}\overline{x}_{t}^{k,e}} \leq - \underset{ n \in \mathcal{V}_{k}^{e}}{\mathbb{E}}({\ell_{n}^{\prime}}^{(t)}) \leq \dfrac{1}{1+\underline{b}^{k, e}\underline{x}_{t}^{k,e}}.
        \]
        Specifically, we have
        \[
        \log(\dfrac{2\underline{c}^{k,e}}{3\underline{b}^{k,e}} + \dfrac{2}{3}) \leq \mathbb{E}[\mathbf{A}_{t}^{k, e}]  \leq \log(\dfrac{\overline{c}^{k,e}}{\overline{b}^{k,e}} t + 1) + \dfrac{\overline{c}^{k, e}}{1+\overline{b}^{k, e}}.
        \]
    \end{enumerate}
\label{lemma:ODE for A}\end{lemma}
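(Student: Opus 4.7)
The plan is to peel off the three claims in order and then reduce the core recursion for $\mathbb{E}[\mathbf{A}_{t}^{k,e}]$ to the scalar iteration studied in Lemma \ref{lem:ODE-MLP}. For part 1, I would argue by induction: at initialization, Lemma \ref{lem:initialization} and Lemma \ref{lem:initial values} make the $+$ and $-$ branches statistically identical. The expected MLP updates in Lemma \ref{lem:expected evolution of mlp} decompose symmetrically into contributions from $\mathcal{V}_{k}^{+}$ and $\mathcal{V}_{k}^{-}$, while the attention updates in Lemma \ref{lem:expected evolution of attn} depend only on the symmetric product $(\sum_{S_{n,k}^{+}}\sigma_{S})(\sum_{S_{n,k}^{-}}\sigma_{S})$; hence the one-step expected map commutes with the sign-flip involution $e\mapsto -e$ and the branch gap cannot grow. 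Part 2 then follows from part 1 combined with Lemma \ref{lem:bound of yf}'s bound $|\underset{n\in\mathcal{V}_{k}^{e}}{\mathbb{E}}(e\cdot f)-\mathbb{E}[\mathbf{A}_{t}^{k,e}]|\leq \kappa/2$, together with the $1/4$-Lipschitz continuity of $-\ell^{\prime}(z)=1/(1+e^{z})$, which transfers the small branch gap on the logits to the claimed $\kappa/8$ bound on the loss-derivative gap after absorbing universal constants.

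For part 3, monotonicity of $\mathbb{E}[\mathbf{A}_{t}^{k,e}]$ follows from Lemma \ref{lem: evolution scenario}: the neurons in the defining index set are precisely those whose $\alpha_{O},\beta_{O}$ coefficients evolve in the direction that enlarges the sum, while deactivating neurons drop out with a non-negative final contribution. To obtain the ODE sandwich, I would combine Lemma \ref{lem:expected evolution of mlp} with Lemma \ref{lem:regulerizing the models}'s lower bound on the correct attention weight to derive a recursion of the form
\[
\mathbb{E}[\mathbf{A}_{t+1}^{k,e}] - \mathbb{E}[\mathbf{A}_{t}^{k,e}] \;=\; \Theta\!\left(\eta_{t}\cdot\frac{(2\sigma_{S}^{*}-1)^{2}(1-\kappa_{\boldsymbol{y}})\|\mathbf{q}\|^{2}}{m K_{1}}\right) \cdot \bigl(-\mathbb{E}[\ell^{\prime}]\bigr),
\]
and then use part 2 together with $-\ell^{\prime}(z)=1/(1+e^{z})$ and the sandwich $\mathbb{E}[y_{S_n}f]\in \mathbb{E}[\mathbf{A}_{t}^{k,e}]\pm\kappa/2$ to replace $-\mathbb{E}[\ell^{\prime}]$ by $1/(1+e^{\mp\kappa/2}e^{\mathbb{E}[\mathbf{A}_{t}^{k,e}]})$. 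This produces the scalar recursion with $\overline{b}^{k,e}=e^{-\kappa/2}$, $\underline{b}^{k,e}=e^{\kappa/2}$, and $\overline{c}^{k,e},\underline{c}^{k,e}$ proportional to $\eta_{0}$ and $\eta_{T^{*}}$ respectively. Lemma \ref{lem:ODE-MLP} then supplies the continuous-time bounds, and the explicit logarithmic estimates follow by using $x+be^{x}\geq e^{x}$ on the upper branch and linearising $x+be^{x}$ around small $x$ on the lower branch.

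The main obstacle will be handling the tight coupling of three concurrent dynamics: the attention product $\beta_{Q,k}\beta_{K,k}$ that drives $\sum_{S_{n,k}^{e}}\sigma_{S}$, the MLP semantic coefficients $\beta_{O_{(i,\cdot)},k}$, and the time-varying neuron membership sets $\mathcal{U}_{k,n}^{e}(t)$ and $\mathcal{W}_{k,n}^{e}(t)$ catalogued in Lemma \ref{lem: evolution scenario}. Recovering the signal constant $(2\sigma_{S}^{*}-1)^{2}(1-\kappa_{\boldsymbol{y}})/K_{1}$ in $\underline{c}^{k,e}$ requires the correct attention weight to remain uniformly bounded away from $1/2$ throughout the first stage, which itself relies on sustained positive growth of $\beta_{Q,k}\beta_{K,k}$ and on the activated-neuron count never collapsing. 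Closing the loop therefore amounts to carrying the inductive hypothesis Eq. (\ref{eq:induction}) alongside a lower bound on the attention product, and verifying through Condition \ref{Condition} that Gaussian-noise, cross-concept, and regularization contributions are all negligible next to the signal controlled by $\sigma_{S}^{*}$, so that the scalar sandwich is not polluted by these higher-order effects.
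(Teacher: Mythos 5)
Your outline for part 3 essentially follows the paper's route (bound the one-step increment of $\mathbb{E}[\mathbf{A}_{t}^{k,e}]$ by $\eta$-scaled constants times $-\mathbb{E}[{\ell'}]$, use the $\pm\kappa/2$ sandwich from Lemma \ref{lem:bound of yf} to replace $-\mathbb{E}[{\ell'}]$ by $1/(1+e^{\mp\kappa/2}e^{\mathbf{A}_{t}^{k,e}})$, then invoke Lemma \ref{lem:ODE-MLP}), but there is a genuine gap in your part 1, and it propagates. You argue that the expected one-step map ``commutes with the sign-flip involution $e\mapsto-e$'' and conclude the branch gap cannot grow. That is a non sequitur: the two branches are \emph{not} identical at initialization — the random Gaussian initialization of $\mathbf{W}_{O}^{\boldsymbol{y}}$ produces a nonzero output gap (bounded by $\kappa/8$ via Lemma \ref{lem:initialization}/\ref{lem:initial values}), and equivariance of the dynamics under the flip says nothing about whether an existing asymmetry contracts or amplifies (an odd map such as $x\mapsto 2x$ commutes with the flip yet doubles any gap). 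The paper's proof of part 1 is an induction on a different mechanism: it defines $e_{t}^{*}$ as the branch with the smaller margin, observes that cross-entropy has a larger (more negative) derivative on that branch, and tracks through Eqs.\ (\ref{eq:update of very correct alpha})–(\ref{eq:update of very false alpha and beta summation}) that both the correctly-contributing neurons' growth and the drift of the ambiguous-neuron sums (which is proportional to $\underset{n\in\mathcal{V}_{k}^{e}}{\mathbb{E}}({\ell'})-\underset{n\in\mathcal{V}_{k}^{-e}}{\mathbb{E}}({\ell'})$) favor the lagging branch, so the gap is non-increasing. Without this self-balancing argument, your part 2 bound $\kappa/8$ and the $e^{\pm\kappa/2}$ factors $\overline{b}^{k,e},\underline{b}^{k,e}$ in part 3 are unjustified.

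Two smaller points: in part 3 the $(2\sigma_{S}^{*}-1)^{2}(1-\kappa_{\boldsymbol{y}})$ factor belongs only to the lower-bound constant $\underline{c}^{k,e}$ (built from the $\geq m/8$ correctly-contributing neurons and the worst-case attention weight), while the upper bound uses all $m$ neurons and drops that factor, so writing a single $\Theta(\cdot)$ recursion blurs exactly the asymmetry you need; and your final step ``$x+be^{x}\geq e^{x}$'' is false for $b=e^{-\kappa/2}<1$ near $x=0$ — the paper instead uses $be^{x}\leq x+be^{x}\leq 1.5\,be^{x}$ to extract the logarithmic estimates.
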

\begin{proof}

    Observe that $\mathbb{E}[ \underset{ n \in \mathcal{V}_{k}^{e}}{\mathbb{E}}({\ell_{n}^{\prime}}^{(t)})- \underset{n \in \mathcal{V}_{k}^{-e}}{\mathbb{E}}({\ell_{n}^{\prime}}^{(t)})]$ equals to
        \begin{equation}
        \begin{aligned}
            & \mathbb{E}[\dfrac{-1}{1+e^{[- \frac{1}{m} (\sum_{i \in \mathcal{U}_{k, n}^{e}(t)} - \sum_{i \in \mathcal{W}_{k, n}^{e}(t) - \mathcal{U}_{k, n}^{e}(t)}) \left(\alpha_{O_{(i, \cdot)}, {k}}^{(t)} +(2\sum_{l \in S_{n, k}^{e}}{(\sigma_{S}^{(t)})}_{l}^{n} - 1) e  \beta_{O_{(i, \cdot)}, {k}}^{(t)}\right)]}}\\
            &  -\dfrac{-1}{1+e^{[- \frac{1}{m} (\sum_{i \in \mathcal{U}_{k, n}^{-e}(t)} - \sum_{i \in \mathcal{W}_{k, n}^{-e}(t) - \mathcal{U}_{k, n}^{-e}(t)}) \left(\alpha_{O_{(i, \cdot)}, {k}}^{(t)} +(2\sum_{l \in S_{n, k}^{e}}{(\sigma_{S}^{(t)})}_{l}^{n} - 1) e  \beta_{O_{(i, \cdot)}, {k}}^{(t)}\right)]}}]\\
            &=\mathbb{E}[\dfrac{e^{-[ \frac{1}{m} (\sum_{i \in \mathcal{U}_{k, n}^{y}(t)} - \sum_{i \in \mathcal{W}_{k, n}^{y}(t) - \mathcal{U}_{k, n}^{y}(t)}) \left(\alpha_{O_{(i, \cdot)}, {k}}^{(t)} +(2\sum_{l \in S_{n, k}^{y}}{(\sigma_{S}^{(t)})}_{l}^{n} - 1) y  \beta_{O_{(i, \cdot)}, {k}}^{(t)}\right)]}\Big|_{y=e}^{y=-e}}{\prod_{y\in \{e,-e \}}1+e^{[- \frac{1}{m} (\sum_{i \in \mathcal{U}_{k, n}^{y}(t)} - \sum_{i \in \mathcal{W}_{k, n}^{y}(t) - \mathcal{U}_{k, n}^{y}(t)}) \left(\alpha_{O_{(i, \cdot)}, {k}}^{(t)} +(2\sum_{l \in S_{n, k}^{y}}{(\sigma_{S}^{(t)})}_{l}^{n} - 1) y  \beta_{O_{(i, \cdot)}, {k}}^{(t)}\right)]}}]\\
        \end{aligned}
        \label{eq:difference of l'}\end{equation}
    As cross-entropy loss is $L$-smooth with $L=1$, one can bound the difference by
    \begin{equation}
    \begin{aligned}
        &\lvert\mathbb{E}[\underset{ n \in \mathcal{V}_{k}^{e}}{\mathbb{E}}({\ell_{n}^{\prime}}^{(t)})- \underset{n \in \mathcal{V}_{k}^{-e}}{\mathbb{E}}({\ell_{n}^{\prime}}^{(t)})]\rvert \leq  \lvert\underset{ n \in \mathcal{V}_{k}^{e}}{\mathbb{E}}(e f(\mathbf{E}(S); \mathbb{E}(\Psi^{(t)}))-\underset{ n \in \mathcal{V}_{k}^{-e}}{\mathbb{E}}(-e f(\mathbf{E}(S); \mathbb{E}(\Psi^{(t)})))\rvert\\
        & = \lvert\mathbb{E}[{[ \frac{1}{m} (\sum_{i \in \mathcal{U}_{k, n}^{y}(t)} - \sum_{i \in \mathcal{W}_{k, n}^{y}(t) - \mathcal{U}_{k, n}^{y}(t)}) \left(\alpha_{O_{(i, \cdot)}, {k}}^{(t)} +(2\sum_{l \in S_{n, k}^{y}}{(\sigma_{S}^{(t)})}_{l}^{n} - 1) y  \beta_{O_{(i, \cdot)}, {k}}^{(t)}\right)]}\Big|_{y=e}^{y=-e}]\rvert.
    \end{aligned}
    \label{eq:absolute difference of l'}\end{equation}
    By Lemma \ref{lem:initial values}, we see that for initialization, we have
    \[
    \begin{aligned}
        \lvert\underset{ n \in \mathcal{V}_{k}^{e}}{\mathbb{E}}(e f(\mathbf{E}(S); \mathbb{E}(\Psi^{(0)}))-\underset{ n \in \mathcal{V}_{k}^{-e}}{\mathbb{E}}(-e f(\mathbf{E}(S); \mathbb{E}(\Psi^{(0)})))\rvert&  \leq  2\sqrt{ 2\log (\dfrac{5Km}{\delta}}) \cdot \dfrac{3\sigma_1 (\|\boldsymbol{c}_k\|+\zeta_{k}^{e}\|\boldsymbol{d}_k\|)}{8} \\
        &\leq\kappa/8.
    \end{aligned}
    \]

    Now we serve to show that the following expected difference
    \[
    \lvert\underset{ n \in \mathcal{V}_{k}^{e}}{\mathbb{E}}(e f(\mathbf{E}(S); \mathbb{E}(\Psi^{(t)}))-\underset{ n \in \mathcal{V}_{k}^{-e}}{\mathbb{E}}(-e f(\mathbf{E}(S); \mathbb{E}(\Psi^{(t)})))\rvert
    \]
     is non-increasing. Intuitively, this observation is due to the inherent nature of cross-entropy loss, which always pays more emphasis (has larger derivative) on those low value. Also, another important factor is the update of those ambiguous neurons' coefficient summation would also prefer the low-value one among $\underset{ n \in \mathcal{V}_{k}^{e}}{\mathbb{E}}(e f(\mathbf{E}(S); \mathbb{E}(\Psi^{(t)})), \forall e \in [m]$. To better present this observation, we define 
     \[
     e_{t}^{*} = \argmin\{\underset{ n \in \mathcal{V}_{k}^{e}}{\mathbb{E}}(e f(\mathbf{E}(S); \mathbb{E}(\Psi^{(t)})), \underset{ n \in \mathcal{V}_{k}^{-e}}{\mathbb{E}}(-e f(\mathbf{E}(S); \mathbb{E}(\Psi^{(t)}))\},
     \]
     which further means that $e_{t}^{*}$ satisfies $\mathbb{E}[ \underset{ n \in \mathcal{V}_{k}^{e_{t}^{*}}}{\mathbb{E}}({\ell_{n}^{\prime}}^{(t)})- \underset{n \in \mathcal{V}_{k}^{-e_{t}^{*}}}{\mathbb{E}}({\ell_{n}^{\prime}}^{(t)})]<0$ due to the non-positive and non-increasing property of cross-entropy loss.\par
    Recall the update rule, we have
    \[
        \begin{aligned}
            & \alpha_{O_{(i,\cdot)}, k}^{(t+1)} = (1- \eta_{t} \lambda) \alpha_{O_{(i,\cdot)}, k}^{(t)} - \eta_{t} \dfrac{ \| \boldsymbol{c}_k \|^2}{2K_1} \sum_{e \in [\pm]} [e \mathbf{r}_i \cdot \underset{n \in \mathcal{V}_{k}^{e}}{\mathbb{E}}({\ell_{n}^{\prime}}^{(t)}{\mathds{1}_{O_{(i)}}^n}^{(t)} )] \\
            & \phantom{\alpha_{O_{(i,\cdot)}, k}^{(t+1)} = (1- \eta_{t} \lambda) \alpha_{O_{(i,\cdot)}, k}^{(t)}}- \eta_{t} \dfrac{ (K_1-1)\| \boldsymbol{c}_k \|^2}{2K_1 K} \sum_{e \in [\pm]} [e \mathbf{r}_i \cdot \underset{n \in \mathcal{V}_{\neg k}^{e}}{\mathbb{E}}({\ell_{n}^{\prime}}^{(t)}{\mathds{1}_{O_{(i)}}^n}^{(t)} )],\\
            & \mathbb{E}[e\beta_{O_{(i,\cdot)}, k}^{(t+1)} \mid \Psi^{(t)}] =(1- \eta_{t} \lambda) e\beta_{O_{(i,\cdot)}, k}^{(t)} \\
            & \phantom{\alpha_{O_{(i,\cdot)}, k}^{(t+1)} = (1- \eta_{t} \lambda) \alpha_{O_{(i,\cdot)}, k}^{(t)}}-\eta_{t}\dfrac{ \|\boldsymbol{d}_k \|^2}{2K_1} \sum_{e \in [\pm]} \mathbf{r}_i e\underset{n \in \mathcal{V}_{k}^{e}}{\mathbb{E}}[{\ell_{n}^{\prime}}^{(t)}{\mathds{1}_{O_{(i)}}^n}^{(t)}(\sum_{l \in S_{n, k}^{e}} {(\sigma_{S}^{(t)} )}_{l}^{n} - \sum_{l \in S_{n, k}^{-e}} {(\sigma_{S}^{(t)} )}_{l}^{n} )].\\
        \end{aligned}
    \]
    Then we have 
    \[
        \begin{aligned}
            & \mathbb{E}[\alpha_{O_{(i,\cdot)}, k}^{(t+1)}+ e(2\sum_{l \in S_{n, k}^{e}} {(\sigma_{S}^{(t+1)} )}_{l}^{n} - 1 )\beta_{O_{(i,\cdot)}, k}^{(t+1)} \mid \Psi^{(t)}, \mathbb{E}[\sum_{l \in S_{n, k}^{e}} {(\sigma_{S}^{(t+1)} )}_{l}^{n}]] = (1- \eta_{t} \lambda) (\alpha_{O_{(i,\cdot)}, k}^{(t)}+e(2 \\
            & \sum_{l \in S_{n, k}^{e}} {(\sigma_{S}^{(t)} )}_{l}^{n} - 1 )\beta_{O_{(i,\cdot)}, k}^{(t)})-  \dfrac{\eta_{t}}{2 K_{1}}\sum_{e \in [\pm]} \mathbf{r}_i e\underset{n \in \mathcal{V}_{k}^{e}}{\mathbb{E}}[{\ell_{n}^{\prime}}^{(t)}{\mathds{1}_{O_{(i)}}^n}^{(t)}\Big( \|\boldsymbol{c}_{k} \|^{2}+\|\boldsymbol{d}_{k} \|^{2}(2\sum_{l \in S_{n, k}^{e}} {(\sigma_{S}^{(t+1)} )}_{l}^{n} - 1 )\\
            &  (2\sum_{l \in S_{n, k}^{e}} {(\sigma_{S}^{(t)} )}_{l}^{n} - 1 ) \Big) - \eta_{t} \dfrac{(K_{1}-1)}{2 K_{1} K}\sum_{e \in [\pm]} \mathbf{r}_i e \underset{n \in \mathcal{V}_{\neg k}^{e}}{\mathbb{E}}[{\ell_{n}^{\prime}}^{(t)}{\mathds{1}_{O_{(i)}}^n}^{(t)} \|\boldsymbol{c}_{k} \|^{2}].
        \end{aligned}
    \]

     By Lemma \ref{lem:expected evolution of mlp} and Lemma \ref{lem: evolution scenario}, we see that the $\mathbb{E}[\sum_{i \in \mathcal{U}_{k, n}^{e}(\tau)-(\mathcal{W}_{k, n}^{-e}(\tau) - \mathcal{U}_{k, n}^{-e}(\tau))}\alpha_{O_{(i,\cdot)}, k}^{(\tau)}]\Big|_{\tau=t}^{\tau=0}, \forall e \in [\pm]$ is increasing such that
    \begin{equation}
    \begin{aligned}
        &\mathbb{E}[\sum_{i \in \mathcal{U}_{k, n}^{e}(\tau)-(\mathcal{W}_{k, n}^{-e}(\tau) - \mathcal{U}_{k, n}^{-e}(\tau))}\alpha_{O_{(i,\cdot)}, k}^{(\tau)} \mid \Psi^{(t)}]\Big|_{\tau=t+1}^{\tau=0} = \Theta(\sum_{i \in \mathcal{U}_{k, n}^{e}(\tau)-(\mathcal{W}_{k, n}^{-e}(t) - \mathcal{U}_{k, n}^{-e}(\tau))}\alpha_{O_{(i,\cdot)}, k}^{(\tau)}\Big|_{\tau=t}^{\tau=0} \\
        & \phantom{\mathbb{E}[\dfrac{1}{m}\sum_{i \in \mathcal{U}_{k, n}^{e}(\tau)-(\mathcal{W}_{k, n}^{-e}(\tau) - \mathcal{U}_{k, n}^{-e}(\tau))}\alpha_{O_{(i,\cdot)}, k}^{(\tau)} \mid \Psi^{(t)}] \leq}-   \sum_{i \in \mathcal{U}_{k, n}^{e}(t)-(\mathcal{W}_{k, n}^{-e}(t) - \mathcal{U}_{k, n}^{-e}(t))} \dfrac{ \eta_{t}\| \boldsymbol{c}_k \|^2}{2 m K_1} \underset{ n \in \mathcal{V}_{k}^{e}}{\mathbb{E}}({\ell_{n}^{\prime}}^{(t)})),
    \end{aligned}
    \label{eq:update of very correct alpha}\end{equation}
    where we ignore the impact of cross-concept safely due to the large $K=\Omega(\eta_{0} C (K_1-1) \|\mathbf{q}\|^2/(m K_{1}))$, as well as the impact of regularization term since $\lambda = O( ( C \log(Km/\delta)\|\mathbf{q}\|)^{-1})$ by Condition \ref{Condition} in the first stage. \par
    Similarly, suggest $\mathbb{E}[\sum_{l \in S_{n, k}^{e}} {(\sigma_{S}^{(t+1)} )}_{l}^{n}]$ is also given when considering the update for $t+1$, we see that 
    \begin{equation}
    \begin{aligned}
        &\mathbb{E}[\dfrac{1}{m}\sum_{i \in \mathcal{U}_{k, n}^{e}(\tau)}e(2\sum_{l \in S_{n, k}^{e}} {(\sigma_{S}^{(t+1)} )}_{l}^{n} - 1 )\beta_{O_{(i,\cdot)}, k}^{(\tau)} \mid \Psi^{(t)}, \mathbb{E}[\sum_{l \in S_{n, k}^{e}} {(\sigma_{S}^{(t+1)} )}_{l}^{n}]]\Big|_{\tau=t+1}^{\tau=0} = (2\sum_{l \in S_{n, k}^{e}} {(\sigma_{S}^{(t+1)} )}_{l}^{n} - 1 )\\
        &\Theta(\dfrac{1}{m}\sum_{i \in \mathcal{U}_{k, n}^{e}(\tau)}e\beta_{O_{(i,\cdot)}, k}^{(\tau)}\Big|_{\tau=t}^{\tau=0}-\sum_{i \in \mathcal{U}_{k, n}^{e}(\tau)}\dfrac{ \eta_{t}\| \boldsymbol{d}_k \|^2}{2 m K_1} \underset{ n \in \mathcal{V}_{k}^{e}}{\mathbb{E}}((2\sum_{l \in S_{n, k}^{e}} {(\sigma_{S}^{(t)} )}_{l}^{n} - 1 ){\ell_{n}^{\prime}}^{(t)})).
    \end{aligned}
    \label{eq:update of correct beta weighted sum}\end{equation}
    Interestingly, by Eq.(\ref{eq:expected weights}) we see that $(2\sum_{l \in S_{n, k}^{e}} {(\sigma_{S}^{(t)} )}_{l}^{n} - 1 )=(2\sum_{l \in S_{n, k}^{-e}} {(\sigma_{S}^{(t)} )}_{l}^{n} - 1 )$. Thus we can characterize that the magnitude of gradient update of the term in Eq.(\ref{eq:update of very correct alpha}) and (\ref{eq:update of correct beta weighted sum}) of the $e_{t}^{*}$ would be larger than those of $-e_{t}^{*}$ due to the non-increasing nature of cross-entropy loss.\par

    On the other hand, by Lemma \ref{lem: evolution scenario} the monotonicity of
    \[
    \mathbb{E}[\sum_{i \in \mathcal{U}_{k, n}^{e}(\tau)\cap(\mathcal{W}_{k, n}^{-e}(\tau) - \mathcal{U}_{k, n}^{-e}(\tau))}\alpha_{O_{(i,\cdot)}, k}^{(\tau)}]\Big|_{\tau=t}^{\tau=0}, \quad \mathbb{E}[ \sum_{i \in (\mathcal{W}_{k, n}^{e}(\tau)- \mathcal{U}_{k, n}^{e}(\tau)) \cap \mathcal{U}_{k, n}^{-e}(\tau)}\alpha_{O_{(i,\cdot)}, k}^{(\tau)}]\Big|_{\tau=t}^{\tau=0} 
    \]
    depend on the signal of $\mathbb{E}[\underset{n \in \mathcal{V}_{k}^{e}}{\mathbb{E}}({\ell_{n}^{\prime}}^{(t)})-\underset{n \in \mathcal{V}_{k}^{-e}}{\mathbb{E}}({\ell_{n}^{\prime}}^{(t)})]$. Specifically, we see that
    \begin{equation}
         \begin{aligned}
             & \mathbb{E}[\sum_{i \in \mathcal{U}_{k, n}^{e}(\tau)\cap(\mathcal{W}_{k, n}^{-e}(\tau) - \mathcal{U}_{k, n}^{-e}(\tau))}\alpha_{O_{(i,\cdot)}, k}^{(\tau)} \mid \Psi^{(t)}]\Big|_{\tau=t+1}^{\tau=0} =  \Theta(\sum_{i \in \mathcal{U}_{k, n}^{e}(\tau)\cap (\mathcal{W}_{k, n}^{-e}(\tau) - \mathcal{U}_{k, n}^{-e}(\tau))}\alpha_{O_{(i,\cdot)}, k}^{(\tau)}\Big|_{\tau=t}^{\tau=0}\\
             & \phantom{\mathbb{E}[\dfrac{1}{m}\sum_{i \in \mathcal{U}_{k, n}^{e}(\tau)\cap(\mathcal{W}_{k, n}^{-e}(\tau) - \mathcal{U}_{k, n}^{-e}(\tau))}}-\sum_{i \in \mathcal{U}_{k, n}^{e}(\tau)\cap(\mathcal{W}_{k, n}^{-e}(\tau) - \mathcal{U}_{k, n}^{-e}(\tau))}\dfrac{ \eta_{t}\| \boldsymbol{c}_k \|^2}{2m K_1} [ \underset{ n \in \mathcal{V}_{k}^{e}}{\mathbb{E}}({\ell_{n}^{\prime}}^{(t)})- \underset{n \in \mathcal{V}_{k}^{-e}}{\mathbb{E}}({\ell_{n}^{\prime}}^{(t)})]); \\
             & \mathbb{E}[\sum_{i \in (\mathcal{W}_{k, n}^{e}(\tau)- \mathcal{U}_{k, n}^{e}(t)) \cap \mathcal{U}_{k, n}^{-e}(\tau)}\alpha_{O_{(i,\cdot)}, k}^{(\tau)}\mid \Psi^{(t)}]\Big|_{\tau=t+1}^{\tau=0}=  \Theta(\sum_{i \in (\mathcal{W}_{k, n}^{e}(\tau)- \mathcal{U}_{k, n}^{e}(t)) \cap \mathcal{U}_{k, n}^{-e}(\tau)}\alpha_{O_{(i,\cdot)}, k}^{(t)}\Big|_{\tau=t}^{\tau=0}\\
             & \phantom{\mathbb{E}[\dfrac{1}{m}\sum_{i \in \mathcal{U}_{k, n}^{e}(\tau)\cap(\mathcal{W}_{k, n}^{-e}(\tau) - \mathcal{U}_{k, n}^{-e}(\tau))}}+ \sum_{i \in (\mathcal{W}_{k, n}^{e}(\tau)- \mathcal{U}_{k, n}^{e}(t)) \cap \mathcal{U}_{k, n}^{-e}(\tau)}\dfrac{ \eta_{t}\| \boldsymbol{c}_k \|^2}{2 m K_1} [ \underset{ n \in \mathcal{V}_{k}^{e}}{\mathbb{E}}({\ell_{n}^{\prime}}^{(t)})- \underset{n \in \mathcal{V}_{k}^{-e}}{\mathbb{E}}({\ell_{n}^{\prime}}^{(t)})]); \\
         \end{aligned}
    \label{eq:update of the ambiguous summation of alpha}\end{equation}
    where the contribution term is shared by the two sequences. Therefore, 

    by Eq.(\ref{eq:update of the ambiguous summation of alpha}) and (\ref{eq:difference of l'}), the evolution of 
    \[
    \mathbb{E}[\sum_{i \in \mathcal{U}_{k, n}^{e}(\tau)\cap(\mathcal{W}_{k, n}^{-e}(\tau) - \mathcal{U}_{k, n}^{-e}(\tau))}\alpha_{O_{(i,\cdot)}, k}^{(\tau)} - \sum_{i \in (\mathcal{W}_{k, n}^{e}(\tau)- \mathcal{U}_{k, n}^{e}(\tau)) \cap \mathcal{U}_{k, n}^{-e}(\tau)} \alpha_{O_{(i, \cdot)}, {k}}^{(\tau)}]\Big|_{\tau=t}^{\tau=0}
    \]
    will prefer to grow in the direction of $e_{t}^{*}$. \par
    We then take a look on the decreasing coefficients based on Lemma \ref{lem: evolution scenario}.
    \begin{equation}
    \begin{aligned}
        &\mathbb{E}[\sum_{i \in (\mathcal{W}_{k, n}^{e}(\tau)- \mathcal{U}_{k, n}^{e}(\tau)) - \mathcal{U}_{k, n}^{-e}(\tau)}\alpha_{O_{(i,\cdot)}, k}^{(\tau)} \mid \Psi^{(t)}]\Big|_{\tau=t+1}^{\tau=0} = \Theta(\sum_{i \in (\mathcal{W}_{k, n}^{e}(\tau)- \mathcal{U}_{k, n}^{e}(\tau)) - \mathcal{U}_{k, n}^{-e}(\tau)}\alpha_{O_{(i,\cdot)}, k}^{(t)}\Big|_{\tau=t}^{\tau=0} \\
        & \phantom{\mathbb{E}[\dfrac{1}{m}\sum_{i \in \mathcal{U}_{k, n}^{e}(\tau)-(\mathcal{W}_{k, n}^{-e}(\tau) - \mathcal{U}_{k, n}^{-e}(\tau))}\alpha_{O_{(i,\cdot)}, k}^{(\tau)}}+ \sum_{i \in (\mathcal{W}_{k, n}^{e}(t)- \mathcal{U}_{k, n}^{e}(t)) - \mathcal{U}_{k, n}^{-e}(t)} \dfrac{ \eta_{t}\| \boldsymbol{c}_k \|^2}{2 m K_1} \underset{ n \in \mathcal{V}_{k}^{e}}{\mathbb{E}}({\ell_{n}^{\prime}}^{(t)})),\\
        &\mathbb{E}[\dfrac{1}{m}\sum_{i \in \mathcal{W}_{k, n}^{e}(\tau)- \mathcal{U}_{k, n}^{e}(\tau)}e(2\sum_{l \in S_{n, k}^{e}} {(\sigma_{S}^{(t+1)} )}_{l}^{n} - 1 )\beta_{O_{(i,\cdot)}, k}^{(\tau)} \mid \Psi^{(t)}, \mathbb{E}[\sum_{l \in S_{n, k}^{e}} {(\sigma_{S}^{(t+1)} )}_{l}^{n}]]\Big|_{\tau=t+1}^{\tau=0} =   \\
        &\phantom{\mathbb{E}[\dfrac{1}{m}\sum_{i \in \mathcal{U}_{k, n}^{e}(\tau)-(\mathcal{W}_{k, n}^{-e}(\tau) - \mathcal{U}_{k, n}^{-e}(\tau))}\alpha_{O_{(i,\cdot)}, k}^{(\tau)}}(2\mathbb{E}[\sum_{l \in S_{n, k}^{e}} {(\sigma_{S}^{(t+1)} )}_{l}^{n}]- 1 )\Theta(\sum_{i \in \mathcal{W}_{k, n}^{e}(\tau)- \mathcal{U}_{k, n}^{e}(\tau)}\frac{e\beta_{O_{(i,\cdot)}, k}^{(\tau)}}{m}\Big|_{\tau=t}^{\tau=0}\\
        &\phantom{\mathbb{E}[\dfrac{1}{m}\sum_{i \in \mathcal{U}_{k, n}^{e}(\tau)-(\mathcal{W}_{k, n}^{-e}(\tau) - \mathcal{U}_{k, n}^{-e}(\tau))}\alpha_{O_{(i,\cdot)}, k}^{(\tau)}}+\sum_{i \in \mathcal{U}_{k, n}^{e}(\tau)}\frac{ \eta_{t}\| \boldsymbol{d}_k \|^2}{2 m K_1} \underset{ n \in \mathcal{V}_{k}^{e}}{\mathbb{E}}((2\sum_{l \in S_{n, k}^{e}} {(\sigma_{S}^{(t)} )}_{l}^{n} - 1 ){\ell_{n}^{\prime}}^{(t)})).
    \end{aligned}
    \label{eq:update of very false alpha and beta summation}\end{equation}

    As such, we have all preliminaries to characterize the first result of the lemma. We first utilize the induction to prove the following:
    \[
    \begin{aligned}
        &    \lvert\underset{ n \in \mathcal{V}_{k}^{e}}{\mathbb{E}}(e f(\mathbf{E}(S); \mathbb{E}(\Psi^{(t)}))-\underset{ n \in \mathcal{V}_{k}^{-e}}{\mathbb{E}}(-e f(\mathbf{E}(S); \mathbb{E}(\Psi^{(t)})))\rvert \leq\kappa/8.\quad \forall e \in [\pm].
    \end{aligned}
    \]
    This apparently hold at initialization. Suggest for any $t \leq \widetilde{t}-1$ the result holds, then we only need to prove
    \[
    \begin{aligned}
    &\underset{ n \in \mathcal{V}_{k}^{e_{\widetilde{t}-1}^{*}}}{\mathbb{E}}(e_{\widetilde{t}-1}^{*} f(\mathbf{E}(S); \mathbb{E}(\Psi^{(\widetilde{t}-1)})) - \underset{ n \in \mathcal{V}_{k}^{-e_{\widetilde{t}-1}^{*}}}{\mathbb{E}}(-e_{\widetilde{t}-1}^{*} f(\mathbf{E}(S); \mathbb{E}(\Psi^{(\widetilde{t}-1)}))) \geq \\
    &\underset{ n \in \mathcal{V}_{k}^{e_{\widetilde{t}-1}^{*}}}{\mathbb{E}}(e_{\widetilde{t}-1}^{*} f(\mathbf{E}(S); \mathbb{E}(\Psi^{(t_{1})})) - \underset{ n \in \mathcal{V}_{k}^{-e_{\widetilde{t}-1}^{*}}}{\mathbb{E}}(-e_{\widetilde{t}-1}^{*} f(\mathbf{E}(S); \mathbb{E}(\Psi^{(t_{1})}))).
    \end{aligned}
    \]
    By the condition of small $\eta_{t}$ in Condition \ref{Condition}, Lemma \ref{lem:bound of yf}, Eq.(\ref{eq:update of very correct alpha}) (\ref{eq:update of correct beta weighted sum}), (\ref{eq:update of the ambiguous summation of alpha}) and (\ref{eq:update of very false alpha and beta summation}), we see that
    \[
    \begin{aligned}
    &\underset{ n \in \mathcal{V}_{k}^{-e_{\widetilde{t}-1}^{*}}}{\mathbb{E}}(-e_{\widetilde{t}-1}^{*} f(\mathbf{E}(S); \mathbb{E}(\Psi^{(\widetilde{t}-1)})))-\underset{ n \in \mathcal{V}_{k}^{e_{\widetilde{t}-1}^{*}}}{\mathbb{E}}(e_{\widetilde{t}-1}^{*} f(\mathbf{E}(S); \mathbb{E}(\Psi^{(\widetilde{t}-1)}))\\
    & -(\underset{ n \in \mathcal{V}_{k}^{-e_{\widetilde{t}-1}^{*}}}{\mathbb{E}}(-e_{\widetilde{t}-1}^{*} f(\mathbf{E}(S); \mathbb{E}(\Psi^{(\widetilde{t})})))-\underset{ n \in \mathcal{V}_{k}^{e_{\widetilde{t}-1}^{*}}}{\mathbb{E}}(e_{\widetilde{t}-1}^{*} f(\mathbf{E}(S); \mathbb{E}(\Psi^{(\widetilde{t})}))) \\
    &\leq \Theta(\mathbb{E}[\underset{ n \in \mathcal{V}_{k}^{e_{\widetilde{t}-1}^{*}}}{\mathbb{E}}(({\ell_{n}^{\prime}}^{(\widetilde{t}-1)})- \underset{n \in \mathcal{V}_{k}^{-e_{\widetilde{t}-1}^{*}}}{\mathbb{E}}({\ell_{n}^{\prime}}^{(\widetilde{t}-1)}))\Big( \sum_{i \in \mathcal{W}_{k, n}^{e_{\widetilde{t}-1}^{*}}(\widetilde{t}-1)}\dfrac{\eta_{T^{*}}\| \boldsymbol{c}_k \|^2}{2 m K_1}\\
    & +(2\sum_{l \in S_{n, k}^{e_{\widetilde{t}-1}^{*}}} {(\sigma_{S}^{(\widetilde{t})} )}_{l}^{n} - 1 ) \sum_{i \in \mathcal{W}_{k, n}^{e_{\widetilde{t}-1}^{*}}(\widetilde{t}-1)}\dfrac{ \eta_{T^{*}}\| \boldsymbol{d}_k \|^2}{2 m K_1} \underset{ n \in \mathcal{V}_{k}^{e_{\widetilde{t}-1}^{*}}}{\mathbb{E}}((2\sum_{l \in S_{n, k}^{e_{\widetilde{t}-1}^{*}}} {(\sigma_{S}^{(\widetilde{t}-1)} )}_{l}^{n} - 1 ))\Big)]) \leq 0,
    \end{aligned}
    \]
    and thus we have
    \[
    \begin{aligned}
    &\lvert  \underset{ n \in \mathcal{V}_{k}^{e_{\widetilde{t}-1}^{*}}}{\mathbb{E}}(e_{\widetilde{t}-1}^{*} f(\mathbf{E}(S); \mathbb{E}(\Psi^{(\widetilde{t})})) - \underset{ n \in \mathcal{V}_{k}^{-e_{\widetilde{t}-1}^{*}}}{\mathbb{E}}(-e_{\widetilde{t}-1}^{*} f(\mathbf{E}(S); \mathbb{E}(\Psi^{(\widetilde{t})})))\rvert \\
    &\leq \lvert \underset{ n \in \mathcal{V}_{k}^{e_{\widetilde{t}-1}^{*}}}{\mathbb{E}}(e_{\widetilde{t}-1}^{*} f(\mathbf{E}(S); \mathbb{E}(\Psi^{(\widetilde{t}-1)}))- \underset{ n \in \mathcal{V}_{k}^{-e_{\widetilde{t}-1}^{*}}}{\mathbb{E}}(-e_{\widetilde{t}-1}^{*} f(\mathbf{E}(S); \mathbb{E}(\Psi^{(\widetilde{t}-1)}))) \rvert.
    \end{aligned}
    \]
    Therefore, we complete the induction. Then we have
    \[
    \begin{aligned}
    & \lvert \mathbb{E}[ \underset{ n \in \mathcal{V}_{k}^{e_{\widetilde{t}-1}^{*}}}{\mathbb{E}}({\ell_{n}^{\prime}}^{(\widetilde{t})})- \underset{n \in \mathcal{V}_{k}^{-e_{\widetilde{t}-1}^{*}}}{\mathbb{E}}({\ell_{n}^{\prime}}^{(\widetilde{t})})]\rvert \\
    &\leq \lvert  \underset{ n \in \mathcal{V}_{k}^{e_{\widetilde{t}-1}^{*}}}{\mathbb{E}}(e_{\widetilde{t}-1}^{*} f(\mathbf{E}(S); \mathbb{E}(\Psi^{(\widetilde{t})})) - \underset{ n \in \mathcal{V}_{k}^{-e_{\widetilde{t}-1}^{*}}}{\mathbb{E}}(-e_{\widetilde{t}-1}^{*} f(\mathbf{E}(S); \mathbb{E}(\Psi^{(\widetilde{t})})))\rvert\\
    & \leq \lvert \underset{ n \in \mathcal{V}_{k}^{e_{\widetilde{t}-1}^{*}}}{\mathbb{E}}(e_{\widetilde{t}-1}^{*} f(\mathbf{E}(S); \mathbb{E}(\Psi^{(\widetilde{t}-1)})) - \underset{ n \in \mathcal{V}_{k}^{-e_{\widetilde{t}-1}^{*}}}{\mathbb{E}}(-e_{\widetilde{t}-1}^{*} f(\mathbf{E}(S); \mathbb{E}(\Psi^{(\widetilde{t}-1)}))) \rvert \\
    &\leq \cdots\leq  \lvert\underset{ n \in \mathcal{V}_{k}^{e}}{\mathbb{E}}(e f(\mathbf{E}(S); \mathbb{E}(\Psi^{(0)}))-\underset{ n \in \mathcal{V}_{k}^{-e}}{\mathbb{E}}(-e f(\mathbf{E}(S); \mathbb{E}(\Psi^{(0)})))\rvert \\
    &\leq  \sqrt{2\log (\dfrac{5Km}{\delta})} \cdot \dfrac{3\sigma_1 (\|\boldsymbol{c}_k\|+\zeta_{k}^{e}\|\boldsymbol{d}_k\|)}{4} \leq\kappa/8.
    \end{aligned}
    \]
    This completes the proof of the first result.

    To obtain the continuous ODE upper bound of $\mathbb{E}[\mathbf{A}_{t}^{k,e}]$, we first recall the update 
    \[
        \begin{aligned}
            & \mathbb{E}[\alpha_{O_{(i,\cdot)}, k}^{(t+1)}+ e(2\sum_{l \in S_{n, k}^{e}} {(\sigma_{S}^{(t+1)} )}_{l}^{n} - 1 )\beta_{O_{(i,\cdot)}, k}^{(t+1)} \mid \Psi^{(t)}, \mathbb{E}[\sum_{l \in S_{n, k}^{e}} {(\sigma_{S}^{(t+1)} )}_{l}^{n}]] = (1- \eta_{t} \lambda) (\alpha_{O_{(i,\cdot)}, k}^{(t)}+\\
            &  e(2\sum_{l \in S_{n, k}^{e}} {(\sigma_{S}^{(t)} )}_{l}^{n} - 1 )\beta_{O_{(i,\cdot)}, k}^{(t)})- \eta_{t} \dfrac{1}{2 K_{1}}\sum_{e \in [\pm]} \mathbf{r}_i e\underset{n \in \mathcal{V}_{k}^{e}}{\mathbb{E}}[{\ell_{n}^{\prime}}^{(t)}{\mathds{1}_{O_{(i)}}^n}^{(t)}\Big( \|\boldsymbol{c}_{k} \|^{2}+\|\boldsymbol{d}_{k} \|^{2}(2\sum_{l \in S_{n, k}^{e}} {(\sigma_{S}^{(t+1)} )}_{l}^{n} \\
            & - 1 ) (2\sum_{l \in S_{n, k}^{e}} {(\sigma_{S}^{(t)} )}_{l}^{n} - 1 ) \Big) - \eta_{t} \dfrac{(K_{1}-1)}{2 K_{1} K}\sum_{e \in [\pm]} \mathbf{r}_i e \underset{n \in \mathcal{V}_{\neg k}^{e}}{\mathbb{E}}[{\ell_{n}^{\prime}}^{(t)}{\mathds{1}_{O_{(i)}}^n}^{(t)} \|\boldsymbol{c}_{k} \|^{2}].
        \end{aligned}
    \]
    Then, utilizing Lemma \ref{lem:bound of yf} and the fact $\lvert \mathcal{W}_{k, n}^{e}(t) \rvert \leq m$, we have constant $c_{1}>0$ such that
    \begin{equation}
    \begin{aligned}
    & \mathbb{E}[\mathbf{A}_{t+1}^{k, e} \mid \Psi^{(t)}, \mathbb{E}(\sum_{l \in S_{n, k}^{e}} {(\sigma_{S}^{(t+1)} )}_{l}^{n})] \leq \mathbf{A}_{t}^{k, e} - c_{1}(\dfrac{\eta_{t}\|\mathbf{q} \|^{2}}{2m K_{1}} \cdot \underset{n \in \mathcal{V}_{k}^{e}}{\mathbb{E}}[{\ell_{n}^{\prime}}^{(t)}]),\\
    & \phantom{\mathbb{E}[\mathbf{A}_{t+1}^{k, e} \mid \Psi^{(t)}, \mathbb{E}(\sum_{l \in S_{n, k}^{e}} {(\sigma_{S}^{(t+1)} )}_{l}^{n})] } \leq\mathbf{A}_{t}^{k, e} + c_{1}(\dfrac{\eta_{0}\|\mathbf{q} \|^{2}}{2m K_{1}} \cdot \dfrac{1}{1+e^{-\kappa/2}e^{\mathbf{A}_{t}^{k, e}}}) \\
    & \phantom{\mathbb{E}[\mathbf{A}_{t+1}^{k, e} \mid \Psi^{(t)}, \mathbb{E}(\sum_{l \in S_{n, k}^{e}} {(\sigma_{S}^{(t+1)} )}_{l}^{n})]}=\mathbf{A}_{t}^{k, e} + \dfrac{\overline{c}^{k, e}}{1+\overline{b}^{k, e}e^{\mathbf{A}_{t}^{k, e}}}.
    \end{aligned}
    \label{eq:upper bound of A}\end{equation}
    where we also neglect the impact of cross-concept due to the large $K=\Omega(\eta_{0} C (K_1-1) \|\mathbf{q}\|^2/(m K_{1}))$ in Condition \ref{Condition} and appropriately chosen $c_{1}$. \par

    To obtain the lower bound ODE couterpart, we examine the update of the correct contributor neurons, as shown in Eq.(\ref{eq:update of very correct alpha}), (\ref{eq:update of the ambiguous summation of alpha}) and (\ref{eq:update of correct beta weighted sum}). 

    In terms of the update of $\mathbb{E}[\alpha_{O_{(i,\cdot)}, k}^{(t)}]$ where $i \in \mathbb{E}[\mathcal{U}_{k, n}^{e}(t)\cap(\mathcal{W}_{k, n}^{-e}(t) - \mathcal{U}_{k, n}^{-e}(t))]$, we see that its update is controlled by $\mathbb{E}[ \underset{ n \in \mathcal{V}_{k}^{e}}{\mathbb{E}}({\ell_{n}^{\prime}}^{(t)})- \underset{n \in \mathcal{V}_{k}^{-e}}{\mathbb{E}}({\ell_{n}^{\prime}}^{(t)})]$:
    \[
    \alpha_{O_{(i,\cdot)}, k}^{(t+1)} =  \Theta(\alpha_{O_{(i,\cdot)}, k}^{(t)}-\dfrac{ \eta_{t}\| \boldsymbol{c}_k \|^2}{2m K_1} [ \underset{ n \in \mathcal{V}_{k}^{e}}{\mathbb{E}}({\ell_{n}^{\prime}}^{(t)})- \underset{n \in \mathcal{V}_{k}^{-e}}{\mathbb{E}}({\ell_{n}^{\prime}}^{(t)})]).
    \]
    Then by the first result in this lemma we know $\lvert\mathbb{E}[\underset{ n \in \mathcal{V}_{k}^{e}}{\mathbb{E}}({\ell_{n}^{\prime}}^{(t)})- \underset{n \in \mathcal{V}_{k}^{-e}}{\mathbb{E}}({\ell_{n}^{\prime}}^{(t)})]\rvert  \leq \dfrac{\beta}{4} \leq \dfrac{\kappa}{32}$, and thus 
    \[
    \alpha_{O_{(i,\cdot)}, k}^{(t+1)} =  \Theta(\alpha_{O_{(i,\cdot)}, k}^{(t)} \pm \dfrac{ \eta_{t} \kappa\| \boldsymbol{c}_k \|^2}{64m K_1}).
    \]
    By the condition on the small initialization in Condition \ref{Condition} such that $\sigma_{1}=O(\frac{{(2{\sigma_{S}^{*}}-1)}^{2}}{Cm^{3/2}\|\mathbf{q}\|})$, due to the large $C$, we see that the $\kappa = O({(2{\sigma_{S}^{*}}-1)}^{2}/m)$ is far more feeble. Thus the gradient contributions made by neuron set $\mathbb{E}[\alpha_{O_{(i,\cdot)}, k}^{(t)}]$ where $i \in \mathbb{E}[\mathcal{U}_{k, n}^{e}(t)\cap(\mathcal{W}_{k, n}^{-e}(t) - \mathcal{U}_{k, n}^{-e}(t))]$ can be neglected compared to the increasing update of $\mathbb{E}[\alpha_{O_{(i,\cdot)}, k}^{(t)}\mathds{1}(i \in \mathcal{U}_{k, n}^{e}(t)-(\mathcal{W}_{k, n}^{-e}(t) - \mathcal{U}_{k, n}^{-e}(t)))]$ and $\mathbb{E}[e(2\sum_{l \in S_{n, k}^{e}} {(\sigma_{S}^{(t)} )}_{l}^{n} - 1 )\beta_{O_{(i,\cdot)}, k}^{(t)})]$. Besides, we see that $\mathbb{E}[\mathcal{W}_{k, n}^{e}(t)]$ will at least preserve the neurons of $\mathbb{E}[\mathcal{U}_{k, n}^{e}(0))]$, which will not be deactivated by Lemma \ref{lem: evolution scenario}.

    Then there exists $c_{2}>0$, recall $\sigma_{S}^{*}$ is defined in Lemma \ref{lem:defnition of sigma_S^*} as the lower bound of $\min_{t,k} \{ \underset{n \in \mathcal{D}_{S}}{\mathbb{E}}[ \sum_{j \in S_{n, k}^{y_{S_n}}}{(\sigma_{S}^{(t)})}_{j}^{n}]\}$ and $\mathbb{E}[\lvert \mathcal{U}_{k, n}^{e}(0))\rvert] \geq m/8$, it holds that
    \begin{equation}
    \begin{aligned}
    &\mathbb{E}[\mathbf{A}_{t+1}^{k, e} \mid \Psi^{(t)}, \mathbb{E}(\sum_{l \in S_{n, k}^{e}} {(\sigma_{S}^{(t+1)} )}_{l}^{n})] \geq \mathbf{A}_{t}^{k, e} - c_{2} (\dfrac{\eta_{t}{(2{\sigma_{S}^{*}}-1)}^{2}\|\boldsymbol{d}_{k} \|^{2}}{8 m K_{1}} \cdot \underset{n \in \mathcal{V}_{k}^{e}}{\mathbb{E}}[{\ell_{n}^{\prime}}^{(t)}])\\
    & \phantom{\mathbb{E}[\mathbf{A}_{t+1}^{k, e} \mid \Psi^{(t)}, \mathbb{E}(\sum_{l \in S_{n, k}^{e}} {(\sigma_{S}^{(t+1)} )}_{l}^{n})] }\geq \mathbf{A}_{t}^{k, e} + (\dfrac{c_{2}\eta_{t}{(2{\sigma_{S}^{*}}-1)}^{2}\|\boldsymbol{d}_{k} \|^{2}}{8 m K_{1}} \cdot \dfrac{1}{1+e^{\kappa/2}e^{\mathbf{A}_{t}^{k, e}}}) \\
    & \phantom{\mathbb{E}[\mathbf{A}_{t+1}^{k, e} \mid \Psi^{(t)}, \mathbb{E}(\sum_{l \in S_{n, k}^{e}} {(\sigma_{S}^{(t+1)} )}_{l}^{n})] }\geq \mathbf{A}_{t}^{k, e} + (\dfrac{c_{2}\eta_{T^{*}}{(2{\sigma_{S}^{*}}-1)}^{2}(1-\kappa_{\boldsymbol{y}})\|\mathbf{q}\|^{2}}{16 m K_{1}} \cdot \dfrac{1}{1+e^{\kappa/2}e^{\mathbf{A}_{t}^{k, e}}}) \\
    & \phantom{\mathbb{E}[\mathbf{A}_{t+1}^{k, e} \mid \Psi^{(t)}, \mathbb{E}(\sum_{l \in S_{n, k}^{e}} {(\sigma_{S}^{(t+1)} )}_{l}^{n})]}=\mathbf{A}_{t}^{k, e} + \dfrac{\underline{c}^{k, e}}{1+\underline{b}^{k, e}e^{\mathbf{A}_{t}^{k, e}}}.
    \end{aligned}
    \label{eq:lower bound of A}\end{equation}
    where we ignore the impact of regularization term at this stage since $\lambda = O( ( C \log(Km/\delta)\|\mathbf{q}\|)^{-1})$ and appropriately chosen $c_{2}$. The third inequality is due to the definition of $\boldsymbol{d}_{k}$.\par
    Collaborating with Lemma \ref{lem:ODE-MLP}, the proofs are completed.\par
    For the last results, following the techniques in \cite{meng2023benign}, first it's easy to check that 
    \[
    \overline{b}^{k,e} e^{\overline{x}_{t}^{k,e}} \leq \overline{x}_{t}^{k,e}+\overline{b}^{k,e} e^{\overline{x}_{t}^{k,e}} \leq 1.5 \overline{b}^{k,e} e^{\overline{x}_{t}^{k,e}}, \quad \underline{b}^{k,e} e^{\underline{x}_{t}^{k,e}} \leq \underline{x}_{t}^{k,e}+\underline{b}^{k,e} e^{\underline{x}_{t}^{k,e}} \leq 1.5 \underline{b}^{k,e} e^{\underline{x}_{t}^{k,e}},
    \]
    thus
    \[
    \log(\dfrac{2\overline{c}^{k,e}}{3\overline{b}^{k,e}} + \dfrac{2}{3}) \leq \overline{x}_{t}^{k,e} \leq \log(\dfrac{\overline{c}^{k,e}}{\overline{b}^{k,e}} t + 1), \quad \log(\dfrac{2\underline{c}^{k,e}}{3\underline{b}^{k,e}} + \dfrac{2}{3}) \leq \underline{x}_{t}^{k,e} \leq \log(\dfrac{\underline{c}^{k,e}}{\underline{b}^{k,e}} + 1).
    \]
    Thus
   \[
    \log(\dfrac{2\underline{c}^{k,e}}{3\underline{b}^{k,e}} + \dfrac{2}{3}) \leq \mathbb{E}[\mathbf{A}_{t}^{k, e}]  \leq \log(\dfrac{\overline{c}^{k,e}}{\overline{b}^{k,e}} t + 1) + \dfrac{\overline{c}^{k, e}}{1+\overline{b}^{k, e}}.
    \]

\end{proof}
\begin{proof}
    [\textbf{Proof of Lemma \ref{lem:induction}.}] We use induction to prove this lemma. All conclusion holds naturally at $t=0$. Suppose there exists $\widetilde{T} \leq T^{*}$ such that the six conditions hold for any $0 \leq t \leq \widetilde{T} -1$, we prove that these conclusions also hold for $t = \widetilde{T}$.\par

    We now prove
    \begin{equation}
    0 \leq \mathbb{E}[e \cdot \beta_{O_{(i,\cdot)}, k}^{(t)} \mathds{1}(i \in \mathcal{U}_{k, n}^{e}(t))] - e \cdot \beta_{O_{(i,\cdot)}, k}^{(0)} \leq {(\sigma_{S}^{*})}^{-1} \alpha.
    \label{eq:inductive positive ebeta value}\end{equation}
    Recall the update rule
    \[
        \beta_{O_{(i,\cdot)}, k}^{(t+1)} =(1- \eta_{t} \lambda) \beta_{O_{(i,\cdot)}, k}^{(t)} -\eta_{t}\dfrac{ \|\boldsymbol{d}_k \|^2}{2K_1} \sum_{e \in [\pm]} \mathbf{r}_i \underset{n \in \mathcal{V}_{k}^{e}}{\mathbb{E}}[{\ell_{n}^{\prime}}^{(t)}{\mathds{1}_{O_{(i)}}^n}^{(t)}(\sum_{l \in S_{n, k}^{e}} {(\sigma_{S}^{(t)} )}_{l}^{n} - \sum_{l \in S_{n, k}^{-e}} {(\sigma_{S}^{(t)} )}_{l}^{n} )].
    \]
    As we ignore the regularization term at the first stage, we can easily seen that $\mathbb{E}[e \cdot \beta_{O_{(i,\cdot)}, k}^{(t)} \mathds{1}(i \in \mathcal{U}_{k, n}^{e}(t))]$ increases with $t$. Assume $t_{\beta^{+}, k}$ as the last time $\exists i \in \mathbb{E}[\mathcal{U}_{k, n}^{e}(t)]$ such that $\mathbb{E}[e \cdot \beta_{O_{(i,\cdot)}, k}^{(t)} \mathds{1}(i \in \mathcal{U}_{k, n}^{e}(t))] - e \cdot \beta_{O_{(i,\cdot)}, k}^{(0)} \leq {(\sigma_{S}^{*})}^{-1} \log(T^{*})$, then for $i \in \mathbb{E}[\mathcal{U}_{k, n}^{e}(\widetilde{t})]$ we have
    \[
    \begin{aligned}
        &\mathbb{E}[e \cdot \beta_{O_{(i,\cdot)}, k}^{(\widetilde{t})}] \leq \mathbb{E}[e \cdot \beta_{O_{(i,\cdot)}, k}^{(t_{\beta^{+}, k})} ] \\
        & \phantom{\mathbb{E}[e \cdot \beta_{O_{(i,\cdot)}, k}^{(\widetilde{t})}]\leq}-\eta_{0} \dfrac{ \|\boldsymbol{d}_k \|^2}{2K_1} \sum_{e \in [\pm]} \mathbf{r}_i \mathbb{E}[{\ell_{n}^{\prime}}^{(t)}{\mathds{1}_{O_{(i)}}^n}^{(t)}(\sum_{l \in S_{n, k}^{e}} {(\sigma_{S}^{(t)} )}_{l}^{n} - \sum_{l \in S_{n, k}^{-e}} {(\sigma_{S}^{(t)} )}_{l}^{n} )]\Big|_{t=t_{\beta^{+}, k}}\\
        & \phantom{\mathbb{E}[e \cdot \beta_{O_{(i,\cdot)}, k}^{(\widetilde{t})}]\leq}-\eta_{0} \sum_{t_{\beta^{+}, k} < t < \widetilde{T}} \dfrac{ \|\boldsymbol{d}_k \|^2}{2K_1} \sum_{e \in [\pm]} \mathbf{r}_i \mathbb{E}[{\ell_{n}^{\prime}}^{(t)}(\sum_{l \in S_{n, k}^{e}} {(\sigma_{S}^{(t)} )}_{l}^{n} - \sum_{l \in S_{n, k}^{-e}} {(\sigma_{S}^{(t)} )}_{l}^{n} )]\\
        & \phantom{\mathbb{E}[e \cdot \beta_{O_{(i,\cdot)}, k}^{(\widetilde{t})}]} \leq  e \cdot \beta_{O_{(i,\cdot)}, k}^{(0)}+ {(\sigma_{S}^{*})}^{-1}\log(T^{*}) + \dfrac{ \eta_{0} \|\boldsymbol{d}_k \|^2}{2 m K_1} - \sum_{t_{\beta^{+}, k} < t < \widetilde{T}} \dfrac{ \eta_{0}\|\boldsymbol{d}_k \|^2}{2 m K_1} \mathbb{E}[{\ell_{n}^{\prime}}^{(t)}]\\
        & \phantom{\mathbb{E}[e \cdot \beta_{O_{(i,\cdot)}, k}^{(\widetilde{t})}]} \leq e \cdot \beta_{O_{(i,\cdot)}, k}^{(0)}+ 2{(\sigma_{S}^{*})}^{-1}\log(T^{*})- \sum_{t_{\beta^{+}, k} < t < \widetilde{T}} \dfrac{ \eta_{0}\|\boldsymbol{d}_k \|^2}{2 m K_1} \mathbb{E}[{\ell_{n}^{\prime}}^{(t)}{\mathds{1}_{O_{(i)}}^n}^{(t)}],
    \end{aligned}
    \]
    where the first inequality is by the positive nature of regularization term as well as the contribution of the gradient; second inequality is by $\mathbb{E}[-{\ell_{n}^{\prime}}^{(t_{\beta^{+}, k})}]\leq 1$ and $\mathbb{E}[(\sum_{l \in S_{n, k}^{e}} {(\sigma_{S}^{(t)} )}_{l}^{n} - \sum_{l \in S_{n, k}^{-e}} {(\sigma_{S}^{(t)} )}_{l}^{n} )] \leq 1$; the third inequality is by the condition $\eta_{0} = O(\frac{m K_1 }{\| \mathbf{q} \|^2})$ and thus $\frac{ \eta_{0} \|\boldsymbol{d}_k \|^2}{2 m K_1} \leq 1\leq \log(T^{*})$, as well as ${(\sigma_{S}^{*})}^{-1} \geq 1$. The remaining job is to prove that
    \[
    - \sum_{t_{\beta^{+}, k} < t < \widetilde{T}} \dfrac{ \eta_{0}\|\boldsymbol{d}_k \|^2}{2 m K_1} \mathbb{E}[{\ell_{n}^{\prime}}^{(t)}{\mathds{1}_{O_{(i)}}^n}^{(t)}] \leq {(\sigma_{S}^{*})}^{-1} \log(T^{*}).
    \]
    Observe that
    \[
    \begin{aligned}
        &\lvert {\mathbb{E}}[{\ell_{n}^{\prime}}^{(t)}] \rvert = {\mathbb{E}}[\dfrac{1}{1+\exp(y_{S_n} \cdot  \left(\sum_{e \in [\pm]} \dfrac{e}{m} \sum_{i \in \{ \mathbf{r}_i = \frac{e}{m} \}} \sigma_{R}({\mathbf{W}_{O_{(i, \cdot)}}^{\boldsymbol{y}}}^{(t)} \sum_{l\in [L]} {(\sigma_{S}^{(t)} )}_{l}^{n} \boldsymbol{y}_{l}^{n}) \right) )}]\\
        &\phantom{\lvert {\mathbb{E}}[{\ell_{n}^{\prime}}^{(t)}] \rvert } \leq \mathbb{E}[\exp\left((\sum_{i \in \mathcal{W}_{k, n}^{y_{S_n}}(t) - \mathcal{U}_{k, n}^{y_{S_n}}(t)} - \sum_{i \in \mathcal{U}_{k, n}^{y_{S_n}}(t)}) (\alpha_{O_{(i, \cdot)}, {k}}^{(t)} +(2\sum_{l \in S_{n, k}^{y_{S_n}}}{(\sigma_{S}^{(t)})}_{l}^{n} - 1) y_{S_n}  \beta_{O_{(i, \cdot)}, {k}}^{(t)})\right)]\\
        &\phantom{\lvert {\mathbb{E}}[{\ell_{n}^{\prime}}^{(t)}] \rvert } \leq \mathbb{E}[\exp(\kappa/2-\dfrac{1}{m}\sum_{i \in \mathcal{U}_{k, n}^{y_{S_n}}(t)}{(2\sigma_{S}^{*}-1)} y_{S_n}\beta_{O_{(i,\cdot)}, k}^{(t)})]\\
        &\phantom{\lvert {\mathbb{E}}[{\ell_{n}^{\prime}}^{(t)}] \rvert } \leq 2\exp(-\log(T^{*})).
    \end{aligned}
    \]
    Here the first inequality is by $1/(1+\exp(z)) \leq \exp(-z)$; the second inequality is by Lemma \ref{lem:bound of yf}; the last inequality is by the feeble $\kappa/2$ and $\mathbb{E}[e \cdot \beta_{O_{(i,\cdot)}, k}^{(t)} \mathds{1}(i \in \mathcal{U}_{k, n}^{e}(t))] \geq {(\sigma_{S}^{*})}^{-1} \log(T^{*})$. Then we have that
    \[
    \begin{aligned}
        &- \sum_{t_{\beta^{+}, k} < t < \widetilde{T}} \dfrac{ \eta_{0}\|\boldsymbol{d}_k \|^2}{2 m K_1} \mathbb{E}[{\ell_{n}^{\prime}}^{(t)}{\mathds{1}_{O_{(i)}}^n}^{(t)}] \leq \sum_{t_{\beta^{+}, k} < t < \widetilde{T}} \dfrac{ \eta_{0}\|\boldsymbol{d}_k \|^2}{2 m K_1} \cdot 2\exp(-\log(T^{*}))\\
        &\phantom{- \sum_{t_{\beta^{+}, k} < t < \widetilde{T}} \dfrac{ \eta_{0}\|\boldsymbol{d}_k \|^2}{2 m K_1} \mathbb{E}[{\ell_{n}^{\prime}}^{(t)}{\mathds{1}_{O_{(i)}}^n}^{(t)}]} \leq  \dfrac{\widetilde{T} \eta_{0}\|\boldsymbol{d}_k \|^2}{ m K_1} \exp(-\log(T^{*})) \leq  \dfrac{ T^{*}\eta_{0}\|\boldsymbol{d}_k \|^2}{T^{*} m K_1} \leq {(\sigma_{S}^{*})}^{-1} \log(T^{*}).
    \end{aligned}
    \]
    We complete the proof that $0 \leq \mathbb{E}[e \cdot \beta_{O_{(i,\cdot)}, k}^{(t)} \mathds{1}(i \in \mathcal{U}_{k, n}^{e}(t))] - e \cdot \beta_{O_{(i,\cdot)}, k}^{(0)} \leq 3 {(\sigma_{S}^{*})}^{-1} \log(T^{*}) \leq {(\sigma_{S}^{*})}^{-1} \alpha$.\par

    We now prove a strong augmented hypothesis that there exist $i^{*} \in \mathbb{E}[\mathcal{U}_{k, n}^{e}(t)]$ for $\forall 0 \leq t \leq T^{*}$, we have
    \begin{equation}
    \mathbb{E}[\lvert\alpha_{O_{(i,\cdot)}, k}^{(t)}\rvert / (e \cdot \beta_{O_{(i^{*},\cdot)}, k}^{(t)})] \leq \hat{C} \dfrac{\| \boldsymbol{c}_k \|^2}{\sigma_{S}^{*}\| \boldsymbol{d}_k \|^2},
    \label{eq:inductive ratio}\end{equation}
    \par
    where we set $\hat{C}=2C^{\prime}\sqrt{ 2\log (\frac{5Km}{\delta}})$ for some constant $C^{\prime}$. $i^{*}$ can be any element satisfies $ \lvert \beta_{O_{(i^{*},\cdot)}, k}^{(0)}\rvert = \sigma_{1}/2\|\boldsymbol{d}_k\|$, which exists at $t=0$ by Lemma \ref{lem:initialization} as well as the fact that $\| \boldsymbol{c}_k \|>\| \boldsymbol{d}_k \|$ by their definition in Lemma \ref{lemma: cd of MLP}. \par
    Suppose Eq.(\ref{eq:inductive ratio}) holds at $0\leq t \leq \widetilde{T}-1$, recall the update rule and the large $K$ condition, we can have a constant $C>1$ such that
    \[
    \begin{aligned}
        & \mathbb{E}[e \cdot \beta_{O_{(i,\cdot)}, k}^{(\widetilde{t})} \mathds{1}(i \in \mathcal{U}_{k, n}^{e}(\widetilde{T}-1))] = \mathbb{E}[(1-\eta_{\widetilde{T}-1} \lambda)e \cdot \beta_{O_{(i,\cdot)}, k}^{(\widetilde{T}-1)} \mathds{1}(i \in \mathcal{U}_{k, n}^{e}(\widetilde{T}-1))] \\
        &\phantom{\mathbb{E}[e \cdot \beta_{O_{(i,\cdot)}, k}^{(\widetilde{t})} \mathds{1}(i \in \mathcal{U}_{k, n}^{e}(\widetilde{T}-1))] =}-\eta_{\widetilde{T}-1} \dfrac{ \|\boldsymbol{d}_k \|^2}{2m K_1} \mathbb{E}[{\ell_{n}^{\prime}}^{(\widetilde{T}-1)}(\sum_{l \in S_{n, k}^{e}} {(\sigma_{S}^{(\widetilde{T}-1)} )}_{l}^{n} - \sum_{l \in S_{n, k}^{-e}} {(\sigma_{S}^{(\widetilde{T}-1)} )}_{l}^{n} )],\\
        & \phantom{\mathbb{E}[e \cdot \beta_{O_{(i,\cdot)}, k}^{(\widetilde{t})} \mathds{1}(i \in \mathcal{U}_{k, n}^{e}(\widetilde{T}-1))] }\geq \mathbb{E}[(1-\eta_{\widetilde{T}-1} \lambda)e \cdot \beta_{O_{(i,\cdot)}, k}^{(\widetilde{T}-1)} \mathds{1}(i \in \mathcal{U}_{k, n}^{e}(\widetilde{T}-1))] + \dfrac{\eta_{\widetilde{T}-1} \sigma_{S}^{*} \|\boldsymbol{d}_k \|^2}{2m K_1} \\
        & \phantom{\mathbb{E}[e \cdot \beta_{O_{(i,\cdot)}, k}^{(\widetilde{t})} \mathds{1}(i \in \mathcal{U}_{k, n}^{e}(\widetilde{T}-1))] =}\mathbb{E}[\lvert{\ell_{n}^{\prime}}^{(\widetilde{T}-1)}\rvert],\\
        & \mathbb{E}[\lvert\alpha_{O_{(i,\cdot)}, k}^{(\widetilde{t})}\rvert]=\mathbb{E}\Big[\lvert(1-\eta_{\widetilde{T}-1} \lambda)\alpha_{O_{(i,\cdot)}, k}^{(\widetilde{T}-1)}\\
        & \phantom{ \mathbb{E}[\alpha_{O_{(i,\cdot)}, k}^{(\widetilde{t})}]=}- \eta_{\widetilde{T}-1} \dfrac{ \| \boldsymbol{c}_k \|^2}{2K_1} \sum_{e \in [\pm]} [e \mathbf{r}_i \cdot \underset{n \in \mathcal{V}_{k}^{e}}{\mathbb{E}}({\ell_{n}^{\prime}}^{(\widetilde{T}-1)}{\mathds{1}_{O_{(i)}}^n}^{(\widetilde{T}-1)} )] - \eta_{\widetilde{T}-1} \dfrac{ (K_1-1)\| \boldsymbol{c}_k \|^2}{2K_1 K} \sum_{e \in [\pm]} [e \mathbf{r}_i \\
        & \phantom{\mathbb{E}[e \cdot \beta_{O_{(i,\cdot)}, k}^{(\widetilde{t})} \mathds{1}(i \in \mathcal{U}_{k, n}^{e}(\widetilde{T}-1))] =} \underset{n \in \mathcal{V}_{\neg k}^{e}}{\mathbb{E}}({\ell_{n}^{\prime}}^{(\widetilde{T}-1)}{\mathds{1}_{O_{(i)}}^n}^{(\widetilde{T}-1)} )]\rvert\Big]\\
        & \phantom{ \mathbb{E}[\alpha_{O_{(i,\cdot)}, k}^{(\widetilde{t})}]}\leq \mathbb{E}[\lvert(1-\eta_{\widetilde{T}-1} \lambda)\alpha_{O_{(i,\cdot)}, k}^{(\widetilde{T}-1)}\rvert]+ \dfrac{C \eta_{\widetilde{T}-1} \|\boldsymbol{c}_k \|^2}{2m K_1}\mathbb{E}[\lvert{\ell_{n}^{\prime}}^{(\widetilde{T}-1)}\rvert].
    \end{aligned}
    \]
    where the first inequality is due to the definition of $\sigma_{S}^{*}$ and $\mathcal{U}_{k, n}^{e}(\widetilde{t})$; the second inequality is due to the large $K=\Omega(\eta_{0} C (K_1-1) \|\mathbf{q}\|^2/(m K_{1}))$. Then we have
    \[
    \dfrac{\mathbb{E}[\lvert\alpha_{O_{(i,\cdot)}, k}^{(\widetilde{t})}\rvert]}{\mathbb{E}[e \cdot \beta_{O_{(i,\cdot)}, k}^{(\widetilde{t})} \mathds{1}(i \in \mathcal{U}_{k, n}^{e}(\widetilde{t}))]} \leq \max\{\dfrac{\mathbb{E}[\lvert\alpha_{O_{(i,\cdot)}, k}^{(\widetilde{T}-1)}\rvert]}{\mathbb{E}[e \cdot \beta_{O_{(i,\cdot)}, k}^{(\widetilde{T}-1)} \mathds{1}(i \in \mathcal{U}_{k, n}^{e}(\widetilde{T}-1))]},  \dfrac{C\| \boldsymbol{c}_k \|^2}{\sigma_{S}^{*}\| \boldsymbol{d}_k \|^2} \} \leq \hat{C} \dfrac{\| \boldsymbol{c}_k \|^2}{\sigma_{S}^{*}\| \boldsymbol{d}_k \|^2},
    \]
    where the last inequality is by the induction hypothesis and the $C^{\prime}$ can be taken as $C$, which completes the induction.\par
    
    We now prove
    \[
    \begin{aligned}
    0 &\geq \mathbb{E}[e \cdot \beta_{O_{(i,\cdot)}, k}^{(t)} \mathds{1}(i \in \mathcal{W}_{k, n}^{e}(t) - \mathcal{U}_{k, n}^{e}(t))] - e \cdot \beta_{O_{(i,\cdot)}, k}^{(0)} \\
    & \geq  -   \frac{\hat{C}\| \boldsymbol{c}_k \|^2}{{\sigma_{S}^{*}}^{2}\| \boldsymbol{d}_k \|^2} \alpha - \frac{\sigma_1({\sigma_{S}^{*}}^2\| \boldsymbol{d}_k \|^2+\hat{C}\| \boldsymbol{c}_k \|^2)}{{\sigma_{S}^{*}}^2\| \boldsymbol{d}_k \|}\sqrt{ 2\log (\dfrac{5Km}{\delta}}).
    \end{aligned}
    \]
     Recall the update rule
    \[
        \beta_{O_{(i,\cdot)}, k}^{(t+1)} =(1- \eta_{t} \lambda) \beta_{O_{(i,\cdot)}, k}^{(t)} -\eta_{t}\dfrac{ \|\boldsymbol{d}_k \|^2}{2K_1} \sum_{e \in [\pm]} \mathbf{r}_i \underset{n \in \mathcal{V}_{k}^{e}}{\mathbb{E}}[{\ell_{n}^{\prime}}^{(t)}{\mathds{1}_{O_{(i)}}^n}^{(t)}(\sum_{l \in S_{n, k}^{e}} {(\sigma_{S}^{(t)} )}_{l}^{n} - \sum_{l \in S_{n, k}^{-e}} {(\sigma_{S}^{(t)} )}_{l}^{n} )].
    \]    
    Easy to see that $\mathbb{E}[e \cdot \beta_{O_{(i,\cdot)}, k}^{(\tau)} \mathds{1}(i \in \mathcal{W}_{k, n}^{e}(\tau) - \mathcal{U}_{k, n}^{e}(\tau))]\Big|_{\tau=t}^{\tau=0}\leq 0$ and it's decreasing. As we know that the neuron $i \in \mathbb{E}[\mathcal{W}_{k, n}^{e}(t) - \mathcal{U}_{k, n}^{e}(t)]$ would be deactivated at $t+1$ once 
    $$\mathbb{E}[\alpha_{O_{(i,\cdot)}, k}^{(t+1)}+e \cdot(\sum_{l \in S_{n, k}^{e}} {(\sigma_{S}^{(t+1)} )}_{l}^{n} - \sum_{l \in S_{n, k}^{-e}} {(\sigma_{S}^{(t+1)} )}_{l}^{n} \beta_{O_{(i,\cdot)}, k}^{(t+1)}]\leq 0.$$ 
    This indicates that for the neuron $i \in \mathbb{E}[\mathcal{W}_{k, n}^{e}(\widetilde{t}) - \mathcal{U}_{k, n}^{e}(\widetilde{t})]$,
    \[
    \begin{aligned}
        & \mathbb{E}[\alpha_{O_{(i,\cdot)}, k}^{(\widetilde{t})}+e \cdot(\sum_{l \in S_{n, k}^{e}} {(\sigma_{S}^{(\widetilde{t})} )}_{l}^{n} - \sum_{l \in S_{n, k}^{-e}} {(\sigma_{S}^{(\widetilde{t})} )}_{l}^{n} \beta_{O_{(i,\cdot)}, k}^{(\widetilde{t})}] \geq 0.\\
    \end{aligned}
    \]
    
    Now collaborating with Eq. (\ref{eq:inductive positive ebeta value}) and Eq. (\ref{eq:inductive ratio}), we now can have
    \[
    \begin{aligned}
        \mathbb{E}[e \cdot(\sum_{l \in S_{n, k}^{e}} {(\sigma_{S}^{(\widetilde{t})} )}_{l}^{n} - \sum_{l \in S_{n, k}^{-e}} {(\sigma_{S}^{(\widetilde{t})} )}_{l}^{n} \beta_{O_{(i,\cdot)}, k}^{(\widetilde{t})}] & \geq -\mathbb{E}[\alpha_{O_{(i,\cdot)}, k}^{(\widetilde{t})}]\\
        & \geq-   \frac{\hat{C}\| \boldsymbol{c}_k \|^2}{\sigma_{S}^{*}\| \boldsymbol{d}_k \|^2}({(\sigma_{S}^{*})}^{-1} \alpha + e \cdot \beta_{O_{(i,\cdot)}})\\
        & \geq-   \frac{\hat{C}\| \boldsymbol{c}_k \|^2}{\sigma_{S}^{*}\| \boldsymbol{d}_k \|^2}({(\sigma_{S}^{*})}^{-1} \alpha - \\
        & \phantom{\geq}\sqrt{ 2\log (\frac{5Km}{\delta}})  {\sigma_1 \|\boldsymbol{d}_k\|})\\
        \Rightarrow \mathbb{E}[e \cdot \beta_{O_{(i,\cdot)}, k}^{(t)} \mathds{1}(i \in \mathcal{W}_{k, n}^{e}(t) - \mathcal{U}_{k, n}^{e}(t))] - e \cdot \beta_{O_{(i,\cdot)}, k}^{(0)} \geq & -   \frac{\hat{C}\| \boldsymbol{c}_k \|^2}{{\sigma_{S}^{*}}^{2}\| \boldsymbol{d}_k \|^2} \alpha\\
        & \phantom{\geq}- \frac{\sigma_1({\sigma_{S}^{*}}^2\| \boldsymbol{d}_k \|^2+\hat{C}\| \boldsymbol{c}_k \|^2)}{{\sigma_{S}^{*}}^2\| \boldsymbol{d}_k \|}\\
        & \phantom{\geq-}\cdot\sqrt{ 2\log (\dfrac{5Km}{\delta}}).
    \end{aligned}
    \]
    The proof is completed.
\end{proof}

\subsubsection{Expected 0-1 loss Convergence}

\begin{lemma}
    Under Condition \ref{Condition}, there exist constant $C_{1}>0$, after at most 

    \[
     \hat{T} = \dfrac{C_{1} \sigma_{1} m \lambda K_{1} {\gamma}\sqrt{(1+\kappa_{\boldsymbol{y}})\log (5Km/\delta)}}{{(2{\sigma_{S}^{*}}-1)}^{2}(1-\kappa_{\boldsymbol{y}})\|\mathbf{q}\|}.
    \]
    iterations, we have $L_{\mathcal{D}_{S}}^{0-1}(\mathbb{E}(\Psi^{t})) = L_{\mathcal{D}^{*}}^{0-1}(\mathbb{E}(\Psi^{t})) = 0$.
\label{lem:lem of 0-1 loss}\end{lemma}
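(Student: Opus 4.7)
The plan is to reduce the $0$-$1$ loss convergence to a hitting-time computation for the expected contribution sequence $\{\mathbb{E}[\mathbf{A}_t^{k,e}]\}_{t\geq 0}$ and then invert the lower ODE from Lemma \ref{lemma:ODE for A} to obtain $\hat{T}$ in closed form. Lemma \ref{lem: equivalence of expected 0-1 loss convergence} gives that $L_{\mathcal{D}^*}^{0\text{-}1}(\mathbb{E}({\Psi^\prime}^{(t)})) = 0$ whenever the per-prompt margin inequality in Eq.~(\ref{eq: main body sufficient inequality for 0-1 loss}) holds, and Lemma \ref{lem:bound of yf} controls the gap between the within-concept expected margin $\underset{n\in\mathcal{V}_k}{\mathbb{E}}[y_{S_n} f(\mathbf{E}(S); \mathbb{E}(\Psi^{(t)}))]$ and $\mathbb{E}[\mathbf{A}_{t+1}^{k, y_{S_n}}]$ by $\kappa/2$. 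It is therefore enough to exhibit a $\hat{T}$ such that $\mathbb{E}[\mathbf{A}_t^{k,e}] \geq \kappa$ holds simultaneously for every $k \in [K_1]$ and every $e \in \{\pm\}$; the symmetric per-sample structure established in Lemma \ref{lem: evolution scenario} would then promote this expected margin to the per-prompt sign condition, delivering $L_{\mathcal{D}_S}^{0\text{-}1}(\mathbb{E}(\Psi^{(t)})) = 0$ and identically $L_{\mathcal{D}^*}^{0\text{-}1}(\mathbb{E}(\Psi^{(t)})) = 0$.

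Concretely, I would invoke item (3) of Lemma \ref{lemma:ODE for A}, which yields $\mathbb{E}[\mathbf{A}_t^{k,e}] \geq \log\bigl(2\underline{c}^{k,e} t/(3\underline{b}^{k,e}) + 2/3\bigr)$ with $\underline{c}^{k,e} = c_2 \eta_{T^*}(2\sigma_S^*-1)^2(1-\kappa_{\boldsymbol{y}})\|\mathbf{q}\|^2/(16 m K_1)$ and $\underline{b}^{k,e} = e^{\kappa/2}$. Setting the right-hand side equal to $\kappa$ and inverting gives the hitting time $t_\kappa = (3\underline{b}^{k,e}/(2\underline{c}^{k,e}))(e^\kappa - 2/3)$. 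Under Condition \ref{Condition} the quantity $\kappa \leq 8\sqrt{2\log(5Km/\delta)}\,\sigma_1 \sqrt{(1+\kappa_{\boldsymbol{y}})/2}\,\|\mathbf{q}\|$ (from Lemma \ref{lem:initialization}) is small, so $\underline{b}^{k,e} = \Theta(1)$ and $e^\kappa - 2/3 = \Theta(1 + \kappa)$ to leading order; substituting $\eta_{T^*} = 2/(\lambda(\gamma + T^*))$ and the explicit value of $\kappa$ then gives $t_\kappa = \Theta\bigl(\sigma_1 m K_1 \lambda (\gamma + T^*) \sqrt{(1+\kappa_{\boldsymbol{y}})\log(Km/\delta)}/[(2\sigma_S^*-1)^2(1-\kappa_{\boldsymbol{y}})\|\mathbf{q}\|]\bigr)$, after which a union bound across the $2K_1$ pairs $(k,e)$ produces a single uniform hitting time.

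The final step is to collapse $\gamma + T^*$ into $\gamma$ by a self-consistent bootstrap. Condition \ref{Condition}'s lower bounds $\gamma \geq 10 C/\lambda$ and $\gamma \geq C\|\mathbf{q}\|^2/(m K_1 \lambda)$ already make $t_\kappa$ itself $O(\gamma)$ up to the $\sigma_1 \sqrt{\log(Km/\delta)}$ prefactor, so once I verify $t_\kappa \ll T^*$ the learning rates $\eta_{T^*}$ and $\eta_{\hat{T}}$ differ only by an absolute constant over $[0,\hat{T}]$, and the pessimistic choice $\eta_{T^*}$ in $\underline{c}^{k,e}$ loses at most a constant factor. This absorption yields the announced $\hat{T} = C_1 \sigma_1 m \lambda K_1 \gamma \sqrt{(1+\kappa_{\boldsymbol{y}})\log(5Km/\delta)}/[(2\sigma_S^*-1)^2 (1-\kappa_{\boldsymbol{y}})\|\mathbf{q}\|]$.

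I expect the bootstrap step to be the main obstacle: the lower ODE in Lemma \ref{lemma:ODE for A} is derived with the most pessimistic step size $\eta_{T^*}$ so that $\underline{c}^{k,e}$ is a valid uniform constant, yet the declared $\hat{T}$ is linear in $\gamma$ rather than $T^*$. Closing this cleanly will require either (i) sharpening the discrete-to-continuous comparison that produced Eq.~(\ref{eq:lower bound of A}) by integrating the time-varying $\eta_t = 2/(\lambda(\gamma+t))$ only up to the first hitting time and exploiting that $\eta_t = \Theta(1/(\lambda\gamma))$ for all $t \leq \hat T$, or (ii) performing a self-consistent argument that first shows $t_\kappa = O(\gamma)$ using the crude $\eta_{T^*}$ bound and only subsequently verifies $\gamma + T^* = \Theta(\gamma)$ on the relevant interval, both of which reduce to careful bookkeeping of constants rather than a new analytic idea.
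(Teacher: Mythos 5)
Your overall reduction is the same as the paper's: show $\mathbb{E}[\mathbf{A}_t^{k,e}]$ reaches $\kappa$ for all $k\in[K_1]$, $e\in[\pm]$, then use Lemma \ref{lem:bound of yf} (margin within $\kappa/2$ of $\mathbb{E}[\mathbf{A}_t^{k,e}]$) to conclude $L_{\mathcal{D}_S}^{0-1}(\mathbb{E}(\Psi^{(t)}))=L_{\mathcal{D}^*}^{0-1}(\mathbb{E}(\Psi^{(t)}))=0$, and handle the step size by noting $\eta_t=\Theta(1/(\lambda\gamma))$ on the relevant horizon (your option (i)/(ii) is exactly how the paper absorbs this, so that part is fine). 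The genuine gap is in the hitting-time computation. You invert the coarse logarithmic lower bound of Lemma \ref{lemma:ODE for A}, $\mathbb{E}[\mathbf{A}_t^{k,e}]\geq \log\bigl(2\underline{c}^{k,e}t/(3\underline{b}^{k,e})+2/3\bigr)$, obtaining $t_\kappa=\tfrac{3\underline{b}^{k,e}}{2\underline{c}^{k,e}}(e^{\kappa}-2/3)$. But under Condition \ref{Condition}, $\kappa=O(\sigma_1\|\mathbf{q}\|\sqrt{\log(5Km/\delta)})\ll 1$, so $e^{\kappa}-2/3=\Theta(1)$ and your formula gives $t_\kappa=\Theta(1/\underline{c}^{k,e})=\Theta\bigl(mK_1\lambda\gamma/((2\sigma_S^*-1)^2(1-\kappa_{\boldsymbol{y}})\|\mathbf{q}\|^2)\bigr)$: the target margin $\kappa$ has dropped out at leading order, so this bound carries no $\sigma_1$, no $\sqrt{(1+\kappa_{\boldsymbol{y}})\log(5Km/\delta)}$, and $\|\mathbf{q}\|^{-2}$ rather than $\|\mathbf{q}\|^{-1}$. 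Your claim that "substituting the explicit value of $\kappa$" then yields the stated $\hat{T}$ is a non sequitur; the coarse log-form bound only describes the saturating late-time regime and is too weak by a factor of $\Theta(1/\kappa)$ in the regime you need, so as written the proof establishes a threshold of the wrong order, not the lemma's $\hat{T}$.

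The fix is to work in the small-margin (linear) regime, which is what the paper does. While $\mathbb{E}[\mathbf{A}_t^{k,e}]\leq\kappa$, Lemma \ref{lem:bound of yf} keeps $\lvert\underset{n\in\mathcal{V}_k^e}{\mathbb{E}}[e\,f(\mathbf{E}(S);\mathbb{E}(\Psi^{(t)}))]\rvert\leq 3\kappa/2$, hence $-\underset{n\in\mathcal{V}_k^e}{\mathbb{E}}[{\ell_n^{\prime}}^{(t)}]\geq\widetilde{C}$ for an absolute constant $\widetilde{C}>0$; plugging this into the one-step recursion of Eq.~(\ref{eq:lower bound of A}) and summing gives linear growth $\mathbb{E}[\mathbf{A}_t^{k,e}]\gtrsim \widetilde{C}\,\eta\,(2\sigma_S^*-1)^2(1-\kappa_{\boldsymbol{y}})\|\mathbf{q}\|^2\,t/(mK_1)$, so the hitting time is $\Theta(\kappa\,mK_1/(\eta(2\sigma_S^*-1)^2(1-\kappa_{\boldsymbol{y}})\|\mathbf{q}\|^2))$; now substituting $\kappa\leq 8\sigma_1\|\mathbf{q}\|\sqrt{(1+\kappa_{\boldsymbol{y}})\log(5Km/\delta)}$ and $\eta=\Theta(1/(\lambda\gamma))$ produces exactly the stated $\hat{T}$, with the single power of $\|\mathbf{q}\|$ and the $\sigma_1\sqrt{\log}$ factor arising precisely from $\kappa$. (Equivalently, you could keep your ODE route but use the implicit solution $x_t+be^{x_t}=ct+b$ of Lemma \ref{lem:ODE-MLP} and linearize $e^{x_t}$ for $x_t\leq\kappa\ll1$, giving $x_t\approx ct/(1+b)$.) A small additional remark: no union bound over the $2K_1$ pairs is needed, since the argument concerns the deterministic expected dynamics and one simply takes the maximum of the $2K_1$ hitting times.
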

\begin{proof}
    For $t\leq \widetilde{t}$, recall from Eq.(\ref{eq:lower bound of A}) that for the period $t\leq \hat{T}$, it holds that
    $$\mathbb{E}[\mathbf{A}_{t+1}^{k, e} \mid \Psi^{(t)}] \geq \mathbf{A}_{t}^{k, e} - (\dfrac{c_{2}\eta{(2{\sigma_{S}^{*}}-1)}^{2}(1-\kappa_{\boldsymbol{y}})\|\mathbf{q}\|^{2}}{16 m K_{1}} \cdot \underset{n \in \mathcal{V}_{k}^{e}}{\mathbb{E}}[{\ell_{n}^{\prime}}^{(t)}]).$$ 
    Note that by definition $\mathbf{A}_{0}^{k, e}=0$, and we recursively use the equation t times
    \[
    \mathbb{E}[\mathbf{A}_{t}^{k, e}] \geq \sum_{s=0}^{t-1} - \dfrac{c_{2}\eta{(2{\sigma_{S}^{*}}-1)}^{2}(1-\kappa_{\boldsymbol{y}})\|\mathbf{q}\|^{2}}{16 m K_{1}} \cdot \underset{n \in \mathcal{V}_{k}^{e}}{\mathbb{E}}[{\ell_{n}^{\prime}}^{(s)}].
    \]
    For each $k \in [K_{1}], e \in [\pm]$, denote by $\widetilde{t}^{k, e}$ the last time in the period $[0, T^{*}]$ satisfying that $\mathbb{E}[\mathbf{A}_{t}^{k, e}] \leq \kappa$. Then by Lemma \ref{lem:bound of yf} we see that
    \[
    \Big\lvert \underset{n \in \mathcal{V}_{k}^{e}}{\mathbb{E}}[e f(\mathbf{E}(S); \mathbb{E}(\Psi^{(t)}))] ] \Big\rvert \leq 3\kappa/2.
    \]
    Thus there exists a positive constant $\widetilde{C}$ such that $-\underset{n \in \mathcal{V}_{k}^{e}}{\mathbb{E}}[{\ell_n^{\prime}}^{(t)}] \geq \widetilde{C}$ for $0 \leq t \leq \widetilde{t}^{k, e}$. Then we have
    \[
    \mathbb{E}[\mathbf{A}_{t}^{k, e}] \geq  \dfrac{\widetilde{C} c_{2}\eta {(2{\sigma_{S}^{*}}-1)}^{2}(1-\kappa_{\boldsymbol{y}})\|\mathbf{q}\|^{2}t }{16 m K_{1}}.
    \]
    Therefore we see that for $\forall k \in [K_{1}], e \in [\pm]$, $\mathbb{E}[\mathbf{A}_{t}^{k, e}]$ will reach $\kappa$ within $\dfrac{16 m K_{1}\kappa}{\widetilde{C}c_{2}{\eta}{(2{\sigma_{S}^{*}}-1)}^{2}(1-\kappa_{\boldsymbol{y}})\|\mathbf{q}\|^{2}}$ epochs. Recall that in this first stage the impact of decaying learning rate is under controlled by a large $\gamma$ in Condition \ref{Condition} as well as the slow quadratic decaying speed of $\eta_{t}$, under which we have $\eta = \Theta(\eta_{0})$. By $\kappa \leq  8\sigma_1 \| \mathbf{q} \|\sqrt{(1+\kappa_{\boldsymbol{y}})\log (5Km/\delta)} )$, we see that there exist a positive constant $C_{1}=\Theta(64/(\widetilde{C} c_{2}))$, the threshold time can be 
    \[
     \hat{T} = \dfrac{C_{1} \sigma_{1} m \lambda K_{1} {\gamma}\sqrt{(1+\kappa_{\boldsymbol{y}})\log (5Km/\delta)}}{{(2{\sigma_{S}^{*}}-1)}^{2}(1-\kappa_{\boldsymbol{y}})\|\mathbf{q}\|}.
    \]
    Then by definition of 0-1 loss we have
    \[
    \begin{aligned}
        L_{\mathcal{D}^{*}}^{0-1}(\mathbb{E}(\Psi^{\hat{T}})) & = \mathbb{P}_{S_n \sim \mathcal{D}^{*}}(y_{S_n} \cdot f (\mathbf{E}(S_n), \mathbb{E}(\Psi^{\hat{T}})) \leq 0)\\
        & \leq \mathbb{P}_{S_n \sim \mathcal{D}^{*}}(\mathbb{E}[\mathbf{A}_{\hat{T}}^{k, e}] - \kappa/2 \leq 0)\\
        & \leq \mathbb{P}_{S_n \sim \mathcal{D}^{*}}(\mathbb{E}[\kappa/2 \leq 0) =0.
    \end{aligned}
    \]
    The proof is completed.
\end{proof}

\subsubsection{Period 1: Decreasing Period of Correct Attention Score}\label{subsubsec: decreasing attn}

We claim that if $ \sum_{i \in [m]}  \mathbf{r}_i \beta_{O_{(i,\cdot)}, k}^{(0)} > 0$ during initialization, the expected attention score will not experience this decreasing period due to the expected gradient formula in Lemma \ref{lem:expected evolution of attn}. Our aim for this period is to examine the lower bound of the attention score during a limited number of iterations.

\begin{lemma}
    Under Condition \ref{Condition}, for $\forall k \in [K_1]$, after at most a certain iterations $$T_{1}=\dfrac{C_{3}\sigma_1 K_{1} {\gamma} \sqrt{10\log ({5Km}/{\delta})} (1+e^{-2\sigma_{0}^{2} \|\boldsymbol{b}_{k}\|^{2}})}{2{C}_{4} \|\boldsymbol{d}_k \| (1-e^{-2\sigma_{0}^{2} \|\boldsymbol{b}_{k}\|^{2}})},$$ where $C_{3}$ is a positive constant, we would have the $\beta_{Q, k}^{(t)}=\beta_{K, k}^{(t)}$ be monotonically increasing during the remaining iterations $T_{1} \leq t \leq T^{*}$. Besides, it holds that $\sigma_{S}^{*}$ is the lower bound of the lowest correct attention assignment along the whole iterations:
    \[
    \sigma_{S}^{*} \leq \min_{t \in [T^{*}], k\in[K_{1}]} \{ \underset{n \in \mathcal{D}_{S}}{\mathbb{E}}[ \sum_{j \in S_{n, k}^{y_{S_n}}}{(\sigma_{S}^{(t)})}_{j}^{n}]\}.
    \]
\label{lem:defnition of sigma_S^*}\end{lemma}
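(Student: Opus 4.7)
The strategy is to isolate two scalar quantities whose joint expected discrete dynamics exactly fit the template of Coupled ODE System~1 (Lemma~\ref{lem: ODE sys1}), and then read off both conclusions from that lemma. By the symmetry in Lemma~\ref{lem:symmetry learning} we have $\beta_{Q,k}^{(t)} = \beta_{K,k}^{(t)}$ throughout the expected dynamics, so define
\[
y_t \;:=\; \beta_{Q,k}^{(t)}/\|\boldsymbol{b}_k\|, \qquad z_t \;:=\; \sum_{i\in[m]}\mathbf{r}_i\,\beta_{O_{(i,\cdot)},k}^{(t)}.
\]
Then $y_t^2=\beta_{Q,k}^{(t)}\beta_{K,k}^{(t)}/\|\boldsymbol{b}_k\|^2$ is precisely the softmax argument, and $\mathbb{E}_{n\in\mathcal{V}_k^{y_{S_n}}}[\sum_{j\in S_{n,k}^{y_{S_n}}}(\sigma_S^{(t)})_j^n]=(1+e^{-2y_t^2})^{-1}$. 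Dividing the expected update in Lemma~\ref{lem:expected evolution of attn} by $\|\boldsymbol{b}_k\|$ and summing the expected update in Lemma~\ref{lem:expected evolution of mlp} against $\mathbf{r}_i$ (which yields $\sum_i\mathbf{r}_i^2\,\mathbb{E}[{\mathds{1}_{O_{(i)}}^n}^{(t)}]=\Theta(1/m)$) produces, to leading order under Condition~\ref{Condition},
\[
y_{t+1} \;=\; y_t + a\,z_t\,y_t\cdot\frac{1}{2 + e^{-2y_t^2} + e^{2y_t^2}}, \qquad z_{t+1} \;=\; z_t + b\cdot\frac{1-e^{-2y_t^2}}{1+e^{-2y_t^2}},
\]
with $a=\Theta(\eta\|\boldsymbol{b}_k\|^4/K_1)$ and $b=\Theta(\eta\|\boldsymbol{d}_k\|^2/(mK_1))$, where $\eta=\Theta(1/(\lambda\gamma))$ because the large-$\gamma$ clause of Condition~\ref{Condition} keeps $\eta_t$ within a constant factor of $\eta_0$ over the window $[0,T_1]$, and Lemma~\ref{lem:bound of yf} forces $|\ell_n^{\prime(t)}|=\Theta(1)$ in this early stage.

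At initialization, Lemma~\ref{lem:initial values} gives $y_0=\sigma_0\|\boldsymbol{b}_k\|>0$, while Lemma~\ref{lem:initialization} (seventh inequality) gives $|z_0|\le\tfrac{5\sigma_1\|\boldsymbol{d}_k\|}{16}\sqrt{2\log(5Km/\delta)}$. If $z_0\ge 0$ the lemma is immediate because the $y$-update is already non-decreasing and $z_t$ continues to grow. Otherwise $z_0<0$, and I would invoke Coupled ODE System~1: the attention factor $(1-e^{-2y_t^2})/(1+e^{-2y_t^2})$ only shrinks as $y_t$ decreases on $[0,t_1]$, so the slowest possible $z$-growth on this interval is captured by the discrete template with an effective constant rate equal to $b(1-e^{-2y_0^2})/(1+e^{-2y_0^2})$. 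Lemma~\ref{lem: ODE sys1} then yields
\[
t_1 \;:=\; \min\{t:z_t\ge 0\} \;\le\; T_1, \qquad y_{t_1} \;\ge\; y_0\exp\!\Bigl(-\frac{a\,z_0^2\,(1+e^{-2y_0^2})}{4b\,(1-e^{-2y_0^2})}\Bigr),
\]
and substituting the above values of $a,b,y_0,z_0$ and absorbing pure constants into $C_3,C_4$ gives the stated formula for $T_1$.

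To conclude, I would translate the lower bound on $y_{t_1}$ back into an attention statement. Substituting $y_0^2=\sigma_0^2\|\boldsymbol{b}_k\|^2$ and using $\|\boldsymbol{b}_k\|^2\in[(1-\kappa_{\boldsymbol{x}})\|\mathbf{u}\|^2/2,\,\|\mathbf{u}\|^2/2]$ from Lemma~\ref{lemma: ab of Attention} reduces $2y_{t_1}^2$ to precisely the exponent appearing in the definition of $\sigma_S^*$, so $(1+e^{-2y_{t_1}^2})^{-1}\ge\sigma_S^*$. For $t\ge t_1$, $z_t\ge 0$ and the $z$-update remains strictly positive (both $|\ell_n^\prime|$ and the attention imbalance stay positive), so the $y$-update is non-negative and $y_t$ is non-decreasing on $[T_1,T^\ast]$. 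Combined with the monotonic decrease of $y_t$ on $[0,t_1]$, the global minimum of $y_t$ is $y_{t_1}$, which gives the claimed uniform attention lower bound for all $t\in[T^\ast],\,k\in[K_1]$. The chief obstacle is not the ODE comparison itself but the uniform control of the perturbations (noise, cross-concept contribution, regularization, and the discretization error from the slowly decaying $\eta_t$) that were neglected when writing down the leading-order coupled template; each perturbation is bounded by a distinct clause of Condition~\ref{Condition}, but keeping their cumulative magnitude strictly below the scale of $y_{t_1}^2$ over a horizon $T_1=\Theta(\sigma_1\gamma K_1\sqrt{\log(Km/\delta)}/\|\boldsymbol{d}_k\|)$ that itself grows with $\gamma$ requires careful bookkeeping.
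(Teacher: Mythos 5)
Your route is essentially the paper's: reduce to the coupled scalar pair $(\beta_{Q,k}^{(t)},\ \sum_i \mathbf{r}_i\beta_{O_{(i,\cdot)},k}^{(t)})$, use the initialization bounds of Lemma \ref{lem:initialization}/\ref{lem:initial values} for $y_0,z_0$, invoke the coupled system of Lemma \ref{lem: ODE sys1} to bound the zero-crossing time of $z$ and the minimal value $y_{t_1}$, convert $y_{t_1}$ into the attention bound via the norm range of $\|\boldsymbol{b}_k\|$ from Lemma \ref{lemma: ab of Attention} (which is exactly how $\sigma_S^*$ is defined), and conclude monotone increase of $\beta_{Q,k}^{(t)}$ once $z_t\ge 0$; the good-initialization case $z_0\ge 0$ is dispatched identically in the paper.

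One step, however, is justified in the wrong direction. To claim the recovery happens ``after at most $T_1$'' iterations you need a \emph{lower} bound on the growth rate of $z_t$ throughout the decreasing phase. The true rate carries the attention-imbalance factor $(1-e^{-2y_t^2})/(1+e^{-2y_t^2})$, which \emph{shrinks} as $y_t$ decreases below $y_0$; freezing it at its initial value therefore \emph{over}estimates the growth of $z_t$, i.e. it yields a lower bound on $t_1$, not the upper bound you need — your sentence ``the slowest possible $z$-growth \ldots is captured by the constant rate at $y_0$'' is backwards. The paper closes this loop not by monotonicity but by first showing, via Eq.~(\ref{eq: first stage nearly static beta_q}) under the small-$\lambda$ and small-$\sigma_1$ clauses of Condition \ref{Condition}, that the per-step (and hence cumulative) movement of $\beta_{Q,k}^{(t)}$ during this phase is negligible, so the factor $(1-e^{-2y_t^2})/(1+e^{-2y_t^2})$ stays within a constant of its value at $y_0$; that constant is what the small ${C}_4$ absorbs, after which the constant-rate template of Lemma \ref{lem: ODE sys1} applies in both directions. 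Your closing paragraph about perturbation bookkeeping (noise, cross-concept terms, regularization, decaying $\eta_t$) addresses the additive error sources but not this feedback of $y_t$ into the $z$-rate, so you should add the near-constancy estimate explicitly; with it, your argument matches the paper's proof (whose own constant tracking in $T_1$ — e.g.\ the $\lambda$ and $m$ factors — is itself loose, so your $b=\Theta(\eta\|\boldsymbol{d}_k\|^2/(mK_1))$ versus the paper's rate is not a substantive discrepancy).
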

\begin{proof}
        By Lemma \ref{lem:expected evolution of attn}, the $\beta_{Q, k}^{(t+1)} = \mathbb{E}[\beta_{K, k}^{(t+1)}\mid \Psi^{(t)}]$ will be contributed to increase by
    \[
    \begin{aligned}
        & \{ i \in \mathbb{E}[\mathcal{W}_{k, n}^{\pm}(t)] \mid \mathbf{r}_i \cdot \beta_{O_{(i,\cdot)}, k}^{(t)} > 0\}\\
        % & \Leftrightarrow \underset{e \in [\pm]}{\bigcup} \{ i \in [m] \mid  \alpha_{O_{(i, \cdot)}, {k}}^{(t)} +\mathbb{E}[(2\sum_{l \in S_{n, k}^{e}}{(\sigma_{S}^{(t)})}_{l}^{n} - 1)]e \cdot  \beta_{O_{(i, \cdot)}, {k}}^{(t)}>0, \quad \mathbf{r}_i \cdot  \beta_{O_{(i,\cdot)}, {k}}^{(t)} >0 \},
    \end{aligned}
    \]
    and they will be contributed to decrease by
    \[
    \begin{aligned}
        & \{ i \in \mathbb{E}[\mathcal{W}_{k, n}^{\pm}(t)] \mid \mathbf{r}_i \cdot \beta_{O_{(i,\cdot)}, k}^{(t)} < 0\}\\
    \end{aligned}
    \]
    By the fifth inequality in Lemma \ref{lem:initialization}, we know that
    \[
    \left\lvert \lvert\{ i \in \mathbb{E}[\mathcal{W}_{k, n}^{\pm}(0)] \mid \mathbf{r}_i \cdot \beta_{O_{(i,\cdot)}, k}^{(0)} > 0\}\rvert - \dfrac{m}{4} \right\rvert \leq \dfrac{m}{16}, \left\lvert \lvert\{ i \in \mathbb{E}[\mathcal{W}_{k, n}^{\pm}(0)] \mid \mathbf{r}_i \cdot \beta_{O_{(i,\cdot)}, k}^{(0)} < 0\}\rvert - \dfrac{m}{4} \right\rvert \leq \dfrac{m}{16}.
    \]
    As $\underset{n \in \mathcal{V}_{k}^{e}}{\mathbb{E}}[{\ell_n^{\prime}}^{(t)} {\mathds{1}_{O_{(i)}}^n}^{(t)} (\sum_{j \in S_{n, k}^{+}}{(\sigma_{S}^{(t)})}_{j}^{n}) (\sum_{j \in S_{n, k}^{-}}{(\sigma_{S}^{(t)})}_{j}^{n})]$ is shared by all neurons, thus whether the $\beta_{Q, k}^{(t)}$ and $\beta_{K, k}^{(t)}$ will be contributed to increase or decrease depends on the signal of $\sum_{i \in \mathbb{E}[\mathcal{W}_{k, n}^{\pm}(t)]} \mathbf{r}_i \cdot e\beta_{O_{(i,\cdot)}, k}^{(t)}$. By the last inequality in in Lemma \ref{lem:initialization}, we see that at initialization,
    \begin{equation}
        \sum_{i \in \mathbb{E}[\mathcal{W}_{k, n}^{\pm}(0)] } \mathbf{r}_i \cdot \beta_{O_{(i,\cdot)}, k}^{(0)}  \geq - \sqrt{ 2\log (\dfrac{5Km}{\delta}}) \cdot \dfrac{5\sigma_1 \|\boldsymbol{d}_k\|}{16}.
    \label{eq: bound of the real z(0)}\end{equation}
    By the expected gradient update in Lemma \ref{lem:expected evolution of mlp}, the $e\beta_{O_{(i,\cdot)}, k}^{(t)}$ will grow in $\mathbf{r}_i$'s direction along the whole iterations. As such, the values of $\mathbb{E}[\mathbf{r}_i \cdot \beta_{O_{(i,\cdot)}, k}^{(t)} \mathds{1}(i \in \mathcal{W}_{k, n}^{\pm}(t))], \forall k \in [K_1]$ will grow larger. Therefore, after a limited epochs we can have
    \[
    \sum_{i \in \mathbb{E}[\mathcal{W}_{k, n}^{\pm}(t)]} \mathbf{r}_i \cdot e\beta_{O_{(i,\cdot)}, k}^{(t)} \geq 0,
    \]
    where the $\beta_{Q, k}^{(t)}$ and $\beta_{K, k}^{(t)}$ would be contributed positively and monotonically increase.\par
    Now we serve to find the lower bound of the evolution of $\mathbb{E}[(\sum_{l \in S_{n, k}^{e}} {(\sigma_{S}^{(t)} )}_{l}^{n})]$, which is clearly to be the first iteration where the negative $\sum_{i \in \mathbb{E}[\mathcal{W}_{k, n}^{\pm}(t)]} \mathbf{r}_i \cdot e\beta_{O_{(i,\cdot)}, k}^{(t)}$ has grown to surpass the $0$. By the symmetry property denoted in Lemma \ref{lem:symmetry learning} and Eq.(\ref{eq:expected weights}) we have
    \begin{equation}
    \underset{n \in \mathcal{V}_k^{y_{S_n}}}{\mathbb{E}}[\sum_{j \in S_{n, k}^{y_{S_n}}}{(\sigma_{S}^{(t)})}_{j}^{n}] =  \dfrac{1}{1 + e^{-2{\beta_{Q, k}^{(t)}}^{2}/\|\boldsymbol{b}_{{k}}\|^2}}.
    \label{eq:expected attn}\end{equation}

    Recall that
    \[
    \begin{aligned}
        \beta_{K, k}^{(t+1)} = & \beta_{Q, k}^{(t+1)} = (1- {\eta_{t}} \lambda) \beta_{Q, k}^{(t)} -\dfrac{4 {\eta_{t}}\beta_{Q,k}^{(t)} \|\boldsymbol{b}_k \|^4}{K_1} \sum_{e \in [\pm]} \sum_{i \in [m]}  \mathbf{r}_i \beta_{O_{(i,\cdot)}, k}^{(t)} \underset{n \in \mathcal{V}_{k}^{e}}{\mathbb{E}}[{\ell_n^{\prime}}^{(t)} {\mathds{1}_{O_{(i)}}^n}^{(t)}\\
        &\phantom{\beta_{Q, k}^{(t+1)} = (1- {\eta_{t}} \lambda) \beta_{Q, k}^{(t)} -\dfrac{4 {\eta_{t}}\beta_{Q,k}^{(t)} \|\boldsymbol{b}_k \|^4}{K_1} \sum_{e \in [\pm]} \sum_{i \in [m]}}(\sum_{j \in S_{n, k}^{+}}{(\sigma_{S}^{(t)})}_{j}^{n}) (\sum_{j \in S_{n, k}^{-}}{(\sigma_{S}^{(t)})}_{j}^{n})],\\
         \beta_{O_{(i,\cdot)}, k}^{(t+1)} =&(1- {\eta_{t}} \lambda) \beta_{O_{(i,\cdot)}, k}^{(t)} -{\eta_{t}}\dfrac{ \|\boldsymbol{d}_k \|^2}{2K_1} \sum_{e \in [\pm]} \mathbf{r}_i \underset{n \in \mathcal{V}_{k}^{e}}{\mathbb{E}}[{\ell_{n}^{\prime}}^{(t)}{\mathds{1}_{O_{(i)}}^n}^{(t)}(\sum_{l \in S_{n, k}^{e}} {(\sigma_{S}^{(t)} )}_{l}^{n} - \sum_{l \in S_{n, k}^{-e}} {(\sigma_{S}^{(t)} )}_{l}^{n} )], \\
    \end{aligned}
    \]
    and we also see that
    \begin{equation}
    \mathbb{E}[(\sum_{j \in S_{n, k}^{+}}{(\sigma_{S}^{(t)})}_{j}^{n}) (\sum_{j \in S_{n, k}^{-}}{(\sigma_{S}^{(t)})}_{j}^{n})] = {\left(\exp(\beta_{Q, k}^{(t)} \cdot \beta_{K, k}^{(t)} /\|\boldsymbol{b}_{{k}}\|^2) + \exp(-\beta_{Q, k}^{(t)} \cdot \beta_{K, k}^{(t)} /\|\boldsymbol{b}_{{k}}\|^2)\right)}^{-2} \leq \dfrac{1}{4}.
    \label{eq:upper bound of attn product}\end{equation}
    As $\sum_{i \in \mathbb{E}[\mathcal{W}_{k, n}^{\pm}(t)]} \mathbf{r}_i \cdot e\beta_{O_{(i,\cdot)}, k}^{(t)}$ will grow to surpass $0$ in a limited number of iterations, we can claim that there exists a constant $C^{\prime}$, such that for the limited decreasing period of $\mathbb{E}[(\sum_{l \in S_{n, k}^{e}} {(\sigma_{S}^{(t)} )}_{l}^{n})]$, we have $1 \geq - {\mathbb{E}}({\ell_{n}^{\prime}}^{(t)}) \geq \widetilde{C}$. Also, $m \geq \mathbb{E}[\lvert \mathcal{U}_{k, n}^{e}(0))\rvert] \geq m/8$ by Lemma \ref{lem:initialization}, as well as the fact that $\mathbb{E}[\mathcal{W}_{k, n}^{e}(t)]$ will at least preserve the neurons of $\mathbb{E}[\mathcal{U}_{k, n}^{e}(0))]$ along the iterations, without being deactivated as discussed in Lemma \ref{lem: evolution scenario}. Also, we note that in this hypothesised decreasing period, the absolute value of the initially negative $\mathbb{E}[\sum_{i \in [m]}  \mathbf{r}_i \beta_{O_{(i,\cdot)}, k}^{(t)}]$ and initially positive $\beta_{Q, k}^{(t)}$ will all decreasing. Then by Condition \ref{Condition} we see that the small initialization of MLP as well as the small regularization will make the decreasing order of $\beta_{Q, k}^{(t)}$ negligible, as
    \begin{equation}
    \begin{aligned}
        & \max_{k}\{ \lvert- \lambda  \beta_{Q,k}^{(0)} + \dfrac{ 4 \beta_{Q,k}^{(0)} \|\boldsymbol{b}_k \|^{4}}{K_1} \sum_{e \in [\pm]} \sum_{i \in \mathbb{E}[\mathcal{W}_{k, n}^{\pm}(0)] } \mathbf{r}_i \cdot \beta_{O_{(i,\cdot)}, k}^{(0)}  \mathbb{E}[(\sum_{j \in S_{n, k}^{+}}{(\sigma_{S}^{(0)})}_{j}^{n}) (\sum_{j \in S_{n, k}^{-}}{(\sigma_{S}^{(0)})}_{j}^{n}) {\ell_{n}^{\prime}}^{(t)}] \rvert \} \\
        & \leq (\lambda + \dfrac{5 \sigma_{1} \| \mathbf{u} \|^{4} \|\mathbf{q} \|}{32 K_{1}} \sqrt{ 2\log (\dfrac{5Km}{\delta}})) \beta_{Q,k}^{(0)} \\
        & \leq O(1/C).
    \end{aligned}
    \label{eq: first stage nearly static beta_q}\end{equation}
    Here the second inequality is due to Eq.(\ref{eq:upper bound of attn product}), (\ref{eq: bound of the real z(0)}) and the definition of the $\boldsymbol{b}_{k}$ in Eq.(\ref{eq:def of a_k, b_k}); the third inequality is by the condition $\lambda \leq (C \sigma_{0}/2 \| \mathbf{u} \|^{2})^{-1}$ and $\sigma_{1} \leq (C \sigma_{0} \| \mathbf{u} \|^{4} \| \mathbf{q} \| \sqrt{\log(5Km/\delta)} / K_{1})^{-1}$. Therefore, it holds that during the decreasing period of ${\beta_{Q, k}^{(t)}}$ as well as the period where $\sum_{i \in \mathbb{E}[\mathcal{W}_{k, n}^{\pm}(t)]} \mathbf{r}_i \cdot \beta_{O_{(i,\cdot)}, k}^{(t)}$ remain negative, we have
    \[
    \begin{aligned}
        & \sum_{i \in \mathbb{E}[\mathcal{W}_{k, n}^{\pm}(t+1)]} \mathbf{r}_i \cdot \beta_{O_{(i,\cdot)}, k}^{(t+1)} \geq \sum_{i \in \mathbb{E}[\mathcal{W}_{k, n}^{\pm}(t)]} \mathbf{r}_i \cdot \beta_{O_{(i,\cdot)}, k}^{(t)} + \dfrac{ {C}_{4}  \eta_{0} \|\boldsymbol{d}_k \|^2}{ K_{1}}(2\dfrac{1}{1 + e^{-2{\beta_{Q, k}^{(0)}}^{2} / \|\boldsymbol{b}_{k}\|^{2}}}-1),
    \end{aligned}
    \]
    Here, by a appropriate chosen small ${C}_{4}$, we again ignore the regularization term at this period due to $\lambda = O( ( C \log(Km/\delta)\|\mathbf{q}\|)^{-1})$ for a large $C$ by Condition \ref{Condition}, and the impact of the learning rate is also controlled due to the slow quadratic decaying nature of $\eta_{t}^{\prime}$ and a small initial $\eta_{0} \leq O(0.01 C^{-1})$ by Condition \ref{Condition}, so as the changing amount of ${1}/{(1 + e^{-2{\beta_{Q, k}^{(0)}}^{2} / \|\boldsymbol{b}_{k}\|^{2}})}$ by Eq.(\ref{eq: first stage nearly static beta_q}). \par
    
    Therefore, by Eq.(\ref{eq:upper bound of attn product}) we have
    \begin{equation}
    \begin{aligned}
         \beta_{Q, k}^{(t+1)} &\geq \beta_{Q, k}^{(t)}(1+\dfrac{ {C}_{4} {\eta_{0}}\|\boldsymbol{b}_k \|^4}{K_1} \sum_{i \in \mathbb{E}[\mathcal{W}_{k, n}^{\pm}(t)]} \mathbf{r}_i \cdot \beta_{O_{(i,\cdot)}, k}^{(t)} \cdot (\sum_{j \in S_{n, k}^{+}}{(\sigma_{S}^{(t)})}_{j}^{n}) (\sum_{j \in S_{n, k}^{-}}{(\sigma_{S}^{(t)})}_{j}^{n}))\\
    \end{aligned}
    \label{eq:lower bound of beta_q at a certain period}\end{equation}
    where the inequality is by the negative nature of $\sum_{i \in \mathbb{E}[\mathcal{W}_{k, n}^{\pm}(t)]} \mathbf{r}_i \cdot \beta_{O_{(i,\cdot)}, k}^{(t)}$, and the decaying nature of $\eta_{t}$ and Eq.(\ref{eq:upper bound of attn product}). Now we can see that there exists two surrogate sequences $\underline{\beta_{Q, k}^{(t)}}$ and $\underline{\sum_{i \in \mathbb{E}[\mathcal{W}_{k, n}^{\pm}(t)]} \mathbf{r}_i \cdot e\beta_{O_{(i,\cdot)}, k}^{(t)}}$ as the lower bound sequence of the $\beta_{Q, k}^{(t)}$ and $\sum_{i \in \mathbb{E}[\mathcal{W}_{k, n}^{\pm}(t)]} \mathbf{r}_i \cdot e\beta_{O_{(i,\cdot)}, k}^{(t)}$. These two former sequences's initial values are taken as the lower bounds of the latter two ($\sigma_{0} \| \boldsymbol{b}_{k} \|^{2}$ and $- \sqrt{ 2\log ({5Km}/{\delta}}) \cdot \dfrac{5\sigma_1 \|\boldsymbol{d}_k\|}{16}$), and their update rule are
    \[
    \begin{aligned}
        & \underline{\beta_{Q, k}^{(t+1)}} = \underline{\beta_{Q, k}^{(t)}} + \underline{\beta_{Q, k}^{(t)}} \dfrac{ {C}_{4} {\eta_{0}}\|\boldsymbol{b}_k \|^4}{K_1} \sum_{i \in \mathbb{E}[\mathcal{W}_{k, n}^{\pm}(t)]} \mathbf{r}_i \cdot \beta_{O_{(i,\cdot)}, k}^{(t)} \cdot (\sum_{j \in S_{n, k}^{+}}{(\sigma_{S}^{(t)})}_{j}^{n}) (\sum_{j \in S_{n, k}^{-}}{(\sigma_{S}^{(t)})}_{j}^{n}),\\
        & \underline{\sum_{i \in \mathbb{E}[\mathcal{W}_{k, n}^{\pm}(t+1)]} \mathbf{r}_i \cdot e\beta_{O_{(i,\cdot)}, k}^{(t+1)} } = \underline{\sum_{i \in \mathbb{E}[\mathcal{W}_{k, n}^{\pm}(t)]} \mathbf{r}_i \cdot e\beta_{O_{(i,\cdot)}, k}^{(t)}} \\
        & \phantom{\underline{\sum_{i \in \mathbb{E}[\mathcal{W}_{k, n}^{\pm}(t+1)]} \mathbf{r}_i \cdot e\beta_{O_{(i,\cdot)}, k}^{(t+1)} } =}+ \dfrac{ {C}_{4}  \eta_{0} \|\boldsymbol{d}_k \|^2}{ K_{1}}(2\dfrac{1}{1 + e^{-2{\beta_{Q, k}^{(0)}}^{2} / \|\boldsymbol{b}_{k}\|^{2}}}-1).
    \end{aligned}
    \]
    Then by Lemma \ref{lem: ODE sys1}, let $a=\dfrac{ {C}_{4} {\eta_{0}}\|\boldsymbol{b}_k \|^4}{K_1}$, $b =\dfrac{ {C}_{4}  \eta_{0} \|\boldsymbol{d}_k \|^2}{ K_{1}}(2\dfrac{1}{1 + e^{-2{\beta_{Q, k}^{(0)}}^{2} / \|\boldsymbol{b}_{k}\|^{2}}}-1)$, we have the maximum iterations $T_{1}=\dfrac{-z(0)(1 + e^{-2{y(0)}^{2}})}{b(1 - e^{-2{y(0)}^{2}})} =\dfrac{ \sigma_1 K_{1} {\gamma} \sqrt{10\log ({5Km}/{\delta})} (1+e^{-2\sigma_{0}^{2} \|\boldsymbol{b}_{k}\|^{2}})}{2{C}_{4} \|\boldsymbol{d}_k \| (1-e^{-2\sigma_{0}^{2} \|\boldsymbol{b}_{k}\|^{2}})}$, set $C_{3}=\sqrt{10}/(2C_{4})$ we obtain the $T_{1}$ in the lemma. The lower bound of $\beta_{Q, k}^{(t)}$ along the decreasing period as
    \[
    \underline{\beta_{Q, k}} = {\sigma_{0}\| \boldsymbol{b}_{k}\|^{2}} e^{{- \log(5Km/\delta)  \dfrac{25 \sigma_{1}^{2}\| \mathbf{b}_{k} \|^{4} (1+e^{-2\sigma_{0}^{2} \|\boldsymbol{b}_{k}\|^{2}})}{1024(1-e^{-2\sigma_{0}^{2} \|\boldsymbol{b}_{k}\|^{2}})} }},
    \]
    Utilizing the scale bounding property ${(-\kappa_{\boldsymbol{x}}+1)}/{2}\| \mathbf{u} \|^2 \leq \| \boldsymbol{b}_{k_1} \|^2 <  \| \mathbf{u} \|^2/2$ in Eq. (\ref{eq:def of a_k, b_k}) and (\ref{eq:def of c_k d_k}), we can denoted the lower bound of all $\beta_{Q, k}^{(t)}=\beta_{K, k}^{(t)}$ for $\forall k \in [K_{1}]$ as $\underline{\beta_{QK}^{-}}$, which can be given as
    \[
    \underline{\beta_{QK}^{-}} = \dfrac{\sigma_{0}(1-\kappa_{\boldsymbol{x}})\| \mathbf{u} \|^{2}}{2} e^{{- \log(5Km/\delta)  \dfrac{\sigma_{1}^{2}\| \mathbf{u} \|^{4} (1+e^{-\sigma_{0}^{2} \|\mathbf{u}\|^{2}})}{(1-e^{-\sigma_{0}^{2} \|\mathbf{u}\|^{2}})} }},
    \]
    Recall $\sigma_{S}^{*}$ is defined as 
        \[
        \sigma_{S}^{*} \coloneqq \dfrac{1}{1+e^{-2^{-1}{\sigma_{0}}^{2}(1-\kappa_{\boldsymbol{x}})^{2}\| \mathbf{u} \|^{4} e^{-2 {\sigma_{1}}^{2}\log(5Km/\delta)  \dfrac{\| \mathbf{u} \|^{4} (1+e^{-\sigma_{0}^{2} \|\mathbf{u}\|^{2}})}{(1-e^{-\sigma_{0}^{2} \|\mathbf{u}\|^{2}})} }}},
        \]
    which is actually can be written as
    \[
    \sigma_{S}^{*} = \dfrac{1}{1+e^{-2{\underline{\beta_{QK}^{-}}}^{2}}}.
    \]
    Therefore, we see that $\sigma_{S}^{*}$ is the lower bound of $\min_{t \in [T^{*}], k\in[K_{1}]} \{ \underset{n \in \mathcal{D}_{S}}{\mathbb{E}}[ \sum_{j \in S_{n, k}^{y_{S_n}}}{(\sigma_{S}^{(t)})}_{j}^{n}]\}$.
    
    \end{proof}
\begin{remark}
    As we see that in Lemma \ref{lem:lem of 0-1 loss}, we require that the lower bound given in Eq.(\ref{eq:lower bound of A}) depends that the values of $\mathbb{E}[ \mathbf{r}_{i}(2\sum_{l \in S_{n, k}^{e}} {(\sigma_{S}^{(t)} )}_{l}^{n} - 1 )\beta_{O_{(i,\cdot)}, k}^{(t)})]$ surpasses $\kappa$, which naturally says that the value of $\mathbb{E}_{i \in \mathcal{U}_{k, n}^{e}(0))}[ \mathbf{r}_{i}\beta_{O_{(i,\cdot)}, k}^{(t)}]$ should surpass $\kappa$ since $\mathbb{E}[(2\sum_{l \in S_{n, k}^{e}} {(\sigma_{S}^{(t)} )}_{l}^{n} - 1 )] \leq 1$. Therefore $\mathbb{E}_{i \in \mathcal{U}_{k, n}^{e}(0))}[ \mathbf{r}_{i}\beta_{O_{(i,\cdot)}, k}^{(t)}]$ should surpass $0$ at $\hat{T}$ since $\kappa>0$, which indicates that $\hat{T} > T_{1}$. We see that the initial period $t\leq T_{1}$ is where $\mathbb{E}_{i \in \mathcal{U}_{k, n}^{e}(0))}[ \mathbf{r}_{i}\beta_{O_{(i,\cdot)}, k}^{(t)}]$ grow to surpass the initial scale, whose upper bound is $\kappa/8$ by the definition of $\kappa$.
\label{remark: increasing before converge}\end{remark}

\subsubsection{Period 2: Increasing Priod of Correct Attention Score}\label{subsubsec: increasing}
This period's analysis is based on Period 1 in Section \ref{subsubsec: decreasing attn}, or a good initialization such that $$ \sum_{i \in [m]}  \mathbf{r}_i \beta_{O_{(i,\cdot)}, k}^{(0)} > 0.$$

\begin{lemma}
    Under Condition \ref{Condition}, consider the duration after $T_{1}$ in Lemma \ref{lem:defnition of sigma_S^*}, then for $\forall k \in [K_{1}]$, consider the period $T_{1} \leq t \leq T_{2} =C_{5} \min \{\frac{1+\gamma}{\lambda},  \frac{\|\mathbf{u}\|\|\mathbf{q}\|}{\lambda K_{1} \sqrt{m}}\}$, where $C_{5}$ is a small constant. Then the following holds that
    \begin{itemize}
        \item We have $\underline{y}(t)$, $\overline{y}(t)$, $\underline{z}(t)$, $\overline{z}(t)$ be the lower and upper bounds of the increasing $\beta_{Q, k}^{(t)}=\beta_{K, k}^{(t)}$ and $\sum_{i \in \mathbb{E}[\mathcal{W}_{k, n}^{\pm}(t)]} \mathbf{r}_i \cdot e\beta_{O_{(i,\cdot)}, k}^{(t)}$ respectively. That is, there exists positive constants $c_{3-6}$, for $\underline{a}=\dfrac{ {c}_{3} (1-\kappa_{\boldsymbol{x}}) \|\mathbf{u} \|^4}{\lambda \gamma K_{1}}$, $\overline{a}=\dfrac{ {c}_{4} \|\mathbf{u}\|^4}{\lambda \gamma K_{1}}$, $\underline{b}=\dfrac{{c}_{5} (1-\kappa_{\boldsymbol{y}})\|\mathbf{q}\|^2}{\lambda \gamma K_{1}})$, $\overline{b}=\dfrac{{c}_{6} \|\mathbf{q}\|^2}{\lambda \gamma K_{1}})$, $c'=\widetilde{C}$, it holds that
        \[
        \underline{y}(t) \leq \beta_{Q, k}^{(t)}=\beta_{K, k}^{(t)} \leq \overline{y}(t), \quad \underline{z}(t) \leq \sum_{i \in \mathbb{E}[\mathcal{W}_{k, n}^{\pm}(t)]} \mathbf{r}_i \cdot e\beta_{O_{(i,\cdot)}, k}^{(t)} \leq \overline{z}(t),
        \]
        for all $t \geq T_{1}$. Here, $\overline{y}(t)$, $\underline{y}(t)$, $\overline{z}(t)$, $\underline{z}(t)$ are the unique solutions of the following ODE System respectively
        \[
        \begin{aligned}
             &\dfrac{1}{2}(\mathrm{Ei}(2 {\underline{y}(t)}^2) + \mathrm{Ei}(-2 {\underline{y}(t)}^2) + 4 \log(\underline{y}(t))) =  \underline{a} \underline{b}{c'}^{2} (2\sigma_{S}^{*} - 1) \dfrac{(t - {t_{1}})^2}{2} \\
             & + \dfrac{1}{2}(\mathrm{Ei}(\log(\dfrac{\sigma_{S}^{*}}{1-\sigma_{S}^{*} })) + \mathrm{Ei}(\log(\dfrac{1-\sigma_{S}^{*} }{\sigma_{S}^{*} }))) + 4 \log(\underline{\beta_{QK}^{-}}),\\
             &\underline{z}(t) = \underline{b}{c'} (2\sigma_{S}^{*} - 1) (t - {T_{1}}),\\
            &\dfrac{1}{2}(\mathrm{Ei}(2 {\overline{y}(t)}^2) + \mathrm{Ei}(-2 {\overline{y}(t)}^2) + 4 \log(\overline{y}(t)))  = \dfrac{\overline{a}\overline{b} t^2}{2} + \overline{a} \dfrac{\kappa}{8}t \\
            &+ \dfrac{1}{2}(\mathrm{Ei}(\dfrac{\sigma_{0}^{2} \|\mathbf{u}\|^{4}}{2})+\mathrm{Ei}(-2 \dfrac{\sigma_{0}^{2} \|\mathbf{u}\|^{4}}{2})) + 4 \log(\sigma_{0}/2\|\mathbf{u}\|^{2}),\\
           &\overline{z}(t)   = \overline{b} t + \dfrac{\kappa}{8},
        \end{aligned}
        \]
        where
        \[
        \mathrm{Ei}(x) = \int_{-\infty}^{x} \frac{e^{t}}{t} \mathrm{d}t = {\gamma}_{\text{Euler}}+\ln x+\exp (x / 2) \sum_{n=1}^{\infty} \frac{(-1)^{n-1} x^n}{n!2^{n-1}} \sum_{k=0}^{\lfloor(n-1) / 2\rfloor} \frac{1}{2 k+1}.
        \]
        \item For some limited constant $\triangle$ such that $\exists \overline{\triangle}$, $\sigma_{S}^{*} < \triangle \leq \overline{\triangle} < 1$. Then the $\beta_{Q, k}^{(t)}=\beta_{K, k}^{(t)}$ will grow to make the correct attention score $ \underset{n \in \mathcal{V}_k^{y_{S_n}}}{\mathbb{E}}[\sum_{j \in S_{n, k}^{y_{S_n}}}{(\sigma_{S}^{(t)})}_{j}^{n}]$ achieve the $\triangle$ in at least a $\triangle(1-\triangle)$ scaled Gaussian rate such that
        \[
        \beta_{Q, k}^{(t)} \geq \exp({\frac{ \underline{a} \underline{b}{c'} \triangle(1-\triangle)(2 \sigma_{S}^{*} -1)}{2}(t - {T_{1}})^{2} + \log(\underline{\beta_{QK}^{-}})}).
        \]
    \end{itemize}

\label{lem:gaussian rate towards a certain attn score}\end{lemma}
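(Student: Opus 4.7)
The result is a direct instantiation of the Coupled ODE System template in Lemma \ref{lem: ODE sys2}. After $T_1$, Lemma \ref{lem:defnition of sigma_S^*} guarantees that $\beta_{Q,k}^{(t)}=\beta_{K,k}^{(t)}$ is monotonically increasing, and Lemma \ref{lem:expected evolution of mlp} ensures the aggregate $z_t:=\sum_i \mathbf{r}_i\,e\,\beta_{O_{(i,\cdot)},k}^{(t)}$ is nondecreasing because each $\mathbf{r}_i\cdot e\,\beta_{O_{(i,\cdot)},k}^{(t)}$ grows in the $\mathbf{r}_i$-direction. Using the identity (\ref{eq:product of attn}) for $\mathbb{E}[(\sum_{j\in S^+}\sigma_j)(\sum_{j\in S^-}\sigma_j)]$, Lemma \ref{lem:expected evolution of attn} gives $\beta_{Q,k}^{(t+1)}-\beta_{Q,k}^{(t)}=\mathrm{(const)}\cdot\beta_{Q,k}^{(t)}\cdot z_t\cdot(2+e^{-2\beta^2/\|\boldsymbol{b}_k\|^2}+e^{2\beta^2/\|\boldsymbol{b}_k\|^2})^{-1}\cdot(-\mathbb{E}[\ell'])$, while the expectation in Lemma \ref{lem:expected evolution of mlp} gives $z_{t+1}-z_t=\mathrm{(const)}\cdot(1-e^{-2\beta^2/\|\boldsymbol{b}_k\|^2})(1+e^{-2\beta^2/\|\boldsymbol{b}_k\|^2})^{-1}\cdot(-\mathbb{E}[\ell'])$, where the second factor equals $2\,\mathbb{E}[\sum_{j\in S_{n,k}^{y_{S_n}}}\sigma_j^n]-1$. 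After rescaling the state by $\|\boldsymbol{b}_k\|$ these are exactly the hypotheses of Lemma \ref{lem: ODE sys2}.

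\textbf{Calibrating the constants.} On $[T_1,T_2]$ with $T_2=C_5\min\{(1+\gamma)/\lambda,\|\mathbf{u}\|\|\mathbf{q}\|/(\lambda K_1\sqrt m)\}$, the step size $\eta_t=2/[\lambda(\gamma+t)]=\Theta(1/(\lambda\gamma))$, so both the $\eta_t\lambda=O(1/\gamma)$ regularization drift and the $O(K_1/K)$ cross-concept tails are absorbable into constants by Condition \ref{Condition}. The cardinality of $\mathcal{W}_{k,n}^\pm(t)$ remains in the window $[cm,m]$: the lower bound because Lemma \ref{lem: evolution scenario} preserves the neurons already in $\mathcal{U}_{k,n}^e(0)$, the upper bound being trivial. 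Combined with the norm bounds in Eq.~(\ref{eq:def of a_k, b_k}) and (\ref{eq:def of c_k d_k}), this produces $\underline a=c_3(1-\kappa_{\boldsymbol{x}})\|\mathbf{u}\|^4/(\lambda\gamma K_1)$, $\overline a=c_4\|\mathbf{u}\|^4/(\lambda\gamma K_1)$, $\underline b=c_5(1-\kappa_{\boldsymbol{y}})\|\mathbf{q}\|^2/(\lambda\gamma K_1)$, $\overline b=c_6\|\mathbf{q}\|^2/(\lambda\gamma K_1)$. Since Remark \ref{remark: increasing before converge} places this interval strictly before convergence where $\mathbb{E}[\mathbf{A}_t^{k,e}]\le\kappa$, Lemma \ref{lem:bound of yf} bounds $-\mathbb{E}[\ell']$ uniformly below by some $c'=\widetilde C$; positivity of $(2\sigma_S^*-1)$ inherits from $\sigma_S^*>1/2$ by Lemma \ref{lem:defnition of sigma_S^*}.

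\textbf{Initial conditions and ODE bounds.} For the lower bound I will seed the ODE at $t=T_1$ using $\underline y(T_1)=\underline{\beta_{QK}^-}$ (the uniform lower bound carried over from period 1 in the proof of Lemma \ref{lem:defnition of sigma_S^*}) and $\underline z(T_1)=0$. The identity $2\,\underline{\beta_{QK}^-}^2/\|\boldsymbol{b}_k\|^2=\log(\sigma_S^*/(1-\sigma_S^*))$ is what converts $\mathrm{Ei}(2y(T_1)^2)$ in the generic template into $\mathrm{Ei}(\log(\sigma_S^*/(1-\sigma_S^*)))$, and substituting $(1-e^{-2y(T_1)^2})/(1+e^{-2y(T_1)^2})=2\sigma_S^*-1$ matches the $(2\sigma_S^*-1)$ factor in the $\underline{y}$-ODE. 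For the upper bound I start at $t=0$ with $\overline y(0)=\sigma_0\|\mathbf{u}\|^2/2$ from Lemma \ref{lem:initial values} together with $\|\boldsymbol{b}_k\|^2<\|\mathbf{u}\|^2/2$, and allow the initial slack $\overline z(0)=\kappa/8$ from the last inequality of Lemma \ref{lem:initialization}; Lemma \ref{lem: ODE sys2} then yields the stated four ODEs.

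\textbf{Gaussian rate and main obstacle.} The second bullet is a coarser integration. While the correct attention score $\sigma(t)=1/(1+e^{-2\beta_{Q,k}^{2}/\|\boldsymbol{b}_k\|^2})$ lies in $[\sigma_S^*,\triangle]\subseteq[1/2,\overline\triangle]$, the logistic identity $(e^{y^2}+e^{-y^2})^{-2}=\sigma(1-\sigma)$ combined with monotonicity of $x(1-x)$ on $[1/2,1]$ gives the uniform lower bound $(e^{y^2}+e^{-y^2})^{-2}\ge\triangle(1-\triangle)$. Substituting this into the lower ODE collapses it to $(\log\underline y)'(t)\ge\underline a\,\triangle(1-\triangle)\,\underline z(t)$ with $\underline z(t)\ge\underline b\,c'(2\sigma_S^*-1)(t-T_1)$, and one integration yields the claimed $\exp(\frac{\underline a\,\underline b\,c'\,\triangle(1-\triangle)(2\sigma_S^*-1)}{2}(t-T_1)^2+\log\underline{\beta_{QK}^-})$ growth. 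The principal technical obstacle will be running a concurrent induction that keeps the activated-neuron count in the window $[cm,m]$ throughout $[T_1,T_2]$ — the ODE template's constants were calibrated under this cardinality, so one must verify that (i) no neuron of $\mathcal{U}_{k,n}^e(0)$ is deactivated, and (ii) the boundary neurons of $\mathcal{W}_{k,n}^\pm(t)$ whose signs could flip under the accelerating dynamics do so only after $T_2$, both of which follow from Lemma \ref{lem: evolution scenario} combined with the small-initialization bounds in Condition \ref{Condition} but require a careful self-consistency check.
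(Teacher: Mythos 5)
Your proposal follows essentially the same route as the paper's proof: you reduce the coupled $(\beta_{Q,k}^{(t)},\sum_i \mathbf{r}_i e\beta_{O_{(i,\cdot)},k}^{(t)})$ recursion to the template of Lemma \ref{lem: ODE sys2} with $a,b$ calibrated from $\eta_t=\Theta(1/(\lambda\gamma))$, the $\|\boldsymbol{b}_k\|,\|\boldsymbol{d}_k\|$ norm bounds and the neuron-count window $[m/8,m]$, take $c'=\widetilde{C}$ from the pre-convergence bound on $-\mathbb{E}[\ell']$, seed the lower/upper ODEs at $(\underline{\beta_{QK}^{-}},0)$ and $(\sigma_0\|\mathbf{u}\|^2/2,\kappa/8)$, and obtain the second bullet from the monotone decrease of $x(1-x)$ on $[1/2,1]$ followed by one integration — exactly the paper's argument (including its treatment of the $c'$ factor in the Gaussian-rate bound). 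No substantive deviation or gap.
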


\begin{proof}
    By Remark \ref{remark: increasing before converge}, we see that at the initial phase during $t \geq T_{1}$, we have $\sum_{i \in [m]}  \mathbf{r}_i \beta_{O_{(i,\cdot)}, k}^{(0)} \leq \kappa/8$, and thus by Eq.(\ref{eq:inductive ratio}) in Lemma \ref{lem:induction} we see that $ \alpha_{O_{(i,\cdot)}, k}^{(0)} \leq \hat{C} \dfrac{\| \boldsymbol{c}_k \|^2}{\sigma_{S}^{*}\| \boldsymbol{d}_k \|^2}$. This indicates that $\mathbb{E}[\mathbf{A}_{t}^{k,e}] \leq \Theta(\alpha)$ and thus by Lemmar \ref{lem:bound of yf} we see that the scale of $\lvert \underset{n \in \mathcal{V}_{k}^{e}}{\mathbb{E}}[e f(\mathbf{E}(S); \mathbb{E}(\Psi^{(t)}))] ] \rvert$ is also $\Theta(\kappa)$. This suggest that there still exists a constant $\widetilde{C}$, during a certain amount of subsequent iterations we would still have that $\widetilde{C} \leq -\mathbb{E}[\ell^{\prime}(t)] \leq 1$. Also, $m \geq \mathbb{E}[\lvert \mathcal{U}_{k, n}^{e}(0))\rvert] \geq m/8$ by Lemma \ref{lem:initialization}, as well as the fact that $\mathbb{E}[\mathcal{W}_{k, n}^{e}(t)]$ will at least preserve the neurons of $\mathbb{E}[\mathcal{U}_{k, n}^{e}(0))]$ along the iterations, without being deactivated as discussed in Lemma \ref{lem: evolution scenario}. In addition, recall that in this first stage we also can control the impact of regularization and decaying learning rate by a small $\lambda$ and a big $\gamma$ by the sufficiently large $C$ in Condition \ref{Condition}, which indicates we now have
    \[
    \begin{aligned}
        \beta_{Q, k}^{(t+1)} &= \beta_{Q, k}^{(t)}+  \Theta \Big( \beta_{Q, k}^{(t)}\dfrac{ {C}_{4} {\eta_{0}}\|\boldsymbol{b}_k \|^4}{K_1} \sum_{i \in \mathbb{E}[\mathcal{W}_{k, n}^{\pm}(t)]} \mathbb{E}[\mathbf{r}_i \beta_{O_{(i,\cdot)}, k}^{(t)} \cdot \ell^{\prime}(t) \dfrac{1}{1 + e^{-2{\beta_{Q, k}^{(t)}}^{2}/\|\boldsymbol{b}_{{k}}\|^2}} \dfrac{1}{1 + e^{2{\beta_{Q, k}^{(t)}}^{2}/\|\boldsymbol{b}_{{k}}\|^2}}] \Big),
    \end{aligned}
    \]
    and
    \[
    \begin{aligned}
       \sum_{i \in \mathbb{E}[\mathcal{W}_{k, n}^{\pm}(t+1)]} \mathbf{r}_i \cdot \mathbb{E}[e\beta_{O_{(i,\cdot)}, k}^{(t+1)}]  & = \sum_{i \in \mathbb{E}[\mathcal{W}_{k, n}^{\pm}(t)]} \mathbf{r}_i \cdot \beta_{O_{(i,\cdot)}, k}^{(t)} \\
       &+ \Theta\Big( \dfrac{ {C}_{4}  \eta_{0} \|\boldsymbol{d}_k \|^2}{ K_{1}} \mathbb{E}[\ell^{\prime}(t)(2\dfrac{1}{1 + e^{-2{\beta_{Q, k}^{(t)}}^{2}/\|\boldsymbol{b}_{{k}}\|^2}}-1)] \Big).
    \end{aligned}
    \]
    By Lemma \ref{lem: ODE sys2}, we see that the iteration satisfies the ODE System 2 with a positive initialization, where the parameters in Lemma \ref{lem: ODE sys2} are $\ell_t^{\prime} = -\mathbb{E}[\ell^{\prime}(t)]$, $a=\Theta(\dfrac{ {C}_{4} {\eta_{0}}\|\boldsymbol{b}_k \|^4}{K_1})$, $b=\Theta(\dfrac{ {C}_{4}  \eta_{0} \|\boldsymbol{d}_k \|^2}{ K_{1}})$, $c'=\widetilde{C}$. Then by solving the coupled ODE systems, collaborating the scale bounding property ${(-\kappa_{\boldsymbol{x}}+1)}/{2}\| \mathbf{u} \|^2 \leq \| \boldsymbol{b}_{k_1} \|^2 <  \| \mathbf{u} \|^2/2, {-\kappa_{\boldsymbol{y}}+1}/{2}\| \mathbf{q} \|^2 \leq \| \boldsymbol{d}_{k_1} \|^2 <  \| \mathbf{q} \|^2/2$ in Eq. (\ref{eq:def of a_k, b_k}) and (\ref{eq:def of c_k d_k}), as well as the Comparison Theorem with some constants $c_{3-6}$, we can have upper and lower bound of $\beta_{Q, k}^{(t)}$ and $\sum_{i \in \mathbb{E}[\mathcal{W}_{k, n}^{\pm}(t)]}$, which is the result in this lemma. \par

    For the second result, given the $\triangle$, we can directly have a lower bound ODE $\underline{y}_{\triangle}(t)$ to be the lower bound of the $\beta_{Q, k}^{(t)}$ via Comparison Theorem, where $\underline{y}_{\triangle}(t)$ satisfies
    \[
    \begin{aligned}
        & \underline{y}^{\prime}(t) \geq \underline{y}_{\triangle}^{\prime}(t) = \underline{a} \underline{b}{c'}\triangle(1-\triangle) (2 \sigma_{S}^{*} -1)) (t - {T_{1}}) \underline{y}_{\triangle}(t) \Rightarrow \\
        & \underline{y}(t) \geq  \underline{y}_{\triangle}(t) = \exp({\frac{ \underline{a} \underline{b}{c'} \triangle(1-\triangle)(2 \sigma_{S}^{*} -1)}{2}(t - {T_{1}})^{2} + \log(\underline{\beta_{QK}^{-}})}),
    \end{aligned}
    \]
    where the inequality holds by the decaying nature of $g(x)=x(1-x)$ when $x>1/2$. The proof is completed.

\end{proof}

\begin{lemma}
    (Asymptotic Property 1). In the first stage, the growing of $\beta_{Q, k}^{(t)}=\beta_{K, k}^{(t)}$ as well as the attention score enjoys the asymptotic property that
    \[
    \lim_{t \to  +\infty} \dfrac{\mathbb{E}[{\beta_{Q, k}^{(t)}}^2]}{\log(t)} = \Theta(1), \quad \lim_{t \to  +\infty} \frac{\mathbb{E}[\underset{n \in \mathcal{V}_k^{y_{S_n}}}{\mathbb{E}}[\sum_{j \in S_{n, k}^{y_{S_n}}}{(\sigma_{S}^{(t)})}_{j}^{n}]]}{\dfrac{t^{4}}{1+t^{4}}} = \Theta(1).
    \]
\label{lem: Asymptotic Property 1}\end{lemma}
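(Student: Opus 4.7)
The plan is to directly exploit the two-sided ODE bounds established in Lemma \ref{lem:gaussian rate towards a certain attn score}, both of which take the common shape
\[
\tfrac{1}{2}\bigl(\mathrm{Ei}(2 y(t)^2)+\mathrm{Ei}(-2y(t)^2)\bigr) + 2\log y(t) \;=\; \Theta(t^{2}) + \Theta(1),
\]
differing only by multiplicative constants hidden in the $\Theta(\cdot)$ symbols. Since both the upper envelope $\overline{y}(t)$ and the lower envelope $\underline{y}(t)$ satisfy equations of exactly this form, it suffices to extract the leading-order asymptotic behavior of $y(t)$ when the right-hand side is $\Theta(t^{2})$ and then apply this asymptotic inversion once to each envelope.

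The first step is a standard asymptotic analysis of the exponential integral: for large positive argument, $\mathrm{Ei}(x) = \tfrac{e^{x}}{x}\bigl(1 + O(x^{-1})\bigr)$, while $\mathrm{Ei}(-x) \to 0$, and $\log y$ is of lower order once $y \to \infty$. Hence asymptotically the defining relation collapses to
\[
\tfrac{e^{2y(t)^{2}}}{4 y(t)^{2}} \;=\; \Theta(t^{2}),
\]
which upon taking logarithms gives $2 y(t)^{2} - \log\bigl(4 y(t)^{2}\bigr) = 2\log t + \Theta(1)$, hence $y(t)^{2} = \log t + O(\log\log t)$. Applying this inversion to both $\overline{y}(t)$ and $\underline{y}(t)$ yields $\overline{y}(t)^{2}/\log t \to c_{+}$ and $\underline{y}(t)^{2}/\log t \to c_{-}$ for positive constants $c_{\pm}$, which by the sandwich bound on $\beta_{Q,k}^{(t)} = \beta_{K,k}^{(t)}$ delivers the first claim $\mathbb{E}[{\beta_{Q,k}^{(t)}}^{2}]/\log t = \Theta(1)$.

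For the second claim, I would substitute the first asymptotic into the closed form of the expected correct attention weight from Eq. (\ref{eq:expected attn}), namely
\[
\underset{n\in\mathcal{V}_{k}^{y_{S_{n}}}}{\mathbb{E}}\Bigl[\sum_{j\in S_{n,k}^{y_{S_{n}}}}(\sigma_{S}^{(t)})_{j}^{n}\Bigr] \;=\; \frac{1}{1+\exp\bigl(-2\,{\beta_{Q,k}^{(t)}}^{2}/\|\boldsymbol{b}_{k}\|^{2}\bigr)}.
\]
Plugging in ${\beta_{Q,k}^{(t)}}^{2} = \Theta(\log t)$ gives $\exp(-2\,{\beta_{Q,k}^{(t)}}^{2}/\|\boldsymbol{b}_{k}\|^{2}) = t^{-\alpha}$ for some $\alpha = \Theta(1)$, so the expected attention weight equals $t^{\alpha}/(1+t^{\alpha})$, which tends to $1$ as $t\to\infty$ at a polynomial rate. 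Since $t^{4}/(1+t^{4})$ also tends to $1$ polynomially, the ratio of the two quantities tends to $1 = \Theta(1)$, establishing the second limit.

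The main obstacle is making the asymptotic inversion of the $\mathrm{Ei}$ equation rigorous at the level needed for a $\Theta(1)$ statement, rather than just an order-of-magnitude estimate: one has to verify that the lower-order corrections $\log y$, $\mathrm{Ei}(-2y^{2})$, and the constant offset from $T_{1}$ all vanish in the ratio $y(t)^{2}/\log t$. This is handled by treating the defining relation as a monotone implicit function and iterating the approximation $y^{2} = \tfrac{1}{2}\log(4y^{2} t^{2}\cdot C)$ once, then bounding the remainder with standard integral representations of $\mathrm{Ei}$. Everything else, including the regime check $t \le T_{2}$ in Lemma \ref{lem:gaussian rate towards a certain attn score} versus the formal $t\to\infty$ limit, is absorbed by noting that the asymptotic statement is about the coupled ODE system itself, for which the ODE bounds remain valid throughout the first stage.
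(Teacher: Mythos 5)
Your proposal is correct and follows essentially the same route as the paper: invert the $\mathrm{Ei}$-based implicit relations from Lemma \ref{lem:gaussian rate towards a certain attn score} using $\mathrm{Ei}(x)\sim e^{x}/x$ to get ${y(t)}^{2}\sim\log t$ for both envelopes, then substitute into the softmax expression of Eq.~(\ref{eq:expected attn}) and note that both the attention weight and $t^{4}/(1+t^{4})$ tend to $1$ polynomially. The only cosmetic difference is that the paper tracks the extra linear term $\overline{a}\kappa t/8$ in the upper envelope via a $\hat{c}t^{1/2}$ shift, which, as in your argument, is absorbed at the $\Theta(\cdot)$ level.
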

\begin{proof}
    {By the asymptotic property of $\mathrm{Ei}(x)$}
    \[
    \lim_{x \to  +\infty} \frac{\mathrm{Ei}(x)+\mathrm{Ei}(-x)}{\frac{\exp(x)}{x}} = 1.
    \]
    {This suggest that when $y(t) \geq \underline{y}(t)$ is close to infinity, the lower bound ODE in Lemma \ref{lem:gaussian rate towards a certain attn score} will approximately satisfies the following}
    \[
    {\dfrac{\exp(2{\underline{y}(t)}^{2})}{4{\underline{y}(t)}^{2}}} + 2 \log(\underline{y}(t)) \approx  \underline{a} \underline{b}{c'}^{2} (2\sigma_{S}^{*} - 1) \dfrac{t^2}{2} + \text{const},
    \]
    {This suggest that roughly}
    \[
    \lim_{t \to  +\infty} {\underline{y}(t)}^{2} / \log (t^{2}) = \Theta(1).
    \]
    {Then we see that as $y(t)$ goes to infinity, we have a lower bound}
    \[
    \lim_{t \to  +\infty} \log(\dfrac{\mathbb{E}[\sum_{l \in S_{n, k}^{e}}{(\sigma_{S}^{(t)})}_{l}^{n}]}{1-\mathbb{E}[\sum_{l \in S_{n, k}^{e}}{(\sigma_{S}^{(t)})}_{l}^{n}]}) / 2 \log (t^{2}) =  \Theta(1) \Rightarrow \lim_{t \to  +\infty} \mathbb{E}[\sum_{l \in S_{n, k}^{e}}{(\sigma_{S}^{(t)})}_{l}^{n}] / \dfrac{t^{4}}{1+t^{4}} =  \Theta(1).
    \]
    {On the other hand, obtaining an upper bound over $y(t)$ is relatively easy. Since we have $2 + e^{-2{\underline{y}(t)}^2} + e^{2{\underline{y}(t)}^2} \leq 4$ and $(1 - e^{-2{\underline{y}(t)}^2})/(1 + e^{-2{\underline{y}(t)}^2}) \leq 1$, which gives the upper bound ODE over attention and MLP considering $z(0)>0$}
    \[
        \begin{aligned}
            & \dfrac{1}{2}(\mathrm{Ei}(2 {\overline{y}(t)}^2) + \mathrm{Ei}(-2 {\overline{y}(t)}^2) + 4 \log(\overline{y}(t)))  = \dfrac{\overline{a}\overline{b} t^2}{2} + \overline{a} \dfrac{\kappa}{8}t + \text{const}.\\
            & \overline{z}(t) = b t + \text{const},
        \end{aligned}
    \]
    {where the term ``$\text{const}$'' ensure that $\overline{y}(0) = y(0)$. The asymptotic property of this ODE system is the same as the one of lower bound ODE. Then consider $t, y(t)$ both go to infinity, we have some $\hat{c}$ such that}
    \[
    \lim_{t \to  +\infty} \mathbb{E}[\sum_{l \in S_{n, k}^{e}}{(\sigma_{S}^{(t)})}_{l}^{n}] / \dfrac{{(t+\hat{c}t^{1/2})}^{4}}{1+{(t+\hat{c}t^{1/2})}^{4}} = \lim_{t \to  +\infty} \mathbb{E}[\sum_{l \in S_{n, k}^{e}}{(\sigma_{S}^{(t)})}_{l}^{n}] / \dfrac{t^{4}}{1+t^{4}} = 1.
    \] 
\end{proof}

\subsection{Second Stage: Regularizing the Model}\label{subsec: regulerizing the model}
As the $\beta_{Q, k}^{(t)}=\beta_{K, k}^{(t)}$ and $e\beta_{O_{(i,\cdot)}, k}^{(t)}$ are continually growing up, we see that the decaying $-\mathbb{E}[\ell^{\prime}(t)]$, as well as the decaying attention score products $(\sum_{j \in S_{n, k}^{+}}{(\sigma_{S}^{(t)})}_{j}^{n})(1- \sum_{j \in S_{n, k}^{+}}{(\sigma_{S}^{(t)})}_{j}^{n})$ is becoming feeble and feeble, under which we can no longer ignore the regularization term safely when estimating the coefficient gradient dynamics. However, although the regularization can prevent the coefficients from growing, it will maintain their scales without decreasing them.

\begin{lemma}
    Under Condition \ref{Condition}, consider $e=\mathbb{E}[y_{S_n}]$ for all $t \in [T_{2}, T^{*}]$ it holds that   \[
    \begin{aligned}
        & e\beta_{O_{(i,\cdot)}, k}^{(t)} = O(\frac{ \|\mathbf{q}\|^2}{  \lambda m K_1 }), \\
        & e\beta_{O_{(i,\cdot)}, k}^{(T^{*})} = \Theta(\frac{{\sigma_{S}^{*}}^{2}(1-\kappa_{\boldsymbol{y}})^{2}}{(1+\kappa_{\boldsymbol{y}})^{2}} \log(\frac{e^{-\kappa}\|\mathbf{q}\|^{2}(2\sigma_{S}^{*}-1)}{\lambda m K_{1}})),\\
        & {\beta_{Q, k}^{(t)}} = O(\sqrt{\|\mathbf{u}\| \log(\frac{  \|\mathbf{u}\|^2\|\mathbf{q}\|^2}{{\lambda}^{2} m {K_1}^{2}})}), \\
        & {\beta_{Q, k}^{(T^{*})}} =\Theta(\|\mathbf{u}\|\sqrt{\log(\frac{\|\mathbf{u}\|^{2}{\sigma_{S}^{*}}^{2}(1-\kappa_{\boldsymbol{y}})^{2}}{\lambda K_{1}(1+\kappa_{\boldsymbol{y}})^{2}}\log(\frac{e^{-\kappa}\|\mathbf{q}\|^{2}(2\sigma_{S}^{*}-1)}{\lambda m K_{1}}))}),\\
        & \mathbb{E}[(\sum_{j \in S_{n, k}^{e}}{(\sigma_{S}^{(t)})}_{j}^{n})] = O(\dfrac{1}{1+\frac{{\lambda}^{2} m {K_1}^{2} }{ 2 \|\mathbf{u}\|^2\|\mathbf{q}\|^2}}),\\
        &\mathbb{E}[(\sum_{j \in S_{n, k}^{e}} {(\sigma_{S}^{(T^{*})})}_{j}^{n})] = \Theta(\dfrac{1}{1+\frac{\lambda K_{1}(1+\kappa_{\boldsymbol{y}})^{2}}{\|\mathbf{u}\|^{2}{\sigma_{S}^{*}}^{2}(1-\kappa_{\boldsymbol{y}})^{2}}\log^{-1}(\frac{e^{-\kappa}\|\mathbf{q}\|^{2}(2\sigma_{S}^{*}-1)}{\lambda m K_{1}})}),
    \end{aligned}
    \]
    where $e\beta_{O_{(i,\cdot)}, k}^{(t)}$ represents $m\mathbf{r}_i\beta_{O_{(i,\cdot)}, k}^{(t)}$. That is, we consider the positive growth of $\mathbb{E}[\beta_{O_{(i,\cdot)}, k}^{(t)}]$.
\label{lem: initial regulaerized model}\end{lemma}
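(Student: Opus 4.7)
The plan is to analyze the second stage as an approach to equilibrium, where the gradient contribution balances the $L_2$ regularization pull. Starting from the update rules in Lemma \ref{lem:expected evolution of attn} and Lemma \ref{lem:expected evolution of mlp}, namely
\[
\beta_{O_{(i,\cdot)}, k}^{(t+1)} =(1-\eta_t\lambda)\beta_{O_{(i,\cdot)}, k}^{(t)} - \frac{\eta_t\|\boldsymbol{d}_k\|^2 \mathbf{r}_i}{2K_1}\sum_{e\in[\pm]}\underset{n\in\mathcal{V}_k^e}{\mathbb{E}}\Bigl[{\ell_n^\prime}^{(t)}\mathds{1}_{O_{(i)}}^n(\textstyle\sum_{l\in S_{n,k}^e}-\sum_{l\in S_{n,k}^{-e}}){(\sigma_S^{(t)})}_l^n\Bigr],
\]
and the analogous update for $\beta_{Q,k}^{(t)}=\beta_{K,k}^{(t)}$, I would first show that the \emph{upper bounds} follow from the fact that once a coefficient becomes large enough that the gradient magnitude drops below $\lambda$ times the coefficient, it can no longer grow. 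Using Lemma \ref{lem:bound of yf} together with Lemma \ref{lemma:ODE for A}, I have $-\mathbb{E}[\ell_n^{\prime(t)}]\le e^{-\mathbf{A}_t^{k,e}+\kappa/2}$; combined with Eq.~(\ref{eq:expected attn}) giving $\mathbb{E}[(2\sum_{l\in S_{n,k}^e}{(\sigma_S^{(t)})}_l^n -1)]=\tanh({\beta_{Q,k}^{(t)}}^2/\|\boldsymbol{b}_k\|^2)\le 1$, the MLP gradient magnitude is at most $\frac{\|\mathbf{q}\|^2}{mK_1}e^{-\mathbf{A}_t^{k,e}}$. Equating this with $\lambda\cdot e\beta_{O_{(i,\cdot)},k}^{(t)}$ and solving yields the claimed upper bound $O(\|\mathbf{q}\|^2/(\lambda m K_1))$.

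Next, for the terminal lower bounds at $T^*$, I would run the coupled-ODE machinery of Lemma \ref{lem: ODE sys2} and Lemma \ref{lem:gaussian rate towards a certain attn score} \emph{past} the point where the contribution factor $(2\sigma_S-1)$ has grown to its asymptotic value $\ge 2\sigma_S^*-1$. Writing $M\coloneqq \sum_{i\in\mathcal{W}_{k,n}^{\pm}}\mathbf{r}_i e\beta_{O_{(i,\cdot)},k}^{(t)}$ and $B\coloneqq\beta_{Q,k}^{(t)}$, the equilibrium system reads, up to constants,
\begin{align*}
\lambda M &\;\approx\; \tfrac{\|\boldsymbol{d}_k\|^2}{2K_1}\,(2\sigma_S^*-1)\,e^{-M\,\tanh(B^2/\|\boldsymbol{b}_k\|^2)+O(\kappa)},\\
\lambda B &\;\approx\; \tfrac{4\|\boldsymbol{b}_k\|^4}{K_1}\,B\,M\,\mathrm{sech}^2(B^2/\|\boldsymbol{b}_k\|^2)\,e^{-M\,\tanh(B^2/\|\boldsymbol{b}_k\|^2)+O(\kappa)}.
\end{align*}
Taking logs in the MLP equation gives $M=\Theta(\log(\|\mathbf{q}\|^2(2\sigma_S^*-1)/(\lambda mK_1)))$, and plugging this into the attention equation forces $\mathrm{sech}^2(B^2/\|\boldsymbol{b}_k\|^2)\asymp \lambda K_1/(\|\boldsymbol{b}_k\|^4 M)$, hence $B^2/\|\boldsymbol{b}_k\|^2=\Theta(\log(\|\boldsymbol{b}_k\|^4 M/(\lambda K_1)))$. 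Inserting the norm-scale bounds from Eq.~(\ref{eq:def of a_k, b_k}) and Eq.~(\ref{eq:def of c_k d_k}) ($\|\boldsymbol{b}_k\|^2,\|\boldsymbol{d}_k\|^2 \in[\tfrac{1-\kappa}{2},\tfrac{1}{2}]\cdot\|\cdot\|^2$) recovers the quantitative constants $(1-\kappa_{\boldsymbol{y}})^2/(1+\kappa_{\boldsymbol{y}})^2$ and the $\sigma_S^*$ factors in the stated formulas, and the identity $\sum_{j\in S_{n,k}^e}{(\sigma_S^{(T^*)})}_j^n = (1+e^{-2B^2/\|\boldsymbol{b}_k\|^2})^{-1}$ then gives the attention-score expression.

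The monotonicity and induction part needs care: I would argue by induction on $t\in[T_2,T^*]$ that (i) the sign of the growth on each relevant neuron is preserved (so the qualitative picture of Lemma \ref{lem: evolution scenario} persists), and (ii) the coefficients do not overshoot the upper bounds, by showing that whenever a coefficient momentarily exceeds its equilibrium value, the next-step regularization strictly pulls it back. The coupling through the softmax's $\mathrm{sech}^2(B^2)$ factor, which makes the attention gradient vanish exponentially in $B^2$, is exactly what produces the square-root-of-log scaling for $B$ rather than log-scaling.

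The main obstacle, and the place where care is most needed, is establishing the \emph{lower} bound: one must verify that the system actually reaches (and does not stall before) the claimed equilibrium before the deadline $T^*$. This requires using Lemma \ref{lem:gaussian rate towards a certain attn score} to show that already by $T_2=\Theta(\min\{(1+\gamma)/\lambda,\|\mathbf{u}\|\|\mathbf{q}\|/(\lambda K_1\sqrt{m})\})$ the attention score has been driven comfortably above $\sigma_S^*$, and then running a lower-bound coupled ODE analogous to Eq.~(\ref{eq:lower bound of A}) with $\overline{b}^{k,e}$ now replaced by a regularization-inclusive coefficient. Once the loss-derivative lower bound $-\mathbb{E}[\ell^{\prime(t)}]\gtrsim e^{-\kappa}/(1+e^{-\kappa}e^{\mathbf{A}_t^{k,e}})$ is combined with the fact that $T^*=\Omega(m^{-1}\sigma_0^{-1}\sigma_1^{-1}m\lambda^{-2}K_1\|\mathbf{q}\|^2((L-1)\|\mathbf{u}\|^2+1)\log\varepsilon^{-1})$ dominates the time needed for $\mathbf{A}_t^{k,e}$ to reach $\log(\|\mathbf{q}\|^2/(\lambda m K_1))$, matching lower bounds on $M$, $B$, and the attention score follow, completing the proof.
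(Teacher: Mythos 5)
Your proposal is correct and follows essentially the same route as the paper: the paper's proof is an induction with a two-case split in which (i) below the claimed $\Theta$-scale the gradient term dominates $\lambda$-regularization so growth is non-degenerate up to the terminal time $T^{*}$, and (ii) above half the claimed ceiling the regularization dominates and stalls growth, which is exactly your gradient-vs-regularization equilibrium balancing (log-solving the MLP equation and the $\mathrm{sech}^2$-damped attention equation to get the $\log$ and $\sqrt{\log}$ scalings). Your added emphasis on explicitly verifying arrival at the equilibrium before $T^{*}$ via the timescale comparison is a point the paper treats only briefly (``since $T^{*}$ is the maximum admissible iterations''), but it does not change the underlying argument.
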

\begin{proof}    
    We will prove the desired argument based on the following induction hypothesis:
    \[
    \begin{aligned}
        & e\beta_{O_{(i,\cdot)}, k}^{(t)} = O(\frac{ \|\mathbf{q}\|^2}{ \lambda m K_1 }), \\
        & {\beta_{Q, k}^{(t)}} = O(\|\mathbf{u}\|\sqrt{\log(\frac{ 2 \|\mathbf{u}\|^2\|\mathbf{q}\|^2}{{\lambda}^{2} m {K_1}^{2}})}),\\
    \end{aligned}
    \]
    We split the situations into two cases: \\
    (i). $e\beta_{O_{(i,\cdot)}, k}^{(t)} \leq \Theta(\frac{{\sigma_{S}^{*}}^{2}(1-\kappa_{\boldsymbol{y}})^{2}}{(1+\kappa_{\boldsymbol{y}})^{2}} \log(\frac{e^{-\kappa}\|\mathbf{q}\|^{2}(2\sigma_{S}^{*}-1)}{\lambda m K_{1}}))$, \\
    and ${\beta_{Q, k}^{(t)}} \leq \Theta(\|\mathbf{u}\|\sqrt{\log(\frac{\|\mathbf{u}\|^{2}{\sigma_{S}^{*}}^{2}(1-\kappa_{\boldsymbol{y}})^{2}}{\lambda K_{1}(1+\kappa_{\boldsymbol{y}})^{2}}\log(\frac{e^{-\kappa}\|\mathbf{q}\|^{2}(2\sigma_{S}^{*}-1)}{\lambda m K_{1}}))})$;\\
    (ii). $e\beta_{O_{(i,\cdot)}, k}^{(t)} \geq \frac{ \|\mathbf{q}\|^2}{ 2 \lambda m K_1 }$, \\
    and ${\beta_{Q, k}^{(t)}} \geq \|\mathbf{u}\|\sqrt{\frac{1}{2} \log(\frac{ 6 \|\mathbf{u}\|^2\|\mathbf{q}\|^2}{{\lambda}^{2} m {K_1}^{2}})}\Rightarrow \mathbb{E}[(\sum_{j \in S_{n, k}^{+}}{(\sigma_{S}^{(t)})}_{j}^{n})] \geq \frac{1}{1+\frac{{\lambda}^{2} m {K_1}^{2} }{ 6 \|\mathbf{u}\|^2\|\mathbf{q}\|^2}}$. Easy to note that the scales' orders of the case (i)'s quantities are less than those of case (ii), thus this split is plausible.
    
    Recall
    \[
    \begin{aligned}
        & \beta_{O_{(i,\cdot)}, k}^{(t+1)} =(1- {\eta_{t}} \lambda) \beta_{O_{(i,\cdot)}, k}^{(t)} -{\eta_{t}}\dfrac{ \|\boldsymbol{d}_k \|^2}{2K_1} \sum_{e \in [\pm]} \mathbf{r}_i \underset{n \in \mathcal{V}_{k}^{e}}{\mathbb{E}}[{\ell_{n}^{\prime}}^{(t)}{\mathds{1}_{O_{(i)}}^n}^{(t)}(\sum_{l \in S_{n, k}^{e}} {(\sigma_{S}^{(t)} )}_{l}^{n} - \sum_{l \in S_{n, k}^{-e}} {(\sigma_{S}^{(t)} )}_{l}^{n} )].\\
    \end{aligned}
    \]
    Then it's easy to check that for case (i), as by Lemma \ref{lem:induction} we see that the magnitude of $\mathbb{E}[\lvert\alpha_{O_{(i,\cdot)}, k}^{(t)}\rvert]$ is controlled by some $\hat{C} {\sigma_{S}^{*}}^{-2}(1+\kappa_{\boldsymbol{y}})^{2}(1-\kappa_{\boldsymbol{y}})^{-2} \beta_{O_{(i^{*},\cdot)}, k}^{(t)} \geq \hat{C}$. That means that the term $\mathbb{E}[\mathbf{A}_{t}^{k, e}]$ can be controlled by its contributor $\Theta(e\beta_{O_{(i,\cdot)}, k}^{(t)})$. Then we have
    \[
     \begin{aligned}
         \Theta(e\beta_{O_{(i,\cdot)}, k}^{(t)}/{\mathbb{E}}[-{\ell_{n}^{\prime}}^{(t)}]) & \leq \Theta(e\beta_{O_{(i,\cdot)}, k}^{(t)}(1+e^{\kappa}e^{{\sigma_{S}^{*}}^{-2}(1+\kappa_{\boldsymbol{y}})^{2}(1-\kappa_{\boldsymbol{y}})^{-2}e\beta_{O_{(i,\cdot)}, k}^{(t)}})) \\
         & \leq \Theta(e^{\kappa}e^{{\sigma_{S}^{*}}^{-2}(1+\kappa_{\boldsymbol{y}})^{2}(1-\kappa_{\boldsymbol{y}})^{-2}e\beta_{O_{(i,\cdot)}, k}^{(t)}}) \\
         & \leq \Theta(\dfrac{\|\mathbf{q}\|^{2}(2\sigma_{S}^{*}-1)}{\lambda m K_{1}}),
     \end{aligned}
    \]
    where the first inequality is by the definition of ${\mathbb{E}}[-{\ell_{n}^{\prime}}^{(t)}])$ (similar to the techniques in Lemma \ref{lemma:ODE for A}) and the definition of $\mathbb{E}[\mathbf{A}_{t}^{k, e}]$; the second inequality is by $g(x) = x < e^{x}-1$ as well as ${\sigma_{S}^{*}}^{-2}(1+\kappa_{\boldsymbol{y}})^{2}(1-\kappa_{\boldsymbol{y}})^{-2}>1$; the second inequality is by the case (i) hypothesis. Then we would have
    
    \begin{equation}
    \lambda e\beta_{O_{(i,\cdot)}, k}^{(t)} \leq \Theta( - \dfrac{ \|\boldsymbol{d}_k \|^2}{2K_1} \sum_{e \in [\pm]} \mathbb{E}[\mathbf{r}_i \underset{n \in \mathcal{V}_{k}^{e}}{\mathbb{E}}[{\ell_{n}^{\prime}}^{(t)}{\mathds{1}_{O_{(i)}}^n}^{(t)}(\sum_{l \in S_{n, k}^{e}} {(\sigma_{S}^{(t)} )}_{l}^{n} - \sum_{l \in S_{n, k}^{-e}} {(\sigma_{S}^{(t)} )}_{l}^{n} )]]).
    \label{eq: condition when regularized of betaOi}\end{equation}
    Thus the growing of $e\beta_{O_{(i,\cdot)}, k}^{(t)}$ would be non-degenerated: $\mathbb{E}[e\beta_{O_{(i,\cdot)}, k}^{(t+1)}]\geq \Theta(e\beta_{O_{(i,\cdot)}, k}^{(t)})$, which directly suggest $\mathbb{E}[\beta_{O_{(i,\cdot)}, k}^{(T^{*})}] = \Theta(\frac{{\sigma_{S}^{*}}^{2}(1-\kappa_{\boldsymbol{y}})^{2}}{(1+\kappa_{\boldsymbol{y}})^{2}} \log(\frac{e^{-\kappa}\|\mathbf{q}\|^{2}(2\sigma_{S}^{*}-1)}{\lambda m K_{1}}))$ holds since $T^{*}$ is the maximum admissible iterations.\par
    Similarly, for the ${\beta_{Q, k}^{(t)}}$ in case (i), first recall that
    \[
    \begin{aligned}
        \beta_{Q, k}^{(t+1)} =& (1- {\eta_{t}} \lambda) \beta_{Q, k}^{(t)} -\dfrac{4 {\eta_{t}}\beta_{K,k}^{(t)} \|\boldsymbol{b}_k \|^4}{K_1} \sum_{e \in [\pm]} \sum_{i \in [m]}  \mathbf{r}_i \beta_{O_{(i,\cdot)}, k}^{(t)} \underset{n \in \mathcal{V}_{k}^{e}}{\mathbb{E}}[{\ell_n^{\prime}}^{(t)} {\mathds{1}_{O_{(i)}}^n}^{(t)} (\sum_{j \in S_{n, k}^{+}}{(\sigma_{S}^{(t)})}_{j}^{n}) \\
        & (1 - \sum_{j \in S_{n, k}^{+}}{(\sigma_{S}^{(t)})}_{j}^{n})],
    \end{aligned}
    \]
    then, as we see that
    \[
    \begin{aligned}
        \mathbb{E}[(\sum_{j \in S_{n, k}^{+}}{(\sigma_{S}^{(t)})}_{j}^{n}) (1 - \sum_{j \in S_{n, k}^{+}}{(\sigma_{S}^{(t)})}_{j}^{n})]&= \dfrac{1}{1 + e^{-2{\beta_{Q, k}^{(t)}}^{2}/\|\boldsymbol{b}_{{k}}\|^2}} \dfrac{1}{1 + e^{2{\beta_{Q, k}^{(t)}}^{2}/\|\boldsymbol{b}_{{k}}\|^2}}\\
        & = \dfrac{1}{2+ e^{2{\beta_{Q, k}^{(t)}}^{2}/\|\boldsymbol{b}_{{k}}\|^2}+ e^{-2{\beta_{Q, k}^{(t)}}^{2}/\|\boldsymbol{b}_{{k}}\|^2}}\\
        & \leq \Theta(\dfrac{1}{2+\frac{\|\mathbf{u}\|^{2}{\sigma_{S}^{*}}^{2}(1-\kappa_{\boldsymbol{y}})^{2}}{\lambda K_{1}(1+\kappa_{\boldsymbol{y}})^{2}}\log(\frac{e^{-\kappa}\|\mathbf{q}\|^{2}(2\sigma_{S}^{*}-1)}{\lambda m K_{1}})})\\
        &=\Theta({\frac{\|\mathbf{u}\|^{2}{\sigma_{S}^{*}}^{2}(1-\kappa_{\boldsymbol{y}})^{2}}{\lambda K_{1}(1+\kappa_{\boldsymbol{y}})^{2}}\log(\frac{e^{-\kappa}\|\mathbf{q}\|^{2}(2\sigma_{S}^{*}-1)}{\lambda m K_{1}})})^{-1},
    \end{aligned}
    \]
    where the first inequality is by the definition of $ \mathbb{E}[(\sum_{j \in S_{n, k}^{\pm}}{(\sigma_{S}^{(t)})}_{j}^{n})]$ and the induction hypothesis; the second inequality is by the small $\lambda$ by Condition \ref{Condition} with a sufficiently large $C$. Then we see that 
    \[
     \Theta(-\dfrac{4  \|\boldsymbol{b}_k \|^2}{K_1} \mathbb{E}[\sum_{e \in [\pm]} \sum_{i \in [m]}  \mathbf{r}_i \beta_{O_{(i,\cdot)}, k}^{(t)} \underset{n \in \mathcal{V}_{k}^{e}}{\mathbb{E}}[{\ell_n^{\prime}}^{(t)} {\mathds{1}_{O_{(i)}}^n}^{(t)} (\sum_{j \in S_{n, k}^{+}}{(\sigma_{S}^{(t)})}_{j}^{n}) (\sum_{j \in S_{n, k}^{-}}{(\sigma_{S}^{(t)})}_{j}^{n})]]) \geq \Theta(\lambda).
    \]
    Here the inequality is by the case (i) hypothesis upon $e\beta_{O_{(i,\cdot)}, k}^{(t)}$, $-\mathbb{E}[\ell^{\prime}(t)] \leq 1$, and $$\mathbb{E}[\sum_{e \in [\pm]} \sum_{i \in [m]}  \mathbf{r}_i \underset{n \in \mathcal{V}_{k}^{e}}{\mathbb{E}}[{\mathds{1}_{O_{(i)}}^n}^{(t)}]] = \sum_{e \in [\pm]}\mathbb{E}[\sum_{i \in \mathcal{W}_{k, n}^{e}(t)}/m] \leq 2.$$ Thus we see that the growing of $\beta_{Q, k}^{(t)}$ would also be non-degenerated: $\beta_{Q, k}^{(t+1)} \geq \Theta(\beta_{Q, k}^{(t)})$. This also directly validates that for the maximum admissible iterations $T^{*}$, it holds that
    \[
     \begin{aligned}
         & {\beta_{Q, k}^{(T^{*})}} =\Theta(\|\mathbf{u}\|\sqrt{\log(\frac{\|\mathbf{u}\|^{2}{\sigma_{S}^{*}}^{2}(1-\kappa_{\boldsymbol{y}})^{2}}{\lambda K_{1}(1+\kappa_{\boldsymbol{y}})^{2}}\log(\frac{e^{-\kappa}\|\mathbf{q}\|^{2}(2\sigma_{S}^{*}-1)}{\lambda m K_{1}}))}),\\
         &\mathbb{E}[(\sum_{j \in S_{n, k}^{e}}{(\sigma_{S}^{(T^{*})})}_{j}^{n})] = \Theta(\dfrac{1}{1+\frac{\lambda K_{1}(1+\kappa_{\boldsymbol{y}})^{2}}{\|\mathbf{u}\|^{2}{\sigma_{S}^{*}}^{2}(1-\kappa_{\boldsymbol{y}})^{2}}\log^{-1}(\frac{e^{-\kappa}\|\mathbf{q}\|^{2}(2\sigma_{S}^{*}-1)}{\lambda m K_{1}})}).
     \end{aligned}
    \]
    For case (ii), we directly check that
    \[
    \lambda e\beta_{O_{(i,\cdot)}, k}^{(t)} \geq  - \dfrac{ \|\boldsymbol{d}_k \|^2}{2K_1} \sum_{e \in [\pm]} \mathbb{E}[\mathbf{r}_i \underset{n \in \mathcal{V}_{k}^{e}}{\mathbb{E}}[{\ell_{n}^{\prime}}^{(t)}{\mathds{1}_{O_{(i)}}^n}^{(t)}(\sum_{l \in S_{n, k}^{e}} {(\sigma_{S}^{(t)} )}_{l}^{n} - \sum_{l \in S_{n, k}^{-e}} {(\sigma_{S}^{(t)} )}_{l}^{n} )]].
    \]
    Here the inequality is by $-\mathbb{E}[\ell^{\prime}(t)] \leq 1$ and $$\mathbb{E}[(\sum_{l \in S_{n, k}^{e}} {(\sigma_{S}^{(t)} )}_{l}^{n} - \sum_{l \in S_{n, k}^{-e}} {(\sigma_{S}^{(t)} )}_{l}^{n} )]=[\mathbb{E}(2\sum_{l \in S_{n, k}^{e}} {(\sigma_{S}^{(t)} )}_{l}^{n} - 1)]\leq 1.$$ As a result, by the gradient form we see that $\beta_{Q, k}^{(t+1)} \leq \beta_{Q, k}^{(t)}$, and thus we prove the induction proving goal $\mathbb{E}[e\beta_{O_{(i,\cdot)}, k}^{(t+1)}] = O(\frac{ \|\mathbf{q}\|^2}{ \lambda m K_1 })$. \par
    Similarly, as we now have
    \[
    \begin{aligned}
        \mathbb{E}[(\sum_{j \in S_{n, k}^{+}}{(\sigma_{S}^{(t)})}_{j}^{n}) (1 - \sum_{j \in S_{n, k}^{+}}{(\sigma_{S}^{(t)})}_{j}^{n})]&=\dfrac{1}{1 + e^{-2{\beta_{Q, k}^{(t)}}^{2}/\|\boldsymbol{b}_{{k}}\|^2}} \dfrac{1}{1 + e^{2{\beta_{Q, k}^{(t)}}^{2}/\|\boldsymbol{b}_{{k}}\|^2}} \\
        &= \dfrac{1}{2+ e^{2{\beta_{Q, k}^{(t)}}^{2}/\|\boldsymbol{b}_{{k}}\|^2}+ e^{-2{\beta_{Q, k}^{(t)}}^{2}/\|\boldsymbol{b}_{{k}}\|^2}}\\
        & \geq \dfrac{1}{3} \dfrac{1}{1+ e^{2{\beta_{Q, k}^{(t)}}^{2}/\|\boldsymbol{b}_{{k}}\|^2}/3} \\
        & \geq  \dfrac{1}{3} \dfrac{1}{1+ \frac{ 2 \|\mathbf{u}\|^2\|\mathbf{q}\|^2}{{\lambda}^{2} m {K_1}^{2} }}, \\
        & = \Theta(\dfrac{{\lambda}^{2} m {K_1}^{2} }{ 2 \|\mathbf{u}\|^2\|\mathbf{q}\|^2}), \\
    \end{aligned}
    \]
    where the first inequality is by $e^{-2{\beta_{Q, k}^{(t)}}^{2}/\|\boldsymbol{b}_{{k}}\|^2} \leq 1$; the second inequality is by the induction hypothesis of case (ii); the last equality is by the small $\lambda$ in Condition \ref{Condition} for a sufficiently large $C$. Then we observe that
    \[
    \lambda  \geq \Theta( -\dfrac{4  \|\boldsymbol{b}_k \|^2}{K_1} \mathbb{E}[\sum_{e \in [\pm]} \sum_{i \in [m]}  \mathbf{r}_i \beta_{O_{(i,\cdot)}, k}^{(t)} \underset{n \in \mathcal{V}_{k}^{e}}{\mathbb{E}}[{\ell_n^{\prime}}^{(t)} {\mathds{1}_{O_{(i)}}^n}^{(t)} (\sum_{j \in S_{n, k}^{+}}{(\sigma_{S}^{(t)})}_{j}^{n}) (\sum_{j \in S_{n, k}^{-}}{(\sigma_{S}^{(t)})}_{j}^{n})]]),
    \]
    where the inequality is by $-\mathbb{E}[\ell^{\prime}(t)] \leq 1$, $$\mathbb{E}[\sum_{e \in [\pm]} \sum_{i \in [m]}  \mathbf{r}_i \underset{n \in \mathcal{V}_{k}^{e}}{\mathbb{E}}[{\mathds{1}_{O_{(i)}}^n}^{(t)}]] = \sum_{e \in [\pm]}\mathbb{E}[\sum_{i \in \mathcal{W}_{k, n}^{e}(t)}/m] \leq 2,$$ as well as the induction hypothesis upon $e\beta_{O_{(i,\cdot)}, k}^{(t)}$ in case (ii). Thus $\beta_{Q, k}^{(t+1)} \leq \Theta(\beta_{Q, k}^{(t)})$, which support our proving goal in this induction process:
    \[
    \mathbb{E}[{\beta_{Q, k}^{(t+1)}}] = O(\|\mathbf{u}\|\sqrt{\log(\frac{ 2 \|\mathbf{u}\|^2\|\mathbf{q}\|^2}{{\lambda}^{2} m {K_1}^{2}})}).
    \]

    In addition, we can see that even if we suggest the MLP's $e\beta_{O_{(i,\cdot)}, k}^{(t)}$ is growing in a fastest linear-level speed, it require at least $\Theta(\dfrac{1+\gamma}{\lambda})$ to reach the maximum admissible value $\dfrac{\|\mathbf{q}\|^2}{2\lambda  m K_1}$. Meanwhile, we see that even when considering the fast speed of the increasing attention, by the asymptotic perperty 1 discussed in Lemma \ref{lem: Asymptotic Property 1}, we see that we still require $\Theta(\dfrac{\|\mathbf{u}\|\|\mathbf{q}\|}{\lambda K_{1} \sqrt{m}})$ to reach the highest admissible correct attention score $\dfrac{1}{1+\dfrac{{\lambda}^{2} m {K_1}^{2} }{ 2 \|\mathbf{u}\|^2\|\mathbf{q}\|^2}}$.\par
    Therefore, we can have some appropriately small constants $C_{5}$, and when the iteration number is more than $T_{2} = C_{5} \min \{\dfrac{1+\gamma}{\lambda},  \dfrac{\|\mathbf{u}\|\|\mathbf{q}\|}{\lambda K_{1} \sqrt{m}}\}$, we need to consider the impact of regularization.
\end{proof}
    
\begin{lemma}
    The scale of the coefficients will finally be stabilized at a considerable level:
    \[
    \begin{aligned}
        &  \alpha_{O_{(i,\cdot)}, k}^{(T^{*})}\leq e\beta_{O_{(i,\cdot)}, k}^{(T^{*})} = \Theta( \log( \frac{\|\mathbf{q}\|^{2}}{ m \lambda K_{1}})),\\
        & {\beta_{Q, k}^{(T^{*})}} =\Theta(\|\mathbf{u}\|\sqrt{ \log(\frac{\|\mathbf{u}\|^{2}}{\lambda K_{1}} \log(\frac{\|\mathbf{q}\|^{2}}{ m \lambda K_{1}}))}),\\
        & \mathbb{E}[(\sum_{j \in S_{n, k}^{e}}{(\sigma_{S}^{(T^{*})})}_{j}^{n})] = \Theta(\dfrac{1}{1+\frac{\lambda K_{1}}{\|\mathbf{u}\|^{2}}\log(\frac{ m \lambda K_{1}}{\|\mathbf{q}\|^{2}})}).
     \end{aligned}
    \]
    where $e\beta_{O_{(i,\cdot)}, k}^{(t)}$ represents $m\mathbf{r}_i\beta_{O_{(i,\cdot)}, k}^{(t)}$. That is, we consider the positive growth of $\lvert\beta_{O_{(i,\cdot)}, k}^{(t)}\rvert$.
\label{lem: appendix final scale of coefficient}\end{lemma}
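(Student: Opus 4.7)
My plan is to deduce the three simplified scales stated in the lemma directly from Lemma~\ref{lem: initial regulaerized model} together with the strong ratio invariant Eq.~(\ref{eq:inductive ratio}) established inside the proof of Lemma~\ref{lem:induction}, by absorbing into the $\Theta(\cdot)$ notation every quantity that is already a problem constant under Condition~\ref{Condition}.

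First, I would verify that $\kappa_{\boldsymbol{x}},\kappa_{\boldsymbol{y}}\in(0,1)$ are data-model constants by Definition~\ref{Def:Word Model}; that $\sigma_{S}^{*}\in(1/2,1)$ and $2\sigma_{S}^{*}-1$ are bounded away from $0$ and $1$ because $\sigma_{0}\|\mathbf{u}\|=O(1)$ and $\sigma_{1}\|\mathbf{u}\|^{2}\sqrt{\log(Km/\delta)}=O(1)$ by Condition~\ref{Condition}; and that $\kappa=8\max_{i,k}\{|\alpha_{O_{(i,\cdot)},k}^{(0)}|,|\beta_{O_{(i,\cdot)},k}^{(0)}|\}=O(\sigma_{1}\|\mathbf{q}\|\sqrt{\log(Km/\delta)})=O(1)$ by Lemma~\ref{lem:initialization}, so that $e^{-\kappa}=\Theta(1)$. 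Because $\|\mathbf{q}\|^{2}/(m\lambda K_{1})$ is arbitrarily large under Condition~\ref{Condition}, one has $\log(c\|\mathbf{q}\|^{2}/(m\lambda K_{1}))=\Theta(\log(\|\mathbf{q}\|^{2}/(m\lambda K_{1})))$ for every positive constant $c$, with an analogous identity for the nested logarithm. Plugging these reductions into the three conclusions of Lemma~\ref{lem: initial regulaerized model} immediately yields $e\beta_{O_{(i,\cdot)},k}^{(T^{*})}=\Theta(\log(\|\mathbf{q}\|^{2}/(m\lambda K_{1})))$, the stated scale for $\beta_{Q,k}^{(T^{*})}$, and the stated scale for the correct attention weight.

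For the remaining inequality $\alpha_{O_{(i,\cdot)},k}^{(T^{*})}\leq e\beta_{O_{(i,\cdot)},k}^{(T^{*})}$, I would invoke the ratio invariant Eq.~(\ref{eq:inductive ratio}) that was inductively maintained over $t\in[0,T^{*}]$ inside the proof of Lemma~\ref{lem:induction}: there exists $i^{*}$ (initialised so that $|\beta_{O_{(i^{*},\cdot)},k}^{(0)}|=\sigma_{1}\|\boldsymbol{d}_{k}\|/2$) with
\[
\mathbb{E}\!\left[\,|\alpha_{O_{(i,\cdot)},k}^{(t)}|\,/\,(e\beta_{O_{(i^{*},\cdot)},k}^{(t)})\right]\leq \hat{C}\,\|\boldsymbol{c}_{k}\|^{2}/(\sigma_{S}^{*}\|\boldsymbol{d}_{k}\|^{2})=\Theta(1).
\]
Combining this with the just-derived $e\beta_{O_{(i^{*},\cdot)},k}^{(T^{*})}=\Theta(\log(\|\mathbf{q}\|^{2}/(m\lambda K_{1})))$, which applies to any activated neuron in the case-(ii) regime of Lemma~\ref{lem: initial regulaerized model}, produces $\alpha_{O_{(i,\cdot)},k}^{(T^{*})}=O(\log(\|\mathbf{q}\|^{2}/(m\lambda K_{1})))\leq e\beta_{O_{(i,\cdot)},k}^{(T^{*})}$ up to a $\Theta(1)$ constant, which is the ordering claimed.

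The main obstacle is ensuring that the inductive ratio invariant continues to hold through the regularisation-dominated second stage, since the induction in Lemma~\ref{lem:induction} was written explicitly for the pre-saturation regime. The one-step argument there, however, depends only on the ratio $\|\boldsymbol{c}_{k}\|^{2}/\|\boldsymbol{d}_{k}\|^{2}$ of the coefficients multiplying the shared loss-derivative factor and on the regulariser $\eta_{t}\lambda$ acting identically on $\alpha$ and $\beta$; both properties remain valid verbatim once saturation occurs, so the invariant extends to $T^{*}$ unchanged. With this verification, the lemma follows by the constant absorption described above.
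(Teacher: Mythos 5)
Your proposal has a genuine gap, and it sits exactly where the content of this lemma lies. You try to obtain the cleaned-up scales from Lemma~\ref{lem: initial regulaerized model} by ``constant absorption,'' treating $\sigma_{S}^{*}$ and $2\sigma_{S}^{*}-1$ as $\Theta(1)$ quantities. Condition~\ref{Condition} only gives \emph{upper} bounds on $\sigma_{0}$ and $\sigma_{1}$; there is no lower bound forcing $\sigma_{0}^{2}\|\mathbf{u}\|^{4}=\Omega(1)$, so $2\sigma_{S}^{*}-1$ (essentially $w^{*}$) can be arbitrarily small, and indeed Theorem~\ref{thm: mainbody} keeps $w^{*}$ explicit in $\hat{T}$ precisely because it is not a constant. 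Consequently the factor $(2\sigma_{S}^{*}-1)$ inside $\log\bigl(e^{-\kappa}\|\mathbf{q}\|^{2}(2\sigma_{S}^{*}-1)/(\lambda m K_{1})\bigr)$ cannot be dropped by your argument: the discrepancy $\log\frac{1}{2\sigma_{S}^{*}-1}$ need not be lower order. The paper removes these factors by a \emph{dynamical} fact, not a parameter-condition fact: by the time of the second stage the correct attention score has risen to a constant level $\widetilde{c}>1/2$, so $\mathbb{E}[\sum_{l\in S_{n,k}^{e}}(\sigma_{S}^{(t)})_{l}^{n}-\sum_{l\in S_{n,k}^{-e}}(\sigma_{S}^{(t)})_{l}^{n}]=\Theta(1)$ and $\mathbb{E}[\mathbf{A}_{t}^{k,e}]=\Theta(\mathbb{E}[e\beta_{O_{(i,\cdot)},k}^{(t)}])$, and it then reruns the case-(i)/(ii) balance argument of Lemma~\ref{lem: initial regulaerized model} with those improved quantities in place of the worst-case $\sigma_{S}^{*}$-dependent ones; that rerun is what produces the stated scales.

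The ordering $\alpha_{O_{(i,\cdot)},k}^{(T^{*})}\leq e\beta_{O_{(i,\cdot)},k}^{(T^{*})}$ also does not follow from the route you chose. Eq.~(\ref{eq:inductive ratio}) bounds $|\alpha_{O_{(i,\cdot)},k}^{(t)}|$ against $e\beta_{O_{(i^{*},\cdot)},k}^{(t)}$ for a \emph{different, specially chosen} neuron $i^{*}$, with constant $\hat{C}\|\boldsymbol{c}_{k}\|^{2}/(\sigma_{S}^{*}\|\boldsymbol{d}_{k}\|^{2})$ where $\hat{C}=2C^{\prime}\sqrt{2\log(5Km/\delta)}$; this is not $\Theta(1)$, and even an absolute constant larger than one would only give ``same order,'' never the literal inequality with constant one that the lemma (and its downstream uses) asserts. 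The paper instead gets this inequality from the neuron-set switching mechanism of Lemma~\ref{lem: evolution scenario}: $\alpha_{O_{(i,\cdot)},k}^{(t)}$ grows at full speed only while it lags $(2\sum_{l\in S_{n,k}^{e}}(\sigma_{S}^{(t)})_{l}^{n}-1)e\beta_{O_{(i,\cdot)},k}^{(t)}$, and stalls once it catches up, so at $T^{*}$ one has $\alpha_{O_{(i,\cdot)},k}^{(T^{*})}\approx(2\sum_{l}(\sigma_{S}^{(T^{*})})_{l}^{n}-1)e\beta_{O_{(i,\cdot)},k}^{(T^{*})}\leq e\beta_{O_{(i,\cdot)},k}^{(T^{*})}$. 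To repair your proof you would need to import both of these dynamical ingredients rather than argue by absorbing parameters into $\Theta(\cdot)$.
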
   
\begin{proof}
    Recall the last discussion in Lemma \ref{lem: evolution scenario}, we see that as $\mathbb{E}[(2\sum_{l \in S_{n, k}^{e}}{(\sigma_{S}^{(t)})}_{l}^{n} - 1) e  \beta_{O_{(i, \cdot)}, {k}}^{(t)}]$ getting larger and larger, it will finally reach the scale of $\alpha_{O_{(i, \cdot)}, {k}}^{(t)}$, which has updated in a feeble speed controlled by initialization when the neuron fell into the neuron set $\underset{n \in \mathcal{V}_{k}^{e}}{\mathbb{E}}[\mathcal{U}_{k, n}^{e}(t) \cap (\mathcal{W}_{k, n}^{-e}(t)- \mathcal{U}_{k, n}^{-e}(t))]$. After $\alpha_{O_{(i, \cdot)}, {k}}^{(t)} \leq \mathbb{E}[(2\sum_{l \in S_{n, k}^{e}}{(\sigma_{S}^{(t)})}_{l}^{n} - 1) e  \beta_{O_{(i, \cdot)}, {k}}^{(t)}]$, the neuron would change into the neuron set $\underset{n \in \mathcal{V}_{k}^{e}}{\mathbb{E}}[\mathcal{U}_{k, n}^{e}(t) - (\mathcal{W}_{k, n}^{-e}(t)- \mathcal{U}_{k, n}^{-e}(t))]$. As such, the $\alpha_{O_{(i, \cdot)}, {k}}^{(t)}$ would again increase at a normal speed, which is even faster than $e\beta_{O_{(i, \cdot)}, {k}}^{(t)}$ due to the update rules and the fact that $\|\boldsymbol{c}_{k}\|>\|\boldsymbol{d}_{k}\|$. As such, the neuron set $\underset{n \in \mathcal{V}_{k}^{e}}{\mathbb{E}}[\mathcal{U}_{k, n}^{e}(t) - (\mathcal{W}_{k, n}^{-e}(t)- \mathcal{U}_{k, n}^{-e}(t))]$ would again fell back into the neuron set $\underset{n \in \mathcal{V}_{k}^{e}}{\mathbb{E}}[\mathcal{U}_{k, n}^{e}(t) \cap (\mathcal{W}_{k, n}^{-e}(t)- \mathcal{U}_{k, n}^{-e}(t))]$, where the update speed is again feeble. And it will increase until $\mathbb{E}[(2\sum_{l \in S_{n, k}^{e}}{(\sigma_{S}^{(t)})}_{l}^{n} - 1) e \beta_{O_{(i, \cdot)}, {k}}^{(t)}]$ catch up.\par
    Besides, we see that the expected attention score will grow up considerably, where we can see that there exist some constant $\widetilde{c} > 1/2$, $\widetilde{c} < \mathbb{E}[{(\sigma_{S}^{(t)})}_{l}^{n}]  \leq 1$. As such, ultimately we have $\mathbb{E}[(\sum_{l \in S_{n, k}^{e}} {(\sigma_{S}^{(t)} )}_{l}^{n} - \sum_{l \in S_{n, k}^{-e}} {(\sigma_{S}^{(t)} )}_{l}^{n} )]=\Theta(1)$, $\alpha_{O_{(i,\cdot)}, k}^{(T^{*})}\leq e\beta_{O_{(i,\cdot)}, k}^{(T^{*})}$ and $\mathbb{E}[\mathbf{A}_{t}^{k, e}] = \Theta(\mathbb{E}[e \beta_{O_{(i, \cdot)}, {k}}^{(t)}])$. Then following the process in Lemma \ref{lem: initial regulaerized model} we can obtain the results. Here we omit this part since the proving procedure is the same to Lemma \ref{lem: initial regulaerized model}, despite we see $\mathbb{E}[(\sum_{l \in S_{n, k}^{e}} {(\sigma_{S}^{(t)} )}_{l}^{n} - \sum_{l \in S_{n, k}^{-e}} {(\sigma_{S}^{(t)} )}_{l}^{n} )]=\Theta(1)$ and $\mathbb{E}[\mathbf{A}_{t}^{k, e}] = \Theta(\mathbb{E}[e \beta_{O_{(i, \cdot)}, {k}}^{(t)}])$.
\end{proof}

Again, similar to Lemma \ref{lem: Asymptotic Property 1}, we can have asymptotic property when considering the decaying impact of the learning rate, as well as the cross-entropy loss. We directly provide the following two lemmas. Due to the similarity of the proof procedures of Lemma \ref{lem:gaussian rate towards a certain attn score} and Lemma \ref{lem: Asymptotic Property 1}, we omit the proofs of the following two lemmas as well as the constant details for simplicity.  
\begin{lemma}
    (Asymptotic Property 2). If we consider the impact of the decaying learning rate at the second stage and do not consider the decaying of cross-entropy loss, for some constants $\overline{c}, \overline{d}, \underline{c}, \underline{d}$ regarding $K_{1}, \gamma, \|\mathbf{u}\|, \|\mathbf{q}\|, \kappa_{\boldsymbol{x}}, \kappa_{\boldsymbol{y}}$, we will have
    \[
    \underline{y}(t) \leq \beta_{Q, k}^{(t)}=\beta_{K, k}^{(t)} \leq \overline{y}(t), \quad \underline{z}(t) \leq \sum_{i \in \mathbb{E}[\mathcal{W}_{k, n}^{\pm}(t)]} \mathbf{r}_i \cdot e\beta_{O_{(i,\cdot)}, k}^{(t)} \leq \overline{z}(t),
    \]
    for all $t \geq T_{2}$. Here, $\overline{y}(t)$, $\underline{y}(t)$, $\overline{z}(t)$, $\underline{z}(t)$ are the unique solutions of the following ODE System respectively
    \[
    \begin{aligned}
         &\dfrac{1}{2}(\mathrm{Ei}(2 {\underline{y}(t)}^2) + \mathrm{Ei}(-2 {\underline{y}(t)}^2) + 4 \log(\underline{y}(t))) =  \overline{a}\left(\operatorname{Li}_2\left(\frac{\overline{d}+\overline{c} t}{-\gamma \overline{c}-\overline{c}+\overline{d}}\right)+\log (\overline{c} t+\overline{d}) \log \left(\frac{\overline{c}(\gamma+t)}{\overline{c}{\gamma}-\overline{d}}\right)\right) \\
         & + \dfrac{1}{2}(\mathrm{Ei}(\log(\dfrac{\sigma_{S}^{*}}{1-\sigma_{S}^{*} })) + \mathrm{Ei}(\log(\dfrac{1-\sigma_{S}^{*} }{\sigma_{S}^{*} }))) + 4 \log(\underline{\beta_{QK}^{-}}),\\
         &\underline{z}(t) = \underline{b}{c'} (2\sigma_{S}^{*} - 1) (t - {T_{1}}),\\
         &=\underline{a}\left(\operatorname{Li}_2\left(\frac{\underline{d}+\underline{c} t}{-\gamma \underline{c}-\underline{c}+\underline{d}}\right)+\log (\underline{c} t+\underline{d}) \log \left(\frac{\underline{c}(\gamma+t)}{\underline{c}{\gamma}-\underline{d}}\right)\right) \\
        &+ \dfrac{1}{2}(\mathrm{Ei}(\dfrac{\sigma_{0}^{2} \|\mathbf{u}\|^{4}}{2})+\mathrm{Ei}(-2 \dfrac{\sigma_{0}^{2} \|\mathbf{u}\|^{4}}{2})) + 4 \log(\sigma_{0}/2\|\mathbf{u}\|^{2}),\\
       &\overline{z}(t)   = \overline{b} t + \dfrac{\kappa}{8},
    \end{aligned}
    \]
    where
    \[
    \operatorname{Li}_2(x) = -\int_0^x \frac{\ln(1-t)}{t} dt.
    \]
    Additionally, we would have asymptotic property that
    \[
    \lim_{t \to  +\infty} {y(t)}^{2}= \lim_{t \to  +\infty} \Theta(\log(\log^{2}(t))), \quad \lim_{t \to  +\infty} \frac{\mathbb{E}[\underset{n \in \mathcal{V}_k^{y_{S_n}}}{\mathbb{E}}[\sum_{j \in S_{n, k}^{y_{S_n}}}{(\sigma_{S}^{(t)})}_{j}^{n}]]}{\dfrac{\log^{4}(t)}{1+\log^{4}(t)}} = \Theta(1).
    \]
\label{lem: Asymptotic Property 2}\end{lemma}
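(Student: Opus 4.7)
The plan is to mirror the structure of Lemma \ref{lem:gaussian rate towards a certain attn score} and Lemma \ref{lem: Asymptotic Property 1}, but now properly tracking the decaying learning rate $\eta_t = 2/(\lambda(\gamma+t))$ throughout the ODE derivation. First I would write down the coupled discrete update in the second stage starting from $t \geq T_2$. By Lemma \ref{lem:expected evolution of attn} and Lemma \ref{lem:expected evolution of mlp}, and following the same treatment as Lemma \ref{lem:gaussian rate towards a certain attn score} but replacing the frozen $\eta_0$ by $\eta_t$, the recursion for $y_t \coloneqq \beta_{Q,k}^{(t)}=\beta_{K,k}^{(t)}$ and $z_t \coloneqq \sum_{i \in \mathcal{W}_{k,n}^{\pm}(t)} \mathbf{r}_i \cdot e\beta_{O_{(i,\cdot)},k}^{(t)}$ becomes a coupled system with an extra multiplicative factor $\eta_t$. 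Under the standing hypothesis that we ignore the decay of the cross-entropy loss derivative, one may treat $-\mathbb{E}[\ell'_t]$ as bounded in $[c',1]$ exactly as done in Lemma \ref{lem:gaussian rate towards a certain attn score}.

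Then I would invoke a time-inhomogeneous version of Lemma \ref{lem: ODE sys2} (Comparison Theorem) to pass to continuous ODEs of the form
$$y'(t) = \frac{2a}{\lambda(\gamma+t)}\cdot\frac{z(t)\,y(t)\,\ell'_t}{2+e^{-2y^2}+e^{2y^2}},\qquad z'(t) = \frac{2b}{\lambda(\gamma+t)}\cdot\frac{1-e^{-2y^2}}{1+e^{-2y^2}}\,\ell'_t.$$
The $z$-equation is integrable in closed form: bounding $(1-e^{-2y^2})/(1+e^{-2y^2}) \geq 2\sigma_S^{*}-1$ (and $\leq 1$) yields $\underline{z}(t)$ and $\overline{z}(t)$ that depend on $\log((\gamma+t)/(\gamma+T_2))$. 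Substituting $z(t)$ into the $y$-equation and separating variables gives
$$\Bigl(\tfrac{2+e^{-2y^2}+e^{2y^2}}{y}\Bigr)\,dy = \tfrac{2a}{\lambda(\gamma+t)}\,z(t)\,\ell'_t\,dt,$$
whose LHS integrates to $\tfrac{1}{2}[\mathrm{Ei}(2y^2)+\mathrm{Ei}(-2y^2)+4\log y]$ as in Lemma \ref{lem:gaussian rate towards a certain attn score}. The RHS, being a nested integral of the form $\int \log(\gamma+s)/(\gamma+s)\,ds$ up to an affine reparametrization, yields precisely a $\mathrm{Li}_2$-type dilogarithm via the identity $\int \log(u)/(u-\alpha)\,du = \mathrm{Li}_2(u/\alpha)+\log u\cdot\log(1-u/\alpha)+\mathrm{const}$, producing the stated formula with appropriate constants $\underline{c},\underline{d},\overline{c},\overline{d}$.

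For the asymptotic property, I would invert the implicit relation. Since $\mathrm{Ei}(2y^2) \sim e^{2y^2}/(2y^2)$ for large $y$ while $\mathrm{Li}_2(\cdot)+\log(\gamma+t)\log(\cdot)$ is asymptotically $\Theta(\log^2(\gamma+t))$, equating the dominant terms gives $e^{2y^2}/(2y^2) = \Theta(\log^2 t)$. Taking logs twice yields $2y^2 = 2\log\log t + \log(2y^2) + O(1)$, and since the self-consistent solution is $y^2 = \Theta(\log\log t)$, the $\log(2y^2)$ term is a $\Theta(\log\log\log t)$ correction that gets absorbed, so $y(t)^2 = \Theta(\log\log^2(t))$ as claimed. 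For the attention score, Eq.~\eqref{eq:expected attn} gives $\log(\sigma/(1-\sigma)) = 2\beta_Q^2/\|\boldsymbol{b}_k\|^2$, and substituting $y^2\sim \log\log t$ (with the $1/\|\boldsymbol{b}_k\|^2$ normalization producing the factor of $4$ in the exponent as in Lemma \ref{lem: Asymptotic Property 1}) yields $\sigma/(1-\sigma)\sim\log^4 t$, hence the ratio with $\log^4(t)/(1+\log^4(t))$ tends to $\Theta(1)$.

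The main obstacle is verifying the exact dilogarithm form in the third step: the double integral $\int_{T_2}^{t}\tfrac{\log(\gamma+s)}{\gamma+s}\,ds$ (and its affine relatives, arising from substituting the closed form of $z(t)$) must be massaged into $\mathrm{Li}_2\big((\underline{d}+\underline{c}t)/(-\gamma\underline{c}-\underline{c}+\underline{d})\big)+\log(\underline{c}t+\underline{d})\log(\underline{c}(\gamma+t)/(\underline{c}\gamma-\underline{d}))$, which requires a careful bookkeeping of constants but is otherwise routine symbolic integration. A secondary subtlety is handling the upper bound: one takes $\ell'_t \equiv 1$ and uses $(1-e^{-2y^2})/(1+e^{-2y^2}) \leq 1$, together with a conservative replacement of $(2+e^{-2y^2}+e^{2y^2})$ by a matching asymptotic; the same change-of-variables argument then produces $\overline{y}(t)$ in the stated form. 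The asymptotic extraction in the final step is stable under these constant-level changes, so the limit identities hold for both bounds and yield the $\Theta(1)$ claim.
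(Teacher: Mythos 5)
Your proposal follows essentially the same route as the paper, which omits this proof precisely because it is the procedure of Lemma \ref{lem:gaussian rate towards a certain attn score} and Lemma \ref{lem: Asymptotic Property 1} with $\eta_0$ replaced by $\eta_t=\tfrac{2}{\lambda(\gamma+t)}$: a time-inhomogeneous comparison to the coupled ODEs, closed-form integration of the $z$-equation, separation of variables giving $\tfrac12(\mathrm{Ei}(2y^2)+\mathrm{Ei}(-2y^2)+4\log y)$ on the left and the $\operatorname{Li}_2$/$\log\cdot\log$ primitive of $\int \log(cs+d)/(\gamma+s)\,ds$ on the right, followed by the $\mathrm{Ei}$-asymptotics to get $y(t)^2=\Theta(\log(\log^2 t))$ and the attention-score limit. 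The only divergence worth noting is that your $z$-bounds come out logarithmic in $t$ (which is exactly what the stated dilogarithm form for $\underline{y},\overline{y}$ presupposes), whereas the lemma as written retains the linear $\underline{z},\overline{z}$ carried over from the constant-step Lemma \ref{lem:gaussian rate towards a certain attn score}; this mismatch (like the factor-of-$4$ in the exponent, which comes from $y^2\sim 2\log\log t$ rather than from the $\|\boldsymbol{b}_k\|^{2}$ normalization) lies in the paper's garbled statement rather than in your derivation.
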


\begin{lemma}
    (Asymptotic Property 3). If we put our sight on the long period and take the decaying property of the $-\mathbb{E}[\ell^{\prime}(t)]$ into account, for some constants $\overline{a}, \overline{b}, \overline{c}, \overline{d}, \overline{j}, \underline{a}, \underline{b}, \underline{c}, \underline{d}, \underline{j}$, we will have
    \[
    \underline{y}(t) \leq \beta_{Q, k}^{(t)}=\beta_{K, k}^{(t)} \leq \overline{y}(t), \quad \underline{z}(t) \leq \sum_{i \in \mathbb{E}[\mathcal{W}_{k, n}^{\pm}(t)]} \mathbf{r}_i \cdot e\beta_{O_{(i,\cdot)}, k}^{(t)} \leq \overline{z}(t),
    \]
    for all $t \geq T_{2}$. Here, $\overline{y}(t)$, $\underline{y}(t)$, $\overline{z}(t)$, $\underline{z}(t)$ are the unique solutions of the following ODE System respectively
    \[
    \begin{aligned}
         &\dfrac{1}{2}(\mathrm{Ei}(2 {\underline{y}(t)}^2) + \mathrm{Ei}(-2 {\underline{y}(t)}^2) + 4 \log(\underline{y}(t))) = \overline{a}( \dfrac{\operatorname{Li}_{2}(\dfrac{\overline{j}(\overline{d}+\overline{c}t)}{-\overline{b}-\overline{d}+\overline{d}\overline{j} })}{\overline{c} \overline{j}}+ \dfrac{\log(\overline{c}t +\overline{d})\log(\dfrac{\overline{b}+\overline{c}\overline{j}t + \overline{d}}{\overline{b}+\overline{d}-\overline{d}\overline{j}})}{\overline{c} \overline{j}}) \\
         & + \dfrac{1}{2}(\mathrm{Ei}(\log(\dfrac{\sigma_{S}^{*}}{1-\sigma_{S}^{*} })) + \mathrm{Ei}(\log(\dfrac{1-\sigma_{S}^{*} }{\sigma_{S}^{*} }))) + 4 \log(\underline{\beta_{QK}^{-}}),\\
         &\underline{z}(t) = \underline{b}{c'} (2\sigma_{S}^{*} - 1) (t - {T_{1}}),\\
         &= \underline{a}(\dfrac{\operatorname{Li}_{2}(\dfrac{\underline{j}(\underline{d}+\underline{c}t)}{-\underline{b}-\underline{d}+\underline{d}\underline{j} })}{\underline{c} \underline{j}}+ \dfrac{\log(\underline{c}t +\underline{d})\log(\dfrac{\underline{b}+\underline{c}\underline{j}t + \underline{d}}{\underline{b}+\underline{d}-\underline{d}\underline{j}})}{\underline{c} \underline{j}}) \\
        &+ \dfrac{1}{2}(\mathrm{Ei}(\dfrac{\sigma_{0}^{2} \|\mathbf{u}\|^{4}}{2})+\mathrm{Ei}(-2 \dfrac{\sigma_{0}^{2} \|\mathbf{u}\|^{4}}{2})) + 4 \log(\sigma_{0}/2\|\mathbf{u}\|^{2}),\\
       &\overline{z}(t)   = \overline{b} t + \dfrac{\kappa}{8},
    \end{aligned}
    \]
    where
    \[
    \operatorname{Li}_2(x) = -\int_0^x \frac{\ln(1-t)}{t} dt.
    \]
    Additionally, we would have asymptotic property that
    \[
    \lim_{t \to  +\infty} {y(t)}^{2}= \lim_{t \to  +\infty} \Theta(\log(\log^{2}(t))), \quad \lim_{t \to  +\infty} \frac{\mathbb{E}[\underset{n \in \mathcal{V}_k^{y_{S_n}}}{\mathbb{E}}[\sum_{j \in S_{n, k}^{y_{S_n}}}{(\sigma_{S}^{(t)})}_{j}^{n}]]}{\dfrac{\log^{4}(t)}{1+\log^{4}(t)}} = \Theta(1).
    \]
\label{lem: Asymptotic Property 3}
\end{lemma}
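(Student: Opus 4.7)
The plan mirrors Lemmas \ref{lem:gaussian rate towards a certain attn score}, \ref{lem: Asymptotic Property 1}, and \ref{lem: Asymptotic Property 2}, now simultaneously incorporating the decay of $-\mathbb{E}[\ell'^{(t)}]$ on top of the shrinking step size $\eta_t = 2/(\lambda(\gamma+t))$ that was already handled in Property 2. First I would specialize the expected coupled recursions in Lemmas \ref{lem:expected evolution of attn} and \ref{lem:expected evolution of mlp} to $t \geq T_{2}$. By Lemmas \ref{lem: initial regulaerized model} and \ref{lem: appendix final scale of coefficient}, at this point $\mathbb{E}[\mathbf{A}_{t}^{k,e}]$ is of the same order as $z_t := \sum_{i \in \mathbb{E}[\mathcal{W}^{\pm}_{k,n}(t)]}\mathbf{r}_i\cdot e\beta^{(t)}_{O_{(i,\cdot)},k}$, and the correct attention gap $\mathbb{E}[\sum_{l\in S^{+}_{n,k}}\sigma^{(t)}_S - \sum_{l\in S^{-}_{n,k}}\sigma^{(t)}_S]$ is $\Theta(1)$. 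Hence $-\mathbb{E}[\ell'^{(t)}] = \Theta(1/(1+e^{\underline{j}\,z_t}))$ with matching upper bound $\Theta(1/(1+e^{\bar{j}\,z_t}))$ for explicit constants $\underline{j},\bar{j}>0$.

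Second, substituting $\eta_t = 2/(\lambda(\gamma+t))$ and the softmax-product identity $\mathbb{E}[(\sum_j\sigma_S)(1-\sum_j\sigma_S)] = (2+e^{2y^2/\|\boldsymbol{b}_k\|^2}+e^{-2y^2/\|\boldsymbol{b}_k\|^2})^{-1}$, with $y_t := \beta^{(t)}_{Q,k} = \beta^{(t)}_{K,k}$, converts the discrete recursions into a continuous-time coupled system whose every rate carries the extra factor $1/((\gamma+t)(1+e^{\bar{j}z}))$. Applying the Comparison Theorem as in Lemma \ref{lem: ODE sys2}, together with the norm bounds of Lemmas \ref{lemma: ab of Attention} and \ref{lemma: cd of MLP}, and linearising $z_t$ by its upper/lower envelopes $\bar{b}t+\bar{d}$ and $\underline{b}t+\underline{d}$ (inherited from the Stage-1 ODE bounds of Lemma \ref{lem:gaussian rate towards a certain attn score}), decouples the system into two-sided scalar ODEs for $y$. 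Separation of variables using
$$\int \frac{2+e^{2y^2}+e^{-2y^2}}{y}\,dy = \tfrac12\mathrm{Ei}(2y^2)+\tfrac12\mathrm{Ei}(-2y^2)+4\log y,$$
combined with the change of variables $u = \underline{j}(\underline{d}+\underline{c}t)/(-\underline{b}-\underline{d}+\underline{d}\underline{j})$ in the remaining $t$-integral, produces exactly the dilogarithm expression $\operatorname{Li}_2(u)/(\underline{c}\underline{j})$ plus the $\log$-product term in the stated formula; the upper bound follows by the same calculation with the $\bar{(\cdot)}$ constants.

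Third, the two asymptotic claims are read off from these implicit solutions. Since $\operatorname{Li}_2(x)$ is bounded for bounded $x$ and grows as $-\tfrac12\log^2(1-x)$ near $1$, the dominant balance in the $y$-identity at large $t$ is $\tfrac12\mathrm{Ei}(2y^2)\asymp\log^2 t$; invoking $\mathrm{Ei}(x)\sim e^x/x$ yields $e^{2y^2}\asymp y^2\log^2 t$, hence $y(t)^2 = \Theta(\log(\log^2 t))$. Plugging this into the correct attention weight $\mathbb{E}[\sum_{j\in S^{y_{S_n}}_{n,k}}\sigma_S^{(t)}] = 1/(1+e^{-2y^2/\|\boldsymbol{b}_k\|^2})$ gives the stated $\log^4(t)/(1+\log^4(t))$ envelope.

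The main difficulty will be justifying the linearisation step under the additional loss-derivative decay: I need to show that even after multiplying by $1/(1+e^{\bar{j}z})$, $z_t$ is still sandwiched by affine functions of $t$ in the regime $t\geq T_2$, which is what makes the primitive dilogarithm-shaped. The extra factor only rescales the slope of the envelope (trimming it by at most a $\log$ factor once $z_t$ exceeds the fixed threshold guaranteed by Lemma \ref{lem: initial regulaerized model}); this rescaling enters only through the constants $\underline{b},\bar{b},\underline{j},\bar{j}$, which is precisely why Property 3 shares the same $\log\log^2 t$ and $\log^4 t/(1+\log^4 t)$ envelopes as Property 2. The secondary obstacle is matching the two bounds up to the claimed $\Theta$: one must verify that the $\operatorname{Li}_2$ prefactors $\underline{a}/(\underline{c}\underline{j})$ and $\bar{a}/(\bar{c}\bar{j})$ stay within constant ratios along the trajectory, which reduces to the boundedness of the ratio $\|\boldsymbol{b}_k\|/\|\boldsymbol{d}_k\|$ already granted by Lemmas \ref{lemma: ab of Attention}–\ref{lemma: cd of MLP}.
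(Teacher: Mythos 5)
Your proposal follows essentially the same route the paper intends: the paper explicitly omits the proof of this lemma, stating that it repeats the coupled-ODE sandwich of Lemma \ref{lem:gaussian rate towards a certain attn score} (via Lemma \ref{lem: ODE sys2} and the Comparison Theorem) with the decaying step size and loss derivative folded into the rate constants, followed by the $\mathrm{Ei}$-asymptotics argument of Lemma \ref{lem: Asymptotic Property 1}, which is exactly what you do (including the separation of variables giving the $\mathrm{Ei}$ primitive, the $t$-integral giving the $\operatorname{Li}_2$ and $\log$-product terms, and the read-off of $y(t)^2=\Theta(\log\log^2 t)$ and the $\log^4 t/(1+\log^4 t)$ attention envelope). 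Your flagged difficulty (keeping $z_t$ between affine envelopes after multiplying by the loss-decay factor) is handled no more rigorously in the paper itself, whose stated $\underline{z},\overline{z}$ are likewise affine, so your plan is at least as detailed as the paper's treatment.
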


It's obvious that the decaying impact of the learning rate and cross-entropy loss are at the similar order. Also, if we consider decaying learning rate, the right side of the inequality would be smaller. $z(t)$ would be in a $\Theta(\log(\log(t)))$ order when $z(t)$ get large, which will make the right side of the $y(t)$'s formula contain an intergral of $\Theta(\log\log(t))$, which is obviously slower.

\section{Exponential Convergence of 0-1 Loss}
We continue our proof after Lemma \ref{lem:lem of 0-1 loss}. In this section, we assume all the events in the Section \ref{app:Preliminary Lemmas} hold, denoted as $\Upsilon_{\text{Pre}}$. \par

\begin{lemma}
    The Frobenius norm of $\mathbf{W}_O^{\boldsymbol{y}}$ and its gradient can be bounded: 
    \[
    \|\mathbf{W}_O^{\boldsymbol{y}}\|_{F}^{2} = O(\dfrac{K_{1} \|\mathbf{q}\|^{2}}{{\lambda}^{2} m}), \quad \| \nabla_{{\mathbf{W}_O^{\boldsymbol{y}}}^{(t)}} L_{\mathcal{B}_{t}}(\Psi^{(t)}) \|_{F}^{2} = O(\dfrac{K_{1}\|\mathbf{q}\|^{2}}{ m}).
    \]
\label{lem:upper bound of F norm of WOi and its gradient}\end{lemma}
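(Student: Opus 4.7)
The plan is to establish both bounds pathwise on the high-probability event $\Upsilon_{\text{Pre}}$ secured by Lemma \ref{Lem:E.1} and Lemma \ref{lem:initialization}, in the order stated: first the gradient norm, which is a deterministic consequence of the closed form in \eqref{eq:gradient of WOI}, and then the weight norm, which follows by feeding the gradient bound into the regularized SGD recursion through a discrete Gr\"onwall / telescoping argument.

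For the gradient bound, I split $\nabla_{\mathbf{W}_{O_{(i,\cdot)}}^{\boldsymbol{y}}} L_{\mathcal{B}_t} = G_{i,t} + \lambda \mathbf{W}_{O_{(i,\cdot)}}^{\boldsymbol{y}}$ where $G_{i,t}$ is the data-dependent batch average. Using $|\mathbf{r}_i|=1/m$, $|{\ell_n^{\prime}}^{(t)}|\le 1$, $|{\mathds{1}_{O_{(i)}}^n}^{(t)}|\le 1$, and the fact that $\{{(\sigma_S^{(t)})}_l^n\}_{l=1}^{L}$ is a probability vector, the row-vector factor in braces in \eqref{eq:gradient of WOI} has norm at most $\|\boldsymbol{d}_k\|+\|\boldsymbol{c}_k\|+\|\sum_{s\in\mathcal{M}_l^n}\mathbf{Q}_s\|+\|\xi_{\boldsymbol{y},l}\| = O(\sqrt{K_1}\,\|\mathbf{q}\|)$, where the cross-concept sum is bounded by $\sqrt{|\mathcal{M}_l^n|}\,\|\mathbf{q}\|$ via the across-concept orthogonality of label features in Definition \ref{Def: Formal Word Model} (noting $\mathbf{Q}_s = 0$ for $s>2K_1$), and the Gaussian term is $O(\sigma_\xi\sqrt{d_{\mathcal{Y}}})=O(\|\mathbf{q}\|)$ by Lemma \ref{Lem:E.1} and Condition \ref{Condition}. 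Averaging over $n\in\mathcal{B}_t$ and multiplying by $|\mathbf{r}_i|=1/m$ gives $\|G_{i,t}\|=O(\sqrt{K_1}\,\|\mathbf{q}\|/m)$, so $\|G_t\|_F^2 = O(K_1\|\mathbf{q}\|^2/m)$, which matches the second claim once the regularization piece is absorbed using the Step~2 bound.

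For the weight-norm bound, I unroll the update $\mathbf{W}_O^{(t+1)} = (1-\eta_t\lambda)\mathbf{W}_O^{(t)} - \eta_t G_t$ into the telescoped form $\mathbf{W}_O^{(T)} = \bigl(\prod_{s=0}^{T-1}(1-\eta_s\lambda)\bigr)\mathbf{W}_O^{(0)} - \sum_{t=0}^{T-1}\eta_t\bigl(\prod_{s=t+1}^{T-1}(1-\eta_s\lambda)\bigr)G_t$. With the prescribed step size $\eta_s\lambda = 2/(\gamma+s)$, the inner product telescopes to $\Theta((\gamma+t)^2/(\gamma+T)^2)$, hence $\sum_{t=0}^{T-1}\eta_t\cdot(\gamma+t)^2/(\gamma+T)^2 = \Theta(1/\lambda)$. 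Combining with $\|G_t\|_F = O(\|\mathbf{q}\|\sqrt{K_1/m})$ from Step~1 and $\|\mathbf{W}_O^{(0)}\|_F \le O(\sigma_1\sqrt{m\,d_{\mathcal{Y}}})$ from Lemma \ref{lem:initialization}, the Frobenius triangle inequality yields $\|\mathbf{W}_O^{(T)}\|_F = O(\|\mathbf{q}\|\sqrt{K_1/m}/\lambda)$, since the initialization-decay term is dwarfed under the smallness of $\sigma_1$ in Condition \ref{Condition}. Squaring gives the stated $O(K_1\|\mathbf{q}\|^2/(\lambda^2 m))$ bound.

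The only real subtlety is justifying that the bracket in \eqref{eq:gradient of WOI} stays $O(\sqrt{K_1}\,\|\mathbf{q}\|)$ \emph{pathwise} rather than merely in expectation, since ${\mathds{1}_{O_{(i)}}^n}^{(t)}$, ${\ell_n^{\prime}}^{(t)}$ and the softmax weights all depend on the entire SGD history. Because each of these factors is bounded by a universal constant regardless of the trajectory, and the feature/noise magnitudes are fixed once $\Upsilon_{\text{Pre}}$ is imposed, this is a uniform triangle-inequality estimate rather than a concentration claim. The $K_1$ slack relative to the in-expectation scales of Lemma \ref{lem: appendix final scale of coefficient} is harmless in the downstream martingale-difference argument of Section \ref{sec: exponential convergence of 0-1 loss}, which only requires this uniform-in-$t$ Lipschitz-type bound.
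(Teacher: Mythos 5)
Your proof is correct and reaches both stated bounds, but it takes a genuinely different route from the paper's. The paper argues per row and per projection direction: each projection ${\mathbf{W}_{O_{(i,\cdot)}}^{\boldsymbol{y}}}^{(t)}\boldsymbol{q}_k^{\pm}/\|\boldsymbol{q}_k^{\pm}\|$ can only grow until the (bounded) data gradient is balanced by the regularization force, which is the $\lambda$-scaled weight itself; instantiating this saturation argument on the extreme all-one-concept batch gives the per-row, per-direction scale $\Theta(\|\mathbf{q}\|/(\lambda m))$ in Eq.~(\ref{eq:upper bound of WO qk}), and summing over the $m$ rows and the $O(K_1)$ label-relevant directions yields the weight bound, with the gradient bound read off from the data term plus the $\lambda$-scaled weight. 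You instead work globally at the Frobenius level: a uniform, pathwise triangle-inequality bound $\|G_t\|_F^2=O(K_1\|\mathbf{q}\|^2/m)$ on the data part of the stochastic gradient (your worst-case $\sqrt{K_1}$ per-row slack is harmless; note only that within-concept label pairs are \emph{not} orthogonal, so the cross-concept sum picks up a $(1+\kappa_{\boldsymbol{y}})$ constant rather than being exactly $\sqrt{|\mathcal{M}_l^n|}\,\|\mathbf{q}\|$), then an unrolled recursion with $\eta_t\lambda=2/(\gamma+t)$, where $\sum_t\eta_t\prod_{s>t}(1-\eta_s\lambda)=\Theta(1/\lambda)$ — the same product telescoping the paper deploys later in Lemma~\ref{lem: martingale essential bound} — and finally you absorb $\lambda\|\mathbf{W}_O^{\boldsymbol{y}}\|_F$ back into the gradient bound, which is not circular since the weight bound only used the data part. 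What each buys: the paper's saturation argument yields direction-resolved estimates that it reuses when bounding the attention matrices in Lemma~\ref{lem:upper bound of F norm of WQK and its gradient}, whereas your unrolling is the more explicit and elementary treatment of the time-varying step size and of the initialization; on the latter you are in fact slightly more careful than the paper, which silently discards the $\Theta(\sigma_1^2 m d_{\mathcal{Y}})$ contribution from the components orthogonal to $\text{span}(\mathbf{Q})$ — both arguments ultimately lean on Condition~\ref{Condition} to make that term negligible, so this is a shared, not a new, gap.
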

\begin{proof}
    For $\forall i \in [m]$, by the gradient update rule in Eq.(\ref{eq:gradient of WOI}), as well as Lemma \ref{lem:regulerizing the models}'s insight we see that the lengths of the $\mathbf{W}_O^{\boldsymbol{y}}$ on certain projection direction will continue to grow until being stuck by the regularization, which is a $\lambda$-scaled $\mathbf{W}_O^{\boldsymbol{y}}$ itself. Due to the low-noise condition in Condition \ref{Condition} with a sufficiently large $C$ as well as the isotropy of noise, the learning progress of features would be the main contributor to the F norm of NN matrices and the noise, validated in Figure \ref{fig:attn-mlp} (iii-iv). We can consider an extreme case where all the samples in a single batch belongs to some concept $k \in [K_{1}]$, which we can have the upper bound of the first term of the right side of the inequality over the $k$-th concept's corresponding projection direction, and thus we can derive an upper bound
    \begin{equation}
    ({\mathbf{W}_{O_{(i, \cdot)}}^{\boldsymbol{y}}}^{(t)} \dfrac{\boldsymbol{q}_{k}^{\pm}}{\|\boldsymbol{q}_{k}^{\pm}\|} )^{2} \leq \lambda^{-2} \dfrac{1}{m^{2}} (\|\boldsymbol{c}_{k} \pm \boldsymbol{d}_{k}\|^{2}+\dfrac{3 \sigma_{\xi}^{2} d_{\mathcal{Y}}}{2}) = \Theta(\dfrac{\|\mathbf{q}\|^{2}}{{\lambda}^{2} m^{2}}),
    \label{eq:upper bound of WO qk}\end{equation}
    where the first inequality is by $(2 \sum_{l \in S_{n, k}^e} {(\sigma_{S}^{(t)} )}_{l}^{n} - 1) \leq 1$, and Lemma \ref{Lem:E.1}; the last equality is by the low noise condition $\sigma_{\xi} \leq \|\mathbf{q} \|/C\sqrt{d_{\mathcal{Y}}}$ in Condition \ref{Condition}. Then by the low noise condition as well as the data model's definition we see that all the 2-norm of the $\mathbf{W}_O^{\boldsymbol{y}}$ is controlled by the $K_{1}$ concepts' corresponding lengths in projection space. Then by the definition of Frobenius norm and Eq.(\ref{eq:gradient of WOI}) we have
    \[
    \|\mathbf{W}_O^{\boldsymbol{y}}\|_{F}^{2} \leq \Theta(\dfrac{K_{1} \|\mathbf{q}\|^{2}}{{\lambda}^{2} m}), \quad \| \nabla_{{\mathbf{W}_O^{\boldsymbol{y}}}^{(t)}} L_{\mathcal{B}_{t}}(\Psi^{(t)}) \|_{F}^{2} \leq \Theta(\dfrac{K_{1}\|\mathbf{q}\|^{2}}{ m}).
    \]
\end{proof}

\begin{lemma}
     For $\forall \hat{k} \in [K_1]$, $\boldsymbol{a}_{\hat{k}}^{\top}{\mathbf{W}_Q^{\boldsymbol{x}}}^{(t)}\boldsymbol{a}_{\hat{k}},\boldsymbol{a}_{\hat{k}}^{\top}{\mathbf{W}_K^{\boldsymbol{x}}}^{(t)}\boldsymbol{a}_{\hat{k}}$ and $\boldsymbol{b}_{\hat{k}}^{\top}{\mathbf{W}_Q^{\boldsymbol{x}}}^{(t)}\boldsymbol{b}_{\hat{k}}, \boldsymbol{b}_{\hat{k}}^{\top}{\mathbf{W}_K^{\boldsymbol{x}}}^{(t)}\boldsymbol{b}_{\hat{k}}$ satisfy
    \[
    \begin{aligned}
    & \boldsymbol{a}_{\hat{k}}^{\top}{\mathbf{W}_Q^{\boldsymbol{x}}}^{(t)}\boldsymbol{a}_{\hat{k}}, \boldsymbol{a}_{\hat{k}}^{\top}{\mathbf{W}_K^{\boldsymbol{x}}}^{(t)}\boldsymbol{a}_{\hat{k}} =O( \sigma_{0} \|\mathbf{u}\|^{2}), \\ & \boldsymbol{b}_{\hat{k}}^{\top}{\mathbf{W}_Q^{\boldsymbol{x}}}^{(t)}\boldsymbol{b}_{\hat{k}}, \boldsymbol{b}_{\hat{k}}^{\top}{\mathbf{W}_K^{\boldsymbol{x}}}^{(t)}\boldsymbol{b}_{\hat{k}} = O(\|\mathbf{u}\|\sqrt{\log(\dfrac{(L-1)\|\mathbf{u}\|^2\|\mathbf{q}\|^{2}}{{\lambda}^{2} m })}).
    \end{aligned}
    \]
    Moreover, the Frobenius norm of ${\mathbf{W}_Q^{\boldsymbol{x}}}^{(t)}$ and ${\mathbf{W}_K^{\boldsymbol{x}}}^{(t)}$ and its gradient can be bounded as below
    \[
    \begin{aligned}
        & \|{\mathbf{W}_Q^{\boldsymbol{x}}}^{(t)}\|_F^{2}, \|{\mathbf{W}_K^{\boldsymbol{x}}}^{(t)} \|_{F}^{2} = O(K_{1} \log(\dfrac{(L-1)\|\mathbf{u}\|^2\|\mathbf{q}\|^{2}}{{\lambda}^{2} m })) \\
        & \|\nabla_{{\mathbf{W}_Q^{\boldsymbol{x}}}^{(t)}} L_{\mathcal{B}_{t}}(\Psi^{(t)})\|_{F}^{2}, \|\nabla_{{\mathbf{W}_K^{\boldsymbol{x}}}^{(t)}} L_{\mathcal{B}_{t}}(\Psi^{(t)})\|_{F}^{2} = O(\dfrac{K_1(L-1)\|\mathbf{u}\|^2\|\mathbf{q}\|^{2}}{m}).
    \end{aligned}
    \]
\label{lem:upper bound of F norm of WQK and its gradient}\end{lemma}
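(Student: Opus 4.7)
The plan is to first bound the scalar projections $\boldsymbol{a}_{\hat k}^{\top}\mathbf{W}^{(t)}\boldsymbol{a}_{\hat k}$ and $\boldsymbol{b}_{\hat k}^{\top}\mathbf{W}^{(t)}\boldsymbol{b}_{\hat k}$ for $\mathbf{W}\in\{\mathbf{W}_Q^{\boldsymbol{x}},\mathbf{W}_K^{\boldsymbol{x}}\}$; then assemble the Frobenius norms of $\mathbf{W}_Q^{\boldsymbol{x}},\mathbf{W}_K^{\boldsymbol{x}}$ via the idempotent decomposition of Lemma \ref{lem: mainbody decomposition} together with Lemma \ref{lem: trace and idempotent decomposition}; and finally bound the gradient Frobenius norms directly from the explicit gradient expressions in Lemmas \ref{lem: gradient of original concept learning}--\ref{lem: gradient of original semantic learning}.

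For the scalar bounds, I would first observe that, as identified in Lemma \ref{lem:expected evolution of attn}, the $\boldsymbol{a}$-direction receives no contribution from the semantic signal, so along the whole trajectory $\boldsymbol{a}_{\hat k}^{\top}\mathbf{W}_Q^{\boldsymbol{x}(t)}\boldsymbol{a}_{\hat k}$ can only shrink toward $0$ under the regularizer and receive mean-zero stochastic noise, giving the stated $O(\sigma_0\|\mathbf{u}\|^2)$. For the $\boldsymbol{b}$-direction, I would use the equilibrium/saturation argument already developed in Lemma \ref{lem: appendix final scale of coefficient}: the regularization drift $\lambda\beta_{Q,k}^{(t)}$ caps the projection once it balances the per-step signal. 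The per-step signal is governed (through Lemma \ref{lem: gradient of original semantic learning}) by the factor ${\mathbf{W}_{O_{(i,\cdot)}}^{\boldsymbol{y}}}^{(t)}\boldsymbol{d}_k$, which is controlled via Eq.~\eqref{eq:upper bound of WO qk} of Lemma \ref{lem:upper bound of F norm of WOi and its gradient}, times the attention product $(\sum_{S^+}\sigma)(\sum_{S^-}\sigma)\le 1/4$, times the $L-1$ in-context summands from the $(y_l^n-y_j^n)\boldsymbol{b}_k+\xi_{\boldsymbol{x},l}^n-\xi_{\boldsymbol{x},j}^n$ factor; this is what injects the explicit $(L-1)$ dependence not present in Lemma \ref{lem:regulerizing the models}. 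Solving the resulting balance and taking the logarithm in the saturated softmax regime yields the claimed $O(\|\mathbf{u}\|\sqrt{\log((L-1)\|\mathbf{u}\|^2\|\mathbf{q}\|^2/(\lambda^2 m))})$ bound, valid uniformly in $t$.

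For the Frobenius norms, I would combine Lemmas \ref{lem: mainbody decomposition} and \ref{lem: trace and idempotent decomposition} to write
\[
\|\mathbf{W}_Q^{\boldsymbol{x}(t)}\|_F^2=\sum_{s=1}^{K_1}\frac{(\alpha_{Q,s}^{(t)})^2}{\|\boldsymbol{a}_s\|^2}+\sum_{s=1}^{K_1}\frac{(\beta_{Q,s}^{(t)})^2}{\|\boldsymbol{b}_s\|^2}+\sum_{r=1}^{K_2}\frac{(\tau_{Q,r}^{(t)})^2}{\|\mathbf{u}\|^2}+\sum_{w=1}^{d_{\mathcal{X}}-K}(\rho_{Q,w}^{(t)})^2,
\]
and the analogous identity for $\mathbf{W}_K^{\boldsymbol{x}}$. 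By the coefficient-update rules the $\tau$'s and $\rho$'s receive only regularization gradient (and stochastic noise killed in high probability by $\Upsilon_{\text{Pre}}$) and hence stay at the negligible initial scale; the $\alpha$ terms contribute $O(\sigma_0^2 K_1\|\mathbf{u}\|^2)$; and the $\beta$ terms dominate, contributing $K_1\cdot O(\|\mathbf{u}\|^2\log(\cdot)/\|\boldsymbol{b}_s\|^2)=O(K_1\log(\cdot))$ after using Eq.~\eqref{eq:def of a_k, b_k}. This yields the stated $O(K_1\log((L-1)\|\mathbf{u}\|^2\|\mathbf{q}\|^2/(\lambda^2 m)))$ bound. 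For the gradient F-norm, I would plug the bound $\|\mathbf{W}_O^{\boldsymbol{y}}(\cdot)\|\lesssim \|\mathbf{q}\|/(\lambda m)$ from Eq.~\eqref{eq:upper bound of WO qk}, together with the feature norms in Definition \ref{Def:Word Model} and the softmax weights summing to $1$, into the gradient expansions of Lemmas \ref{lem: gradient of original concept learning}--\ref{lem: gradient of original semantic learning}. Averaging out $\lambda$-scales absorbed by the MLP part of the bound and tracking the $L-1$ positions in the inner sum produces the claimed $O(K_1(L-1)\|\mathbf{u}\|^2\|\mathbf{q}\|^2/m)$ scaling.

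The main obstacle I expect is transferring the saturation bound of Lemma \ref{lem: appendix final scale of coefficient} — stated for the expected trajectory $\mathbb{E}[\beta_{Q,k}^{(t)}]$ under an ensemble-averaged gradient — to an almost-sure/high-probability uniform bound on the actual SGD trajectory with the explicit $(L-1)$ factor. Concretely, one must verify that on each individual prompt the instantaneous gradient magnitude for the $\boldsymbol{b}$-direction does not exceed $O((L-1)\|\mathbf{u}\|^2\|\mathbf{q}\|/(\lambda m K_1))$, so that the regularization equilibrium argument still traps $\beta_{Q,k}^{(t)}$ at the claimed log-scale. Carrying this out cleanly requires invoking the concentration events of Lemma \ref{Lem:E.1} to replace the Gaussian noise contributions by their high-probability envelopes and is where the extra $(L-1)$ appears relative to the purely expectation-based bound of Lemma \ref{lem:regulerizing the models}.
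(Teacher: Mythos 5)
Your overall plan follows the paper's route (scalar projections first, then the Frobenius norms via the idempotent/trace identity, then the gradient norms), but two key steps do not go through as you describe them. The first is the mechanism producing the $(L-1)$ inside the logarithm. You cap the attention product $(\sum_{j\in S^{+}}\sigma)(\sum_{j\in S^{-}}\sigma)$ by $1/4$ and then multiply by ``$L-1$ in-context summands''; this is structurally wrong, because the softmax weights are normalized (the pair sums are already absorbed into that single product, so the number of summands does not multiply the signal), and once the product is replaced by a constant there is no exponential decay left to ``take the logarithm'' of --- the balance $\lambda\beta\lesssim \text{signal}$ then yields a polynomial bound of order $(L-1)\|\mathbf{u}\|^{2}\|\mathbf{q}\|^{2}/(\lambda^{2}m)$, not the claimed $\sqrt{\log}$ scale. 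In the paper the $(L-1)$ enters through a worst-case batch: all prompts share one concept and only a single demonstration carries the query's semantic, so the attention product is of order $(L-1)e^{-2\boldsymbol{b}_{\hat k}^{\top}\mathbf{W}_Q^{\boldsymbol{x}}\mathbf{W}_K^{\boldsymbol{x}}\boldsymbol{b}_{\hat k}}$; equating this, times the $\|\mathbf{u}\|^{2}\|\mathbf{q}\|^{2}/(\lambda m)$ scale coming from Eq.~(\ref{eq:upper bound of WO qk}), with the regularization drift $\Theta(\lambda)$ is what puts $(L-1)$ under the log. Relatedly, your per-step envelope $O((L-1)\|\mathbf{u}\|^{2}\|\mathbf{q}\|/(\lambda mK_{1}))$ keeps the expectation-level $1/K_{1}$, which is exactly what cannot be kept for an arbitrary batch --- the paper's remark after this lemma explains that these trajectory bounds are looser than Lemma~\ref{lem:regulerizing the models} by precisely such $K_{1}$ factors.

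The second gap is the gradient Frobenius norm. Plugging $\lvert\mathbf{W}_{O(i,\cdot)}^{\boldsymbol{y}}\boldsymbol{q}_{k}^{\pm}\rvert\lesssim\|\mathbf{q}\|^{2}/(\lambda m)$ and the feature norms directly into the gradient expansions, with the attention product only bounded by a constant, leaves a factor $1/\lambda$ that is not dominated by the claimed $O(K_{1}(L-1)\|\mathbf{u}\|^{2}\|\mathbf{q}\|^{2}/m)$ when $\lambda$ is small. The paper avoids this by arguing that the largest single-batch gradient occurs in the fully saturated regime, where the signal part is balanced by the regularizer, so $\|\nabla\|_{F}=O(\lambda\|\mathbf{W}\|_{F})=O(\lambda^{2}K_{1}\log(\cdot))$, and then $\log x\le O(x)$ gives the stated form; your phrase ``averaging out $\lambda$-scales absorbed by the MLP part'' does not supply this cancellation, and without invoking either the exponentially small saturated attention product or the stuck-by-regularization identity the direct plug-in estimate fails. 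Two smaller points: for the $\boldsymbol{a}$-direction you argue from the expected dynamics (mean-zero signal), whereas the lemma concerns the actual SGD trajectory, so you must bound the per-batch chaos/contri terms by $O(\lambda\,\lvert\boldsymbol{a}_{\hat k}^{\top}\mathbf{W}\boldsymbol{a}_{\hat k}\rvert)$ using the low-noise and large-$K$ conditions as the paper does; and your exact coefficient expansion of $\|\mathbf{W}_Q^{\boldsymbol{x}(t)}\|_{F}^{2}$ uses the decomposition of Lemma~\ref{lem: mainbody decomposition}, which is stated for $\mathbb{E}[\mathbf{W}]$ --- for the realized matrix you must also argue that off-diagonal and complement-space entries are dominated, as the paper does before applying Lemma~\ref{lem: trace and idempotent decomposition}.
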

\begin{proof} By Eq.(\ref{eq: Q ak chaos and contri}) and (\ref{eq: K ak chaos and contri}), we see that
    \[
    \begin{aligned}
        &\lvert I_{Q, \boldsymbol{a}_{\hat{k}},  \text{chaos}}^{(t)} \rvert, \lvert I_{Q, \boldsymbol{a}_{\hat{k}}, \text{contri}}^{(t)}\rvert, \lvert I_{K, \boldsymbol{a}_{\hat{k}},  \text{chaos}}^{(t)} \rvert, \lvert I_{K, \boldsymbol{a}_{\hat{k}}, \text{contri}}^{(t)}\rvert \leq \eta_{t} \Theta(\max\{\lvert\boldsymbol{a}_{\hat{k}}^{\top}{\mathbf{W}_Q^{\boldsymbol{x}}}^{(t)}\boldsymbol{a}_{\hat{k}}\rvert, \lvert \boldsymbol{a}_{\hat{k}}^{\top}{\mathbf{W}_K^{\boldsymbol{x}}}^{(t)}\boldsymbol{a}_{\hat{k}} \rvert\}( \|\mathbf{u}\|\\
        &\sigma_{\xi} \sqrt{2 \log (\frac{K N}{\delta})} + \dfrac{1}{K} \|\mathbf{u}\|^{2})^2(\dfrac{\|\mathbf{q}\|/{\lambda}m}{m })  ) \leq O({{\lambda}\max\{\lvert\boldsymbol{a}_{\hat{k}}^{\top}{\mathbf{W}_Q^{\boldsymbol{x}}}^{(t)}\boldsymbol{a}_{\hat{k}}\rvert, \lvert\boldsymbol{a}_{\hat{k}}^{\top}{\mathbf{W}_K^{\boldsymbol{x}}}^{(t)}\boldsymbol{a}_{\hat{k}}\rvert\}}).
    \end{aligned}
    \] 
    Here, the first inequality is by the scaled identity initialization of ${\mathbf{W}_Q^{\boldsymbol{x}}}^{(0)},{\mathbf{W}_K^{\boldsymbol{x}}}^{(0)}$, orthogonal relationships of vectors in Lemma \ref{eq:def of a_k, b_k}, Lemma \ref{Lem:E.1}, $\sum_{l, j \in [L]} {(\sigma_{S}^{(t)} )}_{l}^{n}  {(\sigma_{S}^{(t)} )}_{j}^{n} \leq 1/4$, Eq.(\ref{eq:upper bound of WO qk}) in Lemma \ref{lem:upper bound of F norm of WOi and its gradient}; the second inequality is by the low noise condition $\sigma_{\xi} \leq {\lambda} m/(C  \sqrt{d_{\mathcal{X} } } \|\mathbf{u}\| \|\mathbf{q}\|^{1/2})$ and the large $K \geq C \|\mathbf{u} \|/(\sigma_{\xi}\sqrt{d_{\mathcal{X}}}) $ for a large $C$ in Condition \ref{Condition}. Thus the update of $\boldsymbol{a}_{\hat{k}}^{\top}{\mathbf{W}_Q^{\boldsymbol{x}}}^{(t)}\boldsymbol{a}_{\hat{k}}$ and $\boldsymbol{a}_{\hat{k}}^{\top}{\mathbf{W}_K^{\boldsymbol{x}}}^{(t)}\boldsymbol{a}_{\hat{k}}$ are dominated by their regularization, and thus the scale can not be better than the initialization. By Lemma \ref{lem:initialization}, the conclusion holds.\par
    On the other hand, we see that by the scaled identity initialization of ${\mathbf{W}_Q^{\boldsymbol{x}}}^{(0)},{\mathbf{W}_K^{\boldsymbol{x}}}^{(0)}$ and orthogonal relationships of vectors in Lemma \ref{eq:def of a_k, b_k}, the initialization of $\boldsymbol{b}_{\hat{k}}^{\top}{\mathbf{W}_Q^{\boldsymbol{x}}}^{(t)}\boldsymbol{b}_{\hat{k}}$ and $\boldsymbol{b}_{\hat{k}}^{\top}{\mathbf{W}_K^{\boldsymbol{x}}}^{(t)}\boldsymbol{b}_{\hat{k}}$ are the same, and as the gradient update is nearly symmetry, which can lead to the fact that $\Theta(\boldsymbol{b}_{\hat{k}}^{\top}{\mathbf{W}_Q^{\boldsymbol{x}}}^{(t)}\boldsymbol{b}_{\hat{k}})=\Theta(\boldsymbol{b}_{\hat{k}}^{\top}{\mathbf{W}_K^{\boldsymbol{x}}}^{(t)}\boldsymbol{b}_{\hat{k}})$ and $\boldsymbol{b}_{\hat{k}}^{\top}{\mathbf{W}_Q^{\boldsymbol{x}}}^{(t)}{\mathbf{W}_K^{\boldsymbol{x}}}^{(t)}\boldsymbol{b}_{\hat{k}}=\Theta(\boldsymbol{b}_{\hat{k}}^{\top}{\mathbf{W}_Q^{\boldsymbol{x}}}^{(t)}\boldsymbol{b}_{\hat{k}} \boldsymbol{b}_{\hat{k}}^{\top}{\mathbf{W}_K^{\boldsymbol{x}}}^{(t)}\boldsymbol{b}_{\hat{k}}/\|\boldsymbol{b}_{\hat{k}}\|^2)$. By the scaled identity initialization of ${\mathbf{W}_Q^{\boldsymbol{x}}}^{(0)},{\mathbf{W}_K^{\boldsymbol{x}}}^{(0)}$, orthogonal relationships of vectors in Lemma \ref{eq:def of a_k, b_k}, Eq.(\ref{eq: Q bk chaos and contri}) and (\ref{eq: K bk chaos and contri}) we can see that
    \[
    \begin{aligned}
        & \lvert I_{Q, \boldsymbol{b}_{\hat{k}},  \text{chaos}}^{(t)} \rvert, \lvert I_{K, \boldsymbol{b}_{\hat{k}},  \text{chaos}}^{(t)} \rvert \leq \eta_{t}  O({{\lambda}\max\{\lvert\boldsymbol{b}_{\hat{k}}^{\top}{\mathbf{W}_Q^{\boldsymbol{x}}}^{(t)}\boldsymbol{b}_{\hat{k}}\rvert, \lvert\boldsymbol{b}_{\hat{k}}^{\top}{\mathbf{W}_K^{\boldsymbol{x}}}^{(t)}\boldsymbol{b}_{\hat{k}}\rvert\}}).\\
        & \lvert I_{Q, \boldsymbol{b}_{\hat{k}}, \text{contri}}^{(t)}\rvert, \lvert I_{K, \boldsymbol{b}_{\hat{k}}, \text{contri}}^{(t)}\rvert \leq {\eta_{t}}\Theta(\max\{\lvert\boldsymbol{b}_{\hat{k}}^{\top}{\mathbf{W}_Q^{\boldsymbol{x}}}^{(t)}\boldsymbol{b}_{\hat{k}}\rvert, \lvert\boldsymbol{b}_{\hat{k}}^{\top}{\mathbf{W}_K^{\boldsymbol{x}}}^{(t)}\boldsymbol{b}_{\hat{k}}\rvert\} ({\lambda} +\dfrac{\|\mathbf{u}\|^2\|\mathbf{q}\|^{2}}{{\lambda} m}  (\sum_{j \in S_{n, k}^{+}}{(\sigma_{S}^{(t)})}_{j}^{n}) \\
        & \phantom{\lvert I_{Q, \boldsymbol{b}_{\hat{k}}, \text{contri}}^{(t)}\rvert, \lvert I_{K, \boldsymbol{b}_{\hat{k}}, \text{contri}}^{(t)}\rvert \leq {\eta_{t}}\Theta(\max\{\lvert\boldsymbol{b}_{\hat{k}}^{\top}{\mathbf{W}_Q^{\boldsymbol{x}}}^{(t)}\boldsymbol{b}_{\hat{k}}\rvert, \lvert\boldsymbol{b}_{\hat{k}}^{\top}{\mathbf{W}_K^{\boldsymbol{x}}}^{(t)}\boldsymbol{b}_{\hat{k}}\rvert\} ({\lambda} +\dfrac{\|\mathbf{u}\|^2\|\mathbf{q}\|^{2}}{{\lambda} m}}(\sum_{j \in S_{n, k}^{-}}{(\sigma_{S}^{(t)})}_{j}^{n}))).
    \end{aligned}
    \]
    We see that $\boldsymbol{b}_{\hat{k}}^{\top}{\mathbf{W}_Q^{\boldsymbol{x}}}^{(t)}\boldsymbol{b}_{\hat{k}}$ and $\boldsymbol{b}_{\hat{k}}^{\top}{\mathbf{W}_K^{\boldsymbol{x}}}^{(t)}\boldsymbol{b}_{\hat{k}}$ will continue to grow up except always being stuck by the regularization. To see the upper bound under this situation, we consider an extreme case where all the samples in a single batch belongs to some concept $k \in [K_{1}]$, and there is only one demonstrations in each prompt share the semantic with the query. Then by the scaled identity initialization of ${\mathbf{W}_Q^{\boldsymbol{x}}}^{(0)},{\mathbf{W}_K^{\boldsymbol{x}}}^{(0)}$, orthogonal relationships of vectors in Lemma \ref{eq:def of a_k, b_k} and Eq.(\ref{eq: gradient update of beta q and k}), we can see that the growing of $\boldsymbol{b}_{\hat{k}}^{\top}{\mathbf{W}_Q^{\boldsymbol{x}}}^{(t)}\boldsymbol{b}_{\hat{k}}$ and $\boldsymbol{b}_{\hat{k}}^{\top}{\mathbf{W}_K^{\boldsymbol{x}}}^{(t)}\boldsymbol{b}_{\hat{k}}$ would satisfy the following and strive to grow up to make the equality holds, which naturally have an upper bound
    \[
    \begin{aligned}
        & \lvert I_{Q, \boldsymbol{b}_{\hat{k}}, \text{contri}}^{(t)}\rvert, \lvert I_{K, \boldsymbol{b}_{\hat{k}}, \text{contri}}^{(t)}\rvert \geq \lambda \min\{\lvert\boldsymbol{b}_{\hat{k}}^{\top}{\mathbf{W}_Q^{\boldsymbol{x}}}^{(t)}\boldsymbol{b}_{\hat{k}}\rvert, \lvert\boldsymbol{b}_{\hat{k}}^{\top}{\mathbf{W}_K^{\boldsymbol{x}}}^{(t)}\boldsymbol{b}_{\hat{k}}\rvert\} \Rightarrow\\
        & \dfrac{\|\mathbf{u}\|^2\|\mathbf{q}\|^{2}}{{\lambda} m}  (\sum_{j \in S_{n, k}^{+}}{(\sigma_{S}^{(t)})}_{j}^{n}) (\sum_{j \in S_{n, k}^{-}}{(\sigma_{S}^{(t)})}_{j}^{n}) \geq \Theta(\lambda) \Rightarrow \\
        & \Theta( \dfrac{\|\mathbf{u}\|^2\|\mathbf{q}\|^{2}}{{\lambda}^{2} m} \dfrac{\sum_{j \in S_{n, k}^{+}} e^{\boldsymbol{b}_{\hat{k}}^{\top}{\mathbf{W}_Q^{\boldsymbol{x}}}^{(t)}{\mathbf{W}_K^{\boldsymbol{x}}}^{(t)}\boldsymbol{b}_{\hat{k}}} \sum_{j \in S_{n, k}^{-}} e^{-\boldsymbol{b}_{\hat{k}}^{\top}{\mathbf{W}_Q^{\boldsymbol{x}}}^{(t)}{\mathbf{W}_K^{\boldsymbol{x}}}^{(t)}\boldsymbol{b}_{\hat{k}}}}{(\sum_{j \in S_{n, k}^{+}} e^{\boldsymbol{b}_{\hat{k}}^{\top}{\mathbf{W}_Q^{\boldsymbol{x}}}^{(t)}{\mathbf{W}_K^{\boldsymbol{x}}}^{(t)}\boldsymbol{b}_{\hat{k}}} + \sum_{j \in S_{n, k}^{-}} e^{-\boldsymbol{b}_{\hat{k}}^{\top}{\mathbf{W}_Q^{\boldsymbol{x}}}^{(t)}{\mathbf{W}_K^{\boldsymbol{x}}}^{(t)}\boldsymbol{b}_{\hat{k}}})^{2}} ) \geq 1 \Rightarrow \\
        & \Theta( \dfrac{\|\mathbf{u}\|^2\|\mathbf{q}\|^{2}}{{\lambda}^{2} m} \dfrac{L-1}{( e^{\boldsymbol{b}_{\hat{k}}^{\top}{\mathbf{W}_Q^{\boldsymbol{x}}}^{(t)}{\mathbf{W}_K^{\boldsymbol{x}}}^{(t)}\boldsymbol{b}_{\hat{k}}} + (L-1) e^{-\boldsymbol{b}_{\hat{k}}^{\top}{\mathbf{W}_Q^{\boldsymbol{x}}}^{(t)}{\mathbf{W}_K^{\boldsymbol{x}}}^{(t)}\boldsymbol{b}_{\hat{k}}})^{2}} ) \geq 1 \Rightarrow \\
        & \Theta( \dfrac{(L-1) \|\mathbf{u}\|^2\|\mathbf{q}\|^{2}}{{\lambda}^{2} m} e^{-2\boldsymbol{b}_{\hat{k}}^{\top}{\mathbf{W}_Q^{\boldsymbol{x}}}^{(t)}{\mathbf{W}_K^{\boldsymbol{x}}}^{(t)}\boldsymbol{b}_{\hat{k}}}) \geq 1 \\
        & \Rightarrow \boldsymbol{b}_{\hat{k}}^{\top}{\mathbf{W}_Q^{\boldsymbol{x}}}^{(t)}\boldsymbol{b}_{\hat{k}}, \boldsymbol{b}_{\hat{k}}^{\top}{\mathbf{W}_K^{\boldsymbol{x}}}^{(t)}\boldsymbol{b}_{\hat{k}} \leq \Theta(\|\mathbf{u}\|\sqrt{\dfrac{1}{2} \log(\dfrac{(L-1)\|\mathbf{u}\|^2\|\mathbf{q}\|^{2}}{{\lambda}^{2} m})})
    \end{aligned}
    \]
    Here, the second arrow is by the definition of $\sum_{j \in S_{n, k}^{+}}{(\sigma_{S}^{(t)})}_{j}^{n}$ and Eq.(\ref{eq: attention product qk decomposition}); the third arrow is by our considered extreme case where there is only one demonstration in each of the prompt sample in this all-the-same-concept batch, which is considered for obtaining the upper bound; the forth arrow is by the small $\lambda$ by Condition \ref{Condition}, which denotes $e^{-2\boldsymbol{b}_{\hat{k}}^{\top}{\mathbf{W}_Q^{\boldsymbol{x}}}^{(t)}{\mathbf{W}_K^{\boldsymbol{x}}}^{(t)}\boldsymbol{b}_{\hat{k}}}$ should be the key contributor.\par

    Similar to the claims in Lemma \ref{lem:upper bound of F norm of WOi and its gradient}, here we see that as the $\lambda$ is very small, by the scaled identity initialization of ${\mathbf{W}_Q^{\boldsymbol{x}}}^{(0)},{\mathbf{W}_K^{\boldsymbol{x}}}^{(0)}$, orthogonal relationships of vectors in Lemma \ref{eq:def of a_k, b_k}, as well as the low-noise condition by Condition \ref{Condition}, it's safe to say that as the learning proceed, the scales of $\boldsymbol{b}_{\hat{k}}^{\top}{\mathbf{W}_Q^{\boldsymbol{x}}}^{(t)}\boldsymbol{b}_{\hat{k}}, \boldsymbol{b}_{\hat{k}}^{\top}{\mathbf{W}_K^{\boldsymbol{x}}}^{(t)}\boldsymbol{b}_{\hat{k}}$ would completely dominate $\boldsymbol{a}_{\hat{k}}^{\top}{\mathbf{W}_Q^{\boldsymbol{x}}}^{(t)}\boldsymbol{a}_{\hat{k}}, \boldsymbol{a}_{\hat{k}}^{\top}{\mathbf{W}_K^{\boldsymbol{x}}}^{(t)}\boldsymbol{a}_{\hat{k}}$, as well as $\boldsymbol{a}_{k}^{\top}{\mathbf{W}_X^{\boldsymbol{x}}}^{(t)}\boldsymbol{b}_{\hat{k}}, \boldsymbol{b}_{k}^{\top}{\mathbf{W}_X^{\boldsymbol{x}}}^{(t)}\boldsymbol{a}_{\hat{k}}, \boldsymbol{\nu}_{r}^{\top}{\mathbf{W}_X^{\boldsymbol{x}}}^{(t)}\boldsymbol{\nu}_{r}, {\boldsymbol{u}_{w}^{\perp}}^{\top}{\mathbf{W}_X^{\boldsymbol{x}}}^{(t)}{\boldsymbol{u}_{w}^{\perp}}$, $\forall X\in\{Q, K\}, r \in [K_{2}], w \in [d_{\mathcal{X}}-2K_{1}-K_{2}]$. Collaborating with Lemma \ref{lem: trace and idempotent decomposition}, we have
    \[
    \|{\mathbf{W}_Q^{\boldsymbol{x}}}^{(t)}\|_F^{2}, \|{\mathbf{W}_K^{\boldsymbol{x}}}^{(t)} \|_{F}^{2} \leq \frac{K_{1}}{\|\mathbf{u}\|^{2}} \max\{{(\boldsymbol{b}_{\hat{k}}^{\top}{\mathbf{W}_Q^{\boldsymbol{x}}}^{(t)}\boldsymbol{b}_{\hat{k}})}^{2},{ (\boldsymbol{b}_{\hat{k}}^{\top}{\mathbf{W}_K^{\boldsymbol{x}}}^{(t)}\boldsymbol{b}_{\hat{k}})}^2\} = O(K_{1} \log(\dfrac{(L-1)\|\mathbf{u}\|^2\|\mathbf{q}\|^{2}}{{\lambda}^{2} m})).
    \]
    On the other hand, we see that the maximum gradient F norm on a single batch comes from the maximum changes of the ${\boldsymbol{b}_{\hat{k}}^{\top}{\mathbf{W}_Q^{\boldsymbol{x}}}^{(t)}\boldsymbol{b}_{\hat{k}}}^{2}$ (or ${\boldsymbol{b}_{\hat{k}}^{\top}{\mathbf{W}_K^{\boldsymbol{x}}}^{(t)}\boldsymbol{b}_{\hat{k}}}^{2}$). As we see that the extreme case of the growing is every concept $k \in [K_{1}]$ has been fully learned such that even a batch full of the same concept can not let the corresponding concept's feature grow. In this case, we see that the maximum gradient F norm should be at the order of $\|\lambda {\mathbf{W}_Q^{\boldsymbol{x}}}^{(t)}\|_F$ (or $\|\lambda {\mathbf{W}_K^{\boldsymbol{x}}}^{(t)}\|_F$). Thus
    \[
    \begin{aligned}
    \|\nabla_{{\mathbf{W}_Q^{\boldsymbol{x}}}^{(t)}} L_{\mathcal{B}_{t}}(\Psi^{(t)})\|_{F}^{2}, \|\nabla_{{\mathbf{W}_K^{\boldsymbol{x}}}^{(t)}} L_{\mathcal{B}_{t}}(\Psi^{(t)})\|_{F}^{2} &= O(\lambda^{2} K_{1} \log(\dfrac{(L-1)\|\mathbf{u}\|^2\|\mathbf{q}\|^{2}}{{\lambda}^{2} m})) \\
    &= O(\dfrac{ K_1(L-1)\|\mathbf{u}\|^2\|\mathbf{q}\|^{2}}{m}),
    \end{aligned}
    \]
    where the equality is by $g(x)=\log(x) \leq O(x), x>1$. The proof is completed.

    \begin{remark}
        Worth noting that this upper bound, as well as the upper bound of $\|{\mathbf{W}_{O_{(i, \cdot)}}^{\boldsymbol{y}}}^{(t)} {\boldsymbol{q}_{k}^{\pm}}/{\|\boldsymbol{q}_{k}^{\pm}\|} \|$ in Lemma \ref{lem:upper bound of F norm of WOi and its gradient}, are looser in the order of ${K_{1}}^{-2}$ and ${K_{1}}^{-1}$ compared to those of ${\beta_{Q, k}^{(t)}} = {\beta_{K, k}^{(t)}}$ and $e\beta_{O_{(i,\cdot)}, k}^{(t)}$ in Lemma \ref{lem:regulerizing the models}. This suits the intuition and statistics since in the practical training setting, for $B \geq 1$ we can see that sometimes the samples of a batch all belong to one concept, or sometimes their are not any particular concept in a single batch, especially when $B$ is small. Therefore, unless we have the situation where even when every prompt sample of a batch belong to the same concept the regularization can stuck the growing, there is still chance for that concept's features to be learned. In contrast, the expectation considers every concept's sample appear in every batch scaled by a ``soft weight'' in the order of $\Theta(1/K)$. As the attention 's gradient contain MLP, its order would be $\Theta(1/K^{2})$. Besides, we see that this lemma's result contains the scale of $L-1$, which comes from the extreme case discussion where there is only one demonstration in each prompt sample that share the semantic to those of query. In contrast, when considering the expectation, the number of two opposite semantics is the same, under which the $L/2$ would be eliminated in the numerator and denominator. Last but not least, when estimating the real cases, we have scaled the derivative of $-\ell^{\prime}$ to its maximum $1$, we do so because in real cases due to the imbalanced prompt samples in a single batch, it would be inconvenient to consider it is contributed by severel elements like $e\beta_{O_{(i,\cdot)}, k}^{(t)}, \alpha_{O_{(i,\cdot)}, k}^{(t)}$. This actually indirectly demonstrates the superiority of considering expectations.
    \end{remark}
\end{proof}

\begin{lemma} (Restatement of Proposition \ref{prop: main body tolerance})
    $ \forall t \geq \hat{T}$, when $\| {\Psi^\prime}^{(t)} - \mathbb{E}({\Psi^\prime}^{(t)}) \|_F \leq \nu $ holds, we have $L_{\mathcal{D}^*}^{0-1}({\Psi^\prime}^{(t)}) = L_{\mathcal{D}^*}^{0-1}(\mathbb{E}({\Psi^\prime}^{(t)}))$. Here, $\| {\Psi^\prime} \|_F^2\coloneqq\|\mathbf{W}_{Q}^{\boldsymbol{x}}\|_F^2 + \|\mathbf{W}_{K}^{\boldsymbol{x}}\|_F^2 + \|\mathbf{W}_{O}^{\boldsymbol{y}}\|_F^2$.
\label{lem: appendix tolerance}\end{lemma}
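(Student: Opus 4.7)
Since Lemma~\ref{lem:lem of 0-1 loss} already establishes $L_{\mathcal{D}^{*}}^{0-1}(\mathbb{E}(\Psi^{\prime(t)})) = 0$ for every $t \geq \hat T$, the task reduces to \emph{propagating} this guarantee from $\mathbb{E}[\Psi^\prime]$ to the actual random iterate $\Psi^\prime$ under a Frobenius ball of radius $\nu$. My plan is to verify, prompt-by-prompt for every $S_n \sim \mathcal{D}^*$ with co-concept $k$ and label $e = y_{S_n}$, the sample-wise version of the sufficient inequality in Lemma~\ref{lem: equivalence of expected 0-1 loss convergence}, now expressed in the three actual ingredients: the active-neuron set $\mathcal{U}_{k,n}^{e}(\Psi^\prime)$, the MLP coefficients $\alpha_{O_{(i,\cdot)},k}(\Psi^\prime),\beta_{O_{(i,\cdot)},k}(\Psi^\prime)$, and the attention mass $\sum_{l \in S_{n,k}^{e}} (\sigma_S^{(t)})_l^n(\Psi^\prime)$. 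The two summands defining $\nu$ are naturally tied to two ``no-sign-flip'' sub-conditions (one at the MLP level, one at the attention level), and I will establish each separately before combining.

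\textbf{Step 1 (split) and Step 2 (MLP).} From $\|\Psi^\prime-\mathbb{E}[\Psi^\prime]\|_F^{2}$ decomposing blockwise into $\|\mathbf{W}_{Q}^{\boldsymbol{x}}-\mathbb{E}[\mathbf{W}_{Q}^{\boldsymbol{x}}]\|_F^{2}+\|\mathbf{W}_{K}^{\boldsymbol{x}}-\mathbb{E}[\mathbf{W}_{K}^{\boldsymbol{x}}]\|_F^{2}+\|\mathbf{W}_{O}^{\boldsymbol{y}}-\mathbb{E}[\mathbf{W}_{O}^{\boldsymbol{y}}]\|_F^{2}\leq\nu^{2}$, each block has $F$-norm $\leq\nu$; in particular each row deviation satisfies $\|{\mathbf{W}_{O_{(i,\cdot)}}^{\boldsymbol{y}}} - \mathbb{E}[{\mathbf{W}_{O_{(i,\cdot)}}^{\boldsymbol{y}}}]\|\leq\nu$. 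Applying the idempotent decomposition of Lemma~\ref{lem: mainbody decomposition} and Cauchy--Schwarz, this upgrades to $|\alpha_{O_{(i,\cdot)},k}(\Psi^\prime)-\alpha_{O_{(i,\cdot)},k}(\mathbb{E}[\Psi^\prime])|\leq\nu\|\boldsymbol{c}_k\|$ and similarly for $\beta$, with a total perturbation of the pre-activation $\alpha+(2\sum_l(\sigma_S)_l^n-1)e\beta$ of order $\nu\|\mathbf{q}\|$. The first branch $\nu\leq 2\sqrt{2}\sigma_1/(1+\kappa_{\boldsymbol{y}})$ is calibrated against the MLP initialization scale $\sigma_1\|\mathbf{q}\|$ used in Lemma~\ref{lem:initialization}, so the regularity bounds of Lemma~\ref{lem:induction}--\ref{lem:regulerizing the models} imply the pre-activation of each neuron is perturbed by strictly less than its expected magnitude; hence $\mathcal{U}_{k,n}^{e}(\Psi^\prime)=\mathcal{U}_{k,n}^{e}(\mathbb{E}[\Psi^\prime])$ and the ReLU activation pattern and its sign are both preserved.

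\textbf{Step 3 (attention) and conclusion.} For the softmax argument I expand the bilinear form as
\[
(\mathbf{W}_K\boldsymbol{b}_k)^\top\mathbf{W}_Q\boldsymbol{b}_k-(\mathbb{E}[\mathbf{W}_K]\boldsymbol{b}_k)^\top\mathbb{E}[\mathbf{W}_Q]\boldsymbol{b}_k = \bigl((\mathbf{W}_K-\mathbb{E}[\mathbf{W}_K])\boldsymbol{b}_k\bigr)^\top\mathbf{W}_Q\boldsymbol{b}_k + (\mathbb{E}[\mathbf{W}_K]\boldsymbol{b}_k)^\top(\mathbf{W}_Q-\mathbb{E}[\mathbf{W}_Q])\boldsymbol{b}_k,
\]
and combine the Frobenius bound $\nu$ with Lemma~\ref{lem:upper bound of F norm of WQK and its gradient} to control the first summand and Lemma~\ref{lem:defnition of sigma_S^*} (the attention projection $\mathbb{E}[\mathbf{W}_K]\boldsymbol{b}_k = \beta_{K,k}\boldsymbol{b}_k/\|\boldsymbol{b}_k\|^{2}$ has length $|\beta_{K,k}|/\|\boldsymbol{b}_k\|\geq\underline{\beta_{QK}^{-}}/\|\boldsymbol{b}_k\|$) to control the second. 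Since $\|\boldsymbol{b}_k\|^{2}\leq\|\mathbf{u}\|^{2}/2$, the second branch $\nu\leq\sigma_0(1-\kappa_{\boldsymbol{x}})e^{-\log(5Km/\delta)\sigma_1^{2}\|\mathbf{u}\|^{4}(1+e^{-\sigma_0^{2}\|\mathbf{u}\|^{2}})/(1-e^{-\sigma_0^{2}\|\mathbf{u}\|^{2}})}$ exactly matches $2\underline{\beta_{QK}^{-}}/\|\mathbf{u}\|^{2}$ and forces the actual inner product to stay within a constant factor of its expectation. Pushing through the softmax monotonicity yields $\sum_{l\in S_{n,k}^{e}}(\sigma_S^{(t)})_l^{n}(\Psi^\prime)\geq\sigma_S^{*}$, whence $2\sum_{l}(\sigma_S)_l^{n}-1\geq 2\sigma_S^{*}-1>0$. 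Plugging Steps~2 and~3 into the sufficient inequality of Lemma~\ref{lem: equivalence of expected 0-1 loss convergence} preserves the margin $\kappa/2$ that was established at the expectation level in Lemma~\ref{lem:bound of yf}, so the sufficient condition still holds for $\Psi^\prime$, giving $L_{\mathcal{D}^*}^{0-1}(\Psi^\prime)=0=L_{\mathcal{D}^*}^{0-1}(\mathbb{E}[\Psi^\prime])$. \textbf{Main obstacle.} Handling the softmax and ReLU nonlinearities \emph{simultaneously}: a naive global Lipschitz bound on $f(\cdot;\Psi)$ scales with the MLP width $m$ and would force an unusable $\nu$. The route I take is to push the perturbation through the per-neuron/per-feature projection coefficients whose scales are pinned by Lemma~\ref{lem:regulerizing the models}, and to split $\nu$ into one MLP-sign branch (tied to $\sigma_1$) and one attention-margin branch (tied to $\sigma_0$ via $\underline{\beta_{QK}^{-}}$); this separation makes both ``no sign flip'' checks decoupled and avoids the width-dependent Lipschitz blowup.
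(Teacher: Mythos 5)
Your plan follows the same two-branch strategy as the paper's proof of Proposition \ref{prop: main body tolerance}: the first component of $\nu$ is calibrated to the MLP margin scale $\sigma_1\|\mathbf{q}\|$ (via $\kappa/2 \geq \sqrt{2}\sigma_1\|\mathbf{q}\|$ and $\|\boldsymbol{c}_k\|^2 \leq \tfrac{1+\kappa_{\boldsymbol{y}}}{2}\|\mathbf{q}\|^2$), the second to $2\underline{\beta_{QK}^{-}}/\|\mathbf{u}\|^2$, and the translation between Frobenius perturbations and coefficient perturbations goes through the projection decomposition of Lemma \ref{lem: mainbody decomposition} together with Lemma \ref{lem: trace and idempotent decomposition}, exactly as in the paper. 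However, one step fails as written: in Step 2 you assert that ``the pre-activation of each neuron is perturbed by strictly less than its expected magnitude,'' hence $\mathcal{U}_{k,n}^{e}(\Psi^\prime)=\mathcal{U}_{k,n}^{e}(\mathbb{E}[\Psi^\prime])$ and the full ReLU pattern is preserved. This cannot hold uniformly over neurons: by Lemma \ref{lem: evolution scenario}, the neurons in $(\mathcal{W}_{k,n}^{e}(t)-\mathcal{U}_{k,n}^{e}(t))-\mathcal{U}_{k,n}^{-e}(t)$ have expected pre-activations that decrease monotonically and cross zero (they eventually deactivate), so near the crossing their expected magnitude is below any fixed threshold, and a perturbation admissible under the first branch of $\nu$ (of size up to $\Theta(\sigma_1\|\mathbf{q}\|)$ at the pre-activation level) can flip their indicators; the same is true of neurons sitting at the changing margin described in point 4 of that lemma. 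So exact preservation of the activation sets is unjustified, and since the sufficient inequality of Lemma \ref{lem: equivalence of expected 0-1 loss convergence} is phrased in terms of those sets, your conclusion does not follow as stated. A smaller overstatement: the attention branch only guarantees that the perturbed bilinear form $(\mathbf{W}_K^{\boldsymbol{x}}\boldsymbol{b}_k)^\top\mathbf{W}_Q^{\boldsymbol{x}}\boldsymbol{b}_k$ stays positive (the expected projections are only lower bounded by $\underline{\beta_{QK}^{-}}$), not that the perturbed correct attention mass is still $\geq\sigma_S^{*}$; fortunately positivity, i.e.\ $2\sum_{l\in S_{n,k}^{e}}(\sigma_S^{(t)})_l^{n}-1>0$, is all that is needed.

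The paper avoids the per-neuron issue by never asking for pattern preservation: it computes the minimum admissible Frobenius disparity of $\mathbf{W}_O^{\boldsymbol{y}}$ that could drag the aggregate margin (which satisfies $\mathbb{E}[\mathbf{A}_t^{k,e}]\geq\kappa$, hence an output margin of $\kappa/2$ by Lemma \ref{lem:bound of yf}) down to zero, yielding the $2\sqrt{2}\sigma_1/(1+\kappa_{\boldsymbol{y}})$ branch, and, in the worst case where the $\alpha$-contributions vanish, the minimum disparity of $\mathbf{W}_Q^{\boldsymbol{x}},\mathbf{W}_K^{\boldsymbol{x}}$ that could destroy the sign of the $\boldsymbol{b}_k$-projections (a change of $\underline{\beta_{QK}^{-}}$, hence Frobenius cost at least $2\underline{\beta_{QK}^{-}}/\|\mathbf{u}\|^2$). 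Your argument is repairable in the same spirit: drop the exact-pattern claim, use the $1$-Lipschitzness of $\sigma_R$ so that each neuron's contribution changes by at most its pre-activation change regardless of indicator flips, aggregate the $\mathbf{r}_i$-weighted contributions, and compare against the $\kappa/2$ margin; with that repair, your Steps 2--3 reproduce the paper's proof rather than an alternative to it.
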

\begin{proof}
    
    By Lemma \ref{lem:lem of 0-1 loss}, we see that our convergence of 0-1 loss is based on the intermediate result that $\mathbb{E}[\mathbf{A}_{t}^{k,e}] \geq \kappa$, which will ensure that $\mathbb{E}[y_{S_n} \cdot f (\mathbf{E}(S_n), \mathbb{E}(\Psi^{\hat{T}}))] \geq \kappa/2$. Therefore, when conditioned on $\mathbb{E}[{\mathbf{W}_{Q}^{\boldsymbol{x}}}^{(t)}], \mathbb{E}[{\mathbf{W}_{K}^{\boldsymbol{x}}}^{(t)}]$, a minimum admissible disparity between ${\mathbf{W}_{O}^{\boldsymbol{y}}}^{(t)}$ and $\mathbb{E}[{\mathbf{W}_{O}^{\boldsymbol{y}}}^{(t)}]$ corresponds the the minimum admissible disparity between ${\mathbf{W}_{O_{(i, \cdot)}}^{\boldsymbol{y}}}^{(t)} \boldsymbol{c}_{{k}}$, ${\mathbf{W}_{O_{(i, \cdot)}}^{\boldsymbol{y}}}^{(t)} \boldsymbol{d}_{{k}}$ and $\alpha_{O_{(i, \cdot)}, {k}}^{(t)}, \beta_{O_{(i, \cdot)}, {k}}^{(t)}$, where would consequently cause $\mathbb{E}_{\mathcal{V}_{k}^{e}}[\mathbf{A}_{t}^{k,e}] \leq \kappa/2$ that could have potential to deteriorate the 0-1 loss. Given that $\kappa/2 \geq \sqrt{2} \sigma_{1} \|\mathbf{q}\|$ by Lemma \ref{lem:initial values}, the decomposition in Eq.(\ref{eq: decomposition of WOi}) as well as Lemma \ref{lem: trace and idempotent decomposition}, we see that for some $ k \in [K_{1}]$, the minimum admissible disparity can be written as 
    \[
    \Theta(\|(\sqrt{2} \sigma_{1} \|\mathbf{q}\|)^{2} {(\frac{{\boldsymbol{c}_k}^{\top}}{\|\boldsymbol{c}_k\|^2})}^{\top} \frac{{\boldsymbol{c}_k}^{\top}}{\|\boldsymbol{c}_k\|^2}\|_{F}) = \Theta(\sqrt{2} \sigma_{1} \|( \|\mathbf{q}\|)^{2} {(\frac{{\boldsymbol{c}_k}^{\top}}{\|\boldsymbol{c}_k\|^2})}^{\top} \frac{{\boldsymbol{c}_k}^{\top}}{\|\boldsymbol{c}_k\|^2}\|_{F}) \geq \Theta(2 \sqrt{2}/(1+\kappa_{\boldsymbol{y}}) \sigma_{1}).
    \]
    Therefore, we see that when conditioned on $\mathbb{E}[{\mathbf{W}_{Q}^{\boldsymbol{x}}}^{(t)}], \mathbb{E}[{\mathbf{W}_{K}^{\boldsymbol{x}}}^{(t)}]$, the minimum admissible disparity between ${\mathbf{W}_{O}^{\boldsymbol{y}}}^{(t)}$ and $\mathbb{E}[{\mathbf{W}_{O}^{\boldsymbol{y}}}^{(t)}]$ to not worsen the 0-1 loss is $\Theta(2 \sqrt{2}/(1+\kappa_{\boldsymbol{y}}) \sigma_{1})$.\par

    On the other hand, when conditioned on $\mathbb{E}[{\mathbf{W}_{O}^{\boldsymbol{y}}}^{(t)}], t\geq T^{\prime}$, we compute the minimum admissible disparity between ${\mathbf{W}_{Q}^{\boldsymbol{x}}}^{(T^{\prime})}$, ${\mathbf{W}_{K}^{\boldsymbol{x}}}^{(T^{\prime})}$ and $\mathbb{E}[{\mathbf{W}_{Q}^{\boldsymbol{x}}}^{(T^{\prime})}]=\mathbb{E}[{\mathbf{W}_{K}^{\boldsymbol{x}}}^{\hat{T}}]$. Considering all the activated neurons, when $\sum_{i\in \mathcal{W}_{k,n}^{e}(t)}\mathbb{E}[\mathbf{r}_{i}(2\sum_{l \in S_{n, k}^{e}}{(\sigma_{S}^{(\hat{T})})}_{l}^{n} - 1) \beta_{O_{(i, \cdot)}, {k}}^{(\hat{T})}]=0$, we should have $\sum_{i\in \mathcal{W}_{k,n}^{e}(t)}\mathbb{E}[\mathbf{r}_{i}\alpha_{O_{(i, \cdot)}, {k}}^{(\hat{T})}]\geq0$ otherwise some of the neurons must be deactivated, which is contradicted by the definitions of $\mathcal{W}_{k,n}^{e}(t)$. In this case we can magnify the impact of $\sum_{i\in \mathcal{W}_{k,n}^{e}(t)}\mathbb{E}[\mathbf{r}_{i}(2\sum_{l \in S_{n, k}^{e}}{(\sigma_{S}^{(\hat{T})})}_{l}^{n} - 1) \beta_{O_{(i, \cdot)}, {k}}^{(\hat{T})}] $ by considering $\sum_{i\in \mathcal{W}_{k,n}^{e}(t)}\mathbb{E}[\mathbf{r}_{i}\alpha_{O_{(i, \cdot)}, {k}}^{(\hat{T})}]=0$. As such, the minimum admissible disparity would be the case where $\boldsymbol{b}_{\hat{k}}^{\top}{\mathbf{W}_Q^{\boldsymbol{x}}}^{(\hat{T})}\boldsymbol{b}_{{k}}$ and $\boldsymbol{b}_{\hat{k}}^{\top}{\mathbf{W}_K^{\boldsymbol{x}}}^{(\hat{T})}\boldsymbol{b}_{{k}}$ both differ from $\beta_{Q, k}^{(\hat{T})}=\beta_{K, k}^{(\hat{T})}$ by the amount of $\underline{\beta_{QK}^{-}}$. Recall the definition of $\underline{\beta_{QK}^{-}}$ in Lemma \ref{lem:defnition of sigma_S^*}, and collaborating with Lemma \ref{lem: trace and idempotent decomposition}, we have the minimum admissible disparity be ${\sigma_{0}(1-\kappa_{\boldsymbol{x}})} e^{{- \log(5Km/\delta)  \frac{\sigma_{1}^{2}\| \mathbf{u} \|^{4} (1+e^{-\sigma_{0}^{2} \|\mathbf{u}\|^{2}})}{(1-e^{-\sigma_{0}^{2} \|\mathbf{u}\|^{2}})} }}$. Recall 
    \[
    \nu \coloneqq \min\{2 \sqrt{2} \sigma_{1}/(1+\kappa_{\boldsymbol{y}}), {\sigma_{0}(1-\kappa_{\boldsymbol{x}})} e^{{- \log(5Km/\delta)  \frac{\sigma_{1}^{2}\| \mathbf{u} \|^{4} (1+e^{-\sigma_{0}^{2} \|\mathbf{u}\|^{2}})}{(1-e^{-\sigma_{0}^{2} \|\mathbf{u}\|^{2}})} }} \},
    \]
    the proof is completed.

\end{proof}

\begin{lemma}
    For $t \in \{1, \cdots, T\}$, for $\mathbf{W}  \in \{{\mathbf{W}_{Q}^{\boldsymbol{x}}}, {\mathbf{W}_{K}^{\boldsymbol{x}}}, {\mathbf{W}_{O}^{\boldsymbol{y}}}\}$ and $X \in \{Q, K, O\}$ it follows that
    \begin{enumerate}
        \item $\|{\mathbf{W}}^{(t+1)} - {{\mathbf{W}_{t}}}^{(t+1)}\|_F \leq \Theta(\dfrac{K_{1}^{1/2}\|\mathbf{q}\|((L-1)^{1/2}\|\mathbf{u}\|+1)}{m^{1/2}} \eta_t)$, 
        \item $\|{\mathbf{W}}^{(s+1)} - {{\mathbf{W}_{t}}}^{(s+1)}\|_F \leq (1 - \eta_s \lambda)\| {\mathbf{W}}^{(s)} - {{\mathbf{W}_{t}}}^{(s)}\|_F, \forall s \geq t+1$,
        \item $\sum_{t=0}^{T} \| D_{X}^{t} \|_{\infty}^2 \leq   \Theta(\dfrac{K_{1}\|\mathbf{q}\|^{2}((L-1)\|\mathbf{u}\|^{2}+1)}{m\lambda^2 (\gamma + T)})$.
    \end{enumerate}
\label{lem: martingale essential bound}\end{lemma}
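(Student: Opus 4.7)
The plan is to execute the three parts sequentially, following the stability-of-SGD framework in \cite{Hardt2016stability,nitanda2019stochastic} adapted to our transformer setting. Parts 1 and 2 are standard one-step perturbation arguments, while Part 3 is an algebraic summation that exploits the specific form of $\eta_t = 2/(\lambda(\gamma+t))$.

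\emph{Part 1.} By construction ${\mathbf{W}_t}^{(\tau)} = \mathbf{W}^{(\tau)}$ for every $\tau \leq t$, since the two sequences share the batches $\mathcal{B}_0,\ldots,\mathcal{B}_{t-1}$ and differ only at step $t$ by replacing $\mathcal{B}_t$ with the independent copy $\mathcal{B}_t'$. Hence at step $t+1$,
\[
\mathbf{W}^{(t+1)} - {\mathbf{W}_t}^{(t+1)} = -\eta_t\bigl(\nabla_{\mathbf{W}}L_{\mathcal{B}_t}(\Psi'^{(t)}) - \nabla_{\mathbf{W}}L_{\mathcal{B}_t'}(\Psi'^{(t)})\bigr).
\]
I would then invoke Lemma~\ref{lem:upper bound of F norm of WOi and its gradient} (giving $\|\nabla_{\mathbf{W}_O^{\boldsymbol{y}}}L\|_F^2 = O(K_1\|\mathbf{q}\|^2/m)$) and Lemma~\ref{lem:upper bound of F norm of WQK and its gradient} (giving $\|\nabla_{\mathbf{W}_X^{\boldsymbol{x}}}L\|_F^2 = O(K_1(L-1)\|\mathbf{u}\|^2\|\mathbf{q}\|^2/m)$ for $X\in\{Q,K\}$), combined with the triangle inequality, to conclude the stated $\Theta(\eta_t K_1^{1/2}\|\mathbf{q}\|((L-1)^{1/2}\|\mathbf{u}\|+1)/m^{1/2})$ bound.

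\emph{Part 2.} For $s\geq t+1$, both sequences are driven by the \emph{same} batch $\mathcal{B}_s$, so
\[
\mathbf{W}^{(s+1)}-{\mathbf{W}_t}^{(s+1)} = (\mathbf{W}^{(s)}-{\mathbf{W}_t}^{(s)}) - \eta_s\bigl(\nabla_{\mathbf{W}}L_{\mathcal{B}_s}(\Psi'^{(s)}) - \nabla_{\mathbf{W}}L_{\mathcal{B}_s}(\Psi_t'^{(s)})\bigr).
\]
I would split the gradient into the regularization part $\lambda \mathbf{W}$ and the loss part. The regularization contributes a clean factor $(1-\eta_s\lambda)$ on $\mathbf{W}^{(s)}-{\mathbf{W}_t}^{(s)}$, and the remaining loss-gradient difference is handled by a smoothness argument: Condition~\ref{Condition} together with the uniform bounds on the iterates from Lemmas~\ref{lem:upper bound of F norm of WOi and its gradient}--\ref{lem:upper bound of F norm of WQK and its gradient} confine the trajectory to a region where the Hessian of the loss part is uniformly bounded by a small multiple of $\lambda$, so that the loss-gradient perturbation does not disturb the $(1-\eta_s\lambda)$ contraction when $\eta_s$ is small enough (guaranteed by the $\gamma$ condition). \textbf{This is the main obstacle}: the transformer loss is highly non-convex (softmax-attention composed with ReLU-MLP and cross-entropy), so establishing an effective co-coercivity/non-expansiveness of $I-\eta_s\nabla L_{\mathcal{B}_s}$ requires carefully tracking the activation patterns $\mathds{1}_{O_{(i)}}^n$, the attention scores $\sigma_S^{(s)}$, and the Hessian along the trajectory; I would use the already-established regularity of the coefficients (Lemma~\ref{lem:regulerizing the models}) to cap the effective smoothness constant.

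\emph{Part 3.} Iterating Part 2 from $t+1$ to $T$ and inserting Part 1 gives
\[
\|\mathbf{W}^{(T+1)}-{\mathbf{W}_t}^{(T+1)}\|_F \leq \Theta\!\left(\tfrac{K_1^{1/2}\|\mathbf{q}\|((L-1)^{1/2}\|\mathbf{u}\|+1)}{m^{1/2}}\right)\eta_t\prod_{s=t+1}^{T}(1-\eta_s\lambda).
\]
With $\eta_s\lambda = 2/(\gamma+s)$, the product telescopes to $\frac{(\gamma+t)(\gamma+t+1)}{(\gamma+T)(\gamma+T+1)}$, and $\eta_t = 2/(\lambda(\gamma+t))$, so the squared bound on $\|D_X^t\|_\infty$ becomes proportional to $(\gamma+t)^2/\bigl(\lambda^2(\gamma+T)^4\bigr)$ times the Part-1 constants. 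Summing $\sum_{t=0}^{T}$ of $(\gamma+t)^2$ gives $\Theta((\gamma+T)^3)$, leaving the advertised $\Theta\!\bigl(K_1\|\mathbf{q}\|^2((L-1)\|\mathbf{u}\|^2+1)/(m\lambda^2(\gamma+T))\bigr)$ bound. Minor routine calculations (absorbing constants and handling the essential-supremum switch via the conditional-expectation inequality stated above the lemma) finish the claim.
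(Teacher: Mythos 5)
Your Parts 1 and 3 follow the paper's own route almost verbatim: the one-step bound comes from $\mathbf{W}^{(t)}={\mathbf{W}_t}^{(t)}$ plus the gradient-norm bounds of Lemmas \ref{lem:upper bound of F norm of WOi and its gradient} and \ref{lem:upper bound of F norm of WQK and its gradient}, and the telescoping of $\prod_{s=t+1}^{T}(1-\eta_s\lambda)$ with $\eta_s\lambda=2/(\gamma+s)$ followed by summation gives the same $\Theta\bigl(K_1\|\mathbf{q}\|^2((L-1)\|\mathbf{u}\|^2+1)/(m\lambda^2(\gamma+T))\bigr)$ order (your intermediate product $\tfrac{(\gamma+t)(\gamma+t+1)}{(\gamma+T)(\gamma+T+1)}$ is off by an index shift from $\tfrac{(\gamma+t-1)(\gamma+t)}{(\gamma+T-1)(\gamma+T)}$, but this is immaterial).

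The genuine gap is in your Part 2. You propose to get the contraction by splitting off the $\lambda\mathbf{W}$ term and arguing that the Hessian of the data-fit part of the loss is ``uniformly bounded by a small multiple of $\lambda$'' along the trajectory. Two problems. First, that bound is neither established in the paper nor plausible: the paper's own step-size requirement treats the loss-smoothness constant as $L_{\text{Logist}}=1$, i.e.\ $\eta_0\le 1/(L_{\text{Logist}}+\lambda)$, and nothing in Lemma \ref{lem:regulerizing the models} or Condition \ref{Condition} forces the curvature of the softmax-attention/ReLU/cross-entropy composite to shrink to the $O(\lambda)$ scale. Second, even granting a Hessian bound $c\lambda$ with $c\in(0,1)$, the perturbation recursion only yields $\|\mathbf{W}^{(s+1)}-{\mathbf{W}_t}^{(s+1)}\|_F\le(1-\eta_s\lambda(1-c))\|\mathbf{W}^{(s)}-{\mathbf{W}_t}^{(s)}\|_F$, which is strictly weaker than the stated factor $(1-\eta_s\lambda)$; Part 2 as written would not be proven (the final Part-3 order happens to survive such a degradation, but the lemma claims the exact contraction). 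The paper instead follows the Nitanda--Suzuki stability argument: it expands $\|(1-\eta_s\lambda)(\mathbf{W}^{(s)}-{\mathbf{W}_t}^{(s)})-\eta_s(\nabla L-\nabla L')\|_F^2$ and uses the co-coercivity inequality $\langle\nabla L-\nabla L',\,\mathbf{W}-\mathbf{W}'\rangle\ge L_{\text{Logist}}^{-1}\|\nabla L-\nabla L'\|_F^2$ together with $\eta_s\le 1/(L_{\text{Logist}}+\lambda)$, so the cross term absorbs the $\eta_s^2\|\nabla L-\nabla L'\|_F^2$ term entirely and the clean $(1-\eta_s\lambda)^2$ contraction of the squared norm falls out; the loss-gradient difference is cancelled through the inner product rather than required to be $\lambda$-small. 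If you want to repair your Part 2, you should either justify this co-coercivity-type inequality along the trajectory (this is what the cited exponential-convergence literature relies on) or accept a relaxed contraction factor and restate Part 2 accordingly.
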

\begin{proof}
    We provide the proof by extending the techniques in \cite{nitanda2019stochastic, pillaud2018exponential, shingo2021randomfeature} to Hilbert-Schmidt space, whose inner product is defined by trace. First we note that $\eta_{0}= \frac{2}{\gamma + 1} \leq \min \{ 1/(L_{\text{Logist}}+\lambda), 1/2\lambda\}$, where $L_{\text{Logist}}$ is the $L$-smooth Lipschitz constant of cross-entropy loss $\ell(\cdot)$, which is $1$. The first statement can be shown as follows. Since by definition we see that ${\mathbf{W}}^{(t)}={{\mathbf{W}_{t}}}^{(t)}$, we only need to check the maximum disparity of the gradient in a single iteration update, then by Lemma \ref{lem:upper bound of F norm of WOi and its gradient} and Lemma \ref{lem:upper bound of F norm of WQK and its gradient} we readily obtain the results.\par
    For the second statement, following the proof in \cite{nesterov2013introductory, nitanda2019stochastic}, we see that the Lipschitz smoothness of cross-entropy loss denotes that
    \begin{equation}
    \langle \nabla_{{\mathbf{W}}} L_{\mathcal{B}}(\Psi) - \nabla_{{\mathbf{W}^{\prime}}} L_{\mathcal{B}}(\Psi), {\mathbf{W}} - {\mathbf{W}^{\prime}}\rangle \geq \dfrac{1}{L_{\text{Logist}}} \| \nabla_{{\mathbf{W}}} L_{\mathcal{B}}(\Psi) - \nabla_{{\mathbf{W}^{\prime}}} L_{\mathcal{B}}(\Psi)\|_{F}^{2}.
    \label{eq: Lipschitz}\end{equation}
    Then we have that for $s \geq t+1$,
    $$
    \begin{aligned}
    \|{\mathbf{W}}^{(s+1)} - {{\mathbf{W}_{t}}}^{(s+1)}\|_{F}^2 & =\left\|\left(1-\eta_s \lambda\right)\left({{\mathbf{W}}^{(s)} - {{\mathbf{W}_{t}}}^{(s)}}\right)-\eta_s\left(\partial_g l\left(g_s, Z_s\right)-\partial_g l\left(g_s^t, Z_s\right)\right)\right\|_{F}^2 \\
    & =(1-\eta_s \lambda)^2\left\|{{\mathbf{W}}^{(s)} - {{\mathbf{W}_{t}}}^{(s)}}\right\|_{F}^2-2 \eta_s\left(1-\eta_s \lambda\right) \cdot\\
    &\phantom{ =(}\left\langle \nabla_{{\mathbf{W}}^{(s)}} L_{\mathcal{B}_{s}}({\Psi}^{s}) - \nabla_{{{{\mathbf{W}_{t}}}^{(s)}}} L_{\mathcal{B}_{s}}({\Psi}^{s}), {{\mathbf{W}}^{(s)} - {{\mathbf{W}_{t}}}^{(s)}}\right\rangle \\
    &\phantom{ =(} +\eta_s^2\| \nabla_{{\mathbf{W}}^{(s)}} L_{\mathcal{B}_{s}}({\Psi}^{s}) - \nabla_{{{{\mathbf{W}_{t}}}^{(s)}}} L_{\mathcal{B}_{s}}({\Psi}^{s})\|_{F}^2 \\
    & \leq\left(1-\eta_s \lambda\right)^2\left\|{{\mathbf{W}}^{(s)} - {{\mathbf{W}_{t}}}^{(s)}}\right\|_{F}^2-\eta_s\left(\frac{1}{L_{\text{Logist}}}-\eta_s\right)\cdot\\
    &\phantom{ =(}\left\| \nabla_{{\mathbf{W}}^{(s)}} L_{\mathcal{B}_{s}}({\Psi}^{s}) - \nabla_{{{{\mathbf{W}_{t}}}^{(s)}}} L_{\mathcal{B}_{s}}({\Psi}^{s})\right\|_{F}^2 \\
    & \leq\left(1-\eta_s \lambda\right)^2\left\|{{\mathbf{W}}^{(s)} - {{\mathbf{W}_{t}}}^{(s)}}\right\|_{F}^2
    \end{aligned}
    $$
    where we utilize the Eq. (\ref{eq: Lipschitz}) and conditions on learning rates. Utilizing this statement, the stable property of stochastic gradient descent has been shown.
    Again following the techniques in \cite{nitanda2019stochastic, pillaud2018exponential, shingo2021randomfeature}, we now obtain the bound: for $t \in\{1, \ldots, T\}$,
    \begin{equation}
    \left\|{\mathbf{W}}^{(T+1)} - {{\mathbf{W}_{t}}}^{(T+1)}\right\|_{F} \leq \Theta(\dfrac{K_{1}^{1/2}\|\mathbf{q}\|((L-1)^{1/2}\|\mathbf{u}\|+1)}{m^{1/2}} \eta_t) \prod_{s=t+1}^T\left(1-\eta_s \lambda\right) .
    \label{eq: W's 1st bound}\end{equation}
    
    From the following inequality,
    $$
    \prod_{s=t+1}^T\left(1-\eta_s \lambda\right)=\prod_{s=t+1}^T \frac{\gamma+s-2}{\gamma+s}<\frac{\gamma+t}{\gamma+T}
    $$
    where the last inequality hold clearly by expanding the product, the right hand side of the Eq.(\ref{eq: W's 1st bound}) is upper bounded as follows
    $$
    \begin{aligned}
    \Theta(\dfrac{K_{1}^{{\frac{1}{2}}}\|\mathbf{q}\|((L-1)^{{\frac{1}{2}}}\|\mathbf{u}\|+1)}{m^{{\frac{1}{2}}}})  \eta_t \prod_{s=t+1}^T\left(1-\eta_s \lambda\right)& \leq \Theta(\frac{K_{1}^{{\frac{1}{2}}}\|\mathbf{q}\|((L-1)^{{\frac{1}{2}}}\|\mathbf{u}\|+1)}{m^{{\frac{1}{2}}}})  \frac{\eta_t(\gamma+t)}{\gamma+T}\\
    &=\frac{\Theta(2\dfrac{K_{1}^{{\frac{1}{2}}}\|\mathbf{q}\|((L-1)^{{\frac{1}{2}}}\|\mathbf{u}\|+1)}{m^{{\frac{1}{2}}}})}{\lambda(\gamma+T)} .
    \end{aligned}
    $$
    
    We finally obtain the desired bound:
    $$
    \sum_{t=0}^T\left\|D_t\right\|_{\infty}^2 \leq \sum_{t=0}^T\Theta(\dfrac{K_{1}\|\mathbf{q}\|^{2}((L-1)\|\mathbf{u}\|^{2}+1)}{m\lambda^2 (\gamma + T )^{2}}) \leq \Theta(\dfrac{K_{1}\|\mathbf{q}\|^{2}((L-1)\|\mathbf{u}\|^{2}+1)}{m\lambda^2 (\gamma + T)}).
    $$
\end{proof}
\begin{remark}
     Utilizing this lemma, the exponential convergence over the 0-1 loss is readily obtained.
\end{remark}

\section{Out-of-Distribution Generalization}\label{appendix section: OOD}

\begin{lemma}
    \textbf{OOD 1: Master of Polysemy of Words}. During testing, The prompt length $L^{*}$ can be any positive integer. The $\mathcal{D}_{\boldsymbol{z}}^{*}$ can have any new probability distribution that differs from the training distribution, satisfying that each prompt has at least one co-concept $k\in[K_{1}]$, with equal chance to have positive or negative semantic labels. Additionally, a single $(\boldsymbol{x}, \boldsymbol{y}) \sim \mathcal{D}_{\boldsymbol{x}}^{*} \times \mathcal{D}{\boldsymbol{y}}^{*}$ pair can appear in at least $\|\boldsymbol{z}\|_{0}$ concept-specific prompts/tasks. Importantly, all of the tasks in this new distribution $\mathcal{D}^{*}$ enjoy Bayes-Optimal test error $L_{\mathcal{D}^{*}}^{0-1}(\Psi^{(T^{*})}) \leq \varepsilon$.
\end{lemma}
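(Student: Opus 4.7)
The plan is to reduce the OOD analysis to the in-distribution framework established in Sections~\ref{sec: convergence of expectation} and~\ref{sec: exponential convergence of 0-1 loss}. I will first verify that the expected prediction $\underset{S \sim \mathcal{D}_S^{*}}{\mathbb{E}}[y_{S} \cdot f(\mathbf{E}(S); \mathbb{E}(\Psi^{(T^{*})}))]$ is uniformly bounded below by a positive margin on the shifted distribution, then show that the Doob-martingale concentration around this expectation still yields tail bounds of order $O(\nu)$, so that Proposition~\ref{prop: main body tolerance} delivers vanishing 0-1 loss.

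First, I would invoke the idempotent decomposition of Lemma~\ref{lem: mainbody decomposition} together with the asymptotic coefficient estimates of Lemma~\ref{lem:regulerizing the models} to evaluate the attention logits on an arbitrary prompt $S \sim \mathcal{D}_S^{*}$ of length $L^{*}$ with co-concept $k$. By the cross-concept orthogonality in Definition~\ref{Def:Word Model}, any concept $k' \neq k$ contributes nothing to $(\mathbf{W}_K \mathbf{h}_l)^\top \mathbf{W}_Q \mathbf{h}_{L^{*}+1}$ aside from Gaussian cross-terms that Condition~\ref{Condition} suppresses. Hence the softmax logits are driven by $\beta_{Q,k}^{(T^{*})}\beta_{K,k}^{(T^{*})}/\|\boldsymbol{b}_k\|^{2}$, and the correct-direction attention mass stays at the $\Theta\bigl(1/(1+\lambda K_{1}\|\mathbf{u}\|^{-2}\log^{-1}(\cdot))\bigr)$ level of Lemma~\ref{lem:regulerizing the models} \emph{independently of} $L^{*}$, because within each label class it depends only on the ratio $e^{\beta_{Q,k}\beta_{K,k}/\|\boldsymbol{b}_k\|^{2}}:e^{-\beta_{Q,k}\beta_{K,k}/\|\boldsymbol{b}_k\|^{2}}$.

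Second, for the MLP readout I would expand $\operatorname{attn}(\mathbf{H};\Psi^{(T^{*})}) \approx \boldsymbol{c}_{k} + y_{S}(2\sigma_{S}^{*}-1)\boldsymbol{d}_{k} + \Xi$, where $\Xi$ collects label-balanced contributions from other active concepts in $\boldsymbol{z}^{*}$ and from Gaussian noise. Polysemy enters only through $\Xi$: each co-active concept $k'$ contributes $\pm\boldsymbol{q}_{k'}^{\pm}$ with equal probability by the label-balance hypothesis on $\mathcal{D}_{\boldsymbol{z}}^{*}$, which vanishes in expectation. After multiplying by ${\mathbf{W}_O^{\boldsymbol{y}}}^{(T^{*})}$, whose rows along $\boldsymbol{q}_{k'}^{e'}$ for $k' \neq k$ remain controlled by Eq.~(\ref{eq: decomposition of WOi}) and Lemma~\ref{lem: evolution scenario}, one recovers the sufficient inequality Eq.~(\ref{eq: main body sufficient inequality for 0-1 loss}) of Lemma~\ref{lem: equivalence of expected 0-1 loss convergence}, giving $L_{\mathcal{D}_S^{*}}^{0-1}(\mathbb{E}(\Psi^{(T^{*})}))=0$.

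Third, to transfer from the expectation to $\Psi^{(T^{*})}$ itself, I would replay the Doob-martingale argument of Section~\ref{sec: exponential convergence of 0-1 loss}: the essential-supremum bounds on $\|D_{X}^{t}\|_{\infty}$ from Lemma~\ref{lem: martingale essential bound} are properties of the training dynamics and thus unchanged, so the cumulative variance remains $O\bigl(K_{1}\|\mathbf{q}\|^{2}((L-1)\|\mathbf{u}\|^{2}+1)/(m\lambda^{2}(\gamma+T))\bigr)$. Since the margin computed in Steps~1--2 exceeds the same $\nu$ from Theorem~\ref{thm: mainbody}, Proposition~\ref{prop: main body tolerance} yields $L_{\mathcal{D}_S^{*}}^{0-1}(\Psi^{(T^{*})}) \leq \varepsilon$. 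The hardest step will be the polysemy/cross-concept control: showing that even when a single $(\boldsymbol{x},\boldsymbol{y})$ pair simultaneously encodes $\|\boldsymbol{z}^{*}\|_{0}$ concept features, the interference on both the attention softmax and the MLP readout stays $o(\nu)$. This requires carefully bounding inner products between the learned direction $\mathbf{W}_K^{(T^{*})}\boldsymbol{\mu}_{k}^{e}$ and off-concept components using cross-concept orthogonality together with the diagonal scale $\alpha_{Q/K,k'}^{(T^{*})}=O(\sigma_{0}\|\mathbf{u}\|^{2})$ of Lemma~\ref{lem:regulerizing the models}, and exploiting the label-balance condition to cancel off-concept MLP contributions in expectation rather than relying on pointwise smallness; once this cancellation is quantified, the remainder is a direct translation of the in-distribution argument.
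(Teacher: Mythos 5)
Your proposal takes essentially the same route as the paper's own proof of the OOD result: the paper likewise reduces the claim to $L_{\mathcal{D}^{*}}^{0-1}(\mathbb{E}[{\Psi^{\prime}}^{(T^{*})}])=0$ via Proposition~\ref{prop: main body tolerance} (the martingale/variance bound being a property of the training dynamics and hence unchanged), then checks that on the shifted prompt distribution the correct attention mass stays far above $1/2$ -- playing the trained scale of $\beta_{Q,k}^{(T^{*})}\beta_{K,k}^{(T^{*})}$ from Lemmas~\ref{lem:regulerizing the models} and \ref{lem: appendix final scale of coefficient} against off-co-concept interference of order $e^{(K-1)\sigma_{0}^{2}\|\mathbf{u}\|^{2}}$ -- and that the MLP margin in Eq.~(\ref{eq: main body sufficient inequality for 0-1 loss}) is $\Theta(\kappa)$, exactly your Steps 1--3. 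The only slip is your early description of off-concept contributions as mere ``Gaussian cross-terms''; as you yourself correct in the final paragraph, they enter through the learned $\alpha_{Q/K,k'}^{(T^{*})}=O(\sigma_{0}\|\mathbf{u}\|^{2})$ directions and through label-balanced semantic components cancelled in expectation, which is precisely how the paper controls them.
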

This lemma demonstrate the strong OOD Generalization ability of transformer utilizing multi-concept semantics, suggesting the efficiency transformer to conduct unseen ICL tasks just by its learned knowledge on the two non-orthogonal dictionaries. Also, this lemma showcases an intriguing phenomenon since it allows multiple concepts with comparable chance along word-demo pairs - even with the same input-output pair and query, the model can produce diverse responses when provided varying contextual (concept / task) information. For instance, with the prompt ``Japan: Sakura; China:'', the LLM may output ``Penoey'' (national flower) or ``Panda" (national symbol), reflecting different conceptual (task) interpretations. Both answers are right since they are all the co-concept tasks. Interestingly, adding another demonstration like ``Japan: Sakura, France: Iris germanica, China:'' stabilizes the response to ``Penoey'', since the only co-concept is left to be ``national flower''. In our theory, we make an elementary explanation to this flexible, context-sensitive in-context learning (ICL) behavior by attributing it to the transformer's ability to harness multi-concept semantics.

\begin{lemma}
        \textbf{OOD 2: Innovation.} During testing, the distribution of $\mathcal{D}_{\boldsymbol{x}}^{*} \times \mathcal{D}_{\boldsymbol{y}}^{*}$ can enjoy data shift. Specifically, suggest we now have a new $\mathbf{M}^{\ast}$ and $\mathbf{Q}^{\ast}$ to define new $ \mathcal{D}_{\boldsymbol{x}}^{\ast}, \mathcal{D}_{\boldsymbol{y}}^{\ast}$. Specifically, $\forall k \neq k^{\prime} \in [K_1], k_2 \in [K_2]$, we let
        $$
        \begin{aligned}    
        & M^{\ast}_{2k-1} = {\boldsymbol{\mu}_k^+}^{\ast}= \boldsymbol{a}_k^{\ast} + \boldsymbol{b}_k^{\ast}, \quad M^{\ast}_{2k}= {\boldsymbol{\mu}_k^-}^{\ast}=\boldsymbol{a}_k^{\ast} - \boldsymbol{b}_k^{\ast}, \\
        & Q^{\ast}_{2k-1} = {\boldsymbol{q}_k^+}^{\ast}= \boldsymbol{c}_k^{\ast} + \boldsymbol{d}_k^{\ast}, \quad Q^{\ast}_{2k}= {\boldsymbol{q}_k^-}^{\ast}=\boldsymbol{c}_k^{\ast} - \boldsymbol{d}_k^{\ast}, \\
        & M^{\ast}_{k_2+2K_1} = {\boldsymbol{\nu}_{k_2}}^{\ast}, \quad Q^{\ast}_{k_2+2K_1} = \mathbf{0},
        \end{aligned}
        $$
        where 
        $$
        \begin{aligned}  
        &\boldsymbol{a}_k^{\ast} \in \text{conic}(\{ \dfrac{\boldsymbol{\mu}_k^{+}+\boldsymbol{\mu}_k^{-}}{2} \}_{k=1}^{K_1}), \quad \boldsymbol{b}_k^{\ast} \in \text{conic}(\{ \dfrac{\boldsymbol{\mu}_k^{+}-\boldsymbol{\mu}_k^{-}}{2} \}_{k=1}^{K_1}), \\
        &\boldsymbol{c}_k^{\ast} \in \text{conic}(\{ \dfrac{\boldsymbol{q}_k^{+}+\boldsymbol{q}_k^{-}}{2} \}_{k=1}^{K_1}), \quad \boldsymbol{d}_k^{\ast} \in \text{conic}(\{ \dfrac{\boldsymbol{q}_k^{+}-\boldsymbol{q}_k^{-}}{2} \}_{k=1}^{K_1}),\\
        & {\boldsymbol{\nu}_{k_2}}^{\ast} \in (\text{span}(\boldsymbol{\mu}_1^{+}, \boldsymbol{\mu}_1^{-}, \boldsymbol{\mu}_2^{+}, \boldsymbol{\mu}_2^{-} , \cdots, \boldsymbol{\mu}_{K_1}^{+}, \boldsymbol{\mu}_{K_1}^{-}))^{\perp},
        \end{aligned}
        $$
        satisfying 
        \[
        \|\boldsymbol{b}_k^{\ast}\|\geq\|\boldsymbol{a}_k^{\ast}\|=\Theta(\|\mathbf{u}\|), \quad\|\boldsymbol{d}_k^{\ast}\|\geq\|\boldsymbol{c}_k^{\ast}\|=\Theta(\|\mathbf{q}\|), \quad\boldsymbol{\nu}_{k_{2}}^{*}=\Theta(\|\mathbf{u}\|),
        \]
        and $\{ \boldsymbol{a}_k^{\ast}, \boldsymbol{b}_k^{\ast}\}_{k=1}^{K_1}, \{ \boldsymbol{c}_k^{\ast}, \boldsymbol{d}_k^{\ast}\}_{k=1}^{K_1}$ are two collections of pair wise orthogonal vectors. Then we can have a corresponding new prompt distribution $\mathcal{D}_{S}^{\ast} =\sum_{k=1}^{K_1}\left({\pi_k^{+}}^{\ast} {\mathcal{P}_{k, {L^{\ast}+1}}^{+}}^{\ast}+{\pi_k^{-}}^{\ast} {\mathcal{P}_{k, {L^{\ast}+1}}^{-}}^{\ast}\right)$. Again, the model enjoys Bayes-Optimal test error $L_{\mathcal{D}^*}^{0-1}(\Psi^{(T^{*})}) \leq \varepsilon$.
        % , 0 < \langle {\boldsymbol{\mu}_k^+}^{\ast}, {\boldsymbol{\mu}_k^-}^{\ast} \rangle  \leq \kappa_{\boldsymbol{x}}, 0 < \langle {\boldsymbol{q}_k^+}^{\ast}, {\boldsymbol{q}_k^-}^{\ast} \rangle \leq \kappa_{\boldsymbol{y}}
\label{lem: Appendix OOD2 data shift}\end{lemma}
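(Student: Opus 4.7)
}

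The plan is to reduce the OOD claim to a statement about the expected estimator $\mathbb{E}(\Psi^{(T^{*})})$ on the shifted prompt distribution, and then invoke Proposition \ref{prop: main body tolerance} to transfer it to the trained iterate $\Psi^{(T^{*})}$. The central observation is that Lemma \ref{lem: mainbody decomposition} expresses the learned $\mathbb{E}[{\mathbf{W}_Q^{\boldsymbol{x}}}^{(T^{*})}]$, $\mathbb{E}[{\mathbf{W}_K^{\boldsymbol{x}}}^{(T^{*})}]$, and rows of $\mathbb{E}[{\mathbf{W}_O^{\boldsymbol{y}}}^{(T^{*})}]$ as sums of rank-one operators along the \emph{original} basis $\{\boldsymbol{a}_s,\boldsymbol{b}_s\}_{s=1}^{K_1}$, $\{\boldsymbol{\nu}_r\}_{r=1}^{K_2}$, $\{\boldsymbol{u}_w^{\perp}\}$ and $\{\boldsymbol{c}_s,\boldsymbol{d}_s\}_{s=1}^{K_1}$, $\{\boldsymbol{q}_w^{\perp}\}$, with the feature-aligned coefficients $\beta_{Q,s}^{(T^{*})},\beta_{K,s}^{(T^{*})}, |\beta_{O_{(i,\cdot)},s}^{(T^{*})}|$ quantitatively dominant by Lemma \ref{lem:regulerizing the models}, while the complement-space coefficients $\tau$ and $\rho$ decay geometrically under regularization and are therefore negligible at $t=T^{*}$.

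Next, I would expand the attention score on the shifted data by substituting ${\boldsymbol{\mu}_k^{e}}^{*}=\boldsymbol{a}_k^{*}+e\boldsymbol{b}_k^{*}$ into the bilinear form and applying the decomposition. Writing $\boldsymbol{a}_k^{*}=\sum_j \gamma^{\boldsymbol{a}}_{k,j}\boldsymbol{a}_j$ and $\boldsymbol{b}_k^{*}=\sum_j \gamma^{\boldsymbol{b}}_{k,j}\boldsymbol{b}_j$ with $\gamma^{\boldsymbol{a}}_{k,j},\gamma^{\boldsymbol{b}}_{k,j}\geq 0$ (the conic-hull requirement), one gets
\[
\bigl({\mathbf{W}_K^{\boldsymbol{x}}}^{(T^{*})}{\boldsymbol{\mu}_{k'}^{e'}}^{*}\bigr)^{\top}{\mathbf{W}_Q^{\boldsymbol{x}}}^{(T^{*})}{\boldsymbol{\mu}_{k}^{e}}^{*}=\sum_{s}\tfrac{\alpha_{Q,s}\alpha_{K,s}}{\|\boldsymbol{a}_s\|^{4}}\gamma^{\boldsymbol{a}}_{k',s}\gamma^{\boldsymbol{a}}_{k,s}+ee'\sum_{s}\tfrac{\beta_{Q,s}\beta_{K,s}}{\|\boldsymbol{b}_s\|^{4}}\gamma^{\boldsymbol{b}}_{k',s}\gamma^{\boldsymbol{b}}_{k,s}.
\]
Since every factor is non-negative and $\beta_{Q,s}\beta_{K,s}\gg \alpha_{Q,s}\alpha_{K,s}$ by Lemma \ref{lem:regulerizing the models}, the softmax attention continues to place a $\Theta(1)$ fraction of its mass on demonstrations with matching label sign $e'=e_{S_n}$ relative to the query, uniformly in the (arbitrary) length $L^{*}$ and in the probability shifts permitted for $\mathcal{D}_{\boldsymbol{z}}^{*}$. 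The norm condition $\|\boldsymbol{b}_k^{*}\|\geq\|\boldsymbol{a}_k^{*}\|=\Theta(\|\mathbf{u}\|)$ is what makes the label-carrying term dominate the concept-similarity term, which is the sole mechanism ensuring that cross-task conic mixtures do not create ambiguous attention.

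I then pass the resulting attention-weighted residue stream through the MLP. For each activated neuron $i$, ${\mathbf{W}_O^{\boldsymbol{y}}}^{(T^{*})}{\boldsymbol{q}_k^{e}}^{*}$ similarly decomposes into $\sum_s(\alpha_{O,s}/\|\boldsymbol{c}_s\|^2)\langle\boldsymbol{c}_k^{*},\boldsymbol{c}_s\rangle+e\sum_s(\beta_{O,s}/\|\boldsymbol{d}_s\|^2)\langle\boldsymbol{d}_k^{*},\boldsymbol{d}_s\rangle$ with non-negative inner products. I would then rerun the sufficiency argument of Lemma \ref{lem: equivalence of expected 0-1 loss convergence} on the shifted data: the correct-sign contribution from $\mathcal{U}_{k,n}^{e}(T^{*})$ still dominates thanks to the controlled ratio $|\alpha_{O_{(i,\cdot)},k}|\lesssim|\beta_{O_{(i,\cdot)},k}|$ and the attention mass being close to $1$ on same-sign demonstrations. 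The task-irrelevant $\boldsymbol{\nu}_{k_2}^{*}$ vectors, living in $\mathrm{span}(\mathbf{M})^{\perp}$, only pick up the geometrically-decayed $\rho_{Q,w}$, $\rho_{K,w}$ coefficients, so they do not affect either layer to leading order.

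Finally, I would quantify the fluctuation from $\mathbb{E}(\Psi^{(T^{*})})$ to $\Psi^{(T^{*})}$ through the martingale-norm bound of Lemma \ref{lem: martingale essential bound}, and apply Proposition \ref{prop: main body tolerance} with the same tolerance $\nu$ (the argument is geometry-based and does not depend on $\mathcal{D}_{\boldsymbol{z}}^{*}$ or on $L^{*}$). The Gaussian noise terms $\xi_{\boldsymbol{x}},\xi_{\boldsymbol{y}}$ in the OOD prompt are handled as in Lemma \ref{Lem:E.1} together with Condition \ref{Condition}'s low-noise requirement. The main obstacle I expect is the attention analysis under multi-co-concept prompts with arbitrary $\mathcal{D}_{\boldsymbol{z}}^{*}$: one must show that when the query's non-co-concept components (the $\gamma^{\boldsymbol{a}}_{k,s}$ with $s\neq k$) overlap with demonstration concepts, the interference cannot flip the attention's preferred sign. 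This is precisely where the norm constraint $\|\boldsymbol{b}_k^{*}\|\geq\|\boldsymbol{a}_k^{*}\|$ and the ``at least one pair shares the query's co-concept label'' clause are used, and I would carry out a case split on whether a demonstration shares the query's co-concept to certify a uniform lower bound on the correct attention mass before plugging into the MLP-level sufficiency inequality.
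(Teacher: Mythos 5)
Your proposal follows essentially the same route as the paper: reduce to the expected estimator via Proposition \ref{prop: main body tolerance}, then use the idempotent decomposition together with the terminal coefficient scales of Lemma \ref{lem:regulerizing the models} (and the non-negative, self-consistent conic expansion of the shifted features, which prevents any label ``self-conflict'') to show the correct attention mass stays well above $1/2$ and the MLP margin stays at $\Theta(\kappa)$, with the Gaussian noise and the $\boldsymbol{\nu}_{k_2}^{*}$/complement directions negligible, exactly as in the paper's worst-case bound in Eq.~(\ref{eq:OOD assignment}). One cosmetic correction: in your bilinear expansion the surviving terms should be scaled by $\|\boldsymbol{a}_s\|^{2}$ and $\|\boldsymbol{b}_s\|^{2}$ in the denominators (consistent with Eq.~(\ref{eq: main body attention product qk decomposition})), not the fourth powers, but this does not affect the argument.
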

    This lemma suggest that transformer-mlp structure empower ICL ability in solving task involving semantics (``knowledge'') originally from other co-concept prompt's training distribution. This cross-concept semantic ``understanding'' ability ensure the transformer perform an specific OOD ability. 
    
    For example, when we show a prompt ``Isaac Newton:Today I designed a machine to capture sunlight; Thomas Edison:'' to GPT o1, we would obtain an answer ``Today I invented a lamp that shines without fire.'' During training, even when the concept ``Inventors and Their Inventions'' may not co-appear with the concept ``Fabricate a story'' with high chance, the transformers empower the ICL to perform this interesting Out-of-Distribution task. We believe this can serve as an attempt to explain the innovation power of LLM \cite{reizinger2024position, girotraIdeas2023, doshicreativity2023} grounded in the linear geometric property of LLM representation, since most of the innovative outcomes of human being generates from cross-concept ``Knowledge Intersection'', and as it is not an easy task for human specialist to master cross-domain knowledge, we claim that LLM can help innovation by leveraging cross-domain knowledge when deduction over unseen structured task. Similarly, for multi-model scenarios,  \cite{yang2024dynamicsconceptlearningcompositional} have shown that compositing different concepts did enable OOD generalization (e.g. ``blue square apples'' in the Figure 1a in \cite{yang2024dynamicsconceptlearningcompositional}).

    This lemma seeks to elementarily explain why LLMs' ICL can excel in complex tasks when using evolutionary strategies, especially when the LLM's latent representation based on language only partially captures the relevant features. Such tasks include algorithm design \cite{liu2023algorithm, liu2024systematicsurveylargelanguage}, heuristics \cite{liu2024evolutionheuristicsefficientautomatic}, acquisition functions \cite{yao2024evolve}, and solutions to combinatorial optimization problems \cite{Liu2024Routing}. Although the resulting solutions may often seem counterintuitive to human experts, a possible explanation is that transformers can perform ICL in OOD scenarios by leveraging weighted combinations of their updated ``understanding'' (i.e., changing the identified underlying concepts in the evolution process) of new demo-query pairs, such as randomly sampled TSP instances. These understandings are rooted in the latent structures of the problem instances and can be effectively updated by evolutionary strategies that selectively refine and discard certain outcomes.

\begin{proof}
    Proof of Proposition \ref{proposition: main body OOD}. By Proposition \ref{prop: main body tolerance}, we only need to check the expected 0-1 loss $L_{\mathcal{D}^*}^{0-1}({\mathbb{E}[{\Psi}^{\prime}]})=0$. Denote $\mathbb{E}[\mathcal{M}_{y_{S_{n}}}] \subseteq [2K_{1}]$ as the expected index set denoting the expected shared concept-specific features by the query and one demonstration. By definition in the Lemma, as the semantic combination is conic combination, we see that $\mathbb{E}[\mathcal{M}_{y_{S_{n}}}]$ will be either a collection of odd (corresponding to positive label) or even (corresponding to negative label) numbers, and all of the combination of the features and labels in one prompt are corresponding to the same real value label without ``self-conflict''. By Lemma \ref{lem: appendix final scale of coefficient}, we see that the coefficients are all at a substantial scale at $T^{*}$. Then by the condition on $\boldsymbol{z}$ and Eq. (\ref{eq: attention product qk decomposition}), we can readily check that even when the probability of the fraction of demonstrations sharing the co-concept label semantic with query is feeble (but at least one), utilizing the same set of notations, we still have
    \begin{equation}
    \begin{aligned}
    &\mathbb{E}_{n \in \mathcal{D}^{*}}[\sum_{l \in \mathbb{E}[S_{n, \hat{k}}^{y_{S_n}}]}{(\sigma_{S}^{(T^{*})})}_{l}^{n}]\\
    &\geq \Theta(\frac{L^{*}/2e^{\sum_{\hat{k} \in \mathbb{E}[\mathcal{M}_{y_{S_{n}}}]}\frac{\beta_{Q, \hat{k}}^{(T^{*})}\beta_{K, \hat{k}}^{(T^{*})}}{\| \boldsymbol{b}_{\hat{k}}\|^{2}}}}{L^{*}/2( e^{\sum_{\hat{k} \in \mathbb{E}[\mathcal{M}_{y_{S_{n}}}]}\frac{\beta_{Q, \hat{k}}^{(T^{*})}\beta_{K, \hat{k}}^{(T^{*})}}{\| \boldsymbol{b}_{\hat{k}}\|^{2}}}+e^{(K-1)\sigma_{0}^{2}\|\mathbf{u}\|^{2}-\sum_{\hat{k} \in \mathbb{E}[\mathcal{M}_{y_{S_{n}}}]}\frac{\beta_{Q, \hat{k}}^{(T^{*})}\beta_{K, \hat{k}}^{(T^{*})}}{\| \boldsymbol{b}_{\hat{k}}\|^{2}}})})\\
    &\geq \Theta( \frac{\frac{\|\mathbf{u}\|^{2}}{\lambda K_{1}}\log( \frac{\|\mathbf{q}\|^{2}}{ m \lambda K_{1}})}{\frac{\|\mathbf{u}\|^{2}}{\lambda K_{1}}\log( \frac{\|\mathbf{q}\|^{2}}{ m \lambda K_{1}})+ e^{(K-1)\sigma_{0}^{2}\|\mathbf{u}\|^{2}}})\\
    &\gg 1/2,
    \end{aligned}
    \label{eq:OOD assignment}\end{equation}
    where the equality and inequality is by worse-case consideration over $\mathcal{D}_{\boldsymbol{z}}^{*}$, a small $\sigma_{0}$ and $\lambda$ in Condition \ref{Condition} with a sufficiently large $C$, as well as the requirement $\|\boldsymbol{b}_k^{\ast}\|\geq\|\boldsymbol{a}_k^{\ast}\|=\Theta(\|\mathbf{u}\|)$.
    Besides, by $\|\boldsymbol{d}_k^{\ast}\|\geq\|\boldsymbol{c}_k^{\ast}\|=\Theta(\|\mathbf{q}\|)$, Eq.(\ref{eq:OOD assignment}), Lemma \ref{lem:regulerizing the models}, Eq.(\ref{eq: main body sufficient inequality for 0-1 loss}) and Lemma \ref{prop: main body tolerance}, we have that
    \[
    \mathbb{E}_{n\in \mathcal{D}^*}[ \sum_{i \in \mathcal{W}_{n,\hat{k}}^{y_{S_{n}}}} \mathbf{r}_{i} (\alpha_{O_{(i,\cdot)}, \hat{k}}^{(T^{*})}+y_{S_{n}}(2\sum_{l \in \mathbb{E}[S_{n, \hat{k}}^{y_{S_n}}]}{(\sigma_{S}^{(T^{*})})}_{l}^{n}-1)\beta_{O_{(i,\cdot)}, \hat{k}}^{(T^{*})})] \geq \Theta (\kappa),
    \]
    Collaborating with Lemma \ref{lem: appendix tolerance}, the poof is completed.
\end{proof}

\end{document}